\begin{document}

\title{Learning Efficiently Function Approximation for Contextual MDP}

 \author{\name Orin Levy \email orinlevy@mail.tau.ac.il\\
       Tel Aviv University\\
       Israel
       \AND
       \name Yishay Mansour \email mansour.yishay@gmail.com\\
        Tel Aviv University and Google Research\\
       Israel}

\maketitle

\begin{abstract}
    We study learning contextual MDPs using a function approximation for both the rewards and the dynamics.
    We consider both the case that the dynamics dependent or independent of the context. For both models we derive polynomial sample and time complexity (assuming an efficient ERM oracle). Our methodology gives a general reduction from learning contextual MDP to supervised learning.
\end{abstract}

\begin{keywords}
  Reinforcement Leaning, Sample Complexity, Contextual MDP, Function Approximation 
\end{keywords}

\section{Introduction}
Markov decision processes (MDPs) are commonly used to describe dynamic environments. MDPs characterize many real-life tasks in a variety of applications including: advertising, healthcare, games, robotics and more. In those applications, at each episode an agent arrives and interacts with the environment with the goal of maximizing her return. (See, e.g., \citet{Sutton2018}.)

In many applications, in each episode, there are additional exogenous factors that affect the environment, which we refer to as the \emph{context}. One can extend the state space to include the context, but this has the disadvantage of greatly increasing the state space, and hence the complexity of learning and even the representation of a policy. An alternative approach is to keep a small state space, and regard the context as an additional side-information. Contextual Markov Decision Process (CMDP) describes such a model, where for each context there is a potential different optimal policy.

CMDPs are useful to model many user-driven applications, where the context is a user-related information which influences the optimal decision making.
One natural application is in healthcare. We can model the interaction with a given patient using an MDP. For a given medical treatment, the expected outcome of a patient is highly dependent on his medical history and other personal parameters, which we model as her context. For example, the success probability of a given treatment might heavily depend on the patient's age and weight.

We abstract the patient's medical history and any other relevant information as the context.
The benefit of using a CMDP is the fact that most patients behave similarly, although the context space may be large, and there might be unforeseen connection between the context and the outcomes. CMDPs allow to share information and behavior between different contexts in a natural way.

\noindent\textbf{Our contributions.}
We present efficient learning algorithms 
for CMDP, given an access to an ERM oracle.
We consider a finite horizon CMDP, where the rewards are an arbitrary function of the context and the state-action. The dynamics may be either \emph{context-free}, where the context does not influence the dynamics, or \emph{context-dependent}, where different contexts induce different dynamics.
Clearly, the most challenging model is the unknown context-dependent dynamics.
Our method induces an efficient reduction from learning contextual MDP (a Reinforcement Learning problem) to supervised learning.

The learning process
outputs an explicit function approximation of the rewards and the dynamics. 
Following the learning phase, 
our learner receives the current context, builds a related MDP for that context, computes an optimal policy for that MDP, and later runs that policy. Both the construction of the MDP and computing the optimal policy are done in polynomial time in the number of states, actions, and horizon.

For \emph{context free} dynamics, we give an efficient algorithm that creates an unbiased sample of the context-reward and context-next state  pairs, for each significantly-reachable state and action.
We use the unbiased sample to approximate 
both the context-free dynamics and the
state-action rewards.

The most challenging case is unknown \emph{context dependent} dynamics. Here, we are unable to define an unbiased sample at the state-action  level,
since we do not know the probability of a state-action pair for given a context and policy. However, we give an efficient algorithm that constructs for each layer two unbiased data sets. 
Both function approximations
are done once per an entire layer of the MDP. 

Table~\ref{tbl: summary} contains a summary of our sample complexity results, up to poly-logarithmic factors. In all cases we have a polynomial dependence in all our parameters.
Specifically, $d$, the function approximation class pseudo-dimension or fat-shattering dimension (see Appendix~\ref{Appendix:function-approx-dim} for more information regarding the dimensions), the number of states $|S|$, actions $|A|$, horizon $H$, inverse accuracy $1/\epsilon$ and poly-logarithmic dependence in the inverse confidence parameter $\log(1/\delta)$.
The ERM oracle complexity in all cases is $|S||A|$ except for unknown context dependent case where it is only $H$. We remark that our definition bounds by $\epsilon^2$ the squared error rather than $\epsilon$. This essentially introduces additional $\epsilon^2$ factors which do not exist using the standard definition. We use our definition mainly for convenience.

\begin{table*}[ht]
\caption{\label{tbl: summary}
summary of our results. }
 \begin{center}
        \begin{tabular}[c]{| c | c | c |}
            \hline
            Dynamics & Absolute Loss & Square Loss
            \\
            \hline
            Known, context-free 
            &  
            $
            d \epsilon^{-2} H^2 |S|^4 |A|^3 \log (1/\delta)
            $ 
            & 
            $
            d \epsilon^{-4} H^4 |S|^6 |A|^5 \log(1/\delta)
            $
            \\
            \hline
            Unknown, context-free 
            &
            $
            d \epsilon^{-3} H^5 |S|^5 |A|^3 \log(1/\delta) 
            $
            &  
            $
            d\epsilon^{-4} H^4 |S|^6 |A|^5 \log(1/\delta) 
            $
            \\
            \hline
            Known, context-dependent 
            &
            $
            d\epsilon^{-6} H^5 |S|^5 |A|^3 \log(1/\delta) 
            $
            &  
            $
            d \epsilon^{-8}  H^7 |S|^5 |A|^3 \log(1/\delta) 
            $
            \\
            \hline
            Unknown, context-dependent 
            &  
            $
            d \epsilon^{-6} H^9 |S|^{11} |A|^2 \log(1/\delta)
            $
            &
            $
            d \epsilon^{-8} H^{13} |S|^{15} |A|^2 \log(1/\delta)
            $
            \\
            \hline
    \end{tabular}
\end{center}
\end{table*}
\section{Related Work}\label{sec: related-work}
\noindent\textbf{Contextual Reinforcement Leaning.}
CMDP was introduce by \citet{hallak2015contextual}.
\citet{modi2018markov} gives a general framework for deriving generalization bounds as a function of the covering number for smooth CMDPs and contextual linear combination of MDPs.
For smooth CMDPs they obtain sample complexity upper bound of $\tilde{O}\left({NH^2|S|A|}\epsilon^{-3}(|S|+ \ln\frac{N|S||A|}{\delta}\ln\frac{N}{\delta})\right)$, and a lower bound of $\Omega \left(\frac{N|S||A|}{\epsilon^2}\right)$ where $N$ is the covering number of the context space, which can be exponential in the dimension of it.
For the contextual linear combination of MDPs, they obtain a sample complexity bound of
$O\left( {\epsilon^{-2} m^2 H^4 |S| |A|}\log\frac{1}{\delta}\max\{m^2, |S|^2 \log^2\left({m |S||A|}/{\delta}\right)\} \right)$ where $m$ is the number of combined MDPs.
In contrast, our bounds depend on the complexity dimension (VC,Pseudo etc.) which can be logarithmic in the covering number of the context space (see Subsection 27.2 in~\citet{shalev2014understanding}) or independent of it.
For example, the $\gamma$-fat shattering dimension of of linear functions is $1/\gamma^2$.
However, our results do not contradicts the above lower bound, as the pseudo and fat-shattering dimensions are known to be tightly upper bounded by the covering number of the function class input space (i.e., the domain). For smooth CMDP, the function classes used to approximate the rewards and dynamics are $L_r$ and $L_p$-Lipschitz (respectively) and it is known that 
the $\gamma$-fat shattering dimension of $L$-Lipshcitz function class is (approximatly) linear in the covering number of the domain. 
Our work generalize the work of \citet{modi2018markov} since we have no assumption regarding the CMDP or the function classes.

\citet{modi2020no}
give a regret analysis for Generalized Linear Models (GLMs). Our function approximation framework is much more general than GLM.

\citet{foster2021statistical} 
present a statistical complexity measure for interactive decision making and present an application of it to contextual RL. They assume an access to an online estimation oracle with regret guarantees.
Using it, they obtain $\tilde{O}(\sqrt{T })$ regret.
However, this oracle is very strong and might be computationally inefficient. 
It is also unclear whether their algorithmic approach can be extended to offline oracles for estimation.
In contrast, we use a standard ERM oracle.

\citet{jiang2017contextual} present  OLIVE
which is sample efficient for  Contextual Decision Processes with a small Bellman rank. We do not make any assumptions on the Bellman rank.

\noindent\textbf{Reward-Free exploration.}
The setting of unknown and context-free dynamics is closely related to Reward-free RL \citet{jin2020rewardFree,zhang2021Reward-free,menard2021fastRewardFree,chen2022statistical,DBLP:conf/icml/QiuYWY21}.
Our main motivation for developing the context-free algorithms is to extend them later to the context-dependent case.
%
%

\noindent\textbf{Contextual Bandits.} Contextual bandits (CMAB) are a natural extension of the Multi-Arm Bandit (MAB), augmented by a context which influences the rewards \cite{Slivkins-book,MAB-book}. \citet{agarwal2014taming} use efficiently an optimization oracle  to derive an optimal regret bound. 
Regression based approaches appear in \cite{agarwal2012contextual,foster2018practical,foster2020beyond,simchi2021bypassing,xu2020upper}.
We differ from CMAB, since our main challenge is the dynamics, and the need to optimize future rewards, which is the case in most RL settings.




\section{Preliminaries and Notations}


\noindent\textbf{Markov Decision Process (MDP).}
We consider an episodic MDP with a finite horizon $H$, and assume w.l.o.g  it  is layered, loop free and has a unique start state.
   A \emph{Markov Decision Process (MDP)} is a tuple $(S,A,P,r,s_0, H)$, where (1) $S$ is a finite state space decomposed into $H+1$ disjoint subsets (layers) $S_0, S_1, \ldots, S_H$ such that transitions are only possible between consecutive layers (i.e., loop-free),  (2) $A$ is a finite action space, (3) $s_0\in S$ is the unique start state, 
   (4) $P(\cdot|s,a)$ defines the transition probability function, i.e.,  $P(s' | s,a)$ is the probability that we reach state $s'$ given that we are in state $s$ and perform action $a$,
    (5) $R(s,a)\in[0,1]$ is a  random variable  for the reward of performing action $a$ in state $s$, and $r(s,a)$ is its expectation, i.e., $r(s,a) = \mathbb{E}[R(s,a)] $, and 
    (6) $H$ is the finite horizon.
    

\noindent\textbf{Policy.}
    A \emph{stochastic policy} $\pi$ is a mapping from states to distribution over actions, i.e., $\pi : S \to \Delta(A)$.  
    A \emph{deterministic policy} $\pi$ is a mapping from states to actions, i.e., $\pi : S \to A$.

\noindent\textbf{Occupancy measure.} 
Let $q_h(s | \pi, P)$ denote the probability of reaching state $s\in S_h$  at time $h \in [H]$ of an episode generated using  policy $\pi$ and dynamics $P$. %
%
%
%

\noindent\textbf{Episode and trajectory.}
At the start of each episode we select a policy $\pi$,
run it, and observe
%
a trajectory\\
${\tau = (s_0, a_0, r_0, s_1, \ldots, s_{H-1}, a_{H-1}, r_{H-1}, s_H)}$, where for all $h \in [H-1]$, $a_h\sim\pi(\cdot|s_h)$,  $r_h \sim R(s_h, a_h)$ and  $s_{h+1} \sim P(\cdot| s_h, a_h)$
\footnote{W.l.o.g. we assume that $r(s_H,a_H) = 0$ for any $s_H\in S_H$ and $a_H\in A$ so we can omit it.}.

\noindent\textbf{Value function.}
Given a policy $\pi$ and a MDP 
    $
        {M 
        = 
        (S,A,P,r,s_0, H)}
    $, 
the
$h \in [H-1] $ stage value function of a state $s \in S_h$ is defined as
    $
        {V^{\pi}_{M,h} (s)
        = 
        \mathbb{E}_{\pi, M} 
        \Big[
        \sum_{k=h}^{H-1} r(s_k, \pi(s_k))|s_h = s \Big]}
    $.\\
For brevity, when $h = 0$ we denote $V^{\pi}_{M,0}(s_0) := V^{\pi}_M (s_0)$.

\noindent\textbf{Optimal policy and Bellman equations.}
    A (deterministic) optimal policy $\pi^\star_M$ for MDP $M$ satisfies, for every stage $h \in [H-1]$ and a state $s\in S_h$, 
    $
        \pi^\star_{M,h} (s) \in
        \arg \max_{\pi:S \to A}\{V^{\pi}_{M,h}(s)\}.
    $

\noindent\textbf{Planning.}
Given an MDP $M= (S, A, P, r, s_0, H)$ the procedure $\texttt{Planning}(M)$ returns an optimal policy $\pi^\star_M$ and its value $V^\star_M(s_0)$ and runs in time $O(|S|^2\; |A|\; H)$.

\noindent\textbf{Contextual MDP (CMDP) }
%
is a tuple $(\mathcal{C},S, A, \mathcal{M})$ where $\mathcal{C}\subseteq \mathbb{R}^{d'}$ is the context space, $S$ is the state space and $A$ is the action space. The mapping $\mathcal{M}$  maps a context $c\in \mathcal{C}$ to a MDP
    $
        \mathcal{M}(c) 
        =
        (S, A, P^c, r^c,s_0, H)
    $.
There is an unknown distribution  $\mathcal{D}$ over the context space $\mathcal{C}$, and for each episode a context $c$ is sampled i.i.d. from $\mathcal{D}$.
For mathematical convenience, we assume the context space is finite (but potentially huge). Our results naturally extend to infinite contexts space.

\noindent\textbf{Context-free dynamics vs. context-dependent dynamics.}
A CMDP has \emph{context-free} dynamics when the context effects only the rewards function, while the dynamics are identical for all contexts. i.e., there exits a dynamics $P$ such that for all $c \in \mathcal{C} $, $ P^c = P$. 
A \emph{context-dependent} dynamics has a potentially different dynamics $P^c$ for each context $c$.
We consider both settings.
    


%

\noindent\textbf{Context-dependent policy.}
    A context-dependent policy $\pi = \left( \pi_c: S \to \Delta(A) \right)_{c \in \mathcal{C}}$ maps a context $c \in \mathcal{C}$ to a policy $\pi_c : S \to \Delta(A)$. We similarly define a deterministic context-dependent policy.
    
\noindent\textbf{Optimal context-dependent policy} 
    is a policy $\pi^\star = (\pi^\star_c)_{c \in \mathcal{C}}$ that satisfies, for every context $c \in \mathcal{C}$, 
    $
        \pi^\star_c \in
        \arg \max_{\pi:S \to \Delta(A)} V^{\pi}_{\mathcal{M}(c)}(s_0)
    $.


\noindent\textbf{Losses.}
The square loss is $\ell_2(z,y)=(z-y)^2$ and absolute loss is $\ell_1(z,y)=|z-y|$.

\noindent\textbf{Function approximation}    
using functions class $\mathcal{F}$.
The \emph{squared error} (\emph{absolute error}, respectively) of a function $f\in \mathcal{F}$ is 
    $
        sqerr(f)= \mathbb{E}_{x}
        [\ell_2(f(x),f^\star(x))]
    $ 
    ($
        abserr(f)= \mathbb{E}_{x}
        [\ell_1(f(x) , f^\star(x))]
    $), where $f^*(x)$ is the target function (and we might have $f^*\not\in \mathcal{F}$).
    The \emph{squared approximation error} (\emph{absolute approximation error}) of $\mathcal{F}$ is $\alpha^2_2(\mathcal{F})=\inf_{f\in\mathcal{F}} sqerr(f)$ ($\alpha_1(\mathcal{F})=\inf_{f\in\mathcal{F}} abserr(f)$). Note that for square loss we square the approximation error, while this is not standard, it is mainly for mathematical convenience. When clear from the context, we use $\alpha$ instead of  $\alpha_1$ or $\alpha_2$.

\noindent\textbf{ERM oracle.}
    Let $\mathcal{X}$ be some domain, and let $\mathcal{F}$ be a function class that maps $\mathcal{X}$ to $[0,1]$. An Empirical Risk Minimization (ERM) oracle for $\mathcal{F}$ with respect to a loss function $\ell$ takes as input a data set $D = \{(x_i, y_i)\}_{i=1}^n$ with $x_i \in \mathcal{X}$, $y_i \in [0,1]$ and computes $\widehat{f} = \arg\min_{f \in \mathcal{F}} \sum_{(x,y) \in D} \ell (f(x), y)$.

\noindent\textbf{Function class complexity measures.} Our sample complexity bounds are stated in the terms of the pseudo and fat-shattering dimension of the function class (see~\cite{Bartlett1999NeuralNetsBook}), which are complexity measures for learning real-valued function classes.
It is known that if the pseudo/fat-shattering dimension of the function class $\mathcal{F}$ is finite, then $\mathcal{F}$ has a uniform convergence property. Hence $\mathcal{F}$ is learnable using an ERM algorithm up to an $\epsilon$ error, with probability at least $1-\delta$. $m(\epsilon, \delta)$ is the required sample complexity.
%
For more information regarding the dimensions definitions and the sample complexity requires for learning, please see Appendix~\ref{Appendix:function-approx-dim}.

\noindent\textbf{Reward function approximation.}\label{par: reward function approx gurantees for all s,a}
For every state $s \in S$ and action $a \in A$ we have a function class $\mathcal{F}_{s,a}^R=\{f_{s,a}: \mathcal{C} \to [0,1]\}$, which maps context $c$ to (approximate) reward.
The function $N_R(\mathcal{F}, \epsilon, \delta)$ maps a function class $\mathcal{F}$, required accuracy $\epsilon \in (0,1)$ and confidence $\delta \in (0,1)$ to the number of required samples for the ERM oracle to guarantee, with probability $1-\delta$, that $\mathbb{E}[\ell(\widehat{f}(x),f^\star(x))]\leq \epsilon+\alpha$, where $\alpha$ is the approximation error.

\noindent\textbf{Dynamics function approximation.}
For the unknown context-free case we simply use a tabular approximation (see Section \ref{sec:UCFD}). For the unknown context-dependent case, we use a function approximation per layer, as we define in Section \ref{sec:UCDD}.

\noindent\textbf{Reachability.} The reachability of a state is the maximum probability of reaching it, by any policy.
    A state $s_h \in S_h$ is \emph{$\beta$-reachable for dynamics $P$} if there exists a policy $\pi$ such that $q_h(s_h | \pi, P) \geq \beta$. 
%
    For a dynamics $P$ and $s_h \in S_h$ let $\pi_{s_h}$  denote the policy with the highest probability to visit $s_h$.
    Hence, a state $s_h$ is $\beta$-reachable for dynamics $P$ iff $\pi_{s_h}$ satisfies that $q_h(s_h | \pi_{s_h}, P) \geq \beta$.

\noindent\textbf{Learning objective. }
A mapping $\widehat{\pi}^\star$ from contexts $c$ to a policy $\widehat{\pi}^\star_c$ is $\epsilon$-optimal if
$
\mathbb{E}_{c \sim \mathcal{D}}
        [
            V^{\pi^\star_c}_{\mathcal{M}(c)}(s_0)
            -
            V^{\widehat{\pi}^\star_c}_{\mathcal{M}(c)}(s_0)
        ]
        \leq 
        \epsilon + O(\alpha H)
$,
%
where $\alpha$ is an agnostic approximation error. 
The goal of the learning algorithm is to compute a mapping from contexts to policies which are $\epsilon$-optimal.
%
%
The learning algorithm is  sample efficient if it uses $T=poly(|S|,|A|,H,\frac{1}{\epsilon}, \log \frac{1}{\delta})$ samples, and is computationally efficient if it is sample efficient, its running time is  $poly(T,|S|,|A|,H)$, and the number of oracle queries is $poly(|S|,|A|,|H|)$.
%

    


%

\noindent\textbf{Mathematical notations.}
We denote expectation by $\mathbb{E}[\cdot]$ and probabilities by $\mathbb{P}[\cdot]$. The indicator function is $\mathbb{I}[G]$ returns $1$ if event $G$ holds and $0$ otherwise.








\section{Context-Free Dynamics}\label{sec:UCFD}
This section addresses the case of an unknown dynamics which do not depend on the context (i.e., context-free dynamics). The main goal of this section is to provide intuition for our approach, and develop algorithmic tools that we will later use to solve the unknown context-dependent dynamics case, which is the main contribution of the paper.

\noindent\textbf{Our approach.}
Our goal is to collect ``sufficient" i.i.d examples $(c,r)$ of contexts and rewards for each state-action pair $(s,a)$, to learn the context-dependent rewards function using ERM.
This goal is not trivial even without the context, due to inner dependencies in a trajectory $\tau = (c, s_0, a_0, r_0, s_1, a_1, r_1, \ldots , s_H)$ generated by a policy $\pi$ and the dynamics $P$.
%
For simplicity, we  sample for each state-action pair independently. 
To collect i.i.d samples efficiently for each state, we need to compute an exploration policy which (approximately) maximizes the probability to visit the target state.
%
However, special care needs to be taken for states which are ``hard" to reach, i.e., states which are not $\beta$-reachable, for some parameter $\beta$.
Conceptually, we transition states which are not $\beta$-reachable to a sink state and avoid the need to approximate their dynamics. 
Rather than using a fixed parameter $\beta$, we (later) introduce a more gradual transition which improves our dependency on $\epsilon$ in the resulted sample complexity bound.

\noindent\textbf{Algorithm overview.}
%
We approximate the true dynamics $P$ by $\widehat{P}$, one layer at a time. 
after we learned the first $h-1$ layers, we use $\widehat{P}$ to compute a policy $\widehat{\pi}_s$ to reach every state $s$ in layer $h$ (i.e., $s\in S_h$), and the probability of reaching it.
%
Intuitively, if $\widehat{P}\approx P$ for the first $h-1$ layers, then $\widehat{\pi}_s$ will approximately maximizing the probability of reaching $s$, and would allow to sample it efficiently. Once we reach state $s$ we use the various actions in a round-robin manner.

Algorithm EXPLORE-UCFD (Algorithms~\ref{alg: EXPLORE-UCFD-main} and~\ref{alg: EXPLORE-UCFD}) works in phases, where in phase $h $ we approximate the dynamics and rewards of layer $h$. 
We define the approximated dynamics $\widehat{P}$ as the empirical dynamics, when in addition we transition not $\beta$-reachable states to the sink state $s_{sink}$ (a new state which we add to the approximated model).

We first collect samples for each (significantly reachable) state in layer $h$ and then use them to approximate the dynamics, using simple tabular estimation. Using the same sample we also estimate the rewards using ERM oracle. The required accuracy for each state-action pair $(s_h,a_h)$ is determined by the accuracy-per-state function $\epsilon_\star(\cdot)$ which gets $\widehat{p}_{s_h}:= q_h(s_h| \widehat{\pi}_{s_h}, \widehat{P})$ as an input.  
After collecting sufficient number of samples for every (significantly reachable) state in layers up to $ h - 1$, we have a good approximation of the dynamics up to layer $h-1$. This yields a good approximation of the occupancy measure of layer $h$ for any policy $\pi$.

Given a state $s_h\in S_h$ and the approximated dynamics $\widehat{P}$ we compute $\widehat{\pi}_{s_h} = \arg \max_{\pi} q_h(s_h| \pi,\widehat{P})$ using a planning algorithm. We run $\widehat{\pi}_{s_h}$ to generate the sample of $s_h$.
In order to control the number of sampled episodes we define significantly reachable states as $\beta$-reachable for $\widehat{P}$, i.e., they have $q_h(s_h| \widehat{\pi}_{s_h},\widehat{P})\geq \beta$. 

At the end of the sampling we have for each $\beta$-reachable  for $\widehat{P}$ state $s_h\in S_h$ for $\widehat{P}$, and every action $a_h$ a data set which contains tuples $(s_h, a_h, s_{h+1})$.
Then, we use a tabular approximation to learn the context-free dynamics. 
For the rewards, we use the collected examples of tuples $((c,s_h, a_h), r_h)$ and run the ERM on that data set to compute a function approximation for the rewards
\footnote{When collecting samples for state $s$ and action $a$, we update their sample only, to guarantee it is i.i.d.}.
%
Note that it is important that we first fix the approximation of layers up to $h$, which guarantee that we use the same $\widehat{\pi}_s$ and $\widehat{p}_s$ in each sampling state $s\in S_h$.

\noindent\textbf{The approximated dynamics $\widehat{P}$.} Let $n(s'|s,a)$ denote the number of times the triplet $(s,a,s')$ was observed and $n(s,a)$ denote the number of times the pair $(s,a)$ was observed.
We have a threshold $N_P(\gamma,\delta_1)=O(\gamma^{-2}(|S|+\log(1/\delta))$.
If $s$ is not $\beta$-reachable (given our learned dynamics) or $(s,a)$ is sampled less than  $N_P(\gamma,\delta_1)$ times it transition to the sink state $s_{sink} $. When $(s,a)$ is sampled at least  $N_P(\gamma,\delta_1)$ times, we use the empirical next state distribution, i.e., 
$\widehat{P}(s' | s,a) = \frac{n(s' | s,a)}{n(s,a)}$, to approximate the transition probability distribution of $(s,a)$.

\noindent\textbf{Accuracy per state function.}
We set a refined desired accuracy per state function $\epsilon_\star$,
and saves a  $1/\epsilon$ factors in the sample complexity. 
We do not approximate states which are very hard to reach. States which are very easy to reach, we want maximum accuracy. For intermediate levels we have a gradual accuracy dependency. This is captured in our definition of the accuracy-per-state function $\epsilon_\star$, which depends the probability to visit state $s$, i.e., 
$\widehat{P}_s := q_h(s_h | \widehat{\pi}_{s}, \widehat{P})$. 
We define it as follows:
where $B>0$ is a constant we determine later.
\begin{align*}
    \epsilon_\star(\widehat{P}_s) =
    \begin{cases}
        1 &, \text{ if } \widehat{P}_{s} < \frac{\epsilon}{B|S|}\\
        \frac{\epsilon}{B \widehat{P}_{s}  |S| |A| } &, \text{ if } p_{s} \in [ \frac{\epsilon}{B |S|} , \frac{1}{|S| } ]\\
        \frac{\epsilon}{B H |S| |A| } &, \text{ if }\widehat{P}_{s} > \frac{\epsilon}{B|S|}
    \end{cases}
\end{align*}

For the $\ell_2$ loss we use $\epsilon^2_\star(\widehat{P}_s)$.
We also denote $m_{s,a}(\widehat{P}_s)=\max{\{N_R (\mathcal{F}^R_{s,a} ,\epsilon_\star(\widehat{p}_{s}), \delta_1), N_P(\gamma, \delta_1)}\}$.

    \begin{algorithm}[ht]
    \caption{EXPLORE Unknown Context Free Dynamics (sketch for the $\ell_1$ loss)}
    \label{alg: EXPLORE-UCFD-main}
    \begin{algorithmic}[1]
        \State { initialize counters
        $n(s,a) =0 , n(s'|s,a) = 0 $ for all $(s,a,s') \in S \times A \times S$.
        }
        \For{$h \in [H-1] $}
            \State
            {   compute the approximated dynamics $\widehat{P}$ up  to layer $h-1$
            }
            \For{$s\in S_h$}
                \State{
                compute $\widehat{p}_{s}$, the highest probability to visit $s$ in $\widehat{P}$, and a policy $\widehat{\pi}_{s}$ that reaches it}
        
                \If{$\widehat{p}_{s_h}  \geq \beta$}
                    \For{ $a \in A$}
                        \State{initialize $Sample(s,a) = \emptyset$}
                        \State{set $\widehat{\pi}_{s}(s) \gets a$}
                        \For{${t = 1, 2, \ldots,                            
                        \lceil 
                            \frac{2}{\widehat{p}_{s} - \gamma h}(\ln(\frac{1}
                            {\delta_1})
                            +
                            m_{s,a}(\widehat{P}_s)\;) 
                        \rceil}$
                        }
                            \State{observe context $c$, run $\widehat{\pi}_{s}$}
                            \State{observe trajectory, update sample and counters}
                        \EndFor{}
            
                        \If{$|Sample(s,a)|\geq m_{s,a}(\widehat{P}_s)$ }
                        \State{
                        $
                            {f_{s,a} = \texttt{ERM}(\mathcal{F}^R_{s,a}, Sample(s,a), \ell_1)} 
                        $}
                        \Else
                        \State{\textbf{return } \texttt{FAIL}}
                        \EndIf{}
                \EndFor{}
            \Else
                \State{ set  for all $a\in A:\;  f_{s,a} = 0$}
            \EndIf{}{}
        \EndFor{}
    \EndFor{}
    
    \State{ \textbf{return }
        $
            F =
            \{f_{s,a} : \; \forall s\in S, a\in A\}, \widehat{P}
        $}
    \end{algorithmic}
\end{algorithm}

\noindent\textbf{Approximate optimal policy.} For a context $c$, let the true MDP be $\mathcal{M}(c) = (S, A, P, r^c, s_0, H)$ and the approximated MDP be $\widehat{\mathcal{M}}(c) = (S\cup \{s_{sink}\}, A, \widehat{P}, \widehat{r}^c, s_0, H)$.
Let $\pi^\star_c$ and $\widehat{\pi}^\star_c$ be an optimal policy for $\mathcal{M}(c)$ and $\widehat{\mathcal{M}}(c)$ respectively.
For both $\ell_1$ and $\ell_2$ loss function, we obtain the following.
\begin{theorem}\label{thm: resoult case 2}
    With probability $1-\delta$ it holds that  
    $
        {\mathbb{E}_{c \sim \mathcal{D}}[V^{\pi^\star_c}_{\mathcal{M}(c)}(s_0) - V^{\widehat{\pi}^\star_c}_{\mathcal{M}(c)}(s_0)] \leq 
         \epsilon + 2\alpha H} 
    $, 
    after collecting\\
    $\Tilde{O}\Big( d \epsilon^{-3} H^5 |S|^5 |A|^3 \log \frac{H|S||A|}{\delta}\Big)$ trajectories for  $\ell_1$ loss, and $\Tilde{O}\Big( d \epsilon^{-4} H^4 |S|^6 |A|^5 \log \frac{H |S||A|}{\delta}\Big)$ for the $\ell_2$ loss. $\alpha$ and $d$ are the maximal approximation error and fat-shattering / pseudo dimension over all states and actions, respectively. 
\end{theorem}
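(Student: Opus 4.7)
The plan is to bound the expected suboptimality by the standard three-term decomposition
$$V^{\pi^\star_c}_{\mathcal{M}(c)} - V^{\widehat{\pi}^\star_c}_{\mathcal{M}(c)} = \bigl(V^{\pi^\star_c}_{\mathcal{M}(c)} - V^{\pi^\star_c}_{\widehat{\mathcal{M}}(c)}\bigr) + \bigl(V^{\pi^\star_c}_{\widehat{\mathcal{M}}(c)} - V^{\widehat{\pi}^\star_c}_{\widehat{\mathcal{M}}(c)}\bigr) + \bigl(V^{\widehat{\pi}^\star_c}_{\widehat{\mathcal{M}}(c)} - V^{\widehat{\pi}^\star_c}_{\mathcal{M}(c)}\bigr),$$
whose middle summand is non-positive by optimality of $\widehat{\pi}^\star_c$ in $\widehat{\mathcal{M}}(c)$. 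It therefore suffices to bound, in expectation over $c \sim \mathcal{D}$ and uniformly in a fixed policy $\pi$, the quantity $|V^{\pi}_{\mathcal{M}(c)}(s_0) - V^{\pi}_{\widehat{\mathcal{M}}(c)}(s_0)|$. I would apply the simulation (value-difference) lemma to split this into a rewards term $\sum_h \sum_{s \in S_h, a} q_h(s|\pi,P)\,\pi(a|s)\,|r^c(s,a) - f_{s,a}(c)|$ and a dynamics term of the form $H \sum_h \sum_{s,a} q_h(s|\pi,P)\,\pi(a|s)\,\|P(\cdot|s,a) - \widehat{P}(\cdot|s,a)\|_1$, plus a contribution from the sink state absorbing mass diverted from non-reachable states.

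Next I would condition on a good event $\mathcal{E}$ that collects: (a) for every $(s,a)$ with $\widehat{p}_s \geq \beta$, the empirical transition is $\gamma$-close in total variation to $P(\cdot|s,a)$, via Hoeffding/$L_1$-deviation with $N_P(\gamma,\delta_1)$ samples and a union bound, and (b) for every such pair the ERM output satisfies $\mathbb{E}_c[\ell(f_{s,a}(c), r^c(s,a))] \leq \epsilon_\star(\widehat{p}_s) + \alpha$, via the uniform-convergence guarantee for $\mathcal{F}^R_{s,a}$ with sample size $N_R(\mathcal{F}^R_{s,a}, \epsilon_\star(\widehat{p}_s), \delta_1)$. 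For each sum I would split on whether $\widehat{p}_s \geq \beta$: the non-significantly-reachable contribution is handled by using $q_h(s|\pi,P) \leq \widehat{p}_s + \gamma h$ together with the first regime of $\epsilon_\star$ to produce at most $\epsilon/3$; for significantly-reachable states, the key calculation is $q_h(s|\pi,P)\cdot \epsilon_\star(\widehat{p}_s) \leq \epsilon/(B |S||A|)$ in each regime of the accuracy-per-state function, so summing over $s, a, h$ gives $\epsilon/3$ for the rewards and, after the extra $H$ factor, $\epsilon/3$ for the dynamics with $B = \Theta(H)$. The $\alpha$ contributions of the first and third summands of the decomposition combine into the additive $2\alpha H$. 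For the $\ell_2$ variant, Jensen yields $\mathbb{E}_c|f_{s,a}(c) - r^c(s,a)| \leq \sqrt{\mathbb{E}_c(f_{s,a}(c) - r^c(s,a))^2}$, converting the $\epsilon^2_\star$ squared-error guarantee into the same $\epsilon_\star$ absolute-error guarantee and otherwise reusing the identical argument with $N_R$ replaced by the square-loss sample complexity.

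The main obstacle is reconciling reachability under $\widehat{P}$ with reachability under $P$, since $\widehat{\pi}_s$ is chosen to maximize $q_h(s|\cdot,\widehat{P})$ but is executed in $P$. I would handle this by an induction on $h$: conditional on $\mathcal{E}$ restricted to layers $0,\dots,h-1$, a telescoping application of the simulation lemma gives $\|q_h(\cdot|\pi,P) - q_h(\cdot|\pi,\widehat{P})\|_1 \leq \gamma h$ for every $\pi$, which is why the algorithm budgets samples with denominator $\widehat{p}_s - \gamma h$. A Chernoff bound then guarantees that the prescribed number of rollouts fills $\mathrm{Sample}(s,a)$ to size $m_{s,a}(\widehat{P}_s)$ with probability $1-\delta_1$, and the total trajectory count is $\sum_{h,s,a} O\bigl(m_{s,a}(\widehat{P}_s)/(\widehat{p}_s - \gamma h)\bigr)$. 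Plugging in the explicit forms of $\epsilon_\star$ and $N_R$ (whose $d/\epsilon_\star^{\,p}$ dependence contributes the pseudo/fat-shattering factor $d$), choosing $\gamma = \Theta(\epsilon/(H|S||A|))$, $\beta = \Theta(\epsilon/(H|S|))$ and $\delta_1 = \delta/\mathrm{poly}(H|S||A|)$, and simplifying via the three regimes of $\epsilon_\star$, delivers the stated $\tilde{O}(d\epsilon^{-3} H^5 |S|^5 |A|^3 \log(H|S||A|/\delta))$ bound for $\ell_1$ and its $\ell_2$ counterpart.
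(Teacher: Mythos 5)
Your proposal is correct and follows essentially the same route as the paper: the three-term decomposition with the middle term killed by optimality of $\widehat{\pi}^\star_c$, the good events for sample counts, tabular dynamics concentration and ERM reward guarantees, the inductive occupancy-measure bound $\|q_h(\cdot|\pi,P)-q_h(\cdot|\pi,\widehat{P})\|_1\leq \gamma h + \beta\sum_k |S_k|$ justifying the $\widehat{p}_s-\gamma h$ denominator, the three-regime analysis of $\epsilon_\star$, and Jensen's inequality to reduce the $\ell_2$ case to the $\ell_1$ argument. The only cosmetic difference is that the paper routes the value-difference bound through an explicit intermediate MDP $\widetilde{\mathcal{M}}(c)=(S\cup\{s_{sink}\},A,\widehat{P},r^c,s_0,H)$ rather than invoking a simulation lemma directly, which yields the same two error terms you describe.
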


\noindent\textbf{Analysis outline.}
%
For every layer $h \in [H-1]$ we define the following good events.
Event $G^h_1$ states that for every state-action pair $(s_h ,a_h ) \in S_h \times A$, if $s_h$ is $\beta$-reachable for $\widehat{P}$, then sufficient number of samples were collected for the pair $(s_h,a_h)$. 
%
Event $G^h_2$ states that for every state-action pair $(s_h ,a_h) \in S_h \times A $ such that $s_h$ is $\beta$-reachable, the learned dynamics $\widehat{P}$ approximate the true dynamics $P$ up to a small error of $\gamma$, i.e., $\|\widehat{P}(\cdot|s_h, a_h) -P(\cdot|s_h, a_h)\|_1\leq \gamma$.

%
Event $G^h_3$ state that for every state-action pair $(s_h ,a_h ) \in S_h \times A$ such that $s_h$ is $\beta$-reachable for $\widehat{P}$,
the ERM oracle returns a function $f_{s_h,a_h}(c)$ with low generalization error.
Let $G_i = \cap_{h \in [H-1]}G^h_i$ for all $i\in\{1,2,3\}$.
We analyse the value error caused by both the dynamics and rewards approximation under the good events. 
We also show that the event $G_1 \cap G_2 \cap G_3$ holds with high probability.

For the analysis, we define an intermediate MDP  $\widetilde{\mathcal{M}}(c) = (S\cup\{s_{sink}\}, A, \widehat{P}, r^c, s_0, H)$, which differ from $\mathcal{M}(c)$ only in the dynamics and from $\widehat{\mathcal{M}}(c)$ only in the rewards function. 
Let $\alpha$ denote the maximal approximation error.
For $\beta = \frac{\epsilon}{B |S| H}$, $B = 24$ and $\gamma = \frac{\epsilon}{48|S|H^2}$
we obtain the following bound on the value difference caused by the dynamics approximation.
\begin{lemma}\label{lemma: UCFD - dynamics error}
    If event $G_1 \cap G_2$ holds, then 
    for every context $c \in \mathcal{C}$ 
    and context-dependent policy $\pi = (\pi_c)_{c \in \mathcal{C}}$ it holds that 
    $
        {|V^{\pi_c}_{\mathcal{M}(c)}(s_0) - V^{\pi_c}_{\widetilde{M}(c)}(s_0)|
        \leq \epsilon /16}
    $.
\end{lemma}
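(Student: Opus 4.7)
The plan is to use a standard simulation-lemma identity and then split the per-layer error by the $\beta$-reachability status of the current state under $\widehat{P}$. Since $\mathcal{M}(c)$ and $\widetilde{\mathcal{M}}(c)$ share the reward function $r^c$, any value gap is entirely due to the mismatch between $P$ and $\widehat{P}$. I would first extend $\mathcal{M}(c)$ by an absorbing zero-reward sink state (which leaves $V^{\pi_c}_{\mathcal{M}(c)}$ on $S$ unchanged, since under $P$ the policy $\pi_c$ from $s_0$ never transitions to $s_{sink}$) so that $V^{\pi_c}_{\mathcal{M}(c)}$ is well-defined on $S\cup\{s_{sink}\}$, with $V^{\pi_c}_{\mathcal{M}(c)}(s_{sink})=0$. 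Telescoping the Bellman equation then yields
\[
V^{\pi_c}_{\mathcal{M}(c)}(s_0)-V^{\pi_c}_{\widetilde{\mathcal{M}}(c)}(s_0) = \sum_{h=0}^{H-1}\mathbb{E}_{(s,a)\sim d^{\pi_c}_{\widehat{P},h}}\Bigl\langle P(\cdot\mid s,a)-\widehat{P}(\cdot\mid s,a),\; V^{\pi_c}_{\mathcal{M}(c),h+1}(\cdot)\Bigr\rangle,
\]
where $d^{\pi_c}_{\widehat{P},h}$ is the layer-$h$ state-action occupancy of $\pi_c$ under $\widehat{P}$.

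Next, I would bound each inner product by a case split on $s$. If $s$ is $\beta$-reachable for $\widehat{P}$, then event $G_1$ guarantees $(s,a)$ was sampled at least $N_P$ times, so $\widehat{P}(\cdot\mid s,a)$ is an empirical distribution supported on $S_{h+1}$ (not on $s_{sink}$), and event $G_2$ gives $\|P(\cdot\mid s,a)-\widehat{P}(\cdot\mid s,a)\|_1\le \gamma$; together with $\|V^{\pi_c}_{\mathcal{M}(c),h+1}\|_\infty\le H$, the inner product is bounded in absolute value by $\gamma H$. If $s$ is not $\beta$-reachable for $\widehat{P}$, then by construction $\widehat{P}(s_{sink}\mid s,a)=1$, and since $V^{\pi_c}_{\mathcal{M}(c)}(s_{sink})=0$, the inner product collapses to $\mathbb{E}_{s'\sim P(\cdot\mid s,a)}\bigl[V^{\pi_c}_{\mathcal{M}(c),h+1}(s')\bigr]\in[0,H]$.

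Then I would aggregate the two cases across layers. The $\beta$-reachable part contributes at most $H\cdot\gamma H = H^2\gamma$, since the good-state mass under $d^{\pi_c}_{\widehat{P},h}$ is at most $1$ per layer. For non-$\beta$-reachable states, the key observation is that by definition of $\beta$-reachability, $d^{\pi_c}_{\widehat{P},h}(s)\le q_h(s\mid \widehat{\pi}_s,\widehat{P})<\beta$ for each such $s$, so the total bad mass at layer $h$ is at most $|S_h|\beta$, and summed across layers is at most $\sum_h |S_h|\beta=|S|\beta$; multiplying by the per-inner-product bound $H$ gives $H|S|\beta$. Combining, $|V^{\pi_c}_{\mathcal{M}(c)}(s_0)-V^{\pi_c}_{\widetilde{\mathcal{M}}(c)}(s_0)|\le H^2\gamma + H|S|\beta$, and plugging in $\gamma=\epsilon/(48|S|H^2)$ and $\beta=\epsilon/(24|S|H)$ yields $\epsilon/(48|S|) + \epsilon/24 \le \epsilon/48+\epsilon/24=\epsilon/16$.

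The most delicate step is the bookkeeping around $s_{sink}$: one must properly extend $\mathcal{M}(c)$ so that the simulation identity is meaningful, and then use $V^{\pi_c}_{\mathcal{M}(c)}(s_{sink})=0$ to reduce the bad-state inner product from the naive $2H$ to just $H$, saving a factor of $2$ that is what ultimately makes the constants work out. A secondary care point is to invoke $G_1$ (not only $G_2$) for the good-state case, so that $\widehat{P}(\cdot\mid s,a)$ really is supported in $S_{h+1}$ and the $\|\cdot\|_1\le\gamma$ bound from $G_2$ applies without any sink leakage. Beyond these points, the argument is a direct one-step simulation lemma plus the two-case split enabled by the reachability threshold $\beta$.
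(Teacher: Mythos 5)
Your proof is correct, and it reaches the paper's bound by a genuinely different decomposition. The paper first proves an occupancy-measure lemma by induction (Lemma~\ref{lemma: occ-measure-dist-UCFD}), showing $\|q_h(\cdot\mid\pi_c,P)-q_h(\cdot\mid\pi_c,\widehat{P})\|_1\le \gamma h+\beta\sum_{k<h}|S_k|\le \epsilon/(16H)$ for the chosen $\beta,\gamma$, and then bounds the value gap by summing these $\ell_1$ occupancy differences against the reward bound $|r^c(s,a)|\le 1$, giving $H\cdot\epsilon/(16H)=\epsilon/16$. You instead apply a one-step simulation-lemma telescoping weighted by the $\widehat{P}$-occupancy and pay $\|V^{\pi_c}_{\mathcal{M}(c),h+1}\|_\infty\le H$ per layer, performing the reachability case-split directly inside the value decomposition rather than inside the occupancy recursion. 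The two routes are dual: the same two quantitative facts (good states contribute $\gamma$ per layer via $G_1\cap G_2$; bad states carry total $\widehat{P}$-occupancy at most $\beta|S_h|$ per layer since $q_h(s\mid\pi_c,\widehat{P})\le\widehat{p}_s<\beta$) yield the identical final expression $\gamma H^2+\beta|S|H\le\epsilon/16$. Your version is somewhat more streamlined in that it bypasses the standalone occupancy-difference lemma, at the cost of the extra bookkeeping you correctly identify around extending $\mathcal{M}(c)$ with an absorbing zero-reward sink so that the inner products against $V^{\pi_c}_{\mathcal{M}(c),h+1}$ are well-defined and the bad-state term collapses to $\mathbb{E}_{s'\sim P}[V^{\pi_c}_{\mathcal{M}(c),h+1}(s')]\in[0,H]$; the paper's route sidesteps this by working with occupancy measures restricted to $S$ and noting $P$ and $r^c$ are never evaluated at the sink. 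Both the identity you invoke and the arithmetic $\epsilon/(48|S|)+\epsilon/24\le\epsilon/16$ check out.
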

\begin{proof}[sketch]
Under the event $G_1 \cap G_2$, for every $\beta$-reachable state $s$ for $\widehat{P}$ and an action $a$ it holds that\\
$\|P(\cdot |s,a) - \widehat{P}(\cdot|s,a)\| \leq \gamma$.
We show in Lemma~\ref{lemma: occ-measure-dist-UCFD} that for any policy $\pi = (\pi_c)_{c \in \mathcal{C}}$ it holds that\\ $\forall c\in \mathcal{C} \; \forall h \in [H]$,  $\sum_{s_{h} \in S_h}|q_h(s_h | \pi_c, P) - q_h(s_h | \pi_c, \widehat{P})| \leq \gamma h + \beta \sum_{k = 0}^{h-1}|S_k| $.
For our choice of $\beta$ and $\gamma$, the latter yields that
$ |V^{\pi_c}_{\mathcal{M}(c)}(s_0) - V^{\pi_c}_{\widetilde{M}(c)}(s_0)| \leq \sum_{h=0}^{H-1} \sum_{s_{h} \in S_h}|q_h(s_h | \pi, P) - q_h(s_h | \pi, \widehat{P})| \leq  \epsilon /16 $.
\end{proof}

The following lemma bounds the expected value difference caused by the rewards approximation.
\begin{lemma}\label{lemma: UCFD - rewards error.}
    If event $G_3$ holds,
    then for every policy ${\pi = (\pi_c)_{c \in \mathcal{C}}}$, it holds that  
    $$
        {\mathbb{E}_{c \sim \mathcal{D}}[|V^{\pi_c}_{\widehat{\mathcal{M}}(c)}(s_0) - V^{\pi_c}_{\widetilde{M}(c)}(s_0)|]
        \leq
        \epsilon /8 + \alpha H} 
    .
    $$
\end{lemma}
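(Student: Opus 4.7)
The plan is to apply the performance-difference (``simulation'') lemma: since $\widehat{\mathcal{M}}(c)$ and $\widetilde{M}(c)$ share the dynamics $\widehat{P}$ and differ only in their rewards (learned $f_{s,a}(c)$ versus true $r^c(s,a)$), the value gap decomposes layer by layer,
\begin{equation*}
V^{\pi_c}_{\widehat{\mathcal{M}}(c)}(s_0) - V^{\pi_c}_{\widetilde{M}(c)}(s_0)
= \sum_{h=0}^{H-1} \sum_{s \in S_h,\, a \in A} q_h(s \mid \pi_c, \widehat{P})\, \pi_c(a \mid s)\bigl(f_{s,a}(c) - r^c(s,a)\bigr).
\end{equation*}
Taking absolute values, applying the triangle inequality, taking $\mathbb{E}_{c \sim \mathcal{D}}$, and using that $\widehat{\pi}_s$ maximizes the $\widehat{P}$-occupancy of $s$ (so $q_h(s \mid \pi_c, \widehat{P}) \leq \widehat{p}_s$) reduces the goal to bounding $\sum_{h,s,a} \widehat{p}_s\, \mathbb{E}_c[\pi_c(a\mid s)\,|f_{s,a}(c) - r^c(s,a)|]$.

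Next I would split each layer's sum according to whether $s$ is $\beta$-reachable for $\widehat{P}$, with $\beta = \epsilon/(B|S|H)$ from Lemma~\ref{lemma: UCFD - dynamics error}. For states that are not $\beta$-reachable, the algorithm sets $f_{s,a} \equiv 0$; bounding $|f_{s,a}(c) - r^c(s,a)| \leq 1$ and using $\sum_a \pi_c(a\mid s) = 1$, the total contribution of such states across all layers is at most $H|S|\beta = \epsilon/B$. For $\beta$-reachable states, event $G_3$ together with the ERM guarantee gives $\mathbb{E}_c[|f_{s,a}(c) - r^c(s,a)|] \leq \epsilon_\star(\widehat{p}_s) + \alpha$, so their contribution is at most $\sum_{h,s,a} \widehat{p}_s\bigl(\epsilon_\star(\widehat{p}_s) + \alpha\bigr)$.

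The main computation is bounding $\sum_h \sum_{s \in S_h,\, a \in A} \widehat{p}_s\, \epsilon_\star(\widehat{p}_s)$ via casework on the three regimes of the piecewise definition of $\epsilon_\star$: (i) when $\widehat{p}_s \in [\beta, \epsilon/(B|S|))$ we have $\epsilon_\star = 1$ and $\widehat{p}_s\epsilon_\star < \epsilon/(B|S|)$; (ii) when $\widehat{p}_s \in [\epsilon/(B|S|), 1/|S|]$ the product collapses to $\epsilon/(B|S||A|)$; (iii) when $\widehat{p}_s > 1/|S|$ the product is at most $\epsilon/(BH|S||A|)$. Summing each regime over $s \in S_h$ and $a \in A$, then over $h$, contributes $O(\epsilon/B)$ in total. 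With $B = 24$, the $\epsilon_\star$ piece is at most $\epsilon/8$; the remaining $\alpha$ piece aggregates (using $\sum_h \sum_{s,a} q_h(s\mid\pi_c,\widehat{P})\pi_c(a\mid s) = H$ when we account for $\alpha$ against the actual occupancies rather than against the loose $\widehat{p}_s$) to $\alpha H$. The main obstacle is this per-regime bookkeeping: the piecewise definition of $\epsilon_\star$ is engineered so that the three regime contributions balance, and one must verify that the same constant $B$ simultaneously controls the non-reachable tail here and the dynamics error in Lemma~\ref{lemma: UCFD - dynamics error}.
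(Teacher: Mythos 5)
Your proposal is correct and follows essentially the same route as the paper's proof: both exploit the shared dynamics $\widehat{P}$ to reduce the value gap to occupancy-weighted reward errors, bound $q_h(s\mid\pi_c,\widehat{P})$ by $\widehat{p}_s$, invoke the $G_3$ guarantee with an add-and-subtract-$\alpha$ step so that the approximation error aggregates against the true occupancies to give $\alpha H$, and close with the three-regime casework on $\epsilon_\star$. The only difference is bookkeeping at the end: the paper folds your non-$\beta$-reachable tail into the $\widehat{p}_s < \epsilon/(B|S|)$ bucket (both use the trivial bound $|f_{s,a}(c)-r^c(s,a)|\le 1$ and the same per-state bound $\widehat{p}_s<\epsilon/(B|S|)$), so exactly three contributions of $\epsilon/24$ each sum to $\epsilon/8$, whereas counting that tail as a separate fourth $\epsilon/B$ piece would slightly overshoot the stated constant.
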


By combining  Lemmas~\ref{lemma: UCFD - dynamics error} and~\ref{lemma: UCFD - rewards error.} we obtain an expected value difference bound for any policy.
\begin{lemma}\label{lemma: new val-diff-2}
If events $G_1$ ,$G_2$ and $G_3$ hold, then for every policy $\pi = (\pi_c)_{c \in \mathcal{C}}$ it holds that 
$$
   \mathbb{E}_{c \sim \mathcal{D}}[|V^{\pi_c}_{\mathcal{M}(c)}(s_0) - V^{\pi_c}_{\widehat{\mathcal{M}}(c)}(s_0)|] \leq 
    3\epsilon /16 + \alpha H 
.
$$
\end{lemma}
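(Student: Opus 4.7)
The plan is to combine Lemmas~\ref{lemma: UCFD - dynamics error} and~\ref{lemma: UCFD - rewards error.} via the triangle inequality applied to the intermediate MDP $\widetilde{\mathcal{M}}(c) = (S\cup\{s_{sink}\}, A, \widehat{P}, r^c, s_0, H)$ introduced just before Lemma~\ref{lemma: UCFD - dynamics error}. This MDP is deliberately constructed to share its dynamics with $\widehat{\mathcal{M}}(c)$ and its reward function with $\mathcal{M}(c)$, so that the difference $\mathcal{M}(c) \to \widetilde{\mathcal{M}}(c)$ isolates the dynamics approximation error while $\widetilde{\mathcal{M}}(c) \to \widehat{\mathcal{M}}(c)$ isolates the reward approximation error.

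Concretely, I would first fix an arbitrary context-dependent policy $\pi = (\pi_c)_{c \in \mathcal{C}}$ and, for each context $c$, write
\begin{align*}
  \bigl| V^{\pi_c}_{\mathcal{M}(c)}(s_0) - V^{\pi_c}_{\widehat{\mathcal{M}}(c)}(s_0) \bigr|
  \leq
  \bigl| V^{\pi_c}_{\mathcal{M}(c)}(s_0) - V^{\pi_c}_{\widetilde{\mathcal{M}}(c)}(s_0) \bigr|
  +
  \bigl| V^{\pi_c}_{\widetilde{\mathcal{M}}(c)}(s_0) - V^{\pi_c}_{\widehat{\mathcal{M}}(c)}(s_0) \bigr|.
\end{align*}
Then I would take expectation over $c \sim \mathcal{D}$ on both sides, using linearity of expectation and monotonicity to move the expectation inside each summand on the right.

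For the first summand, Lemma~\ref{lemma: UCFD - dynamics error} gives a uniform (per-context) bound of $\epsilon/16$ under $G_1 \cap G_2$, so in particular its expectation over $\mathcal{D}$ is at most $\epsilon/16$. For the second summand, Lemma~\ref{lemma: UCFD - rewards error.} directly provides the expectation bound $\epsilon/8 + \alpha H$ under $G_3$. Summing these two contributions yields $\epsilon/16 + \epsilon/8 + \alpha H = 3\epsilon/16 + \alpha H$, which matches the claim.

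There is essentially no obstacle here beyond being careful that the two previous lemmas apply to the same context-dependent policy $\pi$ and to the same intermediate MDP $\widetilde{\mathcal{M}}(c)$, and that the good events $G_1, G_2, G_3$ are indeed assumed jointly so that both lemmas can be invoked simultaneously. The entire argument is a two-line triangle-inequality bookkeeping step; the technical work is already absorbed into Lemmas~\ref{lemma: UCFD - dynamics error} and~\ref{lemma: UCFD - rewards error.}.
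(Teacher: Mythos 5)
Your proposal is correct and follows exactly the paper's own argument: the appendix proof (Lemmas~\ref{lemma: l_2 val-dif UCFD } and~\ref{lemma: l_1 val-dif UCFD }) also applies the triangle inequality through the intermediate MDP $\widetilde{\mathcal{M}}(c)$, bounds the dynamics term by $\epsilon/16$ via Lemma~\ref{lemma: UCFD - dynamics error} (in expectation, via Corollary~\ref{corl: UCFD - dynamics error}) and the rewards term by $\epsilon/8 + \alpha H$ via Lemma~\ref{lemma: UCFD - rewards error.}, and sums them. No gaps.
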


The above lemma establishes Theorem~\ref{thm: resoult case 2}. 
For detailed analysis, see Appendix~\ref{Appendix:UCFD}.

\noindent\textbf{Known Dynamics.}
When the context free dynamics is known, we can achieve better sample complexity, as the following theorem states. (For more details, see Appendix~\ref{Appendix:KCFD}.)
\begin{theorem}\label{thm:known-dynamics-context-free}
    With probability $1-\delta$  it holds that 
    $
        \mathbb{E}_{c \sim \mathcal{D}}[V^{\pi^\star_c}_{\mathcal{M}(c)}(s_0) - V^{\widehat{\pi}^\star_c}_{\mathcal{M}(c)}(s_0)] \leq 
        \epsilon + 2\alpha H ,
    $
after collecting\\ 
$\Tilde{O}\Big( d \epsilon^{-2} H^2 |S|^4 |A|^3 \log \frac{|S||A|}{\delta}\Big)$ 
trajectories for the $\ell_1$ loss, and 
$\Tilde{O}\Big( d \epsilon^{-4} H^4 |S|^6 |A|^5 \log \frac{|S||A|}{\delta}\Big)$ for the $\ell_2$ loss.     
\end{theorem}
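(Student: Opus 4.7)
The plan is to adapt the argument of Theorem~\ref{thm: resoult case 2} to the easier setting where $P$ is given. Since $P$ is known we skip the entire dynamics-learning phase: for each layer $h$ and each $s \in S_h$ we directly compute $\pi_s \in \arg\max_{\pi} q_h(s \mid \pi, P)$ and $p_s = q_h(s \mid \pi_s, P)$ by one call to \texttt{Planning} on the occupancy-maximization problem in $P$. The approximate MDP becomes $\widehat{\mathcal{M}}(c) = (S \cup \{s_{\text{sink}}\}, A, P, \widehat{r}^c, s_0, H)$, where transitions into any state $s$ with $p_s < \beta := \epsilon/(B\,|S|\,H)$ are redirected to $s_{\text{sink}}$ (an error whose total contribution is bounded by $\beta |S| H \leq \epsilon/B$), and $\widehat{r}^c$ is produced by the ERM oracle with target accuracy $\epsilon_\star(p_s)$ (respectively $\epsilon_\star^2(p_s)$ for the $\ell_2$ loss).

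I would then define the two good events analogous to those in Section~\ref{sec:UCFD}, but without $G_2$: let $G_1$ say that for every $(s,a)$ with $p_s \geq \beta$ the sampling loop accumulates at least $N_R(\mathcal{F}^R_{s,a}, \epsilon_\star(p_s), \delta_1)$ i.i.d.\ examples $(c, r)$, and let $G_3$ say that, conditional on $G_1$, the returned $\widehat{f}_{s,a}$ satisfies $\mathbb{E}_c[\ell(\widehat{f}_{s,a}(c), r^c(s,a))] \leq \epsilon_\star(p_s) + \alpha$ for every such $(s,a)$. The event $G_1$ holds with probability at least $1-\delta/2$ by a multiplicative Chernoff bound: running $\pi_s$ for the $O(p_s^{-1}(\log(1/\delta_1) + N_R(\cdot)))$ iterations in the algorithm yields the required visit count to $s$. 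The event $G_3$ then follows from the standard pseudo/fat-shattering uniform-convergence guarantee (Appendix~\ref{Appendix:function-approx-dim}) together with a union bound over the at most $|S||A|$ significantly-reachable pairs.

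With $G_1 \cap G_3$ in force, the counterpart of Lemma~\ref{lemma: UCFD - rewards error.} becomes the sharper bound
\[
\mathbb{E}_{c \sim \mathcal{D}} \bigl[\bigl|V^{\pi_c}_{\mathcal{M}(c)}(s_0) - V^{\pi_c}_{\widehat{\mathcal{M}}(c)}(s_0)\bigr|\bigr]
\;\leq\; \sum_{h=0}^{H-1}\sum_{s \in S_h} q_h(s\mid \pi_c, P)\bigl(\epsilon_\star(p_s) + \alpha\bigr) + \beta |S| H,
\]
valid for any context-dependent policy $\pi = (\pi_c)_c$, because the dynamics are exact (no occupancy-measure drift). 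Splitting the inner sum into the three regimes of $\epsilon_\star$ and using $q_h(s\mid \pi_c, P) \leq p_s$ in each, one checks that the reward term contributes at most $\epsilon/(2B)$ per layer; choosing $B$ large enough (e.g.\ $B = 4$) yields total reward error at most $\epsilon/2 + \alpha H$. The usual sandwich
\[
V^{\pi^\star_c}_{\mathcal{M}(c)}(s_0) - V^{\widehat{\pi}^\star_c}_{\mathcal{M}(c)}(s_0)
\;\leq\; \bigl(V^{\pi^\star_c}_{\mathcal{M}(c)} - V^{\pi^\star_c}_{\widehat{\mathcal{M}}(c)}\bigr)
+ \bigl(V^{\widehat{\pi}^\star_c}_{\widehat{\mathcal{M}}(c)} - V^{\widehat{\pi}^\star_c}_{\mathcal{M}(c)}\bigr),
\]
together with $V^{\pi^\star_c}_{\widehat{\mathcal{M}}(c)} \leq V^{\widehat{\pi}^\star_c}_{\widehat{\mathcal{M}}(c)}$, then gives $\mathbb{E}_c[V^{\pi^\star_c}_{\mathcal{M}(c)}(s_0) - V^{\widehat{\pi}^\star_c}_{\mathcal{M}(c)}(s_0)] \leq \epsilon + 2\alpha H$.

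The remaining work is the sample-complexity bookkeeping: total trajectories amount to $\sum_{s,a : p_s \geq \beta} O\bigl(p_s^{-1}\,N_R(\mathcal{F}^R_{s,a}, \epsilon_\star(p_s), \delta_1) + p_s^{-1}\log(1/\delta_1)\bigr)$, with $\delta_1 = \delta/\mathrm{poly}(H,|S|,|A|)$ absorbed into the $\log(|S||A|/\delta)$ factor. Substituting $N_R \sim d/\epsilon_\star^2$ (for $\ell_1$) or $N_R \sim d/\epsilon_\star^2$ under the paper's squared-error convention ($\ell_2$), and summing each of the three $\epsilon_\star$-regimes layer by layer, gives the stated $\widetilde{O}(d\epsilon^{-2}H^2|S|^4|A|^3)$ and $\widetilde{O}(d\epsilon^{-4}H^4|S|^6|A|^5)$ bounds respectively. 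The main obstacle is the per-regime accounting: one must exploit $q_h(s\mid \pi_c,P) \leq p_s$ and the $|S_h| \leq |S|$ upper bound carefully so that the extra $H\cdot |S|$ factor paid in the unknown-dynamics analysis (for tabular transition estimation up to $\ell_1$-distance $\gamma$) does not appear, which is precisely where known $P$ buys the improvement over Theorem~\ref{thm: resoult case 2}.
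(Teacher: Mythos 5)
Your proposal is correct and follows essentially the same route as the paper's proof in Appendix~\ref{Appendix:KCFD}: planning to get $\pi_s,p_s$, a Chernoff event for sample collection, an ERM uniform-convergence event, the regime-by-regime accounting via $q_h(s\mid\pi_c,P)\leq p_s$, and the standard three-inequality sandwich. The only (immaterial) deviation is your sink state with $\beta=\epsilon/(B|S|H)$ — the paper keeps the dynamics exactly $P$, uses $\beta=\epsilon/(6|S|)$, and simply sets $f_{s,a}=0$ on non-reachable states, bounding their contribution by $\sum_h\sum_{s:p_s<\beta}q_h(s\mid\pi_c,P)\leq\beta|S|$ — and you should write $N_R\sim d/\epsilon_\star^4$ for the $\ell_2$ case (your text repeats $d/\epsilon_\star^2$), though your final bounds are the correct ones.
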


\section{Context Dependent Dynamics}\label{sec:UCDD}
In this section we address the challenging model of context dependent dynamics, where each context induces a potentially different dynamics. Clearly, this implies that for any policy $\pi$, the occupancy measure is determined by the context.
Hence, a state $s\in S$ that is highly-reachable for some context $c_1 \in \mathcal{C}$ might be poorly-reachable for a different context $c_2 \in \mathcal{C}$. 

For the \emph{unknown context-dependent dynamics} we move the approximation from being per state-action pair to being per layer. (While this is a slight modification of the assumption, it is still very reasonable.) Conceptually, we move from collecting samples per state-action, to collecting samples per layer, and those samples are index by context-state-action tuples $(c,s,a)$. We construct an unbiased data set with respect to those tuples. However, 
we guarantee that the collected samples have the ``right'' marginal distribution over the entire layer. This requires a much more involved algorithm and analysis.
Thus, we extend the definition of reachability.

\noindent\textbf{Good contexts of a state.}
    For a state $s \in S$ we define the set of \emph{$\beta$-good contexts} with respect to $P$ as\\ 
    $
        {\mathcal{C}^\beta(s|P) 
        := 
        \{c \in \mathcal{C} : s \text{ is } \beta\text{-reachable for $P^c$}\}}
    $.
Given the approximated dynamics $\widehat{P}^c$, we define $
        \widehat{\mathcal{C}}^{\beta}(s):=\mathcal{C}^\beta(s|\widehat{P})  $.
Note that there might be no context $c$ which is good for all states (unlike in the context-independent dynamics). 
The following defines the modification of the $\beta$-reachability.

\noindent\textbf{$(\gamma, \beta)$-good states.}
    Let $\gamma, \beta \in (0,1]$. For each layer $h \in [H]$ we define the set of \emph{$(\gamma, \beta)$-good states with respect to $P$} as  
    $
        {S^{\gamma, \beta}_{h,P}
        :=
        \{s_h \in S_h :
        \mathbb{P}_{c \sim \mathcal{D}}[c \in \mathcal{C}^\beta(s_h|P)] \geq \gamma \}}
    $.
    Given the approximated dynamics $\widehat{P}$, we define $ \widehat{S}^{\gamma, \beta}_h =  S^{\gamma, \beta}_{h,\widehat{P}}$.
    We define the \emph{target domain} 
    $
        {\mathcal{X}^{\gamma, \beta}_h = 
        \{
            (c, s, a) :
            s \in \widehat{S}^{\gamma, \beta}_h , c \in \widehat{\mathcal{C}}^{\beta}(s), 
            a \in A
        \}}
    $ of collected examples.

\noindent\textbf{Function approximation for each layer.} 
A major hurdle caused by the context-dependent dynamics is that for each $(s_h,a_h)$ the probability of sampling $((c, s_h, a_h), r_h)$ is highly dependent on the context $c$ through the dynamics $P^c$.
Our goal is to create an unbiased sample, which we will perform for an entire level, but this seems very challenging to achieve at the individual state-action level.
To exemplify that, assume we observe the context $c$ and run some policy $\pi$ to generate a trajectory $\tau = (c, s_0,a_0,r_0, s_1, \ldots,s_{H})$. For every layer $h \in [H-1]$ the distribution of the example $((c,s_h, a_h), r_h)$ is $\mathcal{D}(c) \cdot q_h(s_h,a_h| P^c, \pi) \cdot \mathbb{P}[R^c(s_h,a_h)= r_h | c, s_h, a_h]$. 
If we aim to collect samples for each state-action pair separately, the appropriate distribution is $\mathcal{D}(c)\cdot \mathbb{P}[R^c(s_h,a_h)= r_h | c, s_h,a_h]$.
Hence, we need to guarantee that the contexts are sampled in an unbiased way, i.e., the marginal context distribution for any state-action pair is $\mathcal{D}$. 
If the dynamics were known, we would overcome this using Importance Sampling. When the dynamics are unknown, we side step this issue, and create an unbiased sample at the layer level.
The advantage of tuples $((c,s,a),r)$ is that we can sample them for the entire layer and obtain good estimates on average, and then claim that for the ``important" states we have a good approximation. At the layer level, the occupancy measure determines the joint distribution over $(c,s_h, a_h) \in \mathcal{X}^{\gamma, \beta}_h$, which is the desired distribution. Hence, we approximate the rewards as a function of context, state and action.  
Similarly for the dynamics.

\noindent\textbf{Dynamics and rewards function approximation.}\label{par: dynamics function approx gurantees for all layer h}
We slightly modify our assumption for the function approximation class,
which works per layer and not per state-action.

For each layer $h \in [H-1]$ we have a function class for the dynamics ${\mathcal{F}^P_{h}=\{f^P_{h}: \mathcal{C} \times S_h \times A \times S_{h+1} \to [0,1]\}}$ and for the rewards $\mathcal{F}^R_{h}=\{f^R_{h}: \mathcal{C} \times S_h \times A \to [0,1]\}$. Intuitively, given that we are in state $s$, perform action $a$ and the context is $c$, the function $f^P_{h}\in \mathcal{F}^P_{h}$ and $f^R_{h}\in \mathcal{F}^R_{h}$, approximates the transition probability to state $s'$, i.e., $P^c(s'|s,a)$, and the expected reward, $r^c(s,a)$, respectively.
For the dynamics approximation, we also assume reliability for every layer $h$, i.e., $\alpha_1(\mathcal{F}^P_h) = \alpha^2_2(\mathcal{F}^P_h) =0 $.

The functions $N_P(\mathcal{F}^P_{h}, \epsilon, \delta)$ and $N_R(\mathcal{F}^R_{h},\epsilon, \delta)$ map a function class, required accuracy $\epsilon $ and confidence $\delta $ to the 
required number of examples for the ERM oracle to have the desired guarantees. For the dynamics the ERM guarantee is that with probability $1-\delta$, $\mathbb{E}[\ell(f^P_h(x),y)]\leq \epsilon$. For the rewards the guarantee is that $\mathbb{E}[\ell(f^R_h(x),y)]\leq \epsilon+\alpha$, where $\alpha$ is the approximation error.

\noindent\textbf{Layer dynamics realizability assumption.}
We assume that for each layer $h\in [H-1]$ we have a function $f^P_{h}\in \mathcal{F}^P_{h}$ such that $f^P_{h}(c,s,a,s')=P^c(s'|s,a)$. 
We relay on this assumption when approximating the dynamics, in order to properly estimate whether a state is $(\gamma,\beta)$-good.

\noindent\textbf{Sample collection.}
In algorithm EXPLORE-UCDD (see Algorithm~\ref{alg: EXPLORE-UCDD-main} or~\ref{alg: EXPLORE-UCDD}), we learn the dynamics and rewards for each layer, given the approximated dynamics of previous layers.
%
When learning the dynamics associated with layer $h$, we collect examples of the form $(c,s_h,a_h,s_{h+1})$ from each trajectory $\tau = (c, s_0, a_0, r_0, \ldots, s_H)$ that contains $(s_h, a_h)$. We add to our data set a sample $((c,s_h,a_h,s_{h+1}),1)$ and samples $((c,s_h,a_h,s'),0)$ for each $s'\in S_{h+1}\setminus\{s_{h+1}\}$.
This reduces the learning dynamics to a 
 regression problem.
%
When learning the rewards associated with layer $h$,  as before, we collect samples of the form $((c, s_h, a_h),r_h)$, and use them to approximate the rewards function for the layer.


\noindent\textbf{Algorithm overview.}
%
Algorithm EXPLORE-UCDD (see Algorithms~\ref{alg: EXPLORE-UCDD-main} and~\ref{alg: EXPLORE-UCDD}) runs in $H$ phases, one per layer. In phase $h\in [H-1]$ we maintain an approximate dynamics for all previous layers $k\leq h-1$, which we already learned.
In phase $h$ we run multiple iterations, 
in each iteration, (1) we select at random a (approximately) $(\gamma,\beta)$-good state $s_h\in \widetilde{S}^{\gamma,\beta}_h$ and an action $a_h\in A$. (2) Given a context $c$ and a state $s_h$ we compute a policy $\widehat{\pi}^c_{s_h}$ which maximizes the probability of reaching state $s_h$ under the approximated dynamics $\widehat{P}^c$.
(3) We run $\widehat{\pi}^c_{s_h}$. If it reaches $s_h$ we play $a_h$, get a reward $r_h$ and transits to $s_{h+1}$, we add: (a) to the dynamics data set $Sample^P(h)$: $((c,s_h,a_h,s'),\mathbb{I}[s_{h+1}=s'])$ for each $s'\in S_{h+1}$, (b) to the reward data set $Sample^R(h)$: $((c,s_h,a_h),r_h)$.
(4) After collecting sufficient number of samples, we use the ERM oracle to  (a) approximate the transition probabilities of layer $h$, i.e., $f^P_h = \texttt{ERM}(\mathcal{F}^P_h, Sample^P(h),\ell)$. (b) approximate the rewards function of layer $h$, i.e., $f^R_h = \texttt{ERM}(\mathcal{F}^R_h, Sample^R(h),\ell)$.
Consider the following algorithm sketch for the $\ell_1$ loss and the parameters set $\delta_1 = \frac{\delta}{8H}$, $\delta_2 = \frac{\delta}{8|S|}$, $\epsilon_2 = \gamma/4$, $\beta = \gamma =\frac{\epsilon}{20 |S|H}$, $\rho = \frac{\beta}{16 |S|H}$, $\epsilon_P = \frac{ \epsilon^2}{10\cdot 16 \cdot 20  |A| |S|^4 H^3}$ and $\epsilon_R = \frac{\epsilon^2}{20^2 |S||A| H^2}$, where ${m_h=2\max\{N_P(\mathcal{F}^P_h ,\epsilon_P, \delta_1/2)), N_R(\mathcal{F}^R_h ,\epsilon_R, \delta_1/2))\}}$.

\begin{algorithm}[ht]
    \caption{EXPLORE Unknown Context Dependent Dynamics (sketch for the $\ell_1$ loss)}
    \label{alg: EXPLORE-UCDD-main}
    \begin{algorithmic}[1]
        \For{$h \in [H-1]$ }
            \State{
            compute $\widehat{P}^c$, the approximated context-dependent dynamics, up to layer $h-1$}
            \State{let $\widetilde{S}^{\gamma, \beta}_h$ denote the approximation of the set of $(\gamma,\beta)$-good states w.r.t $\widehat{P}^c$}
            \For{ $t= 1,2, \ldots , \left\lceil 
                    \frac{8 |S|}{\beta \cdot \gamma}(\ln(\frac{1}{\delta_1})
                    +
                    m_h)
                \right\rceil$ }
                \State{observe context $c_t$ and choose $(s_h, a_h) \in \widetilde{S}^{\gamma, \beta}_h \times A$ uniformly at random}
                \State{
                compute $\widehat{p}^{c_t}_{s}$, the highest probability to visit $s$ in $\widehat{P}^{c_t}$, and a policy $\widehat{\pi}^{c_t}_{s}$ that reaches it }
                \State{set $\widehat{\pi}^{c_t}_{s_h}(s_h)\gets a_h$}
                    \If{$ \widehat{p}^{c_t}_{s_h} \geq \beta$}
                        \State{ run $\widehat{\pi}^{c_t}_{s_h}$ and generate trajectory $\tau$}
                        \If{$(s_h, a_h, r_h, s_{h+1})$ is in $\tau$}
                            \State{add $((c_t,s_h,a_h), r_h)$ to the rewards sample }
                            \State{ add $\{((c_t,s_h,a_h, s'_{h+1}),\mathbb{I}[s_{h+1} = s'_{h+1}]) : s'_{h+1} \in S_{h+1}\}$ to the dynamics sample
                            }
                        \EndIf{}
                    \EndIf{}{}
            \EndFor{}
            \If{$|Sample^P(h)| \geq m_h $}
            \State{$f^P_h = \texttt{ERM}(\mathcal{F}^P_h , Sample^P(h),\ell_1)$}
            \State{$f^R_h = \texttt{ERM}(\mathcal{F}^R_h, Sample^R(h),\ell_1)$}
            \Else
            \State{\textbf{return }\texttt{FAIL}}   
            \EndIf{}{}
        \EndFor{}
    \State{ \textbf{return }$\{f^R_h , f^P_h, \widehat{S}^{\gamma, \beta}_h : \forall h \in [H-1]\}$ }
    \end{algorithmic}
\end{algorithm}

\noindent\textbf{Approximate optimal policy.} 
Given a context $c$, we define an MDP with the learned rewards
and dynamics 
and compute its optimal policy. 
We define the approximated CMDP as $(\mathcal{C}, S\cup \{s_{sink}\}, A, \widehat{\mathcal{M}})$ where $\widehat{\mathcal{M}}(c) = (S\cup \{s_{sink}\},A,\widehat{P}^c, \widehat{r}^c, s_0, H)$. We define $\widehat{r}^c$ as
${\widehat{r}^c (s_h, a_h) =  f^R_h(c, s_h, a_h) \cdot \mathbb{I}[s_h \in \widetilde{S}^{\gamma, \beta}_h, c \in \widehat{\mathcal{C}}^\beta(s_h)}]$ and $\widehat{r}^c(s_{sink}, a_h) = 0$.
The dynamics $\widehat{P}^c$  uses the dynamics approximation functions normalized, and states which are not $(\gamma,\beta)$-good transition to the sink.
 For any other state-action  we define a transition to the sink w.p. $1$.   
 For a context $c \in \mathcal{C}$, let $\widehat{\pi}^\star_c$ and $\pi^\star_c$ denote an optimal policy for $\widehat{\mathcal{M}}(c)$ and $\mathcal{M}(c)$, respectively.
Let $\alpha$ denote the maximal approximation error over all layers $h$ w.r.t. $\ell \in \{\ell_1, \ell_2\}$.
We obtain the following.
\begin{theorem}\label{thm: opt policy UCDD}
    With probability at least $1-(\delta + \frac{\epsilon}{5})$ it holds that 
    $
        \mathop{\mathbb{E}}_{c \sim \mathcal{D}}[V^{\pi^\star_c}_{\mathcal{M}(c)}(s_0) - V^{\widehat{\pi}^\star_c}_{\mathcal{M}(c)}(s_0)] \leq 
        \epsilon + 2\alpha H 
    $, 
    after collecting $\Tilde{O}\Big( d \epsilon^{-6} H^9 |S|^{11} |A|^2 \log \frac{H}{\delta} \Big)$ trajectories for  the  $\ell_1$ loss, and $\Tilde{O}\Big( d \epsilon^{-8} H^{13} |S|^{15} |A|^2 \log \frac{H}{\delta}\Big)$ 
    for the $\ell_2$ loss.
\end{theorem}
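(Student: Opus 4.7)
The strategy mirrors Theorem~\ref{thm: resoult case 2}, but every concentration and approximation argument must be lifted from the per-state-action level to the per-layer level. For each layer $h \in [H-1]$ I would define three good events: $G_1^h$ --- enough trajectories reach their sampled $(s_h,a_h)$ so that $|Sample^P(h)|,|Sample^R(h)|\geq m_h$, proved by a multiplicative Chernoff on the per-iteration success probability which is at least $\beta\gamma/(|\widetilde{S}^{\gamma,\beta}_h||A|)$; $G_2^h$ --- the ERM output $f^P_h$ has expected loss at most $\epsilon_P$ on the collection distribution of $Sample^P(h)$ (uniform convergence via pseudo/fat-shattering dimension $d$); $G_3^h$ --- the ERM output $f^R_h$ has expected loss at most $\epsilon_R+\alpha$ on the collection distribution of $Sample^R(h)$. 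A union bound over $3H$ events and $\delta_1,\delta_2$ gives probability $\geq 1-\delta$; the extra $\epsilon/5$ slack in the theorem's confidence absorbs the event that the empirical approximation $\widetilde{S}^{\gamma,\beta}_h$ of $\widehat{S}^{\gamma,\beta}_h$ is wrong on a set of contexts of $\D$-mass more than $\epsilon/(5H)$, which follows from a VC-type bound on the indicator class $\{\I[c\in\widehat{\C}^\beta(s)]\}$.

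Second, I would define the intermediate CMDP $\Mtilde(c)=(S\cup\{s_{sink}\},A,\widehat{P}^c,r^c,s_0,H)$ with the learned dynamics and the true rewards, sinks attached to states outside $\widetilde{S}^{\gamma,\beta}_h$ or contexts outside $\widehat{\C}^\beta(s)$. By optimality of $\widehat{\pi}^\star_c$ on $\Mhat(c)$,
\begin{equation*}
V^{\pi^\star_c}_{\M(c)}(s_0)-V^{\widehat{\pi}^\star_c}_{\M(c)}(s_0) \leq \bigl[V^{\pi^\star_c}_{\M(c)}-V^{\pi^\star_c}_{\Mhat(c)}\bigr](s_0) + \bigl[V^{\widehat{\pi}^\star_c}_{\Mhat(c)}-V^{\widehat{\pi}^\star_c}_{\M(c)}\bigr](s_0),
\end{equation*}
and I split each bracket through $\Mtilde(c)$ into a dynamics error $\M(c)\!\to\!\Mtilde(c)$ and a rewards error $\Mtilde(c)\!\to\!\Mhat(c)$, both averaged over $c\sim\D$.

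Third, and this is the main obstacle, I would translate the layer-level ERM error into a per-context total-variation bound on $\widehat{P}^c(\cdot|s,a)$ versus $P^c(\cdot|s,a)$. The collection distribution for $Sample^P(h)$ places mass on $(c,s_h,a_h,s_{h+1})$ proportional to $\D(c)\cdot q_h(s_h|\widehat{\pi}^c_{s_h},P^c)\cdot\I[s_h\in\widetilde{S}^{\gamma,\beta}_h,\, c\in\widehat{\C}^\beta(s_h)]/(|\widetilde{S}^{\gamma,\beta}_h||A|)\cdot P^c(s_{h+1}|s_h,a_h)$. Under $G_2^h$, dividing by the importance weight $\beta/(|\widetilde{S}^{\gamma,\beta}_h||A|)$ yields $\|\widehat{P}^c(\cdot|s_h,a_h)-P^c(\cdot|s_h,a_h)\|_1 \leq O(\sqrt{\epsilon_P|S||A|/\beta})$ in expectation over $c\sim\D$, uniformly over $(\gamma,\beta)$-good pairs; the layer realizability assumption $\alpha_1(\F^P_h)=0$ is essential here, since it is what lets the ERM error bound $\epsilon_P$ be converted into absolute closeness rather than closeness up to an additive $\alpha$. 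The $(\gamma,\beta)$-good restriction is what lets the argument close: states outside $\widetilde{S}^{\gamma,\beta}_h$ contribute at most $O(\gamma)$ occupancy for a $\D$-typical context by definition, and transitioning them to the sink costs at most $O(\gamma H|S|)\leq \epsilon/20$ in value. Propagating the per-layer dynamics error inductively, in the style of Lemma~\ref{lemma: occ-measure-dist-UCFD}, yields $\E_{c\sim\D}\sum_h\sum_{s\in S_h}|q_h(s|\pi,P^c)-q_h(s|\pi,\widehat{P}^c)|\leq O(\epsilon)$ for every context-dependent $\pi$, hence an $O(\epsilon)$ dynamics contribution to $|V^{\pi}_{\M(c)}(s_0)-V^{\pi}_{\Mtilde(c)}(s_0)|$ in expectation over $c$.

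Fourth, the rewards gap $\Mtilde(c)\!\to\!\Mhat(c)$ is handled symmetrically: under $G_3^h$, the ERM bound $\epsilon_R+\alpha$ on the collection distribution, combined with the dynamics closeness from the previous step and the same importance-weighting inversion, gives an expected per-step rewards error of $O(\sqrt{\epsilon_R|S||A|/\beta})+\alpha$ along the occupancy of any policy on $\Mtilde(c)$; summing over $H$ layers contributes $\epsilon/O(1)+\alpha H$. Substituting the stated choices $\beta=\gamma=\Theta(\epsilon/(|S|H))$, $\epsilon_P=\Theta(\epsilon^2/(|A||S|^4H^3))$, $\epsilon_R=\Theta(\epsilon^2/(|S||A|H^2))$, and the standard $N_P(\F,\epsilon,\delta)=\tilde O(d/\epsilon^2)$ for $\ell_1$ (respectively $\tilde O(d/\epsilon^4)$ for $\ell_2$), the total budget $H\cdot\lceil\tfrac{8|S|}{\beta\gamma}(\ln(1/\delta_1)+m_h)\rceil$ matches the announced $\tilde O(d\epsilon^{-6}H^9|S|^{11}|A|^2\log(H/\delta))$ and $\tilde O(d\epsilon^{-8}H^{13}|S|^{15}|A|^2\log(H/\delta))$ bounds. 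The hardest part of the whole argument is the importance-weighting inversion in step three, because the ERM is trained against the reachability-weighted distribution $q_h(\cdot|\widehat{\pi}^c_{s_h},P^c)$, yet the value bound must hold along the occupancy $q_h(\cdot|\pi,P^c)$ of arbitrary policies, and bridging these requires both the $(\gamma,\beta)$-good restriction and the layer realizability of $\F^P_h$.
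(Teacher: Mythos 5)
Your overall plan — layer-wise good events, the intermediate model $\widetilde{\mathcal{M}}(c)=(S\cup\{s_{sink}\},A,\widehat{P}^c,r^c,s_0,H)$, the split of each value gap into a dynamics term and a rewards term, the conversion of layer-level ERM error into per-tuple total-variation bounds on $\widehat{P}^c(\cdot|s,a)$ restricted to $(\gamma,\beta)$-good tuples, the $O(\gamma H|S|)+O(\beta|S|)$ cost of sinking the non-good states/contexts, and the final budget $\sum_h T_h$ with $T_h=\lceil\tfrac{8|S|}{\beta\gamma}(\ln(1/\delta_1)+m_h)\rceil$ — is the same route the paper takes, and the parameter substitutions reproduce the stated bounds. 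There is, however, one concrete misstep in your accounting of the confidence parameter.

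You attribute the extra $\epsilon/5$ in the theorem's confidence to the error in estimating $\widetilde{S}^{\gamma,\beta}_h$ from samples of the context distribution. In the paper that estimation is handled by a plain Hoeffding bound per state (Algorithm AGC, event $G_1$) and is charged entirely to the $\delta$ budget; it costs only $\log(|S|/\delta)/\gamma^2$ samples and contributes $\delta/8$ to the failure probability. The $\epsilon$-dependent slack instead comes from the Markov-inequality step you use in your own step three: the ERM only controls $\mathbb{E}_{\mathcal{D}^P_h}[\ell(f^P_h,P^c)]\leq\epsilon_P$, and converting this into a per-tuple bound $|f^P_h(c,s,a,s')-P^c(s'|s,a)|\leq\rho$ fails on a set of tuples of probability $\epsilon_P/\rho$; after the union bound over $s'\in S_{h+1}$, $(s_h,a_h)$, and $h$, this yields a context-set of $\mathcal{D}$-mass $\Theta(\epsilon_P|S|^2|A|H/\rho)$ on which the TV guarantee fails. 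With $\rho=\Theta(\beta/(|S|H))$ forced by the target TV accuracy $\epsilon/(40|S|H^2)$ and $\epsilon_P$ forced by the target value error, this failure mass is pinned at $\Theta(\epsilon)$ and cannot be pushed into the $\delta$ budget without making the sample complexity polynomial in $1/\delta$. If you budget the $\epsilon/5$ for the $\widetilde{S}$ estimation instead, you have nothing left to absorb this term. Relatedly, you state the resulting TV bound ``in expectation over $c\sim\mathcal{D}$,'' but the layer-by-layer induction on $\|q_h(\cdot|\pi_c,P^c)-q_h(\cdot|\pi_c,\widehat{P}^c)\|_1$ needs the TV bound to hold simultaneously for all good $(s_k,a_k)$ with $k<h$ for the \emph{same} context $c$; the paper therefore works with the high-probability-over-$c$ event $G(c)$ and pays its failure probability times $H$ in the final expectation. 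Your expectation-only version would need an explicit importance-ratio argument (bounding the ratio between the occupancy of an arbitrary $\pi_c$ and the collection distribution, which is where the $\beta$-reachability lower bound enters) to close the induction; this is doable but is a different calculation from the one you sketch, and the $\sqrt{\cdot}$ in your claimed bound belongs only to the $\ell_2$ case.
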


\noindent\textbf{Analysis outline.}
In the analysis, we show that the following good events hold, with high probability, for every layer $h \in [H-1]$:
(1) Every state $s_h \in \widehat{S}^{\gamma, \beta}_h$ we identify correctly.
(2) We collect sufficient number of samples of $(c, s_h, a_h) \in \mathcal{X}^{\gamma, \beta}_h$.
(3) Our approximation of the dynamics has low generalization error.
(4) Our approximation of the rewards has low generalization error.
The above form the good events $G_1$, $G_2$, $G_3$ and $G_4$, and there is a choice of parameters such that they all hold with high probability.

Our analysis (see Appendix~\ref{Appendix: UCDD})
shows that under these good events, our approximation of the dynamics and rewards for every layer $h$ and $(c, s, a) \in \mathcal{X}^{\gamma, \beta}_h$ is accurate, with high probability. We also show that any $(c, s, a) \notin \mathcal{X}^{\gamma, \beta}_h$ adds only small error to our estimations. Hence, in expectation over $c \in \mathcal{C}$ we have small errors for both rewards and dynamics. 
%
\begin{lemma}\label{lemma: total-var-p}
    Under the good events $G_1$, $G_2$ and $G_3$, for all $(c, s_h, a_h) \in  \mathcal{X}^{\gamma, \beta}_h$, there exist parameters choice such that
    $   
        \mathbb{P}\Big[\|\widehat{P}^c(\cdot| s_h, a_h) - P^c(\cdot| s_h, a_h)\|_1\leq \epsilon/40 |S| H^2 \Big|$
    $ (c,s_h,a_h) \in \mathcal{X}^{\gamma,\beta}_h \Big]
        \geq 1- \epsilon/(10 |S||A|H)
    $, 
    where $\widehat{P}^c$ is the approximated dynamics and we ignore $s_{sink}$.
\end{lemma}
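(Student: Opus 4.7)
The plan is to deduce the claimed pointwise total-variation bound from the ERM expected-loss guarantee supplied by $G_3$, together with a description of the sampling distribution induced by EXPLORE-UCDD and an application of Markov's inequality. The starting point is to observe that under the layer dynamics realizability assumption we have $P^c(s'|s,a)=f^{P,\star}_h(c,s,a,s')$ for some $f^{P,\star}_h\in \mathcal{F}^P_h$, so the Bayes-optimal predictor is in the class and the ERM excess risk coincides with the actual risk relative to $P^c$. Concretely, $G_3$ together with our choice $\epsilon_P = \tfrac{\epsilon^2}{10\cdot 16\cdot 20\,|A||S|^4 H^3}$ yields
\[
\mathbb{E}_{(c,s_h,a_h,s')\sim \mu_h}\bigl[|f^P_h(c,s_h,a_h,s') - P^c(s'|s_h,a_h)|\bigr] \le \epsilon_P,
\]
where $\mu_h$ is the empirical sampling distribution of EXPLORE-UCDD on layer $h$.

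Next I would unpack $\mu_h$ using the algorithm. By construction, EXPLORE-UCDD (i) draws $c\sim\mathcal{D}$, (ii) draws $(s_h,a_h)$ uniformly from $\widetilde{S}^{\gamma,\beta}_h\times A$, (iii) executes $\widehat{\pi}^c_{s_h}$, which reaches $s_h$ precisely when $(c,s_h,a_h)\in\mathcal{X}^{\gamma,\beta}_h$ (up to the $G_1$ identification of $(\gamma,\beta)$-good states and contexts), and then (iv) adds one example per $s'\in S_{h+1}$. Consequently $\mu_h$ factors as the conditional law of $(c,s_h,a_h)$ given $(c,s_h,a_h)\in\mathcal{X}^{\gamma,\beta}_h$ tensored with the uniform distribution on $S_{h+1}$, and summing over $s'\in S_{h+1}$ converts the per-sample absolute loss into a genuine $\ell_1$ distance between $f^P_h(c,s_h,a_h,\cdot)$ and $P^c(\cdot|s_h,a_h)$. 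After accounting for the $1/|S_{h+1}|$ factor this gives
\[
\mathbb{E}\!\left[\|f^P_h(c,s_h,a_h,\cdot)-P^c(\cdot|s_h,a_h)\|_1 \,\Big|\, (c,s_h,a_h)\in\mathcal{X}^{\gamma,\beta}_h\right]\le |S_{h+1}|\,\epsilon_P \cdot C,
\]
where $C$ absorbs the ratio between $\mu_h$'s marginal on $\mathcal{X}^{\gamma,\beta}_h$ (whose mass is at least a $\gamma\beta/|S|$ fraction, by the definition of $\widetilde{S}^{\gamma,\beta}_h$ and $G_1$) and the uniform-$\times$-$\mathcal{D}$ distribution used in the lemma's conditional statement.

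The remaining step is a Markov inequality: setting $\tau = \epsilon/(40|S|H^2)$ and $\eta = \epsilon/(10|S||A|H)$, Markov gives probability at most (expected $\ell_1$) / $\tau$ of exceeding $\tau$, and our choice of $\epsilon_P$ is precisely tuned so that this ratio is at most $\eta$. After this, I normalize $f^P_h(c,s_h,a_h,\cdot)$ to define $\widehat{P}^c(\cdot|s_h,a_h)$; since $P^c(\cdot|s_h,a_h)$ is already a probability distribution (summing to $1$), any renormalization of $f^P_h$ changes the $\ell_1$ distance to $P^c$ by at most a constant factor (ignoring the separate transition to $s_{\mathrm{sink}}$, as the lemma statement does). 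This yields the stated inequality.

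The main obstacle is bookkeeping the ratio between the ERM sampling distribution $\mu_h$ on $\mathcal{X}^{\gamma,\beta}_h$ and the conditional distribution over $(c,s_h,a_h)\in\mathcal{X}^{\gamma,\beta}_h$ used in the lemma: on good states we sample at rate $\Omega(\gamma\beta/|S|)$ per trajectory, and it is crucial that the parameter $\epsilon_P$ absorb both this factor and the $|S_{h+1}|$ cost of converting per-label absolute loss into $\ell_1$ distance between distributions. The second delicate point is handling the event $G_1$, which ensures $\widetilde{S}^{\gamma,\beta}_h$ really tracks $\widehat{S}^{\gamma,\beta}_h$; without it the conditional event $(c,s_h,a_h)\in\mathcal{X}^{\gamma,\beta}_h$ would not line up with the $\widehat{P}$-reachability that actually governs whether the sampling policy visits $s_h$, and the identity relating the ERM expectation to the conditional expectation in the lemma would fail.
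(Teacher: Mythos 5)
Your overall strategy matches the paper's: invoke the $G_3$ ERM guarantee for the dynamics, use Markov's inequality to convert the expected per-coordinate error into a high-probability statement over the draw of $(c,s_h,a_h)$, and then argue that normalizing $f^P_h(c,s_h,a_h,\cdot)$ to obtain $\widehat{P}^c(\cdot|s_h,a_h)$ costs only a constant factor in $\ell_1$ distance. The one stylistic difference is that the paper applies Markov \emph{per next-state coordinate} (getting $\mathbb{P}[|f^P_h(c,s_h,a_h,s_{h+1})-P^c(s_{h+1}|s_h,a_h)|\geq\rho]\leq\epsilon_P/\rho$ for the $\ell_1$ loss) and then union-bounds over $s_{h+1}\in S_{h+1}$ to obtain the sandwich $\frac{P^c-\rho}{1+\rho|S|}\leq\widehat{P}^c\leq\frac{P^c+\rho}{1-\rho|S|}$ and hence $\|\widehat{P}^c-P^c\|_1\leq\frac{4\rho|S|}{1-\rho^2|S|^2}$, whereas you bound $\mathbb{E}\bigl[\|f^P_h(c,s_h,a_h,\cdot)-P^c(\cdot|s_h,a_h)\|_1\bigr]$ and apply Markov once to the whole norm. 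Both routes incur the same $|S_{h+1}|$ factor and either can be made to work.

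The genuine gap is your change-of-measure constant $C$. The conditional probability in the lemma is taken with respect to the sampling distribution $\mathcal{D}^P_h$ itself (the marginal of the collected examples restricted to $\mathcal{X}^{\gamma,\beta}_h$), which is exactly the distribution under which $G_3$ provides the expected-loss bound; no reweighting to a ``uniform $\times\,\mathcal{D}$'' reference measure is required, and the paper performs none. If you genuinely needed to pay the density ratio you allude to --- the $\Omega(\gamma\beta/|S|)$ per-trajectory acceptance rate --- then $C$ would be of order $|S|/(\gamma\beta)=\Theta(|S|^3H^2/\epsilon^2)$, and with $\epsilon_P=\frac{\epsilon^2}{10\cdot16\cdot20\,|A||S|^4H^3}$ the resulting bound $|S|\epsilon_P C/\tau$ with $\tau=\epsilon/(40|S|H^2)$ exceeds $1$; the stated $\epsilon_P$ does not absorb that factor. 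So either you drop $C$ entirely (correct, once the measure is identified properly) or your parameter accounting fails. Relatedly, your claim that the marginal of $\mu_h$ over $s'$ is uniform conflicts with the paper's own definition of $\widetilde{\mathcal{D}}^P_h$, which carries a $P^c(s'|s_h,a_h)$ factor; this should be pinned down before converting the per-sample loss into an $\ell_1$ norm, since under a $P^c$-weighted marginal low-probability next states are downweighted and the conversion is not a clean $|S_{h+1}|$ multiple. Finally, the renormalization step should be made quantitative (the paper's explicit computation shows the normalizer lies in $[1-\rho|S|,1+\rho|S|]$, yielding the factor $4$), since the target constant $\epsilon/(40|S|H^2)$ leaves little slack.
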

\begin{proof}[sketch for the $\ell_1$ loss]
    By the good event $G_3$ and Markov's inequality the following holds  
    $$
        \mathbb{P}
        \Big[ |f^P_h(c, s_h, a_h, s_{h+1}) - P^c(s_{h+1}| s_h, a_h)| \geq \rho \Big| (c,s_h,a_h)
        \in \mathcal{X}^{\gamma,\beta}_h \Big]
        \leq 
        \epsilon_P/\rho
    .
    $$
    Since $\sum_{s_{h+1} \in S_{h+1}} P^c(s_{h+1}|s_h, a_h) = 1$, by union bound over $s_{h+1} \in S_{h+1}$ and $\widehat{P}^c$ definition we obtain
    \begingroup
    \allowdisplaybreaks
    \begin{align*}
    &\mathop{\mathbb{P}}_{(c,s_h,a_h)}\Big[  
            \forall s_{h+1} \in S_{h+1}.\;
            \frac{P^c(s_{h+1}|s_h, a_h) - \rho}{1 + \rho |S|}  \leq
            \widehat{P}^c(s_{h+1}|s_h, a_h)
            \leq
            \frac{P^c(s_{h+1}|s_h, a_h) + \rho}{1 - \rho |S|}
            \;\Big|
            (c, s_h, a_h) \in  \mathcal{X}^{\gamma, \beta}_h   \Big]
            \geq
            \\
            &1 - \frac{\epsilon_P }{\rho}|S_{h+1}|
    .
    \end{align*}
    \endgroup
    Now, using simple calculation, we derive the lemma for our choice of $\beta$, $\rho$,$\epsilon_P$.
\end{proof}

In the analysis, we define an intermediate MDP  
$\widetilde{\mathcal{M}}(c) = (S \cup \{s_{sink}\}, A, \widehat{P}^c, r^c, s_0, H)$, where $\widehat{P}^c$ is the approximated dynamics and $r^c$ is the true rewards function extended to $s_{sink}$ by defining $\forall c \in \mathcal{C}, \; a \in A: r^c(s_{sink},a):=0$. We use it to estimate the influence of the error on the rewards separately from the error of the dynamics.

The following lemma states the expected value-difference caused by the dynamics approximation. 
\begin{lemma}\label{lemma: UCDD-dynamics-error}
    Under the good events $G_1$, $G_2$ and $G_3$, there exist a parameters choice such that for every policy $\pi = (\pi_c)_{c \in \mathcal{C}}$,
    it holds that
    $
        \mathbb{E}_{c \sim \mathcal{D}}
        [|V^{\pi_c}_{\mathcal{M}(c)}(s_0) - V^{\pi_c}_{\widetilde{\mathcal{M}}(c)}(s_0)|]
        \leq 
        0.225 \epsilon
    $.
\end{lemma}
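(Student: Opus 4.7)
The plan is to apply the simulation lemma for MDPs that share their reward function and differ only in dynamics, then decompose the resulting per-layer single-step total variation distance according to whether the current (context, state, action) triple lies in the target domain $\mathcal{X}^{\gamma,\beta}_h$.

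Since $\mathcal{M}(c)$ and $\widetilde{\mathcal{M}}(c)$ share $r^c$ (extended by $r^c(s_{sink},a)=0$) and differ only in the dynamics, the simulation lemma gives
\[ \bigl|V^{\pi_c}_{\mathcal{M}(c)}(s_0) - V^{\pi_c}_{\widetilde{\mathcal{M}}(c)}(s_0)\bigr| \leq \sum_{h=0}^{H-1} \sum_{s} \bigl|q^{\pi_c}_h(s\mid P^c) - q^{\pi_c}_h(s\mid \widehat{P}^c)\bigr|, \]
and the usual recursive bound further upper-bounds each $\sum_s|\cdot|$ by $\sum_{k<h}\mathbb{E}_{(s_k,a_k)\sim q^{\pi_c}_k(\cdot\mid \widehat{P}^c)}[\|P^c(\cdot\mid s_k,a_k)-\widehat{P}^c(\cdot\mid s_k,a_k)\|_1]$. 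Weighting by the $\widehat{P}^c$-side occupancy is convenient because $\widehat{P}^c$ routes bad tuples to $s_{sink}$ (at which both dynamics are defined to self-loop and the reward is zero), so non-sink mass sits on tuples that are either in $\mathcal{X}^{\gamma,\beta}_k$ or on which I have direct upper bounds on the $\widehat{P}^c$-occupancy.

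After taking $\mathbb{E}_{c\sim\mathcal{D}}$, I would split each inner expectation into the events $(c,s_k,a_k)\in \mathcal{X}^{\gamma,\beta}_k$ and its complement. On the target domain Lemma~\ref{lemma: total-var-p} yields $\|\widehat{P}^c(\cdot\mid s_k,a_k)-P^c(\cdot\mid s_k,a_k)\|_1 \leq \epsilon/(40|S|H^2)$ with probability at least $1-\epsilon/(10|S||A|H)$, giving a per-layer contribution of order $\epsilon/(|S|H^2)$. Off the target domain the TV distance is at most $2$, but the $\widehat{P}^c$-occupancy mass on such tuples is controlled case by case: (i) $s_k\notin \widehat{S}^{\gamma,\beta}_k$, for which the definition of $\widehat{S}^{\gamma,\beta}_k$ gives $\mathbb{P}_c[c\in\widehat{\mathcal{C}}^\beta(s_k)]<\gamma$ so the $\mathbb{E}_c$-averaged occupancy per such state is at most $\gamma+\beta$ (since $q^{\pi_c}_k(s\mid\widehat{P}^c)\leq\widehat{p}^c_s<\beta$ when $c\notin\widehat{\mathcal{C}}^\beta(s)$); and (ii) $s_k\in\widehat{S}^{\gamma,\beta}_k$ but $c\notin\widehat{\mathcal{C}}^\beta(s_k)$, which immediately forces $q^{\pi_c}_k(s_k\mid \widehat{P}^c)\leq \widehat{p}^c_{s_k}<\beta$. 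Summing over the at most $|S|$ relevant states per layer and then over the $H$ layers, and plugging in $\beta=\gamma=\epsilon/(20|S|H)$ together with the stated $\rho,\epsilon_P$, collapses the total to the advertised $0.225\epsilon$.

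The main obstacle is that the naive step "bad tuples contribute at most $2$" would ordinarily cost an extra factor of $H$ in a layer-by-layer accounting and force one to relate $q^{\pi_c}_k(\cdot\mid P^c)$ to $q^{\pi_c}_k(\cdot\mid \widehat{P}^c)$ (the very quantity we are bounding) by induction. The key move that avoids this is the choice to weight the recursive TV-of-occupancies inequality by the $\widehat{P}^c$-occupancy rather than by $P^c$-occupancy: then $\widehat{p}^c_s<\beta$ directly upper-bounds $q^{\pi_c}_k(s\mid\widehat{P}^c)$ without any circular dependence, and the bad-tuple mass per layer is immediately $O(|S|\beta)$ in expectation over $c$. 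The good events $G_1,G_2,G_3$ carry Lemma~\ref{lemma: total-var-p} uniformly across all layers, so combining the good-tuple contribution with the bad-tuple mass bound and summing over $h$ closes the proof at the claimed constant.
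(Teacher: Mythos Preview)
Your proposal is correct and follows essentially the same route as the paper: bound $|V^{\pi_c}_{\mathcal{M}(c)}-V^{\pi_c}_{\widetilde{\mathcal{M}}(c)}|$ by $\sum_h\|q_h(\cdot\mid\pi_c,P^c)-q_h(\cdot\mid\pi_c,\widehat P^c)\|_1$, unroll recursively with $\widehat P^c$-occupancy weights, and split each layer into on-domain triples (handled via Lemma~\ref{lemma: total-var-p}) versus off-domain triples (whose $\widehat P^c$-mass is controlled by $\beta$ and $\gamma$ exactly as you describe). The only packaging difference is that the paper first proves the occupancy bound holds for all $h$ with high probability over $c$ (Lemma~\ref{lemma: UCDD l_1 occ measure diff}, via the same $B_1^{h,c}$--$B_4^{h,c}$ decomposition you use) and then applies total expectation on that good event, whereas you push $\mathbb{E}_c$ through the recursion directly; the paper's ordering makes the on-domain step slightly cleaner because the probability in Lemma~\ref{lemma: total-var-p} is over the sampling distribution $\mathcal{D}^P_h$ rather than over $c\sim\mathcal{D}$ weighted by the policy's $\widehat P^c$-occupancy, so a union bound over $(s_k,a_k)$ converting to a per-context statement (which the paper does explicitly) should be made precise in your version.
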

\begin{proof}[sketch for the $\ell_1$ loss]
    For a context $c$, let $G(c)$ denote the following event
    $$
        G(c) ={ \{ \forall h \in [H].\;\;\;
		\|q_h (\cdot | \pi_c, P^c) - q_h(\cdot| \pi_c, \widehat{P}^c)\|_1
		\leq 
		3\epsilon/(40 H)\}}.
	$$	
    We show in Lemma~\ref{lemma: UCDD l_1 occ measure diff} that
    $$
    \mathbb{P}_c[G(c)]=
    \mathbb{P}_c\left[
		\forall h \in [H].\;\;\;
		\|q_h (\cdot | \pi_c, P^c) - q_h(\cdot| \pi_c, \widehat{P}^c)\|_1
		\leq 
		3\epsilon/(40 H) \right]
		\geq
		 1 
        - 
        3\epsilon/(20H),
    $$ 
    yielding the lemma since,
    $$
        {\mathbb{E}_{c \sim \mathcal{D}}
        [|V^{\pi_c}_{\mathcal{M}(c)}(s_0) - V^{\pi_c}_{\widetilde{\mathcal{M}}(c)}(s_0)|]}
        \leq
        {\mathbb{E}_{c \sim \mathcal{D}}\left[\sum_{h=1}^{H-1} \|q_h (\cdot | \pi_c, P^c) - q_h(\cdot| \pi_c, \widehat{P}^c)\|_1 \Big| G(c)\right] + 3\epsilon/20}
    = 0.225 \epsilon
    .
    $$
\end{proof}

The next lemma states the expected value-difference caused by the rewards approximation. 
\begin{lemma}
    Under the good events $G_1$, $G_2$ and $G_3$, there exist a parameters choice such that for every policy $\pi = (\pi_c)_{c \in \mathcal{C}}$ it holds that
    $
        \mathbb{E}_{c \sim \mathcal{D}}
        [|V^{\pi_c}_{\widetilde{M}(c)}(s_0) - V^{\pi_c}_{\widehat{M}(c)}(s_0)|]
        \leq
        0.2 \;\epsilon
        +
        \alpha H 
       .
    $
\end{lemma}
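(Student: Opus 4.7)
The plan is to exploit that $\widetilde{\mathcal{M}}(c)$ and $\widehat{\mathcal{M}}(c)$ share the same dynamics $\widehat{P}^c$, so the value gap collapses to a weighted sum of pointwise reward differences under the common occupancy measure. The first step is the ``telescope under the same dynamics'' identity
\[
V^{\pi_c}_{\widetilde{\mathcal{M}}(c)}(s_0)-V^{\pi_c}_{\widehat{\mathcal{M}}(c)}(s_0)
= \sum_{h=0}^{H-1}\sum_{s_h,a_h} q_h(s_h,a_h\mid\pi_c,\widehat{P}^c)\bigl(r^c(s_h,a_h)-\widehat{r}^c(s_h,a_h)\bigr),
\]
which uses $r^c(s_{\mathrm{sink}},\cdot)=\widehat{r}^c(s_{\mathrm{sink}},\cdot)=0$ so the sink-transitioned mass cancels. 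Taking absolute values and averaging over $c\sim\mathcal{D}$ reduces the lemma to bounding, for each layer $h$,
\[
J_h=\mathbb{E}_c\sum_{s_h,a_h} q_h(s_h,a_h\mid\pi_c,\widehat{P}^c)\,|r^c(s_h,a_h)-\widehat{r}^c(s_h,a_h)|.
\]

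Next I would split $J_h$ by whether $(c,s_h,a_h)\in\mathcal{X}^{\gamma,\beta}_h$. For the off-domain piece $\widehat{r}^c(s_h,a_h)=0$ by the definition of $\widehat{r}^c$, so the summand is at most $1$ and the piece is controlled by the expected occupancy mass on off-domain triples. I would further split this into ``$s_h\notin\widetilde{S}^{\gamma,\beta}_h$'' versus ``$s_h\in\widetilde{S}^{\gamma,\beta}_h$ but $c\notin\widehat{\mathcal{C}}^\beta(s_h)$'', and invoke the two key reachability facts: (i) if $c\notin\widehat{\mathcal{C}}^\beta(s_h)$ then $q_h(s_h\mid\pi_c,\widehat{P}^c)<\beta$, since by definition no policy reaches $s_h$ with probability $\geq\beta$ under $\widehat{P}^c$; and (ii) if $s_h\notin\widetilde{S}^{\gamma,\beta}_h$ then $\mathbb{P}_c[c\in\widehat{\mathcal{C}}^\beta(s_h)]<\gamma$, bounding in expectation the mass of ``accidentally good'' contexts. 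Summing over $h\in[H-1]$ yields an off-domain total of order $H|S|(\gamma+\beta)$, which for $\beta=\gamma=\epsilon/(20|S|H)$ is at most $3\epsilon/20$.

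For the on-domain piece $\widehat{r}^c(s_h,a_h)=f^R_h(c,s_h,a_h)$, and I would invoke the rewards ERM guarantee: on the sampling distribution $\mathcal{D}_h$ produced by EXPLORE-UCDD the oracle returns a function with $\mathbb{E}_{\mathcal{D}_h}|f^R_h-r^c|\leq\epsilon_R+\alpha$. The technical step is a change-of-measure from $\mathcal{D}_h$ to the target density $\mathcal{D}(c)\,q_h(s,a\mid\pi_c,\widehat{P}^c)$: on $\mathcal{X}^{\gamma,\beta}_h$ the sampling density dominates $\mathcal{D}(c)\beta/(|S||A|)$ because $q_h(s\mid\widehat{\pi}^c_s,\widehat{P}^c)\geq\beta$ (by $c\in\widehat{\mathcal{C}}^\beta(s)$), whereas the target density is at most $\mathcal{D}(c)$; this gives a pointwise density ratio of $|S||A|/\beta$. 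The choice $\epsilon_R=\epsilon^2/(400|S||A|H^2)$ is calibrated to absorb this ratio, so the on-domain total over all $H$ layers is at most $\epsilon/20+\alpha H$. Combined with the off-domain $3\epsilon/20$ bound we recover the claimed $0.2\,\epsilon+\alpha H$.

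The main obstacle is exactly this change-of-measure: the ERM sees the algorithm's exploration distribution, but we must evaluate the learned rewards against the occupancy of an arbitrary evaluation policy $\pi_c$, uniformly in $c$. The $\beta$-reachability built into $\mathcal{X}^{\gamma,\beta}_h$ is what caps the density ratio at $|S||A|/\beta$; this is why the rewards accuracy parameter $\epsilon_R$ has to shrink quadratically in $\epsilon$. A secondary subtlety arises in the $\ell_2$ variant: the ERM only bounds the squared expected error, so an additional Markov / Cauchy--Schwarz step is needed to convert it into an absolute-error bound, which is the source of the extra $\epsilon^{-2}$ worsening in the $\ell_2$ sample complexity relative to the $\ell_1$ case.
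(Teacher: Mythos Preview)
Your decomposition and off-domain treatment (splitting by $s_h\notin\widetilde{S}^{\gamma,\beta}_h$ vs.\ $c\notin\widehat{\mathcal{C}}^\beta(s_h)$, then using the $\beta$- and $\gamma$-reachability bounds) matches the paper's handling of $B^{h,c}_2\cup B^{h,c}_3$ and $B^{h,c}_4$. The gap is in the on-domain step. A density-ratio change of measure from $\mathcal{D}^R_h$ to the evaluation weights $\mathcal{D}(c)\,q_h(\cdot\mid\pi_c,\widehat{P}^c)$ multiplies the \emph{entire} ERM bound $\epsilon_R+\alpha$ by the ratio $|S||A|/\beta$, not just $\epsilon_R$. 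With $\beta=\epsilon/(20|S|H)$ the agnostic term becomes of order $(|S|^2|A|H/\epsilon)\cdot\alpha$ per layer, which is far larger than the $\alpha$ per layer the lemma demands. Your sentence ``the on-domain total over all $H$ layers is at most $\epsilon/20+\alpha H$'' silently drops this amplification of $\alpha$; there is no mechanism in a pure importance-weighting argument that protects the approximation error from the density ratio.

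The paper takes a different route on the on-domain piece: from $\mathbb{E}_{\mathcal{D}^R_h}[|f^R_h-r^c|]\le\epsilon_R+\alpha$ it uses Markov's inequality to extract a \emph{pointwise} high-probability bound $|f^R_h(c,s_h,a_h)-r^c(s_h,a_h)|\le\alpha+\xi$ simultaneously for all $(s_h,a_h)\in B^{h,c}_1$ (a union bound costs a factor $|S||A|$ in the failure probability). On this event the occupancy-weighted on-domain sum is at most $(\alpha+\xi)H$ regardless of $\pi_c$, and the complementary event contributes at most $H\cdot\epsilon_R|S||A|/\xi$; optimizing $\xi=(\epsilon_R|S||A|)^{1/2}$ gives $\alpha H+2(\epsilon_R|S||A|)^{1/2}H$, after which $\epsilon_R=\epsilon^2/(20^2|S||A|H^2)$ finishes. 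The point is that a pointwise bound decouples the $\alpha$ term from any density ratio. (A minor additional slip: your density lower bound uses $q_h(s\mid\widehat{\pi}^c_s,\widehat{P}^c)\ge\beta$, but the algorithm samples under the true dynamics $P^c$, so the relevant visitation is $q_h(s\mid\widehat{\pi}^c_s,P^c)$; relating these requires the dynamics good events, which you did not invoke.)
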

By combining Lemmas~\ref{lemma: UCDD-dynamics-error} and~\ref{lemma: UCDD-rewards-error} we obtain the following lemma, which establishes Theorem~\ref{thm: opt policy UCDD}.
\begin{lemma}\label{lemma: UCDD-rewards-error}
    Under the good events $G_1$, $G_2$,$G_3$ and $G_4$, for every policy $\pi = (\pi_c)_{c \in \mathcal{C}}$ it holds that 
    $$
        \mathbb{E}_{c \sim \mathcal{D}}
        [| V^{\pi_c}_{\mathcal{M}(c)}(s_0) -  V^{\pi_c}_{\widehat{\mathcal{M}}(c)}(s_0)|]
        \leq
        0.5 \epsilon + \alpha H .
    $$
\end{lemma}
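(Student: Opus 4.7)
The statement to be proven is a direct combination of the two preceding lemmas, so my plan is to apply the triangle inequality through the intermediate MDP $\widetilde{\mathcal{M}}(c) = (S \cup \{s_{sink}\}, A, \widehat{P}^c, r^c, s_0, H)$, which was introduced precisely to decouple the dynamics-approximation error from the rewards-approximation error.

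First, I would fix a context $c$ and a policy $\pi = (\pi_c)_{c \in \mathcal{C}}$, and write
$$
|V^{\pi_c}_{\mathcal{M}(c)}(s_0) - V^{\pi_c}_{\widehat{\mathcal{M}}(c)}(s_0)|
\leq
|V^{\pi_c}_{\mathcal{M}(c)}(s_0) - V^{\pi_c}_{\widetilde{\mathcal{M}}(c)}(s_0)|
+
|V^{\pi_c}_{\widetilde{\mathcal{M}}(c)}(s_0) - V^{\pi_c}_{\widehat{\mathcal{M}}(c)}(s_0)|.
$$
Taking expectation over $c \sim \mathcal{D}$ and using linearity of expectation and monotonicity, the right-hand side splits into exactly the two quantities bounded by Lemma~\ref{lemma: UCDD-dynamics-error} and Lemma~\ref{lemma: UCDD-rewards-error} (the ``rewards error'' lemma preceding this one). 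Since $\mathcal{M}(c)$ and $\widetilde{\mathcal{M}}(c)$ share the rewards function $r^c$ and differ only in dynamics, the first term is exactly what Lemma~\ref{lemma: UCDD-dynamics-error} controls; similarly, $\widetilde{\mathcal{M}}(c)$ and $\widehat{\mathcal{M}}(c)$ share the dynamics $\widehat{P}^c$ and differ only in the rewards, so the second term is what the rewards lemma controls.

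Second, I would invoke the two lemmas under the joint good event $G_1 \cap G_2 \cap G_3 \cap G_4$. Lemma~\ref{lemma: UCDD-dynamics-error} gives $\mathbb{E}_{c \sim \mathcal{D}}[|V^{\pi_c}_{\mathcal{M}(c)}(s_0) - V^{\pi_c}_{\widetilde{\mathcal{M}}(c)}(s_0)|] \leq 0.225\,\epsilon$, and the rewards lemma gives $\mathbb{E}_{c \sim \mathcal{D}}[|V^{\pi_c}_{\widetilde{\mathcal{M}}(c)}(s_0) - V^{\pi_c}_{\widehat{\mathcal{M}}(c)}(s_0)|] \leq 0.2\,\epsilon + \alpha H$. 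Adding these yields $0.425\,\epsilon + \alpha H \leq 0.5\,\epsilon + \alpha H$, which is the claimed bound.

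There is essentially no obstacle here: the heavy lifting has been done in the two preceding lemmas (bounding the dynamics error via Lemma~\ref{lemma: total-var-p} and the occupancy-measure divergence bound of Lemma~\ref{lemma: UCDD l_1 occ measure diff}, and bounding the rewards error via the ERM guarantee plus the fact that on $\mathcal{X}^{\gamma,\beta}_h$ we collect an unbiased sample). The only mild subtlety worth verifying is that the intermediate MDP $\widetilde{\mathcal{M}}(c)$ is well-defined in both triangle terms, which is straightforward since $r^c$ is extended to $s_{sink}$ by $r^c(s_{sink},a):=0$, matching the definition used in both applied lemmas. Once the two inequalities are composed, this lemma immediately plugs into Theorem~\ref{thm: opt policy UCDD} by a standard optimality argument comparing $\pi^\star_c$ and $\widehat{\pi}^\star_c$ through a double application of this bound.
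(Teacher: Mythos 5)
Your proof is correct and follows exactly the paper's route: the paper also decomposes via the intermediate MDP $\widetilde{\mathcal{M}}(c)$ with the triangle inequality and linearity of expectation, then adds the dynamics-error bound and the rewards-error bound under the joint good event (see Lemma~\ref{lemma: expected gap for general pi l_1} and Lemma~\ref{lemma: expected gap for general pi l_2} in the appendix, which carry out the same computation with the explicit parameter choices). The only cosmetic difference is that you plug in the already-simplified constants $0.225\,\epsilon$ and $0.2\,\epsilon + \alpha H$ from the main-text corollaries rather than summing the raw parameter-dependent expressions first, which changes nothing.
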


\noindent\textbf{Known context-dependent dynamics.} For this case we continue in the approach of collecting examples for each $(\gamma,\beta)$-good state and action. We obtain the following result. For more details, see Appendix~\ref{Appendix:KCDD}.
\begin{theorem}\label{thm: opt policy CDD}
    With probability $1-\delta$ it holds that 
    $
        \mathbb{E}_{c \sim \mathcal{D}}[V^{\pi^\star_c}_{\mathcal{M}(c)}(s_0) - V^{\widehat{\pi}^\star_c}_{\mathcal{M}(c)}(s_0)] \leq 
        \epsilon + 2\alpha H 
    $, 
    after collecting $\Tilde{O}\Big(d \epsilon^{-6} H^5 |S|^5 |A|^3 \log \frac{|S||A|}{\delta}\Big)$
    trajectories for the $\ell_1$ loss, and 
    $\Tilde{O}\Big(d \epsilon^{-8} H^7 |S|^5 |A|^3 \log \frac{|S||A|}{\delta}\Big)$ trajectories for the  $\ell_2$.
\end{theorem}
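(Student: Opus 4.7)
The plan is to adapt the methodology of Section~\ref{sec:UCFD} to handle context-dependent dynamics, exploiting the fact that the dynamics are known. The key observation is that since $P^c$ is known for every $c$, I can \emph{exactly} identify for each state $s_h\in S_h$ the set $\mathcal{C}^\beta(s_h)$ of contexts for which $s_h$ is $\beta$-reachable and therefore the $(\gamma,\beta)$-good states $S^{\gamma,\beta}_{h,P}$; there is no dynamics estimation error to control, which reduces the task to learning the per-state-action rewards under the correct conditional distributions.

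\textbf{Sampling.} For each layer $h$, each $s_h\in S^{\gamma,\beta}_{h,P}$, and each $a_h\in A$, I would collect an i.i.d.\ data set of pairs $(c,r)$ drawn from $\mathcal{D}(\cdot\mid c\in\mathcal{C}^\beta(s_h))\times R^c(s_h,a_h)$ by the following rejection-sampling procedure: draw $c\sim\mathcal{D}$; if $c\in\mathcal{C}^\beta(s_h)$, compute $\pi^c_{s_h}$ that achieves $q_h(s_h\mid\pi^c_{s_h},P^c)\geq\beta$ with the action at $s_h$ overridden to $a_h$, run it, and upon reaching $s_h$ accept the observed reward $r_h$ with probability $\beta/q_h(s_h\mid\pi^c_{s_h},P^c)\in(0,1]$. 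This yields i.i.d.\ samples from the correct conditional distribution with acceptance rate at least $\gamma\beta$, so $O(m_{s,a}/(\gamma\beta))$ trajectories suffice to produce $m_{s,a}$ accepted samples.

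\textbf{Learning and planning.} Run the ERM oracle on each data set to obtain $f_{s_h,a_h}$. Build the approximated CMDP $\widehat{\mathcal{M}}(c)=(S\cup\{s_{sink}\},A,\widehat{P}^c,\widehat{r}^c,s_0,H)$, where $\widehat{P}^c$ coincides with $P^c$ on every $(s,a)$ with $s\in S^{\gamma,\beta}_{h,P}$ and $c\in\mathcal{C}^\beta(s)$, and redirects every other $(s,a)$ to $s_{sink}$; the rewards are $\widehat{r}^c(s,a)=f_{s,a}(c)\cdot\mathbb{I}[s\in S^{\gamma,\beta}_{h,P}\text{ and }c\in\mathcal{C}^\beta(s)]$ and $\widehat{r}^c(s_{sink},\cdot)=0$. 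Output, for each context $c$, the policy returned by $\texttt{Planning}(\widehat{\mathcal{M}}(c))$. For the error analysis, decompose $V^{\pi_c}_{\mathcal{M}(c)}(s_0)-V^{\pi_c}_{\widehat{\mathcal{M}}(c)}(s_0)$ into (i) a \emph{truncation error} from redirecting the non-good state-context pairs to $s_{sink}$, controlled by $O(\gamma H+\beta H|S|)$ in expectation over $c\sim\mathcal{D}$ via an occupancy-measure argument analogous to Lemma~\ref{lemma: UCFD - dynamics error} but applied per-context, and (ii) a \emph{reward-approximation error} on the good pairs, bounded by summing the per-state ERM error weighted by the layer-$h$ occupancy measure using the refined accuracy-per-state function $\epsilon_\star$ exactly as in Section~\ref{sec:UCFD}, producing a bound of the form $O(\epsilon)+\alpha H$. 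Setting $\gamma=\beta=\Theta(\epsilon/(H|S|))$ gives the $\epsilon+2\alpha H$ guarantee.

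\textbf{Sample complexity and main obstacle.} Summing $m_{s,a}/(\gamma\beta)$ over $(s,a,h)$ with $m_{s,a}=N_R(\mathcal{F}^R_{s,a},\epsilon_\star(\widehat{p}_s),\delta/(H|S||A|))$ and plugging in the ERM rates (of order $d/\epsilon'^{2}$ for $\ell_1$ and $d/\epsilon'$ for $\ell_2$, applied with $\epsilon'=\epsilon_\star(\widehat p_s)$) yields the two bounds claimed in the theorem. The hard part is the two-step bookkeeping around the context-dependent set $\mathcal{C}^\beta(s)$: one must prove both that the rejection-sampling draws are in fact i.i.d.\ from the correct conditional distribution $\mathcal{D}(\cdot\mid c\in\mathcal{C}^\beta(s))$ (so the ERM guarantee applies to the relevant marginal), \emph{and} that when propagating this guarantee back through the value-difference, the mass of contexts outside $\mathcal{C}^\beta(s)$ (which we never sample at $s$) contributes only $O(\epsilon/H)$ per layer; these two facts together drive the choice of $\beta,\gamma$ and of the per-state accuracy and are where the $\epsilon^{-6}$ and $\epsilon^{-8}$ dependencies in the table arise.
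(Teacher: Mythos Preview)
Your high-level scheme---importance sampling with acceptance probability $\beta/q_h(s_h\mid\pi^c_{s_h},P^c)$ to obtain i.i.d.\ samples from $\mathcal{D}(\cdot\mid c\in\mathcal{C}^\beta(s_h))$, per-state-action ERM, and a split of $(s,c)$ pairs in the value decomposition---is exactly the paper's EXPLORE/EXPLOIT-KCDD approach (Appendix~\ref{Appendix:KCDD}). Two points, however, need correction.

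First, a minor one: you cannot ``exactly identify'' $S^{\gamma,\beta}_{h,P}$. Knowing $P^c$ lets you test, for any \emph{given} $c$, whether $c\in\mathcal{C}^\beta(s_h)$; but membership of $s_h$ in $S^{\gamma,\beta}_{h,P}$ asks whether $\mathbb{P}_{c\sim\mathcal{D}}[c\in\mathcal{C}^\beta(s_h)]\ge\gamma$, and this depends on the \emph{unknown} context distribution $\mathcal{D}$. The paper estimates it by sampling (algorithm AGC); the cost is a negligible lower-order term.

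Second, and this is the real gap: the ``refined accuracy-per-state function $\epsilon_\star$ exactly as in Section~\ref{sec:UCFD}'' does \emph{not} carry over. In the context-free analysis the crucial step is
\[
\mathbb{E}_c\Bigl[\sum_{s}q_h(s\mid\pi_c,P)\sum_a\pi_c(a\mid s)\,\bigl|r^c(s,a)-f_{s,a}(c)\bigr|\Bigr]
\;\le\; \sum_{s} p_s\sum_a\mathbb{E}_c\bigl[\,|r^c(s,a)-f_{s,a}(c)|\,\bigr],
\]
which works because $p_s=\max_\pi q_h(s\mid\pi,P)$ is \emph{independent of $c$} and can be pulled outside $\mathbb{E}_c$ and matched against $\epsilon_\star(p_s)$. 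Here the occupancy $q_h(s\mid\pi_c,P^c)$ depends on $c$ through $P^c$; there is no single scalar ``$\widehat p_s$'' to feed into $\epsilon_\star$, and the occupancy weight cannot be decoupled from the per-context reward error inside $\mathbb{E}_c$. The paper therefore abandons $\epsilon_\star$ and uses a \emph{uniform} target accuracy $\epsilon_1$ together with Markov's inequality: from $\mathbb{E}_c[\,|f_{s,a}(c)-r^c(s,a)|\mid c\in\mathcal{C}^\beta(s)]\le\epsilon_1+\alpha$ one gets $|f_{s,a}(c)-r^c(s,a)|\le\alpha+\rho$ simultaneously for all good $(s,a)$, except on a $c$-event of probability at most $|S||A|\,\epsilon_1/\rho$ (respectively $|S||A|\,\epsilon_1/\rho^2$ for $\ell_2$), and then optimizes over $\rho$. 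It is precisely this Markov step that forces $\epsilon_1=\Theta(\epsilon^2/(|S||A|H^2))$ for $\ell_1$ and $\epsilon_1=\Theta(\epsilon^3/(|S||A|H^3))$ for $\ell_2$; after multiplying by $1/(\gamma\beta)=\Theta(|S|^2H/\epsilon^2)$ and summing over $(s,a)$ this is what produces the stated $\epsilon^{-6}$ and $\epsilon^{-8}$. Your route, taken literally, would give strictly smaller exponents that the argument cannot justify.
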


\section{Discussion}\label{sec:disscution}
To the best of our knowledge, our work is the first to drive sample complexity bounds for CMDP, without assuming any additional assumptions regarding it.
Our sample complexity bounds do not depend on the size of the context space, which allows it to be huge.

An interesting future research direction is to drive lower bounds. Clearly, the sample complexity is lower bounded by classical PAC lower bounds, of  $\Omega(d \epsilon^{-2} \log\frac{1}{\delta} )$, where $d$ is the complexity dimension (i.e., VC, Natrajan, Fat-shattering, Pseudo dimension). Also, non-contextual MDP sample complexity lower bounds apply to our case and give $\Omega\left( {\epsilon^{-2}|S||A| \log(|S|/\delta)} \right)$.
Deriving stronger lower bounds that are based on the special structure of the CMDP is an important open problem.


\section*{Acknowledgements}
This project has received funding from the European Research Council (ERC) under the European Union’s Horizon 2020 research and innovation program (grant agreement No. 882396), by the Israel Science Foundation(grant number 993/17), Tel Aviv University Center for AI and Data Science (TAD), and the Yandex Initiative for Machine Learning at Tel Aviv University.

OL thanks Idan Attias for helpful discussions and patient explanations about ERM. OL thanks Aviv Rosenberg for helpful advices and his comments on a former version of the paper.

\vskip 0.2in
\bibliography{ref}


\newpage
\renewcommand{\theHsection}{A\arabic{section}}

\appendix

\section{Real-Valued Function Class Dimensions}\label{Appendix:function-approx-dim}

Our sample complexity bounds are stated in the terms of the Pseudo and $\gamma$-fat-shattering dimension of the function class, which are complexity measured for learning real-valued function classes.

In the following, we define the Pseudo and $\gamma$-fat-shattering dimension of a function class $\mathcal{F}$. For more information, see~\cite{Bartlett1999NeuralNetsBook}.

\subsection{Pseudo Dimension Definition}

\begin{definition}[pseudo-shattering, Definition 11.1 in~\cite{Bartlett1999NeuralNetsBook}]
    Let $\mathcal{F}$ be a set of function from a domain $\mathcal{X}$ to $\mathbb{R}$ and suppose that $\mathcal{S} = \{x_1,x_2,\ldots,x_m\} \subseteq \mathcal{X}$. Then $\mathcal{S}$ is pseudo-shattered by $\mathcal{F}$ if there are real numbers $r_1, r_2, \ldots,r_m$ such that for each $b \in \{0,1\}^m$ there is a function $f_b$ in $\mathcal{F}$ with $sign(f_b(x_1) - r_i) = b_i$ for $1 \leq i \leq m$. We say that $r=(r_1,r_2,\ldots,r_m)$ witnesses the shattering.
\end{definition}

\begin{definition}[pseudo-dimension, Definition 11.2 in~\cite{Bartlett1999NeuralNetsBook}]
    Suppose that $\mathcal{F}$ is a set of functions from a domain $\mathcal{X}$ to $\mathbb{R}$. Then $\mathcal{F}$ has pseudo-dimension $d$ if $d$ is the maximum cardinality of a subset $\mathcal{S}$ of $\mathcal{X}$ that is pseudo-shattered by $\mathcal{F}$. If no such maximum exists, we say that $\mathcal{F}$ has infinite pseudo-dimension. The pseudo-dimension of $\mathcal{F}$ is denoted $Pdim(\mathcal{F})$. 
\end{definition}

\subsection{Fat-Shattering Dimension Definition}

\begin{definition}[$\gamma$-shattering, Definition 11.10 in~\cite{Bartlett1999NeuralNetsBook}]
    Let $\mathcal{F}$ be a set of functions mapping from a domain $\mathcal{X}$ to $\mathbb{R}$ and suppose that $\mathcal{S} = \{x_1,x_2,\ldots,x_m\} \subseteq \mathcal{X}$. Suppose also that $\gamma$ is a positive real number. Then $\mathcal{S}$ is $\gamma$-shattered by $\mathcal{F}$ if there are real numbers $r_1, r_2, \ldots,r_m$ such that for each $b \in \{0,1\}^m$ there is a function $f_b$ in $\mathcal{F}$ with $f_b(x_i) \geq r_i + \gamma$ if $b_i =1$, and $f_b(x_i) \leq r_i - \gamma$ if $b_i =0$, for $1 \leq i \leq m$.
    We say that $r=(r_1,r_2,\ldots,r_m)$ witnesses the shattering.
    
    Thus, $\mathcal{S}$ is $\gamma$-shattered if it is shattered with a 'width of shattering' of at least $\gamma$.
    This notion of shattering leads to the following dimension.

\end{definition}

\begin{definition}[Fat shattering dimension, Definition 11.11 in~\cite{Bartlett1999NeuralNetsBook}]
    Suppose that $\mathcal{F}$ is a set of functions from a domain $\mathcal{X}$ to $\mathbb{R}$ and that $\gamma>0$. Then $\mathcal{F}$ has $\gamma$-dimension $d$ if $d$  is the maximum cardinality of a subset $\mathcal{S}$ of $\mathcal{X}$ that is $\gamma$-shattered by $\mathcal{F}$. If no such maximum exists, we say that $\mathcal{F}$ has infinite $\gamma$-dimension. The $\gamma$-dimension of $\mathcal{F}$ is denoted $fat_\mathcal{F}(\gamma)$. This defined a function $fat_\mathcal{F}: \mathbb{R}^+ \to \mathbb{N} \cap \{0, \infty\}$, which we call the fat-shattering dimension of $\mathcal{F}$. We say that $\mathcal{F}$ has finite fat-shattering dimension whenever it is the case that for all $\gamma > 0$, $fat_\mathcal{F}(\gamma)$ is finite.
\end{definition}

\begin{remark}
    For every function class $\mathcal{F}$ and $\gamma > 0$ it holds that $fat_\mathcal{F}(\gamma) \leq Pdim(\mathcal{F})$.
\end{remark}

\subsection{Sample Complexity Results}

The following theorems state that if the Pseudo/fat-shattering dimension of the function class $\mathcal{F}$ is finite, then $\mathcal{F}$ has a uniform convergence property. Hence $\mathcal{F}$ is learnable using an ERM algorithm up to an $\epsilon$ error, with probability at least $1-\delta$. $m(\epsilon, \delta)$ is the required sample complexity for the learning task.

\begin{theorem}[Adaption of Theorem 19.2 in~\cite{Bartlett1999NeuralNetsBook}]
    Let $\mathcal{F}$ be a hypothesis space of real valued functions with a finite pseudo dimension, denoted $Pdim(\mathcal{F}) < \infty$. Then, $\mathcal{F}$ has a uniform convergence with 
    \[
        m(\epsilon, \delta) = O \Big( \frac{1}{\epsilon^2}( Pdim(\mathcal{F}) \ln \frac{1}{\epsilon} + \ln \frac{1}{\delta})\Big).
    \]    
\end{theorem}

\begin{theorem}[Adaption of Theorem 19.1 in~\cite{Bartlett1999NeuralNetsBook}]
    Let $\mathcal{F}$ be a hypothesis space of real valued functions with a finite fat-shattering dimension, denoted $fat_{\mathcal{F}}(\gamma)$. Then, $\mathcal{F}$ has a uniform convergence with 
    \[
        m(\epsilon, \delta) = O \Big( \frac{1}{\epsilon^2}( fat_{\mathcal{F}}(\epsilon/256)  \ln^2 \frac{1}{\epsilon} + \ln \frac{1}{\delta})\Big).
    \]    
\end{theorem}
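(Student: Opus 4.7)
The plan is to follow the classical empirical-process route: symmetrization, followed by a covering-number control of the dual empirical process, with the covering number bounded in terms of the fat-shattering dimension. Concretely, I want to bound
\[
    \Prob\Bigl[\sup_{f \in \mathcal{F}} \bigl|\tfrac{1}{m}\sum_{i=1}^m f(X_i) - \E f(X)\bigr| > \epsilon \Bigr] \leq \delta,
\]
and then invert the resulting tail bound in $m$ to read off the stated sample complexity. The first step is the standard symmetrization lemma, which shows that, up to a factor of two and a mild condition on $m$, this quantity is dominated by
\[
    2\,\Prob\Bigl[\sup_{f \in \mathcal{F}} \bigl|\tfrac{1}{m}\sum_{i=1}^m f(X_i) - \tfrac{1}{m}\sum_{i=1}^m f(X_i')\bigr| > \epsilon/2 \Bigr],
\]
where $X_1',\ldots,X_m'$ is an independent ghost sample. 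Conditioning on the pooled sample of size $2m$ reduces the problem to a Rademacher-style expression over a finite (sample-dependent) function class $\mathcal{F}|_{S}$.

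The second step is to discretize $\mathcal{F}|_{S}$ at scale $\eta$ in the empirical $L_\infty$ metric: pick any $\eta$-cover $\mathcal{F}_\eta \subseteq \mathcal{F}|_{S}$ of minimum size $\mathcal{N}(\eta, \mathcal{F}, L_\infty^{2m})$. For each fixed $\tilde f \in \mathcal{F}_\eta$, Hoeffding's inequality (applied to the $\pm 1/m$-bounded symmetrized sums) gives a subgaussian tail of the form $2\exp(-m \epsilon^2/32)$, and a union bound over the cover yields
\[
    \Prob[\,\cdot\,] \leq 4\,\E\bigl[\,\mathcal{N}(\eta, \mathcal{F}, L_\infty^{2m})\bigr]\cdot \exp(-m \epsilon^2/32),
\]
provided $\eta$ is chosen so that the discretization slack is at most $\epsilon/4$. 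Taking $\eta = \epsilon/8$ suffices.

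The third and heart step is to bound this covering number by the fat-shattering dimension. I would invoke the Alon--Ben-David--Cesa-Bianchi--Haussler / Bartlett--Long--Williamson covering bound, which gives
\[
    \log \mathcal{N}(\eta, \mathcal{F}, L_\infty^{2m}) \;\leq\; c \cdot \fat_{\mathcal{F}}(\eta/c') \cdot \log^2(m/\eta),
\]
for absolute constants $c, c'$ (the $1/256$ inside $\fat_{\mathcal{F}}$ in the theorem statement tracks $c'$). Plugging $\eta = \epsilon/8$ in, the resulting tail bound is
\[
    4 \exp\!\Bigl( c \,\fat_{\mathcal{F}}(\epsilon/256)\, \log^2(m/\epsilon) - m\epsilon^2/32 \Bigr).
\]
Setting this $\leq \delta$ and solving for $m$ yields, up to logarithmic factors in $1/\epsilon$, the required bound
$m(\epsilon,\delta) = O\bigl(\epsilon^{-2}(\fat_{\mathcal{F}}(\epsilon/256)\log^2(1/\epsilon) + \log(1/\delta))\bigr)$.

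The main obstacle is the third step: the passage from fat-shattering to $L_\infty$ covering numbers. The symmetrization and Hoeffding ingredients are routine, but the Alon et al.\ style bound requires a delicate combinatorial argument (an extension of the Sauer--Shelah lemma to real-valued classes via a discretization to $\{0,1,\ldots,\lceil 1/\eta\rceil\}$ and a counting argument on shattered sets). Rather than redoing it, I would cite it as a black-box lemma from \cite{Bartlett1999NeuralNetsBook} and focus the exposition on the symmetrization-and-cover skeleton above. A minor secondary issue is the implicit lower bound on $m$ needed for symmetrization to be valid, which is absorbed in the constants.
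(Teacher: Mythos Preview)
The paper does not prove this theorem; it is stated as an adaptation of Theorem~19.1 in \cite{Bartlett1999NeuralNetsBook} and used as a black box throughout the sample-complexity analyses. Your proof plan is the standard empirical-process argument (symmetrization, union bound over an $L_\infty$ cover, Alon--Ben-David--Cesa-Bianchi--Haussler covering bound via fat-shattering) and is exactly how the result is established in the cited reference, so it is correct and appropriate.
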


\section{Known and Context-Free Dynamics}\label{Appendix:KCFD}


In this section we assume a known context-independent transition probability function, i.e.,
$        \forall c\in \mathcal{C} : P^c = P$
and $P$ is known to the learner.

\subsection{Algorithm}

Let use first outline the main ideas of our algorithm EXPLORE-KCFD (Algorithm~\ref{alg: EXPLOR-KCFD}). Since the dynamics are context-free we have that for any policy $\pi$ and state $s_h \in S_h$, the probability to visit $s_h$ using $\pi : S \to A$ that is identical for any context $c$, i.e., for any context $c \in \mathcal{C}$, we have $q_h(s_h | \pi, P^c) = q_h(s_h | \pi, P)$.
Given the dynamics $P$ for each state $s_h \in S_h$ Algorithm EXPLORE-KCFD computes, using the planning oracle, a policy $\pi_{s_h} := \arg \max_{\pi: S \to A} q_h(s_h | \pi, P)$ and the probability $p_{s_h}$ that $\pi_{s_h}$ reaches $s_h$. 
Then, the probability that $ \pi_{s_h}$ reaches $s_h$ is used to check whether the state is $\beta$-reachable with respect to the known dynamics $P$ for $\beta$ that will be determined later.
We use $\pi_{s_h,a_h}$ to generate trajectory $\tau$. If $\tau$ contains state $s_h$ we add $((c, s_h, a_h),r_h)$ to our reward sample of $(s_h,a_h)$. Clearly the collected samples are i.i.d. After collecting ``sufficient'' number of contexts-rewards we use the ERM oracle to compute a function $f_{s_h,a_h}(c)$ that approximates $r^c(s_h, a_h)$.

We set a refined desired accuracy per state, which depends on its maximum probability, and saves a  $1/\epsilon$ factors in the sample complexity. 
States which are very hard to reach, we do not approximate. States which are very easy to reach, we want maximum accuracy. For intermediate levels we have a gradual accuracy dependency. This is captured in our definition of the accuracy-per-state function $\epsilon_\star$, which depends the probability to visit state $s$, i.e., 
$p_s := q_h(s_h | \pi_{s_h}, P)$. 
\[
   \epsilon_\star(p_s) =
    \begin{cases}
        1 &, \text{ if } p_{s} < \frac{\epsilon}{B|S|}\\
        \frac{\epsilon}{B H |S| |A|} &, \text{ if } p_{s}  > \frac{1}{|S| }\\
        \frac{\epsilon}{B p_{s}  |S| |A| }&, \text{ if } p_{s}  \in [ \frac{\epsilon}{B |S|} , \frac{1}{|S| } ]\\
    \end{cases}
\]
where $B>0$ is a constant.

Since we sample only the $\beta$-reachable states for every action $a \in A$,
Algorithm EXPLORE-KCFD (Algorithm~\ref{alg: EXPLOR-KCFD}) learns an approximation of the context-dependent reward function given $P$ (i.e., the context-free dynamics is known to the learner) and $N_R(\mathcal{F}, \epsilon, \delta)$ (the sample complexity function of the ERM oracle) efficiently.

In more details:

Let $\texttt{Planning}(M)$  denote a planning algorithm
which gets as input a MDP $M=(S, A, P, r, s_0, H)$ The planning algorithm returns an optimal policy for the $H$ finite horizon return and the appropriate value function. It runs in time $O(|S||A|H)$.\footnote{
For example, policy iteration is such a planning algorithm. It is finds an optimal policy (and its value) with respect to the finite horizon return, and can be computed in time polynomial in the MDPs parameters.} 

Algorithm PaP (Algorithm~\ref{alg: PaP}) returns for each state $s\in S$ a policy $\pi_{s}$ that maximizes the probability to visit it, denoted $p_s$. 

Algorithm EXPLORE-KCFD (Algorithm~\ref{alg: EXPLOR-KCFD}),  uses $\pi_{s}$ to sample each $\beta$-reachable state $s$ for each action $a \in A$ for sufficiently many times to create a large enough sample $Sample(s,a)$ containing the tuples $((c,s,a), R^c(s,a))$. Then, we feed the ERM with that sample and output an approximation of the reward function $r^c(s,a)$ using $f_{s,a} = \texttt{ERM}(\mathcal{F}^R_{s,a}, Sample(s,a), \ell)$.
For not-$\beta$-reachable state $s$ we set $f_{s,a}=0 \;\; \forall a \in A$.  
The algorithm returns $F = \{f_{s,a} ,\;\; \forall (s,a) \in S \times A\}$, or \textit{Fail} if insufficient number of samples have been collected for any $\beta$-reachable state.

To improve our overall sample complexity, we define the accuracy-per-state function, for both $\ell_1$ and $\ell_2$ :

For $\ell_1$ we define it as
\[
    \epsilon^1_\star(p_s) :=  \epsilon_\star(p_s) =
        \begin{cases}
            1 &, \text{ if } p_{s} < \frac{\epsilon}{6|S| }\\
            \frac{\epsilon}{6H|S||A|} &, \text{ if } p_{s}  > \frac{1}{|S| }\\
            \frac{\epsilon}{6p_{s}|S||A| }&, \text{ if } p_{s}  \in [ \frac{\epsilon}{6|S|} , \frac{1}{|S| } ]\\
        \end{cases}
\]

For $\ell_2$ we define it as $\epsilon^2_\star(p_s) := (\epsilon_\star(p_s))^2$, or equivalently,
\[
    \epsilon^2_\star(p_s) =
        \begin{cases}
            1 &, \text{ if } p_{s} < \frac{\epsilon}{6|S| }\\
            \frac{\epsilon^2}{36H^2|S|^2|A|^2} &, \text{ if } p_{s}  > \frac{1}{|S| }\\
            \frac{\epsilon^2}{36(p_{s} )^2 |S|^2 |A|^2}&, \text{ if } p_{s}  \in [ \frac{\epsilon}{6|S|} , \frac{1}{|S| } ]\\
        \end{cases}.
\]
In both functions, where the required accuracy for a state $s$ is $1$, we do not sample it. 

Algorithm EXPLOIT-KCFD (Algorithm~\ref{alg: EXPLOIT-KCFD}) get as inputs the MDP parameters and the functions approximate the rewards (that computed using EXPLORE-KCFD algorithm). Given a context $c$ it computed the approximated MDP $\widehat{\mathcal{M}}(c)$ and use it to compute a near optimal policy $\pi^\star_c$. Then, it run $\pi^\star_c$ to generate trajectory.
Recall that $\widehat{\mathcal{M}}(c)=(S,A,P,s_0,\widehat{r}^{c},H)$ where we define $\forall s\in S, a\in A: \widehat{r}^{c}(s,a) = f_{s,a}(c)$.

\begin{algorithm}
    \caption{Find Fast Policy (FFP)}
    \label{alg: FFP}
    \begin{algorithmic}[1]
       \State{\textbf{inputs: }
        \begin{itemize}
            \item MDP parameters:$S$  - the states space,$A$ - a finite actions space,$P$ - transition probabilities matrix,$s_0$ - the unique start state,$H$ - the horizon length.
            \item $s$ - the target state.  
        \end{itemize}}
       \State {let $r(s',a)=\mathbb{I}[s'=s]$}
       \State{$(p_s, \pi_s)\gets \texttt{Planning} (M=(S,A,P,r, s_0,H))$}
       \State \textbf{return: }
        $(p_s, \pi_s)$
    \end{algorithmic}
\end{algorithm}

\begin{algorithm}
    \caption{Policies and Probabilities(PaP)}
    \label{alg: PaP}
    \begin{algorithmic}[1]
       \State{\textbf{inputs: }
            MDP parameters:$S = \{S_0, S_1, \ldots, S_H\} $ - a layered states space,$A$ - a finite actions space,$P$ - transition probabilities matrix,$s_0$ - the unique start state, $H$ - the horizon length.}

        \For{$h \in [H-1]$}
            \For{$s\in S_h$}
               \State {$(p_s, \pi_s)\gets \texttt{FFP} (S, A, P, s_0, H, s)$}
            \EndFor{}
        \EndFor{}
       \State \textbf{return :}
        $\{(p_s, \pi_s) \;\;\forall s\in S\}$
    \end{algorithmic}
\end{algorithm}

\begin{remark}
    Since the reward function defined in algorithm FFP has a reward of $1$ for state $s$ and $0$ for any other state (regardless of the action), the value function of any policy computed using this rewards function is exactly the probability the policy visits state $s$.
\end{remark}

\begin{algorithm}
    \caption{Explore Rewards Known Context-Free Dynamics (EXPLOR-KCFD)}
    \label{alg: EXPLOR-KCFD}
    \begin{algorithmic}[1]
       \State{ \textbf{inputs: }
        \begin{itemize}
            \item MDP parameters: $S = \{S_0, S_1, \ldots, S_H\} $ - a layered states space,$A$ - a finite actions space, $P$ - transition probabilities matrix, $s_0$ - the unique start state, $H$ - the horizon length.
            \item Accuracy and confidence parameters: $\epsilon$,$\delta$.
            \item $\ell$ - the loss function ( $\ell \in \{\ell_1, \ell_2 \}$).
            \item $\forall s \in S , a \in A : \;\; \mathcal{F}^R_{s,a}$ - the function classes use to approximate the rewards function.
            \item $N_R(\mathcal{F}, \epsilon, \delta)$ - sample complexity function for approximating the reward with respect to $\ell$.
            \item $\epsilon^i_\star(\cdot)$ - the accuracy-per-state function, (assumed to be $\epsilon^1_\star$ or $\epsilon^2_\star$, with accordance to $\ell$).
        \end{itemize}}
       \State {set $\delta_1= \frac{\delta}{4|S||A|}$, $\beta = \frac{\epsilon}{6 |S|}$}
       \State{$\{(p_s, \pi_s)\} \gets \texttt{PaP}(S,A,P,s_0,H)$}
        \For{$h \in [H-1]$}
            \For{$s\in S_h$}
                \If{$p_{s}  \geq \beta$}
                    \For{ $a \in A$}
                        \State{set $\pi_s(s) \gets a$}
                       \State{compute the required number of episodes: 
                        \[
                            T_{s,a} =
                            \Big\lceil 
                            \frac{2}{p_s}(\ln(\frac{1}
                            {\delta_1})
                            +
                            N_R(\mathcal{F}^R_{s,a} ,\epsilon_\star^i(p_s), \delta_1)) 
                            \Big\rceil
                        \]
                        } 
                       \State{initialize $Sample(s,a) = \emptyset$}
                       \State{$\pi_s(s)\gets a$}
                        \For{$t = 1, 2, \ldots, T_{s,a}$}
                           \State{observe context $c$}
                           \State{run $\pi_{s}$ to generate trajectory $\tau_t$}
                           \If{$(s, a, r) \in \tau_t$, for a reward $r \in [0,1]$}
                                \State{update $Sample(s,a) = Sample(s,a) + \{((c,s,a), r)\}$
                                }
                            \EndIf{}
                        \EndFor{}
                        \If{$|Sample (s,a)| \geq  N_R(\mathcal{F}^R_{s,a} ,\epsilon^i_{s}(p_s), \delta_1)$}
                            \State{call Oracle: 
                            $
                                f_{s,a} = \texttt{ERM}(\mathcal{F}^R_{s,a}, Sample(s,a), \ell) 
                            $}
                            
                        \Else
                            \State{\textbf{return } \texttt{FAIL}}
                        \EndIf{}
                \EndFor{}
            \Else
            \State{set 
            $
                \forall a\in A:  f_{s,a} = 0
            $}
            \EndIf{}
        \EndFor{}
    \EndFor{}
    \State{
        \textbf{return } 
        $
            F =
            \{f_{s,a} : \; \forall s\in S, a\in A\}
        $}
    \end{algorithmic}
\end{algorithm}

\begin{algorithm}
    \caption{ Exploit-CMDP for Known and Context-Free-Dynamics (EXPLOIT-KCFD)}
    \label{alg: EXPLOIT-KCFD}
    \begin{algorithmic}[1]
        \State{ \textbf{inputs: }
        \begin{itemize}
            \item The MDP parameters: $S$, $A$, $P$, $s_0$, $H$.
            \item The functions approximate the rewards for each state-action pair: $\{f_{s,a}| \forall (s,a) \in S \times A\}$.
        \end{itemize}}
        \For{$t=1,2,...$}
            \State{observe context $c_t$}
            \State {define the approximated reward function
            $\forall s\in S, a\in A: \widehat{r}^{c_t}(s,a) = f_{s,a}(c_t)$}
            \State{define the approximated CMDP $\widehat{\mathcal{M}}(c_t)=(S,A,P,s_0,\widehat{r}^{c_t},H)$}
            \State{compute an optimal policy of the approximated model $(\pi_t , V_t) \gets \texttt{Planning}(\widehat{\mathcal{M}}(c_t))$}
            \State{run $\pi_t$ in episode $t$.}
        \EndFor{}
\end{algorithmic}
\end{algorithm}

\subsection{Analysis}

\subsubsection{Analysis Outline}

In the following analysis, our goal is to bound the expected value difference between the true model $\mathcal{M}(c)$ and $\widehat{\mathcal{M}}(c)$, for any context-dependent policy $\pi = (\pi_c)_{c \in \mathcal{C}}$, with high probability. (See Lemmas~\ref{lemma: vall-diff l_2 KCFD} and~\ref{lemma: vall-diff l_1 KCFD}, for the $\ell_2$ and $\ell_1$ losses, respectively).

Using that bound, we derive a bound on the expected value difference between the true optimal context-dependent policy $\pi^\star = (\pi^\star_c)_{c \in \mathcal{C}}$ and our approximated optimal policy $\widehat{\pi}^\star = (\hat{\pi}^\star_c)_{c \in \mathcal{C}}$, which holds with high probability. (See Theorems~\ref{thm: PAC optimal policy for Known CFD with l_2} and~\ref{thm: PAC optimal policy for Known CFD with l_1} for the $\ell_2$ and $\ell_1$ losses, respectively).

We present analysis for both $\ell_2$ (see Sub-subsection~\ref{subsubsec:KCFD-l-2}) and $\ell_1$ (see Sub-subsection~\ref{subsubsec:KCFD-l-1} losses in the agnostic case.

Lastly, we derive sample complexity bound using known uniform convergence sample complexity bounds for the Pseudo dimension (See Theorem~\ref{thm: pseudo dim}) and the fat-shattering dimension (See Theorem~\ref{thm: fat dim}). For the sample complexity analysis, see Sub-subsection~\ref{subsubsec:sample-complexity-l-2-KCFD} for the $\ell_2$ loss, and~\ref{subsubsec:sample-complexity-l-1-KCFD} for the $\ell_1$ loss.

\begin{remark}
    Throughout the analysis, we strongly use that we collect samples only for $\beta$-reachable states, where $\beta = \frac{\epsilon}{6 |S|}$.
\end{remark}

\subsubsection{Good Events}

We analyse algorithm EXPLORE-KCFD (Algorithm~\ref{alg: EXPLOR-KCFD}) under the following good events:

\paragraph{Event $G_1$.}
Let $G_1$ be the event that for every $\frac{\epsilon}{6|S|}$-reachable state $s$ and each action $a \in A$ we have $|Sample(s,a)|\geq N_R(\mathcal{F}^R_{s,a}, \epsilon^i_\star(p_s), \delta_1)$ ( where $i \in \{1, 2\}$, in accordance to the used loss function).

\begin{lemma}\label{lemma: G_1 KCFD}
    It holds that $\mathbb{P}[G_1] \geq 1 - \frac{1}{4}\delta$.
\end{lemma}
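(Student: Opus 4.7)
Fix a layer $h$, a state $s \in S_h$ that is $\beta$-reachable (so the inner \texttt{if} branch is entered), and an action $a \in A$. When Algorithm EXPLORE-KCFD collects the sample for $(s,a)$, it overrides $\pi_s(s) \gets a$ and runs the resulting policy for $T_{s,a}$ independent episodes. Since $\pi_s$ was returned by \texttt{PaP} as the policy maximizing the probability of reaching $s$ under the known dynamics $P$, in each episode $s$ is visited with probability exactly $p_s = q_h(s\mid \pi_s,P)$, independently across episodes; and conditional on visiting $s$, the modified policy plays $a$. Consequently, the number of accepted samples $X_{s,a} := |Sample(s,a)|$ at the end of the inner loop is distributed as $\mathrm{Bin}(T_{s,a},p_s)$, with mean
$
\mu_{s,a} = T_{s,a}\,p_s \;\geq\; 2\bigl(\ln(1/\delta_1) + N_R(\mathcal{F}^R_{s,a},\epsilon^i_\star(p_s),\delta_1)\bigr).
$

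Apply a multiplicative Chernoff lower-tail bound to $X_{s,a}$, with deviation chosen so that the threshold $N_R(\mathcal{F}^R_{s,a},\epsilon^i_\star(p_s),\delta_1)$ sits at most halfway from $0$ to $\mu_{s,a}$. Because $\mu_{s,a}$ exceeds $2 N_R$ by an additive $2\ln(1/\delta_1)$ slack, the Chernoff exponent $-\Theta(\mu_{s,a})$ absorbs the $\ln(1/\delta_1)$ and delivers
$
\mathbb{P}\bigl[X_{s,a} < N_R(\mathcal{F}^R_{s,a},\epsilon^i_\star(p_s),\delta_1)\bigr] \;\leq\; \delta_1.
$
(This is the routine concentration step; I will invoke the standard lower-tail bound $\mathbb{P}[X\le(1-\eta)\mu]\le e^{-\eta^2\mu/2}$ with $\eta = 1/2$.)

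Finally, take a union bound over all pairs $(s,a)$ with $s$ being $\beta$-reachable, of which there are at most $|S||A|$. With the algorithm's choice $\delta_1 = \delta/(4|S||A|)$ the overall failure probability is at most $|S||A|\cdot\delta_1 = \delta/4$, which gives $\mathbb{P}[G_1] \ge 1 - \delta/4$. States that are not $\beta$-reachable are excluded by the outer \texttt{if} and impose no sample requirement, so the event $G_1$ is trivially satisfied for them.

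The only non-mechanical step is the first one: arguing that the samples written into $Sample(s,a)$ are i.i.d.\ Bernoulli$(p_s)$ trials. This relies on (i) the policy $\pi_s$ being fixed once and for all before the $T_{s,a}$ episodes (so trials are independent) and (ii) the dynamics being context-free (so the visit probability in each episode equals $p_s$ regardless of the realized context). Once these two observations are in place, the rest is a one-line Chernoff computation plus a union bound.
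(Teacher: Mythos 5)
Your proposal is correct and follows essentially the same route as the paper: model $|Sample(s,a)|$ as a sum of i.i.d.\ Bernoulli$(p_s)$ indicators over the $T_{s,a}$ episodes, apply a multiplicative Chernoff lower-tail bound, and union-bound over the at most $|S||A|$ reachable state--action pairs with $\delta_1=\delta/(4|S||A|)$. The only difference is that the paper sets the deviation parameter adaptively to $\eta=(T p_s - m_{s,a})/(T p_s)$ so that the exponent $(Tp_s-m_{s,a})^2/(2Tp_s)\ge\ln(1/\delta_1)$ closes exactly for $T\ge\frac{2}{p_s}(\ln\frac{1}{\delta_1}+m_{s,a})$, whereas your fixed $\eta=1/2$ gives exponent $\mu_{s,a}/8$ and strictly needs $m_{s,a}\ge 3\ln(1/\delta_1)$ (true for the $N_R$ used here, but worth either stating or replacing with the paper's choice of $\eta$).
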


\begin{proof}
    Fix a pair $(s,a)$ of a $\frac{\epsilon}{6|S|}$-reachable state $s \in S$ and an action $a\in A$.
    
    Assume we run $\pi_s$ for $T$ episodes, and in state $s$ the agent always plays action $a$.
    
    Let $\mathbb{I}_t[(s,a)]$ be an indicator which indicates whether $(s,a)$ was sampled in the $t$'th episode. Then, $\mathbb{E}[\mathbb{I}_t[(s,a)]] = p_s \geq \frac{\epsilon}{|S|}$.
    
    We wish to collect $m_{s,a} := N_R(\mathcal{F}^R_{s,a} ,\epsilon^i_\star(p_s), \delta_1)$ samples. For $T$ such that $Tp_s \geq m_{s,a}$ we would like to lower bound the number of episodes $T$ needed to collect at least $m_{s,a}$ samples with probability at least $1 - \delta_1$. For that mission, we use multiplicative Chernoff bound. Thus, we need to find $\beta \in [0,1]$ such that $(1 - \beta)Tp_s = m_{s,a}$.  $\beta = \frac{Tp_s - m_{s,a}}{Tp_s}$ is satisfying the requirement. 
    
    Hence,
    \begin{align*}
        \mathbb{P}[\sum_{t = 1}^T \mathbb{I}_t[(s,a)] 
        \leq  
        m_{s,a}]
        &=
        \mathbb{P}[\frac{1}{T}\sum_{t = 1}^T \mathbb{I}_t[(s,a)] 
        \leq  
        (1 - \beta)T p_s]
        \\
        &\leq
        \exp{(-\frac{\beta^2 Tp_s}{2})}
        \\
        &=
        \exp{(-\frac{(Tp_s - m_{s,a})^2 }{2 Tp_s})}
        \\
        &\leq
        \delta_1 \iff T \geq \frac{2}{p_s}\left(\ln\frac{1}{\delta_1} + m_{s,a}\right).
    \end{align*}
    \

    Thus, for any $\frac{\epsilon}{6|S|}$-reachable state $s \in S$ and an action $a\in A$, if we run 
    \[
        T_{s,a} =
        \left\lceil 
         \frac{2}{p_s}(\ln(\frac{1}{\delta_1}) +N_R(\mathcal{F}^R_{s,a} ,\epsilon^i_\star(p_s), \delta_1)) 
        \right\rceil
    \]
    iterations, we collect at least $N_R(\mathcal{F}^R_{s,a} ,\epsilon^i_\star(p_s), \delta_1)$ examples. Since $\delta_1 = \frac{\delta}{4 |S| |A|}$, the lemma follows using union bound.
\end{proof}

\paragraph{Event $G_2$.}
Let $G_2$ be the event where for every
for any pair $(s,a)$ of a $\frac{\epsilon}{6|S|}$-reachable state $s$ and an action $a$,  
we have    
    \begin{equation*}\label{Good event: G_2, L_2, KCFD}
        \mathbb{E}_{c \sim \mathcal{D}}
        [(f_{s,a}(c) - r^c(s,a))^2] 
        \leq
        \epsilon^2_\star(p_s)
        +
        \alpha^2_2(\mathcal{F}^R_{s,a}).
    \end{equation*}
where $f_{s,a} = \texttt{ERM}(\mathcal{F}_{s,a}, Sample(s,a), \ell_2)$.

We similarly define the event $G_2$ for the $\ell_1$ loss where 
    \begin{equation*}\label{Good event: G_2, L_1, KCFD}
        \mathbb{E}_{c \sim \mathcal{D}}
        [|f_{s,a}(c) - r^c(s,a)|] 
        \leq
        \epsilon^1_\star(p_s)
        +
        \alpha_1(\mathcal{F}^R_{s,a}),
    \end{equation*}
and $f_{s,a} = \texttt{ERM}(\mathcal{F}_{s,a}, Sample(s,a), \ell_1)$.

\begin{lemma}\label{lemma: G_2 KCFD}
 It holds that $\mathbb{P}[G_2|G_1] \geq 1-\frac{1}{4}\delta$.
\end{lemma}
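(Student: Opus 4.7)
The plan is to leverage the ERM oracle's uniform convergence guarantee once I establish that, conditional on $G_1$, the samples $\text{Sample}(s,a)$ consist of i.i.d. draws from the target distribution $(c, r)$ with $c \sim \mathcal{D}$ and $r \sim R^c(s,a)$. Under $G_1$, the size of $\text{Sample}(s,a)$ is at least $N_R(\mathcal{F}^R_{s,a}, \epsilon^i_\star(p_s), \delta_1)$ for every $\beta$-reachable pair $(s,a)$, which is exactly the sample size required by the oracle's guarantee for accuracy $\epsilon^i_\star(p_s)$ and confidence $\delta_1$.

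The first step is verifying the i.i.d.\ property. Fix a $\beta$-reachable pair $(s,a)$ with $\beta=\epsilon/(6|S|)$. The algorithm overrides the policy so that $\pi_s(s)=a$, and the resulting deterministic policy is run for $T_{s,a}$ independent episodes. Because the dynamics $P$ are context-free, the probability of visiting $s$ in an episode is $p_s$ regardless of the sampled context; hence the event "$s$ is visited" is independent of the context $c$, and so the distribution of $c$ conditioned on that event is still $\mathcal{D}$. On the episodes that visit $s$, the recorded reward $r$ is an independent draw from $R^c(s,a)$. Thus each element added to $\text{Sample}(s,a)$ is an i.i.d.\ draw of $(c,r)$ with $c\sim\mathcal{D}$ and $\mathbb{E}[r\mid c]=r^c(s,a)$, which is the distribution against which the ERM oracle's guarantee is stated.

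The second step is a direct application of the oracle. Conditional on $G_1$, for each $\beta$-reachable $(s,a)$ the oracle output $f_{s,a}$ satisfies, with probability at least $1-\delta_1$,
\[
\mathbb{E}_{c\sim\mathcal{D}}[\ell(f_{s,a}(c), r^c(s,a))] \leq \epsilon^i_\star(p_s) + \alpha_i(\mathcal{F}^R_{s,a}),
\]
which is precisely the inequality defining $G_2$ for the corresponding loss (with $\ell=\ell_2$ and $\alpha^2_2$, or $\ell=\ell_1$ and $\alpha_1$). Union-bounding over the at most $|S|\cdot|A|$ pairs $(s,a)$ and using $\delta_1=\delta/(4|S||A|)$ yields a failure probability of at most $\delta/4$, so $\mathbb{P}[G_2\mid G_1]\geq 1-\delta/4$.

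The main obstacle, conceptually, is the i.i.d.\ justification: although sampling is conditioned on reaching $s$, which could in principle bias the context distribution, the context-freeness of $P$ makes reachability of $s$ independent of $c$, so the marginal stays $\mathcal{D}$. This is where the assumption of this section is critically used, and it will fail in the context-dependent setting treated later, which is why the much more delicate layer-level argument is needed there. Everything else is a routine invocation of the ERM guarantee plus a union bound.
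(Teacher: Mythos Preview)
Your proof is correct and follows the same approach as the paper: invoke the ERM oracle guarantee for each $\beta$-reachable pair $(s,a)$ (with failure probability $\delta_1$) and union bound over at most $|S||A|$ such pairs using $\delta_1=\delta/(4|S||A|)$. The paper's proof is a one-liner that omits the i.i.d.\ justification you spelled out (it is asserted earlier in the algorithm description), so your version is simply a more detailed exposition of the same argument.
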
 

\begin{proof}
    Follows immediately form ERM guarantees~\ref{par: reward function approx gurantees for all s,a} for every pair $(s,a)$ of a $\frac{\epsilon}{6 |S|}$-reachable state $s \in S$ and an action $a \in A$, when combined using union bound over each pair $(s,a)$. 
\end{proof}

\begin{lemma}\label{lemma: good events probs KCFD}
    It holds that $\mathbb{P}[G_1 \cap G_2] \geq 1-  \frac{\delta}{2}$.
\end{lemma}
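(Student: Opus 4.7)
The plan is to combine the two preceding lemmas via the chain rule for probabilities. Specifically, we have $\mathbb{P}[G_1] \geq 1 - \delta/4$ and $\mathbb{P}[G_2 \mid G_1] \geq 1 - \delta/4$ already established, so it suffices to multiply these bounds and simplify.

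First I would write $\mathbb{P}[G_1 \cap G_2] = \mathbb{P}[G_2 \mid G_1] \cdot \mathbb{P}[G_1]$. Substituting the two lemma bounds gives $\mathbb{P}[G_1 \cap G_2] \geq (1-\delta/4)(1-\delta/4) = 1 - \delta/2 + \delta^2/16 \geq 1 - \delta/2$, as desired. An equivalent (and arguably cleaner) alternative is a union bound on the complement: $\mathbb{P}[(G_1 \cap G_2)^c] \leq \mathbb{P}[G_1^c] + \mathbb{P}[G_2^c \cap G_1] \leq \mathbb{P}[G_1^c] + \mathbb{P}[G_2^c \mid G_1] \leq \delta/4 + \delta/4 = \delta/2$, which yields the same conclusion.

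There is essentially no obstacle: the statement is a routine combination of the two prior lemmas and does not require any new probabilistic estimate, tail bound, or structural argument about the CMDP. The only thing to be careful about is to use the \emph{conditional} form of the second lemma (since $G_2$ was stated conditionally on $G_1$, not marginally) — either the chain-rule product or the union bound on $G_1^c$ and $G_2^c \cap G_1$ handles this correctly, whereas a naive union bound on $G_1^c$ and the (unstated) marginal $G_2^c$ would not be justified by what has been proved.
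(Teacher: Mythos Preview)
Your proposal is correct and matches the paper's approach. The paper's proof simply says ``By the results of Lemmas~\ref{lemma: G_1 KCFD} and~\ref{lemma: G_2 KCFD} when combined using a union bound,'' which is exactly your second (union-bound) formulation; your care in noting that one must use $\mathbb{P}[G_2^c \cap G_1] \leq \mathbb{P}[G_2^c \mid G_1]$ rather than an unjustified marginal bound on $G_2^c$ is well placed and makes the one-line paper proof rigorous.
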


\begin{proof}
    By the results of Lemmas~\ref{lemma: G_1 KCFD} and~\ref{lemma: G_2 KCFD} when combined using a union bound.
\end{proof}

Bellow we present analysis for both $\ell_1$ and $\ell_2$ losses.

\subsubsection{Analysis for \texorpdfstring{$\ell_2$}{Lg} loss}\label{subsubsec:KCFD-l-2}
Let $\alpha^2_2 := \max_{(s,a) \in S \times A} \alpha^2_2(\mathcal{F}^R_{s,a})$. 

The following lemma shows that under the good events $G_1$ and $G_2$, the value of any context-dependent policy with respect to the approximated model $\widehat{\mathcal{M}}$ is similar to that with respect to the true model $\mathcal{M}$, in expectation over the context.

\begin{lemma}\label{lemma: vall-diff l_2 KCFD}
Assume the events $G_1$ and $G_2$ hold.
Then for any context-dependent policy ${\pi =(\pi_c: S \to \Delta(A))_{c\in \mathcal{C}}}$ it holds that
\begin{equation*}
    \mathbb{E}_{c\sim \mathcal{D}}
    [|V^{\pi_c}_{\mathcal{M}(c)}(s_0) - V^{\pi_c}_{\widehat{\mathcal{M}}(c)}(s_0)|] \leq
    \frac{1}{2}\epsilon + \alpha_2 H .
\end{equation*}
\end{lemma}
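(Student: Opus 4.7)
The plan is to exploit that $\mathcal{M}(c)$ and $\widehat{\mathcal{M}}(c)$ share the known context-free dynamics $P$, so the value gap arises purely from reward mis-specification, and to convert the $\ell_2$ ERM guarantee of $G_2$ into an $\ell_1$-type bound via Jensen's inequality. First I would apply the performance-difference identity for MDPs with identical dynamics:
\begin{equation*}
V^{\pi_c}_{\mathcal{M}(c)}(s_0)-V^{\pi_c}_{\widehat{\mathcal{M}}(c)}(s_0)=\sum_{h=0}^{H-1}\sum_{(s,a)\in S_h\times A} q_h(s,a\mid\pi_c,P)\bigl(r^c(s,a)-\widehat{r}^c(s,a)\bigr).
\end{equation*}
Taking absolute values, pushing the expectation over $c\sim\mathcal{D}$ through the finite sums, and applying the triangle inequality reduces the goal to bounding $\sum_{h,s,a}\mathbb{E}_c[q_h(s,a\mid\pi_c,P)\,|r^c(s,a)-\widehat{r}^c(s,a)|]$.

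Next, I would split each layer by $\beta$-reachability, with $\beta=\epsilon/(6|S|)$. For non-reachable $s$, the algorithm sets $\widehat{r}^c(s,a)\equiv 0$, and $q_h(s\mid\pi_c,P)\leq p_s<\beta$ deterministically; using $|r^c|\leq 1$, the identity $\sum_a q_h(s,a\mid\pi_c,P)=q_h(s\mid\pi_c,P)$, and $\sum_h|S_h|=|S|$, caps the non-reachable contribution at $|S|\beta=\epsilon/6$. For reachable $s$, Jensen's inequality applied to $G_2$ yields
\begin{equation*}
\mathbb{E}_c[|f_{s,a}(c)-r^c(s,a)|]\leq \sqrt{\mathbb{E}_c[(f_{s,a}(c)-r^c(s,a))^2]}\leq \sqrt{\epsilon_\star^2(p_s)+\alpha_2^2(\mathcal{F}^R_{s,a})}\leq \epsilon_\star(p_s)+\alpha_2,
\end{equation*}
reducing the remainder to an $\ell_1$-style analysis. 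Combined with the pointwise bound $q_h(s,a\mid\pi_c,P)\leq p_s$, the reachable contribution per triple is at most $p_s\bigl(\epsilon_\star(p_s)+\alpha_2\bigr)$.

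By the design of $\epsilon_\star$, the product $p_s\cdot\epsilon_\star(p_s)$ equals $\epsilon/(6|S||A|)$ in the intermediate regime $p_s\in[\beta,1/|S|]$ and is at most $\epsilon/(6H|S||A|)$ in the regime $p_s>1/|S|$. Summing these per-triple bounds over $(h,s,a)$ and using $\sum_h|S_h|=|S|$ contributes $\epsilon/6+\epsilon/(6H)\leq \epsilon/3$. Combined with the $\epsilon/6$ from non-reachable states, all $\epsilon_\star$-type contributions sum to at most $\epsilon/2$.

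The main obstacle is the clean slack $\alpha_2 H$ rather than $\alpha_2 H|S||A|$: a naive $p_s\alpha_2$ per-triple bound summed over triples yields $\Theta(H|S||A|\alpha_2)$. To recover $\alpha_2 H$, I would separate the $\alpha_2$ contribution from the $\epsilon_\star$ one and bound it using the \emph{expected} occupancy $\mathbb{E}_c[q_h(s,a\mid\pi_c,P)]$ instead of the pointwise $p_s$, e.g.\ via a Cauchy--Schwarz step that retains $\sqrt{\mathbb{E}_c[q_h]}$ in the $\alpha_2$ coefficient, and then invoke the per-layer mass identity $\sum_{s,a}\mathbb{E}_c[q_h(s,a\mid\pi_c,P)]=1$ to collapse the $\alpha_2$ contribution to $\alpha_2$ per layer and $\alpha_2 H$ in total. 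Balancing this refined bookkeeping with the tight $\epsilon_\star$-accounting above is the technical heart of the argument.
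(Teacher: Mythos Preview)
Your overall route matches the paper's: write the value gap as an occupancy-weighted reward difference (the dynamics are shared), push the expectation inside, split each layer by $\beta$-reachability with $\beta=\epsilon/(6|S|)$, use Jensen to convert the $\ell_2$ guarantee of $G_2$ into $\mathbb{E}_c|f_{s,a}(c)-r^c(s,a)|\le \epsilon_\star(p_s)+\alpha_2$, and then do the three-regime accounting with $p_s\cdot\epsilon_\star(p_s)$. Your identification of the obstacle --- that bounding $q_h\le p_s$ and then summing $p_s\alpha_2$ over all $(s,a)$ loses a factor $|S||A|$ --- is exactly the crux.

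Where you diverge from the paper is the mechanism for recovering $\alpha_2 H$. The Cauchy--Schwarz idea you sketch does not close the gap: if you end up with $\sqrt{\mathbb{E}_c[q_h(s,a)]}\cdot\alpha_2$ per $(s,a)$, then $\sum_{s,a}\sqrt{\mathbb{E}_c[q_h(s,a)]}\le \sqrt{|S_h||A|}$ by Cauchy--Schwarz again, so the best you get is $\alpha_2 H\sqrt{|S||A|}$, not $\alpha_2 H$. The per-layer mass identity $\sum_{s,a}\mathbb{E}_c[q_h(s,a)]=1$ is linear, and a square root in front of it destroys the collapse you want.

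The paper's device is simpler and is precisely what your phrase ``separate the $\alpha_2$ contribution \dots\ using the expected occupancy'' should mean: add and subtract $\alpha_2$ \emph{before} replacing $q_h$ by $p_s$. Concretely,
\[
\sum_{h,s,a} q_h(s,a\mid\pi_c,P)\,|r^c-f_{s,a}|
=\alpha_2\sum_{h,s,a} q_h(s,a\mid\pi_c,P)
+\sum_{h,s,a} q_h(s,a\mid\pi_c,P)\bigl(|r^c-f_{s,a}|-\alpha_2\bigr).
\]
The first sum equals $H$ exactly by the per-layer mass identity, giving the clean $\alpha_2 H$. Only in the second sum does one pass to $q_h\le p_s$ and then use $\mathbb{E}_c[|r^c-f_{s,a}|-\alpha_2]\le \sqrt{\epsilon_\star^2(p_s)}=\epsilon_\star(p_s)$ to run the regime analysis you already described. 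No Cauchy--Schwarz is needed; the linearity of the occupancy weights is what makes the $\alpha_2$ term collapse.
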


\begin{proof}
Recall that for every context $c \in \mathcal{C}$, state $ s\in S$ and action $ a\in A$, the expected reward is $r^c(s,a) \in [0,1]$.
By construction of $\widehat{\mathcal{M}}(c)$ , for any state $s\in S$ which are not $\frac{\epsilon}{6|S|}$-reachable, we set $f_{s,a}(c)=0$ for any action $a$. Hence,
\begin{equation*}
    |r^c(s,a) - f_{s,a}(c)| \leq 1.
\end{equation*}

Since the good event $G_2$ holds, for every state-action pair $(s,a)$, such that state $s$ is $\frac{\epsilon}{6|S|}$-reachable, it holds that
    \begin{equation*}
        \epsilon^2_\star(p_s) 
        +
        \alpha^2_2(\mathcal{F}^R_{s,a})
        \underbrace{\geq}_{G_2}
        \mathbb{E}_{c \sim \mathcal{D}}
        [(f_{s,a}(c) - r^c(s,a))^2]
        \underbrace{\geq}_{\text{Jensen's inequality}}
        \mathbb{E}^2_{c \sim \mathcal{D}}
        [|f_{s,a}(c) - r^c(s,a)|].
    \end{equation*}
Using that for all $a, b \in [0, \infty)$ it holds that $\sqrt{a} + \sqrt{b} \geq \sqrt{a + b}$, we obtain
\begin{equation}
        \sqrt{\epsilon^2_\star(p_s)} 
        +
        \alpha_2
        \geq  \sqrt{\epsilon^2_\star(p_s)} 
        +
        \alpha_2(\mathcal{F}^R_{s,a}) \geq \sqrt{\epsilon_\star(p_s) +
        \alpha^2_2(\mathcal{F}^R_{s,a})}
        \geq
        \mathbb{E}_{c \sim \mathcal{D}}
        [|f_{s,a}(c) - r^c(s,a)|].
\end{equation}
The above in particular implies that
\begin{equation}\label{ineq: case 1 G2}
        \sqrt{\epsilon^2_\star(p_s)} 
        \geq
        \mathbb{E}_{c \sim \mathcal{D}}
        [|f_{s,a}(c) - r^c(s,a)|-\alpha_2].
\end{equation}

Fix any context-dependent policy $\pi =(\pi_c: S \to \Delta(A))_{c\in \mathcal{C}}$.  
%
By definition of the value function we have \textbf{for any context $c$}:
    \begin{align*}
        V^{\pi_c}_{\mathcal{M}(c)}(s_0) 
        =
        \mathbb{E}_{\pi_c, \mathcal{M}(c)} 
        \Big[
        \sum_{h=0}^{H-1} r^c(s_h, a_h)|s_0 \Big]
        = 
        \sum_{h=0}^{H-1}\sum_{s\in S_h}q_h(s| \pi_c, P) 
        \sum_{a\in A}\pi_c(a|s)r^c(s,a)
    \end{align*}
and    
    \begin{align*}
        V^{\pi_c}_{\widehat{\mathcal{M}}(c)}(s_0)
        = 
        \mathbb{E}_{\pi_c, \widehat{\mathcal{M}}(c)} 
        \Big[
            \sum_{h=0}^{H-1} \widehat{r}^c(s_h, a_h)|s_0 \Big]
         = 
         \sum_{h=0}^{H-1}\sum_{s\in S_h} q_h(s| \pi_c, P)
         \sum_{a\in A}\pi_c(a|s) f_{s,a}(c).
    \end{align*}


 By inequality~\ref{ineq: case 1 G2}, linearity of expectation and triangle inequality we obtain the following derivation:
 \begingroup
    \allowdisplaybreaks
    \begin{align*}
        &\mathbb{E}_{c \sim \mathcal{D}}
        \left[
            \left|
                V^{\pi_c}_{\mathcal{M}(c)}(s_0) 
                - 
                V^{\pi_c}_{\widehat{\mathcal{M}}(c)}(s_0)
            \right|
        \right] 
        \\
        = &
        \mathbb{E}_{c \sim \mathcal{D}}
        \left[
            \left|
                \sum_{h=0}^{H-1}\sum_{s\in S_h} q_h(s| \pi_c, P) 
                \sum_{a\in A}\pi_c(a|s) r^c(s,a)
                - 
                \sum_{h=0}^{H-1} \sum_{s\in S_h} q_h(s| \pi_c, P) 
                \sum_{a\in A}\pi_c(a|s) f_{s,a}(c)
            \right|
        \right] \\
        \leq &
        \mathbb{E}_{c \sim \mathcal{D}}
        \left[   
            \sum_{h=0}^{H-1}
            \sum_{s\in S_h} q_h(s| \pi_c, P)
            \sum_{a\in A} \pi_c(a|s) |r^c(s,a)-f_{s,a}(c)|
        \right] \\
        = &
        \mathbb{E}_{c \sim \mathcal{D}}
        \left[   
            \sum_{h=0}^{H-1}
            \sum_{s\in S_h} q_h(s| \pi_c, P)
            \sum_{a\in A} \pi_c(a|s) (|r^c(s,a)-f_{s,a}(c)|- \alpha_2 + \alpha_2)
        \right] \\
        \\
        = &
        \alpha_2 H
        +
        \mathbb{E}_{c \sim \mathcal{D}}
        \left[   
            \sum_{h=0}^{H-1}
            \sum_{s\in S_h} q_h(s| \pi_c, P)
            \sum_{a\in A} \pi_c(a|s) (|r^c(s,a)-f_{s,a}(c)|- \alpha_2)
        \right] \\
        = &
        \alpha_2 H
        +
        \mathbb{E}_{c \sim \mathcal{D}}
        \left[
            \sum_{h=0}^{H-1}\sum_{a\in A}
            \sum_{s \in S_h : p_{s} < \frac{\epsilon}{6|S|}}
            q_h(s| \pi_c, P) \pi_c(a|s) 
       (|r^c(s,a)-f_{s,a}(c)|- \alpha_2)
        \right]\\
        & + 
        \mathbb{E}_{c \sim \mathcal{D}}
        \left[
            \sum_{h=0}^{H-1} \sum_{a\in A} 
            \sum_{s \in S_h : p_{s} > \frac{1}{|S|}}
            q_h(s| \pi_c, P) \pi_c(a|s) 
        (|r^c(s,a)-f_{s,a}(c)|- \alpha_2)
        \right]\\
        & +
        \mathbb{E}_{c \sim \mathcal{D}}
        \left[
            \sum_{h=0}^{H-1} \sum_{a\in A} 
            \sum_{s \in S_h : p_{s} \in [\frac{\epsilon}{6|S|}, \frac{1}{|S|}]}
            q_h(s|\pi_c, P) \pi_c(a|s)
            (|r^c(s,a)-f_{s,a}(c)|- \alpha_2)
        \right]\\
        \leq &
        \alpha_2 H
        +
        \mathbb{E}_{c \sim \mathcal{D}}
        \left[
            \sum_{h=0}^{H-1}\sum_{a\in A}
            \pi_c(a|s) 
            \sum_{s \in S_h : p_{s} < \frac{\epsilon}{6|S|}}
            q_h(s| \pi_c, P) 
        \cdot 1
        \right]\\
        & + 
        \mathbb{E}_{c \sim \mathcal{D}}
        \left[
            \sum_{h=0}^{H-1} \sum_{a\in A} 
            \pi_c(a|s)
            \sum_{s \in S_h : p_{s} > \frac{1}{|S|}}
            p_s
            (|r^c(s,a)-f_{s,a}(c)|- \alpha_2)
        \right]\\
        & +
        \mathbb{E}_{c \sim \mathcal{D}}
        \left[
            \sum_{h=0}^{H-1} \sum_{a\in A} 
            \pi_c(a|s)
            \sum_{s \in S_h : p_{s} \in [\frac{\epsilon}{6|S|}, \frac{1}{|S|}]}
            p_s
            (|r^c(s,a)-f_{s,a}(c)|- \alpha_2)
        \right]\\
        & \leq 
        \alpha_2 H
        +        
        \frac{\epsilon}{6}
        \\
        & + 
        \mathbb{E}_{c \sim \mathcal{D}}
        \left[
            \sum_{h=0}^{H-1} 
            \sum_{a\in A} 
            \sum_{s \in S_h : p_{s} > \frac{1}{|S|}}
            p_s
            (|r^c(s,a)-f_{s,a}(c)|- \alpha_2)
        \right]\\
        & +
        \mathbb{E}_{c \sim \mathcal{D}}
        \left[
            \sum_{h=0}^{H-1} \sum_{a\in A} 
            \sum_{s \in S_h : p_{s} \in [\frac{\epsilon}{6|S|}, \frac{1}{|S|}]}
            p_s
            (|r^c(s,a)-f_{s,a}(c)|- \alpha_2)
        \right]\\
        & =
        \alpha_2 H
        +        
        \frac{\epsilon}{6}
        \\
        & + 
            \sum_{h=0}^{H-1} 
            \sum_{a\in A} 
            \sum_{s \in S_h : p_{s} > \frac{1}{|S|}}
            p_s
            \mathbb{E}_{c \sim \mathcal{D}}
            \left[    
                (|r^c(s,a)-f_{s,a}(c)|- \alpha_2)
            \right]\\
        & +
            \sum_{h=0}^{H-1} \sum_{a\in A} 
            \sum_{s \in S_h : p_{s} \in [\frac{\epsilon}{6|S|}, \frac{1}{|S|}]}
            p_s
            \mathbb{E}_{c \sim \mathcal{D}}
            \left[    
                (|r^c(s,a)-f_{s,a}(c)|- \alpha_2)
            \right]\\
        &\underbrace{\leq}_{\text{By ineq~\ref{ineq: case 1 G2}}}
        \alpha_2 H
        +
        \frac{\epsilon}{6}
        + 
            \sum_{h=0}^{H-1} 
            \sum_{a\in A} 
            \sum_{s \in S_h : p_{s} > \frac{1}{|S|}}
            p_s
            \sqrt{\epsilon^2_\star(p_s)}
        +
            \sum_{h=0}^{H-1} \sum_{a\in A} 
            \sum_{s \in S_h : p_{s} \in [\frac{\epsilon}{6|S|}, \frac{1}{|S|}]}
            p_s
            \sqrt{\epsilon^2_\star(p_s)}\\  
        & =
        \alpha_2 H
        +
        \frac{\epsilon}{6}
        + 
            \sum_{h=0}^{H-1} 
            \sum_{a\in A} 
            \sum_{s \in S_h : p_{s} > \frac{1}{|S|}}
            p_s
            \frac{\epsilon}{6H|S||A|}
        +
            \sum_{h=0}^{H-1} \sum_{a\in A} 
            \sum_{s \in S_h : p_{s} \in [\frac{\epsilon}{6|S|}, \frac{1}{|S|}]}
            p_s
            \frac{\epsilon}{6p_s|S||A|} 
        \\
        & =        
        \alpha_2 H
        +
        \frac{\epsilon}{2}.
    \end{align*}
    \endgroup







\end{proof}

\begin{theorem}\label{thm: PAC optimal policy for Known CFD with l_2}
With probability at least $1-\delta$ it holds that
\[
        \mathbb{E}_{c \sim \mathcal{D}}[V^{\pi^\star_c}_{\mathcal{M}(c)}(s_0) - V^{\widehat{\pi}^\star_c}_{\mathcal{M}(c)}(s_0)] \leq 
         \epsilon + 2\alpha_2 H,
\]
where
$\pi^\star =(\pi^\star_c)_{c \in \mathcal{C}}$ is the optimal policy for the true model $\mathcal{M}(c)$, and
 $\widehat{\pi}^\star = (\widehat{\pi}^\star_c)_{c\in \mathcal{C}}$ is the optimal policy for $\widehat{\mathcal{M}}(c)$.
\end{theorem}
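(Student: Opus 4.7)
The plan is a standard optimality-gap argument that reduces the theorem to the per-policy value-difference Lemma~\ref{lemma: vall-diff l_2 KCFD} together with the good-events probability bound of Lemma~\ref{lemma: good events probs KCFD}. Specifically, I would condition on $G_1 \cap G_2$, which by Lemma~\ref{lemma: good events probs KCFD} occurs with probability at least $1 - \delta/2$, and then work pointwise in $c$ before taking expectation.

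The main decomposition I would use is the standard three-term telescoping, for each context $c\in\mathcal{C}$:
\begin{align*}
V^{\pi^\star_c}_{\mathcal{M}(c)}(s_0) - V^{\widehat{\pi}^\star_c}_{\mathcal{M}(c)}(s_0)
&= \bigl(V^{\pi^\star_c}_{\mathcal{M}(c)}(s_0) - V^{\pi^\star_c}_{\widehat{\mathcal{M}}(c)}(s_0)\bigr) \\
&\quad + \bigl(V^{\pi^\star_c}_{\widehat{\mathcal{M}}(c)}(s_0) - V^{\widehat{\pi}^\star_c}_{\widehat{\mathcal{M}}(c)}(s_0)\bigr) \\
&\quad + \bigl(V^{\widehat{\pi}^\star_c}_{\widehat{\mathcal{M}}(c)}(s_0) - V^{\widehat{\pi}^\star_c}_{\mathcal{M}(c)}(s_0)\bigr).
\end{align*}
The middle term is non-positive for every context $c$, since by definition $\widehat{\pi}^\star_c$ maximizes the value in $\widehat{\mathcal{M}}(c)$, and therefore can be dropped. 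Taking expectation over $c\sim \mathcal{D}$ and bounding each of the remaining two terms by its absolute value leaves us with two applications of the value-difference bound.

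Next, I would instantiate Lemma~\ref{lemma: vall-diff l_2 KCFD} twice, once with the context-dependent policy $\pi^\star = (\pi^\star_c)_{c\in\mathcal{C}}$ and once with $\widehat{\pi}^\star = (\widehat{\pi}^\star_c)_{c\in\mathcal{C}}$; each application is valid under $G_1 \cap G_2$ because the lemma holds for \emph{every} context-dependent policy. Each application contributes at most $\tfrac{1}{2}\epsilon + \alpha_2 H$, so summing gives $\mathbb{E}_{c\sim \mathcal{D}}[V^{\pi^\star_c}_{\mathcal{M}(c)}(s_0) - V^{\widehat{\pi}^\star_c}_{\mathcal{M}(c)}(s_0)] \leq \epsilon + 2\alpha_2 H$, which is exactly the target inequality. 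Combining with $\Prob[G_1 \cap G_2]\geq 1-\delta/2 \geq 1-\delta$ yields the stated high-probability guarantee; if desired, I would absorb the factor-$2$ slack in $\delta$ into the definition of $\delta_1$ used by Lemma~\ref{lemma: good events probs KCFD} so the final bound is cleanly $1-\delta$.

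There is no real obstacle: the heavy lifting is entirely inside Lemma~\ref{lemma: vall-diff l_2 KCFD} (which handles the $\ell_2$-to-$\ell_1$ passage via Jensen and the per-state accuracy schedule $\epsilon_\star$) and Lemma~\ref{lemma: good events probs KCFD} (which ensures the ERM returns functions with the required generalization error). The only subtlety worth double-checking is that the value-difference lemma is stated for an arbitrary context-dependent $\pi = (\pi_c)$ \emph{without} requiring $\pi_c$ to be optimal for either $\mathcal{M}(c)$ or $\widehat{\mathcal{M}}(c)$, so applying it to $\widehat{\pi}^\star$ is legitimate even though $\widehat{\pi}^\star_c$ depends on the (random) learned model $\widehat{\mathcal{M}}$; conditioning on $G_1\cap G_2$ makes $\widehat{\mathcal{M}}$ (and hence $\widehat{\pi}^\star$) a fixed object for the purpose of the expectation over $c$, which is what the lemma's proof requires.
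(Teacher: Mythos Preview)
Your proposal is correct and follows essentially the same approach as the paper: the identical three-term telescoping decomposition, dropping the non-positive middle term by optimality of $\widehat{\pi}^\star_c$ in $\widehat{\mathcal{M}}(c)$, two applications of Lemma~\ref{lemma: vall-diff l_2 KCFD} (to $\pi^\star$ and $\widehat{\pi}^\star$), and the $1-\delta/2$ probability from Lemma~\ref{lemma: good events probs KCFD}. The only cosmetic difference is that you telescope pointwise in $c$ before taking expectation, whereas the paper writes the three inequalities directly in expectation; the logic is identical.
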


\begin{proof}
Assume the good events $G_1$ and $G_2$ hold. 
By Lemma~\ref{lemma: vall-diff l_2 KCFD} we have for $\pi^\star =(\pi^\star_c)_{c \in \mathcal{C}}$ that
\begin{equation*}
    \left|\mathbb{E}_{c \sim \mathcal{D}}
    [
        V^{\pi^\star_c}_{\mathcal{M}(c)}(s_0) 
        -  V^{\pi^\star_c}_{\widehat{\mathcal{M}}(c)}(s_0)
    ]\right|
    \leq
    \mathbb{E}_{c \sim \mathcal{D}}[|V^{\pi^\star_c}_{\mathcal{M}(c)}(s_0) -  V^{\pi^\star_c}_{\widehat{\mathcal{M}}(c)}(s_0)|]
    \leq
     \frac{1}{2}\epsilon + \alpha_2 H,
\end{equation*}
yielding,
\begin{equation*}
    \mathbb{E}_{c \sim \mathcal{D}}
    [
        V^{\pi^\star_c}_{\mathcal{M}(c)}(s_0) 
    ]
    -
    \mathbb{E}_{c \sim \mathcal{D}}
    [
    V^{\pi^\star_c}_{\widehat{\mathcal{M}}(c)}(s_0)
    ]
    \leq
    \frac{1}{2}\epsilon + \alpha_2 H.
\end{equation*}

Similarly, we have for $\widehat{\pi}^\star = (\widehat{\pi}^\star_c)_{c\in \mathcal{C}}$ that
\begin{equation*}
    \mathbb{E}_{c \sim \mathcal{D}}[ V^{\widehat{\pi}^\star_c}_{\widehat{\mathcal{M}}(c)}(s_0)]
    - 
    \mathbb{E}_{c \sim \mathcal{D}}[
    V^{\widehat{\pi}^\star_c}_{\mathcal{M}(c)}(s_0) ] \leq \frac{1}{2}\epsilon + \alpha_2 H.
\end{equation*}
Also, for all $c \in \mathcal{C}$, since $\widehat{\pi}^\star_c$ is the optimal policy for $\widehat{\mathcal{M}}(c)$ we have $V^{\widehat{\pi}^\star_c}_{\widehat{\mathcal{M}}(c)}(s_0) \geq V^{\pi^\star_c}_{\widehat{\mathcal{M}}(c)}(s_0)$, which implies that
\begin{equation*}
    \mathbb{E}_{c \sim \mathcal{D}}[V^{\pi^\star_c}_{\widehat{\mathcal{M}}(c)}(s_0)]
    -
    \mathbb{E}_{c \sim \mathcal{D}}[V^{\widehat{\pi}^\star_c}_{\widehat{\mathcal{M}}(c)}(s_0)] 
    \leq
    0
 .
\end{equation*}
Since by Lemma~\ref{lemma: good events probs KCFD} we have that $G_1$ and $G_2$ hold with probability at least $1-{\delta}/{2}$,
the theorem follows by summing the above three inequalities.
\end{proof}

\subsubsection{Analysis for \texorpdfstring{$\ell_1$}{Lg} loss}\label{subsubsec:KCFD-l-1}

Let $\alpha_1 := \max_{(s,a) \in S \times A} \alpha_1(\mathcal{F}^R_{s,a})$. 
The following lemma shows that under the good events $G_1$ and $G_2$, the value of any context-dependent policy with respect to the approximated model $\widehat{\mathcal{M}}$ is similar to that with respect to the true model $\mathcal{M}$, in expectation over the context.

\begin{lemma}\label{lemma: vall-diff l_1 KCFD}
Assume the events $G_1$ and $G_2$ hold.
Then for any context-dependent policy ${\pi =(\pi_c: S \to \Delta(A))_{c\in \mathcal{C}}}$
it holds that 
\begin{equation*}
    \mathbb{E}_{c\sim \mathcal{D}}
    [|V^{\pi_c}_{\mathcal{M}(c)}(s_0) - V^{\pi_c}_{\widehat{\mathcal{M}}(c)}(s_0)|] \leq 
    \frac{1}{2}\epsilon + \alpha_1 H.
\end{equation*}

\end{lemma}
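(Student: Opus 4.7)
The plan is to mirror the $\ell_2$ proof in Lemma~\ref{lemma: vall-diff l_2 KCFD}, but with the $\ell_1$ version of the good event $G_2$ in place, which actually makes the argument cleaner since no Jensen-style conversion between $\mathbb{E}[(f-r)^2]$ and $\mathbb{E}[|f-r|]$ is needed. First I would expand both value functions using the occupancy measure of $\pi_c$ under the (shared) known dynamics $P$, so that
$$V^{\pi_c}_{\mathcal{M}(c)}(s_0) - V^{\pi_c}_{\widehat{\mathcal{M}}(c)}(s_0) = \sum_{h=0}^{H-1} \sum_{s \in S_h} q_h(s \mid \pi_c, P) \sum_{a \in A} \pi_c(a \mid s) \bigl(r^c(s,a) - f_{s,a}(c)\bigr),$$
apply the triangle inequality inside the expectation over $c \sim \mathcal{D}$, and then separate out the agnostic error by writing $|r^c(s,a) - f_{s,a}(c)| = (|r^c(s,a) - f_{s,a}(c)| - \alpha_1) + \alpha_1$; the $\alpha_1$ term accumulates to exactly $\alpha_1 H$ after summing over $h$ (using that $\sum_s q_h(s\mid\pi_c,P)\sum_a \pi_c(a\mid s) = 1$).

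Next I would split the remaining sum over $s \in S_h$ into the same three reachability regimes used in the $\ell_2$ proof: $p_s < \epsilon/(6|S|)$, $p_s \in [\epsilon/(6|S|), 1/|S|]$, and $p_s > 1/|S|$. For states in the first regime, $f_{s,a}$ is set to $0$ and the reward difference is trivially at most $1$; together with $q_h(s\mid\pi_c,P) \leq p_s < \epsilon/(6|S|)$ and the fact that $\sum_h \sum_{s \in S_h} 1 = |S|$ and $\sum_a \pi_c(a\mid s) = 1$, this contributes at most $\epsilon/6$ to the total. For the two remaining regimes I use $q_h(s \mid \pi_c, P) \le p_s$ together with event $G_2$ in its $\ell_1$ form, which directly gives $\mathbb{E}_{c \sim \mathcal{D}}[|r^c(s,a) - f_{s,a}(c)| - \alpha_1] \leq \epsilon_\star^1(p_s)$, avoiding the square-root manipulation that was needed in the $\ell_2$ case.

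Finally, plugging in the definition of $\epsilon_\star^1$: for $p_s > 1/|S|$ the factor $p_s \cdot \epsilon_\star^1(p_s) = p_s \cdot \epsilon/(6H|S||A|) \le \epsilon/(6H|S||A|)$, and summing over $h \in [H-1]$, $a \in A$, and all $|S|$ states yields a contribution of at most $\epsilon/6$; for $p_s \in [\epsilon/(6|S|), 1/|S|]$ the product telescopes as $p_s \cdot \epsilon/(6 p_s |S||A|) = \epsilon/(6|S||A|)$, and the same summing again gives at most $\epsilon/6$. Adding the three pieces ($\epsilon/6 + \epsilon/6 + \epsilon/6 = \epsilon/2$) plus the $\alpha_1 H$ extracted earlier delivers the claim.

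I do not expect any real obstacle; the argument is essentially the $\ell_2$ derivation with Jensen's inequality bypassed, because the $\ell_1$ version of $G_2$ is already an absolute-error guarantee. The one place to be slightly careful is in the first regime, where one must use that $f_{s,a}$ is explicitly set to $0$ for non-$\beta$-reachable states (so the pointwise bound $|r^c - f_{s,a}| \leq 1$ holds uniformly and the $-\alpha_1$ correction can be absorbed), ensuring the split into three regimes is valid before invoking $G_2$ on the $\beta$-reachable states only.
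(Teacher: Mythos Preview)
Your proposal is correct and follows essentially the same approach as the paper's own proof: both expand the value functions via occupancy measures, extract the $\alpha_1 H$ term, split states into the three reachability regimes, use the trivial bound $|r^c - f_{s,a}| \le 1$ on the non-$\beta$-reachable states, and invoke the $\ell_1$ form of $G_2$ (without Jensen) on the $\beta$-reachable ones to collect the three $\epsilon/6$ contributions.
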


\begin{proof}
Recall that for every context $c \in \mathcal{C}$, state $ s\in S$ and action $ a\in A$, the expected reward is $r^c(s,a) \in  [0,1]$.
By construction of $\widehat{\mathcal{M}}(c)$ , for any $s\in S$ which are not $\frac{\epsilon}{6|S|}$-reachable, we set $f_{s,a}(c)=0$ for any action $a$. Hence,
\begin{equation*}
    |r^c(s,a) - f_{s,a}(c)| \leq 1.
\end{equation*}

Since the good event $G_2$ holds, for every state-action pair $(s,a)$ such that $s$ is $\frac{\epsilon}{6|S|}$-reachable it holds that
    \begin{equation}
        \epsilon^1_\star(p_s) 
        +
        \alpha_1
        \geq
        \epsilon^1_\star(p_s) 
        +
        \alpha_1(\mathcal{F}^R_{s,a})
        \geq
        \mathbb{E}_{c \sim \mathcal{D}}
        [|f_{s,a}(c) - r^c(s,a)|].
    \end{equation}
The above implies that   
    \begin{equation}\label{ineq: case 1 G2 l_1}
        \epsilon^1_\star(p_s) 
        \geq
        \mathbb{E}_{c \sim \mathcal{D}}
        [|f_{s,a}(c) - r^c(s,a)| -\alpha_1].
    \end{equation}
    
Fix any context-dependent policy $\pi =(\pi_c: S \to \Delta(A))_{c\in \mathcal{C}}$. 
%
By definition of the value function we have for any fixed context $c$:
    \begin{align*}
        V^{\pi_c}_{\mathcal{M}(c)}(s_0)
        =
        \mathbb{E}_{\pi_c, \mathcal{M}(c)} 
        \Big[
        \sum_{h=0}^{H-1} r^c(s_h, a_h)|s_0 = s_0 \Big]
        = 
        \sum_{h=0}^{H-1}\sum_{s\in S_h}q_h(s| \pi_c, P) 
        \sum_{a\in A}\pi_c(a|s)r^c(s,a),
    \end{align*}
    and
    \begin{align*}
        V^{\pi_c}_{\widehat{\mathcal{M}}(c)}(s_0)
        = 
        \mathbb{E}_{\pi_c, \widehat{\mathcal{M}}(c)} 
        \Big[
            \sum_{h=0}^{H-1} \widehat{r}^c(s_h, a_h)|s_0 \Big]
        = 
         \sum_{h=0}^{H-1}\sum_{s\in S_h} q_h(s| \pi_c, P)
         \sum_{a\in A}\pi(a|s) f_{s,a}(c).
    \end{align*}


 By inequality~\ref{ineq: case 1 G2 l_1}, linearity of expectation and triangle inequality we derive the following.
 \begingroup
 \allowdisplaybreaks
     \begin{align*}
        &\mathbb{E}_{c \sim \mathcal{D}}
        \left[
            \left|
                V^{\pi_c}_{\mathcal{M}(c)}(s_0) 
                - 
                V^{\pi_c}_{\widehat{\mathcal{M}}(c)}(s_0)
            \right|
        \right]
        \\
        = &
        \mathbb{E}_{c \sim \mathcal{D}}
        \left[
            \left|
                \sum_{h=0}^{H-1}\sum_{s\in S_h} q_h(s| \pi_c, P) 
                \sum_{a\in A}\pi_c(a|s) r^c(s,a)
                - 
                \sum_{h=0}^{H-1} \sum_{s\in S_h} q_h(s| \pi_c, P) 
                \sum_{a\in A}\pi_c(a|s) f_{s,a}(c)
            \right|
        \right] \\
        \leq &
        \mathbb{E}_{c \sim \mathcal{D}}
        \left[   
            \sum_{h=0}^{H-1}
            \sum_{s\in S_h} q_h(s| \pi_c, P)
            \sum_{a\in A} \pi_c(a|s) |r^c(s,a)-f_{s,a}(c)|
        \right] \\
        = &
        \mathbb{E}_{c \sim \mathcal{D}}
        \left[   
            \sum_{h=0}^{H-1}
            \sum_{s\in S_h} q_h(s| \pi_c, P)
            \sum_{a\in A} \pi_c(a|s) (|r^c(s,a)-f_{s,a}(c)|- \alpha_1 + \alpha_1)
        \right] \\
        \\
        = &
        \alpha_1 H
        +
        \mathbb{E}_{c \sim \mathcal{D}}
        \left[   
            \sum_{h=0}^{H-1}
            \sum_{s\in S_h} q_h(s| \pi_c, P)
            \sum_{a\in A} \pi_c(a|s) (|r^c(s,a)-f_{s,a}(c)|- \alpha_1)
        \right] \\
        = &
        \alpha_1 H
        +
        \mathbb{E}_{c \sim \mathcal{D}}
        \left[
            \sum_{h=0}^{H-1}\sum_{a\in A}
            \sum_{s \in S_h : p_{s} < \frac{\epsilon}{6|S|}}
            q_h(s| \pi_c, P) \pi_c(a|s) 
       (|r^c(s,a)-f_{s,a}(c)|- \alpha_1)
        \right]\\
        & + 
        \mathbb{E}_{c \sim \mathcal{D}}
        \left[
            \sum_{h=0}^{H-1} \sum_{a\in A} 
            \sum_{s \in S_h : p_{s} > \frac{1}{|S|}}
            q_h(s| \pi_c, P) \pi_c(a|s) 
        (|r^c(s,a)-f_{s,a}(c)|- \alpha_1)
        \right]\\
        & +
        \mathbb{E}_{c \sim \mathcal{D}}
        \left[
            \sum_{h=0}^{H-1} \sum_{a\in A} 
            \sum_{s \in S_h : p_{s} \in [\frac{\epsilon}{6|S|}, \frac{1}{|S|}]}
            q_h(s|\pi_c, P) \pi_c(a|s)
            (|r^c(s,a)-f_{s,a}(c)|- \alpha_1)
        \right]\\
        \leq &
        \alpha_1 H
        +
        \mathbb{E}_{c \sim \mathcal{D}}
        \left[
            \sum_{h=0}^{H-1}\sum_{a\in A}
            \pi_c(a|s) 
            \sum_{s \in S_h : p_{s} < \frac{\epsilon}{6|S|}}
            q_h(s| \pi_c, P) 
        \cdot 1
        \right]\\
        & + 
        \mathbb{E}_{c \sim \mathcal{D}}
        \left[
            \sum_{h=0}^{H-1} \sum_{a\in A} 
            \pi_c(a|s)
            \sum_{s \in S_h : p_{s} > \frac{1}{|S|}}
            p_s
            (|r^c(s,a)-f_{s,a}(c)|- \alpha_1)
        \right]\\
        & +
        \mathbb{E}_{c \sim \mathcal{D}}
        \left[
            \sum_{h=0}^{H-1} \sum_{a\in A} 
            \pi_c(a|s)
            \sum_{s \in S_h : p_{s} \in [\frac{\epsilon}{6|S|}, \frac{1}{|S|}]}
            p_s
            (|r^c(s,a)-f_{s,a}(c)|- \alpha_1)
        \right]\\
        \leq & 
        \alpha_1 H
        +        
        \frac{\epsilon}{6}
        \\
        & + 
        \mathbb{E}_{c \sim \mathcal{D}}
        \left[
            \sum_{h=0}^{H-1} 
            \sum_{a\in A} 
            \sum_{s \in S_h : p_{s} > \frac{1}{|S|}}
            p_s
            (|r^c(s,a)-f_{s,a}(c)|- \alpha_1)
        \right]\\
        & +
        \mathbb{E}_{c \sim \mathcal{D}}
        \left[
            \sum_{h=0}^{H-1} \sum_{a\in A} 
            \sum_{s \in S_h : p_{s} \in [\frac{\epsilon}{6|S|}, \frac{1}{|S|}]}
            p_s
            (|r^c(s,a)-f_{s,a}(c)|- \alpha_1)
        \right]\\
        = &
        \alpha_1 H
        +        
        \frac{\epsilon}{6}
        \\
        & + 
            \sum_{h=0}^{H-1} 
            \sum_{a\in A} 
            \sum_{s \in S_h : p_{s} > \frac{1}{|S|}}
            p_s
            \mathbb{E}_{c \sim \mathcal{D}}
            \left[    
                (|r^c(s,a)-f_{s,a}(c)|- \alpha_1)
            \right]\\
        & +
            \sum_{h=0}^{H-1} \sum_{a\in A} 
            \sum_{s \in S_h : p_{s} \in [\frac{\epsilon}{6|S|}, \frac{1}{|S|}]}
            p_s
            \mathbb{E}_{c \sim \mathcal{D}}
            \left[    
                (|r^c(s,a)-f_{s,a}(c)|- \alpha_1)
            \right]\\
        \underbrace{\leq}_{\text{By ineq~\ref{ineq: case 1 G2 l_1}}}&
        \alpha_1 H
        +
        \frac{\epsilon}{6}
        + 
            \sum_{h=0}^{H-1} 
            \sum_{a\in A} 
            \sum_{s \in S_h : p_{s} > \frac{1}{|S|}}
            p_s
            \epsilon^1_\star(p_s)
        +
            \sum_{h=0}^{H-1} \sum_{a\in A} 
            \sum_{s \in S_h : p_{s} \in [\frac{\epsilon}{6|S|}, \frac{1}{|S|}]}
            p_s
            \epsilon^1_\star(p_s)
        \\   
        = &
        \alpha_1 H
        +
        \frac{\epsilon}{6}
        + 
            \sum_{h=0}^{H-1} 
            \sum_{a\in A} 
            \sum_{s \in S_h : p_{s} > \frac{1}{|S|}}
            p_s
            \frac{\epsilon}{6H|S||A|}
        +
            \sum_{h=0}^{H-1} \sum_{a\in A} 
            \sum_{s \in S_h : p_{s} \in [\frac{\epsilon}{6|S|}, \frac{1}{|S|}]}
            p_s
            \frac{\epsilon}{6p_s|S||A|} 
        \\
        = &        
        \alpha_1 H
        +
        \frac{\epsilon}{2}. 
    \end{align*}
    \endgroup

\end{proof}

\begin{theorem}\label{thm: PAC optimal policy for Known CFD with l_1}
With probability at least $1-\delta$ it holds that
\[
        \mathbb{E}_{c \sim \mathcal{D}}[V^{\pi^\star_c}_{\mathcal{M}(c)}(s_0) - V^{\widehat{\pi}^\star_c}_{\mathcal{M}(c)}(s_0)] \leq 
        \epsilon +2\alpha_1 H,
\]
where
$\pi^\star =(\pi^\star_c)_{c \in \mathcal{C}}$ is the optimal policy for the true model $\mathcal{M}(c)$, and
 $\widehat{\pi}^\star = (\widehat{\pi}^\star_c)_{c\in \mathcal{C}}$ is the optimal policy for $\widehat{\mathcal{M}}(c)$.
\end{theorem}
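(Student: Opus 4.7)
The plan is to mirror the proof of Theorem~\ref{thm: PAC optimal policy for Known CFD with l_2} almost verbatim, substituting the $\ell_1$ value-difference bound (Lemma~\ref{lemma: vall-diff l_1 KCFD}) for its $\ell_2$ counterpart. The whole argument reduces to a ``performance-difference triangle'' on $\mathcal{M}(c)$ and $\widehat{\mathcal{M}}(c)$: move from $\pi^\star_c$ on $\mathcal{M}(c)$ to $\pi^\star_c$ on $\widehat{\mathcal{M}}(c)$, then to $\widehat{\pi}^\star_c$ on $\widehat{\mathcal{M}}(c)$, then back to $\widehat{\pi}^\star_c$ on $\mathcal{M}(c)$.

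First I would condition on the joint good event $G_1\cap G_2$, which holds with probability at least $1-\delta/2\geq 1-\delta$ by Lemma~\ref{lemma: good events probs KCFD}. Under this event, Lemma~\ref{lemma: vall-diff l_1 KCFD} applies to \emph{every} context-dependent policy; in particular, applying it to the context-dependent policy $\pi^\star=(\pi^\star_c)_{c\in\mathcal{C}}$ and dropping the absolute value on one side of Jensen's inequality gives
\[
    \mathbb{E}_{c\sim\mathcal{D}}\!\left[V^{\pi^\star_c}_{\mathcal{M}(c)}(s_0)\right]-\mathbb{E}_{c\sim\mathcal{D}}\!\left[V^{\pi^\star_c}_{\widehat{\mathcal{M}}(c)}(s_0)\right]\;\leq\;\tfrac{1}{2}\epsilon+\alpha_1 H,
\]
and applied to $\widehat{\pi}^\star=(\widehat{\pi}^\star_c)_{c\in\mathcal{C}}$ it yields
\[
    \mathbb{E}_{c\sim\mathcal{D}}\!\left[V^{\widehat{\pi}^\star_c}_{\widehat{\mathcal{M}}(c)}(s_0)\right]-\mathbb{E}_{c\sim\mathcal{D}}\!\left[V^{\widehat{\pi}^\star_c}_{\mathcal{M}(c)}(s_0)\right]\;\leq\;\tfrac{1}{2}\epsilon+\alpha_1 H.
\]

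Next I would invoke the pointwise optimality of $\widehat{\pi}^\star_c$ in $\widehat{\mathcal{M}}(c)$, i.e., $V^{\widehat{\pi}^\star_c}_{\widehat{\mathcal{M}}(c)}(s_0)\geq V^{\pi^\star_c}_{\widehat{\mathcal{M}}(c)}(s_0)$ for every $c\in\mathcal{C}$, so taking expectation over $c\sim\mathcal{D}$ gives
\[
    \mathbb{E}_{c\sim\mathcal{D}}\!\left[V^{\pi^\star_c}_{\widehat{\mathcal{M}}(c)}(s_0)\right]-\mathbb{E}_{c\sim\mathcal{D}}\!\left[V^{\widehat{\pi}^\star_c}_{\widehat{\mathcal{M}}(c)}(s_0)\right]\;\leq\;0.
\]
Adding the three displayed inequalities causes the $\widehat{\mathcal{M}}$ terms to telescope, producing exactly $\epsilon+2\alpha_1 H$ on the right-hand side, which is the claim.

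There is no real obstacle: all the work is packaged inside Lemma~\ref{lemma: vall-diff l_1 KCFD} and the good-events bound of Lemma~\ref{lemma: good events probs KCFD}. The only minor care point is that Lemma~\ref{lemma: vall-diff l_1 KCFD} bounds the expected \emph{absolute} value difference, so to convert it into a signed inequality one simply uses $\mathbb{E}[X]-\mathbb{E}[Y]\leq \mathbb{E}[|X-Y|]$ before and after swapping the roles of $\pi^\star_c$ and $\widehat{\pi}^\star_c$. The union bound over $G_1$ and $G_2$ then guarantees the whole chain holds simultaneously with probability at least $1-\delta$.
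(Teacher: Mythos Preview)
Your proposal is correct and follows essentially the same approach as the paper's own proof: condition on $G_1\cap G_2$ via Lemma~\ref{lemma: good events probs KCFD}, apply Lemma~\ref{lemma: vall-diff l_1 KCFD} to both $\pi^\star$ and $\widehat{\pi}^\star$, use the pointwise optimality of $\widehat{\pi}^\star_c$ in $\widehat{\mathcal{M}}(c)$, and sum the three resulting inequalities. The paper's argument is identical in structure and detail.
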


\begin{proof}
Assume the good events $G_1$ and $G_2$ hold. 
By Lemma~\ref{lemma: vall-diff l_1 KCFD} we have for $\pi^\star =(\pi^\star_c)_{c \in \mathcal{C}}$,that
\begin{equation*}
    \left|\mathbb{E}_{c \sim \mathcal{D}}
    [
        V^{\pi^\star_c}_{\mathcal{M}(c)}(s_0) 
        -  V^{\pi^\star_c}_{\widehat{\mathcal{M}}(c)}(s_0)
    ]\right|
    \leq
    \mathbb{E}_{c \sim \mathcal{D}}[|V^{\pi^\star_c}_{\mathcal{M}(c)}(s_0) -  V^{\pi^\star_c}_{\widehat{\mathcal{M}}(c)}(s_0)|]
    \leq
     \frac{1}{2}\epsilon + \alpha_1 H,
\end{equation*}
yielding,
\begin{equation*}
    \mathbb{E}_{c \sim \mathcal{D}}
    [
        V^{\pi^\star_c}_{\mathcal{M}(c)}(s_0) 
    ]
    -
    \mathbb{E}_{c \sim \mathcal{D}}
    [
    V^{\pi^\star_c}_{\widehat{\mathcal{M}}(c)}(s_0)
    ]
    \leq
    \frac{1}{2}\epsilon + \alpha_1 H.
\end{equation*}

Similarly, we have for $\widehat{\pi}^\star = (\widehat{\pi}^\star_c)_{c\in \mathcal{C}}$ that
\begin{equation*}
    \mathbb{E}_{c \sim \mathcal{D}}[ V^{\widehat{\pi}^\star_c}_{\widehat{\mathcal{M}}(c)}(s_0)]
    - 
    \mathbb{E}_{c \sim \mathcal{D}}[
    V^{\widehat{\pi}^\star_c}_{\mathcal{M}(c)}(s_0) ] \leq \frac{1}{2}\epsilon + \alpha_1 H.
\end{equation*}
Also, for all $c \in \mathcal{C}$, since $\widehat{\pi}^\star_c$ is the optimal policy for $\widehat{\mathcal{M}}(c)$ we have $V^{\widehat{\pi}^\star_c}_{\widehat{\mathcal{M}}(c)}(s_0) \geq V^{\pi^\star_c}_{\widehat{\mathcal{M}}(c)}(s_0)$, which implies that
\begin{equation*}
    \mathbb{E}_{c \sim \mathcal{D}}[V^{\pi^\star_c}_{\widehat{\mathcal{M}}(c)}(s_0)]
    -
    \mathbb{E}_{c \sim \mathcal{D}}[V^{\widehat{\pi}^\star_c}_{\widehat{\mathcal{M}}(c)}(s_0)] 
    \leq
    0
 .
\end{equation*}


Since by Lemma~\ref{lemma: good events probs KCFD} we have that $G_1$ and $G_2$ hold with probability at least $1-\delta/2$,
the theorem follows by summing the above three inequalities.
\end{proof}

\subsection{Sample Complexity Bounds}\label{sec: sampel complexity KCFD}

Given standard sample complexity bounds for learning a function class using ERM, we can bound the required sample complexity of our Algorithm EXPLORE-KCFD. The following theorems state the sample complexity bounds.

\begin{theorem}[Adaption of Theorem 19.2 in~\cite{Bartlett1999NeuralNetsBook}]\label{thm: pseudo dim}
    Let $\mathcal{F}$ be a hypothesis space of real valued functions with a finite pseudo dimension, denoted $Pdim(\mathcal{F}) < \infty$. Then, $\mathcal{F}$ has a uniform convergence with 
    \[
        m(\epsilon, \delta) = O \Big( \frac{1}{\epsilon^2}( Pdim(\mathcal{F}) \ln \frac{1}{\epsilon} + \ln \frac{1}{\delta})\Big).
    \]    
\end{theorem}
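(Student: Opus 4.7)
The plan is to follow the standard three-step route used in the proof of Theorem~19.2 in \cite{Bartlett1999NeuralNetsBook}, since the present statement is explicitly an adaptation of it; no new ideas beyond classical VC-style analysis of real-valued function classes are needed, so the task reduces to importing the original proof with the $[0,1]$-valued setting substituted in.

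First, the goal is to bound $\mathbb{P}[\sup_{f\in\mathcal{F}}|\widehat{E}_m[f]-\mathbb{E}[f]|>\epsilon]$, where $\widehat{E}_m[f]$ denotes the empirical mean on an i.i.d.\ sample of size $m$. Applying the permutation-symmetrization lemma (Lemma~17.3 of \cite{Bartlett1999NeuralNetsBook}) reduces this to a ghost-sample statement, which in turn is controlled by the $\ell^\infty$-covering number of the restriction of $\mathcal{F}$ to the double sample. Second, I would invoke the structural bound that ties covering numbers to the pseudo-dimension (Theorem~18.4 in \cite{Bartlett1999NeuralNetsBook}): for any sample of size $2m$ and any $\eta>0$,
\[
\mathcal{N}_\infty(\eta,\mathcal{F},2m)\;\leq\;e(d+1)\left(\frac{2em}{\eta\,d}\right)^{d},
\]
where $d=Pdim(\mathcal{F})$. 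Combining this with a union bound over an $(\epsilon/16)$-cover and Hoeffding's inequality applied to each cover element yields
\[
\mathbb{P}\Big[\sup_{f\in\mathcal{F}}|\widehat{E}_m[f]-\mathbb{E}[f]|>\epsilon\Big]\;\leq\;C\,(d+1)\left(\frac{m}{\epsilon\,d}\right)^{d}\exp(-c\,m\,\epsilon^{2})
\]
for absolute constants $c,C>0$.

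Finally, one sets the right-hand side equal to $\delta$ and solves for $m$. The only technical step is inverting $m^{d}\exp(-c\,m\,\epsilon^{2})\leq \delta$; this is handled by the standard trick of isolating the $d\ln m$ term and substituting the self-consistent estimate $\ln m = O(\ln(d/(\epsilon^{2}\delta)))$, after which the $\ln\ln$-terms are absorbed into the $O(\cdot)$. This produces the claimed $m(\epsilon,\delta)=O\!\left(\epsilon^{-2}(d\ln(1/\epsilon)+\ln(1/\delta))\right)$. The main obstacle is purely bookkeeping around the constants in this inversion and the choice of cover scale; since the statement is flagged as an adaptation of a published theorem, the proof is ultimately a direct invocation of the Anthony--Bartlett machinery rather than an independent argument.
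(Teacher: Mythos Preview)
The paper does not prove this statement at all: it is stated as an imported result (an adaptation of Theorem~19.2 in \cite{Bartlett1999NeuralNetsBook}) and used as a black box in the sample-complexity sections, with no proof or proof sketch provided anywhere in the manuscript. Your proposal, by contrast, correctly outlines the standard Anthony--Bartlett argument (symmetrization, the covering-number bound in terms of pseudo-dimension, and the inversion step), which is exactly the route taken in the cited textbook. So your sketch is more than what the paper itself supplies; there is nothing further to compare against.
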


\begin{theorem}[Adaption of Theorem 19.1 in~\cite{Bartlett1999NeuralNetsBook}]\label{thm: fat dim}
    Let $\mathcal{F}$ be a hypothesis space of real valued functions with a finite fat-shattering dimension, denoted $fat_{\mathcal{F}}$. Then, $\mathcal{F}$ has a uniform convergence with 
    \[
        m(\epsilon, \delta) = O \Big( \frac{1}{\epsilon^2}( fat_{\mathcal{F}}(\epsilon/256)  \ln^2 \frac{1}{\epsilon} + \ln \frac{1}{\delta})\Big).
    \]    
\end{theorem}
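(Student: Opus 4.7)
The plan is to reduce the statement to a uniform convergence bound: if, with probability at least $1-\delta$, every $f\in\mathcal{F}$ has $|\hat{\mathbb{E}}[\ell(f(x),y)] - \mathbb{E}[\ell(f(x),y)]| \le \epsilon/2$ on a sample of size $m(\epsilon,\delta)$, then the ERM output satisfies $\mathbb{E}[\ell(\hat f(x),y)] \le \inf_{f\in\mathcal{F}}\mathbb{E}[\ell(f(x),y)] + \epsilon$ by the standard two-step argument (empirical minimum is close to true minimum, and true loss of $\hat f$ is close to its empirical loss). So the whole task is to establish uniform convergence at rate $\epsilon$ with confidence $\delta$, for the induced loss class.

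First I would run the classical symmetrization argument: introduce an independent ghost sample of size $m$ and bound
\[
  \Pr\!\left[\sup_{f\in\mathcal{F}} |\hat{\mathbb{E}}_m[\ell(f)] - \mathbb{E}[\ell(f)]| > \epsilon/2\right]
  \;\le\; 2\,\Pr\!\left[\sup_{f\in\mathcal{F}} |\hat{\mathbb{E}}_m[\ell(f)] - \hat{\mathbb{E}}'_m[\ell(f)]| > \epsilon/4\right],
\]
provided $m \gtrsim 1/\epsilon^2$. Conditioning on the combined $2m$-sample, the deviation probability is then controlled by a random swap (Rademacher) argument: for any fixed configuration of $2m$ points, applying a uniformly random $\pm 1$ sign pattern and using Hoeffding's inequality gives an exponential tail in $m\epsilon^2$ for each single function. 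To take a supremum over $\mathcal{F}$ I would discretize: restrict to a minimum $\epsilon/8$-cover of $\mathcal{F}$ on the realized $2m$ sample in the $\ell_\infty$ pseudometric, and take a union bound over the cover.

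The main obstacle is bounding the covering number of $\mathcal{F}$ restricted to a $2m$-sample in terms of the fat-shattering dimension. Here I would invoke the Alon--Ben-David--Cesa-Bianchi--Haussler style theorem, which states that for any $\gamma>0$ and any sample $S$ of size $n$,
\[
  \mathcal{N}_\infty(\gamma,\mathcal{F},S) \;\le\; \left(\tfrac{2n}{\gamma}\right)^{O(fat_{\mathcal{F}}(\gamma/c))},
\]
for an absolute constant $c$ (the usual choice being $c=4$, but the precise scale factor of $256$ comes out of tightening this bound for the loss-composed class and the symmetrization loss of $\epsilon/4$). Plugging $\gamma=\epsilon/8$ and combining with the exponential Hoeffding tail, the union bound requires
\[
  fat_{\mathcal{F}}(\epsilon/256)\,\log^2(1/\epsilon) \;+\; \log(1/\delta) \;=\; O(m\epsilon^2),
\]
which rearranges to the claimed sample complexity.

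Finally, I would verify that the loss-composed class $\{(x,y)\mapsto \ell(f(x),y): f\in\mathcal{F}\}$ inherits fat-shattering behavior from $\mathcal{F}$ (trivial for $\ell_1$ and $\ell_2$ since Lipschitzness only rescales the scale parameter by an absolute constant, which is absorbed into the $1/256$ scale inside $fat_{\mathcal{F}}$), so that the covering bound above indeed applies to the class whose empirical means we care about. The rest is routine bookkeeping to translate the deviation bound into the ERM guarantee stated in the theorem.
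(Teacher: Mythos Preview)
The paper does not prove this theorem; it is stated (several times, in fact) purely as a citation---an adaptation of Theorem~19.1 from Anthony and Bartlett's textbook---and is used as a black box to plug sample-complexity numbers into the downstream corollaries. There is therefore no ``paper's proof'' to compare your proposal against.

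That said, your sketch is the standard route to this kind of result and tracks the textbook argument: symmetrization to a ghost sample, discretization via an $\ell_\infty$ cover on the doubled sample, Hoeffding plus union bound over the cover, and the Alon--Ben-David--Cesa-Bianchi--Haussler covering bound in terms of $fat_{\mathcal F}$. One small caution: the precise covering-number bound one needs here is the quantitative version giving $\log \mathcal N_\infty(\gamma,\mathcal F,S) = O\bigl(fat_{\mathcal F}(\gamma/4)\log^2(n/\gamma)\bigr)$ (not merely an exponent of $O(fat_{\mathcal F})$ as you wrote), and that extra $\log^2$ factor is exactly what produces the $\ln^2(1/\epsilon)$ in the final bound. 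Your Lipschitz remark for passing from $\mathcal F$ to the loss class is fine for $\ell_1$ and $\ell_2$ on $[0,1]$-valued targets. With that correction your outline would fill in to a complete proof, but for the purposes of this paper the theorem is simply imported.
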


\subsubsection{Sample bounds for the \texorpdfstring{$\ell_2$}{Lg} loss}\label{subsubsec:sample-complexity-l-2-KCFD}

We prove sample complexity bound of our algorithm for function classes with finite Pseudo dimension when using $\ell_2$ loss.
\begin{corollary}\label{corl: sample complexity KCFD l_2 pseudo}
    Assume that for every $(s,a) \in S \times A$ we have that $Pdim(\mathcal{F}^R_{s,a}) < \infty$.
    Let $Pdim = \max_{(s,a) \in S \times A} Pdim(\mathcal{F}^R_{s,a})$.
    Then, after collecting 
    \[
        O \Big( \frac{|S|^2|A|}{\epsilon} \ln{\frac{|S||A|}{\delta}} +
        \frac{H^4 |S|^6|A|^5 \; (Pdim \ln{\frac{H^2 |S|^2 |A|^2}{\epsilon^2}} + \ln {\frac{|S||A|}{\delta}})}{\epsilon^4}  \Big)
    \] 
    trajectories, with probability at least $1-\delta$ it holds that
    \[
        \mathbb{E}_{c \sim \mathcal{D}}[V^{\pi^\star_c}_{\mathcal{M}(c)}(s_0) - V^{\widehat{\pi}^\star_c}_{\mathcal{M}(c)}(s_0)] \leq 
        \epsilon +2\alpha_2 H.
    \]
\end{corollary}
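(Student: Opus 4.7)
The plan is to invoke Theorem~\ref{thm: PAC optimal policy for Known CFD with l_2} as a black box and spend all the work on translating the abstract sample-complexity function $N_R(\mathcal{F}^R_{s,a},\cdot,\cdot)$ into an explicit bound via the pseudo-dimension. Theorem~\ref{thm: PAC optimal policy for Known CFD with l_2} already yields the desired expected value gap $\epsilon + 2\alpha_2 H$ with probability at least $1-\delta$ whenever $G_1 \cap G_2$ holds, which by Lemma~\ref{lemma: good events probs KCFD} occurs with probability at least $1-\delta/2$ provided Algorithm~\ref{alg: EXPLOR-KCFD} runs to completion, i.e.\ it collects the prescribed $T_{s,a}$ trajectories for every $\beta$-reachable pair $(s,a)$. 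Hence the corollary reduces to bounding the total sample budget $\sum_{(s,a):\, p_s \geq \beta} T_{s,a}$ after substituting the pseudo-dimension sample complexity.

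Applying Theorem~\ref{thm: pseudo dim} at scale $\eta = \epsilon^2_\star(p_s)$ gives
\[
    N_R(\mathcal{F}^R_{s,a},\epsilon^2_\star(p_s),\delta_1) = O\!\left(\frac{Pdim\cdot \ln(1/\epsilon^2_\star(p_s)) + \ln(1/\delta_1)}{(\epsilon^2_\star(p_s))^2}\right),
\]
which I would then substitute into $T_{s,a} = \lceil 2p_s^{-1}(\ln(1/\delta_1) + N_R(\mathcal{F}^R_{s,a},\epsilon^2_\star(p_s),\delta_1))\rceil$. Using $\beta = \epsilon/(6|S|)$ and $\delta_1 = \delta/(4|S||A|)$, the sum over $(s,a)$ splits naturally into the $\ln(1/\delta_1)/p_s$ contribution and the $N_R/p_s$ contribution, and the latter is further split according to the three regimes of $\epsilon^2_\star$ from its definition.

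The log-confidence piece bounds as $\sum_{(s,a)} 2p_s^{-1}\ln(1/\delta_1) \leq |S||A|\cdot 12|S|\epsilon^{-1}\ln(4|S||A|/\delta) = O(|S|^2|A|\epsilon^{-1}\log(|S||A|/\delta))$, producing the first additive term in the corollary. For the dominant regime $p_s > 1/|S|$, $\epsilon^2_\star(p_s) = \epsilon^2/(36H^2|S|^2|A|^2)$, so $N_R = O(H^4|S|^4|A|^4\epsilon^{-4}(Pdim\log(H^2|S|^2|A|^2/\epsilon^2) + \log(|S||A|/\delta)))$, while $p_s^{-1} \leq |S|$; summing over at most $|S||A|$ such pairs yields the stated dominant term $O(H^4|S|^6|A|^5\epsilon^{-4}(\cdots))$. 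In the intermediate regime $p_s \in [\beta,1/|S|]$, the $p_s$-dependence in $\epsilon^2_\star$ makes $N_R/p_s = O(p_s^3|S|^4|A|^4\epsilon^{-4}(\cdots))$, and using $p_s \leq 1/|S|$ this is absorbed by the dominant bound. Summing the three contributions and noting that states with $p_s < \beta$ are not sampled at all yields the stated total trajectory count.

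The only real obstacle is careful bookkeeping across the three regimes of $\epsilon^2_\star$: verifying that $\ln(1/\epsilon^2_\star(p_s))$ collapses cleanly to $\log(H^2|S|^2|A|^2/\epsilon^2)$ in the dominant regime, and that the intermediate-$p_s$ regime never inflates a polynomial factor. No new probabilistic argument is needed beyond chaining Theorems~\ref{thm: PAC optimal policy for Known CFD with l_2} and~\ref{thm: pseudo dim}.
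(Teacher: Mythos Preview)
Your proposal is correct and mirrors the paper's proof essentially step for step: invoke Theorem~\ref{thm: PAC optimal policy for Known CFD with l_2} as a black box, plug in the pseudo-dimension bound from Theorem~\ref{thm: pseudo dim} for $N_R$, and sum $T_{s,a}$ over the three regimes of $\epsilon^2_\star(p_s)$, using $p_s^{-1}\leq 6|S|/\epsilon$ for the log-confidence term and $p_s\leq 1/|S|$ (hence $p_s^3\leq 1/|S|^3$) in the intermediate regime to show it is dominated by the $p_s>1/|S|$ contribution. The only cosmetic difference is that the paper carries the $\ln(1/\delta_1)$ term inside the regime-by-regime sums rather than peeling it off first, but the bookkeeping and final bound are identical.
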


\begin{proof}
    Recall that for each state-action pair $(s,a)$ such that $s$ is $\frac{\epsilon}{6 |S|}$-reachable, we run for $T_{s,a} =   \lceil \frac{2}{p_s}(\ln(\frac{1}{\delta_1})+N_R(\mathcal{F}^R_{s,a} ,\epsilon_\star^2(p_s), \delta_1)) \rceil$ episodes.
    By Theorem~\ref{thm: PAC optimal policy for Known CFD with l_2}, for
    $
        \sum_{h =0}^{H-1} \sum_{s \in S_h : p_s \geq \epsilon/6 |S|} \sum_{a \in A} T_{s,a}
    $    
    samples we have with probability at least $1-\delta$ that
    \[
        \mathbb{E}_{c \sim \mathcal{D}}[V^{\pi^\star_c}_{\mathcal{M}(c)}(s_0) - V^{\widehat{\pi}^\star_c}_{\mathcal{M}(c)}(s_0)] \leq 
       \epsilon +2\alpha_2 H.
    \]
    Since for every $(s,a) \in S \times A$ we have that $Pdim(\mathcal{F}^R_{s,a}) < \infty$, and $Pdim = \max_{(s,a) \in S \times A} Pdim(\mathcal{F}^R_{s,a})$, by Theorem~\ref{thm: pseudo dim}, for every $(s,a) \in S \times A$  we have 
    \[
        N_R(\mathcal{F}^R_{s,a}, \epsilon^2_\star(p_s), \delta_1)
        =
        O \Big( \frac{ Pdim \ln \frac{1}{\epsilon^2_\star(p_s)} + \ln \frac{1}{\delta_1}}{\epsilon_\star^4(p_s)}\Big).
    \]
    Using the accuracy-per-state function, we derive the overall sample complexity bound in the following computation.
    \begingroup
    \allowdisplaybreaks
    \begin{align*}
        \sum_{h=0}^{H-1} \sum_{s \in S_h: p_s \geq \epsilon/6|S|}
        \sum_{a \in A}
        T_{s,a}
        =
        \sum_{h=0}^{H-1} \sum_{s \in S_h: p_s \geq \epsilon/6|S|}
        \sum_{a \in A}
        O\Big( \frac{1}{p_s}(\ln(\frac{1}{\delta_1})+ N_R(\mathcal{F}^R_{s,a} ,\epsilon_\star^2(p_s), \delta_1))  \Big)
    \end{align*}     
    \begin{align*}     
        =&
        \sum_{h=0}^{H-1} \sum_{s \in S_h: p_s \geq \epsilon/6|S|}
        \sum_{a \in A}
        O\Big( \frac{1}{p_s}(\ln(\frac{1}{\delta_1})
        +
        \frac{ Pdim \ln \frac{1}{\epsilon^2_\star (p_s)} + \ln \frac{1}{\delta_1}}{\epsilon_\star^4(p_s)})  \Big)
        \\ 
        =&
        \sum_{h=0}^{H-1} \sum_{s \in S_h: p_s \in [\epsilon/6|S|,1/|S|]}
        \sum_{a \in A}
        O\Big( \frac{1}{p_s}(\ln(\frac{1}{\delta_1})
        +
        \frac{ Pdim \ln \frac{1}{\epsilon^2/36 p_s^2 |S|^2 |A|^2} 
        + \ln \frac{1}{\delta_1}}{\epsilon^4/36^2 p_s^4 |S|^4 |A|^4})  \Big)
        \\ 
        &+
        \sum_{h=0}^{H-1} \sum_{s \in S_h: p_s > 1/|S|}
        \sum_{a \in A}
        O\Big( \frac{1}{p_s}(\ln(\frac{1}{\delta_1})
        +
        \frac{ Pdim \ln \frac{1}{\epsilon^2 / 36 H^2 |S|^2 |A|^2} + \ln \frac{1}{\delta_1}}{\epsilon^4 / 36^2 H^4 |S|^4 |A|^4})  \Big)
        \\
        =&
        \sum_{h=0}^{H-1} \sum_{s \in S_h: p_s \in [\epsilon/6|S|,1/|S|]}
        \sum_{a \in A}
        O\Big( \frac{1}{p_s}(\ln(\frac{1}{\delta_1})
        +
       \frac{  p_s^4 |S|^4 |A|^4(Pdim \ln \frac{p_s^2 |S|^2 |A|^2}{\epsilon^2} 
        + \ln \frac{1}{\delta_1})}{\epsilon^4} ) \Big)
        \\ 
        &+
        \sum_{h=0}^{H-1} \sum_{s \in S_h: p_s > 1/|S|}
        \sum_{a \in A}
        O\Big( \frac{1}{p_s}(\ln(\frac{1}{\delta_1})
        +
        \frac{H^4 |S|^4 |A|^4 (Pdim \ln \frac{H^2 |S|^2 |A|^2}{\epsilon^2 } + \ln \frac{1}{\delta_1})}{\epsilon^4} ) \Big)
        \\
        \leq&
        \sum_{h=0}^{H-1} \sum_{s \in S_h: p_s \in [\epsilon/6|S|,1/|S|]}
        \sum_{a \in A}
        O\Big( \frac{|S|}{\epsilon}\ln{\frac{|S||A|}{\delta}}
        +
        \frac{ p_s^3 |S|^4 |A|^4 (Pdim \ln{\frac{|S|^2 |A|^2}{\epsilon^2}} 
        + \ln {\frac{|S||A|}{\delta}})}{\epsilon^4}  \Big)
        \\ 
        &+
        \sum_{h=0}^{H-1} \sum_{s \in S_h: p_s > 1/|S|}
        \sum_{a \in A}
        O\Big( |S| \ln{\frac{|S||A|}{\delta }}
        +
        \frac{ H^4 |S|^5 |A|^4 (Pdim \ln \frac{H^2 |S|^2 |A|^2}{\epsilon^2 } + \ln \frac{|S||A|}{\delta})}{\epsilon^4}  \Big)
        \\
        \underbrace{\leq}_{(\star)}& 
        \sum_{h=0}^{H-1} \sum_{s \in S_h: p_s \in [\epsilon/6|S|,1/|S|]}
        \sum_{a \in A}
        O\Big( \frac{|S|}{\epsilon}\ln{\frac{|S||A|}{\delta}}
        +
        \frac{|S| |A|^4 (Pdim \ln{\frac{|S|^2 |A|^2}{\epsilon^2}} 
        + \ln {\frac{|S||A|}{\delta}})}{\epsilon^4}  \Big)
        \\ 
        &+
        \sum_{h=0}^{H-1} \sum_{s \in S_h: p_s > 1/|S|}
        \sum_{a \in A}
        O\Big( |S| \ln{\frac{|S||A|}{\delta }}
        +
        \frac{H^4 |S|^5 |A|^4 (Pdim \ln \frac{H^2 |S|^2 |A|^2}{\epsilon^2 } + \ln \frac{|S||A|}{\delta})}{\epsilon^4}  \Big)
        \\ 
        =&
        O \Big( \frac{|S|^2|A|}{\epsilon} \ln{\frac{|S||A|}{\delta}} +
        \frac{H^4 |S|^6|A|^5 \; (Pdim \ln{\frac{H^2 |S|^2 |A|^2}{\epsilon^2}} + \ln {\frac{|S||A|}{\delta}})}{\epsilon^4}  \Big)\\
    \end{align*}
    \endgroup
    Where $(\star)$ is since in that regime we have $p_s \in [\epsilon/6 |S|, 1/|S|]$, hence $p^3_s \leq 1/ |S|^3$.
\end{proof}
We also show similar sample complexity for function classes with finite fat-shattering dimension when using $\ell_2$ loss.
\begin{remark}
    The sample complexity for function classes with finite fat-shattering dimension with $\ell_2$ loss, where in $Fdim$ below we also maximizes over $\epsilon^2_\star(p_s) $ and the maximum is bounded and independent of $p_s$.
\end{remark}

\begin{corollary}\label{corl: sample complexity KCFD l_2 fat}
    Assume that for every $(s,a) \in S \times A$ we have that $\mathcal{F}^R_{s,a}$ has finite fat-shattering dimention.
    Let $Fdim = \max_{(s,a) \in S \times A} fat_{\mathcal{F}^R_{s,a}}(\epsilon^2_\star(p_s)/256)$.
    Then, after collecting 
    \[
        O \Big( \frac{|S|^2|A|}{\epsilon} \ln{\frac{|S||A|}{\delta}} +
        \frac{H^4 |S|^6|A|^5 \; (Fdim \ln^2{\frac{H^2 |S|^2 |A|^2}{\epsilon^2}} + \ln {\frac{|S||A|}{\delta}})}{\epsilon^4}  \Big)
    \] 
    trajectories, with probability at least $1-\delta$ it holds that
    \[
        \mathbb{E}_{c \sim \mathcal{D}}[V^{\pi^\star_c}_{\mathcal{M}(c)}(s_0) - V^{\widehat{\pi}^\star_c}_{\mathcal{M}(c)}(s_0)] \leq 
        \epsilon +2\alpha_2 H.
    \]
\end{corollary}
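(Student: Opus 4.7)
The plan is to mirror the proof of Corollary~\ref{corl: sample complexity KCFD l_2 pseudo}, swapping in Theorem~\ref{thm: fat dim} for Theorem~\ref{thm: pseudo dim}. The structural skeleton is identical: by Theorem~\ref{thm: PAC optimal policy for Known CFD with l_2}, once every $\frac{\epsilon}{6|S|}$-reachable state-action pair $(s,a)$ has been sampled for $T_{s,a} = \lceil \frac{2}{p_s}(\ln(1/\delta_1) + N_R(\mathcal{F}^R_{s,a}, \epsilon^2_\star(p_s), \delta_1)) \rceil$ episodes, the $\ell_2$ value guarantee holds with probability at least $1-\delta$. So it suffices to upper bound $\sum_h \sum_{s \in S_h:\, p_s \geq \epsilon/6|S|} \sum_{a\in A} T_{s,a}$.

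First, I would invoke Theorem~\ref{thm: fat dim} to obtain, for each $(s,a)$ with $s$ being $\frac{\epsilon}{6|S|}$-reachable,
\[
    N_R(\mathcal{F}^R_{s,a}, \epsilon^2_\star(p_s), \delta_1) = O\!\left( \frac{1}{\epsilon^4_\star(p_s)} \Big( \mathrm{fat}_{\mathcal{F}^R_{s,a}}(\epsilon^2_\star(p_s)/256) \ln^2 \tfrac{1}{\epsilon^2_\star(p_s)} + \ln \tfrac{1}{\delta_1} \Big) \right).
\]
Since $\mathrm{fat}_{\mathcal{F}^R_{s,a}}$ is monotone non-increasing in its scale argument and the scale $\epsilon^2_\star(p_s)/256$ ranges over finitely many values as $p_s$ varies, I would replace the per-state fat-shattering quantity by the uniform upper bound $Fdim = \max_{(s,a)} \mathrm{fat}_{\mathcal{F}^R_{s,a}}(\epsilon^2_\star(p_s)/256)$ from the corollary statement. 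This reduces the remainder of the calculation to one that differs from Corollary~\ref{corl: sample complexity KCFD l_2 pseudo} only by replacing $Pdim \ln(1/\epsilon^2_\star(p_s))$ with $Fdim \ln^2(1/\epsilon^2_\star(p_s))$.

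Next, I would split the sum into the two regimes defined by the accuracy-per-state function $\epsilon^2_\star$, exactly as in the proof of Corollary~\ref{corl: sample complexity KCFD l_2 pseudo}. In the regime $p_s \in [\epsilon/6|S|, 1/|S|]$, substituting $\epsilon^2_\star(p_s) = \epsilon^2/(36 p_s^2 |S|^2 |A|^2)$ produces a factor $p_s^4 |S|^4 |A|^4 / \epsilon^4$; combined with the $1/p_s$ weight and using $p_s^3 \leq 1/|S|^3$ kills three powers of $|S|$. In the regime $p_s > 1/|S|$, substituting $\epsilon^2_\star(p_s) = \epsilon^2/(36 H^2 |S|^2 |A|^2)$ produces the $H^4 |S|^4 |A|^4 / \epsilon^4$ factor. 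Summing over $s \in S_h$ and $h \in [H-1]$ gives the claimed bound, with the only arithmetic change from the pseudo-dimension case being the $\ln \to \ln^2$ upgrade inside the bracket.

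The main (mild) obstacle is the dependence of the fat-shattering scale on $p_s$: strictly speaking, the sample-size expression contains $\mathrm{fat}_{\mathcal{F}^R_{s,a}}(\epsilon^2_\star(p_s)/256)$, not a single number. I would resolve this by appealing to the remark immediately preceding the corollary, which permits taking the maximum of $\mathrm{fat}$ over all realized scales $\epsilon^2_\star(p_s)/256$ (there are only two nontrivial such scales, and both are bounded below by expressions polynomial in $\epsilon/(H|S||A|)$ or $\epsilon/(|S||A|)$), so $Fdim$ as defined is finite and absorbs the dependence. Once this is in place, the rest of the computation is routine and produces the stated bound.
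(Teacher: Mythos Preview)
Your proposal is correct and takes essentially the same approach as the paper: the paper's proof is a verbatim replay of the pseudo-dimension argument (Corollary~\ref{corl: sample complexity KCFD l_2 pseudo}) with Theorem~\ref{thm: fat dim} substituted for Theorem~\ref{thm: pseudo dim}, yielding the $\ln \to \ln^2$ change, and it likewise handles the scale-dependence of $\mathrm{fat}_{\mathcal{F}^R_{s,a}}$ by the preceding remark that $Fdim$ absorbs the maximum over all $\epsilon^2_\star(p_s)$. One small imprecision: in the intermediate regime $p_s \in [\epsilon/6|S|,\,1/|S|]$ the scale $\epsilon^2_\star(p_s)$ varies continuously with $p_s$ rather than taking only two values, but since all such scales are lower-bounded by $\epsilon^2/(36|A|^2)$ the monotonicity of $\mathrm{fat}$ still makes $Fdim$ a valid uniform bound.
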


\begin{proof}
    Recall that for each state-action pair $(s,a)$ such that $s$ is $\frac{\epsilon}{6 |S|}$-reachable, we run for $T_{s,a} =   \lceil \frac{2}{p_s}(\ln(\frac{1}{\delta_1})+N_R(\mathcal{F}^R_{s,a} ,\epsilon_\star^2(p_s), \delta_1)) \rceil$ episodes.
    By Theorem~\ref{thm: PAC optimal policy for Known CFD with l_2}, for $\sum_{h =0}^{H-1} \sum_{s \in S_h : p_s \geq \epsilon/6 |S|} \sum_{a \in A} T_{s,a}$ samples we have with probability at least $1-\delta$ that
    \[
        \mathbb{E}_{c \sim \mathcal{D}}[V^{\pi^\star_c}_{\mathcal{M}(c)}(s_0) - V^{\widehat{\pi}^\star_c}_{\mathcal{M}(c)}(s_0)] \leq 
        \epsilon +2\alpha_2 H.
    \]
    Since for every $(s,a) \in S \times A$ we have that $\mathcal{F}^R_{s,a}$ has finite fat-shattering dimension, and $Fdim = \max_{(s,a) \in S \times A} fat_{\mathcal{F}^R_{s,a}}(\epsilon^2_\star(p_s))$, by Theorem~\ref{thm: fat dim}, for every $(s,a) \in S \times A$  we have 
    \[
        N_R(\mathcal{F}^R_{s,a}, \epsilon^2_\star(p_s), \delta_1)
        =
        O \Big( \frac{ Fdim \ln^2 \frac{1}{\epsilon^2_\star(p_s)} + \ln \frac{1}{\delta_1}}{\epsilon_\star^4(p_s)}\Big).
    \]
    Using the accuracy-per-state function, we derive the overall sample complexity bound in the following computation.
    \begingroup
    \allowdisplaybreaks
    \begin{align*}
        &\sum_{h=0}^{H-1} \sum_{s \in S_h: p_s \geq \epsilon/6|S|}
        \sum_{a \in A}
        T_{s,a}
        \\
        = &
        \sum_{h=0}^{H-1} \sum_{s \in S_h: p_s \geq \epsilon/6|S|}
        \sum_{a \in A}
        O\Big( \frac{1}{p_s}(\ln(\frac{1}{\delta_1})+ N_R(\mathcal{F}^R_{s,a} ,\epsilon_\star^2(p_s), \delta_1))  \Big)
        \\
        = &
        \sum_{h=0}^{H-1} \sum_{s \in S_h: p_s \geq \epsilon/6|S|}
        \sum_{a \in A}
        O\Big( \frac{1}{p_s}(\ln(\frac{1}{\delta_1})
        +
        \frac{ Fdim \ln^2 \frac{1}{\epsilon^2_\star (p_s)} + \ln \frac{1}{\delta_1}}{\epsilon_\star^4(p_s)})  \Big)
        \\ 
        = &
        \sum_{h=0}^{H-1} \sum_{s \in S_h: p_s \in [\epsilon/6|S|,1/|S|]}
        \sum_{a \in A}
        O\Big( \frac{1}{p_s}(\ln(\frac{1}{\delta_1})
        +
        \frac{ Fdim \ln^2 \frac{1}{\epsilon^2/36 p_s^2 |S|^2 |A|^2} 
        + \ln \frac{1}{\delta_1}}{\epsilon^4/36^2 p_s^4 |S|^4 |A|^4})  \Big)
        \\ 
        & +
        \sum_{h=0}^{H-1} \sum_{s \in S_h: p_s > 1/|S|}
        \sum_{a \in A}
        O\Big( \frac{1}{p_s}(\ln(\frac{1}{\delta_1})
        +
        \frac{ Fdim \ln^2 \frac{1}{\epsilon^2 / 36 H^2 |S|^2 |A|^2} + \ln \frac{1}{\delta_1}}{\epsilon^4 / 36^2 H^4 |S|^4 |A|^4})  \Big)
        \\
        = &
        \sum_{h=0}^{H-1} \sum_{s \in S_h: p_s \in [\epsilon/6|S|,1/|S|]}
        \sum_{a \in A}
        O\Big( \frac{1}{p_s}(\ln(\frac{1}{\delta_1})
        +
       \frac{  p_s^4 |S|^4 |A|^4(Fdim \ln^2 \frac{p_s^2 |S|^2 |A|^2}{\epsilon^2} 
        + \ln \frac{1}{\delta_1})}{\epsilon^4} ) \Big)
        \\ 
        & +
        \sum_{h=0}^{H-1} \sum_{s \in S_h: p_s > 1/|S|}
        \sum_{a \in A}
        O\Big( \frac{1}{p_s}(\ln(\frac{1}{\delta_1})
        +
        \frac{H^4 |S|^4 |A|^4 (Fdim \ln^2 \frac{H^2 |S|^2 |A|^2}{\epsilon^2 } + \ln \frac{1}{\delta_1})}{\epsilon^4} ) \Big)
        \\
        \leq &
        \sum_{h=0}^{H-1} \sum_{s \in S_h: p_s \in [\epsilon/6|S|,1/|S|]}
        \sum_{a \in A}
        O\Big( \frac{|S|}{\epsilon}\ln{\frac{|S||A|}{\delta}}
        +
        \frac{ p_s^3 |S|^4 |A|^4 (Fdim \ln^2{\frac{|S|^2 |A|^2}{\epsilon^2}} 
        + \ln {\frac{|S||A|}{\delta}})}{\epsilon^4}  \Big)
        \\ 
        & +
        \sum_{h=0}^{H-1} \sum_{s \in S_h: p_s > 1/|S|}
        \sum_{a \in A}
        O\Big( |S| \ln{\frac{|S||A|}{\delta }}
        +
        \frac{ H^4 |S|^5 |A|^4 (Fdim \ln^2 \frac{H^2 |S|^2 |A|^2}{\epsilon^2 } + \ln \frac{|S||A|}{\delta})}{\epsilon^4}  \Big)
        \\
        \underbrace{\leq}_{(\star)}& 
        \sum_{h=0}^{H-1} \sum_{s \in S_h: p_s \in [\epsilon/6|S|,1/|S|]}
        \sum_{a \in A}
        O\Big( \frac{|S|}{\epsilon}\ln{\frac{|S||A|}{\delta}}
        +
        \frac{|S| |A|^4 (Fdim \ln^2{\frac{|S|^2 |A|^2}{\epsilon^2}} 
        + \ln {\frac{|S||A|}{\delta}})}{\epsilon^4}  \Big)
        \\ 
        & +
        \sum_{h=0}^{H-1} \sum_{s \in S_h: p_s > 1/|S|}
        \sum_{a \in A}
        O\Big( |S| \ln{\frac{|S||A|}{\delta }}
        +
        \frac{H^4 |S|^5 |A|^4 (Fdim \ln^2 \frac{H^2 |S|^2 |A|^2}{\epsilon^2 } + \ln \frac{|S||A|}{\delta})}{\epsilon^4}  \Big)
        \\ 
        = &
        O \Big( \frac{|S|^2|A|}{\epsilon} \ln{\frac{|S||A|}{\delta}} +
        \frac{H^4 |S|^6|A|^5 \; (Fdim \ln^2{\frac{H^2 |S|^2 |A|^2}{\epsilon^2}} + \ln {\frac{|S||A|}{\delta}})}{\epsilon^4}  \Big)\\
    \end{align*}
    \endgroup

where $(\star)$ is since in that regime we have $p_s \in [\epsilon/6 |S|, 1/|S|]$, hence $p^3_s \leq 1/ |S|^3$.
\end{proof}

\subsubsection{Sample bounds for the \texorpdfstring{$\ell_1$}{Lg} loss}\label{subsubsec:sample-complexity-l-1-KCFD}
We bound the sample complexity for function classes with finite Pseudo dimension with $\ell_1$ loss.
\begin{corollary}\label{corl: sample complexity KCFD l_1 pseudo}
    Assume that for every $(s,a) \in S \times A$ we have that $Pdim(\mathcal{F}^R_{s,a}) < \infty$.
    Let $Pdim = \max_{(s,a) \in S \times A} Pdim(\mathcal{F}^R_{s,a})$.
    Then, after collecting 
    \[
        O \Big( \frac{|S|^2|A|}{\epsilon} \ln{\frac{|S||A|}{\delta}} 
        +
        \frac{H^2 |S|^4 |A|^3 \;(Pdim \ln{\frac{H |S| |A|}{\epsilon}} + \ln {\frac{|S||A|}{\delta}})}{\epsilon^2}  \Big)
    \] 
    trajectories, with probability at least $1-\delta$ it holds that
    \[
        \mathbb{E}_{c \sim \mathcal{D}}[V^{\pi^\star_c}_{\mathcal{M}(c)}(s_0) - V^{\widehat{\pi}^\star_c}_{\mathcal{M}(c)}(s_0)] \leq 
        \epsilon + 2\alpha_1 H .
    \]
\end{corollary}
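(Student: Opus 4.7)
The plan is to follow exactly the same blueprint as the proof of Corollary~\ref{corl: sample complexity KCFD l_2 pseudo}, but with the $\ell_1$ accuracy-per-state function $\epsilon^1_\star(p_s)$ in place of $\epsilon^2_\star(p_s)$. First I would invoke Theorem~\ref{thm: PAC optimal policy for Known CFD with l_1}: provided Algorithm EXPLORE-KCFD succeeds in collecting, for every $\frac{\epsilon}{6|S|}$-reachable state $s$ and every action $a$, at least $N_R(\mathcal{F}^R_{s,a}, \epsilon^1_\star(p_s), \delta_1)$ samples (with $\delta_1 = \delta/(4|S||A|)$), the value-difference bound $\mathbb{E}_c[V^{\pi^\star_c}_{\mathcal{M}(c)}(s_0) - V^{\widehat{\pi}^\star_c}_{\mathcal{M}(c)}(s_0)] \leq \epsilon + 2\alpha_1 H$ holds with probability at least $1-\delta$. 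Since Algorithm EXPLORE-KCFD runs $T_{s,a} = \lceil \frac{2}{p_s}(\ln(1/\delta_1) + N_R(\mathcal{F}^R_{s,a},\epsilon^1_\star(p_s),\delta_1))\rceil$ episodes per reachable pair, the task reduces to upper bounding $\sum_{h} \sum_{s \in S_h : p_s \geq \epsilon/(6|S|)} \sum_{a \in A} T_{s,a}$.

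Next, I would plug in the pseudo-dimension bound from Theorem~\ref{thm: pseudo dim}, which (for $\ell_1$) gives
\[
N_R(\mathcal{F}^R_{s,a}, \epsilon^1_\star(p_s), \delta_1) = O\!\left(\frac{Pdim \ln(1/\epsilon^1_\star(p_s)) + \ln(1/\delta_1)}{(\epsilon^1_\star(p_s))^2}\right).
\]
I would then split the sum into the two nontrivial regimes dictated by the definition of $\epsilon^1_\star$: the \emph{intermediate} regime $p_s \in [\epsilon/(6|S|), 1/|S|]$ where $\epsilon^1_\star(p_s) = \epsilon/(6 p_s |S||A|)$, and the \emph{highly reachable} regime $p_s > 1/|S|$ where $\epsilon^1_\star(p_s) = \epsilon/(6H |S||A|)$. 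Substituting and simplifying, the per-pair term $T_{s,a}$ decomposes into a ``confidence'' contribution $O((1/p_s)\ln(1/\delta_1))$ and a ``learning'' contribution $O((1/p_s) \cdot (\epsilon^1_\star(p_s))^{-2} \cdot (Pdim \ln(H|S||A|/\epsilon) + \ln(1/\delta_1)))$.

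The only mildly delicate step is the aggregation. For the learning contribution in the intermediate regime, $(1/p_s) \cdot (\epsilon^1_\star)^{-2} = O(p_s |S|^2 |A|^2/\epsilon^2)$, and using $p_s \leq 1/|S|$ together with $|S_h| \leq |S|$ and the outer sum over $h$ and $a$ yields a bound of order $H |S|^2 |A|^3/\epsilon^2 \cdot (Pdim \ln(|S||A|/\epsilon) + \ln(|S||A|/\delta))$. For the highly reachable regime, $(1/p_s) \cdot (\epsilon^1_\star)^{-2} = O(H^2|S|^2|A|^2/(p_s \epsilon^2))$; using $1/p_s \leq |S|$, $|S_h| \leq |S|$ and summing over $h, s, a$ gives the dominant term $H^2 |S|^4 |A|^3/\epsilon^2 \cdot (Pdim \ln(H|S||A|/\epsilon) + \ln(|S||A|/\delta))$. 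The confidence contribution $\sum_h \sum_{s : p_s \geq \epsilon/(6|S|)} \sum_a (1/p_s)\ln(1/\delta_1)$ is handled by bounding $1/p_s \leq 6|S|/\epsilon$, producing the $\frac{|S|^2 |A|}{\epsilon}\ln(|S||A|/\delta)$ term.

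The one thing I expect to be slightly awkward (as it is in the $\ell_2$ analogue) is the treatment of the outer $\sum_h$ together with $\sum_{s : p_s > 1/|S|} 1/p_s$; I would absorb this exactly as in Corollary~\ref{corl: sample complexity KCFD l_2 pseudo}, summing over a single layer first and noting that $\sum_{s \in S_h} 1/p_s \leq |S|^2$ in this regime, so that the sum over layers contributes the single factor of $H$ already present in the stated bound. Combining the two regime contributions and the confidence term yields the claimed trajectory complexity, completing the proof.
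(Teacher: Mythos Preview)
Your proposal is correct and follows essentially the same route as the paper's proof: invoke Theorem~\ref{thm: PAC optimal policy for Known CFD with l_1}, plug in the pseudo-dimension sample-complexity bound, split the sum over $(s,a)$ into the two regimes of $\epsilon^1_\star$, and bound each regime separately. One minor clarification: your final paragraph's concern is unnecessary, since the layers partition the state space ($\sum_h |S_h| = |S|$), so summing over $h$ and $s$ together contributes a single $|S|$ factor rather than an $H$ factor---the $H^2$ in the dominant term comes entirely from $(\epsilon^1_\star)^{-2}$ in the highly-reachable regime.
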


\begin{proof}
    Recall that for each state-action pair $(s,a)$ such that $s$ is $\frac{\epsilon}{6 |S|}$-reachable, we run for $T_{s,a} =   \lceil \frac{2}{p_s}(\ln(\frac{1}{\delta_1})+N_R(\mathcal{F}^R_{s,a} ,\epsilon_\star^1(p_s), \delta_1)) \rceil$ episodes.
    By Theorem~\ref{thm: PAC optimal policy for Known CFD with l_1}, for
    $
        \sum_{h =0}^{H-1} \sum_{s \in S_h : p_s \geq \epsilon/6 |S|} \sum_{a \in A} T_{s,a}
    $    
    samples we have with probability at least $1-\delta$ that
    \[
        \mathbb{E}_{c \sim \mathcal{D}}[V^{\pi^\star_c}_{\mathcal{M}(c)}(s_0) - V^{\widehat{\pi}^\star_c}_{\mathcal{M}(c)}(s_0)] \leq 
        \epsilon + 2\alpha_1 H .
    \]
    Since for every $(s,a) \in S \times A$ we have that $Pdim(\mathcal{F}^R_{s,a}) < \infty$, and $Pdim = \max_{(s,a) \in S \times A} Pdim(\mathcal{F}^R_{s,a})$, by Theorem~\ref{thm: pseudo dim}, for every $(s,a) \in S \times A$  we have 
    \[
        N_R(\mathcal{F}^R_{s,a}, \epsilon^1_\star(p_s), \delta_1)
        =
        O \Big( \frac{ Pdim \ln \frac{1}{\epsilon^1_\star(p_s)} + \ln \frac{1}{\delta_1}}{\epsilon_\star^2(p_s)}\Big).
    \]
    Using the accuracy-per-state function, we derive the overall sample complexity bound in the following computation.
    \begingroup
    \allowdisplaybreaks
    \begin{align*}
        &\sum_{h=0}^{H-1} \sum_{s \in S_h: p_s \geq \epsilon/6|S|}
        \sum_{a \in A}
        T_{s,a}
        \\
        = &
        \sum_{h=0}^{H-1} \sum_{s \in S_h: p_s \geq \epsilon/6|S|}
        \sum_{a \in A}
        O\Big( \frac{1}{p_s}(\ln(\frac{1}{\delta_1})+ N_R(\mathcal{F}^R_{s,a} ,\epsilon_\star^1(p_s), \delta_1))  \Big)
        \\
        = &
        \sum_{h=0}^{H-1} \sum_{s \in S_h: p_s \geq \epsilon/6|S|}
        \sum_{a \in A}
        O\Big( \frac{1}{p_s}(\ln(\frac{1}{\delta_1})
        +
        \frac{ Pdim \ln \frac{1}{\epsilon^1_\star (p_s)} + \ln \frac{1}{\delta_1}}{\epsilon_\star^2(p_s)})  \Big)
        \\ 
        = &
        \sum_{h=0}^{H-1} \sum_{s \in S_h: p_s \in [\epsilon/6|S|,1/|S|]}
        \sum_{a \in A}
        O\Big( \frac{1}{p_s}(\ln(\frac{1}{\delta_1})
        +
        \frac{ Pdim \ln \frac{1}{\epsilon/6 p_s |S||A|} 
        + \ln \frac{1}{\delta_1}}{\epsilon^2/36 p_s^2 |S|^2 |A|^2})  \Big)
        \\ 
        & +
        \sum_{h=0}^{H-1} \sum_{s \in S_h: p_s > 1/|S|}
        \sum_{a \in A}
        O\Big( \frac{1}{p_s}( \ln(\frac{1}{\delta_1})
        +
        \frac{ Pdim \ln \frac{1}{\epsilon / 6 H |S| |A|} + \ln \frac{1}{\delta_1}}{\epsilon^2 / 36 H^2 |S|^2 |A|^2})  \Big)
        \\
        = &
        \sum_{h=0}^{H-1} \sum_{s \in S_h: p_s \in [\epsilon/6|S|,1/|S|]}
        \sum_{a \in A}
        O\Big( \frac{1}{p_s}( \ln(\frac{1}{\delta_1})
        +
       \frac{  p_s^2 |S|^2 |A|^2 (Pdim \ln \frac{p_s |S| |A|}{\epsilon} 
        + \ln \frac{1}{\delta_1})}{\epsilon^2})  \Big)
        \\ 
        & +
        \sum_{h=0}^{H-1} \sum_{s \in S_h: p_s > 1/|S|}
        \sum_{a \in A}
        O\Big( \frac{1}{p_s}(\ln(\frac{1}{\delta_1})
        +
        \frac{H^2 |S|^2 |A|^2 (Pdim \ln \frac{H |S| |A|}{\epsilon } + \ln \frac{1}{\delta_1})}{\epsilon^2} ) \Big)
        \\
        \leq &
        \sum_{h=0}^{H-1} \sum_{s \in S_h: p_s \in [\epsilon/6|S|,1/|S|]}
        \sum_{a \in A}
        O\Big( \frac{|S|}{\epsilon}\ln{\frac{|S||A|}{\delta}}
        +
        \frac{ p_s |S|^2 |A|^2 (Pdim \ln{\frac{|S| |A|}{\epsilon}} 
        + \ln {\frac{|S||A|}{\delta}})}{\epsilon^2}  \Big)
        \\ 
        & +
        \sum_{h=0}^{H-1} \sum_{s \in S_h: p_s > 1/|S|}
        \sum_{a \in A}
        O\Big( |S| \ln{\frac{|S||A|}{\delta }}
        +
        \frac{ H^2 |S|^3 |A|^2 (Pdim \ln \frac{H |S| |A|}{\epsilon} + \ln \frac{|S||A|}{\delta})}{\epsilon^2}  \Big)
        \\
        \leq & 
        \sum_{h=0}^{H-1} \sum_{s \in S_h: p_s \in [\epsilon/6|S|,1/|S|]}
        \sum_{a \in A}
        O\Big( \frac{|S|}{\epsilon}\ln{\frac{|S||A|}{\delta}}
        +
        \frac{|S| |A|^2 (Pdim \ln{\frac{|S| |A|}{\epsilon}} 
        + \ln {\frac{|S||A|}{\delta}})}{\epsilon^2}  \Big)
        \\ 
        & +
        \sum_{h=0}^{H-1} \sum_{s \in S_h: p_s > 1/|S|}
        \sum_{a \in A}
        O\Big( |S| \ln{\frac{|S||A|}{\delta }}
        +
        \frac{H^2 |S|^3 |A|^2 (Pdim \ln \frac{H |S| |A|}{\epsilon} + \ln \frac{|S||A|}{\delta})}{\epsilon^2}  \Big)
        \\ 
        = &
        O \Big( \frac{|S|^2|A|}{\epsilon} \ln{\frac{|S||A|}{\delta}} 
        +
        \frac{H^2 |S|^4 |A|^3 \;(Pdim \ln{\frac{H |S| |A|}{\epsilon}} + \ln {\frac{|S||A|}{\delta}})}{\epsilon^2}  \Big) %
    \end{align*}
    \endgroup
    where in the first inequality we used the upper bound on $1/p_s$ and in the second inequality we upper bounded $p_s$.
\end{proof}


We also show similar sample complexity for function classes with finite fat-shattering dimension when using $\ell_1$ loss.
\begin{remark}
    The sample complexity for function classes with finite fat-shattering dimension with $\ell_1$ loss, where in $Fdim$ below we also maximizes over $\epsilon_\star(p_s) $ and the maximum is bounded and independent of $p_s$.
\end{remark}

\begin{corollary}\label{corl: sample complexity KCFD l_1 fat}
    Assume that for every $(s,a) \in S \times A$ we have that $\mathcal{F}^R_{s,a}$ has finite fat-shattering dimention.
    Let $Fdim = \max_{(s,a) \in S \times A} fat_{\mathcal{F}^R_{s,a}}(\epsilon_\star(p_s)/256)$.
    Then, for 
    \[
        O \Big( 
        \frac{|S|^2|A| H^2}{\epsilon^2}  \;(Fdim \ln^2{\frac{\max\{|S|, H\}}{\epsilon}} + \ln {\frac{|S||A|}{\delta}}) \Big)
    \] 
    samples, with probability at least $1-\delta$ it holds that
    \[
        \mathbb{E}_{c \sim \mathcal{D}}[V^{\pi^\star_c}_{\mathcal{M}(c)}(s_0) - V^{\widehat{\pi}^\star_c}_{\mathcal{M}(c)}(s_0)] \leq 
        \epsilon + 2\alpha_1 H .
    \]
\end{corollary}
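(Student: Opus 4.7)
The plan is to follow the exact same strategy as Corollary~C.16 (the pseudo-dimension analog), substituting the fat-shattering uniform-convergence rate of Theorem~C.13 in place of the pseudo-dimension rate used there. By Theorem~C.11, it suffices to upper bound the total number of trajectories $\sum_{h \in [H-1]} \sum_{s \in S_h : p_s \geq \epsilon/(6|S|)} \sum_{a \in A} T_{s,a}$, where $T_{s,a} = \lceil \frac{2}{p_s}(\ln(1/\delta_1) + N_R(\mathcal{F}^R_{s,a}, \epsilon^1_\star(p_s), \delta_1)) \rceil$, since achieving this many samples triggers the conclusion $\mathbb{E}_{c \sim \mathcal{D}}[V^{\pi^\star_c}_{\mathcal{M}(c)}(s_0) - V^{\widehat{\pi}^\star_c}_{\mathcal{M}(c)}(s_0)] \leq \epsilon + 2\alpha_1 H$ with probability at least $1-\delta$.

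Next I would substitute the bound $N_R(\mathcal{F}^R_{s,a}, \epsilon^1_\star(p_s), \delta_1) = O\bigl( (\epsilon^1_\star(p_s))^{-2}\bigl( Fdim \cdot \ln^2(1/\epsilon^1_\star(p_s)) + \ln(1/\delta_1) \bigr) \bigr)$ from Theorem~C.13 and split the state sum into the two nontrivial regimes of the accuracy-per-state function: the \emph{mid} regime $p_s \in [\epsilon/(6|S|), 1/|S|]$, where $\epsilon^1_\star(p_s) = \epsilon/(6 p_s |S||A|)$, and the \emph{high} regime $p_s > 1/|S|$, where $\epsilon^1_\star(p_s) = \epsilon/(6 H |S||A|)$. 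In the mid regime, $1/(\epsilon^1_\star(p_s))^2$ contributes $p_s^2 |S|^2 |A|^2/\epsilon^2$; multiplying by the $1/p_s$ from $T_{s,a}$ and using $p_s \leq 1/|S|$ gives at most $|S| |A|^2/\epsilon^2$ per state. In the high regime, $1/(\epsilon^1_\star(p_s))^2$ contributes $H^2 |S|^2 |A|^2/\epsilon^2$ and the $1/p_s$ factor contributes at most $|S|$, yielding $H^2 |S|^3 |A|^2/\epsilon^2$ per state. Summing over states, actions, and layers, and absorbing $\ln^2(1/\epsilon^1_\star(p_s)) = O(\ln^2(\max\{|S|, H\}/\epsilon))$, produces the stated total sample count.

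The main obstacle is the bookkeeping across the two regimes, and in particular verifying that the exploration overhead term $\ln(1/\delta_1)/p_s$ is dominated by the function-class term so that it can be absorbed into the leading $O\bigl( \tfrac{|S|^2|A| H^2}{\epsilon^2}(Fdim \ln^2(\max\{|S|, H\}/\epsilon) + \ln(|S||A|/\delta)) \bigr)$ expression; as in Corollary~C.16, the savings rely on the accuracy-per-state function $\epsilon^1_\star$ canceling one factor of $1/p_s$, without which an extra $|S|/\epsilon$ would appear. Relative to the $\ell_2$ analog (Corollary~C.15), using $\epsilon^1_\star$ instead of $\epsilon^2_\star = (\epsilon^1_\star)^2$ saves an $\epsilon^{-2}$ factor and removes two factors of $|S||A|$, which accounts for the improved exponents in the final bound; the only new subtlety relative to Corollary~C.16 is carrying the $\ln^2$ rather than $\ln$, which changes nothing structurally.
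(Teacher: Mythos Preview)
Your approach is essentially identical to the paper's: invoke the $\ell_1$ main theorem to reduce to bounding $\sum_{s,a} T_{s,a}$, plug in the fat-shattering uniform-convergence rate, split the sum over the mid regime $p_s \in [\epsilon/(6|S|),1/|S|]$ and the high regime $p_s > 1/|S|$, simplify each using the explicit form of $\epsilon^1_\star$, and sum. The paper performs exactly the same decomposition and the same arithmetic.

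One point worth noting: the per-state-action contribution you compute in the high regime is $H^2|S|^3|A|^2/\epsilon^2$, and summing over at most $|S|$ states and $|A|$ actions gives a leading term of order $H^2|S|^4|A|^3/\epsilon^2$ (times the $Fdim\ln^2+\ln$ factor). This is precisely what the paper's own derivation produces in its final displayed line, namely
\[
O\Big( \tfrac{|S|^2|A|}{\epsilon}\ln\tfrac{|S||A|}{\delta} + \tfrac{H^2|S|^4|A|^3}{\epsilon^2}\bigl(Fdim\ln^2\tfrac{H|S||A|}{\epsilon}+\ln\tfrac{|S||A|}{\delta}\bigr)\Big),
\]
which does not match the bound $O\bigl(\tfrac{|S|^2|A|H^2}{\epsilon^2}(\cdots)\bigr)$ printed in the corollary statement. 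So your claim that the computation ``produces the stated total sample count'' is off by factors of $|S|^2|A|^2$; this is not a flaw in your argument but an apparent typo in the corollary statement itself, since the paper's proof arrives at the same larger expression you would.
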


\begin{proof}
    Recall that for each state-action pair $(s,a)$ such that $s$ is $\frac{\epsilon}{6 |S|}$ reachable, we run for $T_{s,a} =   \lceil \frac{2}{p_s}(\ln(\frac{1}{\delta_1})+N_R(\mathcal{F}^R_{s,a} ,\epsilon_\star^1(p_s), \delta_1)) \rceil$ episodes.
    By Theorem~\ref{thm: PAC optimal policy for Known CFD with l_2}, for $\sum_{h =0}^{H-1} \sum_{s \in S_h : p_s \geq \epsilon/6 |S|} \sum_{a \in A} T_{s,a}$ samples we have with probability at least $1-\delta$ that
    \[
        \mathbb{E}_{c \sim \mathcal{D}}[V^{\pi^\star_c}_{\mathcal{M}(c)} - V^{\widehat{\pi}^\star_c}_{\mathcal{M}(c)}(s_0)] \leq 
         \epsilon + 2\alpha_1 H .
    \]
    Since for every $(s,a) \in S \times A$ we have that $\mathcal{F}^R_{s,a}$ has finite fat-shattering dimension, and $Fdim = \max_{(s,a) \in S \times A} fat_{\mathcal{F}^R_{s,a}}(\epsilon_\star(p_s))$, by Theorem~\ref{thm: fat dim}, for every $(s,a) \in S \times A$  we have 
    \[
        N_R(\mathcal{F}^R_{s,a}, \epsilon^2_\star(p_s), \delta_1)
        =
        O \Big( \frac{ Fdim \ln^2 \frac{1}{\epsilon^1_\star(p_s)} + \ln \frac{1}{\delta_1}}{\epsilon_\star^2(p_s)}\Big).
    \]
    Using the accuracy-per-state function, we derive the overall sample complexity bound in the following computation.
    \begingroup
    \allowdisplaybreaks
    \begin{align*}
        &\sum_{h=0}^{H-1} \sum_{s \in S_h: p_s \geq \epsilon/6|S|}
        \sum_{a \in A}
        T_{s,a}
        =
        \sum_{h=0}^{H-1} \sum_{s \in S_h: p_s \geq \epsilon/6|S|}
        \sum_{a \in A}
        O\Big( \frac{1}{p_s}(\ln(\frac{1}{\delta_1})+ N_R(\mathcal{F}^R_{s,a} ,\epsilon_\star^1(p_s), \delta_1))  \Big)
        \\
        =&
        \sum_{h=0}^{H-1} \sum_{s \in S_h: p_s \geq \epsilon/6|S|}
        \sum_{a \in A}
        O\Big( \frac{1}{p_s}(\ln(\frac{1}{\delta_1})
        +
        \frac{ Fdim \ln^2 \frac{1}{\epsilon^1_\star (p_s)} + \ln \frac{1}{\delta_1}}{\epsilon_\star^2(p_s)})  \Big)
        \\ 
        =&
        \sum_{h=0}^{H-1} \sum_{s \in S_h: p_s \in [\epsilon/6|S|,1/|S|]}
        \sum_{a \in A}
        O\Big( \frac{1}{p_s}(\ln(\frac{1}{\delta_1})
        +
        \frac{ Fdim \ln^2 \frac{1}{\epsilon/6 p_s |S||A|} 
        + \ln \frac{1}{\delta_1}}{\epsilon^2/36 p_s^2 |S|^2 |A|^2})  \Big)
        \\ 
        &+
        \sum_{h=0}^{H-1} \sum_{s \in S_h: p_s > 1/|S|}
        \sum_{a \in A}
        O\Big( \frac{1}{p_s}( \ln(\frac{1}{\delta_1})
        +
        \frac{ Fdim \ln^2 \frac{1}{\epsilon / 6 H |S| |A|} + \ln \frac{1}{\delta_1}}{\epsilon^2 / 36 H^2 |S|^2 |A|^2})  \Big)
        \\
        =&
        \sum_{h=0}^{H-1} \sum_{s \in S_h: p_s \in [\epsilon/6|S|,1/|S|]}
        \sum_{a \in A}
        O\Big( \frac{1}{p_s}( \ln(\frac{1}{\delta_1})
        +
       \frac{  p_s^2 |S|^2 |A|^2 (Fdim \ln^2 \frac{p_s |S| |A|}{\epsilon} 
        + \ln \frac{1}{\delta_1})}{\epsilon^2})  \Big)
        \\ 
        &+
        \sum_{h=0}^{H-1} \sum_{s \in S_h: p_s > 1/|S|}
        \sum_{a \in A}
        O\Big( \frac{1}{p_s}(\ln(\frac{1}{\delta_1})
        +
        \frac{H^2 |S|^2 |A|^2 (Fdim \ln^2 \frac{H |S| |A|}{\epsilon } + \ln \frac{1}{\delta_1})}{\epsilon^2} ) \Big)
        \\
        \leq&
        \sum_{h=0}^{H-1} \sum_{s \in S_h: p_s \in [\epsilon/6|S|,1/|S|]}
        \sum_{a \in A}
        O\Big( \frac{|S|}{\epsilon}\ln{\frac{|S||A|}{\delta}}
        +
        \frac{ p_s |S|^2 |A|^2 (Fdim \ln^2{\frac{|S| |A|}{\epsilon}} 
        + \ln {\frac{|S||A|}{\delta}})}{\epsilon^2}  \Big)
        \\ 
        &+
        \sum_{h=0}^{H-1} \sum_{s \in S_h: p_s > 1/|S|}
        \sum_{a \in A}
        O\Big( |S| \ln{\frac{|S||A|}{\delta }}
        +
        \frac{ H^2 |S|^3 |A|^2 (Fdim \ln^2 \frac{H |S| |A|}{\epsilon} + \ln \frac{|S||A|}{\delta})}{\epsilon^2}  \Big)
        \\
        \leq& 
        \sum_{h=0}^{H-1} \sum_{s \in S_h: p_s \in [\epsilon/6|S|,1/|S|]}
        \sum_{a \in A}
        O\Big( \frac{|S|}{\epsilon}\ln{\frac{|S||A|}{\delta}}
        +
        \frac{|S| |A|^2 (Fdim \ln^2{\frac{|S| |A|}{\epsilon}} 
        + \ln {\frac{|S||A|}{\delta}})}{\epsilon^2}  \Big)
        \\ 
        &+
        \sum_{h=0}^{H-1} \sum_{s \in S_h: p_s > 1/|S|}
        \sum_{a \in A}
        O\Big( |S| \ln{\frac{|S||A|}{\delta }}
        +
        \frac{H^2 |S|^3 |A|^2 (Fdim \ln^2 \frac{H |S| |A|}{\epsilon} + \ln \frac{|S||A|}{\delta})}{\epsilon^2}  \Big)
        \\ 
        =&
        O \Big( \frac{|S|^2|A|}{\epsilon} \ln{\frac{|S||A|}{\delta}} 
        +
        \frac{H^2 |S|^4 |A|^3 \;(Fdim \ln^2{\frac{H |S| |A|}{\epsilon}} + \ln {\frac{|S||A|}{\delta}})}{\epsilon^2}  \Big)
    \end{align*}
    \endgroup
       where in the first inequality we used the upper bound on $1/p_s$ and in the second inequality we upper bounded $p_s$.
\end{proof}


\section{Unknown and Context Free Dynamics.}\label{Appendix:UCFD}

When the dynamics is unknown, we have an additional hurdle which is the need to approximate it. 
To collect i.i.d samples efficiently for each state, we still need to find an exploration policy which (approximately) maximizes the probability to visit the target state.

We can compute approximate the true dynamics $P$ by $\widehat{P}$, and use $\widehat{P}$ to compute a policy $\widehat{\pi}_s$ to reach state $s$. If  $\widehat{P}\approx P$ this will result in a similar sample size to approximate the rewards, as in the known dynamics case. We will have a worse sample complexity due to the need to approximate the dynamics well.

\subsection{Basic Lemmas}

In this subsection, we present basic concentration-bounds based lemmas that used to compute the required sample complexity to obtain good dynamics approximation, with high probability.

    


\begin{lemma}\label{lemma: chernoff case UCFD}
    Let $\delta' \in (0, 1)$, layer $h \in [H]$ and a state $s_h \in S_h$.
    Let $\pi: S \to \Delta(A)$ be a policy that satisfies $q_h(s_h | \pi, P) \geq \beta$, for $\beta \in (0,1]$.
    Let $m$ be the desired number of visits in $s_h$.
    Then, if running $\pi$ for  
    $T \geq \frac{2}{\beta}(\ln \frac{1}{\delta'} + m)$ episodes, the agent will visit state $s_h$ at least $m$ times, with probability at least $1 - \delta'$.
\end{lemma}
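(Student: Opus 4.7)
The plan is a direct application of a multiplicative Chernoff bound to the i.i.d.\ indicators of visiting $s_h$ across episodes. Concretely, for $t = 1, \dots, T$, let $X_t = \mathbb{I}[\text{episode } t \text{ visits } s_h]$. Since each episode is generated independently by running $\pi$ under the (unknown) dynamics $P$, the $X_t$ are i.i.d.\ Bernoulli with parameter $p := q_h(s_h \mid \pi, P) \geq \beta$. We want to upper bound $\Pr[\sum_{t=1}^T X_t < m]$ by $\delta'$.

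First I would apply the standard multiplicative Chernoff bound
\[
    \Pr\left[\sum_{t=1}^T X_t \leq (1-\eta) T p \right] \leq \exp\!\left(-\frac{\eta^2 T p}{2}\right)
\]
valid for any $\eta \in [0,1]$, and then choose $\eta$ so that $(1-\eta) T p = m$, i.e.\ $\eta = (Tp - m)/(Tp)$ (note $Tp \geq m$ under our choice of $T$, so $\eta \in [0,1]$). Substituting yields the bound
\[
    \Pr\left[\sum_{t=1}^T X_t \leq m \right] \leq \exp\!\left(-\frac{(Tp - m)^2}{2 T p}\right).
\]

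It then suffices to verify that for $T \geq \tfrac{2}{\beta}(\ln(1/\delta') + m)$ we have $(Tp-m)^2 / (2Tp) \geq \ln(1/\delta')$. Since $p \geq \beta$, increasing $p$ only strengthens the bound, so I may set $p = \beta$ for the worst case. Writing $L := \ln(1/\delta')$ and using $T\beta = 2(L+m)$, we get $T\beta - m = 2L + m$, and hence
\[
    \frac{(T\beta - m)^2}{2 T \beta} = \frac{(2L+m)^2}{4(L+m)} \geq \frac{4L^2 + 4Lm}{4(L+m)} = L,
\]
which gives the desired $\exp(-L) = \delta'$ tail. There is no real obstacle here; the only thing to watch is the inequality $(2L+m)^2 \geq 4L(L+m)$, which follows by expansion since $m^2 \geq 0$. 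Monotonicity in $p$ handles the case $p > \beta$, completing the proof.
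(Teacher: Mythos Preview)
Your proof is correct and follows exactly the approach the paper intends: the paper's proof is the single line ``Follows from multiplicative Chernoff bound,'' and the detailed computation you give (choosing $\eta=(Tp-m)/(Tp)$ and verifying $(2L+m)^2 \geq 4L(L+m)$) matches the version of this argument the paper spells out elsewhere (in the proof of Lemma~\ref{lemma: G_1 KCFD}). Your additional care in checking monotonicity in $p$ and $T$ is a nice touch that the paper glosses over.
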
 

\begin{proof}
    Follows form multiplicative Chernoff bound. 
\end{proof}

\begin{lemma}\label{lemma: UCFD-sample complexity-by-Huver-Carol}
    Let $\gamma, \delta_1 \in (0,1)$, $h \in [H-1]$ and $(s_h,a_h) \in S_h \times A$.
    For every $s_{h+1} \in S_{h+1}$, denote by $n(s_{h+1} | s_h, a_h)$ the number of times the agent observed a trajectory contains the triplet $(s_h, a_h, s_{h+1})$, out of $m$ trajectories that contain the pair $(s_h, a_h)$.
    
    Define for every $s_{h+1} \in S_{h+1}$: $\widehat{P}(s_{h+1} | s_h, a_h) = \frac{n(s_{h+1} | s_h, a_h)}{N_P(\gamma, \delta_1)}$.
    
    Then, for $N_P(\gamma, \delta_1) \geq \frac{2}{\gamma^2}\Big(\ln \Big(\frac{1}{\delta_1} + (|S|+1)\ln{2} \Big) \Big)$ we have with probability at least $1 - \delta_1$
    \[
        \|P(\cdot  | s_h, a_h) - \widehat{P}(\cdot  | s_h, a_h) \|_1
        \leq \gamma .
    \]
\end{lemma}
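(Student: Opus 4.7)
The plan is to invoke the Bretagnolle-Huber-Carol inequality (a standard concentration inequality for the $\ell_1$ deviation of an empirical multinomial distribution from its mean). First I would set up the sampling model: for the fixed pair $(s_h, a_h)$, conditioning on the $N_P(\gamma, \delta_1)$ trajectories in which $(s_h, a_h)$ was actually visited, the observed next states $s_{h+1}$ are i.i.d.\ draws from the categorical distribution $P(\cdot \mid s_h, a_h)$ supported on $S_{h+1}$. Thus the vector $\widehat{P}(\cdot \mid s_h, a_h)$ defined in the statement is precisely the empirical distribution of $N_P(\gamma, \delta_1)$ i.i.d.\ samples from $P(\cdot \mid s_h, a_h)$.

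Next I would apply the Bretagnolle-Huber-Carol bound: for a multinomial over $k$ categories with $N$ i.i.d.\ samples,
\[
    \Pr\bigl[\|\widehat{P}(\cdot \mid s_h, a_h) - P(\cdot \mid s_h, a_h)\|_1 \geq \gamma\bigr]
    \leq
    2^{k+1}\exp\!\bigl(-N \gamma^2 / 2\bigr).
\]
Here $k = |S_{h+1}| \leq |S|$, so the right-hand side is at most $2^{|S|+1}\exp(-N\gamma^2/2)$.

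The final step is to solve for the number of samples that makes this failure probability at most $\delta_1$. Taking logarithms,
\[
    2^{|S|+1}\exp(-N\gamma^2/2) \leq \delta_1
    \iff
    N \geq \tfrac{2}{\gamma^2}\bigl(\ln(1/\delta_1) + (|S|+1)\ln 2\bigr),
\]
which is exactly the assumed lower bound on $N_P(\gamma, \delta_1)$. Plugging this in yields the claimed inequality with probability at least $1 - \delta_1$.

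There is no real obstacle here beyond having the right concentration inequality available; the only subtlety is being careful that the $N_P(\gamma, \delta_1)$ trajectories used in the construction of $\widehat{P}(\cdot \mid s_h, a_h)$ indeed correspond to i.i.d.\ draws of $s_{h+1} \sim P(\cdot \mid s_h, a_h)$. This follows because, given that the trajectory contains $(s_h, a_h)$, the next-state distribution depends only on $(s_h, a_h)$ by the Markov property, independently of the policy used to reach $s_h$ and of the context.
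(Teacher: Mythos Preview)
Your proposal is correct and follows essentially the same approach as the paper: apply the Bretagnolle--Huber--Carol inequality with $k = |S_{h+1}| \le |S|$ categories, then solve for the sample size that drives the failure probability below $\delta_1$. If anything, you are slightly more careful than the paper in explicitly justifying the i.i.d.\ structure of the next-state samples via the Markov property.
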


\begin{proof}
    By Bretagnolle Huber-Carol inequality we have
    \begin{align*}
        \mathbb{P}[ \|P(\cdot  | s_h, a_h) - \widehat{P}(\cdot  | s_h, a_h) \|_1
        \geq \gamma]
        &=
        \mathbb{P}[\sum_{s_{h+1} \in S_{h+1}}|\frac{n(s_{h+1} | s_h, a_h)}{N_P(\gamma, \delta_1)} - P(s_{h+1}  | s_h, a_h) | ]
        \\
        &\leq
        2^{|S_{h+1}|+1} \exp{(-\frac{N_P(\gamma, \delta_1) \gamma ^2}{2})}
        \\
        &\leq 2^{|S|+1}\exp{(-\frac{N_P(\gamma, \delta_1) \gamma ^2}{2})}
        \\
        &\leq \delta_1
        \iff
        N_P(\gamma, \delta_1) \geq  \frac{2}{\gamma^2}\Big(\ln \frac{1}{\delta_1} + (|S| + 1)\ln{2}  \Big).
    \end{align*}
\end{proof}

\begin{corollary}[Sample complexity for approximating the dynamics]\label{corl: Sample complexity to approx dynamics UCFD}
    Let $\gamma = \frac{\epsilon}{48|S|H^2}$ and $\delta_1 = \frac{\delta}{6 |S||A| H} $. Then
    \[
        N_P(\gamma, \delta_1) = 
        O\Big(
        \frac{ H^4 |S|^2}{\epsilon^2}\Big(\ln \frac{|S||A|H}{\delta} + |S|  \Big)
        \Big).
    \]
\end{corollary}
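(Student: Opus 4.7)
The plan is to apply Lemma~\ref{lemma: UCFD-sample complexity-by-Huver-Carol} directly: that lemma states that any choice of $N_P(\gamma,\delta_1) \ge \tfrac{2}{\gamma^2}\bigl(\ln(1/\delta_1) + (|S|+1)\ln 2\bigr)$ already guarantees the desired $\ell_1$-closeness of $\widehat{P}(\cdot|s_h,a_h)$ to $P(\cdot|s_h,a_h)$ with probability at least $1-\delta_1$ (via Bretagnolle--Huber--Carol). So the corollary is just a matter of plugging in the specified values of $\gamma$ and $\delta_1$ and reading off the asymptotic bound.

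First I would handle the prefactor $2/\gamma^2$. Substituting $\gamma = \epsilon/(48|S|H^2)$ yields $2/\gamma^2 = 2\cdot 48^2\,|S|^2 H^4/\epsilon^2 = O(|S|^2 H^4 / \epsilon^2)$, which is the leading multiplicative factor in the claimed bound. Next I would expand the term in parentheses: the confidence contribution is $\ln(1/\delta_1) = \ln(6|S||A|H/\delta) = O(\log(|S||A|H/\delta))$, and the next-state-support contribution is $(|S|+1)\ln 2 = O(|S|)$. Summing gives the $\bigl(\log(|S||A|H/\delta) + |S|\bigr)$ factor appearing in the statement.

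Multiplying the prefactor by the parenthesized sum yields exactly $O\bigl(\tfrac{H^4|S|^2}{\epsilon^2}(\log(|S||A|H/\delta) + |S|)\bigr)$, matching the claim. There is really no obstacle: the corollary is a routine specialization of the preceding lemma, and the only things to be careful about are (i) keeping both terms inside the parentheses, since for moderate $|S|$ the linear-$|S|$ term dominates the logarithm and must not be dropped, and (ii) folding the constants ($48^2$, $6$, $\ln 2$) into the $O(\cdot)$ notation.
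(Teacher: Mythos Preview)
Your proposal is correct and takes essentially the same approach as the paper: the corollary is an immediate specialization of Lemma~\ref{lemma: UCFD-sample complexity-by-Huver-Carol}, obtained by substituting the stated values of $\gamma$ and $\delta_1$ into the bound $\tfrac{2}{\gamma^2}\bigl(\ln(1/\delta_1) + (|S|+1)\ln 2\bigr)$ and absorbing constants into the $O(\cdot)$. The paper provides no separate proof for this corollary, so your direct computation is exactly what is intended.
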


\begin{lemma}[Dynamics distance bound implies occupancy measure distance bound]\label{lemma: occ mesure bound - basic}
    Let $h \in [H]$ and fix a policy $\pi: S \to \Delta(A)$. 
    Assume that for all $k < h$ and $(s_k, a_k) \in S_k \times A$ we have $$\| P(\cdot | s_k, a_k) - \widetilde{P}(\cdot | s_k, a_k) \|_1 \leq \gamma .$$
    Then, $$ \| q_h(\cdot| \pi ,P) -  q_h(\cdot| \pi ,\widetilde{P})\|_1 \leq \gamma h .$$
\end{lemma}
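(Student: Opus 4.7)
The plan is to prove this by induction on $h$, which is the standard approach for bounding occupancy-measure discrepancies in a layered MDP. The base case $h=0$ is trivial: both $q_0(\cdot \mid \pi, P)$ and $q_0(\cdot \mid \pi, \widetilde{P})$ are the point mass on $s_0$, so the $\ell_1$ distance is $0 = \gamma \cdot 0$.

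For the inductive step, I would assume the bound $\| q_{h-1}(\cdot \mid \pi, P) - q_{h-1}(\cdot \mid \pi, \widetilde{P}) \|_1 \leq \gamma(h-1)$ and derive it for $h$. Writing the one-step recursion
\[
    q_h(s_h \mid \pi, P) = \sum_{s_{h-1} \in S_{h-1}, a_{h-1} \in A} q_{h-1}(s_{h-1} \mid \pi, P)\, \pi(a_{h-1} \mid s_{h-1})\, P(s_h \mid s_{h-1}, a_{h-1}),
\]
and similarly for $\widetilde{P}$, the key algebraic trick is the standard add-and-subtract: insert and subtract the hybrid term $q_{h-1}(s_{h-1} \mid \pi, \widetilde{P})\, \pi(a_{h-1} \mid s_{h-1})\, P(s_h \mid s_{h-1}, a_{h-1})$. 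This splits $q_h(s_h \mid \pi, P) - q_h(s_h \mid \pi, \widetilde{P})$ into two pieces: one that captures the error in the previous-layer occupancy (multiplied by the true kernel $P$), and one that captures the one-step dynamics error at layer $h-1$ (multiplied by the approximated occupancy $q_{h-1}(\cdot \mid \pi, \widetilde{P})$).

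I would then take the $\ell_1$ norm over $s_h$ and bound each piece separately. For the first piece, using that $\sum_{s_h} P(s_h \mid s_{h-1}, a_{h-1}) = 1$ and $\sum_{a_{h-1}} \pi(a_{h-1} \mid s_{h-1}) = 1$, it collapses to $\| q_{h-1}(\cdot \mid \pi, P) - q_{h-1}(\cdot \mid \pi, \widetilde{P}) \|_1$, which by the induction hypothesis is at most $\gamma(h-1)$. For the second piece, swapping the order of summation yields
\[
    \sum_{s_{h-1}, a_{h-1}} q_{h-1}(s_{h-1} \mid \pi, \widetilde{P})\, \pi(a_{h-1} \mid s_{h-1}) \, \| P(\cdot \mid s_{h-1}, a_{h-1}) - \widetilde{P}(\cdot \mid s_{h-1}, a_{h-1}) \|_1,
\]
which by the assumption is bounded by $\gamma$ times $\sum_{s_{h-1}, a_{h-1}} q_{h-1}(s_{h-1} \mid \pi, \widetilde{P})\, \pi(a_{h-1} \mid s_{h-1}) = 1$, hence $\gamma$. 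Adding the two contributions gives $\gamma(h-1) + \gamma = \gamma h$, completing the induction.

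There is no real obstacle here; the proof is essentially a textbook simulation lemma / performance-difference argument specialized to occupancy measures in a loop-free layered MDP. The only subtlety worth noting is that the hypothesis bounds the dynamics error only on layers $k < h$, which is exactly what is needed since the recursion for $q_h$ involves only transitions out of layers $0, 1, \ldots, h-1$; so using the hypothesis at layer $h-1$ in the second term and invoking induction (which only uses the hypothesis at layers $k < h-1$) in the first term is consistent.
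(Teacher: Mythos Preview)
Your proposal is correct and matches the paper's proof essentially line for line: induction on $h$ with the trivial base case at $s_0$, the same add-and-subtract of the hybrid term $q_{h-1}(\cdot\mid\pi,\widetilde P)\,\pi\,P$, and the same two-piece bound yielding $\gamma(h-1)+\gamma$.
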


\begin{proof}
    We prove the lemma using induction on the horizon $h$.
    The base case is $h = 0$. As there exists unique start state $s_0$ the claim holds trivially.
    
    We assume correctness for all $i < h$ and show for $i = h$.
    By definition we have
    \begingroup
    \allowdisplaybreaks
    \begin{align*}
        &\| q_h(\cdot| \pi ,P) -  q_h(\cdot| \pi ,\widetilde{P})\|_1
        =
        \\
        = &
        \sum_{s_h \in S_h} |q_h(s_h| \pi ,P) -  q_h(s_h| \pi ,\widetilde{P})|
        \\
        = &
        \sum_{s_{h-1} \in S_{h-1}} \sum_{a_{h-1} \in A} \sum_{s_h \in S_h}
        \pi(a_{h-1}| s_{h-1})
        |q_{h-1}(s_{h-1}| \pi ,P) P(s_h |s_{h-1}, a_{h-1} )
        -  
        q_{h-1}(s_{h-1}| \pi ,\widetilde{P})\widetilde{P}(s_h |s_{h-1}, a_{h-1} )|
        \\
        \leq &
        \sum_{s_{h-1} \in S_{h-1}} \sum_{a_{h-1} \in A} \pi(a_{h-1}| s_{h-1})P(s_h |s_{h-1}, a_{h-1} )
        \sum_{s_h \in S_h}
        |q_{h-1}(s_{h-1}| \pi ,P) - q_{h-1}(s_{h-1}| \pi ,\widetilde{P})|
        \\
        & +
        \sum_{s_{h-1} \in S_{h-1}}
        q_{h-1}(s_{h-1}| \pi ,\widetilde{P})
        \sum_{a_{h-1} \in A} 
        \pi(a_{h-1}| s_{h-1})
        \sum_{s_h \in S_h}
        | P(s_h |s_{h-1}, a_{h-1} ) - \widetilde{P}(s_h |s_{h-1}, a_{h-1} )|
        \\
        \leq &
        \|q_{h-1}(\cdot| \pi ,P) - q_{h-1}(\cdot | \pi ,\widetilde{P})\|_1
        \\
        & +
        \sum_{s_{h-1} \in S_{h-1}}
        q_{h-1}(s_{h-1}| \pi ,\widetilde{P})
        \sum_{a_{h-1} \in A} 
        \pi(a_{h-1}| s_{h-1})
        \| P(\cdot |s_{h-1}, a_{h-1} ) - 
        \widetilde{P}(\cdot |s_{h-1}, a_{h-1} )\|_1
        \\
        \leq &
        \gamma (h-1) + \gamma = \gamma h
    \end{align*}
    \endgroup
\end{proof}

\subsection{Algorithm}

We start by an overview of our algorithms.

Algorithm EXPLORE-UCFD (Algorithm~\ref{alg: EXPLORE-UCFD}) works in phases, where in phase $h \in [H]$ we approximate layer $h$ dynamics. We first collect samples for each (non-negligible) state in layer $h$ and then use them to approximate the dynamics, using simple tabular estimation. Using the same sample we also estimate the rewards using ERM oracle. The required accuracy for each state-action pair $(s_h,a_h)$ is determined by the accuracy-per-state function $\epsilon^i_\star(\cdot)$ (for $i \in \{1,2\}$, with accordance to the used loss function) using $\widehat{p}_{s_h}:= q_h(s_h| \widehat{p}_{s_h}, \widehat{P})$.  
We use $\epsilon^1_\star(p_s)$ for the $\ell_1$ loss, which defines as
\[
    \epsilon^1_\star(p_s) := \epsilon_\star(p_s) = 
    \begin{cases}
        1 &, \text{ if } p_{s} < \frac{\epsilon}{24|S|}\\
        \frac{\epsilon}{24 H |S| |A|} &, \text{ if } p_{s}  > \frac{1}{|S| }\\
        \frac{\epsilon}{24 p_{s}  |S| |A| }&, \text{ if } p_{s}  \in [ \frac{\epsilon}{24 |S|} , \frac{1}{|S| } ]\\
    \end{cases}
\]
while for the $\ell_2$ loss we use $\epsilon^2_\star(p_s) = (\epsilon_\star(p_s))^2$.

After collecting sufficient number of samples for every non-negligible state in layers up to $ h - 1$, we have a good approximation of the dynamics up to layer $h-1$. This yields a good approximation of the occupancy measure of layer $h$ for any policy $\pi$ (regardless of it being context-dependent or not) .

Hence, given a state $s_h\in S_h$ and approximate dynamics $\widehat{P}$ we compute $\widehat{\pi}_{s_h} = \arg \max_{\pi : S \to A} q_h(s_h| \pi,\widehat{P})$. We run $\widehat{\pi}_{s_h}$ to generate the sample of $s_h$. Since the dynamics is context-free, the policy is the same for all of the contexts.

In order to control the number episodes sampled we define non-negligible states as $\beta$-reachable w.r.t $\widehat{P}$, i.e., they have $q_h(s_h| \widehat{\pi}_{s_h},\widehat{P})\geq \beta$.

At the end of the sampling we have for each $\beta$-reachable state $s_h\in S_h$  with respect to $\widehat{P}$, and every action $a_h$ a data set contains tuples of the form $(s_h, a_h, s_{h+1})$ to approximate the transition probability matrix via tabular mean estimation. (Recall that here the dynamics do no depend on the context, this will change in the context-dependent dynamics case.)
For the rewards, we use tuples of the form $((c,s_h, a_h), r_h)$ and run the ERM to approximate the rewards.

Algorithm EXPLOIT-UCFD (Algorithm~\ref{alg: EXPLOIT-UCFD}) get as inputs the MDP parameters, the dynamics approximation $\widehat{P}$ and the functions approximate the rewards (that computed using EXPLORE-UCFD algorithm). Given a context $c$ it computed the approximated MDP $\widehat{\mathcal{M}}(c)$ and use it to compute a near optimal policy $\pi^\star_c$. Then, it run $\pi^\star_c$ to generate trajectory.
Recall that $\widehat{\mathcal{M}}(c)=(S,A,\widehat{P},s_0,\widehat{r}^{c},H)$ where we define $\forall s\in S, a\in A: \widehat{r}^{c}(s,a) = f_{s,a}(c)$, and $\widehat{P}$ is computed using tabular approximation.


\begin{algorithm}
    \caption{Explore Unknown and Context-Free Dynamics CMDP(EXPLORE-UCFD)}
    \label{alg: EXPLORE-UCFD}
    \begin{algorithmic}[1]
        \State \textbf{inputs: }
        \begin{itemize}
            \item MDP parameters: $S = \{S_0, S_1, \ldots, S_H\} $ - a layered states space, $A$ - a finite actions space, $s_0$ - a unique start state, $H$ - the horizon length.
            \item Accuracy and confidence parameters: $\epsilon$,$\delta$.
            \item $\forall s \in S , a \in A : \;\; \mathcal{F}^R_{s,a}$ - the function classes use to approximate the rewards function.
            \item $N_R(\mathcal{F}, \epsilon, \delta)$ - sample complexity function for the ERM oracle.
            \item $\gamma$ - the required approximation error of the dynamics , $\beta$ - the reachability parameter. (We have $\frac{\epsilon}{24 |S|} \geq \beta \geq 2\gamma H$.)
            \item $N_P(\gamma, \delta_1)$ - sample complexity function for approximating the dynamics using tabular approximation.
            \item $\ell$ - loss function (assumed to be one of $\ell_1$ or $\ell_2$) and the appropriate accuracy-per-state function $\epsilon^i_\star$ ( for $i \in \{1,2\}$ in accordance to $\ell$).
        \end{itemize}
        
        \State {set $\delta_1= \frac{\delta}{6|S||A|H}$}
        
        \State{set for all $(s,a) \in S \times A$: $n(s,a) = 0 $ and for all $(s,a,s') \in S \times A \times S$: $n(s'| s,a) = 0 $.}
        \For{$h \in [H-1] $}
            \State
            {
                let $s_{sink} \notin S$ be a new state which denotes a sink.\\
                define the approximated dynamics for all $(s,a,s') \in S \times A \times S$:
                $
                    \widehat{P}(s' | s,a) = \frac{n(s' | s,a)}{n(s,a)} \mathbb{I}[n(s,a) \geq N_P(\gamma, \delta_1)]
                $
                and 
                $\widehat{P}(s_{sink} | s,a) =  \mathbb{I}[n(s,a) < N_P(\gamma, \delta_1)]$.
            }
            \For{$s_h\in S_h$}
                \State{
                $(\widehat{\pi}_{s_h}, \widehat{p}_{s_h}) \gets 
                \texttt{FFP}(S\cup \{s_{sink}\}, A, \widehat{P}, s_0, H, s_h)$.}
                \Comment{$\widehat{p}_{s_h}$ is the highest probability to visit $s_h$ under $\widehat{P}$ and $\widehat{\pi}_{s_h}$ is the policy that reach that probability.}
                
                \If{$\widehat{p}_{s_h}  \geq \beta$}
                    \For{ $a_h \in A$}
                        \State{compute 
                        $
                            T_{s_h,a_h} =
                            \lceil 
                            \frac{2}{\widehat{p}_{s_h} - \gamma h}(\ln(\frac{1}
                            {\delta_1})
                            +
                            \max{\{
                            N_R (\mathcal{F}^R_{s_h,a_h} ,\epsilon^i_\star(\widehat{p}_{s_h}), \delta_1), N_P(\gamma, \delta_1)}\}) 
                            \rceil
                        $} 
                        \State{initialize $Sample(s_h,a_h) = \emptyset$}
                        \State{set $\widehat{\pi}_{s_h}(s_h) \gets a_h$}
                        \For{$t = 1, 2, \ldots, T_{s_h, a_h}$}
                            \State{observe context $c$.}
                            
                            \State{run $\widehat{\pi}_{s_h}$ to generate trajectory $\tau_t$} 
                        
                            \If{$(s_h, a_h, r_h, s_{h+1}) \in \tau_t$ for some $r_h \in [0,1]$ and $s_{h+1} \in S_{h+1}$}
                                
                                \State{update sample: ${Sample(s_h,a_h) = Sample(s_h,a_h) + \{((c,s_h,a_h), r_h)\}}$}
                                
                                \State{update counters: ${n(s_h, a_h) \gets n(s_h, a_h) + 1,\;\; n(s_{h+1} | s_h, a_h) \gets n(s_{h+1}| s_h, a_h) + 1}$}
                                
                            \EndIf{}
                        \EndFor{}
                        
                        \If{$|Sample(s_h,a_h)|\geq \max{\{
                            N_R (\mathcal{F}^R_{s_h,a_h} ,\epsilon_\star(\widehat{p}_{s_h}), \delta_1), N_P(\gamma, \delta_1)}\}$ }
                        
                         \State{call Oracle: 
                        $
                            f_{s_h,a_h} = \texttt{ERM}(\mathcal{F}^R_{s_h,a_h}, Sample(s_h,a_h), \ell) 
                        $}
                        \Else
                        \State{\textbf{return } \texttt{FAIL}}
                        \EndIf{}
                \EndFor{}
            \Else
            \State{set: $\forall a\in A:  f_{s_h,a} = 0$}
            \EndIf{}
        \EndFor{}
    \EndFor{}
    
    \State{ \textbf{return }
        $
            F =
            \{f_{s,a} : \; \forall s\in S, a\in A\}, \widehat{P}
        $}
    \end{algorithmic}
\end{algorithm}

\begin{algorithm}
    \caption{Exploit CMDP for Unknown Context-Free-Dynamics (EXPLOIT-UCFD)}
    \label{alg: EXPLOIT-UCFD}
    \begin{algorithmic}[1]
        \State \textbf{inputs: }
        \begin{itemize}
            \item The MDP parameters: $S = \{S_0, S_1, \ldots, S_H\} $,$A$,$s_0$,$H$.
            \item $\widehat{P}$ approximation of the context-free dynamics.
            \item $\{f_{s,a} | \forall (s,a) \in S \times A\}$ - the function use to approximate the reward for each state-action pair (as function of the context).
        \end{itemize}
        \For{$t=1,2,...$}
            \State{observe context $c_t$}
            \State {define 
            $\forall s\in S, a\in A: \widehat{r}^{c_t}(s,a) = f_{s,a}(c_t)$, $\forall a\in A: \widehat{r}^{c_t}(s_{sink},a) = 0$}
            \State{define the approximated MDP associated with $c_t$: $\widehat{\mathcal{M}}(c_t)=(S\cup\{s_{sink}\}, A,\widehat{P},s_0,\widehat{r}^{c_t},H)$}
            \State{$(\pi_t , V_t) \gets \texttt{Planning}(\widehat{\mathcal{M}}(c_t))$}
            \State{run $\pi_t$ in episode $t$.}
        \EndFor{}
\end{algorithmic}
\end{algorithm}


\subsection{Analysis}

\subsubsection{Analysis Outline}
We provide analysis for both $\ell_1$ and $\ell_2$ loss functions, in the agnostic case.

In the analysis, we bound the error caused by the dynamics approximation (see Sub-subsection~\ref{subsubsec:dynamics-error-UCFD})
and the error caused by the rewards approximation for both the $\ell_2$ loss (see Sub-subsection~\ref{subsubsec:rewards-error-l-2-UCFD}) and the $\ell_1$ loss (see Sub-subsection~\ref{subsubsec:rewards-error-l-1-UCFD}).

We combine both errors to bound the expected value difference between the true model $\mathcal{M}(c)$ and $\widehat{\mathcal{M}}(c)$, for any context-dependent policy $\pi = (\pi_c)_{c \in \mathcal{C}}$, with high probability. 
(See Lemma~\ref{lemma: l_2 val-dif UCFD } for the $\ell_2$ loss and Lemma~\ref{lemma: l_1 val-dif UCFD } for the $\ell_1$).

Using that bound, we derive a bound on the expected value difference between the optimal context-dependent policy $\pi^\star = (\pi^\star_c)_{c \in \mathcal{C}}$ and our approximated optimal policy $\widehat{\pi}^\star = (\widehat{\pi}^\star_c)_{c \in \mathcal{C}}$, which holds with high probability. (See Theorem~\ref{thm: PAC optimal policy for Known CFD with l_2} and~\ref{thm: PAC optimal policy for Known CFD with l_1}).
.

Lastly, we derive sample complexity bound using known uniform convergence sample complexity bounds for the Pseudo dimension (See Theorem~\ref{thm: pseudo dim}) and the fat-shattering dimension (See Theorem~\ref{thm: fat dim}). For the sample complexity analysis, see Sub-subsection~\ref{subsubsec:smaple-complexity-l-2-UCFD} for the $\ell_2$ loss, and~\ref{subsubsec:smaple-complexity-l-1-UCFD} for the $\ell_1$ loss.

In the following analysis,
we assume that $\frac{\epsilon}{24|S|}\geq \beta \geq 2\gamma H$, and later choose $\beta$ and $\gamma$ that satisfies that.
We also choose $\delta_1 = \frac{\delta}{6 |S||A|H}$.
In addition, we use the following notation.
\begin{definition}
    For every state $s \in S$ we denote by $\widehat{p}_s$ the maximal (over the policies) probability to visit $s$ under the approximated dynamics $\widehat{P}$.
\end{definition}

\subsubsection{Good Events}

We analyze algorithms EXPLOR-UCFD (Algorithm~\ref{alg: EXPLORE-UCFD}) and EXPLOIT-UCFD (Algorithm~\ref{alg: EXPLOIT-UCFD}) under the following good events, which we show that hold with high probability.

\paragraph{Event $G_1$.}
For every $h \in [H-1]$ let $G^h_1$ denote the good event in which we have for every state and action pair $(s_h ,a_h ) \in S_h \times A$ such that $s_h$ is $\beta$-reachable for $\widehat{P}$ that $n(s_h, a_h) \geq \max\{N_R(\mathcal{F}^R_{s,a} ,\epsilon^2_\star(\widehat{p}_{s_h}), \delta_1), N_P(\gamma, \delta_1)\}$ for the $\ell_2$ loss 
( $n(s_h, a_h) \geq \max\{N_R(\mathcal{F}^R_{s,a} ,\epsilon^1_\star(\widehat{p}_{s_h}), \delta_1), N_P(\gamma, \delta_1)\}$ for the $\ell_1$ loss) 
samples of were collected. 
We define ${G_1 = \cap_{h \in [H-1]}G^h_1}$.

\paragraph{Event $G_2$.}
For every $h \in [H-1]$ let $G^h_2$ denote the good event in which we have for every state and action pair $(s_h ,a_h) \in S_h \times A$ such that $s_h$ is $\beta$-reachable for $\widehat{P}$ that $\| P(\cdot | s_h, a_h) - \widehat{P}(\cdot | s_h, a_h) \|_1 \leq \gamma$ . (Here, we omit the entry $\widehat{P}(s_{sink}|s,a)$ of $\widehat{P}$).
We define ${G_2 = \cap_{h \in [H-1]}G^h_2}$.

\paragraph{How are we about to use $G_1$ and $G_2$?}
Intuitively, given that $G_1$ holds, we have collected sufficient number of samples for each $\beta$-reachable state $s$ and every action $a$. Hence, by $\widehat{P}$ definition we have that $\widehat{P}(s'|s,a)= n(s,a,s')/n(s,a) \;\; \forall s' \neq s_{sink}$ and $\widehat{P}(s_{sink}|s,a)=0$. Thus, we can ignore the entry related with the sink (which does not exist in the true dynamics $P$), and have that $\|P(\cdot | s,a) - \widehat{P}(\cdot | s,a)\|_1 \leq \gamma$ with high probability, by Lemma~\ref{lemma: UCFD-sample complexity-by-Huver-Carol}.

\paragraph{Event $G_3$.}
For every $h \in [H-1]$ let $G^h_3$ denote the good event in which we have for every state and action pair $(s_h ,a_h ) \in S_h \times A$ such that $s_h$ is $\beta$-reachable for $\widehat{P}$ that 
$$
    \mathbb{E}_{c \sim \mathcal{D}}[(f_{s_h, a_h}(c) - r^c(s_h, a_h))^2] \leq \epsilon^2_\star(\widehat{p}_{s_h}) + \alpha^2_2(\mathcal{F}^R_{s_h, a_h})
$$ 
for the $\ell_2$ loss 
(${\mathbb{E}_{c \sim \mathcal{D}}[|f_{s_h, a_h}(c) - r^c(s_h, a_h)|] \leq \epsilon_\star(\widehat{p}_{s_h}) + \alpha_1(\mathcal{F}^R_{s_h, a_h}})$ for the $\ell_1$ loss).
We define $G_3 = \cap_{h \in [H-1]}G^h_3$.

\subsubsection{Proving The Good Events Hold With High Probability}

\begin{lemma}\label{lemma: G_3 given G_1 UCFD}
    $\mathbb{P}[G_3 | G_1] \geq 1 - \frac{\delta}{6}$.
\end{lemma}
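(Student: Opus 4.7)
\textbf{Proof proposal for Lemma~\ref{lemma: G_3 given G_1 UCFD}.}

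The plan is to reduce the claim to the per-pair ERM guarantee and then union bound over all relevant pairs. First I would fix a layer $h \in [H-1]$ and a state-action pair $(s_h,a_h) \in S_h \times A$ with $s_h$ being $\beta$-reachable for $\widehat{P}$. Conditioning on $G_1$, by definition we have collected at least $N_R(\mathcal{F}^R_{s_h,a_h},\epsilon^2_\star(\widehat{p}_{s_h}),\delta_1)$ (respectively $N_R(\mathcal{F}^R_{s_h,a_h},\epsilon_\star(\widehat{p}_{s_h}),\delta_1)$ for $\ell_1$) samples for that pair, which is exactly the number for which the ERM oracle has its uniform convergence guarantee.

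Next I would argue that the collected examples are i.i.d.\ from the correct distribution, i.e.\ $c \sim \mathcal{D}$ together with a fresh reward draw $R^c(s_h,a_h)$. This is the only place where context-freeness of the dynamics is used: the exploration policy $\widehat{\pi}_{s_h}$ (with the override $\widehat{\pi}_{s_h}(s_h) \gets a_h$) is selected from $\widehat{P}$ alone and does not depend on the context, and the probability of the trajectory passing through $(s_h,a_h)$ therefore factors out cleanly and is independent of $c$. Consequently, conditioning on the indicator that $(s_h,a_h)$ appears in the trajectory leaves the marginal of $c$ invariant at $\mathcal{D}$, and hence each added example $((c,s_h,a_h),r)$ is an i.i.d.\ draw from the target distribution. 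A minor but necessary remark is that stopping is at a fixed number of episodes $T_{s_h,a_h}$ (not an adaptive stopping time on the sample size), so on $G_1$ we may simply take the first $m_{s_h,a_h}$ collected examples and these are i.i.d.

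Applying the sample complexity function $N_R$ as defined in Paragraph~\ref{par: reward function approx gurantees for all s,a}, the ERM output $f_{s_h,a_h}$ satisfies the desired guarantee
\[
\mathbb{E}_{c \sim \mathcal{D}}[\ell(f_{s_h,a_h}(c), r^c(s_h,a_h))] \leq \epsilon^i_\star(\widehat{p}_{s_h}) + \alpha_i(\mathcal{F}^R_{s_h,a_h})
\]
with probability at least $1-\delta_1$, for the appropriate $i \in \{1,2\}$ and loss $\ell$. I would then take a union bound over all $\beta$-reachable pairs $(s_h,a_h)$ across all layers $h \in [H-1]$; there are at most $\sum_{h=0}^{H-1}|S_h|\,|A| = |S|\,|A|$ such pairs, and with $\delta_1 = \delta/(6|S|\,|A|\,H)$ the failure probability is bounded by $|S|\,|A|\,\delta_1 = \delta/(6H) \leq \delta/6$. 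States that are not $\beta$-reachable for $\widehat{P}$ are not constrained by $G_3$, so they contribute nothing to the failure probability. This establishes $\mathbb{P}[G_3 \mid G_1] \geq 1 - \delta/6$.

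The only subtle step is the i.i.d.\ claim for the collected samples; it hinges crucially on the dynamics being context-free, which makes the visit indicator to $(s_h,a_h)$ independent of the context. This is exactly the point where the analysis for context-dependent dynamics in Section~\ref{sec:UCDD} must diverge and instead work at the layer level via the $(\gamma,\beta)$-good machinery. Everything else is a routine application of the ERM guarantee and a union bound.
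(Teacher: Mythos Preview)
Your proposal is correct and follows the same approach as the paper: invoke the per-pair ERM guarantee (valid given $G_1$ and the i.i.d.\ sample argument you articulate) and then union bound over all $\beta$-reachable state-action pairs. The paper's proof is a one-line ``by the ERM guarantees for each state-action pair when combined using union bound''; your version usefully fills in the i.i.d.\ justification and the explicit arithmetic $|S|\,|A|\cdot\delta_1 = \delta/(6H) \leq \delta/6$.
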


\begin{proof}
    By the ERM guarantees for each state-action pair when combined using union bound.
\end{proof}

\begin{lemma}[occupancy measure lower bound]\label{lemma: true occ mesure lower bound}
    Let $h \in [H]$ and a policy $\pi: S \to \Delta(A)$. 
    Under the good events $G^k_1$ and $G^k_2$ for every $k < h$,  for every state $s_h \in S_h$ it holds that
    $$ q_h(s_h| \pi ,P) \geq  q_h(s_h| \pi ,\widehat{P}) - \gamma h .$$
\end{lemma}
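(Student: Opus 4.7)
The strategy is to reduce the claim to Lemma~\ref{lemma: occ mesure bound - basic}, which already bounds the $L_1$ gap between the occupancy measures of two dynamics, provided those dynamics are uniformly within $\gamma$ in $L_1$ at every state-action pair in layers below $h$. The obstacle is that $G^k_2$ only controls $\|\widehat{P}(\cdot|s_k,a_k)-P(\cdot|s_k,a_k)\|_1$ at states $s_k$ that are $\beta$-reachable under $\widehat{P}$; at non-$\beta$-reachable $s_k$, $\widehat{P}$ deterministically transitions into the sink, so its $L_1$ distance to $P$ is essentially $1$ and the hypothesis of Lemma~\ref{lemma: occ mesure bound - basic} fails. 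I therefore plan to introduce an auxiliary dynamics $\widetilde{P}$ on $S$ (with no sink) defined by $\widetilde{P}(\cdot|s,a) := \widehat{P}(\cdot|s,a)$ (restricted to $S$) at every state $s$ that is $\beta$-reachable under $\widehat{P}$, and $\widetilde{P}(\cdot|s,a) := P(\cdot|s,a)$ at every state $s$ that is not.

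Under $G^k_1 \cap G^k_2$ for all $k<h$, this construction immediately yields $\|\widetilde{P}(\cdot|s_k,a_k)-P(\cdot|s_k,a_k)\|_1 \leq \gamma$ at every $(s_k,a_k)$ in layers below $h$: the bound holds trivially (with value $0$) at non-$\beta$-reachable predecessors, and is exactly $G^k_2$ at $\beta$-reachable ones. Applying Lemma~\ref{lemma: occ mesure bound - basic} with $\widetilde{P}$ gives $\|q_h(\cdot|\pi,P)-q_h(\cdot|\pi,\widetilde{P})\|_1 \leq \gamma h$, and in particular $q_h(s_h|\pi,P) \geq q_h(s_h|\pi,\widetilde{P}) - \gamma h$.

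It remains to prove $q_h(s_h|\pi,\widetilde{P}) \geq q_h(s_h|\pi,\widehat{P})$ for every $s_h \in S_h$, which I will establish by induction on $h$. The base case $h=0$ is trivial, since both sides equal $1$ at the unique start state $s_0$. For the inductive step, I expand the occupancy recursion and split the predecessor sum over $s_{h-1} \in S_{h-1}$ according to whether $s_{h-1}$ is $\beta$-reachable under $\widehat{P}$. On $\beta$-reachable predecessors, $\widetilde{P}$ and $\widehat{P}$ assign identical transition probabilities into $S_h$ (by definition of $\widetilde{P}$), so the inductive hypothesis $q_{h-1}(s_{h-1}|\pi,\widetilde{P}) \geq q_{h-1}(s_{h-1}|\pi,\widehat{P})$ carries through. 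On non-$\beta$-reachable predecessors, the contribution to $q_h(s_h|\pi,\widehat{P})$ is zero because $\widehat{P}$ pushes all mass to $s_{sink} \neq s_h$, whereas the contribution to $q_h(s_h|\pi,\widetilde{P})$ through $P$ is nonnegative. Summing the two cases gives the desired monotonicity, and chaining with the previous inequality yields the lemma. The one subtle point I anticipate when writing this up is making the partition by $\beta$-reachability of the predecessor fully rigorous, in particular recording explicitly that under $\widehat{P}$ any trajectory reaching $s_h \in S_h$ visits only $\beta$-reachable states in layers $0,\ldots,h-1$---a direct consequence of the algorithm's definition of $\widehat{P}$ that nonetheless needs to be stated.
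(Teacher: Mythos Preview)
Your proposal is correct and follows the same three-step skeleton as the paper: introduce an auxiliary dynamics $\widetilde{P}$, apply Lemma~\ref{lemma: occ mesure bound - basic} to one pair of dynamics, and handle the remaining pair by a monotonicity argument. The difference is that your construction is dual to the paper's. The paper defines $\widetilde{P}$ on $S\cup\{s_{sink}\}$ by using the \emph{true} transitions $P$ at states where sufficient samples were collected and sending mass to the sink otherwise; it then applies Lemma~\ref{lemma: occ mesure bound - basic} to the pair $(\widetilde{P},\widehat{P})$ (which agree exactly at non-$\beta$-reachable states and are $\gamma$-close at $\beta$-reachable ones by $G^k_2$), and observes that $q_h(s_h|\pi,P)\geq q_h(s_h|\pi,\widetilde{P})$ is immediate because $\widetilde{P}$ can only divert mass from $P$ into the sink. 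You instead define $\widetilde{P}$ on $S$ using $\widehat{P}$ at $\beta$-reachable states and $P$ elsewhere, apply Lemma~\ref{lemma: occ mesure bound - basic} to the pair $(\widetilde{P},P)$, and then must establish $q_h(s_h|\pi,\widetilde{P})\geq q_h(s_h|\pi,\widehat{P})$ by the layer-by-layer induction you outline. Both routes are valid; the paper's choice buys a one-line monotonicity step in place of your induction, while your choice avoids introducing the sink into the auxiliary dynamics.
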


\begin{proof} 
    Define the dynamics $\widetilde{P}$ for all $k < h$ and $(s,a,s') \in S_k \times A \times S_{k+1}$  as follows: 
    \[
        \widetilde{P}(s' | s, a ) = P(s' | s, a ) \cdot \mathbb{I}[n(s,a) \geq N_P(\gamma, \delta_1)],
    \]
    and
    \[
        \widetilde{P}(s_{sink} | s, a ) = \mathbb{I}[ n(s,a) < N_P(\gamma, \delta_1)].
    \]
    So, under the good events $G^k_1$ and $G^k_2$ for every $k < h$ and every $(s_k, a_k) \in S_k \times A_k$ we have that
    \[
        \| \widetilde{P}(\cdot | s_k, a_k) - \widehat{P}(\cdot | s_k, a_k) \|_1 \leq \gamma.
    \]    
    (For states $s_k$ which are $\beta$-reachable, it follows since $G^k_1$ and $G^k_2$ hold.
    For states $s_k$ which are not $\beta$-reachable we have that $\widetilde{P}$ and $\widehat{P}$ are identical, i.e., they both transition to the sink with probability $1$).
    
    Hence, by Lemma~\ref{lemma: occ mesure bound - basic} we have
    that $ \| q_h(\cdot| \pi ,\widetilde{P}) -  q_h(\cdot| \pi ,\widehat{P})\|_1 \leq \gamma h $, which implies
    that for all $s_h \in S_h$ we have
    \[
        q_h(s_h| \pi ,\widetilde{P})
        \geq 
        q_h(s_h| \pi ,\widehat{P}) -\gamma h.
    \]
    By $\widetilde{P}$ definition, we trivially have for all $h \in [H]$ and $s_h \in S_h$ that 
    $        
        q_h(s_h| \pi ,P)
        \geq
        q_h(s_h| \pi ,\widetilde{P})
    $.
    Hence, we obtained
    \[
        q_h(s_h| \pi ,P)
        \geq        
        q_h(s_h| \pi ,\widetilde{P})
        \geq 
        q_h(s_h| \pi ,\widehat{P}) -\gamma h.
    \]
\end{proof}

\begin{lemma}\label{lemma: prob to G^h_2 given G^h_1}
For every layer $h\in [H-1]$ it holds that 
$
    \mathbb{P}[G^h_2 | G^h_1]\geq 1 - \frac{\delta}{6 H}
$.
\end{lemma}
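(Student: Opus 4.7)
The plan is to condition on $G^h_1$ and then reduce to a direct application of Lemma~\ref{lemma: UCFD-sample complexity-by-Huver-Carol} (Bretagnolle--Huber-Carol) for each $\beta$-reachable state--action pair in layer $h$, followed by a union bound.

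First, I fix an arbitrary $(s_h,a_h) \in S_h \times A$ with $s_h$ $\beta$-reachable for $\widehat{P}$. Under $G^h_1$, the counter $n(s_h,a_h)$ is at least $N_P(\gamma,\delta_1)$, so by the definition of $\widehat{P}$ used in Algorithm~\ref{alg: EXPLORE-UCFD}, the sink entry $\widehat{P}(s_{\mathrm{sink}}\mid s_h,a_h)$ equals $0$ and the remaining entries coincide with the empirical next-state frequencies $n(s'\mid s_h,a_h)/n(s_h,a_h)$. The key observation is that these samples are i.i.d.\ from $P(\cdot\mid s_h,a_h)$: the exploration policy $\widehat{\pi}_{s_h}$ is computed once at the beginning of phase $h$ from the already-fixed dynamics approximation for layers $0,\dots,h-1$, so conditional on reaching $(s_h,a_h)$ the next-state distribution is exactly $P(\cdot\mid s_h,a_h)$ and independent across episodes (and across pairs, since we maintain separate counters as noted in the footnote on i.i.d.\ sampling).

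Next, I apply Lemma~\ref{lemma: UCFD-sample complexity-by-Huver-Carol} with the chosen value of $N_P(\gamma,\delta_1)$ (Corollary~\ref{corl: Sample complexity to approx dynamics UCFD}), which guarantees that with probability at least $1-\delta_1$, $\|P(\cdot\mid s_h,a_h) - \widehat{P}(\cdot\mid s_h,a_h)\|_1 \le \gamma$ for this fixed pair (ignoring the sink entry, which is $0$ for the $\beta$-reachable pairs we care about). A union bound over all at most $|S_h||A|$ such pairs yields the conditional probability of failure at most $|S_h||A|\,\delta_1 \le |S||A|\,\delta_1 = \frac{\delta}{6H}$ by our choice $\delta_1 = \frac{\delta}{6|S||A|H}$. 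This gives $\mathbb{P}[G^h_2 \mid G^h_1] \ge 1 - \frac{\delta}{6H}$.

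The main subtlety is justifying that samples are i.i.d.\ from $P(\cdot\mid s_h,a_h)$; this hinges on the fact that $\widehat{P}$ for layers below $h$ is frozen before the sampling phase for layer $h$ begins, so the exploration policy and the reachability decision are measurable with respect to data collected in earlier phases only. Once this is in place, the rest is a routine combination of Bretagnolle--Huber-Carol with a union bound.
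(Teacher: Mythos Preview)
Your proposal is correct and follows essentially the same approach as the paper: condition on $G^h_1$ to ensure $n(s_h,a_h)\ge N_P(\gamma,\delta_1)$ for each $\beta$-reachable pair, invoke Lemma~\ref{lemma: UCFD-sample complexity-by-Huver-Carol} per pair, and union-bound over $(s_h,a_h)$ using $\delta_1=\frac{\delta}{6|S||A|H}$. Your additional care in justifying the i.i.d.\ structure of the next-state samples (via the freezing of $\widehat{P}$ for earlier layers before phase $h$) is a welcome clarification that the paper leaves implicit.
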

    
\begin{proof}
    Fix a layer $h\in [H]$.
    Since $G^h_1$ holds, we have for every state-action pair $(s_h, a_h) \in S_h \times A$ such that $s_h$ is $\beta$-reachable for $\widehat{P}$ that $n(s_h, a_h) \geq N_P(\gamma, \delta_1)$.
    Hence, by Lemma~\ref{lemma: UCFD-sample complexity-by-Huver-Carol}, for $N_P(\gamma,\delta_1) = O \Big( \frac{1}{\gamma^2}\Big(\ln \frac{1}{\delta_1} + |S|  \Big) \Big)$ we have with probability at least $1-\delta_1$ that $\|P(\cdot| s_h, a_h) - \widehat{P}(\cdot| s_h, a_h)\|_1 \leq \gamma$. Since $\delta_1 = \frac{\delta}{6|S||A|H}$, using a union bound over all the pairs $(s_h, a_h) \in S_h \times A$ such that $s_h$ is $\beta$-reachable for $\widehat{P}$ we obtain the lemma.
\end{proof}

\begin{lemma}\label{lemma: prob to G^h_1 given previous events}
    For every layer $h\in [H-1]$ it holds that 
    $
        \mathbb{P}[G_1^h | G^k_1 ,G^k_2 \;\forall k\in[h-1] ]\geq 1 - \frac{\delta}{6H}
    $.
\end{lemma}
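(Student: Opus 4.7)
The plan is to reduce the statement to an application of the multiplicative Chernoff bound of Lemma~\ref{lemma: chernoff case UCFD}, after using the conditioning on $\bigcap_{k<h}(G^k_1 \cap G^k_2)$ to transfer the $\beta$-reachability from the estimated dynamics $\widehat{P}$ to the true dynamics $P$.

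First, I would fix the layer $h$ and condition on $\bigcap_{k<h}(G^k_1 \cap G^k_2)$. Under this conditioning, for every $k < h$ and every $(s_k,a_k) \in S_k \times A$ with $s_k$ being $\beta$-reachable for $\widehat{P}$, the estimate satisfies $\|P(\cdot|s_k,a_k) - \widehat{P}(\cdot|s_k,a_k)\|_1 \leq \gamma$. Consequently, $\widehat{P}$ is entirely determined before phase $h$ begins, and Lemma~\ref{lemma: true occ mesure lower bound} applies: for every state $s_h \in S_h$ which is $\beta$-reachable under $\widehat{P}$ (i.e.\ $\widehat{p}_{s_h} \geq \beta$), the policy $\widehat{\pi}_{s_h}$ computed by the algorithm satisfies
\[
    q_h(s_h \mid \widehat{\pi}_{s_h}, P) \;\geq\; \widehat{p}_{s_h} - \gamma h \;\geq\; \beta - \gamma h \;>\;0,
\]
where positivity uses the assumption $\beta \geq 2\gamma H$.

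Next, fix such a $\beta$-reachable $s_h$ and an action $a_h \in A$. During the $a_h$-iteration we run the policy $\widehat{\pi}_{s_h}$ (modified so that $\widehat{\pi}_{s_h}(s_h) = a_h$), which reaches $s_h$ with probability at least $\widehat{p}_{s_h} - \gamma h$ under the true dynamics. Setting the target sample size
\[
    m_{s_h,a_h} \;=\; \max\bigl\{N_R(\mathcal{F}^R_{s_h,a_h},\epsilon^i_\star(\widehat{p}_{s_h}),\delta_1),\, N_P(\gamma,\delta_1)\bigr\},
\]
the number of episodes $T_{s_h,a_h} = \lceil \frac{2}{\widehat{p}_{s_h}-\gamma h}(\ln(1/\delta_1) + m_{s_h,a_h}) \rceil$ chosen by the algorithm is exactly the threshold of Lemma~\ref{lemma: chernoff case UCFD} with $\beta$ replaced by $\widehat{p}_{s_h}-\gamma h$ and $m$ by $m_{s_h,a_h}$. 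Hence, with probability at least $1-\delta_1$, the counter $n(s_h,a_h)$ reaches $m_{s_h,a_h}$ by the end of the iteration.

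Finally, I would union-bound over all pairs $(s_h,a_h) \in S_h \times A$ for which $s_h$ is $\beta$-reachable under $\widehat{P}$. There are at most $|S_h||A| \leq |S||A|$ such pairs, and since $\delta_1 = \delta/(6|S||A|H)$, the total failure probability is at most $|S||A| \cdot \delta_1 = \delta/(6H)$, yielding $\mathbb{P}[G^h_1 \mid \bigcap_{k<h}(G^k_1 \cap G^k_2)] \geq 1 - \delta/(6H)$, as required. The only nontrivial point is the first paragraph: Lemma~\ref{lemma: true occ mesure lower bound} is exactly designed to guarantee that $\widehat{P}$-reachability survives (with the small slack $\gamma h$) as $P$-reachability, which is what makes the Chernoff step kick in with the bounded run length written into the algorithm.
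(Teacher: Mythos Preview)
Your proposal is correct and follows essentially the same approach as the paper: invoke Lemma~\ref{lemma: true occ mesure lower bound} under the conditioning to transfer $\widehat{P}$-reachability to $P$-reachability (with slack $\gamma h$), apply the Chernoff bound of Lemma~\ref{lemma: chernoff case UCFD} for each $(s_h,a_h)$, and finish with a union bound using $\delta_1 = \delta/(6|S||A|H)$. The only cosmetic difference is that the paper phrases the argument as an induction on $h$ with an explicit base case $h=0$, whereas you fix $h$ directly and use the conditioning; since the conditioning already supplies exactly what the induction hypothesis would, the two presentations are equivalent.
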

    
\begin{proof}
    We prove the lemma using induction over the horizon $h$.
        
    Base case: $h=0$.
    As for state $s_0$ we collect at least $\max\{N_R(\mathcal{F}^R_{s_h, a_h}, \epsilon^i_\star(\widehat{p}_s), \delta_1, N_P(\gamma, \delta_1\}$ samples in a deterministic manner, therefore we have $\mathbb{P}[G^0_1] = 1$.
        
    Induction step: Assume the lemma holds for all $k \leq h$ and we show it holds for $h+1$.
    Given $G^k_1 ,G^k_2 \;\forall k\in[h]$ hold, by Lemma~\ref{lemma: true occ mesure lower bound} for every state $s_{h+1} \in S_{h+1} $ it holds that
    \[
        q_{h+1}(s_{h+1} | \widehat{\pi}_{s_{h+1}}, P) 
        \geq
        q_{h+1}(s_{h+1} | \widehat{\pi}_{s_{h+1}}, \widehat{P})
        - \gamma (h+1)
        =
        \widehat{p}_{s_{h+1}} - \gamma (h+1)
        .
    \]
    Recall that for every action $a_{h+1} \in A$, the agent runs $\widehat{\pi}_{s_{h+1}}$ for $T_{s_{h+1}, a_{h+1}}$ episodes, in which, when visiting $s_{h+1}$ the agent plays action $a_{h+1}$ 
    (for $T_{s_{h+1}, a_{h+1}}$ which defined in Algorithm~\ref{alg: EXPLORE-UCFD}).
    
    Hence, by by Lemma~\ref{lemma: chernoff case UCFD}, the agent collects at least $\max\{N_R(\mathcal{F}^R_{s_h, a_h}, \epsilon^i_\star(\widehat{p}_s), \frac{\delta}{6 |S||A|H}), N_P(\gamma, \frac{\delta}{6 |S||A|H})\}$ examples of $(s_{h+1}, a_{h+1})$, with probability at lest $1 - \delta_1 = 1- \frac{\delta}{6|S||A|H}$.
    
    Using union bound over each pair $(s_{h+1}, a_{h+1}) \in S_{h+1} \times A$ such that $s_{h+1}$ is $\beta$-reachable for $\widehat{P}$, we obtain that 
    $\mathbb{P}[G_1^{h+1} | G^k_1 ,G^k_2 \;\forall i\in[h] ]\geq 1 - \delta_1|S||A| = 1 - \frac{\delta}{6H}$, which proves the induction step.
\end{proof}

\begin{lemma}\label{lemma: G_1 and G_2}
    $\mathbb{P}[G_1 \cap G_2] \geq 1 - 
    \delta/3$.
\end{lemma}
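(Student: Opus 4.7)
The plan is to chain together the two preceding lemmas layer by layer via the tower/chain rule for probability, then collapse everything with a union bound. Concretely, write
\[
G_1 \cap G_2 = \bigcap_{h=0}^{H-1}\bigl(G^h_1 \cap G^h_2\bigr),
\]
and let $E_h := \bigcap_{k < h}(G^k_1 \cap G^k_2)$ denote the ``history'' good event up to (but not including) layer $h$, so $E_0$ is the trivial event. The complement satisfies
\[
\overline{G_1 \cap G_2} \subseteq \bigcup_{h=0}^{H-1}\Bigl(E_h \cap \bigl(\overline{G^h_1} \cup (G^h_1 \cap \overline{G^h_2})\bigr)\Bigr),
\]
so a union bound gives
\[
\mathbb{P}[\overline{G_1 \cap G_2}] \;\leq\; \sum_{h=0}^{H-1}\Bigl(\mathbb{P}[\overline{G^h_1}\mid E_h] + \mathbb{P}[\overline{G^h_2}\mid G^h_1, E_h]\Bigr).
\]

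For each layer $h$, the first term is controlled by Lemma~\ref{lemma: prob to G^h_1 given previous events}, which conditions precisely on $E_h$ and yields $\mathbb{P}[\overline{G^h_1}\mid E_h] \leq \delta/(6H)$. The second term is controlled by Lemma~\ref{lemma: prob to G^h_2 given G^h_1}; note that $G^h_2$ depends only on the samples collected for layer-$h$ state--action pairs (via Bretagnolle--Huber--Carol applied once sufficiently many samples are in place), so the additional conditioning on $E_h$ does not weaken the bound and we still get $\mathbb{P}[\overline{G^h_2}\mid G^h_1, E_h] \leq \delta/(6H)$.

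Summing over the $H$ layers,
\[
\mathbb{P}[\overline{G_1 \cap G_2}] \;\leq\; \sum_{h=0}^{H-1}\Bigl(\tfrac{\delta}{6H} + \tfrac{\delta}{6H}\Bigr) \;=\; H \cdot \tfrac{\delta}{3H} \;=\; \tfrac{\delta}{3},
\]
which establishes the claim. The only subtle point (and the thing worth stating explicitly in the write-up) is that the conditioning in Lemma~\ref{lemma: prob to G^h_2 given G^h_1} is exactly the conditioning on $G^h_1$ we need, because once $G^h_1$ holds the layer-$h$ concentration bound depends only on the layer-$h$ samples and is independent of the prior-layer events constituting $E_h$; no further work is required.
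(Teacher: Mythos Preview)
Your proof is correct and takes essentially the same approach as the paper's: both decompose by the first layer at which a good event fails, apply Lemma~\ref{lemma: prob to G^h_1 given previous events} and Lemma~\ref{lemma: prob to G^h_2 given G^h_1} at each layer (noting that $G^h_2$ conditioned on $G^h_1$ is independent of the prior-layer history), and union-bound over the $H$ layers. The paper packages the ``first failing layer'' as an explicit random variable $X$ while you use the set-theoretic decomposition via the events $E_h$, but these are cosmetically different presentations of the same chain-rule argument.
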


\begin{proof}
    Recall that $G_1 = \cap_{h \in [H-1]}G^h_1$ and $G_2 = \cap_{h \in [H-1]}G^h_2$.
        
    Let $X$ be a random variable with support $[H-1]$ such that  
    \begin{align*}
        X = \min_{k \in [H-1]}\{\overline{G}^k_1 \cup \overline{G}^k_2 \text{ holds }\}.
    \end{align*}
    Meaning, $X$ is the layer with the lowest index in which at least one of the good events $G^h_1$ or $G^h_2$ does not hold.
    If  $G^h_1$ and $G^h_2$ hold for every layer $h \in [H-1]$ then $X = \bot$.  
    By definition of $X$ and Bayes rule (i.e., for two events $A, B$: $\mathbb{P}[A \cap B] = \mathbb{P}[A|B]\cdot \mathbb{P}[B]$) the following holds.
    \begingroup
    \allowdisplaybreaks
    \begin{align*}
        \forall h \in [H-1].\;\;\; 
            \mathbb{P}[X = h]
            &=
            \mathbb{P}[(\overline{G}^h_1 \cup \overline{G}^h_2)\cap (\cap_{
            k \in [h-1]}G^k_1 \cap G^k_2) ]
            \\
            \tag{By Bayes rule}
            & =
            \mathbb{P}[(\overline{G}^h_1 \cup \overline{G}^h_2) | \cap_{
            k \in [h-1]}(G^k_1 \cap G^k_2)]
            \cdot
            \underbrace{\mathbb{P}[ \cap_{ k \in [h-1]} (G^k_1 \cap G^k_2)]}_{\leq 1}
            \\
            &\leq
            \mathbb{P}[(\overline{G}^h_1 \cup \overline{G}^h_2) | \cap_{
            k \in [h-1]}(G^k_1 \cap G^k_2)]
            \\
            \tag{ By union of disjoint events }
            & =
            \underbrace{\mathbb{P}[\overline{G}^h_1 | 
            \cap_{k \in [h-1]}(G^k_1 \cap G^k_2)]}_{\leq \frac{\delta}{6 H} \text{ by Lemma~\ref{lemma: prob to G^h_1 given previous events}}}
            +
            \mathbb{P}[\overline{G}^h_2 \cap G^h_1 |
            \cap_{k \in [h-1]}(G^k_1 \cap G^k_2)]
            \\
            &\leq
            \frac{\delta}{6H}
            +
            \mathbb{P}[\overline{G}^h_2 \cap G^h_1 |
            \cap_{k \in [h-1]}(G^k_1 \cap G^k_2)]
            \\
            \tag{By Bayes rule}
            & =
            \frac{\delta}{6 H}
            +
            \mathbb{P}[\overline{G}^h_2 | G^h_1 ,
            \cap_{k \in [h-1]}(G^k_1 \cap G^k_2)]
            \cdot
            \underbrace{\mathbb{P}[ G^h_1  |
            \cap_{k \in [h-1]}(G^k_1 \cap G^k_2)]]}_{\leq 1}
            \\
            &\leq
            \frac{\delta}{6 H}
            +
            \mathbb{P}[\overline{G}^h_2 | G^h_1 ,
            \cap_{k \in [h-1]}(G^k_1 \cap G^k_2)]      
            \\
            \tag{$\overline{G}^h_2$ depended only on $G^h_1$}
            &=
            \frac{\delta}{6 H}
            +
            \underbrace{\mathbb{P}[\overline{G}^h_2 | G^h_1]}_{\leq \frac{\delta}{6H} \text{ by Lemma~\ref{lemma: prob to G^h_2 given G^h_1}}}
            \\
            &\leq
            2 \frac{\delta}{6 H} = \frac{\delta}{3 H}.
        \end{align*}
        \endgroup
        
        Lastly, by $G_1$ and $G_2$ definition we have
        \begingroup
        \allowdisplaybreaks
        \begin{align*}
            \mathbb{P}[G_1 \cap G_2]
            &=
            1 - \mathbb{P}[\overline{G}_1 \cup \overline{G}_2]
            \\
            &=
            1 
            -
            \mathbb{P}[\cup_{h \in [H-1]}(\overline{G}^h_1 \cup \overline{G}^h_2)]
            \\
            &=
            1 
            -
            \mathbb{P}[\exists h \in [H-1].(\overline{G}^h_1 \cup \overline{G}^h_2)]
            \\
            &=
            1 
            -
            \mathbb{P}[\exists h \in [H-1].X = h]
            \\
            &=
            1 
            -
            \mathbb{P}[\cup_{ h \in [H-1]}\{X = h\}]
            \\
            &\underbrace{\geq}_{\text{Union Bound}}
            1 
            -
            \sum_{h \in [H-1]}\mathbb{P}[X = h]
            \\
            &\geq
            1 - \frac{\delta}{3} .
        \end{align*}
        \endgroup
    \end{proof}
    
\begin{lemma}\label{lemma: final probs case 2}
      It holds that $\mathbb{P}[G_1 \cap G_2 \cap G_3] \geq 1 - {\delta}/{2}$.
\end{lemma}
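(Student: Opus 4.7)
The plan is to chain the two previously established bounds, namely Lemma~\ref{lemma: G_1 and G_2} giving $\mathbb{P}[G_1\cap G_2]\geq 1-\delta/3$ and Lemma~\ref{lemma: G_3 given G_1 UCFD} giving $\mathbb{P}[G_3\mid G_1]\geq 1-\delta/6$, via an elementary probabilistic inequality. The key observation is that we do not have a clean unconditional bound on $\mathbb{P}[\overline{G_3}]$, but we do have a bound conditioned on $G_1$, which is already part of the intersection we want; therefore we should introduce $G_3$ last.

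The calculation I would carry out is the following chain. First, write
\[
\mathbb{P}[G_1\cap G_2\cap G_3] \;=\; \mathbb{P}[G_1\cap G_2] \;-\; \mathbb{P}[(G_1\cap G_2)\cap \overline{G_3}].
\]
Then bound the subtracted term by dropping $G_2$ from the intersection to get $\mathbb{P}[(G_1\cap G_2)\cap\overline{G_3}]\leq \mathbb{P}[G_1\cap \overline{G_3}]=\mathbb{P}[\overline{G_3}\mid G_1]\cdot \mathbb{P}[G_1]\leq \mathbb{P}[\overline{G_3}\mid G_1]$. Plugging in the two prior lemmas yields
\[
\mathbb{P}[G_1\cap G_2\cap G_3]\;\geq\; (1-\delta/3)-\delta/6 \;=\; 1-\delta/2,
\]
as required.

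There is no real obstacle here: the argument is a two-line union-bound-style manipulation, and the only subtlety is that the conditioning in Lemma~\ref{lemma: G_3 given G_1 UCFD} is on $G_1$ alone (not on $G_1\cap G_2$), which is precisely why we must drop $G_2$ before conditioning. Since the ERM guarantee underlying $G_3$ uses only the fact that sufficiently many i.i.d.\ samples were collected for each $\beta$-reachable state-action pair, i.e.\ the conclusion of $G_1$, this conditioning is the natural one, and the constants $\delta/3+\delta/6=\delta/2$ work out.
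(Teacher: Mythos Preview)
Your proof is correct and follows essentially the same approach as the paper, which simply invokes Lemma~\ref{lemma: G_1 and G_2} and Lemma~\ref{lemma: G_3 given G_1 UCFD} ``combined with a union bound.'' Your write-up just spells out explicitly how to handle the fact that the $G_3$ bound is conditioned on $G_1$ alone, which is exactly the right detail to make the union-bound argument rigorous.
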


\begin{proof}
    By lemmas~\ref{lemma: G_1 and G_2} and~\ref{lemma: G_3 given G_1 UCFD} when combined with a union bound.
\end{proof}

\subsubsection{Bounding the Error Caused by the Dynamics Approximation}\label{subsubsec:dynamics-error-UCFD}

In the following, we consider an intermediate model $\widetilde{M}$, which defined as follows.

For any context $c \in \mathcal{C}$, we define 
$\widetilde{M}(c) = (S\cup \{s_{sink}\}, A, \widehat{P}, r^c, s_0, H)$, where we extend the true rewards function $r^c$ for the sink by defining
$
    r^c(s_{sink}, a):= 0,\;\; \forall c \in \mathcal{C},\; a \in A
$, and $\widehat{P}$ is the approximated dynamics.
    
Recall the true MDP associated with the context $c$ is $\mathcal{M}(c) = (S, A, P, r^c, s_0, H)$.

In the following lemma we bound the occupancy-measures differences under $P$ and $\widehat{P}$, for every policy $\pi$ under the good events.
\begin{lemma}\label{lemma: occ-measure-dist-UCFD}
    Assume the good events $G_1$ and $G_2$ hold. Then, for every policy 
    $\pi: s \to \Delta(A)$ and layer $h \in [H]$ it holds that
    \[
        \|q_h(\cdot | \pi, P) 
        - q_h(\cdot | \pi, \widehat{P}) \|_1 
        \leq \gamma h + \beta \sum_{k = 0}^{h-1}|S_k|,
    \]
    where 
    \[
        \forall h \in [H].\;\; \|q_h(\cdot | \pi, P) - q_h(\cdot | \pi, \widehat{P})\|_1 := 
        \sum_{s_{h} \in S_h}|q_h(s_h | \pi, P) - q_h(s_h | \pi, \widehat{P})|
    \]
    (i.e., $q_h(s_{sink}|\pi,\widehat{P})$ is omitted, for all $h \in [H]$).
\end{lemma}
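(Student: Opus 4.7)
The plan is induction on $h$, expanding $q_{h+1}$ via one step of the Bellman flow and handling the sink carefully by splitting $S_h$ according to whether a state is $\beta$-reachable for $\widehat{P}$.

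The base case $h=0$ is immediate since $s_0$ is the unique start state, so both occupancy measures assign probability $1$ to $s_0$ and $0$ elsewhere. For the inductive step, assume the bound holds for layer $h$. Writing $q_{h+1}(s_{h+1}|\pi,P)=\sum_{s_h\in S_h,a_h}q_h(s_h|\pi,P)\pi(a_h|s_h)P(s_{h+1}|s_h,a_h)$ and analogously for $\widehat P$ (restricted to $s_{h+1}\in S_{h+1}$), I add and subtract $q_h(s_h|\pi,\widehat P)\pi(a_h|s_h)P(s_{h+1}|s_h,a_h)$ inside the absolute value and apply the triangle inequality. This yields two terms:
\begin{align*}
(\mathrm{I}) &=\sum_{s_h\in S_h}|q_h(s_h|\pi,P)-q_h(s_h|\pi,\widehat P)|\sum_{a_h}\pi(a_h|s_h)\sum_{s_{h+1}\in S_{h+1}}P(s_{h+1}|s_h,a_h),\\
(\mathrm{II}) &=\sum_{s_h\in S_h}q_h(s_h|\pi,\widehat P)\sum_{a_h}\pi(a_h|s_h)\sum_{s_{h+1}\in S_{h+1}}|P(s_{h+1}|s_h,a_h)-\widehat P(s_{h+1}|s_h,a_h)|.
\end{align*}
Term $(\mathrm{I})$ is bounded directly by the induction hypothesis, giving $\gamma h+\beta\sum_{k=0}^{h-1}|S_k|$.

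For $(\mathrm{II})$, I partition $S_h=S_h^+\cup S_h^-$ where $S_h^+=\{s_h:\widehat p_{s_h}\geq\beta\}$. On $S_h^+$, event $G_2$ directly gives $\sum_{s_{h+1}\in S_{h+1}}|P(s_{h+1}|s_h,a_h)-\widehat P(s_{h+1}|s_h,a_h)|\leq\gamma$ (this is where the convention that the sink entry is omitted matters), so the $S_h^+$ contribution is at most $\gamma\sum_{s_h\in S_h^+}q_h(s_h|\pi,\widehat P)\leq\gamma$. The main obstacle is the $S_h^-$ contribution, since on these states $\widehat P$ routes all mass to the sink while $P$ does not, so the per-state inner $\ell_1$ distance can be as large as $1$. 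The key observation is that for any $s_h\in S_h^-$, $q_h(s_h|\pi,\widehat P)\leq q_h(s_h|\widehat\pi_{s_h},\widehat P)=\widehat p_{s_h}<\beta$ by the very definition of $\widehat\pi_{s_h}$, so the $S_h^-$ contribution is bounded by $\sum_{s_h\in S_h^-}q_h(s_h|\pi,\widehat P)\cdot 1<\beta|S_h|$.

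Combining $(\mathrm{I})$ and $(\mathrm{II})$ gives $\gamma h+\beta\sum_{k=0}^{h-1}|S_k|+\gamma+\beta|S_h|=\gamma(h+1)+\beta\sum_{k=0}^{h}|S_k|$, completing the induction.
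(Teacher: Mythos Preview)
Your proof is correct and follows essentially the same approach as the paper: induction on $h$, the same add-and-subtract decomposition into terms $(\mathrm{I})$ and $(\mathrm{II})$, and the same partition of $S_h$ into $\beta$-reachable and non-$\beta$-reachable states for $\widehat P$, bounding the latter via $q_h(s_h\mid\pi,\widehat P)\le \widehat p_{s_h}<\beta$. The only cosmetic difference is that the paper indexes the step as $h-1\to h$ rather than $h\to h+1$.
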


\begin{remark}
    Since $s_{sink} \notin S$, $q_h(s_{sink}|\pi, P)$ is not defined for the true dynamics $P$. In addition, by $\widehat{P}$ definition, from the sink there are no transitions to any other state and has zero reward. Hence, we can simply ignore it in the following analysis.
\end{remark}

We now prove Lemma~\ref{lemma: occ-measure-dist-UCFD}.
\begin{proof}
    We show the lemma by induction on $h$.
    
    The base case $h = 0$ holds trivially since there is a unique start state $s_0$. Hence $q_0(s_0 | \pi, P) = q_0(s_0 | \pi, \widehat{P})=1$.
    
    For the induction step, we assume correctness for all $k < h$ and show for $h$.
    
    For every $k \leq h$ we define $S^\beta_k = \{s_k \in S_k: s_k \text{ is }\beta\text{-reachable for }\widehat{P}\}$.
    
    Since the good events $G_1$ and $G_2$ hold, we have for every $(s_k, a_k) \in S^\beta_k \times A$ that 
    $$\|P(\cdot| s_k, a_k) -  \widehat{P}(\cdot| s_k, a_k) \|_1 \leq \gamma .$$
    
    We remark that by definition $\widehat{P}(s_{sink}|s,a)= \mathbb{I}[n(s,a)< N_P(\gamma, \delta_1)] = 0$, under the good events and $P$ is not defined for $s_{sink}$, hence we can ignore it when analysing the dynamics total variation distance for the $\beta$-reachable states.
    
    Using the induction hypothesis we obtain,
    \begingroup
    \allowdisplaybreaks
    \begin{align*}
        &\| q_h(\cdot| \pi ,P) -  q_h(\cdot| \pi ,\widehat{P})\|_1
        \\
        =&\sum_{s_h \in S_h} |q_h(s_h| \pi ,P) -  q_h(s_h| \pi ,\widehat{P})|
        \\
        =&\sum_{s_{h-1} \in S_{h-1}} \sum_{a_{h-1} \in A} \sum_{s_h \in S_h}
        \pi(a_{h-1}| s_{h-1})
        |q_{h-1}(s_{h-1}| \pi ,P) P(s_h |s_{h-1}, a_{h-1} )
        -  
        q_{h-1}(s_{h-1}| \pi ,\widehat{P})\widehat{P}(s_h |s_{h-1}, a_{h-1} )|
        \\
        \leq &\sum_{s_{h-1} \in S_{h-1}} \sum_{a_{h-1} \in A} \pi(a_{h-1}| s_{h-1})P(s_h |s_{h-1}, a_{h-1} )
        \sum_{s_h \in S_h}
        |q_{h-1}(s_{h-1}| \pi ,P) - q_{h-1}(s_{h-1}| \pi ,\widehat{P})|
        \\
        &+ \sum_{s_{h-1} \in S_{h-1}}
        q_{h-1}(s_{h-1}| \pi ,\widehat{P})
        \sum_{a_{h-1} \in A} 
        \pi(a_{h-1}| s_{h-1})
        \sum_{s_h \in S_h}
        | P(s_h |s_{h-1}, a_{h-1} ) - \widehat{P}(s_h |s_{h-1}, a_{h-1} )|
        \\
        \leq &\|q_{h-1}(\cdot| \pi ,P) - q_{h-1}(\cdot | \pi ,\widehat{P})\|_1
        \\
        & +
        \sum_{s_{h-1} \in S_{h-1}}
        q_{h-1}(s_{h-1}| \pi ,\widehat{P})
        \sum_{a_{h-1} \in A} 
        \pi(a_{h-1}| s_{h-1})
        \sum_{s_h \in S_h}
        | P(s_h |s_{h-1}, a_{h-1} ) - \widehat{P}(s_h |s_{h-1}, a_{h-1} )|
        \\
        \leq &\gamma (h-1) + \beta \sum_{k = 0}^{h-2}|S_k|
        +         
        \sum_{s_{h-1} \in S^\beta_{h-1}}
        q_{h-1}(s_{h-1}| \pi ,\widehat{P})
        \sum_{a_{h-1} \in A} 
        \pi(a_{h-1}| s_{h-1})
        \underbrace{\| P(\cdot |s_{h-1}, a_{h-1} ) - 
        \widehat{P}(\cdot |s_{h-1}, a_{h-1} )\|_1}_{\leq \gamma}
        \\
        &+         
        \sum_{s_{h-1} \not\in  S^\beta_{h-1}}
        \underbrace{q_{h-1}(s_{h-1}| \pi ,\widehat{P})}_{\leq \beta}
        \underbrace{\sum_{a_{h-1} \in A} 
        \pi(a_{h-1}| s_{h-1})}_{=1}
        \underbrace{\sum_{s_h \in S_h}
        | P(s_h |s_{h-1}, a_{h-1} ) - \widehat{P}(s_h |s_{h-1}, a_{h-1} )|
        }_{\leq 1}
        \\
        \leq & \gamma h + \beta \sum_{k = 0}^{h-1}|S_k|.
    \end{align*}
    \endgroup
\end{proof}

\begin{remark}
    For $\beta = \frac{\epsilon}{24|S|H}$ and $\gamma = \frac{\epsilon}{48|S|H^2}$ we have $\beta - \gamma H \geq \frac{\epsilon}{48 |S|H}$.
\end{remark}

\begin{corollary}\label{corl: general occ measure bound}
    Under the good events $G_1$ and $G_2$, for $\beta = \frac{\epsilon}{24|S|H}$ and $\gamma = \frac{\epsilon}{48|S|H^2}$ we have for all $h \in [H]$ that
    $
        \|q_h(\cdot | \pi, P) 
        - q_h(\cdot | \pi, \widehat{P})\|_1 \leq \frac{ 3 \epsilon}{48 H} = \frac{\epsilon}{16 H}
    $. 
\end{corollary}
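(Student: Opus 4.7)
The plan is to invoke Lemma~\ref{lemma: occ-measure-dist-UCFD} directly under the assumption that $G_1 \cap G_2$ holds, and then substitute the chosen values of $\beta$ and $\gamma$ into the resulting bound $\gamma h + \beta \sum_{k=0}^{h-1}|S_k|$. Since this is a pure calculation corollary, the argument reduces to two arithmetic upper bounds plus a sum.

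First I would bound the dynamics-propagation term $\gamma h$. Using $h \leq H$ and the choice $\gamma = \epsilon/(48 |S| H^2)$, one gets $\gamma h \leq \gamma H = \epsilon/(48 |S| H) \leq \epsilon/(48 H)$. Next I would bound the aggregated reachability-slack term $\beta \sum_{k=0}^{h-1} |S_k|$. Using $\sum_{k=0}^{h-1} |S_k| \leq |S|$ and the choice $\beta = \epsilon/(24 |S| H)$, one gets $\beta |S| = \epsilon/(24 H) = 2\epsilon/(48 H)$.

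Summing the two bounds yields $\epsilon/(48 H) + 2\epsilon/(48 H) = 3\epsilon/(48 H) = \epsilon/(16 H)$, which is exactly the claimed inequality, uniformly in $h \in [H]$ and in the policy $\pi$ (since Lemma~\ref{lemma: occ-measure-dist-UCFD} is stated for an arbitrary policy).

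There is essentially no technical obstacle here; the only thing to be careful about is that the two choices $\beta = \epsilon/(24 |S| H)$ and $\gamma = \epsilon/(48 |S| H^2)$ are consistent with the standing assumption $\tfrac{\epsilon}{24 |S|} \geq \beta \geq 2\gamma H$ used throughout the section, which is immediate from the definitions ($\beta = \epsilon/(24|S|H) \leq \epsilon/(24|S|)$ and $2\gamma H = \epsilon/(24 |S| H) = \beta$), so the corollary follows at once from Lemma~\ref{lemma: occ-measure-dist-UCFD}.
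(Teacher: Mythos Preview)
Your proof is correct and matches the paper's intended derivation: the corollary is stated without proof there, relying on exactly the substitution of $\beta$ and $\gamma$ into the bound $\gamma h + \beta \sum_{k=0}^{h-1}|S_k|$ from Lemma~\ref{lemma: occ-measure-dist-UCFD} that you carry out. Your additional check that the chosen parameters satisfy the standing constraint $\tfrac{\epsilon}{24|S|} \geq \beta \geq 2\gamma H$ is a nice touch that the paper relegates to the preceding remark.
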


\begin{lemma}\label{lemma: UCFD - dynamics error.}
    Assume the good events $G_1$ and $G_2$ hold.
    
    
    
    Then, for the parameters choice of $\beta = \frac{\epsilon}{24|S|H}$ and $\gamma = \frac{\epsilon}{48|S|H^2}$,for every context-dependent policy $\pi=(\pi_c)_{c \in \mathcal{C}}$ it holds that, 
    \[
        |V^{\pi_c}_{\mathcal{M}(c)}(s_0) - V^{\pi_c}_{\widetilde{M}(c)}(s_0)|
        \leq
       \frac{\epsilon}{16}.
    \]
\end{lemma}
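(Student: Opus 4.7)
The plan is to reduce this value-difference bound to the occupancy-measure bound already established in Lemma~\ref{lemma: occ-measure-dist-UCFD}. Since $\mathcal{M}(c)$ and $\widetilde{M}(c)$ differ only in their dynamics (both use the true reward function $r^c$), I would first write each value function as a telescoping sum over layers:
\[
    V^{\pi_c}_{\mathcal{M}(c)}(s_0) = \sum_{h=0}^{H-1} \sum_{s \in S_h} q_h(s \mid \pi_c, P) \sum_{a \in A} \pi_c(a \mid s)\, r^c(s,a),
\]
and similarly for $\widetilde{M}(c)$ using $\widehat{P}$. Since $r^c(s_{sink}, a) = 0$ by construction of $\widetilde{M}(c)$, the sink state contributes nothing to the value of $\widetilde{M}(c)$, so I can restrict the outer sum to $s \in S_h$ in both expressions.

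Next, I would take the difference, pull the inner reward factor outside via the triangle inequality, and use $r^c(s,a) \in [0,1]$ together with $\sum_a \pi_c(a\mid s) = 1$ to obtain
\[
    \big| V^{\pi_c}_{\mathcal{M}(c)}(s_0) - V^{\pi_c}_{\widetilde{M}(c)}(s_0) \big|
    \;\leq\; \sum_{h=0}^{H-1} \sum_{s \in S_h} \big| q_h(s \mid \pi_c, P) - q_h(s \mid \pi_c, \widehat{P}) \big|.
\]
At this point Lemma~\ref{lemma: occ-measure-dist-UCFD}, which holds under $G_1 \cap G_2$, bounds each inner sum by $\gamma h + \beta \sum_{k=0}^{h-1} |S_k|$.

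Finally, I would sum over $h \in \{0,\dots,H-1\}$ using $\sum_{h=0}^{H-1} \gamma h \leq \gamma H^2/2$ and $\sum_{h=0}^{H-1} \beta \sum_{k=0}^{h-1}|S_k| \leq \beta H |S|$, and plug in the parameter choices. With $\gamma = \epsilon/(48|S|H^2)$ the first term is at most $\epsilon/(96|S|) \leq \epsilon/48$, and with $\beta = \epsilon/(24|S|H)$ the second term equals $\epsilon/24$. Summing gives $\epsilon/48 + \epsilon/24 = 3\epsilon/48 = \epsilon/16$, as claimed.

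\textbf{Main obstacle.} Essentially all the technical content has been pushed into Lemma~\ref{lemma: occ-measure-dist-UCFD}, so there is no real obstacle here beyond bookkeeping. The one subtlety to confirm is that the sink state does not spoil the decomposition: in $\widetilde{M}(c)$ the sink contributes zero reward, and in $\mathcal{M}(c)$ it is absent, so restricting the layer sum to the original states $S_h$ in both value expressions is valid. After that, the proof is a routine computation matching the constants $\beta$ and $\gamma$ to the target accuracy $\epsilon/16$.
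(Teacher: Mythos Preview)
Your proposal is correct and follows essentially the same approach as the paper: expand both values as layer-wise sums, note that the sink contributes zero reward so it can be dropped, bound the difference by $\sum_{h} \|q_h(\cdot\mid\pi_c,P)-q_h(\cdot\mid\pi_c,\widehat P)\|_1$ via the triangle inequality and $r^c\in[0,1]$, and then invoke the occupancy-measure bound of Lemma~\ref{lemma: occ-measure-dist-UCFD}. The paper routes the final step through Corollary~\ref{corl: general occ measure bound} (which first simplifies the per-layer bound to $\epsilon/(16H)$ and then multiplies by $H$), whereas you sum $\gamma h + \beta\sum_{k<h}|S_k|$ directly over $h$; the arithmetic lands on the same $\epsilon/16$.
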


\begin{proof}
     Recall that the true rewards function is not defined for $s_{sink}$, since $s_{sink} \notin S$. A natural extension of $r^c$ to $s_{sink}$ is by defining 
    $\forall c \in \mathcal{C},\; \forall a\in A.\;\; r^c(s_{sink},a)=0$. 
    Since $P$ is also not defined for $s_{sink}$, we can simply ignore $s_{sink}$, as the second equality in the following calculation shows.
    
    Fix a context $c \in \mathcal{C}$ and a context-dependent policy $\pi = (\pi_c)_{c \in \mathcal{C}}$. Consider the following derivation.
    
    \begingroup
    \allowdisplaybreaks
    \begin{align*}
        &|V^{\pi_c}_{\mathcal{M}(c)}(s_0) - V^{\pi_c}_{\widetilde{M}(c)}(s_0)|
        \\
        = &
        |
            \sum_{h=0}^{H-1}
            \sum_{s_h \in S_h}
            \sum_{a_h \in A}
            q_h(s_h, a_h|\pi_c, P)\cdot r^c(s_h, a_h)
            - 
            \sum_{h=0}^{H-1}
            \sum_{s_h \in S_h \cup \{s_{sink}\}}
            \sum_{a_h \in A}
            q_h(s_h, a_h|\pi_c, \widehat{P}) \cdot
            r^c(s_h, a_h)
        |
        \\
        \tag{ Since we defined $r^c(s_{sink},a):= 0,\; \forall c,a$}
        &=
        |
            \sum_{h=0}^{H-1}
            \sum_{s_h \in S_h}
            \sum_{a_h \in A}
            q_h(s_h, a_h|\pi_c, P)\cdot r^c(s_h, a_h)
            -
            \sum_{h=0}^{H-1}
            \sum_{s_h \in S_h}
            \sum_{a_h \in A}
            q_h(s_h, a_h|\pi_c, \widehat{P}) \cdot
            r^c(s_h, a_h)
        |
        \\
        &=
        |
            \sum_{h=0}^{H-1}
            \sum_{s_h \in S_h}
            \sum_{a_h \in A}
            (q_h(s_h, a_h|\pi_c, P) - q_h(s_h, a_h|\pi_c, \widehat{P}))
            r^c(s_h, a_h)
        |
        \\
        &\leq
        \sum_{h = 0}^{H} \sum_{s_h \in S_h} \sum_{a_h \in A} \pi(a_h | s_h)
        |r^c(s_h, a_h)|
        |q_h(s_h | \pi_c, P) - q_h(s_h | \pi_c, \widehat{P})|
        \\
        \tag{ $r^c$ is
        bounded in $[0,1]$, and $\sum_{a \in A}\pi_c(a|s)=1$}
        &\leq
        \sum_{h = 0}^{H} \sum_{s_h \in S_h}
        |q_h(s_h | \pi_c, P) - q_h(s_h | \pi_c, \widehat{P})|
        \\
        \tag{By corollary~\ref{corl: general occ measure bound}}
        &\leq 
        \frac{3\epsilon}{48 H} H = \frac{\epsilon}{16},
    \end{align*}
    \endgroup
    which yields the lemma.
\end{proof}

\begin{corollary}\label{corl: UCFD - dynamics error}
    Assume the good events $G_1$ and $G_2$ hold.
    %
    
    %
    %
    Then, for the parameters choice  $\beta = \frac{\epsilon}{24|S|H}$ and $\gamma = \frac{\epsilon}{48|S|H^2}$ for every context-dependent policy $\pi= (\pi_c)_{c \in \mathrm{C}}$ it holds that
    \[
        \mathbb{E}_{c \sim \mathcal{D}}[|V^{\pi_c}_{\mathcal{M}(c)}(s_0) - V^{\pi_c}_{\widetilde{M}(c)}(s_0)|]
        \leq
        \frac{ \epsilon}{16}.
    \]
\end{corollary}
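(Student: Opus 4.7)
The plan is to obtain Corollary~\ref{corl: UCFD - dynamics error} as an immediate consequence of Lemma~\ref{lemma: UCFD - dynamics error.}, essentially for free. Lemma~\ref{lemma: UCFD - dynamics error.} already establishes the pointwise bound $|V^{\pi_c}_{\mathcal{M}(c)}(s_0) - V^{\pi_c}_{\widetilde{M}(c)}(s_0)| \leq \epsilon/16$ for \emph{every} context $c \in \mathcal{C}$, assuming the good events $G_1 \cap G_2$ hold and using the same parameter choice $\beta = \epsilon/(24|S|H)$, $\gamma = \epsilon/(48|S|H^2)$. Crucially, the pointwise bound does not depend on $c$ since it is derived solely from the dynamics approximation error (via Corollary~\ref{corl: general occ measure bound}) and the universal bound $r^c(s,a) \in [0,1]$, which holds uniformly over contexts.

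The single step is therefore to integrate this uniform bound against the context distribution $\mathcal{D}$. By monotonicity of expectation,
\begin{equation*}
\mathbb{E}_{c \sim \mathcal{D}}\bigl[|V^{\pi_c}_{\mathcal{M}(c)}(s_0) - V^{\pi_c}_{\widetilde{M}(c)}(s_0)|\bigr] \;\leq\; \mathbb{E}_{c \sim \mathcal{D}}\bigl[\epsilon/16\bigr] \;=\; \epsilon/16.
\end{equation*}
Nothing else is required: the good events $G_1$ and $G_2$ concern only the dynamics approximation (they are statements about $\widehat{P}$ versus $P$ and about the collected counts), and are independent of any particular realization of $c$, so the conditioning structure is compatible with taking the outer expectation over $c$.

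Since this corollary is a trivial averaging of the lemma above it and presents no real obstacle, the only thing to be slightly careful about is stating it under the same good events and parameter choice, and clarifying that $\pi_c$ being a component of a context-dependent policy does not create any additional dependence on $c$ beyond what is already handled uniformly in Lemma~\ref{lemma: UCFD - dynamics error.}. The main technical work has already been carried out in establishing the occupancy-measure bound (Lemma~\ref{lemma: occ-measure-dist-UCFD}) and its instantiation in Corollary~\ref{corl: general occ measure bound}.
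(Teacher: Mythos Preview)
Your proposal is correct and matches the paper's own proof exactly: the paper simply states that the corollary is ``implied by taking expectation over both sides of the inequality stated in Lemma~\ref{lemma: UCFD - dynamics error.}.'' Your observation that the pointwise bound is uniform in $c$ (and that $G_1, G_2$ depend only on the dynamics approximation, not on $c$) is precisely why this averaging step is valid.
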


\begin{proof}
    Implied by taking expectation over both sided of the inequality stated in Lemma~\ref{lemma: UCFD - dynamics error.}.
\end{proof}

\subsubsection{Bounding the Error Caused by the Rewards Approximation for the \texorpdfstring{$\ell_2$}{Lg} loss.}\label{subsubsec:rewards-error-l-2-UCFD}

In this sub-subsection, we bound the error caused by the rewards approximation, by bounding the expected value difference between the intermediate model $\widetilde{\mathcal{M}}$ and the approximated model $\widehat{\mathcal{M}}$.
Here, we analyse the error for the $\ell_2$ loss.

Recall the definition of $\widetilde{\mathcal{M}}$.     
For every context $c \in \mathcal{C}$ we define
${\widetilde{\mathcal{M}}(c) = (S\cup \{s_{sink}\}, A, \widehat{P}, r^c, s_0, H)}$, where we extend the true rewards function $r^c$ for the sink by defining
$
    r^c(s_{sink}, a):= 0,\;\; \forall c \in \mathcal{C},\; a \in A
$, and $\widehat{P}$ is the approximated dynamics.
    
Also, recall the approximated MDP for the context $c$, ${\widehat{\mathcal{M}}(c) = (S\cup\{s_{sink}\}, A, \widehat{P}, \widehat{r}^c, s_0, H)}$ which defined in Algorithm~\ref{alg: EXPLOIT-UCFD}. 

In the following analysis,
Let $S^\beta(\widehat{P})$ be the set of $\beta$-reachable states for the dynamics $\widehat{P}$,
and $\alpha^2_2 = \max_{(s,a \in S^\beta(\widehat{P}) \times A)}\alpha^2_2(\mathcal{F}^R_{s,a})$ be the maximal agnostic approximation error.

\begin{lemma}\label{lemma: UCFD - rewards error l_2}
    Assume the good event $G_3$ holds.
    
    
    
    
    Then, for any context-dependent policy $\pi = (\pi_c)_{c \in \mathcal{C}}$ it holds that 
    \[
        \mathbb{E}_{c \sim \mathcal{D}}
        \left[
            \left|
                V^{\pi_c}_{\widetilde{\mathcal{M}}(c)}(s_0) 
                - 
                V^{\pi_c}_{\widehat{\mathcal{M}}(c)}(s_0)
            \right|
        \right]
        \leq
       \frac{\epsilon}{8} +  \alpha_2 H .
    \]
\end{lemma}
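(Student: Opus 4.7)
The key simplification is that $\widetilde{\mathcal{M}}(c)$ and $\widehat{\mathcal{M}}(c)$ share the same (context-free) approximated dynamics $\widehat{P}$, so only the rewards differ. Writing both values as expected returns with respect to the common occupancy measure $q_h(\cdot|\pi_c,\widehat{P})$ (the sink has reward $0$ in both MDPs, so it contributes nothing), the triangle inequality gives
$$
    |V^{\pi_c}_{\widetilde{\mathcal{M}}(c)}(s_0) - V^{\pi_c}_{\widehat{\mathcal{M}}(c)}(s_0)|
    \leq \sum_{h=0}^{H-1}\sum_{s_h\in S_h}\sum_{a_h\in A} q_h(s_h|\pi_c,\widehat{P})\,\pi_c(a_h|s_h)\,|r^c(s_h,a_h)-f_{s_h,a_h}(c)|.
$$
The crucial step, and the only place the proof differs conceptually from the $\ell_1$ version (Lemma~\ref{lemma: UCFD - rewards error.}), is converting the $\ell_2$ ERM guarantee under $G_3$ to an $\ell_1$ bound via Jensen's inequality: for every $\beta$-reachable $s_h$ and every $a_h$,
$$
    \mathbb{E}_{c}\bigl[\,|f_{s_h,a_h}(c)-r^c(s_h,a_h)|\,\bigr]
    \leq \sqrt{\mathbb{E}_{c}\bigl[(f_{s_h,a_h}(c)-r^c(s_h,a_h))^2\bigr]}
    \leq \sqrt{\epsilon^2_\star(\widehat{p}_{s_h})+\alpha_2^2(\mathcal{F}^R_{s_h,a_h})}
    \leq \epsilon_\star(\widehat{p}_{s_h})+\alpha_2,
$$
using $\sqrt{a+b}\leq\sqrt{a}+\sqrt{b}$.

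I would then mimic the regime-by-regime accounting of Lemma~\ref{lemma: vall-diff l_2 KCFD}, with $P$ replaced by $\widehat{P}$ and $p_s$ replaced by $\widehat{p}_{s_h}$. Write $|r^c-f|=(|r^c-f|-\alpha_2)+\alpha_2$ and take expectation over $c$. The $\alpha_2$ term contributes exactly $\alpha_2 H$ because $\sum_{s_h,a_h} q_h(s_h|\pi_c,\widehat{P})\pi_c(a_h|s_h)=1$. For the remaining $(|r^c-f|-\alpha_2)$ term I split states by $\widehat{p}_{s_h}$ into the three regimes of the accuracy-per-state function (using threshold $\epsilon/(B|S|)$ with $B=24$ now). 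For $\widehat{p}_{s_h}<\epsilon/(B|S|)$ (including non-$\beta$-reachable states, where $f_{s_h,a_h}\equiv 0$), I bound $(|r^c-f|-\alpha_2)\leq 1$ and use $q_h(s_h|\pi_c,\widehat{P})\pi_c(a_h|s_h)\leq \widehat{p}_{s_h}<\epsilon/(B|S|)$, summing over $a_h$ via $\sum_{a_h}\pi_c(a_h|s_h)=1$ and over $(h,s_h)$ to get at most $\epsilon/B$. For the two upper regimes, bound $q_h(s_h|\pi_c,\widehat{P})\pi_c(a_h|s_h)\leq\widehat{p}_{s_h}$ and use the Jensen bound $\epsilon_\star(\widehat{p}_{s_h})$; the $1/(\widehat{p}_{s_h}|S||A|)$ and $1/(H|S||A|)$ factors built into $\epsilon_\star$ cancel the sums over $s_h$, $a_h$ and $h$ exactly as in Lemma~\ref{lemma: vall-diff l_2 KCFD}, each regime contributing $\epsilon/B$. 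Adding the three contributions with $B=24$ yields $3\epsilon/24=\epsilon/8$, so the total is $\epsilon/8+\alpha_2 H$.

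\textbf{Main obstacle.} The conceptual content is entirely captured by Jensen's inequality; the rest is careful bookkeeping. The only subtlety worth flagging is that I bound $q_h(s_h|\pi_c,\widehat{P})$ by the policy-uniform maximum $\widehat{p}_{s_h}$ (well-defined because $\widehat{P}$ is context-free and deterministic conditional on $G_3$), and that I pay for the sum over $a_h$ by dropping the weight $\pi_c(a_h|s_h)\leq 1$; this is exactly what the $1/|A|$ factor in $\epsilon_\star$ was designed to absorb, so the bound remains tight. No part of this lemma requires $G_1$ or $G_2$, since both MDPs being compared use the same $\widehat{P}$; only the ERM guarantee $G_3$ is invoked.
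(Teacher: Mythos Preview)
Your proposal is correct and follows essentially the same approach as the paper: both arguments exploit that $\widetilde{\mathcal{M}}(c)$ and $\widehat{\mathcal{M}}(c)$ share the dynamics $\widehat{P}$, apply Jensen's inequality to convert the $\ell_2$ ERM guarantee under $G_3$ into an $\ell_1$ bound $\mathbb{E}_c[|f_{s,a}(c)-r^c(s,a)|]\leq \epsilon_\star(\widehat{p}_s)+\alpha_2$, add and subtract $\alpha_2$ to extract the $\alpha_2 H$ term, and then run the three-regime accounting (with $B=24$, bounding $q_h(s|\pi_c,\widehat{P})\leq \widehat{p}_s$ and dropping $\pi_c(a|s)\leq 1$ in the reachable regimes) to sum to $3\epsilon/24=\epsilon/8$.
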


\begin{proof}
%
By construction of $\widehat{\mathcal{M}}(c)$ , for any $s\not\in S^\beta(\widehat{P})$, i.e., states which are not $\beta$-reachable for $\widehat{P}$, we set $f_{s,a}(c)=0$ for any action $a$. Hence,
\begin{equation*}
    |r^c(s,a) - f_{s,a}(c)| \leq 1.
\end{equation*}

Since the good event $G_3$ holds, we have for every $(s,a) \in S \times A$ such that $s$ is $\beta$-reachable for $\widehat{P}$ that
    \begin{equation*}
        \epsilon^2_\star(\widehat{p}_s) 
        +
        \alpha^2_2(\mathcal{F}^R_{s,a})
        \underbrace{\geq}_{G_3}
        \mathbb{E}_{c \sim \mathcal{D}}
        [(f_{s,a}(c) - r^c(s,a))^2]
        \underbrace{\geq}_{\text{Jensen's inequality}}
        \mathbb{E}^2_{c \sim \mathcal{D}}
        [|f_{s,a}(c) - r^c(s,a)|].
    \end{equation*}
    
Using that for all $a, b \in [0, \infty)$ it holds that $\sqrt{a} + \sqrt{b} \geq \sqrt{a + b}$, we obtain
\begin{equation}
        \sqrt{\epsilon^2_\star(\widehat{p}_s)} 
        +
        \alpha_2   
        \geq  
        \sqrt{\epsilon^2_\star(\widehat{p}_{s})}
        +
        \alpha_2(\mathcal{F}^R_{s,a})
        \geq 
        \sqrt{\epsilon^2_\star(\widehat{p}_{s}) +
        \alpha^2_2(\mathcal{F}^R_{s,a})}
        \geq
        \mathbb{E}_{c \sim \mathcal{D}}
        [|f_{s,a}(c) - r^c(s,a)|].
\end{equation}
The above implies that
\begin{equation}\label{ineq:l-2-ucfd-G3}
    \sqrt{\epsilon^2_\star(\widehat{p}_s)} 
    \geq  
    \mathbb{E}_{c \sim \mathcal{D}}[|f_{s,a}(c) - r^c(s,a)|-\alpha_2 ].
\end{equation}


Fix a context-dependent policy $\pi= (\pi_c)_{c \in \mathcal{C}}$.

By definition we have:
    \begingroup
    \allowdisplaybreaks
    \begin{align*}
        V^{\pi_c}_{\widetilde{\mathcal{M}}(c)}(s_0) 
        &=
        \sum_{h=0}^{H-1}\sum_{s\in S \cup \{s_{sink}\}} q_h(s| \pi_c, \widehat{P}) 
        \sum_{a\in A}\pi_c(a|s)r^c(s,a)
        \\
        \tag{$r^c(s_{sink},a):= 0,\;\; \forall c \in \mathcal{C}, a \in A$.}
        &=
        \sum_{h=0}^{H-1}\sum_{s\in S} q_h(s| \pi_c, \widehat{P}) 
        \sum_{a\in A}\pi_c(a|s)r^c(s,a)
        \\
        \tag{Since the MDP is layered and loop-free}
        &=
        \sum_{h=0}^{H-1}\sum_{s\in S_h} q_h(s| \pi_c, \widehat{P}) 
        \sum_{a\in A}\pi_c(a|s)r^c(s,a).
    \end{align*}
    \endgroup
    
    Similarly,
    \begingroup
    \allowdisplaybreaks
    \begin{align*}
        V^{\pi_c}_{\widehat{\mathcal{M}}(c)}(s_0)
        &= 
        \sum_{h=0}^{H-1}\sum_{s\in S \cup \{s_{sink}\}} q_h(s| \pi_c, \widehat{P}) 
        \sum_{a\in A}\pi_c(a|s)\widehat{r}^c(s,a)
        \\
        \tag{$\widehat{r}^c(s_{sink},a):= 0,\;\; \forall c \in \mathcal{C}, a \in A$.}
        &=
        \sum_{h=0}^{H-1}\sum_{s\in S} q_h(s| \pi_c, \widehat{P}) 
        \sum_{a\in A}\pi_c(a|s)\widehat{r}^c(s,a)
        \\
        \tag{Since the MDP is layered and loop-free}
        &=
        \sum_{h=0}^{H-1}\sum_{s\in S_h} q_h(s| \pi_c, \widehat{P}) 
        \sum_{a\in A}\pi_c(a|s)\widehat{r}^c(s,a).
        \\
        \tag{By definition, $\widehat{r}^c(s,a) := f_{s,a}(c)$}
        &=
        \sum_{h=0}^{H-1}\sum_{s\in S_h} q_h(s| \pi_c, \widehat{P}) 
        \sum_{a\in A}\pi_c(a|s)f_{s,a}(c).
    \end{align*}
    \endgroup


Recall that $\beta \leq \frac{\epsilon}{24 |S|}$.
Thus, if $s$ is not $\beta$-reachable for $\widehat{P}$, then $\widehat{p}_s < \beta \leq \frac{\epsilon}{24 |S|}$.
Moreover, if $\widehat{p}_s \geq \frac{\epsilon}{24 |S|}$ then $s$ is $\beta$-reachable for $\widehat{P}$, and the good event $G_3$ guarantee is hold for $s$. 
Thus, when combining all the above,
by linearity of expectation and triangle inequality we obtain,
 \begingroup
  \allowdisplaybreaks
  \begin{align*}
        &\mathbb{E}_{c \sim \mathcal{D}}
        \left[
            \left|
                V^{\pi_c}_{\widetilde{\mathcal{M}}(c)}(s_0) 
                - 
                V^{\pi_c}_{\widehat{\mathcal{M}}(c)}(s_0)
            \right|
        \right]
        \\
        =  & 
        \mathbb{E}_{c \sim \mathcal{D}}
        \left[
            \left|
                \sum_{h=0}^{H-1}\sum_{s\in S_h} q_h(s| \pi_c, \widehat{P}) 
                \sum_{a\in A}\pi_c(a|s) r^c(s,a)
                - 
                \sum_{h=0}^{H-1} \sum_{s\in S_h} q_h(s| \pi_c, \widehat{P}) 
                \sum_{a\in A}\pi_c(a|s) f_{s,a}(c)
            \right|
        \right] \\
        \leq &
        \mathbb{E}_{c \sim \mathcal{D}}
        \left[   
            \sum_{h=0}^{H-1}
            \sum_{s\in S_h} q_h(s| \pi_c, \widehat{P})
            \sum_{a\in A} \pi_c(a|s) |r^c(s,a)-f_{s,a}(c)|
        \right] \\
        = &
        \mathbb{E}_{c \sim \mathcal{D}}
        \left[   
            \sum_{h=0}^{H-1}
            \sum_{s\in S_h} q_h(s| \pi_c, \widehat{P})
            \sum_{a\in A} \pi_c(a|s) (|r^c(s,a)-f_{s,a}(c)|- \alpha_2 + \alpha_2)
        \right] \\
        \\
        = &
        \alpha_2 H
        +
        \mathbb{E}_{c \sim \mathcal{D}}
        \left[   
            \sum_{h=0}^{H-1}
            \sum_{s\in S_h} q_h(s| \pi_c, \widehat{P})
            \sum_{a\in A} \pi_c(a|s) (|r^c(s,a)-f_{s,a}(c)|- \alpha_2)
        \right] \\
        = &
        \alpha_2 H
        +
        \mathbb{E}_{c \sim \mathcal{D}}
        \left[
            \sum_{h=0}^{H-1}\sum_{a\in A}
            \sum_{s \in S_h : \widehat{p}_{s} < \frac{\epsilon}{24|S|}}
            q_h(s| \pi_c, \widehat{P}) \pi_c(a|s) 
            (|r^c(s,a)-f_{s,a}(c)|- \alpha_2)
        \right]\\
        & + 
        \mathbb{E}_{c \sim \mathcal{D}}
        \left[
            \sum_{h=0}^{H-1} \sum_{a\in A} 
            \sum_{s \in S_h : \widehat{p}_{s} > \frac{1}{|S|}}
            q_h(s| \pi_c, \widehat{P}) \pi_c(a|s) (|r^c(s,a)-f_{s,a}(c)|- \alpha_2)
        \right]\\
        & +
        \mathbb{E}_{c \sim \mathcal{D}}
        \left[
            \sum_{h=0}^{H-1} \sum_{a\in A} 
            \sum_{s \in S_h : \widehat{p}_{s} \in [\frac{\epsilon}{24|S|}, \frac{1}{|S|}]}
            q_h(s|\pi_c, \widehat{P}) \pi_c(a|s)
            (|r^c(s,a)-f_{s,a}(c)|- \alpha_2)
        \right]\\
        \leq &
        \alpha_2 H
        +
        \mathbb{E}_{c \sim \mathcal{D}}
        \left[
            \sum_{h=0}^{H-1}\sum_{a\in A}
            \pi_c(a|s) 
            \sum_{s \in S_h : \widehat{p}_{s} < \frac{\epsilon}{24|S|}}
            q_h(s| \pi_c, \widehat{P}) 
        \cdot 1
        \right]\\
        & + 
        \mathbb{E}_{c \sim \mathcal{D}}
        \left[
            \sum_{h=0}^{H-1} \sum_{a\in A} 
            \pi_c(a|s)
            \sum_{s \in S_h : \widehat{p}_{s} > \frac{1}{|S|}}
            \widehat{p}_s
            (|r^c(s,a)-f_{s,a}(c)|- \alpha_2)
        \right]\\
        & +
        \mathbb{E}_{c \sim \mathcal{D}}
        \left[
            \sum_{h=0}^{H-1} \sum_{a\in A} 
            \pi_c(a|s)
            \sum_{s \in S_h : \widehat{p}_{s} \in [\frac{\epsilon}{24|S|}, \frac{1}{|S|}]}
            \widehat{p}_s
            (|r^c(s,a)-f_{s,a}(c)|- \alpha_2)
        \right]\\
        & \leq 
        \alpha_2 H
        +        
        \frac{\epsilon}{24}
        + 
        \mathbb{E}_{c \sim \mathcal{D}}
        \left[
            \sum_{h=0}^{H-1} 
            \sum_{a\in A} 
            \sum_{s \in S_h : \widehat{p}_{s} > \frac{1}{|S|}}
            \widehat{p}_s
            (|r^c(s,a)-f_{s,a}(c)|- \alpha_2)
        \right]\\
        & +
        \mathbb{E}_{c \sim \mathcal{D}}
        \left[
            \sum_{h=0}^{H-1} \sum_{a\in A} 
            \sum_{s \in S_h : \widehat{p}_{s} \in [\frac{\epsilon}{24|S|}, \frac{1}{|S|}]}
            \widehat{p}_s
            (|r^c(s,a)-f_{s,a}(c)|- \alpha_2)
        \right]\\
        & =
        \alpha_2 H
        +        
        \frac{\epsilon}{24}
        + 
            \sum_{h=0}^{H-1} 
            \sum_{a\in A} 
            \sum_{s \in S_h : \widehat{p}_{s} > \frac{1}{|S|}}
            \widehat{p}_s
            \mathbb{E}_{c \sim \mathcal{D}}
            \left[    
                (|r^c(s,a)-f_{s,a}(c)|- \alpha_2)
            \right]\\
        & +
            \sum_{h=0}^{H-1} \sum_{a\in A} 
            \sum_{s \in S_h : \widehat{p}_{s} \in [\frac{\epsilon}{24|S|}, \frac{1}{|S|}]}
            \widehat{p}_s
            \mathbb{E}_{c \sim \mathcal{D}}
            \left[    
                (|r^c(s,a)-f_{s,a}(c)|- \alpha_2)
            \right]
        \\
        \tag{By inequality~(\ref{ineq:l-2-ucfd-G3})}
        &\leq
        \alpha_2 H
        +
        \frac{\epsilon}{24}
        + 
            \sum_{h=0}^{H-1} 
            \sum_{a\in A} 
            \sum_{s \in S_h : \widehat{p}_{s} > \frac{1}{|S|}}
            \widehat{p}_s
            \sqrt{\epsilon^2_\star(\widehat{p}_s)}
        +
            \sum_{h=0}^{H-1} \sum_{a\in A} 
            \sum_{s \in S_h : \widehat{p}_{s} \in [\frac{\epsilon}{24|S|}, \frac{1}{|S|}]}
            \widehat{p}_s
            \sqrt{\epsilon^2_\star(\widehat{p}_s)}
        \\  
        & =
        \alpha_2 H
        +
        \frac{\epsilon}{24}
        + 
            \sum_{h=0}^{H-1} 
            \sum_{a\in A} 
            \sum_{s \in S_h : \widehat{p}_{s} > \frac{1}{|S|}}
            \widehat{p}_s
            \frac{\epsilon}{24H|S||A|}
        +
            \sum_{h=0}^{H-1} \sum_{a\in A} 
            \sum_{s \in S_h : \widehat{p}_{s} \in [\frac{\epsilon}{24|S|}, \frac{1}{|S|}]}
            \widehat{p}_s
            \frac{\epsilon}{24\widehat{p}_s|S||A|} 
        \\
        & =        
        \alpha_2 H
        +
        \frac{\epsilon}{8},
    \end{align*}
\endgroup    
    as stated.    
\end{proof}

\paragraph{Combining both errors.}
In the following lemma, we combine the errors of both the dynamics and rewards approximation, to obtain an expected value-difference bound for the approximated and true models, which holds for every context-dependent policy.
Using it, we drive our main result in Theorem~\ref{thm: exploit UCFD l_2}.
\begin{lemma}\label{lemma: l_2 val-dif UCFD }
Assume the good events $G_1$ ,$G_2$ and $G_3$ hold. Then, for every context-dependent policy $\pi = (\pi_c)_{c \in \mathcal{C}}$ it holds  that
\begin{equation*}
    \mathbb{E}_{c}[|V^{\pi_c}_{\mathcal{M}(c)}(s_0) - V^{\pi_c}_{\widehat{\mathcal{M}}(c)}(s_0)|] \leq
    \frac{3}{16}\epsilon + \alpha_2 H.
\end{equation*}
\end{lemma}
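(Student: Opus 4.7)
The plan is to decouple the value-difference between $\mathcal{M}(c)$ and $\widehat{\mathcal{M}}(c)$ by passing through the intermediate MDP $\widetilde{\mathcal{M}}(c)=(S\cup\{s_{sink}\}, A, \widehat{P}, r^c, s_0, H)$, which uses the approximated dynamics together with the true reward function. Fixing a context-dependent policy $\pi=(\pi_c)_{c\in\mathcal{C}}$ and applying the triangle inequality pointwise in $c$ gives
$$
|V^{\pi_c}_{\mathcal{M}(c)}(s_0) - V^{\pi_c}_{\widehat{\mathcal{M}}(c)}(s_0)|
\leq
|V^{\pi_c}_{\mathcal{M}(c)}(s_0) - V^{\pi_c}_{\widetilde{\mathcal{M}}(c)}(s_0)|
+
|V^{\pi_c}_{\widetilde{\mathcal{M}}(c)}(s_0) - V^{\pi_c}_{\widehat{\mathcal{M}}(c)}(s_0)|.
$$
This reduces the lemma to two ingredients we have already established: a bound on the gap caused by replacing $P$ with $\widehat{P}$ (holding the true rewards fixed), and a bound on the gap caused by replacing $r^c$ with the ERM approximation $\widehat{r}^c$ (holding the learned dynamics fixed).

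Next I would take expectation over $c\sim\mathcal{D}$ on both sides and use linearity of expectation (together with the fact that $\mathbb{E}[X+Y]\leq \mathbb{E}[X]+\mathbb{E}[Y]$ for nonnegative $X,Y$). The first expected term is controlled by Corollary~\ref{corl: UCFD - dynamics error}, which requires only the good events $G_1$ and $G_2$ and, for the parameter choice $\beta = \epsilon/(24|S|H)$ and $\gamma=\epsilon/(48|S|H^2)$, yields the pointwise (hence expected) bound $\epsilon/16$. The second expected term is controlled directly by Lemma~\ref{lemma: UCFD - rewards error l_2}, which requires only $G_3$ and gives the bound $\epsilon/8+\alpha_2 H$. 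Summing the two bounds produces
$$
\mathbb{E}_{c\sim\mathcal{D}}[|V^{\pi_c}_{\mathcal{M}(c)}(s_0) - V^{\pi_c}_{\widehat{\mathcal{M}}(c)}(s_0)|]
\leq
\frac{\epsilon}{16} + \frac{\epsilon}{8} + \alpha_2 H
= \frac{3\epsilon}{16} + \alpha_2 H,
$$
as claimed.

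Since both ingredients are already proven and the argument is essentially a triangle-inequality glue, there is no substantive obstacle; the only care needed is to ensure that $\widetilde{\mathcal{M}}(c)$ is defined compatibly with both $\mathcal{M}(c)$ and $\widehat{\mathcal{M}}(c)$ (in particular, that extending $r^c$ to the new sink state by $r^c(s_{sink},a):=0$ is consistent with the conventions used in the two cited results), and that the three good events $G_1,G_2,G_3$ together suffice to invoke both results simultaneously. Both are immediate from the setup in the preceding subsections.
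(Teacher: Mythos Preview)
Your proposal is correct and follows essentially the same approach as the paper: apply the triangle inequality through the intermediate model $\widetilde{\mathcal{M}}(c)$, then invoke Corollary~\ref{corl: UCFD - dynamics error} for the dynamics term ($\epsilon/16$) and Lemma~\ref{lemma: UCFD - rewards error l_2} for the rewards term ($\epsilon/8+\alpha_2 H$), and sum. The paper's proof is identical in structure and in the lemmas cited.
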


\begin{proof}
Fix a context-dependent policy $\pi = (\pi_c)_{c \in \mathcal{C}}$.
By triangle inequality and linearity of expectation we have,
\begin{equation*}
   \mathbb{E}_{c \sim \mathcal{D}} [|V^{\pi_c}_{\mathcal{M}(c)}(s_0) - V^{\pi_c}_{\widehat{M}(c)}(s_0)|]
   \leq
    \mathbb{E}_{c \sim \mathcal{D}}[|V^{\pi_c}_{\mathcal{M}(c)}(s_0) - V^{\pi_c}_{\widetilde{M}(c)}(s_0)|] 
     +
     \mathbb{E}_{c \sim \mathcal{D}} [|V^{\pi_c}_{\widetilde{\mathcal{M}}(c)}(s_0) - V^\pi_{\widehat{M}(c)}(s_0)|] 
\end{equation*}

By Corollary~\ref{corl: UCFD - dynamics error} we have
\[
    \mathbb{E}_{c \sim \mathcal{D}}[|V^{\pi_c}_{\mathcal{M}(c)}(s_0) - V^{\pi_c}_{\widetilde{M}(c)}(s_0)|]
    \leq \frac{\epsilon}{16}.
\]
By Lemma~\ref{lemma: UCFD - rewards error l_2} we have
\[
    \mathbb{E}_{c \sim \mathcal{D}} [|V^{\pi_c}_{\widetilde{\mathcal{M}}(c)}(s_0) - V^{\pi_c}_{\widehat{M}(c)}(s_0)|]
    \leq
    \frac{\epsilon}{8}+ \alpha_2  H .
\]
Hence,
\begin{equation*}
    \mathbb{E}_{c}[|V^{\pi_c}_{\mathcal{M}(c)}(s_0) - V^{\pi_c}_{\widehat{\mathcal{M}}(c)}(s_0)|] 
    \leq \frac{3 \epsilon}{16} + \alpha_2 H .
\end{equation*}
\end{proof}

We have established the following theorem,
\begin{theorem}\label{thm: exploit UCFD l_2}
With probability at least $1-\delta$ it holds that
    \[
        \mathbb{E}_{c \sim \mathcal{D}}[V^{\pi^\star_c}_{\mathcal{M}(c)}(s_0) - V^{\widehat{\pi}^\star_c}_{\mathcal{M}(c)}(s_0)] \leq 
        \frac{3}{8}\epsilon + 2\alpha_2 H ,
    \]
Where $\pi^\star=(\pi^\star_c)_{c\in \mathcal{C}}$ is the optimal context-dependent policy for $\mathcal{M}$ and $\widehat{\pi}^\star= (\widehat{\pi}^\star_c)_{c\in \mathcal{C}}$ is the optimal context-dependent policy for $\widehat{\mathcal{M}}$.
\end{theorem}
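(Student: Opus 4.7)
The plan is to reduce the theorem to Lemma~\ref{lemma: l_2 val-dif UCFD } via the standard $\epsilon$-optimality-from-uniform-value-difference argument, just as was done in the proofs of Theorems~\ref{thm: PAC optimal policy for Known CFD with l_2} and~\ref{thm: PAC optimal policy for Known CFD with l_1}. Concretely, I would first invoke Lemma~\ref{lemma: final probs case 2} to conclude that the good events $G_1 \cap G_2 \cap G_3$ hold with probability at least $1-\delta/2$, and condition on this event for the rest of the proof. Note there is a small slack between the $1-\delta/2$ here and the $1-\delta$ claimed in the theorem; this is the usual convention of rescaling $\delta$ up-front (or equivalently absorbing the factor into the statement).

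Under $G_1 \cap G_2 \cap G_3$, Lemma~\ref{lemma: l_2 val-dif UCFD } gives, for every context-dependent policy $\pi=(\pi_c)_{c\in\mathcal{C}}$, the uniform bound
\[
    \mathbb{E}_{c\sim\mathcal{D}}\bigl[|V^{\pi_c}_{\mathcal{M}(c)}(s_0) - V^{\pi_c}_{\widehat{\mathcal{M}}(c)}(s_0)|\bigr] \leq \tfrac{3}{16}\epsilon + \alpha_2 H.
\]
I would apply this once with $\pi = \pi^\star$ and once with $\pi = \widehat{\pi}^\star$, dropping absolute values on one side in each case to obtain
\[
    \mathbb{E}_c[V^{\pi^\star_c}_{\mathcal{M}(c)}(s_0)] - \mathbb{E}_c[V^{\pi^\star_c}_{\widehat{\mathcal{M}}(c)}(s_0)] \leq \tfrac{3}{16}\epsilon + \alpha_2 H,
\]
\[
    \mathbb{E}_c[V^{\widehat{\pi}^\star_c}_{\widehat{\mathcal{M}}(c)}(s_0)] - \mathbb{E}_c[V^{\widehat{\pi}^\star_c}_{\mathcal{M}(c)}(s_0)] \leq \tfrac{3}{16}\epsilon + \alpha_2 H.
\]

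Next I would use the defining optimality property of $\widehat{\pi}^\star_c$ in the approximated MDP: for each fixed $c$, $V^{\widehat{\pi}^\star_c}_{\widehat{\mathcal{M}}(c)}(s_0) \geq V^{\pi^\star_c}_{\widehat{\mathcal{M}}(c)}(s_0)$, so taking expectation over $c\sim\mathcal{D}$ yields
\[
    \mathbb{E}_c[V^{\pi^\star_c}_{\widehat{\mathcal{M}}(c)}(s_0)] - \mathbb{E}_c[V^{\widehat{\pi}^\star_c}_{\widehat{\mathcal{M}}(c)}(s_0)] \leq 0.
\]
Summing the three displayed inequalities produces a telescoping cancellation that leaves exactly $\mathbb{E}_c[V^{\pi^\star_c}_{\mathcal{M}(c)}(s_0) - V^{\widehat{\pi}^\star_c}_{\mathcal{M}(c)}(s_0)] \leq \tfrac{3}{8}\epsilon + 2\alpha_2 H$ on the good event, which is the claimed bound.

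There is no real obstacle: all nontrivial work has been done in Lemmas~\ref{lemma: UCFD - dynamics error.}, \ref{lemma: UCFD - rewards error l_2} and~\ref{lemma: final probs case 2}. The one point I would watch is that Lemma~\ref{lemma: l_2 val-dif UCFD } must apply to \emph{context-dependent} policies (since both $\pi^\star$ and $\widehat{\pi}^\star$ are such), and indeed its statement is uniform over all $\pi=(\pi_c)_{c\in\mathcal{C}}$, so this is fine. The only other subtlety is the aforementioned $\delta/2$ vs.\ $\delta$ factor, which is handled by rescaling the confidence parameter at the outset, consistent with how the parameters $\delta_1$ and $\delta_2$ are chosen inside Algorithm~\ref{alg: EXPLORE-UCFD}.
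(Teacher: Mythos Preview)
Your proposal is correct and follows essentially the same approach as the paper's proof: invoke Lemma~\ref{lemma: final probs case 2} for the probability of the good events, apply Lemma~\ref{lemma: l_2 val-dif UCFD } to both $\pi^\star$ and $\widehat{\pi}^\star$, use the optimality of $\widehat{\pi}^\star_c$ in $\widehat{\mathcal{M}}(c)$, and sum the three resulting inequalities. The paper also leaves the same $1-\delta/2$ versus $1-\delta$ slack that you flagged.
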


\begin{proof}
Assume the good events $G_1$, $G_2$ and $G_3$ hold.
By Lemma~\ref{lemma: l_2 val-dif UCFD } we have for $\pi^\star$ that,
\begin{equation*}
    \left|\mathbb{E}_{c \sim \mathcal{D}}[V^{\pi^\star_c}_{\mathcal{M}(c)}(s_0) -  V^{\pi^\star_c}_{\widehat{\mathcal{M}}(c)}(s_0)] \right|
    \leq
    \mathbb{E}_{c \sim \mathcal{D}}[|V^{\pi^\star_c}_{\mathcal{M}(c)}(s_0) -  V^{\pi^\star_c}_{\widehat{\mathcal{M}}(c)}(s_0)|]
    \leq \frac{3 \epsilon}{16}+\alpha_2 H ,
\end{equation*}
yielding,
\begin{equation*}
    \mathbb{E}_{c \sim \mathcal{D}}[V^{\pi^\star_c}_{\mathcal{M}(c)}(s_0)]
    -  
    \mathbb{E}_{c \sim \mathcal{D}}[
    V^{\pi^\star_c}_{\widehat{\mathcal{M}}(c)}(s_0)]
    \leq \frac{3 \epsilon}{16}+\alpha_2 H .
\end{equation*}

Similarly, we have for $\widehat{\pi}^\star$ that
\begin{equation*}
    \mathbb{E}_{c \sim \mathcal{D}}[V^{\widehat{\pi}^\star_c}_{\widehat{\mathcal{M}}(c)}(s_0)]
    -
    \mathbb{E}_{c \sim \mathcal{D}}[
    V^{\widehat{\pi}^\star_c}_{\mathcal{M}(c)}(s_0)] \leq \frac{3 \epsilon}{16}+\alpha_2 H .
\end{equation*}
Since for all $c \in \mathcal{C}$, $\widehat{\pi}^\star_c$ is the optimal policy for $\widehat{\mathcal{M}}(c)$ we have $V^{\widehat{\pi}^\star_c}_{\widehat{\mathcal{M}}(c)}(s_0) \geq V^{\pi^\star_c}_{\widehat{\mathcal{M}}(c)}(s_0)$, which implies that
\begin{equation*}
    \mathbb{E}_{c \sim \mathcal{D}}[V^{\pi^\star_c}_{\widehat{\mathcal{M}}(c)}(s_0)] 
    -
    \mathbb{E}_{c \sim \mathcal{D}}[V^{\widehat{\pi}^\star_c}_{\widehat{\mathcal{M}}(c)}(s_0)] 
    \leq 0 .
\end{equation*}
By Lemma~\ref{lemma: final probs case 2} we have that $\mathbb{P}[G_1 \cap G_2 \cap G_3] \geq 1- {\delta}/{2}$. Hence the theorem follows by summing the above inequalities.
\end{proof}

For the realizable case, i.e., where $\alpha_2=0$, we obtain the following corollary.
\begin{corollary}
    For $\alpha_2 = 0$, with probability at least $1-\delta$ we have
    \[
        \mathbb{E}_{c \sim \mathcal{D}}[V^{\pi^\star_c}_{\mathcal{M}(c)}(s_0) - V^{\widehat{\pi}^\star_c}_{\mathcal{M}(c)}(s_0)] \leq 
        \epsilon.
    \]
\end{corollary}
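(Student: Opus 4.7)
The plan is to derive this corollary as an immediate specialization of the preceding Theorem~\ref{thm: exploit UCFD l_2} to the realizable setting. First I would simply invoke that theorem, which asserts that with probability at least $1-\delta$,
\[
    \mathbb{E}_{c \sim \mathcal{D}}\bigl[V^{\pi^\star_c}_{\mathcal{M}(c)}(s_0) - V^{\widehat{\pi}^\star_c}_{\mathcal{M}(c)}(s_0)\bigr] \leq \tfrac{3}{8}\epsilon + 2\alpha_2 H.
\]
Substituting the realizability assumption $\alpha_2 = 0$ kills the agnostic term, and since $\tfrac{3}{8} < 1$ we obtain the stated bound of $\epsilon$. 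So at the top level the proof is a one-liner.

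The only thing worth checking is that the whole argument chain underlying Theorem~\ref{thm: exploit UCFD l_2} remains intact when $\alpha_2 = 0$. Inspecting it, the dynamics error bound (Lemma~\ref{lemma: UCFD - dynamics error.}) does not involve $\alpha_2$ at all, because the transitions are learned by tabular estimation and the guarantee comes from Lemma~\ref{lemma: UCFD-sample complexity-by-Huver-Carol} (Bretagnolle--Huber--Carol). The rewards error bound (Lemma~\ref{lemma: UCFD - rewards error l_2}) uses the ERM guarantee of event $G_3$, namely $\mathbb{E}_c[(f_{s,a}(c) - r^c(s,a))^2] \leq \epsilon^2_\star(\widehat{p}_s) + \alpha_2^2(\mathcal{F}^R_{s,a})$; setting each $\alpha_2(\mathcal{F}^R_{s,a})$ to zero simply drops the additive $\alpha_2 H$ term throughout the derivation. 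The high-probability good-events chain $G_1 \cap G_2 \cap G_3$ holds with probability at least $1-\delta/2$ (Lemma~\ref{lemma: final probs case 2}) regardless of realizability.

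There is essentially no obstacle here; the corollary is bookkeeping. If one preferred the tighter statement $\mathbb{E}_c[\cdot] \leq \tfrac{3}{8}\epsilon$, one could state it directly, but the author's choice of $\epsilon$ on the right-hand side is a clean absorption using $\tfrac{3}{8} < 1$. Thus the proof is: apply Theorem~\ref{thm: exploit UCFD l_2}, plug in $\alpha_2 = 0$, and bound $\tfrac{3}{8}\epsilon \leq \epsilon$.
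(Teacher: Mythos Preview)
Your proposal is correct and matches the paper's approach exactly: the corollary is stated immediately after Theorem~\ref{thm: exploit UCFD l_2} as a direct specialization to the realizable case, and the paper provides no separate proof beyond the implicit substitution $\alpha_2 = 0$ into the bound $\tfrac{3}{8}\epsilon + 2\alpha_2 H \leq \epsilon$.
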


\subsubsection{Bounding the Error Caused by the Rewards Approximation for the \texorpdfstring{$\ell_1$}{Lg} loss.}\label{subsubsec:rewards-error-l-1-UCFD}

In this sub-subsection, we bound the error caused by the rewards approximation, by bounding the expected value difference between the intermediate model $\widetilde{\mathcal{M}}$ and the approximated model $\widehat{\mathcal{M}}$.
Here e analyse the error for the $\ell_1$ loss.

Recall the definition of $\widetilde{\mathcal{M}}$.     
For every context $c \in \mathcal{C}$ we define
${\widetilde{\mathcal{M}}(c) = (S\cup \{s_{sink}\}, A, \widehat{P}, r^c, s_0, H)}$, where we extend the true rewards function $r^c$ for the sink by defining
$
    r^c(s_{sink}, a):= 0,\;\; \forall c \in \mathcal{C},\; a \in A
$, and $\widehat{P}$ is the approximated dynamics.
    
Also, recall the approximated MDP for the context $c$. ${\widehat{\mathcal{M}}(c) = (S\cup\{s_{sink}\}, A, \widehat{P}, \widehat{r}^c, s_0, H)}$ which defined in Algorithm~\ref{alg: EXPLOIT-UCFD}.

In the following analysis, 
let $S^\beta(\widehat{P})$ be the set of $\beta$-reachable states for the dynamics $\widehat{P}$,
and $\alpha_1 = \max_{(s,a \in S^\beta(\widehat{P}) \times A)}\alpha_1(\mathcal{F}^R_{s,a})$ be the maximal agnostic approximation error.

\begin{lemma}\label{lemma: UCFD - rewards error l_1.}
    Assume the good event $G_3$ holds.
    %
    %
    %
    %
    Then, for every context-dependent policy $\pi=(\pi_c)_{c \in \mathcal{C}}$ it holds that
    \[
        \mathbb{E}_{c \sim \mathcal{D}}
        \left[
            \left|
                V^{\pi_c}_{\widetilde{\mathcal{M}}(c)}(s_0) 
                - 
                V^{\pi_c}_{\widehat{\mathcal{M}}(c)}(s_0)
            \right|
        \right]
        \leq
        \frac{\epsilon}{8} + \alpha_1 H .
    \]
\end{lemma}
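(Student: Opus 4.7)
The plan is to mirror the $\ell_2$ argument in Lemma~\ref{lemma: UCFD - rewards error l_2} almost verbatim, exploiting the fact that for the $\ell_1$ loss the good event $G_3$ directly controls the absolute rewards error, so we skip the Jensen step entirely. Concretely, under $G_3$ we have, for every $\beta$-reachable state $s$ (for $\widehat{P}$) and every action $a$,
$$
\mathbb{E}_{c \sim \mathcal{D}}[|f_{s,a}(c) - r^c(s,a)|] \leq \epsilon_\star(\widehat{p}_s) + \alpha_1(\mathcal{F}^R_{s,a}) \leq \epsilon_\star(\widehat{p}_s) + \alpha_1,
$$
which rearranges to $\mathbb{E}_c[|f_{s,a}(c) - r^c(s,a)| - \alpha_1] \leq \epsilon_\star(\widehat{p}_s)$. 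For non-$\beta$-reachable states we have $f_{s,a}\equiv 0$ and $r^c(s,a)\in[0,1]$, so $|r^c(s,a) - f_{s,a}(c)|\leq 1$. Since $r^c(s_{\mathrm{sink}},a) = \widehat{r}^c(s_{\mathrm{sink}},a) = 0$, the sink state contributes nothing to either value function.

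Next, I would expand both values in terms of the common occupancy measure $q_h(\cdot|\pi_c,\widehat{P})$ (since $\widetilde{\mathcal{M}}(c)$ and $\widehat{\mathcal{M}}(c)$ share $\widehat{P}$ and differ only in the rewards), apply the triangle inequality, and swap in the expectation over $c$ to obtain
$$
\mathbb{E}_{c}\bigl[|V^{\pi_c}_{\widetilde{\mathcal{M}}(c)}(s_0) - V^{\pi_c}_{\widehat{\mathcal{M}}(c)}(s_0)|\bigr] \leq \alpha_1 H + \sum_{h=0}^{H-1}\sum_{s\in S_h}\sum_{a\in A} \mathbb{E}_c\bigl[q_h(s|\pi_c,\widehat{P})\,\pi_c(a|s)\,(|r^c(s,a) - f_{s,a}(c)| - \alpha_1)\bigr].
$$
Then I would partition the states by the value of $\widehat{p}_s$ into the three regimes defining $\epsilon_\star$, using $q_h(s|\pi_c,\widehat{P})\leq \widehat{p}_s$ in each of them. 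In the regime $\widehat{p}_s<\epsilon/(24|S|)$, the bound $|r^c-f_{s,a}|\leq 1$ combined with $q_h(s|\pi_c,\widehat{P})<\epsilon/(24|S|)$ and $\sum_h|S_h|=|S|$ yields a contribution of at most $\epsilon/24$. In the regime $\widehat{p}_s\in[\epsilon/(24|S|),1/|S|]$, plugging in $\epsilon_\star(\widehat{p}_s) = \epsilon/(24\widehat{p}_s|S||A|)$, the factor $\widehat{p}_s$ cancels, and summing over $h$, $s$, $a$ again yields $\epsilon/24$. In the regime $\widehat{p}_s>1/|S|$, using $\epsilon_\star(\widehat{p}_s)=\epsilon/(24H|S||A|)$ and $\widehat{p}_s\leq 1$ (together with $\sum_h|S_h|=|S|$) gives another $\epsilon/24$. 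Summing these three contributions yields the desired $\epsilon/8 + \alpha_1 H$.

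There is no genuine obstacle here; the argument is essentially a copy of the $\ell_2$ proof with $\sqrt{\epsilon^2_\star(\widehat{p}_s)}$ replaced by $\epsilon^1_\star(\widehat{p}_s) = \epsilon_\star(\widehat{p}_s)$ throughout. The only point that needs some care is the bookkeeping in the middle regime: one must apply the tight inequality $q_h(s|\pi_c,\widehat{P})\leq \widehat{p}_s$ so that the $\widehat{p}_s$ cancels against the $1/\widehat{p}_s$ inside $\epsilon_\star$, and one must remember that since the MDP is layered and loop-free, $\sum_{h}|S_h|=|S|$ rather than $H|S|$, which is what makes the three contributions each $\epsilon/24$ instead of each being a factor $H$ worse.
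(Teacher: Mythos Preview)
Your proposal is correct and follows essentially the same approach as the paper's proof: both expand the value difference over the shared occupancy measure $q_h(\cdot|\pi_c,\widehat{P})$, pull out the $\alpha_1 H$ term via the add-and-subtract trick, partition states into the three $\epsilon_\star$-regimes, and use $q_h(s|\pi_c,\widehat{P})\leq \widehat{p}_s$ together with $\sum_h |S_h|=|S|$ to get $\epsilon/24$ from each regime. The only cosmetic difference is that the paper also drops the factor $\pi_c(a|s)\leq 1$ before taking expectation over $c$ (which is why the $|A|$ in the denominator of $\epsilon_\star$ is needed), a step you leave implicit but which is harmless.
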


\begin{proof}
By construction of $\widehat{\mathcal{M}}$ , for every state $s\not\in S^\beta(\widehat{P})$, i.e., states which are not $\beta$-reachable for $\widehat{P}$, we set $f_{s,a}(c)=0$ for any action $a$. Hence,
\begin{equation*}
    |r^c(s,a) - f_{s,a}(c)| \leq 1.
\end{equation*}

Since the good event $G_3$ holds, we have for every $(s,a) \in S \times A$ such that $s$ is $\beta$-reachable for $\widehat{P}$ that
    \begin{equation}
        \epsilon_\star(\widehat{p}_s) 
        +
        \alpha_1
        \geq
        \epsilon_\star(\widehat{p}_s) 
        +
        \alpha_1(\mathcal{F}^R_{s,a})
        \underbrace{\geq}_{G_3}
        \mathbb{E}_{c \sim \mathcal{D}}
        [|f_{s,a}(c) - r^c(s,a)|].
    \end{equation}
The above implies that    
\begin{equation}\label{ineq:-l_1-G3-UCFD}
        \epsilon_\star(\widehat{p}_s) 
        \geq
        \mathbb{E}_{c \sim \mathcal{D}}
        [|f_{s,a}(c) - r^c(s,a)|-
        \alpha_1].
    \end{equation}



Fix a context-dependent policy $\pi= (\pi_c)_{c \in \mathcal{C}}$.

By definition we have:
    \begingroup
    \allowdisplaybreaks
    \begin{align*}
        V^{\pi_c}_{\widetilde{\mathcal{M}}(c)}(s_0) 
        &=
        \sum_{h=0}^{H-1}\sum_{s\in S \cup \{s_{sink}\}} q_h(s| \pi_c, \widehat{P}) 
        \sum_{a\in A}\pi_c(a|s)r^c(s,a)
        \\
        \tag{$r^c(s_{sink},a):= 0,\;\; \forall c \in \mathcal{C}, a \in A$.}
        &=
        \sum_{h=0}^{H-1}\sum_{s\in S} q_h(s| \pi_c, \widehat{P}) 
        \sum_{a\in A}\pi_c(a|s)r^c(s,a)
        \\
        \tag{Since the MDP is layered}
        &=
        \sum_{h=0}^{H-1}\sum_{s\in S_h} q_h(s| \pi_c, \widehat{P}) 
        \sum_{a\in A}\pi_c(a|s)r^c(s,a).
    \end{align*}
    \endgroup
    
    Similarly,
    \begingroup
    \allowdisplaybreaks
    \begin{align*}
        V^{\pi_c}_{\widehat{\mathcal{M}}(c)}(s_0)
        &= 
        \sum_{h=0}^{H-1}\sum_{s\in S \cup \{s_{sink}\}} q_h(s| \pi_c, \widehat{P}) 
        \sum_{a\in A}\pi_c(a|s)\widehat{r}^c(s,a)
        \\
        \tag{$\widehat{r}^c(s_{sink},a):= 0,\;\; \forall c \in \mathcal{C}, a \in A$.}
        &=
        \sum_{h=0}^{H-1}\sum_{s\in S} q_h(s| \pi_c, \widehat{P}) 
        \sum_{a\in A}\pi_c(a|s)\widehat{r}^c(s,a)
        \\
        \tag{Since the MDP is layered}
        &=
        \sum_{h=0}^{H-1}\sum_{s\in S_h} q_h(s| \pi_c, \widehat{P}) 
        \sum_{a\in A}\pi_c(a|s)\widehat{r}^c(s,a).
        \\
        \tag{By definition, $\widehat{r}^c(s,a) := f_{s,a}(c)$}
        &=
        \sum_{h=0}^{H-1}\sum_{s\in S_h} q_h(s| \pi_c, \widehat{P}) 
        \sum_{a\in A}\pi_c(a|s)f_{s,a}(c).
    \end{align*}
    \endgroup

Recall that $\beta \leq \frac{\epsilon}{24 |S|}$.
Thus, if $s$ is not $\beta$-reachable for $\widehat{P}$, then $\widehat{p}_s < \beta \leq \frac{\epsilon}{24 |S|}$.
Moreover, if $\widehat{p}_s \geq \frac{\epsilon}{24 |S|}$ then $s$ is $\beta$-reachable for $\widehat{P}$, and the good event $G_3$ guarantee is hold for $s$. 
Thus, when combining all the above,
by linearity of expectation and triangle inequality we obtain,
  \begingroup
  \allowdisplaybreaks
 \begin{align*}
        &\mathbb{E}_{c \sim \mathcal{D}}
        \left[
            \left|
                V^{\pi_c}_{\widetilde{\mathcal{M}}(c)}(s_0) 
                - 
                V^{\pi_c}_{\widehat{\mathcal{M}}(c)}(s_0)
            \right|
        \right]
        \\
        = & 
        \mathbb{E}_{c \sim \mathcal{D}}
        \left[
            \left|
                \sum_{h=0}^{H-1}\sum_{s\in S_h} q_h(s| \pi_c, \widehat{P}) 
                \sum_{a\in A}\pi_c(a|s) r^c(s,a)
                - 
                \sum_{h=0}^{H-1} \sum_{s\in S_h} q_h(s| \pi_c, \widehat{P}) 
                \sum_{a\in A}\pi_c(a|s) f_{s,a}(c)
            \right|
        \right] \\
        \leq &
        \mathbb{E}_{c \sim \mathcal{D}}
        \left[   
            \sum_{h=0}^{H-1}
            \sum_{s\in S_h} q_h(s| \pi_c, \widehat{P})
            \sum_{a\in A} \pi_c(a|s) |r^c(s,a)-f_{s,a}(c)|
        \right] \\
        = &
        \mathbb{E}_{c \sim \mathcal{D}}
        \left[   
            \sum_{h=0}^{H-1}
            \sum_{s\in S_h} q_h(s| \pi_c, \widehat{P})
            \sum_{a\in A} \pi_c(a|s) (|r^c(s,a)-f_{s,a}(c)|- \alpha_1 + \alpha_1)
        \right] \\
        = &
        \alpha_1 H
        +
        \mathbb{E}_{c \sim \mathcal{D}}
        \left[   
            \sum_{h=0}^{H-1}
            \sum_{s\in S_h} q_h(s| \pi_c, \widehat{P})
            \sum_{a\in A} \pi_c(a|s) (|r^c(s,a)-f_{s,a}(c)|- \alpha_1)
        \right] \\
        = &
        \alpha_1 H
        +
        \mathbb{E}_{c \sim \mathcal{D}}
        \left[
            \sum_{h=0}^{H-1}\sum_{a\in A}
            \sum_{s \in S_h : \widehat{p}_{s} < \frac{\epsilon}{24|S|}}
            q_h(s| \pi_c, \widehat{P}) \pi_c(a|s) 
            (|r^c(s,a)-f_{s,a}(c)|- \alpha_1)
        \right]\\
        & + 
        \mathbb{E}_{c \sim \mathcal{D}}
        \left[
            \sum_{h=0}^{H-1} \sum_{a\in A} 
            \sum_{s \in S_h : \widehat{p}_{s} > \frac{1}{|S|}}
            q_h(s| \pi_c, \widehat{P}) \pi_c(a|s) (|r^c(s,a)-f_{s,a}(c)|- \alpha_1)
        \right]\\
        & +
        \mathbb{E}_{c \sim \mathcal{D}}
        \left[
            \sum_{h=0}^{H-1} \sum_{a\in A} 
            \sum_{s \in S_h : \widehat{p}_{s} \in [\frac{\epsilon}{24|S|}, \frac{1}{|S|}]}
            q_h(s|\pi_c, \widehat{P}) \pi_c(a|s)
            (|r^c(s,a)-f_{s,a}(c)|- \alpha_1)
        \right]\\
        \leq &
        \alpha_1 H
        +
        \mathbb{E}_{c \sim \mathcal{D}}
        \left[
            \sum_{h=0}^{H-1}\sum_{a\in A}
            \pi_c(a|s) 
            \sum_{s \in S_h : \widehat{p}_{s} < \frac{\epsilon}{24|S|}}
            q_h(s| \pi_c, \widehat{P}) 
        \cdot 1
        \right]\\
        & + 
        \mathbb{E}_{c \sim \mathcal{D}}
        \left[
            \sum_{h=0}^{H-1} \sum_{a\in A} 
            \pi_c(a|s)
            \sum_{s \in S_h : \widehat{p}_{s} > \frac{1}{|S|}}
            \widehat{p}_s
            (|r^c(s,a)-f_{s,a}(c)|- \alpha_1)
        \right]\\
        & +
        \mathbb{E}_{c \sim \mathcal{D}}
        \left[
            \sum_{h=0}^{H-1} \sum_{a\in A} 
            \pi_c(a|s)
            \sum_{s \in S_h : \widehat{p}_{s} \in [\frac{\epsilon}{24|S|}, \frac{1}{|S|}]}
            \widehat{p}_s
            (|r^c(s,a)-f_{s,a}(c)|- \alpha_1)
        \right]\\
        & \leq 
        \alpha_1 H
        +        
        \frac{\epsilon}{24}
        + 
        \mathbb{E}_{c \sim \mathcal{D}}
        \left[
            \sum_{h=0}^{H-1} 
            \sum_{a\in A} 
            \sum_{s \in S_h : \widehat{p}_{s} > \frac{1}{|S|}}
            \widehat{p}_s
            (|r^c(s,a)-f_{s,a}(c)|- \alpha_1)
        \right]\\
        & +
        \mathbb{E}_{c \sim \mathcal{D}}
        \left[
            \sum_{h=0}^{H-1} \sum_{a\in A} 
            \sum_{s \in S_h : \widehat{p}_{s} \in [\frac{\epsilon}{24|S|}, \frac{1}{|S|}]}
            \widehat{p}_s
            (|r^c(s,a)-f_{s,a}(c)|- \alpha_1)
        \right]\\
        & =
        \alpha_1 H
        +        
        \frac{\epsilon}{24}
        + 
            \sum_{h=0}^{H-1} 
            \sum_{a\in A} 
            \sum_{s \in S_h : \widehat{p}_{s} > \frac{1}{|S|}}
            \widehat{p}_s
            \mathbb{E}_{c \sim \mathcal{D}}
            \left[    
                (|r^c(s,a)-f_{s,a}(c)|- \alpha_1)
            \right]\\
        & +
            \sum_{h=0}^{H-1} \sum_{a\in A} 
            \sum_{s \in S_h : \widehat{p}_{s} \in [\frac{\epsilon}{24|S|}, \frac{1}{|S|}]}
            \widehat{p}_s
            \mathbb{E}_{c \sim \mathcal{D}}
            \left[    
                (|r^c(s,a)-f_{s,a}(c)|- \alpha_1)
            \right]
            \\
            \tag{By inequality~(\ref{ineq:-l_1-G3-UCFD})}
            &\leq
        \alpha_1 H
        +
        \frac{\epsilon}{24}
        + 
            \sum_{h=0}^{H-1} 
            \sum_{a\in A} 
            \sum_{s \in S_h : \widehat{p}_{s} > \frac{1}{|S|}}
            \widehat{p}_s
            \epsilon^1_\star(\widehat{p}_s)
        +
            \sum_{h=0}^{H-1} \sum_{a\in A} 
            \sum_{s \in S_h : \widehat{p}_{s} \in [\frac{\epsilon}{24|S|}, \frac{1}{|S|}]}
            \widehat{p}_s
            \epsilon^1_\star(\widehat{p}_s)
        \\  
        & =
        \alpha_1 H
        +
        \frac{\epsilon}{24}
        + 
            \sum_{h=0}^{H-1} 
            \sum_{a\in A} 
            \sum_{s \in S_h : \widehat{p}_{s} > \frac{1}{|S|}}
            \widehat{p}_s
            \frac{\epsilon}{24H|S||A|}
        +
            \sum_{h=0}^{H-1} \sum_{a\in A} 
            \sum_{s \in S_h : \widehat{p}_{s} \in [\frac{\epsilon}{24|S|}, \frac{1}{|S|}]}
            \widehat{p}_s
            \frac{\epsilon}{24\widehat{p}_s|S||A|} 
        \\
        & =        
        \alpha_1 H
        +
        \frac{\epsilon}{8},
    \end{align*}
    \endgroup
    as stated.
\end{proof}

\paragraph{Combining both errors.}
In the following lemma, we combine the errors of both the dynamics and rewards approximation, to obtain an expected value-difference bound for the approximated and true models, which holds for every context-dependent policy.
Using it, we drive our main result in Theorem~\ref{thm: exploit UCFD l_1}.
\begin{lemma}\label{lemma: l_1 val-dif UCFD }
Assume the good events $G_1$ ,$G_2$ and $G_3$ hold. Then, for every policy context-dependent policy $\pi = (\pi_c)_{c \in \mathcal{C}}$ it holds that
\begin{equation*}
    \mathbb{E}_{c}[|V^{\pi_c}_{\mathcal{M}(c)}(s_0) - V^{\pi_c}_{\widehat{\mathcal{M}}(c)}(s_0)|] \leq \frac{3}{16}\epsilon + \alpha_1 H.
\end{equation*}
\end{lemma}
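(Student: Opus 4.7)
The plan is to follow the same decomposition used for the $\ell_2$ case in Lemma~\ref{lemma: l_2 val-dif UCFD }, now substituting in the $\ell_1$ rewards-error bound. I will introduce the intermediate model $\widetilde{M}(c) = (S \cup \{s_{sink}\}, A, \widehat{P}, r^c, s_0, H)$, which shares the approximated dynamics $\widehat{P}$ with $\widehat{\mathcal{M}}(c)$ and the true rewards $r^c$ (extended to the sink by zero) with $\mathcal{M}(c)$. By the triangle inequality applied pointwise in $c$ and then linearity of expectation, I obtain
\[
    \mathbb{E}_{c \sim \mathcal{D}}\!\left[|V^{\pi_c}_{\mathcal{M}(c)}(s_0) - V^{\pi_c}_{\widehat{\mathcal{M}}(c)}(s_0)|\right]
    \leq
    \mathbb{E}_{c \sim \mathcal{D}}\!\left[|V^{\pi_c}_{\mathcal{M}(c)}(s_0) - V^{\pi_c}_{\widetilde{M}(c)}(s_0)|\right]
    + \mathbb{E}_{c \sim \mathcal{D}}\!\left[|V^{\pi_c}_{\widetilde{M}(c)}(s_0) - V^{\pi_c}_{\widehat{\mathcal{M}}(c)}(s_0)|\right].
\]

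For the first term (the dynamics error), I invoke Corollary~\ref{corl: UCFD - dynamics error}, which under $G_1 \cap G_2$ and the choices $\beta = \epsilon/(24|S|H)$, $\gamma = \epsilon/(48|S|H^2)$ yields a per-context bound $|V^{\pi_c}_{\mathcal{M}(c)}(s_0) - V^{\pi_c}_{\widetilde{M}(c)}(s_0)| \leq \epsilon/16$, which survives the expectation. For the second term (the rewards error), I apply Lemma~\ref{lemma: UCFD - rewards error l_1.}, which under $G_3$ gives $\mathbb{E}_{c}[|V^{\pi_c}_{\widetilde{M}(c)}(s_0) - V^{\pi_c}_{\widehat{\mathcal{M}}(c)}(s_0)|] \leq \epsilon/8 + \alpha_1 H$. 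Summing the two bounds gives $\epsilon/16 + \epsilon/8 + \alpha_1 H = 3\epsilon/16 + \alpha_1 H$, as claimed.

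There is no real obstacle here: all the heavy lifting has already been done in Corollary~\ref{corl: UCFD - dynamics error} and Lemma~\ref{lemma: UCFD - rewards error l_1.}. The only minor subtlety is that the two prior results hold under different subsets of the good events ($G_1 \cap G_2$ for the dynamics, $G_3$ for the rewards), so one must note that the full intersection $G_1 \cap G_2 \cap G_3$ is assumed and hence both bounds apply simultaneously for every policy $\pi = (\pi_c)_{c \in \mathcal{C}}$, including context-dependent ones (which is fine because both preceding lemmas are stated for arbitrary context-dependent policies).
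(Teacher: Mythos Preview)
Your proposal is correct and follows essentially the same approach as the paper: introduce the intermediate model $\widetilde{M}(c)$, apply the triangle inequality, then invoke Corollary~\ref{corl: UCFD - dynamics error} for the dynamics term and Lemma~\ref{lemma: UCFD - rewards error l_1.} for the rewards term, and sum the two bounds.
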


\begin{proof}
Fix a context-dependent policy $\pi =(\pi_c)_{c \in \mathcal{C}}$. Then,
\begin{equation*}
   \mathbb{E}_{c \sim \mathcal{D}} [|V^{\pi_c}_{\mathcal{M}(c)}(s_0) - V^{\pi_c}_{\widehat{M}(c)}(s_0)|]
   \leq
    \mathbb{E}_{c \sim \mathcal{D}}[|V^{\pi_c}_{\mathcal{M}(c)}(s_0) - V^{\pi_c}_{\widetilde{M}(c)}(s_0)|] 
     +
     \mathbb{E}_{c \sim \mathcal{D}} [|V^{\pi_c}_{\widetilde{\mathcal{M}}(c)}(s_0) - V^{\pi_c}_{\widehat{M}(c)}(s_0)|] 
\end{equation*}

By Corollary~\ref{corl: UCFD - dynamics error} we have
\[
    \mathbb{E}_{c \sim \mathcal{D}}[|V^{\pi_c}_{\mathcal{M}(c)}(s_0) - V^{\pi_c}_{\widetilde{M}(c)}(s_0)|]
    \leq \frac{\epsilon}{16}.
\]
By Lemma~\ref{lemma: UCFD - rewards error l_1.} we have
\[
    \mathbb{E}_{c \sim \mathcal{D}} [|V^{\pi_c}_{\widetilde{\mathcal{M}}(c)}(s_0) - V^{\pi_c}_{\widehat{M}(c)}(s_0)|]
    \leq
    \alpha_1  H + \frac{\epsilon}{8}.
\]
Hence,
\begin{equation*}
    \mathbb{E}_{c}[|V^{\pi_c}_{\mathcal{M}(c)}(s_0) - V^{\pi_c}_{\widehat{\mathcal{M}}(c)}(s_0)|] 
    \leq
    \frac{3 \epsilon}{16} +\alpha_1 H .
\end{equation*}
\end{proof}

We have established the following theorem,
\begin{theorem}\label{thm: exploit UCFD l_1}
With probability at least $1-\delta$ it holds that
    \[
        \mathbb{E}_{c \sim \mathcal{D}}[V^{\pi^\star_c}_{\mathcal{M}(c)}(s_0) - V^{\widehat{\pi}^\star_c}_{\mathcal{M}(c)}(s_0)]
        \leq 
         \frac{3}{8}\epsilon + 2\alpha_1 H,
    \]
where $\pi^\star=(\pi^\star_c)_{c \in \mathcal{C}}$ is the optimal context-dependent policy for $\mathcal{M}$ and $\widehat{\pi}^\star=(\widehat{\pi}^\star_c)_{c \in \mathcal{C}}$ is the optimal context-dependent policy for $\widehat{\mathcal{M}}$.
\end{theorem}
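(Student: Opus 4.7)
The plan is to mirror the argument used for Theorem~\ref{thm: exploit UCFD l_2}, replacing the $\ell_2$ value-difference bound with the analogous $\ell_1$ bound. Concretely, I would condition on the good events $G_1$, $G_2$, $G_3$, which by Lemma~\ref{lemma: final probs case 2} simultaneously hold with probability at least $1-\delta/2$ (and then absorb the $1/2$ factor by adjusting $\delta$ from the outset, or simply rescaling constants, as the $\ell_2$ proof implicitly does).

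Once the good events are in place, Lemma~\ref{lemma: l_1 val-dif UCFD } applies to every context-dependent policy, in particular to the true optimal policy $\pi^\star = (\pi^\star_c)_{c\in\mathcal{C}}$ and the approximated optimal policy $\widehat{\pi}^\star = (\widehat{\pi}^\star_c)_{c\in\mathcal{C}}$. Dropping the absolute value on the outside then yields
$$\mathbb{E}_{c\sim\mathcal{D}}[V^{\pi^\star_c}_{\mathcal{M}(c)}(s_0)] - \mathbb{E}_{c\sim\mathcal{D}}[V^{\pi^\star_c}_{\widehat{\mathcal{M}}(c)}(s_0)] \leq \tfrac{3}{16}\epsilon + \alpha_1 H,$$
and the analogous inequality with $\widehat{\pi}^\star$ in place of $\pi^\star$ and the two models swapped. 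The third ingredient is the optimality of $\widehat{\pi}^\star_c$ in $\widehat{\mathcal{M}}(c)$ for every context $c$, which gives $V^{\widehat{\pi}^\star_c}_{\widehat{\mathcal{M}}(c)}(s_0) \geq V^{\pi^\star_c}_{\widehat{\mathcal{M}}(c)}(s_0)$ pointwise in $c$, and hence
$$\mathbb{E}_{c\sim\mathcal{D}}[V^{\pi^\star_c}_{\widehat{\mathcal{M}}(c)}(s_0)] - \mathbb{E}_{c\sim\mathcal{D}}[V^{\widehat{\pi}^\star_c}_{\widehat{\mathcal{M}}(c)}(s_0)] \leq 0.$$

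Summing the three inequalities telescopes the intermediate quantities involving $\widehat{\mathcal{M}}$ and leaves exactly
$$\mathbb{E}_{c\sim\mathcal{D}}[V^{\pi^\star_c}_{\mathcal{M}(c)}(s_0) - V^{\widehat{\pi}^\star_c}_{\mathcal{M}(c)}(s_0)] \leq \tfrac{3}{8}\epsilon + 2\alpha_1 H,$$
which is the claimed bound. All of this is conditional on $G_1\cap G_2\cap G_3$, and Lemma~\ref{lemma: final probs case 2} converts this into a $1-\delta$ (after the cosmetic rescaling of the confidence parameter that is standard throughout the appendix) high-probability statement over the randomness of the exploration phase.

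There is no genuine obstacle here beyond the bookkeeping: the nontrivial analytic content was already absorbed into Lemma~\ref{lemma: UCFD - dynamics error.} (bounding the dynamics-induced error via occupancy-measure closeness from Lemma~\ref{lemma: occ-measure-dist-UCFD}) and Lemma~\ref{lemma: UCFD - rewards error l_1.} (bounding the $\ell_1$ reward-induced error via the accuracy-per-state function). The closest thing to a subtlety is making sure that the one-sided optimality inequality is applied in the correct direction and that the triangle step does not lose additional factors of $H$; since both reference policies are evaluated on the same pair of models $\mathcal{M}(c)$ and $\widehat{\mathcal{M}}(c)$ that Lemma~\ref{lemma: l_1 val-dif UCFD } already compares, the factor structure is identical to that of the $\ell_2$ proof of Theorem~\ref{thm: exploit UCFD l_2} and the verification is line-for-line the same.
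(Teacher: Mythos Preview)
Your proposal is correct and follows essentially the same approach as the paper's own proof: condition on $G_1\cap G_2\cap G_3$, apply Lemma~\ref{lemma: l_1 val-dif UCFD } to both $\pi^\star$ and $\widehat{\pi}^\star$, use the pointwise optimality of $\widehat{\pi}^\star_c$ in $\widehat{\mathcal{M}}(c)$, and sum the resulting three inequalities. The paper handles the confidence parameter in exactly the loose way you anticipated (invoking the $1-\delta/2$ bound from Lemma~\ref{lemma: final probs case 2} and stating the conclusion at $1-\delta$).
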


\begin{proof}
Assume the good events $G_1$, $G_2$ and $G_3$ hold.
By Lemma~\ref{lemma: l_1 val-dif UCFD } we have for $\pi^\star$ that,

\begin{equation*}
    \left|\mathbb{E}_{c \sim \mathcal{D}}[V^{\pi^\star_c}_{\mathcal{M}(c)}(s_0) -  V^{\pi^\star_c}_{\widehat{\mathcal{M}}(c)}(s_0)] \right|
    \leq
    \mathbb{E}_{c \sim \mathcal{D}}[|V^{\pi^\star_c}_{\mathcal{M}(c)}(s_0) -  V^{\pi^\star_c}_{\widehat{\mathcal{M}}(c)}(s_0)|]
    \leq \frac{3 \epsilon}{16}+\alpha_1 H ,
\end{equation*}
yielding,
\begin{equation*}
    \mathbb{E}_{c \sim \mathcal{D}}[V^{\pi^\star_c}_{\mathcal{M}(c)}(s_0)]
    -  
    \mathbb{E}_{c \sim \mathcal{D}}[
    V^{\pi^\star_c}_{\widehat{\mathcal{M}}(c)}(s_0)]
    \leq \frac{3 \epsilon}{16}+\alpha_1 H .
\end{equation*}

Similarly, we have for $\widehat{\pi}^\star$ that,
\begin{equation*}
    \mathbb{E}_{c \sim \mathcal{D}}[V^{\widehat{\pi}^\star_c}_{\widehat{\mathcal{M}}(c)}(s_0)]
    -
    \mathbb{E}_{c \sim \mathcal{D}}[
    V^{\widehat{\pi}^\star_c}_{\mathcal{M}(c)}(s_0)] \leq \frac{3 \epsilon}{16}+\alpha_1 H .
\end{equation*}
Since for all $c \in \mathcal{C}$, $\widehat{\pi}^\star_c$ is the optimal policy for $\widehat{\mathcal{M}}(c)$ we have $V^{\widehat{\pi}^\star_c}_{\widehat{\mathcal{M}}(c)}(s_0) \geq V^{\pi^\star_c}_{\widehat{\mathcal{M}}(c)}(s_0)$, which implies that
\begin{equation*}
    \mathbb{E}_{c \sim \mathcal{D}}[V^{\pi^\star_c}_{\widehat{\mathcal{M}}(c)}(s_0)] 
    -
    \mathbb{E}_{c \sim \mathcal{D}}[V^{\widehat{\pi}^\star_c}_{\widehat{\mathcal{M}}(c)}(s_0)] 
    \leq 0 .
\end{equation*}

By Lemma~\ref{lemma: final probs case 2} we have that $\mathbb{P}[G_1 \cap G_2 \cap G_3] \geq 1- {\delta}/{2}$. Hence the theorem follows by summing the above three inequalities.
\end{proof}

For the realizable case, i.e., $\alpha_1=0$ we have the following corollary.
\begin{corollary}
    For $\alpha_1 = 0$, with probability at least $1-\delta$ we have
    \[
        \mathbb{E}_{c \sim \mathcal{D}}[V^{\pi^\star_c}_{\mathcal{M}(c)}(s_0) - V^{\widehat{\pi}^\star_c}_{\mathcal{M}(c)}(s_0)] \leq 
        \epsilon.
    \]
\end{corollary}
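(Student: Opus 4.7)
The plan is to derive this corollary directly from Theorem~\ref{thm: exploit UCFD l_1} by specializing to the realizable setting $\alpha_1 = 0$. The theorem asserts that with probability at least $1-\delta$ over the trajectories collected by EXPLORE-UCFD, one has $\mathbb{E}_{c \sim \mathcal{D}}[V^{\pi^\star_c}_{\mathcal{M}(c)}(s_0) - V^{\widehat{\pi}^\star_c}_{\mathcal{M}(c)}(s_0)] \leq \tfrac{3}{8}\epsilon + 2\alpha_1 H$. Setting $\alpha_1 = 0$ eliminates the agnostic error term entirely, leaving the bound $\tfrac{3}{8}\epsilon$, which is at most $\epsilon$. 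That is the entire argument.

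To see this is sound, I would briefly trace through the three ingredients that produced the $\tfrac{3}{8}\epsilon + 2\alpha_1 H$ bound: (i)~the dynamics-approximation bound $\epsilon/16$ from Lemma~\ref{lemma: UCFD - dynamics error.}, which is independent of $\alpha_1$ and survives unchanged; (ii)~the rewards-approximation bound $\epsilon/8 + \alpha_1 H$ from Lemma~\ref{lemma: UCFD - rewards error l_1.}, whose $\alpha_1 H$ contribution vanishes when $\alpha_1 = 0$; and (iii)~the standard doubling step that exploits $V^{\widehat{\pi}^\star_c}_{\widehat{\mathcal{M}}(c)}(s_0) \geq V^{\pi^\star_c}_{\widehat{\mathcal{M}}(c)}(s_0)$, which introduces the factor of two on both $\epsilon$ and $\alpha_1$. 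Combined under $G_1 \cap G_2 \cap G_3$ (which holds with probability at least $1-\delta/2$ by Lemma~\ref{lemma: final probs case 2}, independently of $\alpha_1$), the aggregate bound collapses to $\tfrac{3}{8}\epsilon \leq \epsilon$.

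There is essentially no obstacle here, since the good-event probabilities and the value-difference analysis never depended on the approximation error in the first place. The only mild subtlety worth verifying is that realizability truly collapses the ERM guarantee of $G_3$ to $\mathbb{E}_{c}[|f_{s,a}(c) - r^c(s,a)|] \leq \epsilon^1_\star(\widehat{p}_s)$ on every $\beta$-reachable $(s,a)$, which is immediate once the agnostic slack $\alpha_1(\mathcal{F}^R_{s,a})$ in the $G_3$ guarantee is set to zero. If one preferred a tight bound of exactly $\epsilon$ rather than $\tfrac{3}{8}\epsilon$, one could simply re-invoke EXPLORE-UCFD with the accuracy parameter rescaled from $\epsilon$ to $\tfrac{8}{3}\epsilon$ throughout the algorithm; since the stated corollary only asks for an $\epsilon$ upper bound, no such rescaling is required.
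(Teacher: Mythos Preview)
Your proposal is correct and matches the paper's approach: the corollary is stated in the paper as an immediate consequence of Theorem~\ref{thm: exploit UCFD l_1} with no separate proof, and your argument (substitute $\alpha_1 = 0$ into the bound $\tfrac{3}{8}\epsilon + 2\alpha_1 H$ and observe $\tfrac{3}{8}\epsilon \leq \epsilon$) is exactly that specialization. The additional tracing through Lemmas~\ref{lemma: UCFD - dynamics error.}, \ref{lemma: UCFD - rewards error l_1.}, and~\ref{lemma: final probs case 2} is more detail than the paper gives but is accurate and harmless.
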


\subsection{Sample complexity bounds}\label{sec: sampel complexity UCFD}
We present sample complexity bounds based on dimension analysis.
Recall Theorems~\ref{thm: pseudo dim} and~\ref{thm: fat dim},

\begin{theorem}[Adaption of Theorem 19.2 in~\cite{Bartlett1999NeuralNetsBook}]
    Let $\mathcal{F}$ be a hypothesis space of real valued functions with a finite pseudo dimension, denoted $Pdim(\mathcal{F}) < \infty$. Then, $\mathcal{F}$ has a uniform convergence with 
    \[
        m(\epsilon, \delta) = O \Big( \frac{1}{\epsilon^2}( Pdim(\mathcal{F}) \ln \frac{1}{\epsilon} + \ln \frac{1}{\delta})\Big).
    \]    
\end{theorem}

\begin{theorem}[Adaption of Theorem 19.1 in~\cite{Bartlett1999NeuralNetsBook}]
    Let $\mathcal{F}$ be a hypothesis space of real valued functions with a finite fat-shattering dimension, denoted $fat_{\mathcal{F}}$. Then, $\mathcal{F}$ has a uniform convergence with 
    \[
        m(\epsilon, \delta) = O \Big( \frac{1}{\epsilon^2}( fat_{\mathcal{F}}(\epsilon/256)  \ln^2 \frac{1}{\epsilon} + \ln \frac{1}{\delta})\Big).
    \]    
\end{theorem}

\begin{remark}
    The calculations bellow hold for any set of weights $\{\widehat{p}_s \in [0,1]\}_{s \in S}$. Hence, although $\widehat{p}_s$ is a random variable that depends on the tabular approximation of the dynamics (which affected by the observations), we can use it to compute a general bound on the sample complexity of the algorithm. 
\end{remark}

\subsubsection{Sample complexity bounds for the \texorpdfstring{$\ell_2$}{Lg} loss}\label{subsubsec:smaple-complexity-l-2-UCFD}

We present sample complexity for function classes with finite Pseudo dimension with $\ell_2$ loss.
\begin{corollary}\label{corl: sample complexity UCFD l_2 pseudo}
    Assume that for every $(s,a) \in S \times A$ we have that $Pdim(\mathcal{F}^R_{s,a}) < \infty$.
    Let $Pdim = \max_{(s,a) \in S \times A} Pdim(\mathcal{F}^R_{s,a})$.
    Then, after collecting 
    \[
        O \Big( \frac{|S|^2 |A|H}{\epsilon} \ln{\frac{|S||A|H}{\delta}} +  
        \frac{H^4 |S|^6 |A|^5  (\;Pdim \ln{\frac{H^2 |S|^2 |A|^2}{\epsilon^2}} + \ln {\frac{|S||A|H}{\delta}})}{\epsilon^4}  \Big)
    \] 
    trajectories, with probability at least $1-\delta$ it holds that
    \[
        \mathbb{E}_{c \sim \mathcal{D}}[V^{\pi^\star_c}_{\mathcal{M}(c)}(s_0) - V^{\widehat{\pi}^\star_c}_{\mathcal{M}(c)}(s_0)] \leq 
        \epsilon + 2\alpha_2 H .
    \]
\end{corollary}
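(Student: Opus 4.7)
The plan is to mirror the argument used for Corollary 13 (known context-free dynamics, $\ell_2$ loss, pseudo dimension), adapting only for the two differences that arise in the unknown-dynamics setting: (i) the probabilities $\widehat{p}_s$ appearing in $T_{s,a}$ are measured under the learned dynamics $\widehat{P}$, and the effective visitation lower bound is $\widehat{p}_s - \gamma h$ rather than $p_s$; (ii) the number of episodes for each pair $(s,a)$ involves a $\max\{N_R, N_P\}$ inside the expression for $T_{s,a}$, and we must show the $N_P$ term does not dominate the final bound.

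First, I would invoke Theorem~\ref{thm: exploit UCFD l_2} to reduce the claim to showing that $\sum_{h,s_h,a_h} T_{s_h,a_h}$ is $\tilde{O}(d \epsilon^{-4} H^4 |S|^6 |A|^5 \log(|S||A|H/\delta))$, where the sum ranges over layers $h \in [H-1]$ and over $\beta$-reachable-for-$\widehat{P}$ states with $\beta = \epsilon/(24|S|H)$. Since $\gamma = \epsilon/(48|S|H^2)$, we have $\widehat{p}_s - \gamma h \geq \widehat{p}_s - \gamma H \geq \widehat{p}_s/2$, so $1/(\widehat{p}_s - \gamma h) \leq 2/\widehat{p}_s$; this replaces the factor $1/p_s$ of the KCFD analysis by $2/\widehat{p}_s$, costing only a constant.

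Next, substitute $N_R(\mathcal{F}^R_{s,a}, \epsilon_\star^2(\widehat{p}_s), \delta_1) = O\bigl(\epsilon_\star^{-4}(\widehat{p}_s)(\mathit{Pdim}\ln \epsilon_\star^{-2}(\widehat{p}_s) + \ln \delta_1^{-1})\bigr)$ via Theorem~\ref{thm: pseudo dim}, and $N_P(\gamma,\delta_1) = O(H^4|S|^3 \log(|S||A|H/\delta)/\epsilon^2)$ via Corollary~\ref{corl: Sample complexity to approx dynamics UCFD}. Split the sum over states into the two regimes where samples are collected, namely $\widehat{p}_s \in [\epsilon/(24|S|), 1/|S|]$ and $\widehat{p}_s > 1/|S|$, discarding the regime $\widehat{p}_s < \epsilon/(24|S|)$ since those states are not $\beta$-reachable. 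In each regime the bound on $\sum T_{s,a}$ restricted to the $N_R$-term reproduces exactly the KCFD calculation of Corollary~\ref{corl: sample complexity KCFD l_2 pseudo}: in the intermediate regime the $\widehat{p}_s^{-1} \cdot \widehat{p}_s^4$ factor collapses via $\widehat{p}_s \leq 1/|S|$ to $|S|^{-3}$, yielding the desired $|S|^6 |A|^5$ scaling.

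The remaining check — and the main (routine but delicate) obstacle — is to verify that the $N_P$ contribution inside the $\max$ is subsumed. Summing $(2/\widehat{p}_s)\cdot N_P$ over all $h,s,a$ using $\widehat{p}_s \geq \epsilon/(24|S|H)$ yields $O(|S||A|H \cdot (|S|H/\epsilon) \cdot H^4|S|^3/\epsilon^2\cdot \log(|S||A|H/\delta)) = \tilde{O}(|S|^5 |A| H^6/\epsilon^3)$, which is asymptotically dominated by the $\tilde{O}(d \epsilon^{-4} H^4 |S|^6 |A|^5)$ bound coming from the $N_R$ term. The confidence budget is handled exactly as in the KCFD case: the $\delta/2$ from Lemma~\ref{lemma: final probs case 2} combined with the $\delta_1 = \delta/(6|S||A|H)$ choice yields overall confidence at least $1-\delta$, after which Theorem~\ref{thm: exploit UCFD l_2} gives the claimed value-difference bound.
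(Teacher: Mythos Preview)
Your approach is essentially the same as the paper's: both reduce to Theorem~\ref{thm: exploit UCFD l_2}, bound $1/(\widehat{p}_s - \gamma h)$ by $O(1/\widehat{p}_s)$, split the $N_R$ analysis by the regimes of $\epsilon_\star$, and separately verify that the $N_P$ contribution is dominated. One minor imprecision: you discard the regime $\widehat{p}_s < \epsilon/(24|S|)$ on the grounds that such states are ``not $\beta$-reachable,'' but in this section $\beta = \epsilon/(24|S|H) < \epsilon/(24|S|)$, so states with $\widehat{p}_s \in [\beta,\, \epsilon/(24|S|))$ \emph{are} $\beta$-reachable and are sampled; the correct reason their $N_R$ contribution is negligible is that $\epsilon_\star^2(\widehat{p}_s) = 1$ there, so $N_R = O(\log(1/\delta_1))$ --- and their $N_P$ contribution is already captured by your worst-case $N_P$ sum over all $\beta$-reachable pairs.
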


\begin{proof}
    Recall that for each state-action pair $(s,a)$ such that $s$ is
    $\beta$-reachable for $\widehat{P}$ , Algorithm EXPLORE-UCFD runs for $T_{s,a} =   \lceil \frac{2}{\widehat{p}_{s} - \gamma h}(\ln(\frac{1}{\delta_1})+
    \max{\{ N_R (\mathcal{F}^R_{s,a} ,\epsilon^2_\star(\widehat{p}_{s}), \delta_1), N_P(\gamma, \delta_1)}\}) \rceil$ episodes.
    By Theorem~\ref{thm: exploit UCFD l_2}, for\\
    $
        \sum_{h =0}^{H-1} \sum_{s \in S_h : \widehat{p}_s \geq \epsilon/ 24|S|h} \sum_{a \in A} T_{s,a}
    $    
    samples we have with probability at least $1-\delta$ that
    \[
        \mathbb{E}_{c \sim \mathcal{D}}[V^{\pi^\star_c}_{\mathcal{M}(c)}(s_0) - V^{\widehat{\pi}^\star_c}_{\mathcal{M}(c)}(s_0)] \leq 
       \epsilon +  2\alpha_2 H.
    \]
    
    To simplify the analysis, assume that we first lean the dynamics (for each $\beta$-reachable state and every action) and then use it to approximate the rewards using an i.i.d sample of contexts and rewards for each non-negligible state and action.
    Note that in algorithm EXPLORE-UCFD we do not separate between the learning phases. 
    By corollary~\ref{corl: Sample complexity to approx dynamics UCFD}, for
    $\gamma = \frac{\epsilon}{48|S|H^2}$ and $\delta_1 = \frac{\delta}{6 |S||A| H} $ we have that
    \[
        N_P(\gamma, \delta_1) = 
        O\Big(
        \frac{ H^4 |S|^2}{\epsilon^2}\Big(\ln \Big(\frac{|S||A|H}{\delta} + |S| \Big) \Big)
        \Big).
    \]
    Hence, to learn the dynamics for each $\beta$-reachable state $s$ and action $a$ for the approximate dynamics $\widehat{P}$, we have to collect
    \begin{align*}
        O\Big( 
             |A||S|\frac{|S|H}{\epsilon} \frac{ H^4 |S|^2}{\epsilon^2}\Big(\ln \Big(\frac{|S||A|H}{\delta} + |S| \Big) \Big)
        \Big)
        =
        O\Big( 
           \frac{ H^5 |S|^4 |A|}{\epsilon^3}\Big(\ln \Big(\frac{|S||A|H}{\delta} + |S| \Big) \Big)
        \Big).        
    \end{align*}
    trajectories.
    (Since for every $\beta$-reachable state $s \in S_h$ and action $a \in A$ we have $\widehat{p}_s - \gamma h \geq \beta - \gamma h \geq \beta - \gamma H \geq \gamma H = O(\epsilon/ |S|H)$).
    
    To approximate the rewards,
    since for every $(s,a) \in S \times A$ we have that $Pdim(\mathcal{F}^R_{s,a}) < \infty$, and $Pdim = \max_{(s,a) \in S \times A} Pdim(\mathcal{F}^R_{s,a})$, by Theorem~\ref{thm: pseudo dim}, for every $(s,a) \in S \times A$  we have 
    \[
        N_R(\mathcal{F}^R_{s,a}, \epsilon^2_\star(\widehat{p}_s), \delta_1)
        =
        O \left( \frac{ Pdim \ln \frac{1}{\epsilon^2_\star(\widehat{p}_s)} + \ln \frac{1}{\delta_1}}{\epsilon_\star^4(\widehat{p}_s)}\right)
    \]
    
    By the accuracy-per-state function, for states $s$ that satisfies $\widehat{p}_s < \frac{\epsilon}{24 |S|}$ we have $\epsilon^2_\star(\widehat{p}_s) =1$, hence for every action $a$, we have that  $ N_R(\mathcal{F}^R_{s,a}, \epsilon^2_\star(\widehat{p}_s), \delta_1) = O(\ln(1/\delta))$. Thus they are negligible.
    
    Overall, the sample complexity for learning the rewards is as follows.
    \begingroup
    \allowdisplaybreaks
    \begin{align*}
        & \sum_{h=0}^{H-1} \sum_{s \in S_h: \widehat{p}_s \geq 24|S|}
        \sum_{a \in A}
        T_{s,a}
        =
        \sum_{h=0}^{H-1} \sum_{s \in S_h: \widehat{p}_s \geq 24|S|}
        \sum_{a \in A}
        O\Big( \frac{1}{\widehat{p}_s - \gamma h}\left(\ln(\frac{1}{\delta_1})+ N_R(\mathcal{F}^R_{s,a} ,\epsilon_\star^2(\widehat{p}_s), \delta_1)\right)  \Big)
        \\
        =&
        \sum_{h=0}^{H-1} \sum_{s \in S_h: \widehat{p}_s \geq 24|S|}
        \sum_{a \in A}
        O\Big( \frac{1}{\widehat{p}_s - \gamma h}\Big(\ln\frac{1}{\delta_1}
        +
        \frac{ Pdim \ln \frac{1}{\epsilon^2_\star (\widehat{p}_s)} + \ln \frac{1}{\delta_1}}{\epsilon_\star^4(\widehat{p}_s)} \Big)  \Big)
        \\ 
        =&
        \sum_{h=0}^{H-1} \sum_{s \in S_h: \widehat{p}_s \in [\epsilon/24|S|,1/|S|]}
        \sum_{a \in A}
        O\Big( \frac{1}{\widehat{p}_s - \gamma h}\Big(\ln\frac{1}{\delta_1}
        +
        \frac{ Pdim \ln \frac{1}{\epsilon^2/576 \widehat{p}_s^2 |S|^2 |A|^2} 
        + \ln \frac{1}{\delta_1}}{\epsilon^4/576^2 \widehat{p}_s^4 |S|^4} |A|^4 \Big)  \Big)
        \\ 
        &+
        \sum_{h=0}^{H-1} \sum_{s \in S_h: \widehat{p}_s > 1/|S|}
        \sum_{a \in A}
        O\Big( \frac{1}{\widehat{p}_s - \gamma h}\Big( \ln\frac{1}{\delta_1}
        +
        \frac{ Pdim \ln \frac{1}{\epsilon^2 / 576 H^2 |S|^2 |A|^2} + \ln \frac{1}{\delta_1}}{\epsilon^4 / 576^2 H^4 |S|^4 |A|^4} \Big)  \Big)
        \\
        =&
        \sum_{h=0}^{H-1} \sum_{s \in S_h: \widehat{p}_s \in [\epsilon/24|S|,1/|S|]}
        \sum_{a \in A}
        O\Big( \frac{1}{\widehat{p}_s - \gamma h}\Big(\ln\frac{1}{\delta_1}
        +
       \frac{ \widehat{p}_s^4 |S|^4 |A|^4 (Pdim \ln \frac{\widehat{p}_s^2 |S|^2 |A|^2}{\epsilon^2} 
        + \ln \frac{1}{\delta_1})}{\epsilon^4} \Big)  \Big)
        \\ 
        &+
        \sum_{h=0}^{H-1} \sum_{s \in S_h: \widehat{p}_s > 1/|S|}
        \sum_{a \in A}
        O\Big( \frac{1}{\widehat{p}_s - \gamma h} \Big( \ln(\frac{1}{\delta_1}
        +
        \frac{H^4 |S|^4 |A|^4 (Pdim \ln \frac{H^2 |S|^2 |A|^2}{\epsilon^2 } + \ln \frac{1}{\delta_1})}{\epsilon^4} \Big) \Big)
        \\
        \underbrace{\leq}_{(\star)}&
        \sum_{h=0}^{H-1} \sum_{s \in S_h: \widehat{p}_s \in [\epsilon/24|S|,1/|S|]}
        \sum_{a \in A}
        O\Big( \frac{|S|H}{\epsilon}\ln{\frac{|S||A|H}{\delta}} +
        \frac{\widehat{p_s}}{\widehat{p}_s - \gamma h}\frac{ \widehat{p}_s^3 |S|^4 |A|^4 (Pdim \ln{\frac{|S|^2 |A|^2}{\epsilon^2}} 
        + \ln {\frac{|S||A|H}{\delta}})}{\epsilon^4}  \Big)
        \\ 
        &+
        \sum_{h=0}^{H-1} \sum_{s \in S_h: \widehat{p}_s > 1/|S|}
        \sum_{a \in A}
        O\Big( |S| \ln{\frac{|S||A|H}{\delta }}
        +
        \frac{ H^4 |S|^5 |A|^4 (Pdim \ln \frac{H^2 |S|^2 |A|^2}{\epsilon^2 } + \ln \frac{|S||A|H}{\delta})}{\epsilon^4}  \Big)
        \\
        \underbrace{\leq}_{(\star \star)}& 
        \sum_{h=0}^{H-1} \sum_{s \in S_h: \widehat{p}_s \in [\epsilon/24|S|,1/|S|]}
        \sum_{a \in A}
        O\Big( \frac{|S|H}{\epsilon}\ln{\frac{|S||A|H}{\delta}}
        +
        \frac{|S| |A|^4 (Pdim \ln{\frac{|S|^2 |A|^2}{\epsilon^2}} 
        + \ln {\frac{|S||A|H}{\delta}})}{\epsilon^4}  \Big)
        \\ 
        &+
        \sum_{h=0}^{H-1} \sum_{s \in S_h: \widehat{p}_s > 1/|S|}
        \sum_{a \in A}
        O\Big( |S| \ln{\frac{|S||A|H}{\delta }}
        +
        \frac{ H^4 |S|^5 |A|^4 (Pdim \ln \frac{H^2 |S|^2 |A|^2}{\epsilon^2 } + \ln \frac{|S||A|H}{\delta})}{\epsilon^4}  \Big)
        \\ 
        =&
        O \Big( \frac{|S|^2 |A|H}{\epsilon} \ln{\frac{|S||A|H}{\delta}} +  
        \frac{H^4 |S|^6 |A|^5  (\;Pdim \ln{\frac{H^2 |S|^2 |A|^2}{\epsilon^2}} + \ln {\frac{|S||A|H}{\delta}})}{\epsilon^4}  \Big)
    \end{align*}
    \endgroup

Where $(\star)$ is since for any $h \in [H]$ we  have $\widehat{p}_s - \gamma h \geq \beta - \gamma h \geq \beta - \gamma H \geq \gamma H = O(\epsilon / |S|H)$, and if $\widehat{p}_s > \frac{1}{|S|}$ we have $\widehat{p}_s -\gamma h \geq \frac{1}{|S|} -\gamma h \geq \frac{1}{|S|} - \gamma H \geq \frac{1}{|S|} - \frac{1}{|S|H} = \frac{H -1}{|S|H} = O(1 /|S|) $.

$(\star \star)$ is since $\widehat{p}^3_s \leq 1/|S|^3$. In addition, $\widehat{p}_s - \gamma h \geq  \gamma H$ which implies that $\frac{\gamma H}{\widehat{p}_s - \gamma h} \leq 1$. Hence,
\[
    \frac{\widehat{p}_s}{\widehat{p}_s - \gamma h} = \frac{\widehat{p}_s - \gamma h + \gamma h}{\widehat{p}_s - \gamma h}
    = 
    1 + \frac{\gamma h}{\widehat{p}_s - \gamma h} 
    \leq
    1 + \frac{\gamma H}{\widehat{p}_s - \gamma h}
    \leq 
    2 .
\]   

Since the MDP is layered $|S| \geq H$, hence, the overall sample complexity is  
\[
    O \Big( \frac{|S|^2 |A|H}{\epsilon} \ln{\frac{|S||A|H}{\delta}} +  
    \frac{H^4 |S|^6 |A|^5  (\;Pdim \ln{\frac{H^2 |S|^2 |A|^2}{\epsilon^2}} + \ln {\frac{|S||A|H}{\delta}})}{\epsilon^4}  \Big)
    .
\]
\end{proof}

We also show similar sample complexity for function classes with finite fat-shattering dimension when using $\ell_2$ loss.
\begin{remark}
    The sample complexity for function classes with finite fat-shattering dimension with $\ell_2$ loss, where in $Fdim$ below we also maximizes over $\epsilon^2_\star(\widehat{p}_s) $ and the maximum is bounded and independent of $\widehat{p}_s$.
\end{remark}

\begin{corollary}\label{corl: sample complexity UCFD l_2 fat}
    Assume that for every $(s,a) \in S \times A$ we have that $\mathcal{F}^R_{s,a}$ has a finite fat-shattering dimension.
    Let $Fdim = \max_{(s,a) \in S \times A} fat_{\mathcal{F}^R_{s,a}}(\epsilon^2_\star(\widehat{p}_s)/256)$.
    Then, after collecting 
    \[
        O \Big( \frac{|S|^2 |A|H}{\epsilon} \ln{\frac{|S||A|H}{\delta}} +  
        \frac{H^4 |S|^6 |A|^5  (\;Fdim \ln^2{\frac{H^2 |S|^2 |A|^2}{\epsilon^2}} + \ln {\frac{|S||A|H}{\delta}})}{\epsilon^4}  \Big)
    \] 
    trajectories, with probability at least $1-\delta$ it holds that
    \[
        \mathbb{E}_{c \sim \mathcal{D}}[V^{\pi^\star_c}_{\mathcal{M}(c)}(s_0) - V^{\widehat{\pi}^\star_c}_{\mathcal{M}(c)}(s_0)] \leq 
        \epsilon +2\alpha_2 H .
    \]
\end{corollary}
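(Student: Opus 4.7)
The proof will follow the blueprint of Corollary~\ref{corl: sample complexity UCFD l_2 pseudo} almost verbatim, with the only substantive change being the invocation of Theorem~\ref{thm: fat dim} in place of Theorem~\ref{thm: pseudo dim} to bound the rewards ERM sample complexity. First I would invoke Theorem~\ref{thm: exploit UCFD l_2} to reduce the statement to proving that $\sum_{h=0}^{H-1}\sum_{s \in S_h:\widehat{p}_s \geq \beta}\sum_{a \in A} T_{s,a}$ fits inside the stated bound, with the parameter choices $\gamma = \epsilon/(48|S|H^2)$, $\beta = \epsilon/(24|S|H)$, and $\delta_1 = \delta/(6|S||A|H)$ fixed throughout.

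Next I would separate the total budget into a dynamics part and a rewards part. Corollary~\ref{corl: Sample complexity to approx dynamics UCFD} gives
\[
    N_P(\gamma, \delta_1) = O\Big( \tfrac{H^4 |S|^2}{\epsilon^2}\big(\ln\tfrac{|S||A|H}{\delta} + |S|\big)\Big),
\]
and summing this over the $|S||A|$ state-action pairs with a $1/(\widehat{p}_s-\gamma h) = O(|S|H/\epsilon)$ factor accounts for the first summand $O(|S|^2|A|H/\epsilon \cdot \ln(|S||A|H/\delta))$ in the claimed bound. For the rewards part, Theorem~\ref{thm: fat dim} yields
\[
    N_R(\mathcal{F}^R_{s,a}, \epsilon^2_\star(\widehat{p}_s), \delta_1)
    = O\Big( \tfrac{Fdim \ln^2 \tfrac{1}{\epsilon^2_\star(\widehat{p}_s)} + \ln\tfrac{1}{\delta_1}}{\epsilon_\star^4(\widehat{p}_s)}\Big),
\]
which differs from the Pseudo-dimension bound used in Corollary~\ref{corl: sample complexity UCFD l_2 pseudo} only by a single extra logarithmic factor ($\ln^2$ instead of $\ln$) in the main term.

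I would then carry out the same three-case split on the value of $\widehat{p}_s$ from the previous corollary: states with $\widehat{p}_s < \epsilon/(24|S|)$ (which contribute only an $O(\ln(1/\delta_1))$ per $(s,a)$ since $\epsilon^2_\star \equiv 1$), states with $\widehat{p}_s \in [\epsilon/(24|S|), 1/|S|]$ (where $\epsilon_\star(\widehat{p}_s)$ scales as $\epsilon/(\widehat{p}_s|S||A|)$ and the weight $\widehat{p}_s^3$ is bounded by $1/|S|^3$), and states with $\widehat{p}_s > 1/|S|$ (where $\epsilon_\star(\widehat{p}_s)$ is the constant $\epsilon/(H|S||A|)$). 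In each case the cancellations used before (specifically $\widehat{p}_s/(\widehat{p}_s-\gamma h) = O(1)$ since $\widehat{p}_s - \gamma h \geq \gamma H$, and $\widehat{p}_s^3 \leq 1/|S|^3$ in the intermediate case) go through unchanged, producing the claimed second summand with $\ln$ replaced by $\ln^2$.

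The main obstacle is purely bookkeeping rather than any new idea: one must track that $Fdim$ is defined as the \emph{maximum} of $fat_{\mathcal{F}^R_{s,a}}(\epsilon^2_\star(\widehat{p}_s)/256)$ over $(s,a)$ and $\widehat{p}_s$ and that under the accuracy-per-state function this maximum is bounded and independent of $\widehat{p}_s$ (as the remark before the corollary notes), so that pulling $Fdim$ outside the sum is legitimate. Assuming this, summing over layers contributes only a factor of $H$ which is absorbed into the $|S|^6$ (since $|S|\geq H$ for a layered MDP), and the bound follows.
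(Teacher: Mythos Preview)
Your proposal is correct and follows essentially the same approach as the paper: invoke Theorem~\ref{thm: exploit UCFD l_2}, separate the dynamics and rewards contributions, apply Theorem~\ref{thm: fat dim} in place of Theorem~\ref{thm: pseudo dim}, and carry out the same three-regime split on $\widehat{p}_s$ with the same cancellations. One minor bookkeeping slip: the dynamics contribution is $O(H^5|S|^4|A|/\epsilon^3 \cdot (\ln(|S||A|H/\delta)+|S|))$, not the first summand of the stated bound (that first summand arises from the $\frac{1}{\widehat{p}_s-\gamma h}\ln(1/\delta_1)$ terms in the rewards analysis), but since the dynamics term is dominated by the second summand anyway this does not affect the conclusion.
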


\begin{proof}
    Recall that for each state-action pair $(s,a)$ such that $s$ is $\frac{\epsilon}{24 |S|}$-reachable for $\widehat{P}$, we run for $T_{s,a} =
    \lceil \frac{2}{\widehat{p}_{s} - \gamma h}(\ln(\frac{1}{\delta_1})+
    \max{\{ N_R (\mathcal{F}^R_{s,a} ,\epsilon^2_\star(\widehat{p}_{s}), \delta_1), N_P(\gamma, \delta_1)}\}) \rceil$ episodes.
    By Theorem~\ref{thm: exploit UCFD l_2}, for
    $
        \sum_{h =0}^{H-1} \sum_{s \in S_h : \widehat{p}_s \geq \epsilon/ 24|S|h} \sum_{a \in A} T_{s,a}
    $    
    samples we have with probability at least $1-\delta$ that
    \[
        \mathbb{E}_{c \sim \mathcal{D}}[V^{\pi^\star_c}_{\mathcal{M}(c)}(s_0) - V^{\widehat{\pi}^\star_c}_{\mathcal{M}(c)}(s_0)] \leq 
        \epsilon +2\alpha_2 H .
    \]
    
    To simplify the analysis, assume that we first lean the dynamics (for each $\beta$-reachable state and every action) and then use it to approximate the rewards using an i.i.d sample of contexts an rewards for each non-negligible state and action.
    Recall that in algorithm EXPLORE-UCFD we do not separate between the learning phases. 
    By corollary~\ref{corl: Sample complexity to approx dynamics UCFD}, for
    $\gamma = \frac{\epsilon}{48|S|H^2}$ and $\delta_1 = \frac{\delta}{6 |S||A| H} $ we have that
    \[
        N_P(\gamma, \delta_1) = 
        O\Big(
        \frac{ H^4 |S|^2}{\epsilon^2}\Big(\ln \Big(\frac{|S||A|H}{\delta} + |S| \Big) \Big)
        \Big).
    \]
    Hence, to learn the dynamics for each $\beta$-reachable state $a$ and action $a$ for the approximate dynamics $\widehat{P}$, we have to collect
    \begin{align*}
        O\Big( 
             |A||S|\frac{|S|H}{\epsilon} \frac{ H^4 |S|^2}{\epsilon^2}\Big(\ln \Big(\frac{|S||A|H}{\delta} + |S| \Big) \Big)
        \Big)
        =
        O\Big( 
           \frac{ H^5 |S|^4 |A|}{\epsilon^3}\Big(\ln \Big(\frac{|S||A|H}{\delta} + |S| \Big) \Big)
        \Big).        
    \end{align*}
    trajectories.
    (Since for every $\beta$-reachable state $s \in S_h$ and action $a \in A$ we have $\widehat{p}_s - \gamma h \geq \beta - \gamma h \geq \beta - \gamma H \geq \gamma H = O(\epsilon/ |S|H)$).
    
    To approximate the rewards,
    since for every $(s,a) \in S \times A$ we have that each state-action pair has a function class $\mathcal{F}_{s,a}$ with finite fat-shattering dimension.  $Fdim = \max_{(s,a) \in S \times A} fat_{\mathcal{F}^R_{s,a}}(\epsilon^2_\star(\widehat{p}_s)/256)$, by Theorem~\ref{thm: fat dim}, for every $(s,a) \in S \times A$  we have 
    \[
        N_R(\mathcal{F}^R_{s,a}, \epsilon^2_\star(\widehat{p}_s), \delta_1)
        =
        O \Big( \frac{ Fdim \ln^2 \frac{1}{\epsilon^2_\star(\widehat{p}_s)} + \ln \frac{1}{\delta_1}}{\epsilon_\star^4(\widehat{p}_s)}\Big).
    \]
    
    By the accuracy-per-state function, for states $s$ that satisfies $\widehat{p}_s < \frac{\epsilon}{24 |S|}$ we have $\epsilon^2_\star(\widehat{p}_s) =1$, hence for every action $a$, we have that  $ N_R(\mathcal{F}^R_{s,a}, \epsilon^2_\star(\widehat{p}_s), \delta_1) = O(\ln(1/\delta))$. Thus they are negligible.
    
    Overall, the sample complexity for learning the rewards is as follows.
    \begingroup
    \allowdisplaybreaks
    \begin{align*}
        & \sum_{h=0}^{H-1} \sum_{s \in S_h: \widehat{p}_s \geq 24|S|}
        \sum_{a \in A}
        T_{s,a}
        =
        \sum_{h=0}^{H-1} \sum_{s \in S_h: \widehat{p}_s \geq 24|S|}
        \sum_{a \in A}
        O\left( \frac{1}{\widehat{p}_s - \gamma h} \left(\ln \frac{1}{\delta_1}+ N_R(\mathcal{F}^R_{s,a} ,\epsilon_\star^2(\widehat{p}_s), \delta_1) \right)  \right)
        \\
        =&
        \sum_{h=0}^{H-1} \sum_{s \in S_h: \widehat{p}_s \geq 24|S|}
        \sum_{a \in A}
        O\left( \frac{1}{\widehat{p}_s - \gamma h} \left(\ln \frac{1}{\delta_1}
        +
        \frac{ Fdim \ln^2 \frac{1}{\epsilon^2_\star (\widehat{p}_s)} + \ln \frac{1}{\delta_1}}{\epsilon_\star^4(\widehat{p}_s)} \right)  \right)
        \\ 
        =&
        \sum_{h=0}^{H-1} \sum_{s \in S_h: \widehat{p}_s \in [\epsilon/24|S|,1/|S|]}
        \sum_{a \in A}
        O\Big( \frac{1}{\widehat{p}_s - \gamma h} \Big(\ln \frac{1}{\delta_1}
        +
        \frac{ Fdim \ln^2 \frac{1}{\epsilon^2/576 \widehat{p}_s^2 |S|^2 |A|^2} 
        + \ln \frac{1}{\delta_1}}{\epsilon^4/576^2 \widehat{p}_s^4 |S|^4 |A|^4} \Big)  \Big)
        \\ 
        &+
        \sum_{h=0}^{H-1} \sum_{s \in S_h: \widehat{p}_s > 1/|S|}
        \sum_{a \in A}
        O\Big( \frac{1}{\widehat{p}_s - \gamma h} \Big(\ln \frac{1}{\delta_1}
        +
        \frac{ Fdim \ln^2 \frac{1}{\epsilon^2 / 576 H^2 |S|^2 |A|^2} + \ln \frac{1}{\delta_1}}{\epsilon^4 / 576^2 H^4 |S|^4 |A|^4} \Big)  \Big)
        \\
        =&
        \sum_{h=0}^{H-1} \sum_{s \in S_h: \widehat{p}_s \in [\epsilon/24|S|,1/|S|]}
        \sum_{a \in A}
        O\Big( \frac{1}{\widehat{p}_s - \gamma h}\Big(\ln \frac{1}{\delta_1}
        +
       \frac{ \widehat{p}_s^4 |S|^4 |A|^4 (Fdim \ln^2 \frac{\widehat{p}_s^2 |S|^2 |A|^2}{\epsilon^2} 
        + \ln \frac{1}{\delta_1})}{\epsilon^4} \Big)  \Big)
        \\ 
        &+
        \sum_{h=0}^{H-1} \sum_{s \in S_h: \widehat{p}_s > 1/|S|}
        \sum_{a \in A}
        O\Big( \frac{1}{\widehat{p}_s - \gamma h}\Big(\ln  \frac{1}{\delta_1}
        +
        \frac{H^4 |S|^4 |A|^4 (Fdim \ln^2 \frac{H^2 |S|^2 |A|^2}{\epsilon^2 } + \ln \frac{1}{\delta_1})}{\epsilon^4} \Big) \Big)
        \\
        \underbrace{\leq}_{(\star)}&
        \sum_{h=0}^{H-1} \sum_{s \in S_h: \widehat{p}_s \in [\epsilon/24|S|,1/|S|]}
        \sum_{a \in A}
        O\Big( \frac{|S|H}{\epsilon}\ln{\frac{|S||A|H}{\delta}} +
        \frac{\widehat{p_s}}{\widehat{p}_s - \gamma h}\frac{ \widehat{p}_s^3 |S|^4 |A|^4 (Fdim \ln^2{\frac{|S|^2 |A|^2}{\epsilon^2}} 
        + \ln {\frac{|S||A|H}{\delta}})}{\epsilon^4}  \Big)
        \\ 
        &+
        \sum_{h=0}^{H-1} \sum_{s \in S_h: \widehat{p}_s > 1/|S|}
        \sum_{a \in A}
        O\Big( |S| \ln{\frac{|S||A|H}{\delta }}
        +
        \frac{ H^4 |S|^5 |A|^4 (Fdim \ln^2 \frac{H^2 |S|^2 |A|^2}{\epsilon^2 } + \ln \frac{|S||A|H}{\delta})}{\epsilon^4}  \Big)
        \\
        \underbrace{\leq}_{(\star \star)}& 
        \sum_{h=0}^{H-1} \sum_{s \in S_h: \widehat{p}_s \in [\epsilon/24|S|,1/|S|]}
        \sum_{a \in A}
        O\Big( \frac{|S|H}{\epsilon}\ln{\frac{|S||A|H}{\delta}}
        +
        \frac{|S| |A|^4 (Fdim \ln^2{\frac{|S|^2 |A|^2}{\epsilon^2}} 
        + \ln {\frac{|S||A|H}{\delta}})}{\epsilon^4}  \Big)
        \\ 
        &+
        \sum_{h=0}^{H-1} \sum_{s \in S_h: \widehat{p}_s > 1/|S|}
        \sum_{a \in A}
        O\Big( |S| \ln{\frac{|S||A|H}{\delta }}
        +
        \frac{ H^4 |S|^5 |A|^4 (Fdim \ln^2 \frac{H^2 |S|^2 |A|^2}{\epsilon^2 } + \ln \frac{|S||A|H}{\delta})}{\epsilon^4}  \Big)
        \\ 
        =&
        O \Big( \frac{|S|^2 |A|H}{\epsilon} \ln{\frac{|S||A|H}{\delta}} +  
        \frac{H^4 |S|^6 |A|^5  (\;Fdim \ln^2{\frac{H^2 |S|^2 |A|^2}{\epsilon^2}} + \ln {\frac{|S||A|H}{\delta}})}{\epsilon^4}  \Big)
    \end{align*}
     \endgroup
Where $(\star)$ is since for any $h \in [H]$ we  have $\widehat{p}_s - \gamma h \geq \beta - \gamma h \geq \beta - \gamma H \geq \gamma H = O(\epsilon / |S|H)$, and if $\widehat{p}_s > \frac{1}{|S|}$ we have $\widehat{p}_s -\gamma h \geq \frac{1}{|S|} -\gamma h \geq \frac{1}{|S|} - \gamma H \geq \frac{1}{|S|} - \frac{1}{|S|H} = \frac{H -1}{|S|H} = O(1 /|S|) $.

$(\star \star)$ is since $\widehat{p}^3_s \leq 1 / |S|^3$ in the appropriate regime. In addition, $\widehat{p}_s - \gamma h \geq  \gamma H$ which implies that $\gamma H / \widehat{p}_s - \gamma h \leq 1$. Hence
\[
    \frac{\widehat{p}_s}{\widehat{p}_s - \gamma h} = \frac{\widehat{p}_s - \gamma h + \gamma h}{\widehat{p}_s - \gamma h}
    = 
    1 + \frac{\gamma h}{\widehat{p}_s - \gamma h} 
    \leq
    1 + \frac{\gamma H}{\widehat{p}_s - \gamma h}
    \leq 
    2 .
\]  

Hence, the overall sample complexity is 
\[
    O \Big( \frac{|S|^2 |A|H}{\epsilon} \ln{\frac{|S||A|H}{\delta}} +  
    \frac{H^4 |S|^6 |A|^5  (\;Fdim \ln^2{\frac{H^2 |S|^2 |A|^2}{\epsilon^2}} + \ln {\frac{|S||A|H}{\delta}})}{\epsilon^4}  \Big).
\]
\end{proof}

\subsubsection{Sample complexity bounds for the \texorpdfstring{$\ell_1$}{Lg} loss}\label{subsubsec:smaple-complexity-l-1-UCFD}

We present sample complexity bound for function classes with finite Pseudo dimension with $\ell_1$ loss.
\begin{corollary}\label{corl: sample complexity UCFD l_1 pseudo}
    Assume that for every $(s,a) \in S \times A$ we have that $Pdim(\mathcal{F}^R_{s,a}) < \infty$.
    Let $Pdim = \max_{(s,a) \in S \times A} Pdim(\mathcal{F}^R_{s,a})$.
    Then, after collecting 
    \[
        O \Big( \frac{|S|^2 |A|H}{\epsilon} \ln{\frac{|S||A|H}{\delta}} +  
        \frac{ H^5 |S|^5 |A|^3 (\;Pdim \ln{\frac{ H |S| |A|}{\epsilon}} + \ln {\frac{|S||A|H}{\delta}})}{\epsilon^3}  \Big)
        .
    \]
    trajectories, with probability at least $1-\delta$ it holds that
    \[
        \mathbb{E}_{c \sim \mathcal{D}}[V^{\pi^\star_c}_{\mathcal{M}(c)}(s_0) - V^{\widehat{\pi}^\star_c}_{\mathcal{M}(c)}(s_0)] \leq 
        \epsilon +2\alpha_1 H .
    \]
\end{corollary}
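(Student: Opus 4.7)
The plan is to mirror the analysis of Corollary~\ref{corl: sample complexity UCFD l_2 pseudo} (the $\ell_2$ counterpart), but with the $\ell_1$ ERM rate. By Theorem~\ref{thm: exploit UCFD l_1}, collecting $\sum_{h,s,a} T_{s,a}$ trajectories (where $T_{s,a}$ is as in Algorithm~\ref{alg: EXPLORE-UCFD} with the $\ell_1$ accuracy-per-state function $\epsilon^1_\star$) yields value error at most $\epsilon + 2\alpha_1 H$ with probability $\geq 1-\delta$. The total cost splits into two additive parts: a budget to learn the tabular dynamics to $\gamma$-accuracy on every $\beta$-reachable state, and a budget for the rewards ERM oracle on each such $(s,a)$.

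First I would account for the dynamics. With the fixed choices $\beta = \epsilon/(24|S|H)$ and $\gamma = \epsilon/(48|S|H^2)$ and $\delta_1 = \delta/(6|S||A|H)$, Corollary~\ref{corl: Sample complexity to approx dynamics UCFD} gives $N_P(\gamma,\delta_1) = \tilde O(H^4|S|^2/\epsilon^2 \cdot(\ln(|S||A|H/\delta)+|S|))$. Since $\widehat{p}_s - \gamma h \geq \gamma H = \Theta(\epsilon/|S|H)$ on any $\beta$-reachable state, the per-pair sampling horizon picks up a factor $1/(\widehat{p}_s-\gamma h) = O(|S|H/\epsilon)$, and summing over $|S||A|$ pairs yields a dynamics cost of $\tilde O(H^5|S|^4|A|/\epsilon^3)$, which is absorbed by the announced bound.

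Next I would bound the reward-ERM cost. By Theorem~\ref{thm: pseudo dim}, for every $(s,a)$ we have $N_R(\mathcal{F}^R_{s,a},\epsilon^1_\star(\widehat{p}_s),\delta_1) = O\!\left(\frac{Pdim\,\ln(1/\epsilon^1_\star(\widehat{p}_s))+\ln(1/\delta_1)}{(\epsilon^1_\star(\widehat{p}_s))^2}\right)$. States with $\widehat{p}_s<\epsilon/(24|S|)$ contribute only $\tilde O(1)$ each (here $\epsilon^1_\star=1$), and the whole first regime contributes the leading $\frac{|S|^2|A|H}{\epsilon}\ln\frac{|S||A|H}{\delta}$ term. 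Plugging in $\epsilon^1_\star(\widehat{p}_s)=\epsilon/(24\widehat{p}_s|S||A|)$ on the middle regime and $\epsilon^1_\star(\widehat{p}_s)=\epsilon/(24H|S||A|)$ on the high regime gives, respectively, a per-pair sample count of $O\!\left(\frac{\widehat{p}_s^2|S|^2|A|^2(Pdim\ln\frac{|S||A|}{\epsilon}+\ln\frac{1}{\delta_1})}{\epsilon^2}\right)$ and $O\!\left(\frac{H^2|S|^2|A|^2(Pdim\ln\frac{H|S||A|}{\epsilon}+\ln\frac{1}{\delta_1})}{\epsilon^2}\right)$. Multiplying each by the $1/(\widehat{p}_s-\gamma h)$ sampling factor and summing over $(h,s,a)$ produces the claimed $\tilde O(H^5|S|^5|A|^3/\epsilon^3)$ term (note that in the middle regime $\widehat{p}_s/(\widehat{p}_s-\gamma h)=O(1)$ and $\widehat{p}_s\leq 1/|S|$, while in the high regime there are at most $|S|$ states per layer).

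The only mildly delicate part is the bookkeeping across the three regimes of $\epsilon^1_\star$, ensuring that the ratio $\widehat{p}_s/(\widehat{p}_s-\gamma h)$ is a constant (which follows from $\widehat{p}_s-\gamma h \geq \gamma H$ exactly as in the $\ell_2$ proof), and that the low-$\widehat{p}_s$ regime's $|S_h|$ states per layer produce only the $|S|^2|A|H/\epsilon$ leading-order contribution rather than the $1/\epsilon^3$ term. Adding the dynamics cost and the reward cost, using $|S|\geq H$, and collapsing logs into the stated form yields the corollary.
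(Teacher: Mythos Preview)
Your proposal is correct and follows essentially the same route as the paper's proof: split the total trajectory budget into a dynamics-learning part (bounded via Corollary~\ref{corl: Sample complexity to approx dynamics UCFD} and the lower bound $\widehat{p}_s-\gamma h\ge\gamma H$) and a rewards-ERM part (bounded via Theorem~\ref{thm: pseudo dim}, case-split over the three regimes of $\epsilon^1_\star$, and the observation $\widehat{p}_s/(\widehat{p}_s-\gamma h)\le 2$), then combine using $|S|\ge H$. One small bookkeeping remark: in the paper the reward-ERM contribution alone comes out to $\tilde O(H^2|S|^4|A|^3/\epsilon^2)$, and it is the dynamics contribution $\tilde O(H^5|S|^5|A|/\epsilon^3)$ that forces the $H^5|S|^5/\epsilon^3$ scaling in the final bound; your sketch slightly overstates the reward term, but since you upper-bound by the stated expression and then add both parts, the conclusion is unaffected.
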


\begin{proof}
    Recall that for each state-action pair $(s,a)$ such that $s$ is $\frac{\epsilon}{24 |S|}$-reachable for $\widehat{P}$, we run for\\
    $T_{s,a} =   \lceil \frac{2}{\widehat{p}_{s} - \gamma h}(\ln(\frac{1}{\delta_1})+
    \max{\{ N_R (\mathcal{F}^R_{s,a} ,\epsilon_\star(\widehat{p}_{s}), \delta_1), N_P(\gamma, \delta_1)}\}) \rceil$ episodes.
    By Theorem~\ref{thm: exploit UCFD l_1}, for\\
    $
        \sum_{h =0}^{H-1} \sum_{s \in S_h : \widehat{p}_s \geq \epsilon/ 24|S|h} \sum_{a \in A} T_{s,a}
    $    
    samples we have with probability at least $1-\delta$ that
    \[
        \mathbb{E}_{c \sim \mathcal{D}}[V^{\pi^\star_c}_{\mathcal{M}(c)}(s_0) - V^{\widehat{\pi}^\star_c}_{\mathcal{M}(c)}(s_0)] \leq 
        \epsilon +2\alpha_1 H .
    \]
    
    To simplify the analysis, assume that we first lean the dynamics (for each $\beta$ reachable state and every action) and then use it to approximate the rewards using an i.i.d sample of contexts an rewards for each non-negligible state and action.
    Recall that in algorithm EXPLORE-UCFD we do not separate between the learning phases. 
    By corollary~\ref{corl: Sample complexity to approx dynamics UCFD}, for
    $\gamma = \frac{\epsilon}{48|S|H^2}$ and $\delta_1 = \frac{\delta}{6 |S||A| H} $ we have that
    \[
        N_P(\gamma, \delta_1) = 
        O\Big(
        \frac{ H^4 |S|^2}{\epsilon^2}\Big(\ln \Big(\frac{|S||A|H}{\delta} + |S| \Big) \Big)
        \Big).
    \]
    Hence, to learn the dynamics for each $\beta$-reachable state $a$ and action $a$ for the approximate dynamics $\widehat{P}$, we have to collect
    \begin{align*}
        O\Big( 
             |A||S|\frac{|S|H}{\epsilon} \frac{ H^4 |S|^2}{\epsilon^2}\Big(\ln \Big(\frac{|S||A|H}{\delta} + |S| \Big) \Big)
        \Big)
        =
        O\Big( 
           \frac{ H^5 |S|^4 |A|}{\epsilon^3}\Big(\ln \Big(\frac{|S||A|H}{\delta} + |S| \Big) \Big)
        \Big).        
    \end{align*}
    trajectories. 
    (Since for every $\beta$-reachable state $s \in S_h$ and action $a \in A$ we have $\widehat{p}_s - \gamma h \geq \beta - \gamma h \geq \beta - \gamma H \geq \gamma H = O(\epsilon/|S|H)$).
    
    To approximate the rewards,
    Since for every $(s,a) \in S \times A$ we have that $Pdim(\mathcal{F}^R_{s,a}) < \infty$, and $Pdim = \max_{(s,a) \in S \times A} Pdim(\mathcal{F}^R_{s,a})$, by Theorem~\ref{thm: pseudo dim}, for every $(s,a) \in S \times A$  we have 
    \[
        N_R(\mathcal{F}^R_{s,a}, \epsilon_\star(\widehat{p}_s), \delta_1)
        =
        O \Big( \frac{ Pdim \ln \frac{1}{\epsilon_\star(\widehat{p}_s)} + \ln \frac{1}{\delta_1}}{\epsilon_\star^2(\widehat{p}_s)}\Big)
    \]
    By the accuracy-per-state function, for states $s$ that satisfies $\widehat{p}_s < \frac{\epsilon}{24 |S|}$ we have $\epsilon_\star(\widehat{p}_s) =1$, hence for every action $a$, we have that  $ N_R(\mathcal{F}^R_{s,a}, \epsilon_\star(\widehat{p}_s), \delta_1) = O(\ln(1/\delta))$. Thus they are negligible.
    
    Overall, the sample complexity for learning the rewards is as follows.
    \begingroup
    \allowdisplaybreaks
    \begin{align*}
        &\sum_{h=0}^{H-1} \sum_{s \in S_h: \widehat{p}_s \geq 24|S|}
        \sum_{a \in A}
        T_{s,a}
        =
        \sum_{h=0}^{H-1} \sum_{s \in S_h: \widehat{p}_s \geq 24|S|}
        \sum_{a \in A}
        O\left( \frac{1}{\widehat{p}_s - \gamma h}\left(\ln\frac{1}{\delta_1}+ N_R(\mathcal{F}^R_{s,a} ,\epsilon_\star(\widehat{p}_s), \delta_1) \right)  \right)
        \\
        =&
        \sum_{h=0}^{H-1} \sum_{s \in S_h: \widehat{p}_s \geq 24|S|}
        \sum_{a \in A}
        O\left( \frac{1}{\widehat{p}_s - \gamma h} \left(\ln \frac{1}{\delta_1}
        +
        \frac{ Pdim \ln \frac{1}{\epsilon_\star (\widehat{p}_s)} + \ln \frac{1}{\delta_1}}{\epsilon_\star^2(\widehat{p}_s)} \right)  \right)
        \\ 
        =&
        \sum_{h=0}^{H-1} \sum_{s \in S_h: \widehat{p}_s \in [\epsilon/24|S|,1/|S|]}
        \sum_{a \in A}
        O\Big( \frac{1}{\widehat{p}_s - \gamma h} \Big(\ln \frac{1}{\delta_1}
        +
        \frac{ Pdim \ln \frac{1}{\epsilon/24 \widehat{p}_s |S| |A|} 
        + \ln \frac{1}{\delta_1}}{\epsilon^2/576 \widehat{p}_s^2 |S|^2 |A|^2})  \Big)
        \\ 
        &+
        \sum_{h=0}^{H-1} \sum_{s \in S_h: \widehat{p}_s > 1/|S|}
        \sum_{a \in A}
        O\Big( \frac{1}{\widehat{p}_s - \gamma h} \Big(\ln \frac{1}{\delta_1}
        +
        \frac{ Pdim \ln \frac{1}{\epsilon / 24 H |S| |A|} + \ln \frac{1}{\delta_1}}{\epsilon^2 / 576 H^2 |S|^2 |A|^2} \Big)  \Big)
        \\
        =&
        \sum_{h=0}^{H-1} \sum_{s \in S_h: \widehat{p}_s \in [\epsilon/24|S|,1/|S|]}
        \sum_{a \in A}
        O\Big( \frac{1}{\widehat{p}_s - \gamma h} \Big(\ln \frac{1}{\delta_1}
        +
       \frac{ \widehat{p}_s^2 |S|^2 |A|^2(Pdim \ln \frac{\widehat{p}_s |S| |A|}{\epsilon} 
        + \ln \frac{1}{\delta_1})}{\epsilon^2} \Big)  \Big)
        \\ 
        &+
        \sum_{h=0}^{H-1} \sum_{s \in S_h: \widehat{p}_s > 1/|S|}
        \sum_{a \in A}
        O\Big( \frac{1}{\widehat{p}_s - \gamma h} \Big(\ln \frac{1}{\delta_1} 
        +
        \frac{H^2 |S|^2 |A|^2(Pdim \ln \frac{H |S| |A|}{\epsilon } + \ln \frac{1}{\delta_1})}{\epsilon^2} \Big) \Big)
        \\
        \underbrace{\leq}_{(\star)}&
        \sum_{h=0}^{H-1} \sum_{s \in S_h: \widehat{p}_s \in [\epsilon/24|S|,1/|S|]}
        \sum_{a \in A}
        O\Big( \frac{|S|H}{\epsilon}\ln{\frac{|S||A|H}{\delta}}
        +
        \frac{\widehat{p_s}}{\widehat{p}_s - \gamma h}\frac{ \widehat{p}_s |S|^2 |A|^2 (Pdim \ln{\frac{|S| |A|}{\epsilon}} 
        + \ln {\frac{|S||A|H}{\delta}})}{\epsilon^2}  \Big)
        \\ 
        &+
        \sum_{h=0}^{H-1} \sum_{s \in S_h: \widehat{p}_s > 1/|S|}
        \sum_{a \in A}
        O\Big( |S| \ln{\frac{|S||A|H}{\delta }}
        +
        \frac{ H^2 |S|^3 |A|^2(Pdim \ln \frac{H |S| |A|}{\epsilon} + \ln \frac{|S||A|H}{\delta})}{\epsilon^2}  \Big)
        \\
        \underbrace{\leq}_{(\star \star)}& 
        \sum_{h=0}^{H-1} \sum_{s \in S_h: \widehat{p}_s \in [\epsilon/24|S|,1/|S|]}
        \sum_{a \in A}
        O\Big( \frac{|S|H}{\epsilon}\ln{\frac{|S||A|H}{\delta}}
        +
        \frac{|S| |A|^2 (Pdim \ln{\frac{|S| |A|}{\epsilon}} 
        + \ln {\frac{|S||A|H}{\delta}})}{\epsilon^2}  \Big)
        \\ 
        &+
        \sum_{h=0}^{H-1} \sum_{s \in S_h: \widehat{p}_s > 1/|S|}
        \sum_{a \in A}
        O\Big( |S| \ln{\frac{|S||A|H}{\delta }}
        +
        \frac{H^2 |S|^3 |A|^2 (Pdim \ln \frac{H |S| |A|}{\epsilon} + \ln \frac{|S||A|H}{\delta})}{\epsilon^2}  \Big)
        \\ 
        =&
        O \Big( \frac{|S|^2 |A|H}{\epsilon} \ln{\frac{|S||A|H}{\delta}} +  
        \frac{ H^2 |S|^4 |A|^3 (\;Pdim \ln{\frac{ H |S| |A|}{\epsilon}} + \ln {\frac{|S||A|H}{\delta}})}{\epsilon^2}  \Big)
    \end{align*}
    \endgroup
Where $(\star)$ is since for any $h \in [H]$ we  have $\widehat{p}_s - \gamma h \geq \beta - \gamma h \geq \beta - \gamma H \geq \gamma H = O(\epsilon / |S|H)$, and if $\widehat{p}_s > \frac{1}{|S|}$ we have $\widehat{p}_s -\gamma h \geq \frac{1}{|S|} -\gamma h \geq \frac{1}{|S|} - \gamma H \geq \frac{1}{|S|} - \frac{1}{|S|H} = \frac{H -1}{|S|H} = O(1 /|S|) $.

$(\star \star)$ is since $\widehat{p}_s \leq 1/|S|$ in the appropriate regime. In addition, $\widehat{p}_s - \gamma h \geq  \gamma H$ which implies that $\gamma H / \widehat{p}_s - \gamma h \leq 1$. Hence
\[
    \frac{\widehat{p}_s}{\widehat{p}_s - \gamma h} = \frac{\widehat{p}_s - \gamma h + \gamma h}{\widehat{p}_s - \gamma h}
    = 
    1 + \frac{\gamma h}{\widehat{p}_s - \gamma h} 
    \leq
    1 + \frac{\gamma H}{\widehat{p}_s - \gamma h}
    \leq 
    2 .
\] 


Hence, the overall sample complexity is 
\[
        O \Big( \frac{|S|^2 |A|H}{\epsilon} \ln{\frac{|S||A|H}{\delta}} +  
        \frac{ H^5 |S|^5 |A|^3 (\;Pdim \ln{\frac{ H |S| |A|}{\epsilon}} + \ln {\frac{|S||A|H}{\delta}})}{\epsilon^3}  \Big)
        .
\]
\end{proof}

We also show similar sample complexity for function classes with finite fat-shattering dimension when using $\ell_1$ loss.
\begin{remark}
    The sample complexity for function classes with finite fat-shattering dimension with $\ell_1$ loss, where in $Fdim$ below we also maximizes over $\epsilon_\star(\widehat{p}_s) $ and the maximum is bounded and independent of $\widehat{p}_s$.
\end{remark}
\begin{corollary}\label{corl: sample complexity UCFD l_1 fat}
    Assume that for every $(s,a) \in S \times A$ we have that $\mathcal{F}^R_{s,a}$ has finite fat-shattering dimension.
    Let $Fdim = \max_{(s,a) \in S \times A} fat_{\mathcal{F}^R_{s,a}}(\epsilon_\star(\widehat{p}_s)/256)$.
    Then, after collecting 
    \[
        O \Big( \frac{|S|^2 |A|H}{\epsilon} \ln{\frac{|S||A|H}{\delta}} +  
        \frac{ H^5 |S|^5 |A|^3 (\; Fdim \ln^2{\frac{ H |S| |A|}{\epsilon}} + \ln {\frac{|S||A|H}{\delta}})}{\epsilon^3}  \Big)
        .
    \]
    trajectories, with probability at least $1-\delta$ it holds that
    \[
        \mathbb{E}_{c \sim \mathcal{D}}[V^{\pi^\star_c}_{\mathcal{M}(c)}(s_0) - V^{\widehat{\pi}^\star_c}_{\mathcal{M}(c)}(s_0)] \leq 
        \epsilon +  2\alpha_1 H.
    \]
\end{corollary}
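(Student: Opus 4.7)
The plan is to mirror the argument used for Corollary~\ref{corl: sample complexity UCFD l_1 pseudo}, substituting the fat-shattering sample-complexity bound of Theorem~\ref{thm: fat dim} in place of the Pseudo-dimension bound of Theorem~\ref{thm: pseudo dim}. Concretely, Theorem~\ref{thm: exploit UCFD l_1} already guarantees that whenever Algorithm EXPLORE-UCFD collects the number of trajectories $\sum_{h} \sum_{s \in S_h: \widehat{p}_s \geq \beta} \sum_{a \in A} T_{s,a}$ prescribed in Algorithm~\ref{alg: EXPLORE-UCFD}, the expected value gap is at most $\epsilon + 2\alpha_1 H$ with probability $\geq 1-\delta$. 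It therefore suffices to upper bound this total number of trajectories in terms of $Fdim$, $|S|$, $|A|$, $H$, $\epsilon$ and $\delta$.

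I would split the count into two pieces exactly as in the Pseudo-dimension case. First, the cost of the tabular dynamics approximation: by Corollary~\ref{corl: Sample complexity to approx dynamics UCFD} with $\gamma = \epsilon/(48|S|H^2)$ and $\delta_1 = \delta/(6|S||A|H)$, we need $N_P(\gamma,\delta_1) = \widetilde{O}(H^4|S|^2/\epsilon^2)$ hits per $\beta$-reachable $(s,a)$; each hit costs $O(1/(\widehat{p}_s-\gamma h)) = O(|S|H/\epsilon)$ trajectories (using $\widehat{p}_s - \gamma h \geq \gamma H$), yielding an overall $\widetilde{O}(H^5|S|^4|A|/\epsilon^3)$ contribution to the sample complexity that is absorbed into the final bound. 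Second, the cost of the rewards approximation: since every $\mathcal{F}^R_{s,a}$ has finite fat-shattering dimension, Theorem~\ref{thm: fat dim} gives
\[
    N_R(\mathcal{F}^R_{s,a},\epsilon_\star(\widehat{p}_s),\delta_1)
    \;=\;
    O\!\left(\frac{Fdim\,\ln^2(1/\epsilon_\star(\widehat{p}_s)) + \ln(1/\delta_1)}{\epsilon_\star^2(\widehat{p}_s)}\right),
\]
which is the only place the analysis differs from Corollary~\ref{corl: sample complexity UCFD l_1 pseudo}: the $\ln$ becomes $\ln^2$.

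Next I would plug this into the three-regime decomposition induced by $\epsilon_\star$. States with $\widehat{p}_s < \epsilon/(24|S|)$ are never sampled, so their contribution is $O(\ln(1/\delta))$. For $\widehat{p}_s \in [\epsilon/(24|S|),1/|S|]$, substituting $\epsilon_\star(\widehat{p}_s) = \epsilon/(24\widehat{p}_s|S||A|)$ and multiplying by $1/(\widehat{p}_s-\gamma h)$ gives a per-state-action contribution of order $\widehat{p}_s |S|^2|A|^2(Fdim\,\ln^2(|S||A|/\epsilon)+\ln(|S||A|H/\delta))/\epsilon^2$ after using the key inequality $\widehat{p}_s/(\widehat{p}_s-\gamma h)\leq 2$ (which follows from $\gamma H / (\widehat{p}_s - \gamma h) \leq 1$). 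Since $\widehat{p}_s \leq 1/|S|$ in this regime, summing over all states and actions contributes $\widetilde{O}(H|S|^2|A|^3 Fdim/\epsilon^2)$. For $\widehat{p}_s > 1/|S|$, the accuracy is $\epsilon_\star(\widehat{p}_s)=\epsilon/(24H|S||A|)$ and $1/(\widehat{p}_s-\gamma h) = O(|S|)$, giving a per-state-action contribution of order $H^2|S|^3|A|^2(Fdim\,\ln^2(H|S||A|/\epsilon)+\ln(|S||A|H/\delta))/\epsilon^2$, hence a total of $\widetilde{O}(H^3|S|^4|A|^3 Fdim/\epsilon^2)$. Combining all regimes together with the $\widetilde{O}(H^5|S|^4|A|/\epsilon^3)$ dynamics cost and using $|S|\geq H$ yields the claimed bound.

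The main obstacle is bookkeeping rather than mathematical depth: one must carefully verify the $\widehat{p}_s/(\widehat{p}_s-\gamma h)\leq 2$ reduction in both the middle and large-$\widehat{p}_s$ regimes (which relies on $\beta \geq 2\gamma H$), and ensure that the extra $\ln$ factor from the fat-shattering bound only turns into a $\ln^2$ rather than worsening any polynomial term. Once this is done, the final expression is obtained by a routine sum.
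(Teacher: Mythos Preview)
Your proposal is correct and follows essentially the same approach as the paper: it mirrors the Pseudo-dimension proof of Corollary~\ref{corl: sample complexity UCFD l_1 pseudo}, replacing Theorem~\ref{thm: pseudo dim} by Theorem~\ref{thm: fat dim}, separately bounds the dynamics-approximation cost via Corollary~\ref{corl: Sample complexity to approx dynamics UCFD}, and handles the reward cost by the same three-regime decomposition using the inequalities $\widehat{p}_s - \gamma h \geq \gamma H$ and $\widehat{p}_s/(\widehat{p}_s - \gamma h) \leq 2$. The only differences from the paper are cosmetic (slightly looser intermediate $\widetilde O$ terms in your regime sums), and the final bound is assembled the same way.
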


\begin{proof}
    Recall that for each state-action pair $(s,a)$ such that $s$ is $\frac{\epsilon}{24 |S|}$ reachable for $\widehat{P}$, we run for $T_{s,a} =   \lceil \frac{2}{\widehat{p}_{s} - \gamma h}(\ln(\frac{1}{\delta_1})+
    \max{\{ N_R (\mathcal{F}^R_{s,a} ,\epsilon_\star(\widehat{p}_{s}), \delta_1), N_P(\gamma, \delta_1)}\}) \rceil$ episodes.
    By Theorem~\ref{thm: exploit UCFD l_1}, for
    $
        \sum_{h =0}^{H-1} \sum_{s \in S_h : \widehat{p}_s \geq \epsilon/ 24|S|h} \sum_{a \in A} T_{s,a}
    $    
    samples we have with probability at least $1-\delta$ that
    \[
        \mathbb{E}_{c \sim \mathcal{D}}[V^{\pi^\star_c}_{\mathcal{M}(c)}(s_0) - V^{\widehat{\pi}^\star_c}_{\mathcal{M}(c)}(s_0)] \leq 
        \epsilon + 2\alpha_1 H .
    \]

    To simplify the analysis, assume that we first lean the dynamics (for each $\beta$-reachable state and every action) and then use it to approximate the rewards using an i.i.d sample of contexts an rewards for each non-negligible state and action.
    Recall that in algorithm EXPLORE-UCFD we do not separate between the learning phases. 
    By corollary~\ref{corl: Sample complexity to approx dynamics UCFD}, for
    $\gamma = \frac{\epsilon}{48|S|H^2}$ and $\delta_1 = \frac{\delta}{6 |S||A| H} $ we have that
    \[
        N_P(\gamma, \delta_1) = 
        O\Big(
        \frac{ H^4 |S|^2}{\epsilon^2}\Big(\ln \Big(\frac{|S||A|H}{\delta} + |S| \Big) \Big)
        \Big).
    \]
    Hence, to learn the dynamics for each $\beta$-reachable state $a$ and action $a$ for the approximate dynamics $\widehat{P}$, we have to collect
    \begin{align*}
        O\Big( 
             |A||S|\frac{|S|H}{\epsilon} \frac{ H^4 |S|^2}{\epsilon^2}\Big(\ln \Big(\frac{|S||A|H}{\delta} + |S| \Big) \Big)
        \Big)
        =
        O\Big( 
           \frac{ H^5 |S|^4 |A|}{\epsilon^3}\Big(\ln \Big(\frac{|S||A|H}{\delta} + |S| \Big) \Big)
        \Big)       
    \end{align*}
    trajectories.
    (Since for every $\beta$-reachable state $s \in S_h$ and action $a \in A$ we have $\widehat{p}_s - \gamma h \geq \beta - \gamma h \geq \beta - \gamma H \geq \gamma H = O(\epsilon/ |S|H)$).
    
    To approximate the rewards,
    since or every $(s,a) \in S \times A$ we have that $\mathcal{F}^R_{s,a}$ has finite fat-shattering dimension, and $Fdim = \max_{(s,a) \in S \times A} fat_{\mathcal{F}^R_{s,a}}(\epsilon_\star(\widehat{p}_s)/256)$, by Theorem~\ref{thm: fat dim}, for every $(s,a) \in S \times A$  we have 
    \[
        N_R(\mathcal{F}^R_{s,a}, \epsilon_\star(\widehat{p}_s), \delta_1)
        =
        O \Big( \frac{ Fdim \ln^2 \frac{1}{\epsilon_\star(\widehat{p}_s)} + \ln \frac{1}{\delta_1}}{\epsilon_\star^2(\widehat{p}_s)}\Big)
    \]
    
    By the accuracy-per-state function, for states $s$ that satisfies $\widehat{p}_s < \frac{\epsilon}{24 |S|}$ we have $\epsilon_\star(\widehat{p}_s) =1$, hence for every action $a$, we have that  $ N_R(\mathcal{F}^R_{s,a}, \epsilon_\star(\widehat{p}_s), \delta_1) = O(\ln(1/\delta))$. Thus they are negligible.
    
    Overall, the sample complexity for learning the rewards is as follows.
    \begingroup
    \allowdisplaybreaks
    \begin{align*}
        &\sum_{h=0}^{H-1} \sum_{s \in S_h: \widehat{p}_s \geq 24|S|}
        \sum_{a \in A}
        T_{s,a}
        =
        \sum_{h=0}^{H-1} \sum_{s \in S_h: \widehat{p}_s \geq 24|S|}
        \sum_{a \in A}
        O\Big( \frac{1}{\widehat{p}_s - \gamma h}(\ln(\frac{1}{\delta_1})+ N_R(\mathcal{F}^R_{s,a} ,\epsilon_\star(\widehat{p}_s), \delta_1))  \Big)
        \\
        =&
        \sum_{h=0}^{H-1} \sum_{s \in S_h: \widehat{p}_s \geq 24|S|}
        \sum_{a \in A}
        O\Big( \frac{1}{\widehat{p}_s - \gamma h} \Big(\ln \frac{1}{\delta_1}
        +
        \frac{ Fdim \ln^2 \frac{1}{\epsilon_\star (\widehat{p}_s)} + \ln \frac{1}{\delta_1}}{\epsilon_\star^2(\widehat{p}_s)} \Big)  \Big)
        \\ 
        =&
        \sum_{h=0}^{H-1} \sum_{s \in S_h: \widehat{p}_s \in [\epsilon/24|S|,1/|S|]}
        \sum_{a \in A}
        O\Big( \frac{1}{\widehat{p}_s - \gamma h} \Big(\ln \frac{1}{\delta_1}
        +
        \frac{ Fdim \ln^2 \frac{1}{\epsilon/24 \widehat{p}_s |S| |A|} 
        + \ln \frac{1}{\delta_1}}{\epsilon^2/576 \widehat{p}_s^2 |S|^2 |A|^2} \Big) \Big)
        \\ 
        &+
        \sum_{h=0}^{H-1} \sum_{s \in S_h: \widehat{p}_s > 1/|S|}
        \sum_{a \in A}
        O\Big( \frac{1}{\widehat{p}_s - \gamma h} \Big(\ln \frac{1}{\delta_1}
        +
        \frac{ Fdim \ln^2 \frac{1}{\epsilon / 24 H |S| |A|} + \ln \frac{1}{\delta_1}}{\epsilon^2 / 576 H^2 |S|^2 |A|^2} \Big) \Big)
        \\
        =&
        \sum_{h=0}^{H-1} \sum_{s \in S_h: \widehat{p}_s \in [\epsilon/24|S|,1/|S|]}
        \sum_{a \in A}
        O\Big( \frac{1}{\widehat{p}_s - \gamma h} \Big(\ln \frac{1}{\delta_1} 
        +
       \frac{ \widehat{p}_s^2 |S|^2 |A|^2( Fdim \ln^2 \frac{\widehat{p}_s |S| |A|}{\epsilon} 
        + \ln \frac{1}{\delta_1})}{\epsilon^2} \Big)  \Big)
        \\ 
        &+
        \sum_{h=0}^{H-1} \sum_{s \in S_h: \widehat{p}_s > 1/|S|}
        \sum_{a \in A}
        O\Big( \frac{1}{\widehat{p}_s - \gamma h} \Big(\ln \frac{1}{\delta_1} 
        +
        \frac{H^2 |S|^2 |A|^2( Fdim \ln^2 \frac{H |S| |A|}{\epsilon } + \ln \frac{1}{\delta_1})}{\epsilon^2} \Big)  \Big)
        \\
        \underbrace{\leq}_{(\star)}&
        \sum_{h=0}^{H-1} \sum_{s \in S_h: \widehat{p}_s \in [\epsilon/24|S|,1/|S|]}
        \sum_{a \in A}
        O\Big( \frac{|S|H}{\epsilon}\ln{\frac{|S||A|H}{\delta}}
        +
        \frac{\widehat{p_s}}{\widehat{p}_s - \gamma h}\frac{ \widehat{p}_s |S|^2 |A|^2 ( Fdim \ln^2{\frac{|S| |A|}{\epsilon}} 
        + \ln {\frac{|S||A|H}{\delta}})}{\epsilon^2}  \Big)
        \\ 
        &+
        \sum_{h=0}^{H-1} \sum_{s \in S_h: \widehat{p}_s > 1/|S|}
        \sum_{a \in A}
        O\Big( |S| \ln{\frac{|S||A|H}{\delta }}
        +
        \frac{ H^2 |S|^3 |A|^2( Fdim \ln^2 \frac{H |S| |A|}{\epsilon} + \ln \frac{|S||A|H}{\delta})}{\epsilon^2}  \Big)
        \\
        \underbrace{\leq}_{(\star \star)}& 
        \sum_{h=0}^{H-1} \sum_{s \in S_h: \widehat{p}_s \in [\epsilon/24|S|,1/|S|]}
        \sum_{a \in A}
        O\Big( \frac{|S|H}{\epsilon}\ln{\frac{|S||A|H}{\delta}}
        +
        \frac{|S| |A|^2 ( Fdim \ln^2{\frac{|S| |A|}{\epsilon}} 
        + \ln {\frac{|S||A|H}{\delta}})}{\epsilon^2}  \Big)
        \\ 
        &+
        \sum_{h=0}^{H-1} \sum_{s \in S_h: \widehat{p}_s > 1/|S|}
        \sum_{a \in A}
        O\Big( |S| \ln{\frac{|S||A|H}{\delta }}
        +
        \frac{H^2 |S|^3 |A|^2 ( Fdim \ln^2 \frac{H |S| |A|}{\epsilon} + \ln \frac{|S||A|H}{\delta})}{\epsilon^2}  \Big)
        \\ 
        =&
        O \Big( \frac{|S|^2 |A|H}{\epsilon} \ln{\frac{|S||A|H}{\delta}} +  
        \frac{ H^2 |S|^4 |A|^3 (\; Fdim \ln^2{\frac{ H |S| |A|}{\epsilon}} + \ln {\frac{|S||A|H}{\delta}})}{\epsilon^2}  \Big)
    \end{align*}
    \endgroup

Where $(\star)$ is since for any $h \in [H]$ we  have $\widehat{p}_s - \gamma h \geq \beta - \gamma h \geq \beta - \gamma H \geq \gamma H = O(\epsilon / |S|H)$, and if $\widehat{p}_s > \frac{1}{|S|}$ we have $\widehat{p}_s -\gamma h \geq \frac{1}{|S|} -\gamma h \geq \frac{1}{|S|} - \gamma H \geq \frac{1}{|S|} - \frac{1}{|S|H} = \frac{H -1}{|S|H} = O(1 /|S|) $.

$(\star \star)$ is since $\widehat{p}_s \leq 1/|S|$ in the appropriate regime. In addition, $\widehat{p}_s - \gamma h \geq  \gamma H$ which implies that $\gamma H / \widehat{p}_s - \gamma h \leq 1$. Hence
\[
    \frac{\widehat{p}_s}{\widehat{p}_s - \gamma h} = \frac{\widehat{p}_s - \gamma h + \gamma h}{\widehat{p}_s - \gamma h}
    = 
    1 + \frac{\gamma h}{\widehat{p}_s - \gamma h} 
    \leq
    1 + \frac{\gamma H}{\widehat{p}_s - \gamma h}
    \leq 1 + \frac{\gamma H }{\gamma H }= 2 .
\]   
Hence, the overall sample complexity is 
\[
        O \Big( \frac{|S|^2 |A|H}{\epsilon} \ln{\frac{|S||A|H}{\delta}} +  
        \frac{ H^5 |S|^5 |A|^3 (\; Fdim \ln^2{\frac{ H |S| |A|}{\epsilon}} + \ln {\frac{|S||A|H}{\delta}})}{\epsilon^3}  \Big)
        .
\]
\end{proof}

\section{Known and Context Dependent Dynamics}\label{Appendix:KCDD}

In this section we address the challenging model of context dependent dynamics. Meaning, that each context induces a potentially different dynamics. Clearly, this implies that for any policy $\pi$ (which can be either context-dependent or context-independent), the occupancy measure is determined by the context (due to the context-dependent dynamics). 
Hence, a state $s\in S$ that is highly-reachable for some context $c_1 \in \mathcal{C}$ might be poorly-reachable for a different context $c_2 \in \mathcal{C}$. (Something which is impossible in the context-free dynamics setting.)

For the known context-dependent dynamics setting we stay with a similar strategy as in the context-free dynamics, and do the approximation per state-action pair. In order to overcome the reachability issue, we define for each state $s$ a subset of good contexts $ \mathcal{C}^\beta(s)$ whose induced dynamics reaches $s$ with non-negligible probability, i.e., $\beta$. A state $s$ is $(\gamma,\beta)$-good if the probability of $\mathcal{C}^\beta(s)$ is at least $\gamma$.
For each $(\gamma,\beta)$-good state $s$ we build a sample in which the marginal distribution of the context is $\mathcal{D}$ restricted to $\mathcal{C}^\beta(s)$. We do this by using importance sampling.
We can implement the importance sampling since the context-dependent dynamics are known, hence, the probability of reach state $s$ under a good context $c$ can be computed, say it is $q$. We accept a sample that reaches state $s$ with probability $\beta/q\leq 1$. Given such that a sample we can use the \texttt{ERM} oracle and get a good approximation of rewards. Our approximate optimal policy is similar to the case of known context-free dynamics, with the modification that given a context $c$ we use the dynamics $P^c$ in the approximated MDP $\widehat{\mathcal{M}}(c)$.





\subsection{Algorithm}

We start with an overview of our algorithm EXPLORE-KCDD (Algorithm~\ref{alg: EXPLORE-KCDD}) which works in stages.
Each stage learns a layer. When learning layer $h \in [H-1]$ we sample only the $(\gamma,\beta)$-good states of layer $h$.

Since the distribution over the contexts is unknown, we first need to approximate the probability $\mathbb{P}[c \in \mathcal{C}^{\beta}(s_h)]$ for each state $s_h \in S_h$, to approximate the set of $(\gamma,\beta)$-good states of layer $h$. We do it using mean estimation as described in algorithm AGC (i.e., Algorithm~\ref{alg: AGC}).

For every layer $h \in [H-1]$ we first approximate the set $S^{\gamma, \beta}_h$ of $(\gamma,\beta)$-good states. 
Then, for each $s_h \in S^{\gamma, \beta}_h$ and every action $a_h\in A$, we do the following for ``sufficient'' number of episodes:

(1) We observe the episode context $c$, compute $\pi^c_{s_h} = \arg\max_{\pi: S \to A} q_h(s_h | \pi, P^c)$ and set $\pi^c_{s_h}(s_h) \gets a_h$, which guarantees that we perform action $a_h$ in state $s_h$.
(2) If $q_h(s_h | \pi^c_{s_h}, P^c) \geq \beta$, we run $\pi^c_{s_h}$ to generate a trajectory $\tau$. 

(3) If $(s_h, a_h, r_h) \in \tau$ we add $((c, s_h, a_h),r_h)$ to the sample of $(s_h, a_h)$ with probability $\beta / q_h(s_h | \pi^c_{s_h}, P^c)\leq 1$.
After collecting the samples, we approximate the rewards (as function of the context) using the ERM oracle, $f_{s_h, a_h} = \texttt{ERM}(\mathcal{F}^R_{s_h, a_h}, Sample(s_h, a_h),\ell)$.

(4) For states $s$ which are not $(\gamma, \beta)$-good we set $f_{s,a} = 0$, for every action $a$.

Algorithm EXPLOIT-KCDD (Algorithm~\ref{alg: EXPLOIT-KCDD}) get as inputs the MDP parameters (except for the context-dependent rewards function) and the functions approximate the rewards (that computed using algorithm EXPLORE-KCDD ). Given a context $c$ it computes the approximated MDP $\widehat{\mathcal{M}}(c)$ and use it to compute a near optimal context-dependent policy $\hat{\pi}^\star_c$. Then, it run $\hat{\pi}^\star_c$ to generate trajectory.
Recall that $\widehat{\mathcal{M}}(c)=(S,A,P^c,s_0,\widehat{r}^{c},H)$ where we define $\forall s\in S, a\in A: \widehat{r}^{c}(s,a) = f_{s,a}(c)$.

\begingroup
\allowdisplaybreaks
\begin{algorithm}
    \caption{Approximate Good Contexts (AGC)}
    \label{alg: AGC}
    \begin{algorithmic}[1]
    
        \State
        { 
            \textbf{inputs:}
            \begin{itemize}
                \item MDP parameters: 
                $S = \{S_0, S_1, \ldots , S_H\}$,
                $A$,
                $H$.
                \item $P^c$ - The context-dependent dynamics. 
                \item Reachability parameters: $\gamma$ ,$\beta$ 
                    \item Accuracy and confidence parameters $\epsilon_2$, $\delta_2$, where $\epsilon_2 \leq \gamma$.
                \item Current layer $h$ and state $s_h$.
            \end{itemize}
        }
        \State
        {
        calculate 
            $
                m(\epsilon_2, \delta_2)
                =
                \Big\lceil 
                    \frac{\ln{\frac{2}{\delta_2}}}
                    {2 \epsilon_2^2}
                \Big\rceil
            $
        }
        \State{initialize $counter = 0$}
        \For{ $t = 1, 2, ..., m(\epsilon_2, \delta_2)$}
            \State{observe context $c_t$}
            \If{ $c_t \in \mathcal{C}^{\beta}(s_h)$}
                \State{$Counter = Counter + 1$}
            \EndIf{}
        \EndFor{}
        \State{ $\widehat{p}_{\beta}(s_h) = \frac{Counter}{m(\epsilon_2, \delta_2)}$}\\
        \textbf{return } $\mathbb{I}[\widehat{p}_{\beta}(s_h) 
        \geq \gamma -
        \epsilon_2]$ and  $\widehat{p}_{\beta}(s_h)$
    \end{algorithmic}
\end{algorithm}
\endgroup

\begin{remark}
    The check whether $c \in \mathcal{C}^\beta(s)$ can be done in $poly(|S|, |A|, H)$ time by computing the maximal probability to visit $s$ under the dynamics $P^c$, say it is $p^c_s$, and then check whether $p^c_s \geq \beta$.
\end{remark}

\begingroup
\allowdisplaybreaks
\begin{algorithm}
    \caption{Explore Rewards for Known and Context-Dependent Dynamics (EXPLORE-KCDD)}
    \label{alg: EXPLORE-KCDD}
    \begin{algorithmic}[1]
        \State 
        \textbf{inputs:} 
        \begin{itemize}
            \item CMDP parameters: $S = \{S_0, S_1, \ldots, S_H\} $ - a layered states space, $A$, $P^c$ -a context-dependent transition probabilities matrix, $s_0$ - the unique start state, $H$ - the horizon length.
            \item Accuracy and confidence parameters: $\epsilon$, $\delta$.
            \item $\forall s \in S , a \in A : \;\; \mathcal{F}^R_{s,a}$ - the function classes use to approximate the rewards function.
            \item $N_R(\mathcal{F}, \epsilon, \delta)$ - sample complexity function for the ERM oracle.
            \item The extended readability parameters: $\beta$, $\gamma$.
            \item $\ell$ - a loss function (assumed to be $\ell_1$ or $\ell_2$). 
        \end{itemize} 
        \State set $\delta_1 = \frac{\delta}{6|S||A|}, \delta_2 = \frac{\delta}{6|S|}$, $\epsilon_2 = \gamma/2$
        \State set 
        $\epsilon_1 = 
        \begin{cases}
            \frac{\epsilon^2}{64|S||A|H^2},& \text{if } \ell = \ell_1 \\
            \frac{\epsilon^3}{8^3|S||A|H^3},& \text{if } \ell = \ell_2
        \end{cases}$         
        
        \For{$h \in [H-1]$ }
            \For{ $s_h \in S_h$ }
                \State{$I(s_h), \widehat{p}_\beta(s_h) \gets \texttt{AGC}(S, A, H, P^c, \delta_2, \epsilon_2, \gamma, \beta, h, s_h)$}
                \If{  $I(s_h) == 1$ }    
                    \For{$ a_h\in A$}
                        \State{initialize $Sample(s_h, a_h) = \emptyset$}
                        \State{compute the required number of episodes
                        \[
                            T_{s_h,a_h} =
                            \lceil 
                            \frac{2}{\beta \gamma}(\ln(\frac{1}
                            {\delta_1})
                            +
                            N_R(\mathcal{F}^R_{s_h,a_h} ,\epsilon_1, \delta_1)) 
                            \rceil
                        \]
                        }
                        \For{ $t = 1, 2, \ldots, T_{s_h,a_h}$}
                            \State{observe context $c_t$}
                            \State{ $(\pi^{c_t}_{s_h}, p^{c_t}_{s_h}) \gets \texttt{FFP}(S, A, P^c, s_0, H, s_h)$}
                            \State{$\pi^{c_t}_{s_h}(s_h)\gets a_h$}
                            \If{$ p^{c_t}_{s_h} \geq \beta$}
                                \State{run $\pi^{c_t}_{s_h}$ to generate trajectory $\tau_t$}
                                    \If{$(s_h, a_h ,r_h)$ is in $\tau_t$, for a reward $r_h \in [0,1]$}
                                        \State {with probability $\frac{\beta}{p^{c_t}_{s_h}}$ add $((c_t,s_h, a_h),r_h)$ to  $Sample(s_h,a_h)$}
                                    \EndIf{}
                            \EndIf{}
                        \EndFor{}
                \If {$|Sample(s_h,a_h)| \geq N_R(\mathcal{F}^R_{s_h,a_h} ,\epsilon_1, \delta_1) $}
                   \State{$f_{s_h, a_h} = \texttt{ERM}(\mathcal{F}^R_{s_h,a_h}, Sample(s_h,a_h), \ell)$}
                \Else 
                    \State{ set $f_{s_h, a_h} =0$}
                \EndIf{} 
                \EndFor{}   
            \Else  
            \State {set for all $a \in A$: $f_{s_h,a} =0$}
            \EndIf{}
        \EndFor{}
    \EndFor{}
    \State { \textbf{return } $\{f_{s,a} : \forall (s,a) \in S\times A\}$ }
    \end{algorithmic}
\end{algorithm}
\endgroup

 \begingroup
  \allowdisplaybreaks
\begin{algorithm}
    \caption{Exploit for Known and Context-Dependent Dynamics (EXPLOIT-KCDD)}
    \label{alg: EXPLOIT-KCDD}
    \begin{algorithmic}[1]
        \State{
        \textbf{inputs:} 
        \begin{itemize}
            \item MDP parameters: $S = \{S_0, S_1, \ldots, S_H\} $, $A$, $s_0$, $H$.
            \item $P^c$ -A context-dependent transition probabilities matrix.
            \item Accuracy and confidence parameters: $\epsilon$, $\delta$.
            \item $\forall s \in S , a \in A : \;\; \mathcal{F}^R_{s,a}$ - the function classes use to approximate the rewards function.
            \item $N_R(\mathcal{F}, \epsilon, \delta)$ - sample complexity function for the ERM oracle.
            \item Reachability parameters: $\gamma$,$\beta$ and $S^{\gamma, \beta}_h$ for every $h \in [H-1]$
            \item Functions approximate the rewards for each state-action pair: $\{f_{s,a} : \forall (s,a) \in S \times A\}$.
        \end{itemize} }
        \For{$t = 1, 2, \ldots $}
        
            \State{observe context $c_t$}
            
            \State
            {
                define 
                $
                    \widehat{\mathcal{M}}(c_t) 
                    =
                    (S, A, P^{c_t}, \widehat{r}^{c_t}, s_0, H)
                $
                where $\widehat{r}^{c_t}$ defined as:
                \begin{align*}
                    &\forall h \in [H-1],
                    s_h \in S^{\gamma, \beta}_h,
                    a_h \in A:
                    \widehat{r}^{c_t} (s_h,a_h) 
                    = 
                    f_{s_h, a_h}(c_t) \mathbb{I}[c_t \in \mathcal{C}^{\beta}(s_h)]
                    \\
                    &\forall h \in [H-1],
                    s_h \notin S^{\gamma, \beta}_h,
                    a_h \in A:
                    \widehat{r}^{c_t}(s_h,a_h) 
                    = 
                    0                   
                \end{align*}
            }    
            \State{compute the optimal policy for $\widehat{\mathcal{M}}(c_t)$, $(\widehat{\pi}^{c_t}, \cdot) \gets \texttt{Planning}(\widehat{\mathcal{M}}(c_t))$}
            
            \State{run $\widehat{\pi}^{c_t}$ to generate trajectory.}
        \EndFor{}
    \end{algorithmic}
\end{algorithm}
\endgroup

\subsection{Analysis}

\subsubsection{Analysis Outline}
In the following, we present analysis for both the $\ell_1$ (see Sub-subsection~\ref{subsubsec:analysis-l-1-KCDD}) and $\ell_2$ (see Sub-subsection~\ref{subsubsec:analysis-l-2-KCDD}) loss functions.

For both loss functions, our goal is to bound the expected value difference of the true and approximated models, i.e., $\mathcal{M}(c)$ and $\widehat{\mathcal{M}}(c)$, for any context-dependent policy $\pi = (\pi_c)_{c \in \mathcal{C}}$, with high probability. (See Lemmas~\ref{lemma: val-diff KCDD l_1} and~\ref{lemma: value-diff KCDD l_2}).

Using that bound, we derive a bound on the expected value difference between the optimal context-dependent policy $\pi^\star = (\pi^\star_c)_{c \in \mathcal{C}}$ and our approximated optimal policy $\widehat{\pi}^\star = (\hat{\pi}^\star_c)_{c \in \mathcal{C}}$ on the true model, which holds with high probability. (See Theorems~\ref{thm: opt policy KCDD l_1} and~\ref{thm: opt policy KCDD l_2}).

Lastly, we derive sample complexity bound using known uniform convergence sample complexity bounds for the Pseudo dimension (See Theorem~\ref{thm: pseudo dim}) and the fat-shattering dimension (See Theorem~\ref{thm: fat dim}). For the sample complexity analysis, see Sub-subsection~\ref{subsubsec:sample-complexity-l-1-KCDD} for the $\ell_1$ loss, and~\ref{subsubsec:analysis-l-2-KCDD} for the $\ell_2$ loss.

\subsubsection{Good Events}

\paragraph{Event $G_1$.}
Let $G_1$ denote the good event in which for all $h \in [H-1]$ and $s_h \in S_h$ we have
${|\widehat{p}_{\beta}(s_h) -  \mathbb{P}_{c \sim \mathcal{D}}[c \in \mathcal{C}^\beta(s_h)] | \leq \epsilon_2}$, for $\widehat{p}_{\beta}(s_h)$ that is defined in Algorithm AGC (i.e., Algorithm~\ref{alg: AGC}).

For $\epsilon_2 = \gamma/2$, event $G_1$ guarantees that for every layer $h \in [H-1]$ and state $s_h \in S_h$, if $s_h \in S^{\gamma,\beta}_h$, then Algorithm AGC will identity that $s_h$ is $(\gamma,\beta)$-good. Hence, in Algorithm EXPLORE-KCDD we will collect samples for it.

The following lemma shows that event $G_1$ holds with high probability.
\begin{lemma}\label{lemma: G_1 prob case 3}
    For $\delta_2 = {\delta}/{6|S|}$ it holds that $\mathbb{P}[G_1] \geq 1- \frac{\delta}{6}$.
\end{lemma}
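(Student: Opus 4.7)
The plan is to apply a standard Hoeffding concentration bound per state, and then a union bound over all states in the MDP. Fix a layer $h \in [H-1]$ and a state $s_h \in S_h$. Inside Algorithm AGC, for each of the $m(\epsilon_2,\delta_2) = \lceil \ln(2/\delta_2)/(2\epsilon_2^2) \rceil$ i.i.d.\ contexts $c_t \sim \mathcal{D}$, the indicator $X_t := \mathbb{I}[c_t \in \mathcal{C}^\beta(s_h)]$ is a Bernoulli random variable with mean $\mathbb{P}_{c \sim \mathcal{D}}[c \in \mathcal{C}^\beta(s_h)]$, and $\widehat{p}_\beta(s_h) = \frac{1}{m(\epsilon_2,\delta_2)} \sum_t X_t$.

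By Hoeffding's inequality applied to these bounded i.i.d.\ variables,
\[
    \mathbb{P}\Big[\,\big|\widehat{p}_\beta(s_h) - \mathbb{P}_{c \sim \mathcal{D}}[c \in \mathcal{C}^\beta(s_h)]\big| > \epsilon_2\,\Big]
    \leq 2 \exp\!\big(-2 \epsilon_2^2\, m(\epsilon_2,\delta_2)\big)
    \leq \delta_2,
\]
where the last inequality is exactly the definition of $m(\epsilon_2,\delta_2)$ in Algorithm AGC. Hence for every fixed state $s_h$, the event $\{|\widehat{p}_\beta(s_h) - \mathbb{P}_{c\sim\mathcal{D}}[c \in \mathcal{C}^\beta(s_h)]| \leq \epsilon_2\}$ holds with probability at least $1 - \delta_2$.

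Finally, a union bound over all states in layers $h \in [H-1]$ (a set of size at most $|S|$) yields
\[
    \mathbb{P}[G_1] \geq 1 - |S|\,\delta_2 = 1 - |S| \cdot \frac{\delta}{6|S|} = 1 - \frac{\delta}{6}.
\]
The only subtlety is ensuring the samples used across different invocations of AGC (for different states) can legitimately be analyzed via a union bound; since each invocation draws fresh i.i.d.\ contexts from $\mathcal{D}$, the per-state guarantees are marginal statements and a union bound applies directly without any independence assumption between invocations. This is routine and not a real obstacle.
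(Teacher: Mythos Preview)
Your proposal is correct and follows essentially the same approach as the paper: apply Hoeffding's inequality per state using the sample size $m(\epsilon_2,\delta_2)$ from Algorithm AGC, then union bound over the at most $|S|$ states. The paper's proof is terser but identical in substance.
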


\begin{proof}
    For every state $s \in S$, 
    by Hoeffding's inequality, 
    for $m \geq \frac{\ln{\frac{2}{\delta_2}}}{2 \epsilon_2^2}$ examples, we have with probability at least $1-\delta_2$ that $|\widehat{p}_{\beta}(s) -  \mathbb{P}_{c \sim \mathcal{D}}[c \in \mathcal{C}^\beta(s_h)] | \leq \epsilon_2$. Hence, using union bound over the states, we obtain the lemma.
\end{proof}

\paragraph{Event $G_2$.}
Recall that for every $h \in [H-1]$ we define $S^{\gamma, \beta}_h = \{s_h \in S_h : \mathbb{P}[c \in \mathcal{C}^\beta(s_h)] \geq \gamma\}$ where $\mathcal{C}^\beta(s_h) = \{c \in \mathcal{C} : s_h \text{ is } \beta\text{-reachable for } P^c\}$.

Let $G_2$ denote the good event in which for every layer $h \in [H]$ and state $s_h \in S^{\gamma, \beta}_h$ for every action $a_h \in A$ we have that $|Sample(s_h,a_h)| \geq N_R(\mathcal{F}^R_{s_h, a_h}, \epsilon_1,\delta_1)$.

The following lemma shows that event $G_2$ holds with high probability.
\begin{lemma}\label{lemma: G_2 prob case 3}
    We have  
    $\mathbb{P}_{c \sim \mathcal{D}}[G_2|G_1] \geq 1- {\delta}/{6}$.
\end{lemma}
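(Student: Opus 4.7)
The plan is to show that, conditioned on $G_1$, for each fixed $(\gamma,\beta)$-good state $s_h$ and action $a_h$ the per-episode probability of appending a sample to $Sample(s_h,a_h)$ is at least $\beta\gamma$, and then apply a multiplicative Chernoff bound followed by a union bound over $(s,a)\in S\times A$.

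First I would observe that under $G_1$, since $\epsilon_2=\gamma/2$, the check $\widehat{p}_\beta(s_h)\geq \gamma-\epsilon_2$ in \texttt{AGC} is passed by every $s_h\in S^{\gamma,\beta}_h$; in particular \emph{every} such state triggers the sampling loop, so for every action $a_h$ the algorithm actually runs $T_{s_h,a_h}=\lceil \frac{2}{\beta\gamma}(\ln(1/\delta_1)+N_R(\mathcal{F}^R_{s_h,a_h},\epsilon_1,\delta_1))\rceil$ episodes.

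Next I would compute the single-episode acceptance probability. Fix $s_h\in S^{\gamma,\beta}_h$ and $a_h\in A$. In a single episode the context $c$ is drawn from $\mathcal{D}$; the algorithm proceeds only if $p^c_{s_h}\geq \beta$, i.e.\ $c\in \mathcal{C}^\beta(s_h)$; conditional on such $c$ the modified $\pi^c_{s_h}$ (which forces $a_h$ in $s_h$) visits $(s_h,a_h)$ with probability exactly $p^c_{s_h}$; and the importance sampling step accepts with probability $\beta/p^c_{s_h}\le 1$. Hence the probability of appending a sample in one episode is
\[
\sum_{c\in\mathcal{C}^\beta(s_h)}\mathcal{D}(c)\cdot p^c_{s_h}\cdot\frac{\beta}{p^c_{s_h}}
=\beta\cdot\mathbb{P}_{c\sim\mathcal{D}}[c\in\mathcal{C}^\beta(s_h)]\geq \beta\gamma,
\]
by the definition of $S^{\gamma,\beta}_h$. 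The appending events across episodes are i.i.d.\ Bernoulli with success probability at least $\beta\gamma$.

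Then I would invoke the same multiplicative Chernoff argument used in the proof of Lemma~\ref{lemma: G_1 KCFD}: to collect at least $m:=N_R(\mathcal{F}^R_{s_h,a_h},\epsilon_1,\delta_1)$ successes with probability at least $1-\delta_1$ it suffices to run $T\geq \frac{2}{\beta\gamma}(\ln(1/\delta_1)+m)$ trials, which is exactly $T_{s_h,a_h}$. Therefore for each fixed $(s_h,a_h)$ with $s_h\in S^{\gamma,\beta}_h$, the event ``$|Sample(s_h,a_h)|\geq N_R(\mathcal{F}^R_{s_h,a_h},\epsilon_1,\delta_1)$'' fails with probability at most $\delta_1$, \emph{conditional on $G_1$}.

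Finally a union bound over the at most $|S|\,|A|$ pairs $(s_h,a_h)$ with $s_h\in\bigcup_h S^{\gamma,\beta}_h$, using our choice $\delta_1=\delta/(6|S||A|)$, gives
$
\mathbb{P}[\overline{G_2}\mid G_1]\leq |S|\,|A|\,\delta_1 = \delta/6,
$
which is the claim. No step here is really an obstacle: the only subtlety is noticing that $G_1$ is exactly what guarantees every state in $S^{\gamma,\beta}_h$ is processed by the inner loop, so that the sampling analysis even applies; everything else is a clean importance sampling calculation plus the standard Chernoff bound already used earlier in the paper.
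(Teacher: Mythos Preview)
Your proof is correct and follows essentially the same route as the paper: use $G_1$ to ensure every $s_h\in S^{\gamma,\beta}_h$ enters the sampling loop, compute the per-episode acceptance probability as exactly $\beta\cdot\mathbb{P}[c\in\mathcal{C}^\beta(s_h)]\geq\beta\gamma$ via the importance-sampling cancellation, apply a Chernoff-type tail bound to get failure probability at most $\delta_1$ per pair, and union-bound over $(s,a)$ with $\delta_1=\delta/(6|S||A|)$. The only cosmetic difference is that the paper phrases the concentration step as ``Hoeffding's inequality'' rather than multiplicative Chernoff, but the computation and conclusion are identical.
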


\begin{proof}
    Fix a layer $h \in [H]$, a state $s_h \in S^{\gamma, \beta}_h$ and an action $a_h \in A$.
    
    Let $p_\beta(s_h) := \mathbb{P}_{c \sim \mathcal{D}}[c \in \mathcal{C}^\beta(s_h)]$. Since $s_h \in S^{\gamma, \beta}_h$ it holds that $p_\beta(s_h) \geq \gamma$.
    Since $G_1$ holds we have that
    $
        p_\beta(s_h) -\epsilon_2 \leq \widehat{p}_\beta(s_h)  \leq p_\beta(s_h)+\epsilon_2
    $
    which yielding that $\widehat{p}_\beta(s_h) \geq p_\beta(s_h) -\epsilon_2 \geq \gamma -\epsilon_2$.
    
    Hence, under $G_1$, the agent will identify that $s_h $ is in $S^{\gamma, \beta}_h$ and try to collect at least $ N_R(\mathcal{F}^R_{s_h, a_h}, \epsilon_1,\frac{\delta}{6|S||A|})$ examples of it, for every the action $a_h$.
    
    For a fixed context $c \in \mathcal{C}$, let $\pi^c_{s_h}$ denote the policy with the highest probability to visit $s_h$, which returned by algorithm \texttt{FFP}.
    
    Since $s_h \in S^{\gamma, \beta}_h$ we have that $\mathbb{P}[c \in \mathcal{C}^\beta(s_h)] \geq \gamma$. Recall that we collect only examples of contexts $c \in \mathcal{C}^\beta(s_h)$. 
    
    Hence, the probability to observe a context $c \in \mathcal{C}^\beta(s_h)$ 
    and then visit $s_h$ when playing according to $\pi^c_{s_h}$ is at least 
    $\gamma \cdot q_h(s_h | \pi^c_{s_h}, P^c)  \geq \gamma \beta $,
    since for $c \in \mathcal{C}^\beta (s_h)$ we have that $s_h$ is $\beta$-reachable for $P^c$, which implies that $q_h(s_h | \pi^c_{s_h}, P^c) \geq \beta$. 
    
    Since we use importance sampling, the probability that an observed example $((c, s_h, a_h),r_h)$ (for $c \in \mathcal{C}^\beta(s_h)$) will be added to $Sample(s_h, a_h)$ is $ \frac{\beta}{q_h(s_h | \pi^c_{s_h}, P^c)}$. 
    Overall, the probability of adding a sample of $(c, s_h, a_h)$ to $Sample(s_h, a_h)$ is at least
    $$
        \frac{\beta}{q_h(s_h | \pi^c_{s_h}, P^c)} \cdot q_h(s_h | \pi^c_{s_h}, P^c) \cdot \gamma 
        = 
        \beta \gamma.
    $$
    
    Hence, in expectation, the agent needs to experience at most $\frac{1}{ \beta \gamma}$ episodes to collect one such example of $(s_h, a_h)$ for $s_h \in S^{\gamma,\beta}_h$.
    
    Using Hoeffding's inequality, we obtain that with probability at least $1-\delta_1$, the agent will collect at least $N_R(\mathcal{F}^R_{s_h, a_h}, \epsilon_1,\delta_1)$ examples after experiencing 
    \[
        T_{s_h,a_h} =
        \lceil 
            \frac{2}{\beta \gamma}(\ln(\frac{1}{\delta_1})
            +
            N_R(\mathcal{F}^R_{s_h,a_h} ,\epsilon_1, \delta_1)) 
            \rceil
    \]
    episodes. 
    For $\delta_1 = \frac{\delta}{6|S||A|}$, we obtain the lemma using union bound over $(s_h, a_h) \in S^{\gamma, \beta}_h \times A$ for every $h \in [H-1]$.
\end{proof}

\paragraph{Event $G_3$.}
Let $G_3$ denote the good event in which for every layer $h \in [H-1]$ and state $s_h \in S^{\gamma, \beta}_h$ we have for every action $a_h \in A$ that
    \[
        \mathbb{E}_{c \sim \mathcal{D}}[(f_{s_h, a_h}(c) - r^c(s_h,a_h))^2 |c \in \mathcal{C}^\beta (s_h)]
        \leq \epsilon_1 + \alpha^2_2(\mathcal{F}^R_{s_h, a_h}).
    \]  
for the $\ell_2$ loss,
( or $ \mathbb{E}_{c \sim \mathcal{D}}[|f_{s_h, a_h}(c) - r^c(s_h,a_h)| |c \in \mathcal{C}^\beta (s_h)]
\leq \epsilon_1 + \alpha_1(\mathcal{F}^R_{s_h, a_h})$ for the $\ell_1$ loss). 

The following lemma shows that given events $G_1$ and $G_2$ hold, event $G_3$ holds with high probability.
\begin{lemma}\label{lemma: G_3 prob case 3}
    We have $\mathbb{P}[G_3|G_1, G_2] \geq 1 - {\delta}/{6}$.
\end{lemma}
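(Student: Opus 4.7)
My plan is to condition on $G_1 \cap G_2$ and show that for each $(s_h, a_h)$ with $s_h \in S^{\gamma,\beta}_h$, the data set $Sample(s_h,a_h)$ consists of i.i.d.\ samples drawn from the correct conditional distribution, so that the standard ERM guarantee applies. A union bound over the at most $|S||A|$ relevant pairs then delivers the stated high-probability bound.

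The first and main step is to verify the unbiasedness of the sample via importance sampling. Fix $h$, $s_h \in S^{\gamma,\beta}_h$ and $a_h \in A$. In each round the algorithm observes $c \sim \mathcal{D}$, computes $\pi^c_{s_h}$ (with $\pi^c_{s_h}(s_h) = a_h$), runs it only when $q_h(s_h\mid \pi^c_{s_h},P^c) \geq \beta$ (equivalently, only when $c \in \mathcal{C}^\beta(s_h)$), and accepts the observed tuple $((c,s_h,a_h),r_h)$ with probability $\beta / q_h(s_h\mid \pi^c_{s_h},P^c) \leq 1$. Hence the probability density that a given context $c$ appears in the accepted sample equals
\[
\mathcal{D}(c)\,\mathbb{I}[c\in \mathcal{C}^\beta(s_h)]\,q_h(s_h\mid \pi^c_{s_h},P^c)\cdot \frac{\beta}{q_h(s_h\mid \pi^c_{s_h},P^c)} \;=\; \beta\,\mathcal{D}(c)\,\mathbb{I}[c\in \mathcal{C}^\beta(s_h)],
\]
which is proportional to $\mathcal{D}(c)\,\mathbb{I}[c\in \mathcal{C}^\beta(s_h)]$. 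Therefore, conditional on acceptance, the contexts in $Sample(s_h,a_h)$ are i.i.d.\ draws from $\mathcal{D}$ restricted (and renormalized) to $\mathcal{C}^\beta(s_h)$, and the rewards are independent draws from $R^c(s_h,a_h)$ as in the reward definition.

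The second step is to invoke the ERM guarantee. Under $G_2$, $|Sample(s_h,a_h)|\geq N_R(\mathcal{F}^R_{s_h,a_h},\epsilon_1,\delta_1)$, so by the defining property of $N_R$ (see Paragraph ``Reward function approximation''), with probability at least $1-\delta_1$ over the sample,
\[
\mathbb{E}_{c\sim\mathcal{D}}\bigl[\ell(f_{s_h,a_h}(c), r^c(s_h,a_h))\,\bigm|\, c\in\mathcal{C}^\beta(s_h)\bigr]\leq \epsilon_1 + \alpha(\mathcal{F}^R_{s_h,a_h}),
\]
which is precisely the condition in the definition of $G_3$ (with $\alpha = \alpha_1$ or $\alpha_2^2$ according to the chosen loss).

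The third step is a union bound. The condition in $G_3$ needs to hold simultaneously over all pairs $(s_h,a_h)$ with $h\in[H-1]$ and $s_h \in S^{\gamma,\beta}_h$; there are at most $|S||A|$ such pairs, and the choice $\delta_1 = \delta/(6|S||A|)$ gives a total failure probability at most $|S||A|\,\delta_1 = \delta/6$, as desired.

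I expect the only non-routine part to be confirming the importance-sampling identity, i.e.\ making sure that the acceptance step combined with the context distribution yields exactly the conditional law $\mathcal{D}(\cdot\mid \mathcal{C}^\beta(s_h))$ on the $c$-marginal, independently across accepted rounds; once that is nailed down, both the ERM guarantee and the union bound are immediate.
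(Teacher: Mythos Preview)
Your proposal is correct and follows essentially the same approach as the paper: invoke the ERM guarantee for each pair $(s_h,a_h)$ with $s_h\in S^{\gamma,\beta}_h$ (enabled by $G_2$), then union bound over at most $|S||A|$ pairs with $\delta_1=\delta/(6|S||A|)$. The paper's proof is terser and leaves the importance-sampling verification implicit, whereas you spell out explicitly why the accepted samples are i.i.d.\ from $\mathcal{D}(\cdot\mid \mathcal{C}^\beta(s_h))$; this extra step is sound and is indeed needed for the ERM guarantee to apply to the correct conditional distribution.
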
    

\begin{proof}
    Since $G_1$ and $G_2$ hold, we have for every layer $h \in [H-1]$, state $s_h \in S^{\gamma, \beta}_h$ and action $a_h \in A$ that $|Sample(s_h,a_h)| \geq N_R(\mathcal{F}^R_{s_h, a_h}, \epsilon_1,\delta_1)$. Hence, we compute $f_{s_h,a_h}$ using the ERM oracle, and 
    by the ERM guarantees (see~\ref{par: reward function approx gurantees for all s,a}), for every layer $h \in [H-1]$, state $s_h \in S^{\gamma, \beta}_h$ and an action $a_h \in A$ we have with probability at least $1-\delta_1$ that 
    \[
        \mathbb{E}_{c \sim \mathcal{D}}[(f_{s_h, a_h}(c) - r^c(s_h,a_h))^2 |c \in \mathcal{C}^\beta (s_h)]
        \leq \epsilon_1 + \alpha^2(\mathcal{F}^R_{s_h, a_h}).
    \]
    for the $\ell_2$ loss.
    ($ \mathbb{E}_{c \sim \mathcal{D}}[|f_{s_h, a_h}(c) - r^c(s_h,a_h)| |c \in \mathcal{C}^\beta (s_h)] \leq \epsilon_1 + \alpha_1(\mathcal{F}^R_{s_h, a_h})$ for the $\ell_1$ loss.)
    For $\delta_1 = \frac{\delta}{6|S||A|}$ the lemma follows from union bound over each appropriate state-action pair.
\end{proof}

By combining all the above, we obtain that all of the good events hold with high probability.
\begin{lemma}\label{lemma: final probs case 3}
    It holds that $\mathbb{P}[G_1 \cap G_2 \cap G_3] \geq 1- {\delta}/{2}$.
\end{lemma}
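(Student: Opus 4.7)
The plan is to combine Lemmas~\ref{lemma: G_1 prob case 3}, \ref{lemma: G_2 prob case 3}, and~\ref{lemma: G_3 prob case 3} using the chain rule of conditional probability. Specifically, I would write
\[
    \mathbb{P}[G_1 \cap G_2 \cap G_3]
    =
    \mathbb{P}[G_3 \mid G_1 \cap G_2] \cdot \mathbb{P}[G_2 \mid G_1] \cdot \mathbb{P}[G_1],
\]
and then substitute the three lower bounds $1 - \delta/6$ supplied by the previous lemmas. Using the elementary inequality $(1-a)(1-b)(1-c) \geq 1 - (a+b+c)$ for $a,b,c \in [0,1]$ (which follows by expanding and noting the cross terms are nonnegative), the product is at least $1 - 3\cdot \delta/6 = 1 - \delta/2$.

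Equivalently, and perhaps cleaner to present, I would pass to complements. Write
\[
    \mathbb{P}[\overline{G_1 \cap G_2 \cap G_3}]
    =
    \mathbb{P}[\overline{G_1}]
    + \mathbb{P}[\overline{G_2} \cap G_1]
    + \mathbb{P}[\overline{G_3} \cap G_1 \cap G_2],
\]
since the three events on the right are disjoint and their union is exactly $\overline{G_1} \cup \overline{G_2} \cup \overline{G_3}$. Bounding each term separately: the first by Lemma~\ref{lemma: G_1 prob case 3} is at most $\delta/6$; the second satisfies $\mathbb{P}[\overline{G_2} \cap G_1] = \mathbb{P}[\overline{G_2} \mid G_1] \cdot \mathbb{P}[G_1] \leq \mathbb{P}[\overline{G_2} \mid G_1] \leq \delta/6$ by Lemma~\ref{lemma: G_2 prob case 3}; and the third is at most $\mathbb{P}[\overline{G_3} \mid G_1 \cap G_2] \leq \delta/6$ by Lemma~\ref{lemma: G_3 prob case 3}. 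Summing yields $\mathbb{P}[\overline{G_1 \cap G_2 \cap G_3}] \leq \delta/2$, and the lemma follows.

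There is no genuine obstacle here: the three preceding lemmas already do all the work, and the final statement is just a union bound over the ordered failures. The only point requiring care is the conditioning order, so that we invoke each preceding lemma under exactly the conditioning it guarantees (namely $G_1$ unconditional, $G_2$ conditional on $G_1$, and $G_3$ conditional on $G_1 \cap G_2$).
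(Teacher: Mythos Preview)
Your proposal is correct and takes essentially the same approach as the paper, which simply cites Lemmas~\ref{lemma: G_1 prob case 3}, \ref{lemma: G_2 prob case 3}, and~\ref{lemma: G_3 prob case 3} combined via a union bound. Your version is in fact more detailed, spelling out the ordered-failure decomposition and the conditioning structure that the paper leaves implicit.
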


\begin{proof}
 By Lemmas~\ref{lemma: G_1 prob case 3},~\ref{lemma: G_2 prob case 3} and~\ref{lemma: G_3 prob case 3} when combined using an union bound.
\end{proof}

\subsubsection{Analysis for the \texorpdfstring{$\ell_2$}{Lg} loss}\label{subsubsec:analysis-l-2-KCDD}

\begin{lemma}\label{lemma: value-diff KCDD l_2}
    Assume the good events $G_1$, $G_2$ and $G_3$ hold.
    Then for every context-dependent policy $\pi=(\pi_c)_{c \in \mathcal{c}}$ it holds that
    \[
        \mathbb{E}_{c \sim \mathcal{D}}
        [|V^{\pi_c}_{\mathcal{M}(c)}(s_0) - V^{\pi_c}_{\widehat{\mathcal{M}}(c)}(s_0)|]
        \leq
        \frac{\epsilon}{2} + \alpha_2 H,
    \]
    where $\alpha^2_2 = \max_{(s_h, a_h) \in \cup_{h \in [H]}S^{\gamma, \beta}_h \times A} \alpha^2_2(\mathcal{F}^R_{s_h, a_h})$,
    for the following parameters choice: $\gamma = \frac{\epsilon}{8|S|H}$, $\beta = \frac{\epsilon}{8|S|}$ and $\epsilon_1 = \frac{\epsilon^3}{8^3|S||A|H^3}$.
\end{lemma}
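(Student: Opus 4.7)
The approximated MDP $\widehat{\mathcal{M}}(c)$ differs from $\mathcal{M}(c)$ only in the reward function -- both use the true context-dependent dynamics $P^c$. Hence the value gap is a pure reward-mismatch, and a standard simulation-lemma argument gives
\[
|V^{\pi_c}_{\mathcal{M}(c)}(s_0) - V^{\pi_c}_{\widehat{\mathcal{M}}(c)}(s_0)| \;\leq\; \sum_{h=0}^{H-1}\sum_{s,a} q_h(s,a | \pi_c, P^c)\,|r^c(s,a) - \widehat{r}^c(s,a)|.
\]
Taking $\mathbb{E}_{c\sim\mathcal{D}}$, I would partition each triple $(h,s,a,c)$ into three regimes matching the definition of $\widehat{r}^c$: (A) $s\notin S^{\gamma,\beta}_h$; (B) $s\in S^{\gamma,\beta}_h$ but $c\notin\mathcal{C}^\beta(s)$; (C) $s\in S^{\gamma,\beta}_h$ and $c\in\mathcal{C}^\beta(s)$. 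Under $G_1$ the algorithm's $\widetilde{S}^{\gamma,\beta}_h$ coincides with $S^{\gamma,\beta}_h$, justifying the partition.

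In regimes (A) and (B), $\widehat{r}^c(s,a)=0$, so the contribution is bounded by the expected visitation mass of ``bad'' $(s,c)$ pairs. For each fixed $s$, I would split the $c$-expectation by $\mathbb{I}[c\in\mathcal{C}^\beta(s)]$: in regime (A) the good-context branch has $\mathbb{P}_c[c\in\mathcal{C}^\beta(s)]\leq\gamma$ by definition of $S^{\gamma,\beta}_h$, while in the bad-context branch $q_h(s|\pi_c,P^c)<\beta$ because $s$ fails to be $\beta$-reachable under $P^c$. Summing the per-state bound $\gamma+\beta$ over all bad states (using $\sum_h|S_h|\leq|S|$) and an analogous $|S|\beta$ bound for (B), the parameter choice $\gamma=\epsilon/(8|S|H)$ and $\beta=\epsilon/(8|S|)$ keeps (A)+(B) below a small constant times $\epsilon$.

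For regime (C), $\widehat{r}^c(s,a)=f_{s,a}(c)$, and Jensen's inequality applied to the conditional squared-error guarantee of $G_3$, together with $\sqrt{a+b}\leq\sqrt{a}+\sqrt{b}$, yields $\mathbb{E}_c[|r^c(s,a)-f_{s,a}(c)| \mid c\in\mathcal{C}^\beta(s)] \leq \sqrt{\epsilon_1}+\alpha_2$. Add and subtract $\alpha_2$: the $\alpha_2$ piece telescopes because $\sum_{s,a}q_h(s,a|\pi_c,P^c)=1$ for each $h$, producing exactly the $\alpha_2 H$ term of the lemma. For the residual $(|r^c-f|-\alpha_2)$ I would use $q_h\leq 1$, factor out $\mathbb{P}_c[c\in\mathcal{C}^\beta(s)]$, and bound the conditional expected residual by $\sqrt{\epsilon_1}$; the aggressive choice $\epsilon_1=\epsilon^3/(8^3|S||A|H^3)$ is what absorbs this into the remaining $\epsilon$-budget.

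The main obstacle is that both $q_h(s,a|\pi_c,P^c)$ and the pointwise reward error $|r^c-f_{s,a}(c)|$ depend on $c$ and may be correlated, while $G_3$ only controls the conditional squared error given $c\in\mathcal{C}^\beta(s)$. The ``$q_h\leq 1$'' relaxation breaks the correlation cleanly, but it is essential to isolate the $\alpha_2$ shift first -- otherwise the crude bound would produce $|S||A|\alpha_2$ rather than $\alpha_2 H$. A related subtlety is that $|r^c-f|-\alpha_2$ need not be pointwise non-negative, so one must take its positive part (or equivalently apply Jensen's directly to the shifted conditional expectation) before dropping the $q_h$ weight.
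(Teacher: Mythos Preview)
Your partition and the handling of regimes (A) and (B) match the paper's (the paper uses four subsets $B^{h,c}_1,\dots,B^{h,c}_4$ of $S_h$ that regroup into your three regimes, and bounds the bad-context and bad-state masses the same way). The gap is in regime (C).

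After you isolate the $\alpha_2 H$ term and pass to the positive part $(|r^c-f_{s,a}(c)|-\alpha_2)_+$ so that $q_h\le 1$ may be dropped, you claim the conditional expectation of this positive part is at most $\sqrt{\epsilon_1}$. It is not: from $G_3$ and Jensen you only get $\mathbb{E}\bigl[\,|r^c-f_{s,a}(c)|\;\big|\; c\in\mathcal{C}^\beta(s)\bigr]\le\sqrt{\epsilon_1+\alpha_2^2}\le\sqrt{\epsilon_1}+\alpha_2$, and the trivial bound $(x-\alpha_2)_+\le x$ therefore yields at most $\sqrt{\epsilon_1}+\alpha_2$ on the positive part, not $\sqrt{\epsilon_1}$. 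Concretely, take $|r^c-f_{s,a}(c)|=1$ with conditional probability $\alpha_2^2$ and $0$ otherwise; then the conditional second moment is exactly $\alpha_2^2$, so $G_3$ is satisfied even with $\epsilon_1=0$, yet $\mathbb{E}[(|r^c-f|-\alpha_2)_+\mid\cdot]=\alpha_2^2(1-\alpha_2)>0$. Summing this $\sqrt{\epsilon_1}+\alpha_2$ bound over all $(s,a)$ pairs reintroduces exactly the $|S||A|\,\alpha_2$ loss you isolated $\alpha_2$ to avoid.

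The paper avoids this by converting the second-moment bound from $G_3$ into a \emph{pointwise} high-probability bound via Markov's inequality: for a threshold $\rho$, the event that $|r^c-f_{s,a}(c)|\le\alpha_2+\sqrt{\rho}$ simultaneously for all $(h,s,a)$ in regime (C) holds with $c$-probability at least $1-\epsilon_1|S||A|/\rho$. On that event the occupancy weights remain in the sum, and $\sum_{s,a}q_h(s,a\mid\pi_c,P^c)=1$ per layer gives $(\alpha_2+\sqrt{\rho})H$ directly; the complementary event contributes at most $H\cdot\epsilon_1|S||A|/\rho$. Optimizing $\rho=(\epsilon_1|S||A|)^{2/3}$ yields $\alpha_2 H+2(\epsilon_1|S||A|)^{1/3}H$, and the stated $\epsilon_1=\epsilon^3/(8^3|S||A|H^3)$ makes the second term $\epsilon/4$. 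The point your plan is missing is that a uniform pointwise bound on the reward error---not a conditional-expectation bound---is what lets $\alpha_2$ inherit the factor $H$ rather than $|S||A|$.
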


\begin{proof}
    For all $h \in [H-1]$ and any context $c \in \mathcal{C}$, let us define the following subsets of $S_h$.
       \begin{enumerate}
       \item $B^{h,c}_1 = 
       \{s_h \in  S_h :
            s_h\in S^{\beta,\gamma}_h,
           c  \in \mathcal{C}^{ \beta}(s_h)\}$.
       \item $B^{h,c}_2 = 
        \{s_h \in  S_h :
             s_h\in S^{\beta,\gamma}_h,
          c \notin \mathcal{C}^{ \beta}(s_h)\}$.
       \item $B^{h,c}_3 = 
       \{s_h \in  S_h :
            s_h\not\in S^{\beta,\gamma}_h,
            c \notin \mathcal{C}^{ \beta}(s_h)\}$.
       \item $B^{h,c}_4 = 
       \{s_h \in  S_h :
            s_h\not\in S^{\beta,\gamma}_h,
            c\in \mathcal{C}^{ \beta}(s_h)\}$.    
   \end{enumerate}
   Clearly, $\cup_{i=1}^4 B^{h,c}_i = S_h$ for every $h \in [H-1]$ and $c \in \mathcal{C}$.
   
For $s_h\not\in S^{\beta,\gamma}_h$ we have that $\mathbb{P}[c \in \mathcal{C}^{\beta}(s_h)]<\gamma$, hence,
    \begingroup
    \allowdisplaybreaks
    \begin{align*}
        \mathbb{P}_c[\exists h \in [H-1] : B^{h,c}_4 \neq \emptyset]
        \;\;=\;\;
        \mathbb{P}_c[\exists h\in [H-1]\; \exists s_h \in S_h: 
        s_h\not\in S^{\beta,\gamma}_h \text{ and } c \in \mathcal{C}^{ \beta}(s_h) ]
        \;\;<\;\;
        \gamma|S|.
    \end{align*}
    \endgroup

    Fix  a context $c \in \mathcal{C}$ and
    a context-dependent policy $\pi$ (we will later take the expectation over the context). Consider the following derivation.
    \begingroup
    \allowdisplaybreaks
    \begin{align*}
        |V^{\pi_c}_{\mathcal{M}(c)}(s_0) - V^{\pi_c}_{\widehat{\mathcal{M}}(c)}(s_0)|
        &=
        |
            \sum_{h=0}^{H-1}
            \sum_{s_h \in S_h}
            \sum_{a_h \in A}
            q_h(s_h,a_h|\pi_c, P^c)
            (r^c(s_h,a_h) - \widehat{r}^c(s_h,a_h))
        |
        \\
        &\leq
        \sum_{h=0}^{H-1}
        \sum_{s_h \in S_h}
        \sum_{a_h \in A}
        q_h(s_h,a_h|\pi_c, P^c) 
        |r^c(s_h,a_h) - \widehat{r}^c(s_h,a_h)|
        \\
        &=
        \underbrace
        {
            \sum_{h=0}^{H-1}
            \sum_{s_h \in B^{h,c}_1}
            \sum_{a_h \in A}
            q_h(s_h,a_h|\pi_c, P^c) 
            |r^c(s_h,a_h) - \widehat{r}^c(s_h,a_h)|
        }_{(1)}
        \\
        &+
        \underbrace
        {
            \sum_{h=0}^{H-1}
            \sum_{s_h \in B^{h,c}_2 \cup B^{h,c}_3}
            \sum_{a_h \in A}
           q_h(s_h,a_h|\pi_c, P^c) 
            |r^c(s_h,a_h) - \widehat{r}^c(s_h,a_h)|
        }_{(2)}
        \\
        &+
        \underbrace
        {
            \sum_{h=0}^{H-1}
            \sum_{s_h \in B^{h,c}_4}
            \sum_{a_h \in A}
            q_h(s_h,a_h|\pi_c, P^c) 
            |r^c(s_h,a_h) - \widehat{r}^c(s_h,a_h)|
        }_{(3)}
    \end{align*}
    \endgroup
    
    We bound $(1)$, $(2)$ and $(3)$ separately.
    For $(1)$,
    under the good event $G_3$ we have for every layer $h \in [H-1]$, state $s_h \in S^{\gamma, \beta}_h$ and action $a_h \in A$ that
    \[
        \mathbb{E}_{c \sim \mathcal{D}}\left[(f_{s_h, a_h}(c) - r^c(s_h,a_h))^2-\alpha^2_2(\mathcal{F}^R_{s_h, a_h})\;\Big|c \in \mathcal{C}^\beta (s_h)\right]
        \leq 
        \epsilon_1 .
    \]
    
    Since $\mathbb{E}_{c \sim \mathcal{D}}[(f_{s_h, a_h}(c) - r^c(s_h,a_h))^2 | c \in \mathcal{C}^\beta(s_h)]\geq  \alpha^2_2(\mathcal{F}^R_{s_h, a_h})$, for every layer $h \in [H-1]$, state $s_h \in S^{\gamma, \beta}_h$ and action $a_h \in A$, for a fixed constant $\rho \in [0,1]$ we obtain using Markov's inequality that 
    \begingroup
    \allowdisplaybreaks
    \begin{align*}
        &\mathbb{P}_c
        [|f_{s_h, a_h}(c) - r^c(s_h,a_h)|
        \geq \sqrt{ \alpha^2_2(\mathcal{F}^R_{s_h, a_h}) + \rho}
        \;\Big|c \in \mathcal{C}^\beta (s_h)] =
        \\
        = &
        \mathbb{P}_c
        [(f_{s_h, a_h}(c) - r^c(s_h,a_h))^2 
        -  \alpha^2_2(\mathcal{F}^R_{s_h, a_h})
        \geq \rho\;\Big|c \in \mathcal{C}^\beta (s_h)]
        \leq 
        \frac{\epsilon_1}{\rho},
    \end{align*}
    \endgroup
    which using the following inequality (that holds since $\alpha_2(\mathcal{F}^R_{s_h, a_h}),\rho \in [0,1]$)
    \begingroup\allowdisplaybreaks
    \[
        \sqrt{ \alpha^2_2(\mathcal{F}^R_{s_h, a_h}) + \rho} \leq 
        \alpha_2(\mathcal{F}^R_{s_h, a_h}) + \sqrt{\rho},
    \]
    \endgroup
    yielding that 
    \begingroup
    \allowdisplaybreaks
    \begin{equation}\label{prob: l_2 KCDD}
        \begin{split}
            &\mathbb{P}_c[|f_{s_h, a_h}(c) - r^c(s_h,a_h)|
            \leq 
            \alpha_2(\mathcal{F}^R_{s_h, a_h}) +\sqrt{\rho}
            |c \in \mathcal{C}^\beta (s_h)]
            \\
        &\geq
            \mathbb{P}_c[|f_{s_h, a_h}(c) - r^c(s_h,a_h)|
            \geq \sqrt{ \alpha^2_2(\mathcal{F}^R_{s_h, a_h}) + \rho}|c \in \mathcal{C}^\beta (s_h)]
        \\
        &=
            \mathbb{P}_c[(f_{s_h, a_h}(c) - r^c(s_h,a_h))^2 -  \alpha^2_2(\mathcal{F}^R_{s_h, a_h})
            \geq \rho|c \in \mathcal{C}^\beta (s_h)]
        \\    
        &\geq
        1 - \frac{\epsilon_1}{\rho}.
        \end{split}
    \end{equation}
    \endgroup
    


    Let $G_4$ denote the following good event,
    \begin{align*}
        \forall h \in [H]
        \;
        \forall s_h \in B^{h,c}_1
        \;
        \forall a \in A:
        |f_{s_h, a_h}(c) - r^c(s_h,a_h) | \leq   \alpha_2(\mathcal{F}^R_{s_h, a_h}) +\sqrt{\rho},
    \end{align*}
    and denote by $\overline{G_4}$ the complementary event.
    
    By $B^{h,c}_1$ definition, we have for all $h \in [H-1]$ and $s \in S_h$ that $c \in \mathcal{C}^\beta(s)$. Hence, when combining that with inequality~\ref{prob: l_2 KCDD} we obtain
    \[
        \mathbb{P}_c[G_4] \geq 1 -  \frac{\epsilon_1 }{\rho}|S||A|
        \quad\mbox{ and }\quad
        \mathbb{P}_c[\overline{G_4}] < \frac{\epsilon_1}{\rho}|S||A|.
    \]
    When $G_4$ holds, then
    \begingroup
    \allowdisplaybreaks
    \begin{align*}
        (1)
        &=
        \sum_{h=0}^{H-1}
        \sum_{s_h \in B^{h,c}_1}
        \sum_{a_h \in A}
        q_h(s_h|\pi_c, P^c)
        \pi_c(a_h| s_h)
        |r^c(s_h,a_h) - \widehat{r}^c(s_h,a_h)|
        \\
        &=
        \sum_{h=0}^{H-1}
        \sum_{s_h \in B^{h,c}_1}
        \sum_{a_h \in A}
        q_h(s_h|\pi_c, P^c)
        \pi_c(a_h| s_h)
        \underbrace{|f_{s_h,a_h}(c) - r^c(s_h,a_h)|}_{\leq  \alpha_2(\mathcal{F}^R_{s_h, a_h})+\sqrt{\rho}}
        \\
        &\leq
        \sum_{h=0}^{H-1}
        \sum_{s_h \in B^{h,c}_1}
        \sum_{a_h \in A}
        q_h(s_h|\pi_c, P^c)
        \pi_c(a_h| s_h)
        \underbrace{\alpha_2(\mathcal{F}^R_{s_h, a_h})}_{\leq \alpha_2 }
        +\sqrt{\rho}H
        \leq
        \alpha_2 H + \sqrt{\rho}H.
    \end{align*}
    \endgroup
    Otherwise, when $G_4$ does not hold, then it is bounded by $H$.
 
    Thus, by total expectation low we have
    \begingroup
    \allowdisplaybreaks
    \begin{align*}
        \mathbb{E}_{c \sim \mathcal{D}}[(1)]
        \leq
        \underbrace{\mathbb{E}_{c \sim \mathcal{D}}[(1)|G_4]}_{\leq \alpha_2 H + \sqrt{\rho}H}
        +
        \underbrace{\mathbb{P}[\overline{G_4}]}_{\leq \frac{\epsilon_1}{\rho}|S||A|} \cdot H
        \leq
        \alpha_2 H + \sqrt{\rho}H
        +
        \frac{\epsilon_1}{\rho}|S| |A| H.
    \end{align*}
    \endgroup

    For $(2)$, we have $c\not\in C^\beta(s)$ for every $s\in \cup_{h \in [H-1]}(B^{h,c}_2 \cup B^{h,c}_3)$, which implies
    \begingroup
    \allowdisplaybreaks
    \begin{align*}
        (2)
        &=
        \sum_{h=0}^{H-1}
        \sum_{s_h \in B^{h,c}_2 \cup B^{h,c}_3}
        \sum_{a_h \in A}
        q_h(s_h|\pi_c, P^c)
        \pi_c(a_h| s_h)
        \underbrace{|r^c(s_h,a_h) - \widehat{r}^c(s_h,a_h)|}_{\leq 1}
        \\
        &\leq
        \sum_{h=0}^{H-1}
        \sum_{s_h \in B^{h,c}_2 \cup B^{h,c}_3}
        \sum_{a_h \in A}
        q_h(s_h|\pi_c, P^c)
        \pi_c(a_h| s_h)
        \\
        &=
        \sum_{h=0}^{H-1}
        \sum_{s_h \in B^{h,c}_2 \cup B^{h,c}_3}
        q_h(s_h|\pi_c, P^c)
        \underbrace{\sum_{a_h \in A}
         \pi(a_h|s_h)}_{=1}
         \\
         &=
        \sum_{h=0}^{H-1}
        \sum_{s_h \in B^{h,c}_2 \cup B^{h,c}_3}
        \underbrace{q_h(s_h|\pi_c, P^c)}_{\leq \beta}
        \\
        &\leq
        \beta|S|.
    \end{align*}
    \endgroup
    Thus,
    \[
        \mathbb{E}_{c \sim \mathcal{D}}[(2)] \leq \beta|S|.
    \]
    
  For $(3)$, when there exists $h\in [H-1]$ such that $B^{h,c}_4\neq\emptyset$, we have
   \begingroup
  \allowdisplaybreaks
  \begin{align*}
    (3)
    &=
    \sum_{h=0}^{H-1}
    \sum_{s_h \in B^{h,c}_4}
    \sum_{a_h \in A}
    q_h(s_h|\pi_c, P^c)
    \pi_c(a_h| s_h)
    \underbrace{|r^c(s_h,a_h) - \widehat{r}^c(s_h,a_h)|}_{\leq 1}
    \\
    &\leq
    \sum_{h=0}^{H-1}
    \underbrace
    {
        \sum_{s_h \in B^{h,c}_4}
        \sum_{a_h \in A}
        q_h(s_h|\pi_c, P^c)
        \pi_c(a_h| s_h)
    }_{\leq 1}
    \leq
    H.
  \end{align*}
  \endgroup
  
  Let $G_5$ denote the good event in which $\forall h \in [H -1 ], B^{h,c}_4 = \emptyset$. Denote by $\overline{G_5}$ the complement event of $G_5$.
  We showed that $\mathbb{P}_c[G_5] \geq 1 - \gamma|S|$ and $\mathbb{P}_c[\overline{G_5}] < \gamma|S|$.
  
  Using total expectation we obtain
   \begingroup
  \allowdisplaybreaks
  \begin{align*}
      \mathbb{E}_{c \sim \mathcal{D}}[(3)]
      &=
      \mathbb{P}_c[G_5]\cdot \mathbb{E}_{c \sim \mathcal{D}}[(3)|G_5]
      +
      \mathbb{P}_c[\overline{G_5}] \cdot \mathbb{E}_{c \sim \mathcal{D}}[(3)|\overline{G_5}]
      \\
      &\leq
      1 \cdot 0 
      + 
      \gamma|S|H
      =
      \gamma|S|H.
  \end{align*}
  \endgroup

    Overall,
    by linearity of expectation and the above we obtain
    \begingroup
    \allowdisplaybreaks
    \begin{align*}
        \mathbb{E}_{c \sim \mathcal{D}}
        [|V^{\pi_c}_{\mathcal{M}(c)}(s_0) - V^{\pi_c}_{\widehat{\mathcal{M}}(c)}(s_0)|]
        &\leq
        \mathbb{E}_{c \sim \mathcal{D}}[(1)] + \mathbb{E}_{c \sim \mathcal{D}}[(2)] + \mathbb{E}_{c \sim \mathcal{D}}[(3)]
        \\
        &\leq
        \alpha_2 H 
        +
        \sqrt{\rho}H
        +
        \frac{\epsilon_1}{\rho}|S||A| H
        +
        \beta|S|
        +
        \gamma |S|H.
    \end{align*}
    \endgroup

Now, for $\gamma = \frac{\epsilon}{8|S|H}$,
$\beta = \frac{\epsilon}{8|S|}$, $\rho = (\epsilon_1 |S||A|)^{2/3}$ and $\epsilon_1 = \frac{\epsilon^3}{8^3|S||A|H^3}$ we have
    \begingroup
    \allowdisplaybreaks
    \begin{align*}
        \mathbb{E}_{c \sim \mathcal{D}}
        [|V^{\pi_c}_{\mathcal{M}(c)}(s_0) - V^{\pi_c}_{\widehat{\mathcal{M}}(c)}(s_0)|]
        &\leq
        \alpha_2 H 
        +
        (\epsilon_1 |S||A|)^{1/3} H
        +
        \frac{\epsilon_1}{(\epsilon_1 |S||A|)^{2/3}}|S||A| H
        +
        \frac{2}{8}\epsilon
       \\
       &=
        \alpha_2 H 
        +
        (\frac{\epsilon^3}{8^3|S||A|H^3} |S||A|)^{1/3} H
        +
        \frac{\epsilon_1^{\frac{1}{3}}}{( |S||A|)^{2/3}}|S||A| H
        +
        \frac{2}{8}\epsilon
        \\
       &=
        \alpha_2 H 
        +
        \frac{\epsilon}{8}
        +
        \frac{(\frac{\epsilon^3}{8^3|S||A|H^3})^{\frac{1}{3}}}{( |S||A|)^{2/3}}|S||A| H
        +
        \frac{2}{8}\epsilon
        \\
        &=
        \frac{\epsilon}{2} + \alpha_2 H.
    \end{align*}
    \endgroup
The above inequality completes the proof of the lemma.
\end{proof}

\begin{theorem}\label{thm: opt policy KCDD l_2}
With probability at least $1-\delta$ it holds that 
    \[
        \mathbb{E}_{c \sim \mathcal{D}}
        [V^{\pi^\star_c}_{\mathcal{M}(c)}(s_0) - V^{\widehat{\pi}^\star_c}_{\mathcal{M}(c)}(s_0)]
        \leq 
        \epsilon + 2\alpha_2 H,
    \]
where $\pi^\star=(\pi^\star_c)_{c \in \mathcal{C}}$ is the optimal context-dependent policy for $\mathcal{M}$ and $\widehat{\pi}^\star=(\widehat{\pi}^\star_c)_{c \in \mathcal{C}}$ is the optimal context-dependent policy for $\widehat{\mathcal{M}}$.
\end{theorem}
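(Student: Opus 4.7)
The plan is to reduce the statement to the per-policy value-difference bound of Lemma~\ref{lemma: value-diff KCDD l_2} by the usual ``triangle'' decomposition around the approximated model. First I would condition on the good events $G_1 \cap G_2 \cap G_3$, which by Lemma~\ref{lemma: final probs case 3} hold with probability at least $1 - \delta/2$. Under this conditioning, the hypotheses of Lemma~\ref{lemma: value-diff KCDD l_2} are met, so for every context-dependent policy $\pi = (\pi_c)_{c \in \mathcal{C}}$ we have the uniform bound
\[
    \mathbb{E}_{c \sim \mathcal{D}}\bigl[|V^{\pi_c}_{\mathcal{M}(c)}(s_0) - V^{\pi_c}_{\widehat{\mathcal{M}}(c)}(s_0)|\bigr] \leq \tfrac{\epsilon}{2} + \alpha_2 H.
\]

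The key step is the standard three-term split. I would write
\[
    V^{\pi^\star_c}_{\mathcal{M}(c)}(s_0) - V^{\widehat{\pi}^\star_c}_{\mathcal{M}(c)}(s_0)
    = \bigl(V^{\pi^\star_c}_{\mathcal{M}(c)}(s_0) - V^{\pi^\star_c}_{\widehat{\mathcal{M}}(c)}(s_0)\bigr)
    + \bigl(V^{\pi^\star_c}_{\widehat{\mathcal{M}}(c)}(s_0) - V^{\widehat{\pi}^\star_c}_{\widehat{\mathcal{M}}(c)}(s_0)\bigr)
    + \bigl(V^{\widehat{\pi}^\star_c}_{\widehat{\mathcal{M}}(c)}(s_0) - V^{\widehat{\pi}^\star_c}_{\mathcal{M}(c)}(s_0)\bigr),
\]
then take expectation over $c \sim \mathcal{D}$. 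The first and third terms are each bounded in absolute value by $\epsilon/2 + \alpha_2 H$ by applying Lemma~\ref{lemma: value-diff KCDD l_2} to $\pi^\star$ and $\widehat{\pi}^\star$ respectively. The middle term is non-positive for every context $c$, since by construction $\widehat{\pi}^\star_c \in \arg\max_{\pi} V^{\pi}_{\widehat{\mathcal{M}}(c)}(s_0)$, and hence its expectation is non-positive. Summing gives the claimed bound of $\epsilon + 2\alpha_2 H$.

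There is no real obstacle here: the main work (the per-policy value-difference bound, which required the delicate parameter balancing with $\gamma = \epsilon/(8|S|H)$, $\beta = \epsilon/(8|S|)$, $\rho = (\epsilon_1 |S||A|)^{2/3}$, and $\epsilon_1 = \epsilon^3/(8^3 |S||A| H^3)$, together with Markov and the good-event probability accounting) has already been carried out in Lemma~\ref{lemma: value-diff KCDD l_2} and Lemma~\ref{lemma: final probs case 3}. The only thing to verify is that Lemma~\ref{lemma: value-diff KCDD l_2} is applied to two specific context-dependent policies, $\pi^\star$ and $\widehat{\pi}^\star$, both of which are legitimate choices of $\pi = (\pi_c)_{c \in \mathcal{C}}$, so the lemma applies as-is. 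The failure probability of the overall statement is just the failure probability $\delta/2$ of the good events (which, combined with the argument above, gives success probability at least $1 - \delta/2 \geq 1 - \delta$, or one can tighten constants inside the good-event lemmas to make the bound exactly $1-\delta$).
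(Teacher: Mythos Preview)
Your proposal is correct and follows essentially the same approach as the paper: condition on the good events $G_1 \cap G_2 \cap G_3$ (via Lemma~\ref{lemma: final probs case 3}), apply Lemma~\ref{lemma: value-diff KCDD l_2} to both $\pi^\star$ and $\widehat{\pi}^\star$, use the optimality of $\widehat{\pi}^\star_c$ in $\widehat{\mathcal{M}}(c)$ to make the middle term non-positive, and sum the three resulting inequalities. The paper's proof is identical in structure and in the lemmas invoked.
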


\begin{proof}
Assume the good events $G_1$, $G_2$ and $G_3$ hold. Then, by Lemma~\ref{lemma: value-diff KCDD l_2} we have for $\pi^\star$ that
    \begin{align*}
        \left|  \mathbb{E}_{c \sim \mathcal{D}}
        [
            V^{\pi^\star_c}_{\mathcal{M}(c)}(s_0) 
            -  V^{\pi^\star_c}_{\widehat{\mathcal{M}}(c)}(s_0)
        ] \right|
        \leq 
         \mathbb{E}_{c \sim \mathcal{D}}
        [|
            V^{\pi^\star_c}_{\mathcal{M}(c)}(s_0) 
            -  V^{\pi^\star_c}_{\widehat{\mathcal{M}}(c)}(s_0)
        |] 
        \leq \frac{\epsilon}{2} + \alpha_2 H,      
    \end{align*}
    yielding,
    %
    \begin{align*}
        \mathbb{E}_{c \sim \mathcal{D}}
        [V^{\pi^\star_c}_{\mathcal{M}(c)}(s_0)] 
        -
        \mathbb{E}_{c \sim \mathcal{D}}
        [V^{\pi^\star_c}_{\widehat{\mathcal{M}}(c)}(s_0)]
        \leq
        \frac{\epsilon}{2} + \alpha_2 H.
    \end{align*}

    Similarly we have for $\widehat{\pi}^\star$ that, 
    \begin{equation*}
        \mathbb{E}_{c \sim \mathcal{D}}
        [V^{\widehat{\pi}^\star_c}_{\widehat{\mathcal{M}}(c)}(s_0)]
        -
        \mathbb{E}_{c \sim \mathcal{D}}
        [V^{\widehat{\pi}^\star_c}_{\mathcal{M}(c)}(s_0)] 
        \leq
       \frac{\epsilon}{2} + \alpha_2 H.
\end{equation*}

Since for all $c \in \mathcal{C}$, 
$\widehat{\pi}^\star_c$ is an optimal policy for $\widehat{\mathcal{M}}(c)$ we have 
    $
        V^{\widehat{\pi}^\star_c}_{\widehat{\mathcal{M}}(c)}(s_0)
        \geq V^{\pi^\star_c}_{\widehat{\mathcal{M}}(c)}(s_0)
    $ 
which implies that
    \begin{equation*}
        \mathbb{E}_{c \sim \mathcal{D}}
        [V^{\pi^\star_c}_{\widehat{\mathcal{M}}(c)}(s_0)]
        -
        \mathbb{E}_{c \sim \mathcal{D}}
        [V^{\widehat{\pi}^\star_c}_{\widehat{\mathcal{M}}(c)}(s_0)]
        \leq
        0.
    \end{equation*}
By Lemma~\ref{lemma: final probs case 3} we have that $\mathbb{P}[G_1 \cap G_2 \cap G_3] \geq 1- {\delta}/{2}$, hence the theorem implied by
summing the above three inequalities .
\end{proof}

\begin{corollary}
    When $\alpha_2 = 0$,
    with probability at least $1-\delta$ it holds that
    \[
        \mathbb{E}_{c \sim \mathcal{D}}
        [V^{\pi^\star_c}_{\mathcal{M}(c)}(s_0) - V^{\widehat{\pi}^\star_c}_{\mathcal{M}(c)}(s_0)]
        \leq 
        \epsilon.
    \]
    where $\pi^\star_c$ is the optimal policy for  $\mathcal{M}$ and $\widehat{\pi}^\star_c$ is the optimal policy for $\widehat{\mathcal{M}}$.
\end{corollary}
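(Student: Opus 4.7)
The corollary is a direct specialization of Theorem~\ref{thm: opt policy KCDD l_2} to the realizable setting. Recall that in the analysis of EXPLORE-KCDD we set
\[
    \alpha_2 := \max_{(s,a) \in \cup_h S^{\gamma,\beta}_h \times A} \alpha_2(\mathcal{F}^R_{s,a}),
\]
so $\alpha_2 = 0$ is precisely the hypothesis that for every $(\gamma,\beta)$-good state $s$ and every action $a$, the function class $\mathcal{F}^R_{s,a}$ realizes the true conditional reward map $c \mapsto r^c(s,a)$ (equivalently, the population squared error attained by the infimizer over $\mathcal{F}^R_{s,a}$ is zero). No other assumption is altered.

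The plan is therefore to invoke Theorem~\ref{thm: opt policy KCDD l_2} verbatim with $\alpha_2 = 0$. The theorem guarantees, with probability at least $1 - \delta$, that
\[
    \mathbb{E}_{c \sim \mathcal{D}}[V^{\pi^\star_c}_{\mathcal{M}(c)}(s_0) - V^{\widehat{\pi}^\star_c}_{\mathcal{M}(c)}(s_0)] \leq \epsilon + 2\alpha_2 H,
\]
and substituting $\alpha_2 = 0$ into the right-hand side eliminates the $2\alpha_2 H$ additive term, leaving exactly $\epsilon$ as claimed. The identification of $\pi^\star_c$ and $\widehat{\pi}^\star_c$ as the optimal policies of $\mathcal{M}(c)$ and $\widehat{\mathcal{M}}(c)$ respectively is unchanged from the theorem statement, and the failure probability $\delta$ carries over without modification since the good events $G_1$, $G_2$, $G_3$ and the parameter choices ($\gamma = \epsilon/(8|S|H)$, $\beta = \epsilon/(8|S|)$, $\epsilon_1 = \epsilon^3/(8^3 |S||A|H^3)$, etc.) do not depend on $\alpha_2$.

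\emph{On the main obstacle.} There is essentially no obstacle: the corollary is a one-line specialization. The only conceptual remark worth making is that in the realizable regime the bound becomes tight up to the explicit $\epsilon$ slack chosen by the learner, so the agnostic penalty $2\alpha_2 H$ is genuinely the only price paid for mis-specification of $\mathcal{F}^R_{s,a}$; the $(\gamma,\beta)$-good-state restriction, the importance-sampling correction in EXPLORE-KCDD, and the sample-size bookkeeping through $N_R(\mathcal{F}^R_{s,a}, \epsilon_1, \delta_1)$ all continue to operate exactly as in the proof of Theorem~\ref{thm: opt policy KCDD l_2}.
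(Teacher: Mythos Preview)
Your proposal is correct and matches the paper's treatment: the corollary is stated immediately after Theorem~\ref{thm: opt policy KCDD l_2} without a separate proof, since it follows by substituting $\alpha_2 = 0$ into the bound $\epsilon + 2\alpha_2 H$. Your additional remarks about the good events and parameter choices being independent of $\alpha_2$ are accurate but go slightly beyond what the paper bothers to spell out.
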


\subsubsection{Analysis for the \texorpdfstring{$\ell_1$}{Lg} loss.}\label{subsubsec:analysis-l-1-KCDD}

\begin{lemma}\label{lemma: val-diff KCDD l_1}
    Assume the good events $G_1$, $G_2$ and $G_3$ hold.
    Then we have for every context-dependent policy $\pi$ that
    \[
        \mathbb{E}_{c \sim \mathcal{D}}
        [|V^{\pi_c}_{\mathcal{M}(c)}(s_0) - V^{\pi_c}_{\widehat{\mathcal{M}}(c)}(s_0)|]
        \leq
        \frac{\epsilon}{2} + \alpha_1 H,
    \]
    where $\alpha_1 = \max_{(s_h, a_h) \in \cup_{h \in [H]}S^{\gamma, \beta}_h \times A} \alpha_1(\mathcal{F}^R_{s_h, a_h})$,
    for the parameters choice $\gamma = \frac{\epsilon}{8|S|H}$, $\beta = \frac{\epsilon}{8|S|}$ and $\epsilon_1 = \frac{\epsilon^2}{64|S||A|H^2}$.
\end{lemma}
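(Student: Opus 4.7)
The plan is to mirror the structure of the proof of Lemma~\ref{lemma: value-diff KCDD l_2} for the $\ell_2$ loss, taking advantage of the fact that the $\ell_1$ loss bounds $|f-r|$ in expectation directly, so no Jensen/square-root step is needed. Fix a context-dependent policy $\pi = (\pi_c)_{c \in \mathcal{C}}$. Since the dynamics are known and we use $P^c$ in $\widehat{\mathcal{M}}(c)$, the value difference reduces to an expectation over $q_h(s_h,a_h \mid \pi_c, P^c) |r^c(s_h,a_h) - \widehat{r}^c(s_h,a_h)|$. For each context $c$, partition $S_h$ into the same four sets $B^{h,c}_1, B^{h,c}_2, B^{h,c}_3, B^{h,c}_4$ used in the $\ell_2$ case, according to whether $s_h \in S^{\gamma,\beta}_h$ and whether $c \in \mathcal{C}^\beta(s_h)$, and split the value difference into the three sums $(1),(2),(3)$ as before.

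The main term $(1)$ over $B^{h,c}_1$ is where $G_3$ applies. Under $G_3$, for every $s_h \in S^{\gamma,\beta}_h, a_h \in A$,
\[
\mathbb{E}_{c}\bigl[|f_{s_h,a_h}(c) - r^c(s_h,a_h)| - \alpha_1(\mathcal{F}^R_{s_h,a_h}) \mid c \in \mathcal{C}^\beta(s_h)\bigr] \leq \epsilon_1.
\]
By Markov's inequality, for any $\rho \in (0,1]$, with probability at least $1 - \epsilon_1/\rho$ (conditioned on $c \in \mathcal{C}^\beta(s_h)$) we have $|f_{s_h,a_h}(c) - r^c(s_h,a_h)| \leq \alpha_1(\mathcal{F}^R_{s_h,a_h}) + \rho$. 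Define the good event $G_4$ as this bound holding simultaneously for every layer $h$, every $s_h \in B^{h,c}_1$, and every $a_h$; a union bound gives $\mathbb{P}_c[\overline{G_4}] \leq \frac{\epsilon_1}{\rho}|S||A|$. Under $G_4$, summing over $h$ and using that $\sum_{a_h}\pi_c(a_h|s_h)q_h(s_h|\pi_c,P^c) \leq 1$ per layer yields $(1) \leq \alpha_1 H + \rho H$; otherwise $(1) \leq H$, so $\mathbb{E}_c[(1)] \leq \alpha_1 H + \rho H + \frac{\epsilon_1}{\rho}|S||A|H$.

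The remaining two terms are identical to the $\ell_2$ case and rely only on occupancy and probability bounds rather than on any property of the loss. For $s_h \in B^{h,c}_2 \cup B^{h,c}_3$ we have $c \notin \mathcal{C}^\beta(s_h)$, so $q_h(s_h \mid \pi_c, P^c) < \beta$, giving $(2) \leq \beta |S|$ deterministically. For $s_h \in B^{h,c}_4$ we use that $s_h \notin S^{\gamma,\beta}_h$ implies $\mathbb{P}_c[c \in \mathcal{C}^\beta(s_h)] < \gamma$, so a union bound over states yields $\mathbb{P}_c[\exists h: B^{h,c}_4 \neq \emptyset] < \gamma|S|$, bounding $\mathbb{E}_c[(3)] \leq \gamma |S| H$.

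Combining, $\mathbb{E}_c[|V^{\pi_c}_{\mathcal{M}(c)}(s_0) - V^{\pi_c}_{\widehat{\mathcal{M}}(c)}(s_0)|] \leq \alpha_1 H + \rho H + \frac{\epsilon_1}{\rho}|S||A|H + \beta |S| + \gamma |S| H$. Plugging in $\beta = \frac{\epsilon}{8|S|}$ and $\gamma = \frac{\epsilon}{8|S|H}$ contributes $\epsilon/4$ from the last two terms, and choosing $\rho = \frac{\epsilon}{8H}$ together with $\epsilon_1 = \frac{\epsilon^2}{64|S||A|H^2}$ makes the middle two terms each equal to $\epsilon/8$, giving the stated bound $\epsilon/2 + \alpha_1 H$. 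The only mildly delicate step is the Markov-based definition of $G_4$ and the union bound that controls its failure probability; everything else is bookkeeping parallel to the $\ell_2$ argument.
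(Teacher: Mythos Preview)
Your proposal is correct and follows essentially the same approach as the paper's proof: the same partition of $S_h$ into $B^{h,c}_1,\dots,B^{h,c}_4$, the same three-way decomposition into $(1),(2),(3)$, the same Markov-inequality-plus-union-bound argument defining an auxiliary good event to control $(1)$, and the same occupancy/probability bounds for $(2)$ and $(3)$. The only cosmetic difference is that the paper writes the optimal $\rho$ as $(\epsilon_1|S||A|)^{1/2}$, which with the given $\epsilon_1$ equals your explicit choice $\rho = \epsilon/(8H)$, so the final arithmetic is identical.
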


\begin{proof}
    
    For all $h \in [H-1]$ and any context $c \in \mathcal{C}$, let us define the following subsets of $S_h$.
       \begin{enumerate}
       \item $B^{h,c}_1 = 
       \{s_h \in  S_h :
            s_h\in S^{\beta,\gamma}_h,
           c  \in \mathcal{C}^{ \beta}(s_h)\}$.
       \item $B^{h,c}_2 = 
        \{s_h \in  S_h :
             s_h\in S^{\beta,\gamma}_h,
          c \notin \mathcal{C}^{ \beta}(s_h)\}$.
       \item $B^{h,c}_3 = 
       \{s_h \in  S_h :
            s_h\not\in S^{\beta,\gamma}_h,
            c \notin \mathcal{C}^{ \beta}(s_h)\}$.
       \item $B^{h,c}_4 = 
       \{s_h \in  S_h :
            s_h\not\in S^{\beta,\gamma}_h,
            c\in \mathcal{C}^{ \beta}(s_h)\}$.    
   \end{enumerate}

   Clearly, $\cup_{i=1}^4 B^{h,c}_i = S_h$ for every $h \in [H-1]$ and $c \in \mathcal{C}$.

    For $s_h\not\in S^{\beta,\gamma}_h$ we have that $\mathbb{P}_c[c \in \mathcal{C}^{\beta}(s_h)]<\gamma$, hence,
    \begin{align*}
        \mathbb{P}_c[\exists h \in [H-1] : B^{h,c}_4 \neq \emptyset]
        \;\;=\;\;
        \mathbb{P}_c[\exists h\in [H-1], s_h \in S_h: 
        s_h\not\in S^{\beta,\gamma}_h, \text{ and } c \in \mathcal{C}^{ \beta}(s_h) ]
        \;\;<\;\;
        \gamma|S|.
    \end{align*}

    Fix a context-dependent policy $\pi= (\pi_c)_{c \in \mathcal{C}}$. We have for any given context $c$ (later we will take the expectation over $c$) the following
    \begingroup
    \allowdisplaybreaks
    \begin{align*}
        |V^{\pi_c}_{\mathcal{M}(c)}(s_0) - V^{\pi_c}_{\widehat{\mathcal{M}}(c)}(s_0)|
        &=
        |
            \sum_{h=0}^{H-1}
            \sum_{s_h \in S_h}
            \sum_{a_h \in A}
            q_h(s_h,a_h|\pi_c, P^c)
            (r^c(s_h,a_h) - \widehat{r}^c(s_h,a_h))
        |
        \\
        &\leq
        \sum_{h=0}^{H-1}
        \sum_{s_h \in S_h}
        \sum_{a_h \in A}
        q_h(s_h,a_h|\pi_c, P^c) 
        |r^c(s_h,a_h) - \widehat{r}^c(s_h,a_h)|
        \\
        &=
        \underbrace
        {
            \sum_{h=0}^{H-1}
            \sum_{s_h \in B^{h,c}_1}
            \sum_{a_h \in A}
            q_h(s_h,a_h|\pi_c, P^c) 
            |r^c(s_h,a_h) - \widehat{r}^c(s_h,a_h)|
        }_{(1)}
        \\
        &+
        \underbrace
        {
            \sum_{h=0}^{H-1}
            \sum_{s_h \in B^{h,c}_2 \cup B^{h,c}_3}
            \sum_{a_h \in A}
           q_h(s_h,a_h|\pi_c, P^c) 
            |r^c(s_h,a_h) - \widehat{r}^c(s_h,a_h)|
        }_{(2)}
        \\
        &+
        \underbrace
        {
            \sum_{h=0}^{H-1}
            \sum_{s_h \in B^{h,c}_4}
            \sum_{a_h \in A}
            q^c_h(s_h,a_h|\pi) 
            |r^c(s_h,a_h) - \widehat{r}^c(s_h,a_h)|
        }_{(3)}
    \end{align*}
    \endgroup
    We bound $(1)$, $(2)$ and $(3)$ separately.
    
    For $(1)$,
    under the good event $G_3$ for every layer $h \in [H-1]$, state $s_h \in S^{\gamma, \beta}_h$ and action $a_h \in A$ it holds that
    \[
        \mathbb{E}_{c \sim \mathcal{D}}\Big[|f_{s_h, a_h}(c) - r^c(s_h,a_h)| -\alpha_1(\mathcal{F}^R_{s_h, a_h}) \Big|\; c \in \mathcal{C}^\beta (s_h)\Big]
        \leq 
        \epsilon_1. 
    \]
    Recall that $\mathbb{E}_{c \sim \mathcal{D}}\Big[|f_{s_h, a_h}(c) - r^c(s_h,a_h)|\; \Big|\; c \in \mathcal{C}^\beta (s_h) \Big]\geq  \alpha_1(\mathcal{F}^R_{s_h, a_h})$.
    Hence, for every layer $h \in [H-1]$, state $s_h \in S^{\gamma, \beta}_h$ and an action $a_h \in A$, for a fixed constant $\rho \in [0,1]$ we obtain using Markov's inequality that 
    \begin{align*}
        &\mathbb{P}_c
        \Big[|f_{s_h, a_h}(c) - r^c(s_h,a_h)|
        \geq \alpha_1(\mathcal{F}^R_{s_h, a_h}) + \rho
        \Big|\; c \in \mathcal{C}^\beta (s_h)\Big]
        \\
        = &
        \mathbb{P}_c
        \Big[|f_{s_h, a_h}(c) - r^c(s_h,a_h)|
        - \alpha_1(\mathcal{F}^R_{s_h, a_h}) \geq \rho
        \Big|\; c \in \mathcal{C}^\beta (s_h)\Big]
        \leq 
        \frac{\epsilon_1}{\rho},
    \end{align*}
    which implies that
    \begin{equation}\label{prob: l_1 KCDD}
        \mathbb{P}_c
        \Big[|f_{s_h, a_h}(c) - r^c(s_h,a_h)|
        \leq \alpha_1(\mathcal{F}^R_{s_h, a_h}) + \rho
        \Big|\; c \in \mathcal{C}^\beta (s_h)\Big]
        \geq
        1- \frac{\epsilon_1}{\rho}.
    \end{equation}
    


    Let $G_4$ denote the following good event,
    \begin{align*}
        \forall h \in [H-1]
        \;
        \forall s_h \in B^{h,c}_1
        \;
        \forall a \in A:
        |f_{s_h, a_h}(c) - r^c(s_h,a_h) | \leq   \alpha_1(\mathcal{F}^R_{s_h, a_h}) +\rho,
    \end{align*}
    and denote by $\overline{G_4}$ the complementary event.
    By $B^{h,c}_1$ definition, we have for all $h \in [H-1]$ and $s \in S_h$ that $c \in \mathcal{C}^\beta(s)$. Hence, when combining that with inequality~\ref{prob: l_1 KCDD} we obtain
    \[
        \mathbb{P}_c[G_4] \geq 1 -  \frac{\epsilon_1 }{\rho}|S||A|
        \quad\mbox{ and }\quad
        \mathbb{P}_c[\overline{G_4}] < \frac{\epsilon_1}{\rho}|S||A|.
    \]
    When $G_4$ holds, then
    \begingroup
    \allowdisplaybreaks
    \begin{align*}
        (1)
        &=
        \sum_{h=0}^{H-1}
        \sum_{s_h \in B^{h,c}_1}
        \sum_{a_h \in A}
        q_h(s_h|\pi_c, P^c)
        \pi_c(a_h| s_h)
        |r^c(s_h,a_h) - \widehat{r}^c(s_h,a_h)|
        \\
        &=
        \sum_{h=0}^{H-1}
        \sum_{s_h \in B^{h,c}_1}
        \sum_{a_h \in A}
        q_h(s_h|\pi_c, P^c)
        \pi_c(a_h| s_h)
        \underbrace{|f_{s_h,a_h}(c) - r^c(s_h,a_h)|}_{\leq  \alpha_1(\mathcal{F}^R_{s_h, a_h})+\rho}
        \\
        &\leq
        \sum_{h=0}^{H-1}
        \sum_{s_h \in B^{h,c}_1}
        \sum_{a_h \in A}
        q_h(s_h|\pi_c, P^c)
        \pi_c(a_h| s_h)
        \underbrace{\alpha_1(\mathcal{F}^R_{s_h, a_h})}_{\leq \alpha_1 }
        +\rho H
        \leq
        \alpha_1 H + \rho H.
    \end{align*}
    \endgroup
    Otherwise, when $G_4$ does not hold, then it is bounded by $H$.
 
    Thus,
    \begingroup
    \allowdisplaybreaks
    \begin{align*}
        \mathbb{E}_{c \sim \mathcal{D}}[(1)]
        &\leq
        \underbrace{\mathbb{E}_{c \sim \mathcal{D}}[(1)|G_4]}_{\leq \alpha_1 H + \rho H}
        +
        \underbrace{\mathbb{P}[\overline{G_4}]}_{\leq \frac{\epsilon_1}{\rho}|S||A| }\cdot H
        \\
        &\leq
        \alpha_1 H + \rho H
        +
        \frac{\epsilon_1}{\rho}|S||A| H.
    \end{align*}
    \endgroup

    For $(2)$, we have $c\not\in C^\beta(s)$ for every $s\in \cup_{h \in [H-1]}(B^{h,c}_2 \cup B^{h,c}_3)$, which implies
    \begingroup
    \allowdisplaybreaks
    \begin{align*}
        (2)
        &=
        \sum_{h=0}^{H-1}
        \sum_{s_h \in B^{h,c}_2 \cup B^{h,c}_3}
        \sum_{a_h \in A}
        q_h(s_h|\pi_c, P^c)
        \pi_c(a_h| s_h)
        \underbrace{|r^c(s_h,a_h) - \widehat{r}^c(s_h,a_h)|}_{\leq 1}
        \\
        &\leq
        \sum_{h=0}^{H-1}
        \sum_{s_h \in B^{h,c}_2 \cup B^{h,c}_3}
        \sum_{a_h \in A}
        q_h(s_h|\pi_c, P^c)
        \pi_c(a_h| s_h)
        \\
        &=
        \sum_{h=0}^{H-1}
        \sum_{s_h \in B^{h,c}_2 \cup B^{h,c}_3}
        q_h(s_h|\pi_c, P^c)
        \underbrace{\sum_{a_h \in A}
         \pi(a_h|s_h)}_{=1}
         \\
         &=
        \sum_{h=0}^{H-1}
        \sum_{s_h \in B^{h,c}_2 \cup B^{h,c}_3}
        \underbrace{q_h(s_h|\pi_c, P^c)}_{\leq \beta}
        \leq
        \beta|S|.
    \end{align*}
    \endgroup
    Thus,
    \[
        \mathbb{E}_{c \sim \mathcal{D}}[(2)] \leq \beta|S|.
    \]
    
  For $(3)$, when there exists $h\in [H-1]$ such that $B^{h,c}_4\neq\emptyset$, we have
  \begingroup
  \allowdisplaybreaks
  \begin{align*}
    (3)
    &=
    \sum_{h=0}^{H-1}
    \sum_{s_h \in B^{h,c}_4}
    \sum_{a_h \in A}
    q_h(s_h|\pi_c, P^c)
    \pi_c(a_h| s_h)
    \underbrace{|r^c(s_h,a_h) - \widehat{r}^c(s_h,a_h)|}_{\leq 1}
    \\
    &\leq
    \sum_{h=0}^{H-1}
    \underbrace
    {
        \sum_{s_h \in B^{h,c}_4}
        \sum_{a_h \in A}
        q_h(s_h|\pi_c, P^c)
        \pi_c(a_h| s_h)
    }_{\leq 1}
    \leq H .
  \end{align*}
  \endgroup
  Let $G_5$ denote the good event in which $\forall h \in [H -1 ], B^{h,c}_4 = \emptyset$. Denote by $\overline{G_5}$ the complement event of $G_5$.
  We showed that $\mathbb{P}_c[G_5] \geq 1 - \gamma|S|$ and $\mathbb{P}_c[\overline{G_5}] < \gamma|S|$.
  
  Using total expectation we obtain
  \begingroup
  \allowdisplaybreaks
  \begin{align*}
      \mathbb{E}_{c \sim \mathcal{D}}[(3)]
      &=
      \mathbb{P}_c[G_5]\cdot \mathbb{E}_{c \sim \mathcal{D}}[(3)|G_5]
      +
      \mathbb{P}_c[\overline{G_5}]\cdot \mathbb{E}_{c \sim \mathcal{D}}[(3)|\overline{G_5}]
      \\
      &\leq
      1 \cdot 0 
      + 
      \gamma|S|H
      =
      \gamma|S|H.
  \end{align*}
  \endgroup
    Overall,
    by linearity of expectation and the above we obtain
    \begingroup
    \allowdisplaybreaks
    \begin{align*}
        \mathbb{E}_{c \sim \mathcal{D}}
        [|V^{\pi_c}_{\mathcal{M}(c)}(s_0) - V^{\pi_c}_{\widehat{\mathcal{M}}(c)}(s_0)|]
        &\leq
        \mathbb{E}_{c \sim \mathcal{D}}[(1)] + \mathbb{E}_{c \sim \mathcal{D}}[(2)] + \mathbb{E}_{c \sim \mathcal{D}}[(3)]
        \\
        &\leq
        \alpha_1 H 
        +
        \rho H
        +
        \frac{\epsilon_1}{\rho}|S||A| H
        +
        \beta|S|
        +
        \gamma |S|H.
    \end{align*}
    \endgroup

Finally, for the parameters choice  $\gamma = \frac{\epsilon}{8|S|H}$,
$\beta = \frac{\epsilon}{8|S|}$, $\rho = (\epsilon_1 |S||A|)^{1/2}$ and $\epsilon_1 = \frac{\epsilon^2}{64|S||A|H^2}$ it holds that
 \begingroup
  \allowdisplaybreaks
    \begin{align*}
        \mathbb{E}_{c \sim \mathcal{D}}
        [|V^{\pi_c}_{\mathcal{M}(c)}(s_0) - V^{\pi_c}_{\widehat{\mathcal{M}}(c)}(s_0)|]
        &\leq
        \alpha_1 H 
        +
        2(\epsilon_1 |S||A|)^{1/2} H
        +
        \frac{2}{8}\epsilon
       \\
       &=
        \alpha_1 H 
        +
        2(\frac{\epsilon^2}{64|S||A|H^2} |S||A|)^{1/2} H
        +
        \frac{2}{8}\epsilon
        \\
       &=
        \alpha_1 H 
        +
        2\frac{\epsilon}{4}
        \\
        &=
        \frac{\epsilon}{2} + \alpha_1 H.
    \end{align*}
    \endgroup
The above inequality completes the proof of the lemma.
\end{proof}

\begin{theorem}\label{thm: opt policy KCDD l_1}
With probability at least $1-\delta$ it holds that
    \[
        \mathbb{E}_{c \sim \mathcal{D}}
        [V^{\pi^\star_c}_{\mathcal{M}(c)}(s_0) - V^{\widehat{\pi}^\star_c}_{\mathcal{M}(c)}(s_0)]
        \leq 
        \epsilon + 2\alpha_1 H,
    \]
where $\pi^\star = (\pi^\star_c)_{c \in \mathcal{C}}$ is an optimal context-dependent policy for  $\mathcal{M}$ and $\widehat{\pi}^\star=(\widehat{\pi}^\star_c)_{c \in \mathcal{C}}$ is an optimal context-dependent policy for $\widehat{\mathcal{M}}$.
\end{theorem}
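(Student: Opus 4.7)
The plan is to mirror the argument used to establish Theorem~\ref{thm: opt policy KCDD l_2}, since the only substantive change is swapping the $\ell_2$ value-difference bound for its $\ell_1$ counterpart. I will condition on the joint good event $G_1 \cap G_2 \cap G_3$, which by Lemma~\ref{lemma: final probs case 3} holds with probability at least $1-\delta/2$, and then perform a standard three-term decomposition of the suboptimality.

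First, I would apply Lemma~\ref{lemma: val-diff KCDD l_1} twice: once with $\pi = \pi^\star$ and once with $\pi = \widehat{\pi}^\star$. This yields
\begin{align*}
    \mathbb{E}_{c \sim \mathcal{D}}[V^{\pi^\star_c}_{\mathcal{M}(c)}(s_0)] - \mathbb{E}_{c \sim \mathcal{D}}[V^{\pi^\star_c}_{\widehat{\mathcal{M}}(c)}(s_0)] &\leq \tfrac{\epsilon}{2} + \alpha_1 H, \\
    \mathbb{E}_{c \sim \mathcal{D}}[V^{\widehat{\pi}^\star_c}_{\widehat{\mathcal{M}}(c)}(s_0)] - \mathbb{E}_{c \sim \mathcal{D}}[V^{\widehat{\pi}^\star_c}_{\mathcal{M}(c)}(s_0)] &\leq \tfrac{\epsilon}{2} + \alpha_1 H,
\end{align*}
where each bound is obtained by pushing the absolute value inside the expectation via Jensen's inequality (so that $|\mathbb{E}[X]| \leq \mathbb{E}[|X|]$).

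Next I would invoke the optimality of $\widehat{\pi}^\star_c$ on the approximated MDP $\widehat{\mathcal{M}}(c)$: for every context $c$ we have $V^{\widehat{\pi}^\star_c}_{\widehat{\mathcal{M}}(c)}(s_0) \geq V^{\pi^\star_c}_{\widehat{\mathcal{M}}(c)}(s_0)$, so taking expectations,
\[
    \mathbb{E}_{c \sim \mathcal{D}}[V^{\pi^\star_c}_{\widehat{\mathcal{M}}(c)}(s_0)] - \mathbb{E}_{c \sim \mathcal{D}}[V^{\widehat{\pi}^\star_c}_{\widehat{\mathcal{M}}(c)}(s_0)] \leq 0.
\]
Summing the three inequalities yields $\mathbb{E}_{c \sim \mathcal{D}}[V^{\pi^\star_c}_{\mathcal{M}(c)}(s_0) - V^{\widehat{\pi}^\star_c}_{\mathcal{M}(c)}(s_0)] \leq \epsilon + 2\alpha_1 H$, conditional on the good events. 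Combining with the probability bound from Lemma~\ref{lemma: final probs case 3} completes the proof.

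The main obstacle is already settled in Lemma~\ref{lemma: val-diff KCDD l_1}, whose argument partitions the state space according to whether $s_h$ is $(\gamma,\beta)$-good and whether $c \in \mathcal{C}^\beta(s_h)$, then uses Markov's inequality on the $\ell_1$ ERM guarantee together with the importance-sampling-based construction of the training set; the present theorem is essentially a telescoping wrapper around that lemma, so no additional technical difficulty arises beyond the careful parameter choice $\gamma = \epsilon/(8|S|H)$, $\beta = \epsilon/(8|S|)$, $\epsilon_1 = \epsilon^2/(64|S||A|H^2)$ already fixed in the lemma statement.
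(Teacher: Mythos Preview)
Your proposal is correct and follows essentially the same approach as the paper: condition on $G_1\cap G_2\cap G_3$ (probability at least $1-\delta/2$ by Lemma~\ref{lemma: final probs case 3}), apply Lemma~\ref{lemma: val-diff KCDD l_1} to both $\pi^\star$ and $\widehat{\pi}^\star$ via Jensen, use optimality of $\widehat{\pi}^\star_c$ on $\widehat{\mathcal{M}}(c)$, and sum the three inequalities. This is exactly the paper's argument.
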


\begin{proof}
Assume the good events $G_1$, $G_2$ and $G_3$ hold. Then, 
by Lemma~\ref{lemma: val-diff KCDD l_1} we have for $\pi^\star$
    \begin{align*}
        \left|\mathbb{E}_{c \sim \mathcal{D}}
        [
            V^{\pi^\star_c}_{\mathcal{M}(c)}(s_0) 
            -  V^{\pi^\star_c}_{\widehat{\mathcal{M}}(c)}(s_0)
        ]\right|
        \leq
         \mathbb{E}_{c \sim \mathcal{D}}
        [|
            V^{\pi^\star_c}_{\mathcal{M}(c)}(s_0) 
            -  V^{\pi^\star_c}_{\widehat{\mathcal{M}}(c)}(s_0)
        |] 
        \leq 
        \frac{\epsilon}{2} + \alpha_1 H ,     
    \end{align*}
    yielding
    \begin{align*}
        \mathbb{E}_{c \sim \mathcal{D}}
        [V^{\pi^\star_c}_{\mathcal{M}(c)}(s_0)] 
        -
        \mathbb{E}_{c \sim \mathcal{D}}
        [V^{\pi^\star_c}_{\widehat{\mathcal{M}}(c)}(s_0)]
        \leq
        \frac{\epsilon}{2} + \alpha_1 H.
    \end{align*}

    Similarly, we have for $\widehat{\pi}^\star$ that 
    \begin{equation*}
        \mathbb{E}_{c \sim \mathcal{D}}
        [V^{\widehat{\pi}^\star_c}_{\widehat{\mathcal{M}}(c)}(s_0)]
        -
        \mathbb{E}_{c \sim \mathcal{D}}
        [V^{\widehat{\pi}^\star_c}_{\mathcal{M}(c)}(s_0)] 
        \leq
       \frac{\epsilon}{2} + \alpha_1 H.
\end{equation*}

Since for all $c \in \mathcal{C}$, $\widehat{\pi}^\star_c$ is the optimal policy for $\widehat{\mathcal{M}}(c)$ we have 
    $
         V^{\widehat{\pi}^\star_c}_{\widehat{\mathcal{M}}(c)}(s_0)
        \geq V^{\pi^\star_c}_{\widehat{\mathcal{M}}(c)}(s_0)
    $
     which implies that 
    \begin{equation*}
        \mathbb{E}_{c \sim \mathcal{D}}
        [V^{\pi^\star_c}_{\widehat{\mathcal{M}}(c)}(s_0)]
        -
        \mathbb{E}_{c \sim \mathcal{D}}
        [V^{\widehat{\pi}^\star_c}_{\widehat{\mathcal{M}}(c)}]
        \leq
        0.
\end{equation*}
By Lemma~\ref{lemma: final probs case 3} we have that $\mathbb{P}[G_1 \cap G_2 \cap G_3] \geq 1- {\delta}/{2}$, hence
the theorem implied by summing the above three inequalities.
\end{proof}

\begin{corollary}
    When $\alpha_1 = 0$,
    with probability at least $1-\delta$ it holds that
    \[
        \mathbb{E}_{c \sim \mathcal{D}}
        [V^{\pi^\star_c}_{\mathcal{M}(c)}(s_0) - V^{\widehat{\pi}^\star_c}_{\mathcal{M}(c)}(s_0)]
        \leq 
        \epsilon,
    \]
    where $\pi^\star_c$ is the optimal policy for  $\mathcal{M}$ and $\widehat{\pi}^\star_c$ is the optimal policy for $\widehat{\mathcal{M}}$.
\end{corollary}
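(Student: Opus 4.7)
The plan is to derive this corollary as an immediate specialization of Theorem~\ref{thm: opt policy KCDD l_1}, which already establishes the agnostic bound
\[
\mathbb{E}_{c \sim \mathcal{D}}[V^{\pi^\star_c}_{\mathcal{M}(c)}(s_0) - V^{\widehat{\pi}^\star_c}_{\mathcal{M}(c)}(s_0)] \leq \epsilon + 2\alpha_1 H
\]
with probability at least $1 - \delta$, where $\alpha_1 = \max_{(s,a) \in \cup_h S^{\gamma,\beta}_h \times A} \alpha_1(\mathcal{F}^R_{s,a})$ is the worst-case $\ell_1$ approximation error over the reward function classes attached to $(\gamma,\beta)$-good state-action pairs. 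Under the realizable assumption $\alpha_1 = 0$, the additive $2\alpha_1 H$ term collapses to zero, and the stated bound of $\epsilon$ follows immediately.

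To confirm that no re-tuning of parameters is needed, I would trace the $\alpha_1 H$ term back to its source. It enters through Lemma~\ref{lemma: val-diff KCDD l_1}, specifically via the Markov-inequality argument that converts the ERM guarantee $\mathbb{E}_c[|f_{s,a}(c) - r^c(s,a)| \mid c \in \mathcal{C}^\beta(s)] \leq \epsilon_1 + \alpha_1(\mathcal{F}^R_{s,a})$ into the high-probability bound $|f_{s,a}(c) - r^c(s,a)| \leq \alpha_1(\mathcal{F}^R_{s,a}) + \rho$. With $\alpha_1(\mathcal{F}^R_{s,a}) = 0$, this tail bound reduces to $|f_{s,a}(c) - r^c(s,a)| \leq \rho$ with probability at least $1 - \epsilon_1/\rho$, so the contribution from states in $B_1^{h,c}$ to the per-context value gap becomes $\rho H$ rather than $(\alpha_1 + \rho)H$. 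The parameter choices $\gamma = \epsilon/(8|S|H)$, $\beta = \epsilon/(8|S|)$, $\rho = (\epsilon_1|S||A|)^{1/2}$, $\epsilon_1 = \epsilon^2/(64|S||A|H^2)$, as well as the good events $G_1, G_2, G_3$ that underlie the proof, are all independent of $\alpha_1$ and carry over unchanged.

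There is no genuine obstacle here---the corollary is a one-line consequence of the agnostic theorem. The only sanity check is that $\alpha_1$ appears nowhere in the algorithm's parameter schedule or in the sample complexity analysis, so the same number of trajectories suffices and the same confidence $1-\delta$ is achieved in the realizable setting.
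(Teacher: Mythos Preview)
Your proposal is correct and matches the paper's approach: the corollary is stated in the paper without a separate proof, as it is an immediate specialization of Theorem~\ref{thm: opt policy KCDD l_1} obtained by setting $\alpha_1 = 0$. Your additional verification that the parameter choices and good events are independent of $\alpha_1$ is more than the paper provides but is a reasonable sanity check.
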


\subsection{Sample complexity bounds.}\label{sec: sampel complexity KCDD}

We show dimension-based sample complexity bounds for both $\ell_1$ and $\ell_2$ loss functions.

Recall Theorems~\ref{thm: pseudo dim} and~\ref{thm: fat dim},

\begin{theorem}[Adaption of Theorem 19.2 in~\cite{Bartlett1999NeuralNetsBook}]
    Let $\mathcal{F}$ be a hypothesis space of real valued functions with a finite pseudo dimension, denoted $Pdim(\mathcal{F}) < \infty$. Then, $\mathcal{F}$ has a uniform convergence with 
    \[
        m(\epsilon, \delta) = O \Big( \frac{1}{\epsilon^2}( Pdim(\mathcal{F}) \ln \frac{1}{\epsilon} + \ln \frac{1}{\delta})\Big).
    \]    
\end{theorem}

\begin{theorem}[Adaption of Theorem 19.1 in~\cite{Bartlett1999NeuralNetsBook}]
    Let $\mathcal{F}$ be a hypothesis space of real valued functions with a finite fat-shattering dimension, denoted $fat_{\mathcal{F}}$. Then, $\mathcal{F}$ has a uniform convergence with 
    \[
        m(\epsilon, \delta) = O \Big( \frac{1}{\epsilon^2}( fat_{\mathcal{F}}(\epsilon/256)  \ln^2 \frac{1}{\epsilon} + \ln \frac{1}{\delta})\Big).
    \]    
\end{theorem}

\begin{remark}
    In the following analysis we omit the sample complexity needed to approximate the faction of good contexts for every $s \in S$ as it is 
    \[
        O \Big( 
        \frac{|S|^3 H^2 \ln{\frac{|S|}{\delta}}}{\epsilon^2}
        \Big)
    \]
    and is negligible additional term in the following analysis.
\end{remark}

\subsubsection{Sample complexity bounds for the \texorpdfstring{$\ell_2$}{Lg} loss.}\label{subsubsec:sample-complexity-l-2-KCDD}

We show sample complexity bounds for function classes with finite Pseudo dimension with $\ell_2$ loss.

\begin{corollary}\label{corl: sample complexity KCDD l_2 pseudo}
    Assume that for every $(s,a) \in S \times A$ we have that $Pdim(\mathcal{F}^R_{s,a}) < \infty$.
    Let $Pdim = \max_{(s,a) \in S \times A} Pdim(\mathcal{F}^R_{s,a})$.
    Then, after collecting 
    \[
        O\Big(
            \frac{|S|^5|A|^3 H^7 }{\epsilon^8}
            \Big( Pdim \ln \frac{|S||A|H^3}{\epsilon^3} + \ln \frac{|S||A|}{\delta} \Big)
        \Big).
    \]
    trajectories, with probability at least $1-\delta$ it holds that
    \[
        \mathbb{E}_{c \sim \mathcal{D}}[V^{\pi^\star_c}_{\mathcal{M}(c)}(s_0) - V^{\widehat{\pi}^\star_c}_{\mathcal{M}(c)}(s_0)] \leq 
        \epsilon + 2\alpha_2 H .
    \]
\end{corollary}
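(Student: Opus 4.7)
The plan is to reduce the statement to a routine bookkeeping computation by combining (i) Theorem~\ref{thm: opt policy KCDD l_2}, which already guarantees the desired value-gap bound $\epsilon + 2\alpha_2 H$ with probability $1-\delta$ under the parameter choice $\gamma = \epsilon/(8|S|H)$, $\beta = \epsilon/(8|S|)$, $\epsilon_1 = \epsilon^3/(8^3 |S||A| H^3)$, $\delta_1 = \delta/(6|S||A|)$, $\delta_2 = \delta/(6|S|)$; and (ii) Theorem~\ref{thm: pseudo dim}, which converts a Pseudo-dimension bound into an ERM sample complexity for the $\ell_2$ loss, namely $N_R(\mathcal{F}^R_{s,a}, \epsilon_1, \delta_1) = O\!\left(\epsilon_1^{-2}\bigl(Pdim \ln(1/\epsilon_1) + \ln(1/\delta_1)\bigr)\right)$.

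First I would fix the above parameters and recall that Algorithm~\ref{alg: EXPLORE-KCDD} runs, for every $(s_h, a_h)$ with $s_h \in S^{\gamma,\beta}_h$,
\[
T_{s_h,a_h} = \left\lceil \frac{2}{\beta\gamma}\!\left(\ln\frac{1}{\delta_1} + N_R(\mathcal{F}^R_{s_h,a_h},\epsilon_1,\delta_1)\right)\right\rceil
\]
episodes, and no episodes for states outside $\bigcup_h S^{\gamma,\beta}_h$. Substituting the pseudo-dimension ERM bound with $\epsilon_1 = \epsilon^3/(8^3 |S||A|H^3)$ and $\delta_1 = \delta/(6|S||A|)$ yields
\[
N_R(\mathcal{F}^R_{s_h,a_h},\epsilon_1,\delta_1) = O\!\left(\frac{|S|^2|A|^2 H^6}{\epsilon^6}\!\left(Pdim\ln\frac{|S||A|H^3}{\epsilon^3} + \ln\frac{|S||A|}{\delta}\right)\right).
\]

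Next I would compute $\dfrac{1}{\beta\gamma} = \dfrac{64 |S|^2 H}{\epsilon^2}$ and sum $T_{s,a}$ over at most $|S||A|$ good state-action pairs (across all layers), observing that the $\ln(1/\delta_1)$ additive term inside $T_{s,a}$ is absorbed by the $N_R$ term. The product $|S||A| \cdot \tfrac{1}{\beta\gamma} \cdot N_R$ gives
\[
O\!\left(\frac{|S|^5 |A|^3 H^7}{\epsilon^8}\!\left(Pdim\ln\frac{|S||A|H^3}{\epsilon^3} + \ln\frac{|S||A|}{\delta}\right)\right),
\]
as claimed. Finally I would note that the auxiliary sample complexity of \texttt{AGC} used to approximate the $(\gamma,\beta)$-good sets is only $O(|S|^3 H^2 \epsilon^{-2} \ln(|S|/\delta))$ (from $\epsilon_2 = \gamma/2$ and a Hoeffding bound over $|S|$ states), which is dominated by the display above.

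The only genuine obstacle is a careful accounting of exponents in $|S|,|A|,H,1/\epsilon$ through the nested substitution of $\epsilon_1$ into $N_R$ and then into $T_{s,a}$; there is no new probabilistic or dynamic-programming content, since all high-probability guarantees are already packaged in Theorem~\ref{thm: opt policy KCDD l_2} and Lemma~\ref{lemma: final probs case 3}.
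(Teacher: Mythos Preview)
Your proposal is correct and follows essentially the same approach as the paper: invoke Theorem~\ref{thm: opt policy KCDD l_2} with the fixed parameters $\gamma,\beta,\epsilon_1,\delta_1$, plug the pseudo-dimension ERM bound from Theorem~\ref{thm: pseudo dim} into $T_{s,a}$, and sum over at most $|S||A|$ state-action pairs. Your added remark that the \texttt{AGC} cost is a lower-order term is also exactly what the paper notes (in a remark preceding the corollary).
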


\begin{proof}
    In the worst-case
    for every layer $h \in [H-1]$ and every state-action pair $(s,a)$ 
    we collect 
    \[
        T_{s,a} =
        \lceil 
            \frac{2}{\beta \gamma}(\ln(\frac{1}{\delta_1})
            +
            N_R(\mathcal{F}^R_{s_h,a_h} ,\epsilon_1, \delta_1)) 
        \rceil
    \]
    trajectories.
    By Theorem~\ref{thm: opt policy KCDD l_2},  for $\gamma = \frac{\epsilon}{8|S|H}$,
    $\beta = \frac{\epsilon}{8|S|}$,
    $\epsilon_1 = \frac{\epsilon^3}{8^3|S||A|H^3}$,
    $\delta_1 = \frac{\delta}{6 |S||A|}$ and
    $
        \sum_{h =0}^{H-1} \sum_{s \in
        S} \sum_{a \in A} T_{s,a}
    $    
    examples with probability at least $1-\delta$ it holds that
    \[
        \mathbb{E}_{c \sim \mathcal{D}}[V^{\pi^\star_c}_{\mathcal{M}(c)}(s_0) - V^{\widehat{\pi}^\star_c}_{\mathcal{M}(c)}(s_0)] \leq 
        \epsilon + 2\alpha_2 H. 
    \]
    Since for every $(s,a) \in S \times A$ we have that $Pdim(\mathcal{F}^R_{s,a}) < \infty$, and $Pdim = \max_{(s,a) \in S \times A} Pdim(\mathcal{F}^R_{s,a})$, by Theorem~\ref{thm: pseudo dim}, for every $(s,a) \in S \times A$  we have 
    \begin{align*}
        N_R(\mathcal{F}^R_{s,a}, \epsilon_1, \delta_1)
        =
        O \Big( \frac{ Pdim \ln \frac{1}{\epsilon_1} + \ln \frac{1}{\delta_1}}{\epsilon^2_1}\Big)
        =  
        O \Big(\frac{|S|^2|A|^2 H^6 }{\epsilon^6}\Big( Pdim \ln \frac{|S||A|H^3}{\epsilon^3} + \ln \frac{|S||A|}{\delta} \Big)\Big)
    \end{align*}
    Hence, for each state-action pair $(s,a)$ we have
    \begin{align*}
        T_{s,a} 
        = &
        O\Big(
            \frac{|S|}{\epsilon}\frac{|S|H}{\epsilon}
            \frac{|S|^2|A|^2 H^6 }{\epsilon^6}
            \Big( Pdim \ln \frac{|S||A|H^3}{\epsilon^3} + \ln \frac{|S||A|}{\delta} \Big)
        \Big) 
        \\
        = &
        O\Big(
            \frac{|S|^4|A|^2 H^7 }{\epsilon^8}
            \Big( Pdim \ln \frac{|S||A|H^3}{\epsilon^3} + \ln \frac{|S||A|}{\delta} \Big)
        \Big) .
    \end{align*}
    When summing the above for every state-action pair we obtain that the overall sample complexity is
    \[
        O\Big(
            \frac{|S|^5|A|^3 H^7 }{\epsilon^8}
            \Big( Pdim \ln \frac{|S||A|H^3}{\epsilon^3} + \ln \frac{|S||A|}{\delta} \Big)
        \Big).
    \]
\end{proof}

We also show sample complexity bounds for function classes with finite fat-shattering dimension when using $\ell_2$ loss.
\begin{corollary}\label{corl: sample complexity KCDD l_2 fat}
    Assume that for every $(s,a) \in S \times A$ we have that $\mathcal{F}^R_{s,a}$ has finite fat-shattering dimension.
    Let $Fdim = \max_{(s,a) \in S \times A} fat_{\mathcal{F}^R_{s,a}}(\epsilon_1/ 256)$ for $\epsilon_1 = \frac{\epsilon^3}{8^3|S||A|H^3}$.
    Then, after collecting 
    \[
        O\Big(
            \frac{|S|^5|A|^3 H^7 }{\epsilon^8}
            \Big( Fdim \ln^2 \frac{|S||A|H^3}{\epsilon^3} + \ln \frac{|S||A|}{\delta} \Big)
        \Big).
    \]
    trajectories, with probability at least $1-\delta$ it holds that
    \[
        \mathbb{E}_{c \sim \mathcal{D}}[V^{\pi^\star_c}_{\mathcal{M}(c)}(s_0) - V^{\widehat{\pi}^\star_c}_{\mathcal{M}(c)}(s_0)] \leq 
        \epsilon + 2\alpha_2 H .
    \]
\end{corollary}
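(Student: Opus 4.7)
The plan is to mimic the proof of Corollary~\ref{corl: sample complexity KCDD l_2 pseudo} essentially verbatim, replacing the Pseudo-dimension sample complexity bound (Theorem~\ref{thm: pseudo dim}) with the fat-shattering dimension bound (Theorem~\ref{thm: fat dim}). Since Theorem~\ref{thm: opt policy KCDD l_2} already guarantees the desired error bound $\epsilon+2\alpha_2 H$ whenever, for each $(s,a)$, Algorithm EXPLORE-KCDD collects $T_{s,a}$ trajectories with parameters $\gamma=\frac{\epsilon}{8|S|H}$, $\beta=\frac{\epsilon}{8|S|}$, $\epsilon_1=\frac{\epsilon^3}{8^3|S||A|H^3}$, $\delta_1=\frac{\delta}{6|S||A|}$, the only task is to bound the total $\sum_{h,s,a}T_{s,a}$ under the fat-shattering assumption.

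First I will invoke Theorem~\ref{thm: fat dim} to obtain, for every $(s,a)\in S\times A$,
\[
    N_R(\mathcal{F}^R_{s,a},\epsilon_1,\delta_1)
    = O\!\left(\frac{Fdim\,\ln^2(1/\epsilon_1)+\ln(1/\delta_1)}{\epsilon_1^2}\right)
    = O\!\left(\frac{|S|^2|A|^2 H^6}{\epsilon^6}\Bigl(Fdim\,\ln^2\tfrac{|S||A|H^3}{\epsilon^3}+\ln\tfrac{|S||A|}{\delta}\Bigr)\right).
\]
Next, by the definition of $T_{s,a}$ in EXPLORE-KCDD and since $\frac{1}{\beta\gamma}=O\bigl(\frac{|S|^2 H}{\epsilon^2}\bigr)$, I will multiply to get
\[
    T_{s,a}
    = O\!\left(\frac{|S|^2 H}{\epsilon^2}\cdot \frac{|S|^2|A|^2 H^6}{\epsilon^6}\Bigl(Fdim\,\ln^2\tfrac{|S||A|H^3}{\epsilon^3}+\ln\tfrac{|S||A|}{\delta}\Bigr)\right)
    = O\!\left(\frac{|S|^4|A|^2 H^7}{\epsilon^8}\Bigl(Fdim\,\ln^2\tfrac{|S||A|H^3}{\epsilon^3}+\ln\tfrac{|S||A|}{\delta}\Bigr)\right),
\]
absorbing the $\ln(1/\delta_1)$ additive term into the second summand.

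Finally I sum over all $|S||A|$ state--action pairs (the sum over layers is subsumed since $\sum_h|S_h|=|S|$), which multiplies the previous bound by $|S||A|$, yielding the claimed sample complexity
\[
    O\!\left(\frac{|S|^5|A|^3 H^7}{\epsilon^8}\Bigl(Fdim\,\ln^2\tfrac{|S||A|H^3}{\epsilon^3}+\ln\tfrac{|S||A|}{\delta}\Bigr)\right).
\]
The AGC-based sample cost for identifying $(\gamma,\beta)$-good states is negligible by the remark preceding the sample-complexity sub-subsection and can be absorbed. There is no real obstacle here: the entire derivation is mechanical once the fat-shattering bound replaces the pseudo-dimension bound, and the only minor care needed is verifying that the $\ln(1/\epsilon_1)$ that turns into $\ln^2(1/\epsilon_1)$ in the fat-shattering bound still matches the claimed logarithmic factor after substituting $\epsilon_1=\epsilon^3/(8^3|S||A|H^3)$, which it does up to absolute constants.
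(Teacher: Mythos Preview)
Your proposal is correct and follows essentially the same approach as the paper: both invoke Theorem~\ref{thm: opt policy KCDD l_2} with the stated parameters, replace the pseudo-dimension bound by the fat-shattering bound of Theorem~\ref{thm: fat dim} to estimate $N_R(\mathcal{F}^R_{s,a},\epsilon_1,\delta_1)$, multiply by $1/(\beta\gamma)=O(|S|^2H/\epsilon^2)$ to obtain $T_{s,a}$, and sum over the $|S||A|$ state--action pairs. The only cosmetic difference is that the paper writes the $1/(\beta\gamma)$ factor as $\frac{|S|}{\epsilon}\cdot\frac{|S|H}{\epsilon}$ before combining, which is the same computation.
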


\begin{proof}
    In the worst-case, 
    for every layer $h \in [H-1]$ and every state-action pair $(s,a) \in S_h \times A$ 
    we collect 
    \[
        T_{s,a} =
        \lceil 
            \frac{2}{\beta \gamma}(\ln(\frac{1}{\delta_1})
            +
            N_R(\mathcal{F}^R_{s_h,a_h} ,\epsilon_1, \delta_1)) 
        \rceil
    \]
    trajectories.
    By Theorem~\ref{thm: opt policy KCDD l_2},  for $\gamma = \frac{\epsilon}{8|S|H}$,
    $\beta = \frac{\epsilon}{8|S|}$,
    $\epsilon_1 = \frac{\epsilon^3}{8^3|S||A|H^3}$,
    $\delta_1 = \frac{\delta}{6 |S||A|}$ and
    $
        \sum_{h =0}^{H-1} \sum_{s \in
        S} \sum_{a \in A} T_{s,a}
    $    
    examples we have with probability at least $1-\delta$ that
    \[
        \mathbb{E}_{c \sim \mathcal{D}}[V^{\pi^\star_c}_{\mathcal{M}(c)}(s_0) - V^{\widehat{\pi}^\star_c}_{\mathcal{M}(c)}(s_0)] \leq 
       \epsilon + 2\alpha_2 H .
    \]
    Since for every $(s,a) \in S \times A$ we have that $\mathcal{F}^R_{s,a}$ has finite fat-shattering dimension, 
    and $Fdim = \max_{(s,a) \in S \times A} fat_{\mathcal{F}^R_{s,a}}(\epsilon_1/ 256)$ for $\epsilon_1 = \frac{\epsilon^3}{8^3|S||A|H^3}$, by Theorem~\ref{thm: fat dim}, for every $(s,a) \in S \times A$  we have 
    \begin{align*}
        N_R(\mathcal{F}^R_{s,a}, \epsilon_1, \delta_1)
        =
        O \Big( \frac{ Fdim \ln^2 \frac{1}{\epsilon_1} + \ln \frac{1}{\delta_1}}{\epsilon^2_1}\Big)
        =  
        O \Big(\frac{|S|^2|A|^2 H^6 }{\epsilon^6}\Big( Fdim \ln^2 \frac{|S||A|H^3}{\epsilon^3} + \ln \frac{|S||A|}{\delta} \Big)\Big)
    \end{align*}
    Hence, for each state-action pair $(s,a)$ we have
    \begin{align*}
        T_{s,a} 
        = &
        O\Big(
            \frac{|S|}{\epsilon}\frac{|S|H}{\epsilon}
            \frac{|S|^2|A|^2 H^6 }{\epsilon^6}
            \Big( Fdim \ln^2 \frac{|S||A|H^3}{\epsilon^3} + \ln \frac{|S||A|}{\delta} \Big)
        \Big) 
        \\
        = &
        O\Big(
            \frac{|S|^4|A|^2 H^7 }{\epsilon^8}
            \Big( Fdim \ln^2 \frac{|S||A|H^3}{\epsilon^3} + \ln \frac{|S||A|}{\delta} \Big)
        \Big). 
    \end{align*}
    When summing the above for every state-action pair we obtain that the overall sample complexity is
    \[
        O\Big(
            \frac{|S|^5|A|^3 H^7 }{\epsilon^8}
            \Big( Fdim \ln^2 \frac{|S||A|H^3}{\epsilon^3} + \ln \frac{|S||A|}{\delta} \Big)
        \Big).
    \]
\end{proof}

\subsubsection{Sample complexity bounds for the \texorpdfstring{$\ell_1$}{Lg} loss.}\label{subsubsec:sample-complexity-l-1-KCDD}

We present sample complexity bounds for function classes with finite Pseudo dimension with $\ell_1$ loss.

\begin{corollary}\label{corl: sample complexity KCDD l_1 pseudo}
    Assume that for every $(s,a) \in S \times A$ we have that $Pdim(\mathcal{F}^R_{s,a}) < \infty$.
    Let $Pdim = \max_{(s,a) \in S \times A} Pdim(\mathcal{F}^R_{s,a})$.
    Then,  after collecting 
    \[
        O\Big(
            \frac{|S|^5|A|^3 H^5 }{\epsilon^6}
            \Big( Pdim \ln \frac{|S||A|H^2}{\epsilon^2} + \ln \frac{|S||A|}{\delta} \Big)
        \Big).
    \]
    trajectories, with probability at least $1-\delta$ it holds that
    \[
        \mathbb{E}_{c \sim \mathcal{D}}[V^{\pi^\star_c}_{\mathcal{M}(c)}(s_0) - V^{\widehat{\pi}^\star_c}_{\mathcal{M}(c)}(s_0)] \leq 
        \epsilon + 2\alpha_1 H.
    \]
\end{corollary}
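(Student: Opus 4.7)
The plan is to mirror the analysis of the $\ell_2$ sibling (Corollary~\ref{corl: sample complexity KCDD l_2 pseudo}), specializing every parameter choice to those that were fixed in Lemma~\ref{lemma: val-diff KCDD l_1} for the $\ell_1$ loss. Concretely, I will invoke Theorem~\ref{thm: opt policy KCDD l_1} with $\gamma = \epsilon/(8|S|H)$, $\beta = \epsilon/(8|S|)$, $\epsilon_1 = \epsilon^2/(64|S||A|H^2)$, $\delta_1 = \delta/(6|S||A|)$, and $\delta_2 = \delta/(6|S|)$, which guarantees that the value-difference bound $\epsilon + 2\alpha_1 H$ holds with probability at least $1-\delta$ whenever the good events $G_1, G_2, G_3$ occur. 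It then suffices to count how many trajectories Algorithm EXPLORE-KCDD consumes under these parameters.

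The per-pair sample count from the algorithm is
\[
    T_{s,a} \;=\; \Bigl\lceil \tfrac{2}{\beta\gamma}\bigl(\ln(1/\delta_1) + N_R(\mathcal{F}^R_{s,a},\epsilon_1,\delta_1)\bigr)\Bigr\rceil,
\]
so the next step is to compute $1/(\beta\gamma) = 64|S|^2 H/\epsilon^2$ and to bound $N_R$ via Theorem~\ref{thm: pseudo dim}, which, since $Pdim(\mathcal{F}^R_{s,a}) \le Pdim$, yields
\[
    N_R(\mathcal{F}^R_{s,a},\epsilon_1,\delta_1) \;=\; O\!\left(\tfrac{Pdim\,\ln(1/\epsilon_1) + \ln(1/\delta_1)}{\epsilon_1^2}\right)
    \;=\; O\!\left(\tfrac{|S|^2|A|^2 H^4}{\epsilon^4}\,\bigl(Pdim\,\ln\tfrac{|S||A|H^2}{\epsilon^2} + \ln\tfrac{|S||A|}{\delta}\bigr)\right).
\]
Plugging this into the expression for $T_{s,a}$ gives, per state-action pair,
\[
    T_{s,a} \;=\; O\!\left(\tfrac{|S|^4|A|^2 H^5}{\epsilon^6}\bigl(Pdim\,\ln\tfrac{|S||A|H^2}{\epsilon^2} + \ln\tfrac{|S||A|}{\delta}\bigr)\right),
\]
after absorbing the $\ln(1/\delta_1)$ term into the bracket.

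Finally, the algorithm only ever collects trajectories for the (at most $|S|$) states per layer that pass the AGC test, and for each of them only for $|A|$ actions; in the worst case we must therefore sum $T_{s,a}$ over at most $|S||A|$ pairs, producing the claimed bound $O\bigl(\tfrac{|S|^5|A|^3 H^5}{\epsilon^6}(Pdim\ln\tfrac{|S||A|H^2}{\epsilon^2}+\ln\tfrac{|S||A|}{\delta})\bigr)$. I should also note that the auxiliary budget spent inside Algorithm AGC to approximate $\mathbb{P}[c\in\mathcal{C}^\beta(s)]$ to accuracy $\epsilon_2=\gamma/2$ is only $\tilde{O}(|S|^3 H^2/\epsilon^2)$ and therefore absorbed into the stated order term.

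I do not expect any real obstacle here: the argument is purely a parameter substitution into already-proved statements (Theorem~\ref{thm: opt policy KCDD l_1} and Theorem~\ref{thm: pseudo dim}). The only place that needs mild care is tracking the $\epsilon$-exponents through $1/(\beta\gamma)\sim\epsilon^{-2}$ and $1/\epsilon_1^2\sim\epsilon^{-4}$, whose product with $|S||A|$ (the number of pairs) yields the overall $\epsilon^{-6}$ scaling that distinguishes the $\ell_1$ bound from the $\epsilon^{-8}$ scaling in the $\ell_2$ case, whose larger exponent comes from the cubic rather than quadratic dependence of $\epsilon_1$ on $\epsilon$.
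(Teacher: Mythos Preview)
Your proposal is correct and follows essentially the same approach as the paper: both invoke Theorem~\ref{thm: opt policy KCDD l_1} with the identical parameter choices $\gamma=\epsilon/(8|S|H)$, $\beta=\epsilon/(8|S|)$, $\epsilon_1=\epsilon^2/(64|S||A|H^2)$, $\delta_1=\delta/(6|S||A|)$, bound $N_R$ via Theorem~\ref{thm: pseudo dim}, plug into $T_{s,a}=\lceil\frac{2}{\beta\gamma}(\ln(1/\delta_1)+N_R)\rceil$, and sum over at most $|S||A|$ pairs. Your explicit remark that the AGC budget is a negligible lower-order term also mirrors the paper's treatment.
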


\begin{proof}
    In the worst-case,
    for every layer $h \in [H-1]$ and every state-action pair $(s,a)$ 
    we collect 
    \[
        T_{s,a} =
        \lceil 
            \frac{2}{\beta \gamma}(\ln(\frac{1}{\delta_1})
            +
            N_R(\mathcal{F}^R_{s_h,a_h} ,\epsilon_1, \delta_1)) 
        \rceil
    \]
    trajectories.
    By Theorem~\ref{thm: opt policy KCDD l_1},  
    for $\gamma = \frac{\epsilon}{8|S|H}$,
    $\beta = \frac{\epsilon}{8|S|}$,
    $\epsilon_1 = \frac{\epsilon^2}{64|S||A|H^2}$,
    $\delta_1 = \frac{\delta}{6 |S||A|}$ and
    $
        \sum_{h =0}^{H-1} \sum_{s \in
        S} \sum_{a \in A} T_{s,a}
    $    
    examples we have with probability at least $1-\delta$ that
    \[
        \mathbb{E}_{c \sim \mathcal{D}}[V^{\pi^\star_c}_{\mathcal{M}(c)}(s_0) - V^{\widehat{\pi}^\star_c}_{\mathcal{M}(c)}(s_0)] \leq 
       \epsilon + 2\alpha_1 H.
    \]
    Since for every $(s,a) \in S \times A$ we have that $Pdim(\mathcal{F}^R_{s,a}) < \infty$, and $Pdim = \max_{(s,a) \in S \times A} Pdim(\mathcal{F}^R_{s,a})$, by Theorem~\ref{thm: pseudo dim}, for every $(s,a) \in S \times A$  we have 
    \begin{align*}
        N_R(\mathcal{F}^R_{s,a}, \epsilon_1, \delta_1)
        =
        O \Big( \frac{ Pdim \ln \frac{1}{\epsilon_1} + \ln \frac{1}{\delta_1}}{\epsilon^2_1}\Big)
        =  
        O \Big(\frac{|S|^2|A|^2 H^4 }{\epsilon^4}\Big( Pdim \ln \frac{|S||A|H^2}{\epsilon^2} + \ln \frac{|S||A|}{\delta} \Big)\Big)
    \end{align*}
    Hence, for each state-action pair $(s,a)$ we have
    \begin{align*}
        T_{s,a}
        = &
        O\Big(
            \frac{|S|}{\epsilon}\frac{|S|H}{\epsilon}
            \frac{|S|^2|A|^2 H^4 }{\epsilon^4}
            \Big( Pdim \ln \frac{|S||A|H^2}{\epsilon^2} + \ln \frac{|S||A|}{\delta} \Big)
        \Big)
        \\
        = &
        O\Big(
            \frac{|S|^4|A|^2 H^5 }{\epsilon^6}
            \Big( Pdim \ln \frac{|S||A|H^2}{\epsilon^2} + \ln \frac{|S||A|}{\delta} \Big).
        \Big) 
    \end{align*}
    When summing the above for every state-action pair we obtain that the overall sample complexity is
    \[
        O\Big(
            \frac{|S|^5|A|^3 H^5 }{\epsilon^6}
            \Big( Pdim \ln \frac{|S||A|H^2}{\epsilon^2} + \ln \frac{|S||A|}{\delta} \Big)
        \Big).
    \]
\end{proof}

We also show sample complexity bounds for function classes with finite fat-shattering dimension when using $\ell_1$ loss.
\begin{corollary}\label{corl: sample complexity KCDD l_1 fat}
    Assume that for every $(s,a) \in S \times A$ we have that $\mathcal{F}^R_{s,a}$ has finite fat-shattering dimension.
    Let $Fdim = \max_{(s,a) \in S \times A} fat_{\mathcal{F}^R_{s,a}}(\epsilon_1/ 256)$ for $\epsilon_1 = \frac{\epsilon^2}{64|S||A|H^2}$.
    Then, after collecting 
    \[
        O\Big(
            \frac{|S|^5|A|^3 H^5 }{\epsilon^6}
            \Big( Fdim \ln^2 \frac{|S||A|H^2}{\epsilon^2} + \ln \frac{|S||A|}{\delta} \Big)
        \Big).
    \]
    trajectories, with probability at least $1-\delta$ it holds that
    \[
        \mathbb{E}_{c \sim \mathcal{D}}[V^{\pi^\star_c}_{\mathcal{M}(c)}(s_0) - V^{\widehat{\pi}^\star_c}_{\mathcal{M}(c)}(s_0)] \leq 
         \epsilon + 2\alpha_1 H.
    \]
\end{corollary}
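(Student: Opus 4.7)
The plan is to mirror the proof of Corollary~\ref{corl: sample complexity KCDD l_1 pseudo} almost verbatim, substituting the fat-shattering dimension sample complexity bound of Theorem~\ref{thm: fat dim} in place of the Pseudo dimension bound of Theorem~\ref{thm: pseudo dim}. The only visible effect in the final expression is that a logarithmic factor $\ln(\cdot)$ gets replaced by $\ln^2(\cdot)$; the polynomial powers of $|S|,|A|,H,1/\epsilon$ remain unchanged.

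First, I invoke Theorem~\ref{thm: opt policy KCDD l_1} with the same parameter choices used there: $\gamma = \frac{\epsilon}{8|S|H}$, $\beta = \frac{\epsilon}{8|S|}$, $\epsilon_1 = \frac{\epsilon^2}{64|S||A|H^2}$, and $\delta_1 = \frac{\delta}{6|S||A|}$. By that theorem, collecting $T_{s,a} = \big\lceil \tfrac{2}{\beta\gamma}\big(\ln(1/\delta_1) + N_R(\mathcal{F}^R_{s,a},\epsilon_1,\delta_1)\big)\big\rceil$ trajectories for each $(s,a)$ suffices so that with probability at least $1-\delta$ the guarantee $\mathbb{E}_{c\sim\mathcal{D}}[V^{\pi^\star_c}_{\mathcal{M}(c)}(s_0) - V^{\widehat{\pi}^\star_c}_{\mathcal{M}(c)}(s_0)] \leq \epsilon + 2\alpha_1 H$ holds.

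Second, under the fat-shattering dimension assumption, Theorem~\ref{thm: fat dim} gives, for every $(s,a)$,
\[
N_R(\mathcal{F}^R_{s,a},\epsilon_1,\delta_1) = O\!\left(\frac{Fdim\,\ln^2(1/\epsilon_1) + \ln(1/\delta_1)}{\epsilon_1^2}\right).
\]
Plugging in $\epsilon_1 = \frac{\epsilon^2}{64|S||A|H^2}$ and $\delta_1 = \frac{\delta}{6|S||A|}$ yields
\[
N_R(\mathcal{F}^R_{s,a},\epsilon_1,\delta_1) = O\!\left(\frac{|S|^2|A|^2H^4}{\epsilon^4}\left(Fdim\,\ln^2\tfrac{|S||A|H^2}{\epsilon^2} + \ln\tfrac{|S||A|}{\delta}\right)\right).
\]

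Third, I multiply by the prefactor $\tfrac{1}{\beta\gamma} = \tfrac{64|S|^2 H}{\epsilon^2}$ to obtain a per-pair bound
\[
T_{s,a} = O\!\left(\frac{|S|^4|A|^2 H^5}{\epsilon^6}\left(Fdim\,\ln^2\tfrac{|S||A|H^2}{\epsilon^2} + \ln\tfrac{|S||A|}{\delta}\right)\right),
\]
and then sum over all $|S||A|$ state-action pairs, picking up a factor of $|S||A|$, which gives the claimed overall sample complexity. The ERM-oracle confidence $\delta_1$ was chosen so that the union bound over all state-action pairs inside the proof of Theorem~\ref{thm: opt policy KCDD l_1} still yields overall confidence $1-\delta$, so no additional accounting is required.

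There is no real obstacle: the proof is a routine substitution. The only point worth double-checking is that the logarithmic argument $\ln^2(1/\epsilon_1) = \ln^2(64|S||A|H^2/\epsilon^2)$ is absorbed (up to constants) into $\ln^2(|S||A|H^2/\epsilon^2)$ as written in the statement, which it clearly is.
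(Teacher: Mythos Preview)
Your proposal is correct and follows essentially the same approach as the paper: invoke Theorem~\ref{thm: opt policy KCDD l_1} with the stated parameter choices, apply the fat-shattering sample complexity bound of Theorem~\ref{thm: fat dim} to get $N_R(\mathcal{F}^R_{s,a},\epsilon_1,\delta_1)$, multiply by $1/(\beta\gamma)$, and sum over all $|S||A|$ state-action pairs. The paper's proof is exactly this computation, and your observation that the only change from the pseudo-dimension case is $\ln \mapsto \ln^2$ is precisely how the paper presents it as well.
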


\begin{proof}
    In the worst-case,
    for every layer $h \in [H-1]$ and every state-action pair $(s,a)$ 
    we collect
    \[
        T_{s,a} =
        \lceil 
            \frac{2}{\beta \gamma}(\ln(\frac{1}{\delta_1})
            +
            N_R(\mathcal{F}^R_{s_h,a_h} ,\epsilon_1, \delta_1)) 
        \rceil
    \]
    trajectories.
    By Theorem~\ref{thm: opt policy KCDD l_1},  for $\gamma = \frac{\epsilon}{8|S|H}$,
    $\beta = \frac{\epsilon}{8|S|}$,
    $\epsilon_1 = \frac{\epsilon^2}{64|S||A|H^2}$,
    $\delta_1 = \frac{\delta}{6 |S||A|}$ and
    $
        \sum_{h =0}^{H-1} \sum_{s \in
        S} \sum_{a \in A} T_{s,a}
    $    
    samples we have with probability at least $1-\delta$ that
    \[
        \mathbb{E}_{c \sim \mathcal{D}}[V^{\pi^\star_c}_{\mathcal{M}(c)}(s_0) - V^{\widehat{\pi}^\star_c}_{\mathcal{M}(c)}(s_0)] \leq 
       \epsilon + 2\alpha_1 H.
    \]
    Since for every $(s,a) \in S \times A$ we have that $\mathcal{F}^R_{s,a}$ has finite fat-shattering dimension, 
    and $Fdim = \max_{(s,a) \in S \times A} fat_{\mathcal{F}^R_{s,a}}(\epsilon_1/ 256)$ for $\epsilon_1 = \frac{\epsilon^2}{64|S||A|H^2}$, by Theorem~\ref{thm: fat dim}, for every $(s,a) \in S \times A$  we have 
    \begin{align*}
        N_R(\mathcal{F}^R_{s,a}, \epsilon_1, \delta_1)
        =
        O \Big( \frac{ Fdim \ln^2 \frac{1}{\epsilon_1} + \ln \frac{1}{\delta_1}}{\epsilon^2_1}\Big)
        =  
        O \Big(\frac{|S|^2|A|^2 H^4 }{\epsilon^4}\Big( Fdim \ln^2 \frac{|S||A|H^2}{\epsilon^2} + \ln \frac{|S||A|}{\delta} \Big)\Big)
    \end{align*}
    Hence, for each  state-action pair $(s,a)$ we have
    \begin{align*}
        T_{s,a}
        = &
        O\Big(
            \frac{|S|}{\epsilon}\frac{|S|H}{\epsilon}
            \frac{|S|^2|A|^2 H^4 }{\epsilon^4}
            \Big( Fdim \ln^2 \frac{|S||A|H^2}{\epsilon^2} + \ln \frac{|S||A|}{\delta} \Big)
        \Big)
        \\
        = &
        O\Big(
            \frac{|S|^4|A|^2 H^5 }{\epsilon^6}
            \Big( Fdim \ln^2 \frac{|S||A|H^2}{\epsilon^2} + \ln \frac{|S||A|}{\delta} \Big)
        \Big). 
    \end{align*}
    When summing the above for every state-action pair we obtain that the overall sample complexity is
    \[
        O\Big(
            \frac{|S|^5|A|^3 H^5 }{\epsilon^6}
            \Big( Fdim \ln^2 \frac{|S||A|H^2}{\epsilon^2} + \ln \frac{|S||A|}{\delta} \Big)
        \Big).
    \]
\end{proof}

\section{Unknown and Context Dependent Dynamics}\label{Appendix: UCDD}
In this section, we consider the most challenging case of unknown and context-dependent dynamics.  Our approach requires a slight modification of our assumptions.

\subsection{Modification of Assumptions}

While we assume that the dynamics is context dependent, we will also assume that the partition to layers is the same for all contexts.
%
As before, we are assuming the partition is known to the learner.
(Note that the first layer is $S_0 = \{s_0\}$, namely there exist a single start state $s_0$ which is common for all the contexts.)


We also modify our assumption for the function approximation.
We will have a function approximation per layer (and not per state-action pair).
In addition we will assume that the dynamics are realizable by the function class, i.e., for each layer there is function in our class which models the dynamics correctly.
For rewards we will also have a function approximation per layer, but it can be agnostic, i.e., even the best function in the class has a non-zero error.

In more details,

 \subsubsection{Function Approximation Per Layer}\label{par: ERM per layer appexdix}
We slightly modify our assumption for the function approximation class,
which works per layer and not per state-action. 
%
For each layer $h \in [H-1]$ we have a function class for the dynamics $\mathcal{F}^P_{h}=\{f^P_{h}: \mathcal{C} \times S_h \times A \times S_{h+1} \to [0,1]\}$ and for the rewards $\mathcal{F}^R_{h}=\{f^R_{h}: \mathcal{C} \times S_h \times A \to [0,1]\}$. Intuitively, given that we are in state $s$ perform action $a$ and the context is $c$, functions $f^P_{h}\in \mathcal{F}^P_{h}$ and $f^R_{h}\in \mathcal{F}^R_{h}$, approximates the transition probability to state $s'$, i.e., $P^c(s'|s,a)$, and the expected reward, i.e., $r^c(s,a)$, respectively.

\begin{assumption}[layer dynamics realizability]\label{assumption:dynamics-realizability}
    We assume the for every layer $ h \in [H-1]$ there exist a function $f^\star_h \in \mathcal{F}^P_{h}$ for which,
    \[
        \forall (c,s,a,s') \in \mathcal{C} \times S_h \times A \times S_{h+1}.\;\;
        f^\star_h(c,s,a,s') = P^c(s'|s,a).
    \]
    Namely, the true transition probability function of layer $h$ is contained in $\mathcal{F}^P_{h}$.
\end{assumption}
Assumption~\ref{assumption:dynamics-realizability} in particular implies that for every layer $ h \in [H-1]$ it holds that $\alpha_1(\mathcal{F}^P_{h}) = \alpha^2_2(\mathcal{F}^P_{h}) = 0$ (for any distribution over $(c,s,a,s')$).  

The functions $N_P(\mathcal{F}^P_{h}, \epsilon, \delta)$ and $N_R(\mathcal{F}^R_{h},\epsilon, \delta)$ map a function class, required accuracy $\epsilon $ and confidence $\delta $ to the 
required number of samples for the ERM oracle guaranteed performance. For the dynamics the ERM guarantee is that with probability $1-\delta$, that $\mathbb{E}[\ell(f^P_h(x),y)]\leq \epsilon$. For the rewards the guarantee is that $\mathbb{E}[\ell(f^R_h(x),y)]\leq \epsilon+\alpha$, where $\alpha$ is the approximation error, and $\ell$ is the loss function.

\subsubsection{Reachability and the Domain of the Examples}

We redefine reachability with respect to the approximated context-dependent dynamics $\widehat{P}^c$.

For $\beta \in (0,1]$ and layer $h \in [H-1]$ the \emph{$\beta$-good contexts} of state $s_h\in S_h$ 
with respect to $\widehat{P}$ are
    $$
        {\widehat{\mathcal{C}}^{\beta}(s_h)
        =
        \{c \in \mathcal{C} : s_h \text{ is } \beta-\text{reachable for $\widehat{P}^c$}\}}
        .
    $$
The \emph{$(\gamma, \beta)$-good states} of layer $h\in [H-1]$ with respect to $\widehat{P}$ are
    $$
        \widehat{S}^{\gamma, \beta}_h = 
        \{
            s_h \in S_h :
            \mathbb{P}[c \in \widehat{\mathcal{C}}^{\beta}(s_h)] \geq \gamma
        \}
        .
    $$
The \emph{target domain} we would like to collect sufficient number of examples from for each layer $h \in [H-1]$ is defined as 
    $$
        \mathcal{X}^{\gamma, \beta}_h = 
        \{
            (c, s_h, a_h) :
            s_h \in \widehat{S}^{\gamma, \beta}_h , c \in \widehat{\mathcal{C}}^{\beta}(s_h), 
            a_h \in A
        \}
    .
    $$
    
We remark that in the following algorithm, we approximate the set $\widehat{S}^{\gamma, \beta}_h$ for every layer $h\in [H-1] $. We denote the approximation by $\widetilde{S}^{\gamma, \beta}_h$.
In the following analysis we show that with high probability, the set $\widetilde{S}^{\gamma, \beta}_h$ satisfies that 
\[
    \widehat{S}^{\gamma, \beta}_h 
    \subseteq
    \widetilde{S}^{\gamma, \beta}_h
    \subseteq
    \widehat{S}^{\gamma/2, \beta}_h.
\]

Hence, for every layer $h \in [H-1]$, we have the \emph{empirical domain} we collect examples from in practice, and is defined as 
    $
        {\widetilde{\mathcal{X}}^{\gamma, \beta}_h = 
        \{
            (c, s_h, a_h) :
            s_h \in \widetilde{S}^{\gamma, \beta}_h , c \in \widehat{\mathcal{C}}^{\beta}(s_h), 
            a_h \in A
        \}}
    $.
By the above, with high probability it holds that
\[
    \mathcal{X}^{\gamma, \beta}_h 
    \subseteq
    \widetilde{\mathcal{X}}^{\gamma, \beta}_h
    \subseteq
    \mathcal{X}^{\gamma/2, \beta}_h.
\]

We remark that the empirical domain  $\widetilde{\mathcal{X}}^{\gamma, \beta}_h$ is determine before learning layer $h$, based on the approximation of the dynamics up to layer $h-1$ and the approximation of the set $\widehat{S}^{\gamma, \beta}_h$ (which is done before learning layer $h$).

\subsection{Algorithm}

Algorithm EXPLORE-UCDD (Algorithm~\ref{alg: EXPLORE-UCDD}) runs in $H$ phases, one per layer. In phase $h\in [H-1]$ we maintain an approximate dynamics for all previous layers $k\leq h-1$, which we already learned.
In phase $h$ we run multiple iterations.
In each iteration,\\ 
(1) we approximate the set of $(\gamma,\beta)$-good states for layer $h$. We  denote the approximated set  $\widetilde{S}^{\gamma,\beta}_h$.\\
(2) We select at random an approximately $(\gamma,\beta)$-good state $s_h\in \widetilde{S}^{\gamma,\beta}_h$ and an action $a_h\in A$.\\ 
(3) Given a context $c$ and a state $s_h$ we compute a policy $\widehat{\pi}^c_{s_h}$ which maximizes the probability of reaching state $s_h$ under the approximated dynamics $\widehat{P}^c$.\\
(4) We run $\widehat{\pi}^c_{s_h}$. If it reaches $s_h$ we play $a_h$, get a reward $r_h$ and transits to $s_{h+1}$, we add: (a) to the dynamics data set $Sample^P(h)$: $((c,s_h,a_h,s'),\mathbb{I}[s_{h+1}=s'])$ for each $s'\in S_{h+1}$, (b) to the reward data set $Sample^R(h)$: $((c,s_h,a_h),r_h)$.\\
(5) After collecting sufficient number of samples, we use the ERM oracle to  (i) approximate the transition probabilities of layer $h$, i.e., $f^P_h = \texttt{ERM}(\mathcal{F}^P_h, Sample^P(h),\ell)$. (ii) approximate the rewards function of layer $h$, i.e., $f^R_h = \texttt{ERM}(\mathcal{F}^R_h, Sample^R(h),\ell)$.

Algorithm EXPLOIT-UCDD (Algorithm~\ref{alg: EXPLOIT-UCDD}) gets as inputs the MDP parameters and the functions which approximate the rewards and the dynamics (that computed using EXPLORE-UCDD). Given a context $c$ it computes the approximated MDP $\widehat{\mathcal{M}}(c)$ and use it to compute a the optimal policy for it, $\widehat{\pi}^\star_c$. Then, it run $\widehat{\pi}^\star_c$ to generate trajectory.
Recall that $\widehat{\mathcal{M}}(c)=(S\cup \{s_{sink}\},A,\widehat{P}^c,s_0,\widehat{r}^{c},H)$, where $\widehat{P}^c$, $\widehat{r}^c$ defined in algorithm EXPLOIT-UCDD.

\begin{algorithm}
    \caption{Approximate Context-Dependent Dynamics (ACDD)}
    \label{alg: ACDD UCDD}
    \begin{algorithmic}[1]
        \State
        { 
            \textbf{inputs:}
            \begin{itemize}
                \item MDP parameters: 
                $S = \{S_0, S_1, \ldots , S_H\}$, $A$, $H$.
                \item Layer $h \in [H-1]$ and approximation of the dynamics for every layer $l < h: f^P_l$.
                \item Reachability parameter $\beta$.
                \item The approximation of the sets of $(\gamma,\beta)$-good states $\widetilde{S}^{\gamma, \beta}_k$ for all $k \in [h-1]$
            \end{itemize}
        }    
            \State
                {
                for a new state $s_{sink} \notin S$, define the approximated context-dependent dynamics as follows.\\
                \begin{align*}
                    &\forall (s,a) \in S \cup \{s_{sink}\} \times A:
                    \widehat{P}^c (s| s_{sink}, a) 
                    = \mathbb{I}[s = s_{sink}]\\
                    &\forall k  \in [h-1],\;\;
                    (s_k, a_k, s_{k+1})
                    \in \widetilde{S}^{\gamma, \beta}_k \times A \times S_{k+1}:
                    \\
                    &\widehat{P}^c(s_{k+1}| s_k, a_k)
                    = \mathbb{I}[c \in \widehat{C}^{\beta}(s_k)]
                    \cdot 
                    \frac{f^P_k(c, s_k, a_k, s_{k+1})}
                    {\sum_{s'_{k+1} \in S_{k+1} }f^P_k(c, s_k, a_k, s'_{k+1})}
                    \\
                    &\widehat{P}^c(s_{sink}| s_k, a_k)= \mathbb{I}[c \notin \widehat{\mathcal{C}}^{\beta}(s_k)]
                    \\
                    &\forall k  \in [h-1], (s_k, a_k, s_{k+1})\in (S_k \setminus \widetilde{S}^{\gamma, \beta}_k) \times A \times S_{k+1}:
                    \\
                    &\widehat{P}^c(s_{k+1}| s_k, a_k)= 0,
                    \widehat{P}^c(s_{sink}| s_k, a_k)= 1 .                      \end{align*}
                }
        \State{ \textbf{return } $\widehat{P}^c$}
        \Comment{Note that $\widehat{P}^c$ is a function of the context $c$.}
    \end{algorithmic}
\end{algorithm}

\begin{algorithm}
    \caption{Approximate Good States (AGS)}
    \label{alg: ACS}
    \begin{algorithmic}[1]
        \State
        { 
            \textbf{inputs:}
            \begin{itemize}
                \item MDP parameters: 
                $S = \{S_0, S_1, \ldots , S_H\}$, $A$, $H$.
                \item layer $h \in [H-1]$ and approximation of the dynamics for every layer $l < h: f^P_l$.
                \item Reachability parameters $\gamma,\beta$ 
                \item $\epsilon_2, \delta_2$ - accuracy and confidence.
            \end{itemize}
        }    
        \State{ set $\widetilde{S}^{\gamma, \beta}_h = \emptyset$}
        \State{$\widehat{P}^c \gets \texttt{ACDD}(S,A,H,h,\beta, \{f^P_k, \widetilde{S}^{\gamma,\beta}_k| k\in [h-1]\})$}
        \For{$s_h \in S_h$}
            \State{$I, p \gets \texttt{AGC}(S,A,H,\widehat{P}^c, \delta_2, \epsilon_2, \gamma, \beta, h, s_h)$}
            \If{$I == 1$}
                \State{$\widetilde{S}^{\gamma, \beta}_h \gets \widetilde{S}^{\gamma, \beta}_h \cup \{s_h\}$}
            \EndIf{}
        \EndFor{}
        \State{ \textbf{return }$\widetilde{S}^{\gamma, \beta}_h$} 
    \end{algorithmic}
\end{algorithm}

\begingroup
\allowdisplaybreaks
\begin{algorithm}
    \caption{Approximate Good Contexts for UCDD (AGC)}
    \label{alg: AGC-UCDD}
    \begin{algorithmic}[1]
    
        \State
        { 
            \textbf{inputs:}
            \begin{itemize}
                \item MDP parameters: 
                $S = \{S_0, S_1, \ldots , S_H\}$,
                $A$,
                $H$.
                \item $P^c$ - The context-dependent dynamics. 
                \item Reachability parameters: $\gamma$ ,$\beta$ 
                    \item Accuracy and confidence parameters $\epsilon_2$, $\delta_2$
                \item Current layer $h$ and state $s_h$.
            \end{itemize}
        }
        \State
        {
        calculate 
            $
                m(\epsilon_2, \delta_2)
                =
                \Big\lceil 
                    \frac{\ln{\frac{2}{\delta_2}}}
                    {2 \epsilon_2^2}
                \Big\rceil
            $
        }
        \State{initialize $counter = 0$}
        \For{ $t = 1, 2, ..., m(\epsilon_2, \delta_2)$}
            \State{observe context $c_t$}
            \If{ $c_t \in \mathcal{C}^{\beta}(s_h)$}
                \State{$Counter = Counter - 1$}
            \EndIf{}
        \EndFor{}
        \State{ $\widehat{p}_{\beta}(s_h) = \frac{Counter}{m(\epsilon_2, \delta_2)}$}\\
        \textbf{return } $\mathbb{I}[\widehat{p}_{\beta}(s_h) 
        \geq \gamma - 
        \epsilon_2]$ and  $\widehat{p}_{\beta}(s_h)$
    \end{algorithmic}
\end{algorithm}
\endgroup


\begin{algorithm}
    \caption{Explore Unknown and Context-Dependent Dynamics CMDP (EXPLORE-UCDD)}
    \label{alg: EXPLORE-UCDD}
    \begin{algorithmic}[1]
        \State{
        \textbf{inputs:} 
        \begin{itemize}
            \item $S = \{S_0, S_1, \ldots, S_H\} $ - a layered states space, $A$ - a finite actions space, $s_0$ - a unique start state, $H$ - the horizon length.
            \item Accuracy and confidence parameters: $\epsilon$, $\delta$.
            \item $\forall h \in [H-1] : \;\; \mathcal{F}^R_h, \mathcal{F}^P_h$ - the function classes use to approximate the expected reward and dynamics in layer $h$, respectively.
            \item $N_R(\mathcal{F}, \epsilon, \delta), N_P(\mathcal{F}, \epsilon, \delta)$ - sample complexity function for the ERM oracle, for the rewards and dynamics respectively.
            \item The reachability parameters $\gamma \in [0,1]$, $\beta \in [0,1]$.
            \item Loss function $\ell$ (assumed to be one of $\ell_1$ or $\ell_2$).
        \end{itemize} }
        \State{set $\delta_1 = \frac{\delta}{8H}, \delta_2 = \frac{\delta}{8|S|}$.}
        \State{
        set $\epsilon_P = \begin{cases}
            \frac{ \epsilon^3}{10\cdot 2^8 \cdot 20^2  |A| |S|^6 H^5},& \text{if } \ell = \ell_2\\
            \frac{ \epsilon^2}{10\cdot 16 \cdot 20  |A| |S|^4 H^3}
            ,& \text{if } \ell = \ell_1\\
        \end{cases}$,     
        $\epsilon_R = \begin{cases}
            \frac{\epsilon^3}{20^3 |S||A| H^3}
            ,&\text{if } \ell = \ell_2\\
            \frac{\epsilon^2}{20^2 |S||A| H^2}
            ,& \text{if } \ell = \ell_1
        \end{cases}$,      
        $\epsilon_2 = \gamma/4$.}    
        \For{$h \in [H-1]$ }
            \State{initialize $Sample^R(h), Sample^P(h) = \emptyset$.}
            \State{compute the required number of episodes
            \[
            T_h =
            \left\lceil 
            \frac{8 |S|}{\gamma \cdot \beta}
            \left(\ln\frac{1}{\delta_1}
            +
            2\max\{N_P(\mathcal{F}^P_h ,\epsilon_P, \delta_1/2), N_R(\mathcal{F}^R_h ,\epsilon_R, \delta_1/2)\}\right)
            \right\rceil
            \]
            }
            \State{$\widetilde{S}^{\gamma, \beta}_h \gets \texttt{AGS}(S,A,H, h, \gamma,\beta, \{f^P_k, \widetilde{S}^{\gamma, \beta}_k : k \in[ h-1]\}, \epsilon_2, \delta_2)$}
            \State{$\widehat{P}^c \gets \texttt{ACDD}(S,A,H, h, \beta ,\{f^P_k, \widetilde{S}^{\gamma,\beta}_k : k\in [h-1]\})$}
            \For{ $t= 1,2, \ldots T_h$ }
                \State{choose $(s_h, a_h) \in \widetilde{S}^{\gamma, \beta}_h \times A$ uniformly at random}
                \State{observe context $c_t$}
                \State{ $(\widehat{\pi}^{c_t}_{s_h}, \widehat{p}^{c_t}_{s_h}) \gets \texttt{FFP}(S, A, \widehat{P}^{c_t}, s_0, H, s_h)$.}
                \State{set $\widehat{\pi}^{c_t}_{s_h}(s_h)\gets a_h$}
                    \If{$ \widehat{p}^{c_t}_{s_h} \geq \beta$}
                        \State{run $\widehat{\pi}^{c_t}_{s_h}$ and generate trajectory $\tau$}
                        \If{$(s_h, a_h, r_h, s_{h+1})$ is in $\tau$}
                            \State{update samples:
                                \begin{align*}
                                    &Sample^R(h) = Sample^R(h) + ((c_t,s_h,a_h), r_h)\\
                                    &Sample^P(h) = Sample^P(h) + \{((c_t,s_h,a_h, s'_{h+1}), \mathbb{I}[s_{h+1} = s'_{h+1}]) : s'_{h+1} \in S_{h+1}\}
                                \end{align*}
                            }
                        \EndIf{}
                    \EndIf{}
            \EndFor{}
            \If{$|Sample^R(h)| \geq 2 \cdot N_R(\mathcal{F}^R_h ,\epsilon_R, \delta_1/2) $}
            \State{$f^R_h = \texttt{ERM}(\mathcal{F}^R_h, Sample^R(h),\ell)$}
            \Else 
            \State{ \textbf{return } \texttt{FAIL}}
            \EndIf{}
            \If{$|Sample^P(h)| \geq 2 \cdot N_P(\mathcal{F}^P_h,\epsilon_P, \delta_1/2) $}
            \State{$f^P_h = \texttt{ERM}(\mathcal{F}^P_h , Sample^P(h),\ell)$}
            \Else
            \State{ \textbf{return }\texttt{FAIL}}
            \EndIf{}
        \EndFor{}
    \State{ \textbf{return }$\{f^R_h , f^P_h, \widetilde{S}^{\gamma, \beta}_h : \forall h \in [H-1]\}$ }
    \end{algorithmic}
\end{algorithm}

\begin{algorithm}
    \caption{Exploit Unknown and Context-Dependent Dynamics CMDP (EXPLOIT-UCDD) }
    \label{alg: EXPLOIT-UCDD}
    \begin{algorithmic}[1]
        \State{
        \textbf{inputs:} 
        \begin{itemize}
            \item MDP parameters: $S = \{S_0, S_1, \ldots, S_H\}$,$A$,$s_0$,$H$ - the horizon length.
            \item Reachability parameter $\beta$. 
            \item Function approximation for the rewards and dynamics for each layer $h \in [H-1]$ and the approximated set of $(\gamma,\beta)$-good contexts : $\{f^R_h, f^P_h, \widetilde{S}^{\gamma, \beta}_h : h \in [H-1]\}$
        \end{itemize} }

        
        \State{$\widehat{P}^c \gets \texttt{ACDD}(S,A,H,H,\beta, \{f^P_k, \widetilde{S}^{\gamma,\beta}_k : k\in [H-1]\}
        )$}
        \Comment{ $\widehat{P}^c$ is a function of the context $c$.}
        \For{$t = 1, 2, \ldots $}
            \State{observe context $c_t$}
           \State{define the reward approximation:}
                \begin{align*}
                    &\forall h \in [H-1],
                    s_h \in \widetilde{S}^{\gamma, \beta}_h,
                    a_h \in A:
                    \widehat{r}^c (s_h, a_h) 
                    = 
                    \mathbb{I}[c_t \in \widehat{\mathcal{C}}^{\beta}(s_h)]
                    \cdot f^R_h(c_t, s_h, a_h)
                    \\
                    &\forall h \in [H-1],
                    s_h \in S_h \setminus \widetilde{S}^{\gamma, \beta}_h,
                    a_h \in A:
                    \widehat{r}^{c_t}(s_h, a_h) 
                    = 0  
                    \\
                    &\forall a\in A:
                    \widehat{r}^{c_t}(s_{sink}, a) = 0         \end{align*}    
            \State
            {
                $
                    \widehat{\mathcal{M}}(c_t) 
                    =
                    (S \cup \{s_{sink}\}, A, \widehat{P}^{c_t}, \widehat{r}^{c_t}, s_0, H)
                $.
            }
            \State{ $\widehat{\pi}^{c_t} \gets \texttt{Planning}(\widehat{\mathcal{M}}(c_t))$.}
            
            \State{run $\widehat{\pi}^{c_t}$.}
        \EndFor{}
    \end{algorithmic}
\end{algorithm}

\begin{remark}
    In the following algorithms, 
    for a given context $c$ and state $s \in S_h$, the check whether $c \in \widehat{\mathcal{C}}^\beta(s)$ can be done in polynomial time in $|S|,|A|,H$ by computing the highest probability to visit $s$ by any policy on the dynamics $\widehat{P}^c$. Since the CMDP is layered, to compute that, we need $\widehat{P}^c$ to be defined only on $(s,a) \in S_l \times A$ for all $\ell < h$.
    In the following algorithm, $\mathbb{I}[c \in \widehat{\mathcal{C}}^\beta(s)]$ is an indicator function that given a context $c$ return $1$ if and only if $c \in \widehat{\mathcal{C}}^\beta(s)$. By the above, the computation time of that function can be done in $poly(|S|,|A|, H)$ time.
\end{remark}

\subsection{Analysis Outline}

We provide analysis for both $\ell_1$ and $\ell_2$ loss functions.
For both of them, we first bound the expected value difference caused by the dynamics approximation for every context-dependent policy, with high probability. See Sub-subsection~\ref{subsubsec:l-1-dynamics-error}, Lemma~\ref{lemma: expected error from dynamics l_1 UCDD} for the $\ell_1$ loss and Sub-subsection~\ref{subsubsec:l-2-dynamics-error}, Lemma~\ref{lemma: expected error from dynamics l_2 UCDD} for the $\ell_2$ loss.

Then, we bound the expected value difference caused by the rewards approximation for every context-dependent policy, with high probability. See Sub-subsection~\ref{subsubsec:l-1-rewards-error}, Lemma~\ref{lemma: expected error from rewards l_1 UCDD} for the $\ell_1$ loss and Sub-subsection~\ref{subsubsec:l-2-rewards-error}, Lemma~\ref{lemma: expected error from rewards l_2 UCDD} for the $\ell_2$ loss.

The next step is to combine both bounds to obtain a bound the expected value difference between the true model $\mathcal{M}(c)$ and $\widehat{\mathcal{M}}(c)$, for any context-dependent policy $\pi = (\pi_c)_{c \in \mathcal{C}}$, with high probability. 
See Lemma~\ref{lemma: expected gap for general pi l_1} for the $\ell_1$ loss and Lemma~\ref{lemma: expected gap for general pi l_2} for the $\ell_2$.

Using the latter bound, we derive a bound on the expected value difference between the optimal context-dependent policy $\pi^\star = (\pi^\star_c)_{c \in \mathcal{C}}$ and our approximated optimal policy $\widehat{\pi}^\star = (\widehat{\pi}^\star_c)_{c \in \mathcal{C}}$ with respect to the true model $\mathcal{M}(c)$, which holds with high probability. This establish our main result. See Theorem~\ref{thm: UCDD opt policy l_1} for the $\ell_1$ loss and~\ref{thm: UCDD opt policy l_2}) for the $\ell_2$ loss.

Lastly, we derive sample complexity bounds using known uniform convergence sample complexity bounds for the Pseudo dimension see Theorem~\ref{thm: pseudo dim}) and the fat-shattering dimension (see Theorem~\ref{thm: fat dim}). For the sample complexity analysis, see Sub-subsection~\ref{subsubsec:SC-L-1-UCDD} for the $\ell_1$ loss, and~\ref{subsubsec:SC-L-2-UCDD} for the $\ell_2$ loss.

\subsection{Analysis for the \texorpdfstring{$\ell_2$}{Lg} Loss}\label{subsec: analysis-UCDD l_2}

\subsubsection{Good Events}
For the analysis of the algorithm, we define the following good events.

\paragraph{Event $G_1$.}
Intuitively, it states
that the approximation of the probability that $c \in \widehat{\mathcal{C}}^{\beta}(s)$  is accurate for every state $s \in S$.

Formally, 
let $\widehat{p}_\beta(s)$ be the output of Algorithm \texttt{AGC} (see Algorithm~\ref{alg: AGC-UCDD}) for the state $s \in S$, and denote
$ p_\beta(s):= \mathbb{P}[c \in \widehat{\mathcal{C}}^{\beta}(s)]$.
For each layer $h\in[H-1]$ we define the event $G_1^h$ as 
$
    {G^h_1=\{|\widehat{p}_\beta({s_h}) - p_\beta({s_h})| \leq \gamma/4 \;\;\;\forall s_h\in S_h\}}
$ 
and define $G_1=\cap_{h\in[H-1]} G^h_1$.

The good event $G_1$ guarantees that for every layer $h \in [H-1]$ and state $s_h \in S_h$, if $\widehat{p}_\beta({s_h}) \geq \frac{3}{4}\gamma$ then $p_\beta({s_h})\geq \gamma/2$, which implies that $s_h \in \widehat{S}^{\gamma/2,\beta}_h$.
This implies that for every layer $h$ we sample only $(\gamma/2,\beta)$-good states for $\widehat{P}^c$.

More impotently, if $p_\beta({s_h}) \geq \gamma$ then $\widehat{p}_\beta({s_h}) \geq \frac{3}{4}\gamma$. 
Hence, we identify every $(\gamma,\beta)$-good state.

Thus, under the good event $G_1$, for every layer $h \in [H-1]$ the approximated set $\widetilde{S}^{\gamma,\beta}_h$ satisfies
\[
    \widehat{S}^{\gamma,\beta}_h
    \subseteq
    \widetilde{S}^{\gamma,\beta}_h
    \subseteq
    \widehat{S}^{\gamma/2,\beta}_h.
\]

The following lemma shows that for our parameters choice, $G_1$ holds with high probability.    
\begin{lemma}\label{lemma: UCDD l_2 G_1}
    For $\epsilon_2 = \gamma/4$ and $\delta_2 = \frac{\delta}{8 |S|}$, we have that $\mathbb{P}[G_1] \geq 1- {\delta}/{8}$.
\end{lemma}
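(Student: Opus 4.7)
The plan is to reduce the statement to a routine concentration-plus-union-bound argument. For a fixed state $s \in S$, the quantity $\widehat{p}_\beta(s)$ computed by Algorithm \texttt{AGC} (Algorithm~\ref{alg: AGC-UCDD}) is the empirical average of $m(\epsilon_2,\delta_2) = \lceil \ln(2/\delta_2)/(2\epsilon_2^2) \rceil$ i.i.d.\ Bernoulli indicators $\mathbb{I}[c_t \in \widehat{\mathcal{C}}^\beta(s)]$, whose common mean is exactly $p_\beta(s) = \mathbb{P}_{c \sim \mathcal{D}}[c \in \widehat{\mathcal{C}}^\beta(s)]$. I would emphasize that although $\widehat{\mathcal{C}}^\beta(s)$ depends on the previously-learned dynamics $\widehat{P}$, we condition on that approximation (built in earlier phases) so the set is fixed when the samples for state $s$ are drawn, and the indicator variables are indeed i.i.d.

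Next, I would apply the two-sided Hoeffding inequality to the bounded $[0,1]$ random variables. With the algorithm's choice of $m(\epsilon_2,\delta_2) = \lceil \ln(2/\delta_2)/(2\epsilon_2^2) \rceil$, Hoeffding yields
\[
\mathbb{P}\left[|\widehat{p}_\beta(s) - p_\beta(s)| > \epsilon_2\right] \le 2 \exp\left(-2 m(\epsilon_2,\delta_2)\, \epsilon_2^2\right) \le \delta_2 .
\]
Plugging in the lemma's parameter choices $\epsilon_2 = \gamma/4$ and $\delta_2 = \delta/(8|S|)$ gives, for each $s \in S$, a failure probability at most $\delta/(8|S|)$ for the deviation exceeding $\gamma/4$.

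Finally, I would take a union bound over all $s \in S$ (which across all layers has size $|S|$), obtaining
\[
\mathbb{P}[G_1] = \mathbb{P}\bigl[\forall s \in S:\ |\widehat{p}_\beta(s) - p_\beta(s)| \le \gamma/4\bigr] \ge 1 - |S| \cdot \frac{\delta}{8|S|} = 1 - \frac{\delta}{8} ,
\]
which is exactly the claim.

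There is no real obstacle here; the only subtlety worth flagging in the write-up is the independence argument, namely that the contexts sampled inside \texttt{AGC} for state $s$ are independent of each other (and independent of the contexts used to build $\widehat{P}$ from previous layers), so conditional on the previously learned dynamics the indicators form a genuine i.i.d.\ Bernoulli sample with mean $p_\beta(s)$. Everything else is a direct invocation of Hoeffding plus a union bound over $|S|$ states.
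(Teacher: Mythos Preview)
Your proposal is correct and matches the paper's approach exactly: the paper's proof is a one-line invocation of Hoeffding's inequality for each state followed by a union bound over $|S|$ states. Your write-up is in fact more careful than the paper's, since you explicitly note the conditioning on previously-learned dynamics that makes the indicators genuinely i.i.d., a subtlety the paper leaves implicit.
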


\begin{proof}
    For each $s \in S$ we have that $\widehat{p}_\beta(s)$ is  calculated over
    $m(\epsilon_2, \delta_2) =
    \Big\lceil 
    \frac{\ln{\frac{2}{\delta_2}}}{2 \epsilon_2^2}
    \Big\rceil$ examples. By Hoeffding's inequality combined with union bound, for $\epsilon_2 = \gamma/4$ and $\delta_2 = \frac{\delta}{8 |S|}$, we obtain that $\mathbb{P}[G_1] \geq 1- {\delta}/{8}$.  
\end{proof}

\paragraph{Sampling distributions.}
 Recall that during the algorithm, for every layer $h \in [H-1]$ we collect examples of $(c, s_h, a_h)$ for which $\widehat{p}_\beta(s_h) \geq \frac{3}{4}\gamma$, which under $G_1$ implies that $p_\beta(s_h) \geq \gamma/2$,
 context $c \in \widehat{C}^\beta (s_h)$ and actions $a_h \in A$.
 
 For every layer $h \in [H-1]$ and reachability parameters $\gamma$ and $\beta$ we define the \emph{target domain} we would like to collect examples from as
 \[
    \mathcal{X}^{\gamma, \beta}_h 
    =
    \{
        (c,s_h, a_h) :
        s_h \in \widehat{S}^{\gamma, \beta}_h, c \in \widehat{\mathcal{C}}^\beta (s_h), a_h \in A
    \},
 \]
 recalling that 
 $${\widehat{C}^\beta (s_h) = \{ c \in \mathcal{C} : s_h \text{ is } \beta-\text{reachable for } \widehat{P}^c\}}$$
 and 
 $${\widehat{S}^{\gamma, \beta}_h = \{s_h \in S_h : \mathbb{P}[c \in \widehat{\mathcal{C}}^\beta(s_h)] \geq \gamma\}}.$$
Meaning, we would like to collect sufficient number of examples of $(\gamma,\beta)$-good states, appropriate good context and action for each layer.

In practice,
we collect examples of states $s \in \widetilde{S}^{\gamma,\beta}_h$
which also contains states $s \in \widehat{S}^{\gamma/2,\beta}_h$. 
Under $G_1$ we have the guarantee that 
$\widehat{S}^{\gamma,\beta}_h \subseteq \widetilde{S}^{\gamma,\beta}_h \subseteq \widehat{S}^{\gamma/2,\beta}_h$.

Hence, we define the \emph{empirical domain}
 \[
    \widetilde{\mathcal{X}}^{\gamma, \beta}_h 
    =
    \{
        (c,s_h, a_h) :
        s_h \in \widetilde{S}^{\gamma, \beta}_h, c \in \widehat{\mathcal{C}}^\beta (s_h), a_h \in A
    \},
 \]
We remark that before learning layer $h$, we compute $\widetilde{S}^{\gamma, \beta}_h$ based on the previous layers approximation for the dynamics which are fixed, hence $\widetilde{\mathcal{X}}^{\gamma, \beta}_h$ is  fixed when learning layer $h$.

We also remark that under $G_1$ it holds, since $\widehat{S}^{\gamma,\beta}_h \subseteq \widetilde{S}^{\gamma,\beta}_h \subseteq \widehat{S}^{\gamma/2,\beta}_h$ it also holds that
\[
    \mathcal{X}^{\gamma, \beta}_h
    \subseteq 
    \widetilde{\mathcal{X}}^{\gamma, \beta}_h 
    \subseteq
    \mathcal{X}^{\gamma/2, \beta}_h.
\]


 We consider the marginal distributions of our observations, that sampled from $\widetilde{\mathcal{X}}^{\gamma, \beta}_h$.
 
 For the rewards denote by $\widetilde{\mathcal{D}}^R_h$ the distribution over the collected examples ${((c,s,a),r) \in \widetilde{\mathcal{X}}^{\gamma,\beta}_h \times [0,1]}$, for each layer $h \in [H-1]$. It holds that
    \begin{align*}
      \widetilde{\mathcal{D}}^R_h ((c,s_h,a_h), r_h) &=
      \mathbb{P}[((c, s_h, a_h), r_h) \in Sample^R(h)| 
        (c, s_h, a_h) \in  \widetilde{\mathcal{X}}^{\gamma, \beta}_h]\\
      \propto&
        \mathbb{P}[c | c \in \widehat{\mathcal{C}}^{\beta}(s_h)] 
        \cdot
        q_h(s_h, a_h| \widehat{\pi}^c_{s_h}, P^c) 
        \cdot 
        \mathbb{P}[R^c(s_h, a_h) = r_h |c,s_h,a_h],
    \end{align*}
   where $\propto$ implies that we normalize to sum to $1$.
   
   Since under $G_1$ we have that $\mathcal{X}^{\gamma,\beta}_h \subseteq \widetilde{\mathcal{X}}^{\gamma,\beta}_h$, $\widetilde{\mathcal{D}}^R_h$ induces a marginal distribution over  $\mathcal{X}^{\gamma,\beta}_h \times [0,1]$, which we denote by $\mathcal{D}^R_h$.
   Clearly, it holds that
    \begin{align*}
      \mathcal{D}^R_h ((c,s_h,a_h), r_h) &=
      \mathbb{P}[((c, s_h, a_h), r_h) \in Sample^R(h)| 
        (c, s_h, a_h) \in  \mathcal{X}^{\gamma, \beta}_h]\\
      \propto&
        \mathbb{P}[c | c \in \widehat{\mathcal{C}}^{\beta}(s_h)] 
        \cdot
        q_h(s_h, a_h| \widehat{\pi}^c_{s_h}, P^c) 
        \cdot 
        \mathbb{P}[R^c(s_h, a_h) = r_h |c,s_h,a_h],
    \end{align*}
   which is the desired marginal distribution over our target domain.

  Similarly, for the next state we have,
    \begin{align*}
    \widetilde{\mathcal{D}}^P_h& ((c,s_h,a_h,s') , \mathbb{I}[s_{h+1}=s'])\\ 
    &=
    \mathbb{P}[((c, s_h, a_h,s'),\mathbb{I}[s_{h+1}=s']))\in Sample^P(h)|(c, s_h, a_h, s') \in (  \widetilde{\mathcal{X}}^{\gamma, \beta}_h \times S_{h+1})]\\
     &\propto
        \mathbb{P}[c | c \in \widehat{\mathcal{C}}^{\beta}(s_h)]
        \cdot
        q_h(s_h, a_h| \widehat{\pi}^c_{s_h}, P^c)
        \cdot
        P^c(s'|s_h,a_h),
    \end{align*}
    and we denote $\mathcal{D}^P_h$ the induced  marginal distribution over  $(\mathcal{X}^{\gamma,\beta}_h \times S_{h+1}) \times [0,1]$.

\begin{remark}
    When it is clear from the context, we use $\mathcal{D}^P_h$ and $\widetilde{\mathcal{D}}^P_h$ to also denote the induced distribution over $(c, s_h, a_h, s_{h+1})\in\mathcal{X}^{\gamma,\beta}_h \times S_{h+1}$ and $(c, s_h, a_h, s_{h+1})\in \widetilde{\mathcal{X}}^{\gamma,\beta}_h \times {S_{h+1}}$, respectively, and drop the indicator bit.
    Similarly for $\mathcal{D}^R_h$ and $\widetilde{\mathcal{D}}^R_h$ we have ${(c, s_h, a_h)\in \mathcal{X}^{\gamma,\beta}_h}$ and ${(c, s_h, a_h)\in\widetilde{\mathcal{X}}^{\gamma,\beta}_h}$.
\end{remark}

\paragraph{Event $G_2$.}
Intuitively it states that sufficient number of examples
have been collected for every layer $h \in [H-1]$.

Formally, let $G_2^{h}$ be the event that
\begin{enumerate}
    \item At least $ \max\{N_R(\mathcal{F}^R_h ,\epsilon_R, \delta_1/2), N_P(\mathcal{F}^P_h ,\epsilon_P, \delta_1/2)\}$ examples of context, state and action from the target domain, i.e., $(c,s,a) \in \mathcal{X}^{\gamma,\beta}_h$, have been collected for layer $h \in [H-1]$ .
    \item At least $ 2\max\{N_R(\mathcal{F}^R_h ,\epsilon_R, \delta_1/2), N_P(\mathcal{F}^P_h ,\epsilon_P, \delta_1/2)\}$ examples of context, state and action from the empirical domain, i.e., $(c,s,a) \in \widetilde{\mathcal{X}}^{\gamma,\beta}_h$, have been collected for layer $h \in [H-1]$ .
\end{enumerate}
Let $G_2$ be the event $\cap_{h\in[H-1]}G^h_2$.


\paragraph{Event $G_3$.}
Intuitively states that the ERM guarantees for the approximation of the dynamics hold.
Formally, let $G_3^h$ denote the following event (for the $\ell_2$ loss),
    \begin{align*}
        \mathbb{E}_{(c, s_h, a_h, s_{h+1})\sim \mathcal{D}^P_h}
        [(f^P_h(c, s_h, a_h, s_{h+1}) - P^c(s_{h+1}| s_h, a_h))^2]
        \leq 
        \epsilon_P ,
    \end{align*}    
and
    \begin{align*}
        \mathbb{E}_{(c, s_h, a_h, s_{h+1})\sim\widetilde{ \mathcal{D}}^P_h}
        [(f^P_h(c, s_h, a_h, s_{h+1}) - P^c(s_{h+1}| s_h, a_h))^2]
        \leq 
        \epsilon_P .
    \end{align*}

Recall that we assume realizability for each layer. 
Define $G_3 = \cap_{h \in [H-1]}G^h_3$.

The following lemma shows that if $G_1$ and $G^h_2$ holds, then $G^h_3$ holds with high probability. (We later show that $G_2$ holds with high probability.)
\begin{lemma}\label{lemma: prob to G^h_3 given conditions l_2}
For any $h\in [H-1]$ it holds that $\mathbb{P}[G_3^h | G_1, G^h_2 ]\geq 1 - \delta_1$.
\end{lemma}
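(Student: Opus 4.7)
The plan is to apply the standard ERM sample-complexity guarantee twice—once for each of the two distributions $\mathcal{D}^P_h$ and $\widetilde{\mathcal{D}}^P_h$—and union-bound. Two structural facts drive the argument: (i) by the layer dynamics realizability assumption there exists $f^\star_h\in\mathcal{F}^P_h$ with $f^\star_h(c,s,a,s')=P^c(s'|s,a)$, so $\alpha^2_2(\mathcal{F}^P_h)=0$; (ii) under $G_1$ the empirical domain is sandwiched as $\mathcal{X}^{\gamma,\beta}_h\subseteq \widetilde{\mathcal{X}}^{\gamma,\beta}_h\subseteq \mathcal{X}^{\gamma/2,\beta}_h$, so the full sample $Sample^P(h)$ is a sequence of i.i.d.\ draws from $\widetilde{\mathcal{D}}^P_h$ and its sub-sample consisting of those examples whose $(c,s_h,a_h)$ happens to lie in $\mathcal{X}^{\gamma,\beta}_h$ is i.i.d.\ from $\mathcal{D}^P_h$.

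I would first verify (ii) directly from the sampling procedure: in each round the algorithm (a) picks $(s_h,a_h)$ uniformly from $\widetilde{S}^{\gamma,\beta}_h\times A$, (b) observes a fresh $c_t\sim \mathcal{D}$, (c) runs the planning-based policy $\widehat{\pi}^{c_t}_{s_h}$ only if $\widehat{p}^{c_t}_{s_h}\geq\beta$ (i.e.\ iff $c_t\in \widehat{\mathcal{C}}^\beta(s_h)$), and (d) records $(c_t,s_h,a_h,s_{h+1})$ iff the trajectory visits $(s_h,a_h)$. Conditioning on ``an example is recorded'' yields exactly the distribution $\widetilde{\mathcal{D}}^P_h$ as defined in the excerpt, and further conditioning on $s_h\in \widehat{S}^{\gamma,\beta}_h$ (which under $G_1$ is a well-defined deterministic sub-event of the sampling process, since $\widetilde{S}^{\gamma,\beta}_h$ is fixed before layer $h$) yields $\mathcal{D}^P_h$.

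Next I would invoke $G_2^h$, which by definition gives at least $N_P(\mathcal{F}^P_h,\epsilon_P,\delta_1/2)$ examples in the $\mathcal{X}^{\gamma,\beta}_h$-subsample and at least $2N_P(\mathcal{F}^P_h,\epsilon_P,\delta_1/2)$ examples in the full sample. By realizability the ERM $f^P_h$ attains zero empirical squared loss on $Sample^P(h)$; since the per-example loss is non-negative, it also attains zero empirical loss on the sub-sample. Hence $f^P_h$ coincides with the ERM output on each of the two i.i.d.\ samples, and applying the ERM guarantee separately at accuracy $\epsilon_P$ and confidence $\delta_1/2$ yields $\mathbb{E}_{\widetilde{\mathcal{D}}^P_h}[(f^P_h-P)^2]\leq \epsilon_P$ and $\mathbb{E}_{\mathcal{D}^P_h}[(f^P_h-P)^2]\leq \epsilon_P$, each with probability at least $1-\delta_1/2$. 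A union bound yields $\mathbb{P}[G^h_3\mid G_1,G^h_2]\geq 1-\delta_1$.

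The only genuinely subtle point is the i.i.d.\ claim for the $\mathcal{X}^{\gamma,\beta}_h$-sub-sample: the inclusion $\mathcal{X}^{\gamma,\beta}_h\subseteq\widetilde{\mathcal{X}}^{\gamma,\beta}_h$ depends on the randomness used to build $\widetilde{S}^{\gamma,\beta}_h$, but once $G_1$ is conditioned on, this inclusion is deterministic and the per-round sampling mechanism is memoryless, so the sub-sample is i.i.d.; everything else is the standard realizable-case ERM bound applied twice.
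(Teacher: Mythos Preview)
Your high-level plan---apply the ERM guarantee once for each of $\mathcal{D}^P_h$ and $\widetilde{\mathcal{D}}^P_h$ and union-bound---matches the paper exactly. However, the specific mechanism you invoke to justify applying the ERM guarantee to the sub-sample is incorrect.

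You write: ``By realizability the ERM $f^P_h$ attains zero empirical squared loss on $Sample^P(h)$; since the per-example loss is non-negative, it also attains zero empirical loss on the sub-sample. Hence $f^P_h$ coincides with the ERM output on each of the two i.i.d.\ samples.'' This is false. The labels in $Sample^P(h)$ are the random indicators $\mathbb{I}[s_{h+1}=s']$, not the deterministic values $P^c(s'\mid s_h,a_h)$. Realizability says $f^\star_h(c,s,a,s')=P^c(s'\mid s,a)$, but $P^c(s'\mid s,a)$ is generally strictly between $0$ and $1$, so $(f^\star_h(c,s,a,s')-\mathbb{I}[s_{h+1}=s'])^2>0$ on typical examples. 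No function in $\mathcal{F}^P_h$ attains zero empirical squared loss, and in particular $f^P_h$ need not be the empirical minimizer on the $\mathcal{X}^{\gamma,\beta}_h$-sub-sample. Your bridge from the full-sample ERM to the sub-sample guarantee therefore collapses.

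The paper's proof does not attempt this bridge; it simply invokes the ERM/uniform-convergence guarantee as a black box separately on each of the two i.i.d.\ samples (the full sample of size $\geq 2N_P$ from $\widetilde{\mathcal{D}}^P_h$ and the sub-sample of size $\geq N_P$ from $\mathcal{D}^P_h$, both furnished by $G^h_2$), each at confidence $\delta_1/2$, and then union-bounds. The point is that the $N_P$ guarantee is a \emph{uniform convergence} statement over all of $\mathcal{F}^P_h$, so it controls the excess risk of \emph{any} predictor in the class---including $f^P_h$, regardless of which sample it was trained on---once the relevant sample is large enough. You should replace the zero-loss argument with a direct appeal to uniform convergence on the sub-sample.
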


\begin{proof}
    Under $G_1$ and $G^h_2$ we have collected sufficient number of examples from the domain $\mathcal{X}^{\gamma,\beta}_h \times S_{h+1}$ 
    to approximate the transition probability function of layer $h$, for the accuracy parameter $\epsilon_P$ and confidence parameter $\delta_1/2$.
    By the ERM guarantees (see~\ref{par: ERM per layer appexdix}), if
    sufficient number of examples have been collected, 
    then 
    the ERM output $f^P_h$ satisfies that
    ${\mathbb{E}_{\mathcal{D}^P_h}
        [(f^P_h(c, s_h, a_h, s_{h+1}) - P^c(s_{h+1}| s_h, a_h))^2]
        \leq 
    \epsilon_P }$, with probability at least $1-\delta_1/2$.
    Similarly for $\widetilde{\mathcal{X}}^{\gamma,\beta}_h \times S_{h+1}$ it holds that 
    ${\mathbb{E}_{\widetilde{\mathcal{D}}^P_h}
        [(f^P_h(c, s_h, a_h, s_{h+1}) - P^c(s_{h+1}| s_h, a_h))^2]
        \leq 
    \epsilon_P }$, with probability at least $1-\delta_1/2$.
    Hence the lemma follows by union bound.
\end{proof}

The following lemma shows, inductively,  that if for all the previous layers $i < h$ the good events $G_1$, $G^i_2$, $G^i_3$ hold, then $G^h_2$ holds with high probability, for the current layer $h$.   
\begin{lemma}\label{lemma: prob to G^h_2 given conditions l_2}
    For every layer $h\in [H-1]$ it holds that          
    $$\mathbb{P}[G_2^h | G_1, G^i_2 ,G_3^i \;\forall i\in[h-1] ]
    \geq 1 - (\delta_1 + \frac{\epsilon_P}{\rho^2}|S|^2|A|).$$
\end{lemma}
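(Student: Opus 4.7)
The plan is to establish $G_2^h$ by separately proving two Chernoff-type concentration bounds, one for the count of samples landing in the target domain $\mathcal{X}^{\gamma,\beta}_h$ and one for the count landing in the empirical domain $\widetilde{\mathcal{X}}^{\gamma,\beta}_h$. The main obstacle is that, although the algorithm only dispatches a trajectory when $\widehat{p}^c_{s_h}\ge\beta$, sample collection is actually governed by the \emph{true} dynamics $P^c$, which we have not yet controlled pointwise. The conditioning events $G_3^i$ ($i<h$) give only an average $\ell_2$ error $\epsilon_P$ per previous layer; turning this into a pointwise bound is exactly what forces the $\epsilon_P|S|^2|A|/\rho^2$ slack in the claim.

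I will first convert the expected error into a high-probability pointwise bound via Markov's inequality: for each prior layer $i<h$ and tuple $(c,s,a,s')$ drawn from $\widetilde{\mathcal{D}}^P_i$, $\mathbb{P}[|f^P_i(c,s,a,s')-P^c(s'|s,a)|\ge \rho]\le \epsilon_P/\rho^2$. A union bound over all (at most $|S|^2|A|$, absorbing the $H$ factor) tuples yields the advertised failure term. On the complement, $\widehat{P}^c$ is uniformly $\rho$-close to $P^c$ on layers below $h$, and the occupancy-measure stability argument of Lemma~\ref{lemma: occ mesure bound - basic} gives $|q_h(s_h|\widehat{\pi}^c_{s_h},P^c)-q_h(s_h|\widehat{\pi}^c_{s_h},\widehat{P}^c)|\le \rho h$. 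For the parameter choices made in Algorithm~\ref{alg: EXPLORE-UCDD}, $\rho H\le \beta/2$, so whenever the algorithm runs the policy the true reachability is at least $\beta/2$. In parallel, $G_1$ guarantees $\widehat{S}^{\gamma,\beta}_h\subseteq\widetilde{S}^{\gamma,\beta}_h\subseteq\widehat{S}^{\gamma/2,\beta}_h$, so every state $s_h\in\widetilde{S}^{\gamma,\beta}_h$ has $\mathbb{P}_c[c\in\widehat{\mathcal{C}}^\beta(s_h)]\ge\gamma/2$.

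Combining these ingredients, in a single iteration of phase $h$ the probability of appending a tuple to $Sample^R(h)$ (hence to $Sample^P(h)$) is at least $(\gamma/2)\cdot(\beta/2)=\gamma\beta/4$, and the probability that the tuple actually lies in $\mathcal{X}^{\gamma,\beta}_h$ (i.e.\ has $s_h\in\widehat{S}^{\gamma,\beta}_h$) is at least $\gamma\beta/(2|S|)$, obtained by summing over the $(s,a)$ draws falling in $\widehat{S}^{\gamma,\beta}_h\times A$. Since $T_h=\lceil \frac{8|S|}{\gamma\beta}(\ln(1/\delta_1)+2\max\{N_P,N_R\})\rceil$, the expected counts for both domains exceed the thresholds in $G_2^h$ by a constant multiplicative factor. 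A multiplicative Chernoff bound, in the form of Lemma~\ref{lemma: chernoff case UCFD}, then shows that both threshold events hold simultaneously with probability at least $1-\delta_1$ (splitting the confidence across the two parts). Adding this to the Markov failure probability $\epsilon_P|S|^2|A|/\rho^2$ yields $\mathbb{P}[G_2^h\mid G_1,\{G_2^i,G_3^i\}_{i<h}]\ge 1-(\delta_1+\epsilon_P|S|^2|A|/\rho^2)$, as claimed.
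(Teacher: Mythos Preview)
Your high-level structure matches the paper's proof exactly: establish a lower bound on the true-dynamics reachability $q_h(s_h\mid \widehat{\pi}^c_{s_h},P^c)$ in terms of $q_h(s_h\mid \widehat{\pi}^c_{s_h},\widehat{P}^c)$ with failure probability $\epsilon_P|S|^2|A|/\rho^2$, deduce per-iteration success probabilities of order $\gamma\beta/(2|S|)$ and $\gamma\beta/4$ for the two domains, and finish with a multiplicative Chernoff bound against $T_h$.

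There is, however, a real gap in the step where you claim ``On the complement, $\widehat{P}^c$ is uniformly $\rho$-close to $P^c$ on layers below $h$'' and then invoke Lemma~\ref{lemma: occ mesure bound - basic}. This is false as stated: for $(c,s_k,a_k)\notin\widetilde{\mathcal{X}}^{\gamma,\beta}_k$ the approximated dynamics $\widehat{P}^c$ sends all mass to $s_{sink}$, while $P^c$ does not, so $\|\widehat{P}^c(\cdot\mid s_k,a_k)-P^c(\cdot\mid s_k,a_k)\|_1$ can be as large as $1$. The Markov-plus-union-bound step only controls the error \emph{conditionally} on $(c,s_k,a_k)\in\widetilde{\mathcal{X}}^{\gamma,\beta}_k$; it says nothing outside this set. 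Consequently Lemma~\ref{lemma: occ mesure bound - basic}, which requires $\|P-\widetilde{P}\|_1\le\gamma$ for \emph{all} $(s_k,a_k)$, does not apply, and a trajectory under $P^c$ may well visit such ``bad'' states.

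The paper closes this gap in Lemma~\ref{lemma: for running time l_2} by introducing an intermediate dynamics $\widetilde{P}^c$ that agrees with $P^c$ on tuples in $\widetilde{\mathcal{X}}^{\gamma,\beta}_k$ and agrees with $\widehat{P}^c$ (i.e.\ sends to $s_{sink}$) otherwise. Two observations then suffice: (i) $q_k(s_k\mid\pi,P^c)\ge q_k(s_k\mid\pi,\widetilde{P}^c)$ holds \emph{deterministically} for every non-sink $s_k$, because $\widetilde{P}^c$ only leaks mass to the sink; and (ii) $\widetilde{P}^c$ and $\widehat{P}^c$ are now close \emph{everywhere} (identical on bad tuples, and close on good tuples by your Markov argument, after accounting for the normalization in $\widehat{P}^c$, which is why the bound is $\tfrac{4\rho|S|}{1-\rho^2|S|^2}$ rather than $\rho$). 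Only the one-sided inequality $q_h(s_h\mid\widehat{\pi}^c_{s_h},P^c)\ge q_h(s_h\mid\widehat{\pi}^c_{s_h},\widehat{P}^c)-\tfrac{4\rho|S|}{1-\rho^2|S|^2}h$ is needed, and this intermediate construction delivers it. Once you have this, the rest of your argument goes through essentially verbatim.
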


\begin{proof}
    We prove the lemma using induction over the horizon $h$.
        
    \textbf{Base case.} $h=0$.
    
    By definition, the start state $s_0$ is $(1,1)$-good, which implies that for $s_0$ we collect samples in a deterministic manner. Thus, it holds that $\mathbb{P}[G^0_2] = 1$.
        
    \textbf{Induction step.} 
    Assume the lemma holds for all $k < h$ and we show it holds for $h$.\\
    Recall we collect examples of states $s_h \in S_h$ for which
    $\widehat{p}_\beta(s_h) \geq \frac{3}{4}\gamma$.
    Under $G_1$, if $\widehat{p}_\beta(s_h) \geq \frac{3}{4}\gamma$ then
    $
        {\mathbb{P}[c \in \widehat{\mathcal{C}}^{\beta}(s_h) 
        ]
        \geq
        \gamma/2}
    $. 
    In addition, if     
    $
        {\mathbb{P}[c \in \widehat{\mathcal{C}}^{\beta}(s_h) 
        ]
        \geq
        \gamma}
    $ then $\widehat{p}_\beta(s_h) \geq \frac{3}{4}\gamma$.
    
    Thus, the set $\widetilde{S}^{\gamma,\beta}_h$ of approximately $(\gamma,\beta)$-good state for $\widehat{P}^c$ satisfies that  $\widehat{S}^{\gamma,\beta}_h \subseteq \widetilde{S}^{\gamma,\beta}_h \subseteq \widehat{S}^{\gamma/2,\beta}_h$.
    
    Given $G_1, G^k_2 ,G_3^k \;\forall k\in[h-1]$ hold, by Lemma~\ref{lemma: for running time l_2}, for $\beta$ and $\rho$ such that $\beta \geq 2H \frac{4 \rho |S|}{ 1 - \rho^2 |S|^2}$ the following holds. 
    \begin{align*}
        \mathbb{P}_c \left[
        \underbrace{\forall k \in [h], s_k \in S_k. \;\;
        q_k(s_k|\pi_c, P^c) 
        \geq
        q_k(s_k|\pi_c, \widehat{P}^c) -
        \frac{4 \rho|S|}{1 - \rho^2|S|^2}k}_{(\star)} 
        \right]
        &\geq 
        1 - |A|\;\sum_{k=0}^{h-1}\frac{\epsilon_P}{\rho^2}|S_k||S_{k+1}|
        \\
        &\geq
        1 - |A||S|^2\frac{\epsilon_P }{\rho^2}.
    \end{align*}
        
    \begin{claim}
        Assume inequality $(\star)$ holds. Then the probability to collect one example of ${(c, s_{h}, a_{h}) \in  \mathcal{X}^{\gamma, \beta}_{h}}$ is at least $\frac{1}{|S|}\gamma (\beta - \frac{4 \rho |S|}{ 1 - \rho^2 |S|^2} h) \geq \frac{1}{|S|}\cdot \gamma \cdot \beta /2$.
    \end{claim}
    \begin{proof}
        Consider the process of collecting a sample, as described in Algorithm~\ref{alg: EXPLOIT-UCDD}:
        \begin{enumerate}
            \item The algorithm/agent chooses uniformly at random $(s,a) \in \widetilde{S}^{\gamma, \beta}_h \times A$.
            Under the good event $G_1$ we have that 
            ${\widehat{S}^{\gamma, \beta}_h} \subseteq \widetilde{S}^{\gamma, \beta}_h$.
            Hence, the probability to choose 
            $(s,a) \in \widehat{S}^{\gamma, \beta}_h \times A$
            is at least $\frac{1}{|S|}$.
            \item A context $c \sim \mathcal{D}$ is sampled.
            By $\widehat{S}^{\gamma, \beta}_h$ definition, the probability that $c \in \widehat{\mathcal{C}}^\beta(s)$ is at least $\gamma$.
            \begin{itemize}
                \item  If $c \in \widehat{\mathcal{C}}^\beta(s)$, the agent plays $\widehat{\pi}^c_s$ to generate a trajectory where the dynamics is $P^c$. By $(\star)$ and $\widehat{\mathcal{C}}^\beta(s)$ definition, the probability to observe $(s, a)$ in a trajectory generated using $\widehat{\pi}^c_s$ where the dynamics is $P^c$ is $q_h(s|\widehat{\pi}^c_s, P^c) \geq \beta - \frac{4 \rho |S|}{ 1 - \rho^2 |S|^2} h \geq \beta/2$.
                \item Otherwise quite iteration.
            \end{itemize}
        \end{enumerate}
        Overall, 
        the probability to collect one example of a triplet $(c, s, a) \in  \mathcal{X}^{\gamma, \beta}_{h}$ is at least ${\frac{1}{|S|}\cdot \gamma \cdot (\beta - \frac{4 \rho |S|}{ 1 - \rho^2 |S|^2} h)} \geq \frac{1}{|S|}\cdot \gamma \cdot \frac{\beta}{2}$ 
        (since $\beta \geq 2H \frac{4 \rho |S|}{ 1 - \rho^2 |S|^2}$).
    \end{proof}
        
        
      \begin{claim}
        Assume inequality $(\star)$ holds. Then the probability to collect one example of ${(c, s_{h}, a_{h}) \in  \widetilde{\mathcal{X}}^{\gamma, \beta}_{h}}$ is at least $\frac{\gamma}{2}(\beta - \frac{4 \rho |S|}{ 1 - \rho^2 |S|^2} h) \geq  \gamma\cdot \beta /4$.
    \end{claim}
    \begin{proof}
        Consider the process of collecting a sample, as described in Algorithm~\ref{alg: EXPLOIT-UCDD}:
        \begin{enumerate}
            \item The algorithm/agent chooses uniformly at random $(s,a) \in \widetilde{S}^{\gamma, \beta}_h \times A$.
            Under the good event $G_1$ we have that 
            $\widetilde{S}^{\gamma, \beta}_h \subseteq {\widehat{S}^{\gamma/2, \beta}_h}$.
            \item A context $c \sim \mathcal{D}$ is sampled by the nature.
            By $\widehat{S}^{\gamma/2, \beta}_h$ definition, the probability to observe a context $c \in \widehat{\mathcal{C}}^\beta(s)$ is at least $\gamma/2$.
            \begin{itemize}
                \item  If $c \in \widehat{\mathcal{C}}^\beta(s)$, the agent plays $\widehat{\pi}^c_s$ to generate a trajectory where the dynamics is $P^c$. By $(\star)$ and $\widehat{\mathcal{C}}^\beta(s)$ definition, the probability to observe $(s, a)$ in a trajectory generated using $\widehat{\pi}^c_s$ where the dynamics is $P^c$ is $q_h(s|\widehat{\pi}^c_s, P^c) \geq \beta - \frac{4 \rho |S|}{ 1 - \rho^2 |S|^2} h \geq \beta/2$.
                \item Otherwise quite iteration.
            \end{itemize}
        \end{enumerate}
        Overall, 
        the probability to collect one sample of some triplet $(c, s, a) \in \widetilde{\mathcal{X}}^{\gamma, \beta}_{h}$ is at least ${\gamma/2 \cdot (\beta - \frac{4 \rho |S|}{ 1 - \rho^2 |S|^2} h)} \geq \gamma \cdot \beta/4$ 
        (since $\beta \geq 2H \frac{4 \rho |S|}{ 1 - \rho^2 |S|^2}$).
    \end{proof}    
        
    The above claims implies that if $(\star)$ holds, in expectation, the agent needs to experience at most 
   $ \frac{2|S|}{\gamma \cdot \beta}$    
    episodes to collect one example from $ \mathcal{X}^{\gamma, \beta}_{h}$.
    In addition, in expectation, the agent needs to experience at most $\frac{4}{\gamma \cdot \beta}$ 
    episodes to collect one example from 
    $ \widetilde{\mathcal{X}}^{\gamma, \beta}_{h}$.
    
    Since under $G_1$ we have that    
    $ 
        \mathcal{X}^{\gamma, \beta}_{h}
        \subseteq 
        \widetilde{\mathcal{X}}^{\gamma, \beta}_{h}
        \subseteq
        \mathcal{X}^{\gamma/2, \beta}_{h}
    $,
    using multiplicative Chernoff bound we obtain that with probability at least $1 - \delta_1$ after experiencing  
    \[
        T_h =
        \left\lceil 
            \frac{8 |S|}{\gamma \cdot \beta}
            \left(\ln\frac{1}{\delta_1}
            +
            2\max\{N_P(\mathcal{F}^P_h ,\epsilon_P, \delta_1/2), N_R(\mathcal{F}^R_h ,\epsilon_R, \delta_1/2)\}\right)
            \right\rceil
    \]
    episodes, the agent will collect at least $\max\{N_P(\mathcal{F}^P_h ,\epsilon_P, \delta_1/2)), N_R(\mathcal{F}^R_h ,\epsilon_R, \delta_1/2))\}$ examples from $\mathcal{X}^{\gamma,\beta}_h$ and 
    $2\max\{N_P(\mathcal{F}^P_h ,\epsilon_P, \delta_1/2)), N_R(\mathcal{F}^R_h ,\epsilon_R, \delta_1/2))\}$ examples from $\widetilde{\mathcal{X}}^{\gamma,\beta}_h$.
    
    Recall that $T_h$ is exactly the number of episodes we run in Algorithm~\ref{alg: EXPLOIT-UCDD} when learning layer $h$.
    Hence, using union bound we obtain that
    \[
        \mathbb{P}[G_2^{h} | G_1, G^i_2 ,G_3^i \;\forall i\in[h-1] ]
        \geq 1 - (\delta_1  +  |A||S|^2\frac{\epsilon_P }{\rho^2}).
    \]    
\end{proof}

 The following lemma shows that given $G_1$ holds, $G_2$ and $G_3$ holds with high probability.   
\begin{lemma}\label{lemma: UCDD  l_2 G_2 ,G_3 given G_1}
    The following holds.
    \[
        \mathbb{P}[G_2 \cap G_3 | G_1] 
        \geq 1 - (2\delta_1 H + \frac{\epsilon_P}{\rho^2}|S|^2|A|H).
    \]
\end{lemma}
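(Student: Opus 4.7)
The plan is a layer-by-layer induction combined with a telescoping union bound. Write $E_h := G_2^h \cap G_3^h$ for each layer $h \in [H-1]$, so that $G_2 \cap G_3 = \bigcap_{h \in [H-1]} E_h$. I will bound $\mathbb{P}[E_h \mid G_1, E_1, \ldots, E_{h-1}]$ for each $h$ using the two preceding lemmas, and then aggregate.

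First, by the chain rule,
\begin{equation*}
\mathbb{P}\bigl[\textstyle\bigcap_{h}E_h \,\big|\, G_1\bigr]
= \prod_{h=1}^{H-1}\mathbb{P}\bigl[E_h \,\big|\, G_1, E_1,\ldots,E_{h-1}\bigr].
\end{equation*}
For a single layer $h$, decompose further:
\begin{equation*}
\mathbb{P}[E_h \mid G_1, E_1,\ldots,E_{h-1}]
= \mathbb{P}[G_3^h \mid G_1, G_2^h, E_1,\ldots,E_{h-1}] \cdot \mathbb{P}[G_2^h \mid G_1, E_1,\ldots,E_{h-1}].
\end{equation*}
By Lemma~\ref{lemma: prob to G^h_2 given conditions l_2}, the second factor is at least $1-(\delta_1 + \tfrac{\epsilon_P}{\rho^2}|S|^2|A|)$. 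For the first factor, Lemma~\ref{lemma: prob to G^h_3 given conditions l_2} gives $\mathbb{P}[G_3^h \mid G_1, G_2^h] \geq 1-\delta_1$; since $G_3^h$ is the ERM guarantee on the layer-$h$ sample, conditioning additionally on past-layer good events $E_1,\ldots,E_{h-1}$ only restricts the sampling distribution to its intended form (those past events affect only how earlier layers were constructed) and does not weaken the ERM tail bound, so the same $1-\delta_1$ lower bound applies.

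Multiplying the two factors and using $(1-a)(1-b)\geq 1-a-b$, each layer contributes at least $1-\bigl(2\delta_1 + \tfrac{\epsilon_P}{\rho^2}|S|^2|A|\bigr)$. Finally, applying the elementary inequality $\prod_{h}(1-x_h) \geq 1-\sum_h x_h$ across the $H-1 \leq H$ layers yields
\begin{equation*}
\mathbb{P}\bigl[G_2 \cap G_3 \,\big|\, G_1\bigr]
\geq 1 - \Bigl(2\delta_1 H + \frac{\epsilon_P}{\rho^2}|S|^2|A|H\Bigr),
\end{equation*}
as claimed. The only subtle step is the justification that conditioning on past good events does not inflate the failure probability of $G_3^h$ beyond $\delta_1$; the rest is mechanical chain-rule bookkeeping.
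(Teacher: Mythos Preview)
Your proposal is correct and follows essentially the same approach as the paper. The paper organizes the same argument via a ``first failure layer'' random variable $X = \min\{k : \overline{G}_2^k \cup \overline{G}_3^k\}$ and bounds $\mathbb{P}[X=h\mid G_1]$ using Bayes' rule and the same two lemmas, then sums over $h$; your chain-rule-plus-product-inequality bookkeeping is an equivalent (and arguably cleaner) way to reach the identical per-layer bound $2\delta_1 + \tfrac{\epsilon_P}{\rho^2}|S|^2|A|$ and aggregate it. The paper likewise handles your flagged subtle step by noting that $\overline{G}_3^h$ depends only on $G_1$ and $G_2^h$, so the extra conditioning on past-layer events is harmless.
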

    
\begin{proof}
    Assume the good event $G_1$ holds.
    Recall that $G_2 = \cap_{h \in [H-1]}G^h_2$ and $G_3 = \cap_{h \in [H-1]}G^h_3$.
        
    Let $X$ be a random variable with support $[H-1]$ that satisfies 
    \begin{align*}
        X = \min_{k \in [H-1]}\{\overline{G}^k_2 \cup \overline{G}^k_3 \text{ holds }\},
    \end{align*}
    and otherwise $X=\bot$, meaning if $G_2$ and $G_3$ hold.
        
    In words, $X$ is the first layer in which at least one of the good events $G^h_2$ or $G^h_3$ does not hold.
        
    By $X$ definition and Bayes rule (i.e., $\mathbb{P}[A \cap B] = \mathbb{P}[A|B]\cdot \mathbb{P}[B]$) we have
    \begingroup
    \allowdisplaybreaks
    \begin{align*}
        \forall h \in [H].\;\; 
        \mathbb{P}[X = h|G_1]
        &=
        \mathbb{P}[(\overline{G}^h_2 \cup \overline{G}^h_3)\cap (\cap_{
        k \in [h-1]}G^k_2 \cap G^k_3) | G_1]
        \\
        \tag{Base rule}
        &=
        \mathbb{P}[(\overline{G}^h_2 \cup \overline{G}^h_3) | G_1 , (\cap_{
        k \in [h-1]}G^k_2 \cap G^k_3)]
        \cdot
        \underbrace{\mathbb{P}[ (\cap_{ k \in [h-1]}G^k_2 \cap G^k_3)|G_1]}_{\leq 1}
        \\
        &\leq
        \mathbb{P}[(\overline{G}^h_2 \cup \overline{G}^h_3) | G_1 , (\cap_{
        k \in [h-1]}G^k_2 \cap G^k_3)]
        \\
        \tag{Union of disjoint events}
        &=
        \underbrace{\mathbb{P}[\overline{G}^h_2 | G_1 , 
        (\cap_{k \in [h-1]}G^k_2 \cap G^k_3)]}_{\leq \delta_1 + \frac{\epsilon_P}{\rho^2}|S|^2|A| \text{ by Lemma~\ref{lemma: prob to G^h_2 given conditions l_2}}}
        +
        \mathbb{P}[\overline{G}^h_3\cap G_2^h | G_1 , 
        (\cap_{k \in [h-1]}G^k_2 \cap G^k_3)]
        \\
        &\leq
        \delta_1 + \frac{\epsilon_P}{\rho^2}|S||A|
        +
        \mathbb{P}[\overline{G}^h_3 \cap G^h_2| G_1 , 
        (\cap_{k \in [h-1]}G^k_2 \cap G^k_3)]
        \\
        \tag{caused by rule}
        &=
        \delta_1 + \frac{\epsilon_P}{\rho^2}|S||A|
        +
        \mathbb{P}[\overline{G}^h_3 | G_1 , G^h_2 ,
        (\cap_{k \in [h-1]}G^k_2 \cap G^k_3)]
        \cdot
        \underbrace{\mathbb{P}[ G^h_2  |G_1 \cap
        (\cap_{k \in [h-1]}G^k_2 \cap G^k_3)]]}_{\leq 1}
        \\
        &\leq
        \delta_1 + \frac{\epsilon_P}{\rho^2}|S||A|
        +
        \mathbb{P}[\overline{G}^h_3 | G_1 , G^h_2 ,
        (\cap_{k \in [h-1]}G^k_2 \cap G^k_3)]            
        \\
        &=
        \delta_1 + \frac{\epsilon_P}{\rho^2}|S|^2|A|
        +
        \underbrace{\mathbb{P}[\overline{G}^k_3 | G_1 , G^h_2]}_{\leq \delta_1 \text{ by Lemma~\ref{lemma: prob to G^h_3 given conditions l_2}}}
        \\
        &\leq
        2\delta_1 + \frac{\epsilon_P}{\rho^2}|S|^2|A| .
    \end{align*}
    \endgroup
        
    Now, by $G_2$ and $G_3$ definitions we have
    \begingroup
    \allowdisplaybreaks
    \begin{align*}
        \mathbb{P}[G_2 \cap G_3 |G_1]
            &=
            1 - \mathbb{P}[\overline{G}_2 \cup \overline{G}_3|G_1]
            \\
            &=
            1 
            -
            \mathbb{P}[\cup_{h \in [H-1]}(\overline{G}^h_2 \cup \overline{G}^h_3)|G_1]
            \\
            &=
            1 
            -
            \mathbb{P}[\exists h \in [H-1].(\overline{G}^h_2 \cup \overline{G}^h_3)|G_1]
            \\
            &=
            1 
            -
            \mathbb{P}[\exists h \in [H-1].X = h|G_1]
            \\
            &=
            1 
            -
            \mathbb{P}[\cup_{ h \in [H-1]}\{X = h\}|G_1]
            \\
            \tag{Union bound}
            &=
            1 
            -
            \sum_{h =0}^{H-1}\mathbb{P}[X = h|G_1]
            \\
            &\geq
            1 - (2 \delta_1 H + \frac{\epsilon_P}{\rho^2}|S|^2|A|H),
    \end{align*}
    \endgroup
    as stated.
\end{proof}

\paragraph{Event $G_4$.}
Intuitively states that the
ERM guarantees for the approximation of the rewards function hold (for the $\ell_2$ loss).
    Let $G_4$ denote the good event
    \begin{align*}
        &\forall h \in [H - 1]. \;\;\mathbb{E}_{(c, s_h, a_h) \sim \mathcal{D}^R_h}
        [( f^R_h(c, s_h, a_h) - r^c(s_h, a_h) )^2]
        \leq 
        \epsilon_R + \alpha^2_2(\mathcal{F}^R_h).
    \end{align*}
    
The following lemma shows that given $G_1$ and $G_2$ hold, we have that $G_4$ holds with high probability.
\begin{lemma}\label{lemma: UCDD l_2 G_4}
    It holds that 
    $
        \mathbb{P}[G_4 | G_1, G_2] \geq 1 - \delta_1 H
    $. 
\end{lemma}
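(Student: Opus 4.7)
The plan is to apply the agnostic ERM guarantee per layer, exactly in the same spirit as the proof of Lemma~\ref{lemma: prob to G^h_3 given conditions l_2}, and then conclude by a union bound over the $H-1$ layers.

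First I fix a layer $h\in[H-1]$ and condition on the good events $G_1$ and $G^h_2$. By the second clause of $G^h_2$, the set $Sample^R(h)$ contains at least $N_R(\mathcal{F}^R_h,\epsilon_R,\delta_1/2)$ examples $((c,s_h,a_h),r_h)$ whose first coordinate lies in the target domain $\mathcal{X}^{\gamma,\beta}_h$, and the collection procedure makes them i.i.d.\ samples from $\mathcal{D}^R_h$. Since $\mathcal{F}^R_h$ has the ERM sample-complexity function $N_R(\cdot,\cdot,\cdot)$ (see Section~\ref{par: ERM per layer appexdix}), the agnostic ERM guarantee yields that the output $f^R_h=\texttt{ERM}(\mathcal{F}^R_h,Sample^R(h),\ell_2)$ satisfies
\[
    \mathbb{E}_{(c,s_h,a_h)\sim\mathcal{D}^R_h}\!\bigl[(f^R_h(c,s_h,a_h)-r^c(s_h,a_h))^2\bigr]\le \epsilon_R+\alpha^2_2(\mathcal{F}^R_h)
\]
with probability at least $1-\delta_1/2$.

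Then I union bound the failure events over $h=0,\dots,H-1$, which yields $\mathbb{P}[G_4\mid G_1,G_2]\ge 1-(\delta_1/2)H\ge 1-\delta_1 H$, as claimed.

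The main subtlety to handle in the full proof is that the ERM is actually trained on every example in $Sample^R(h)$, whose context-state-action tuples are drawn from the empirical domain $\widetilde{\mathcal{X}}^{\gamma,\beta}_h\supseteq\mathcal{X}^{\gamma,\beta}_h$ under $G_1$. This is precisely analogous to the situation in Lemma~\ref{lemma: prob to G^h_3 given conditions l_2}, where the same ERM output is simultaneously controlled with respect to both $\mathcal{D}^P_h$ and $\widetilde{\mathcal{D}}^P_h$; the clause of $G^h_2$ about $\mathcal{X}^{\gamma,\beta}_h$-samples is what makes the guarantee with respect to $\mathcal{D}^R_h$ valid, so I can invoke the ERM bound directly for this restricted sub-sample (which is what the ERM sample complexity function certifies). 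Everything else is a routine union bound.
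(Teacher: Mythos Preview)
Your proposal is correct and follows essentially the same approach as the paper: invoke the per-layer ERM guarantee (using that $G_1\cap G_2$ ensures enough samples from $\mathcal{X}^{\gamma,\beta}_h$) and union bound over $h\in[H-1]$. One minor slip: it is the \emph{first} clause of $G^h_2$ (the one about $\mathcal{X}^{\gamma,\beta}_h$-samples), not the second, that supplies the $N_R(\mathcal{F}^R_h,\epsilon_R,\delta_1/2)$ samples from the target domain you need; otherwise your sketch matches the paper's proof, and your discussion of the $\widetilde{\mathcal{X}}^{\gamma,\beta}_h$ versus $\mathcal{X}^{\gamma,\beta}_h$ subtlety is a useful elaboration that the paper leaves implicit.
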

\begin{proof}
    Since $G_1$ and $G_2$ hold, for every layer $h \in [H-1]$ sufficient number of examples $((c,s_h,a_h),r_h) \in \mathcal{X}^{\gamma,\beta}_h \times [0,1]$ have been collected for the 
    ERM to output a function $f^R_h$ the satisfies
    $$
    {\mathbb{E}_{(c,s_h,a_h) \sim \mathcal{D}^R_h}
        [( f^R_h(c, s_h, a_h) - r^c(s_h, a_h) )^2]
        \leq 
        \epsilon_R + \alpha^2_2(\mathcal{F}^R_h)}
    $$
     with probability at least $1-\delta_1$.
    Hence, the lemma
    follows by the ERM guarantees (see~\ref{par: ERM per layer appexdix}) and an union bound over every layer $h \in [H-1]$.
\end{proof}

Overall, all the good events hols with high probability.
\begin{corollary}\label{corl: good events probs bound l_2}
    The following holds. 
    \[
        \mathbb{P}[G_1, G_2, G_3, G_4] 
        \geq 
        1- \left(\frac{\delta}{8} + 3 \delta_1 H + \frac{\epsilon_P}{\rho^2}|S|^2|A|H\right).
    \]    
\end{corollary}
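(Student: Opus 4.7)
The claim is a straightforward aggregation of Lemmas~\ref{lemma: UCDD l_2 G_1}, \ref{lemma: UCDD  l_2 G_2 ,G_3 given G_1}, and~\ref{lemma: UCDD l_2 G_4} via a conditional union bound. The plan is to pass to complements and peel off one event at a time, using the conditioning structure already established.

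Concretely, I would start by writing
\[
\mathbb{P}[\overline{G_1 \cap G_2 \cap G_3 \cap G_4}]
\;\leq\;
\mathbb{P}[\bar G_1]
\;+\;
\mathbb{P}[\overline{G_2\cap G_3}\cap G_1]
\;+\;
\mathbb{P}[\bar G_4\cap G_1\cap G_2\cap G_3],
\]
which follows from the decomposition $\overline{A\cap B\cap C\cap D}=\bar A\cup(\bar B\cap A)\cup(\bar C\cap A\cap B)\cup(\bar D\cap A\cap B\cap C)$ together with the standard union bound. Each summand is then upper bounded by a conditional probability that we have already controlled: $\mathbb{P}[\overline{G_2\cap G_3}\cap G_1]\le\mathbb{P}[\overline{G_2\cap G_3}\mid G_1]$ and $\mathbb{P}[\bar G_4\cap G_1\cap G_2\cap G_3]\le\mathbb{P}[\bar G_4\cap G_1\cap G_2]\le\mathbb{P}[\bar G_4\mid G_1,G_2]$, where the second inequality simply drops the extra intersection with $G_3$.

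Plugging in the three bounds we already have, namely $\mathbb{P}[\bar G_1]\le \delta/8$ from Lemma~\ref{lemma: UCDD l_2 G_1}, $\mathbb{P}[\overline{G_2\cap G_3}\mid G_1]\le 2\delta_1 H+\tfrac{\epsilon_P}{\rho^2}|S|^2|A|H$ from Lemma~\ref{lemma: UCDD  l_2 G_2 ,G_3 given G_1}, and $\mathbb{P}[\bar G_4\mid G_1,G_2]\le \delta_1 H$ from Lemma~\ref{lemma: UCDD l_2 G_4}, immediately yields
\[
\mathbb{P}[\overline{G_1\cap G_2\cap G_3\cap G_4}]\;\le\;\tfrac{\delta}{8}+3\delta_1 H+\tfrac{\epsilon_P}{\rho^2}|S|^2|A|H,
\]
which is the desired inequality after passing back to the complementary event.

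There is essentially no obstacle here beyond being careful about the conditioning structure: in particular, the third term requires noting that $G_4$ is controlled conditionally on $G_1,G_2$ rather than on $G_1,G_2,G_3$, so one must use the monotonicity $\mathbb{P}[\bar G_4\cap G_1\cap G_2\cap G_3]\le\mathbb{P}[\bar G_4\cap G_1\cap G_2]$ rather than try to manipulate $\mathbb{P}[\bar G_4\mid G_1,G_2,G_3]$ directly. Everything else is a one-line union bound invocation of the prior lemmas.
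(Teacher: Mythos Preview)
Your proposal is correct and follows essentially the same approach as the paper: the paper's proof is a one-line ``follows from union bound over the results of Lemmas~\ref{lemma: UCDD l_2 G_1}, \ref{lemma: UCDD l_2 G_4} and~\ref{lemma: UCDD  l_2 G_2 ,G_3 given G_1},'' and you have simply made the conditional-union-bound structure explicit. Your care in handling the fact that $G_4$ is controlled conditionally on $G_1,G_2$ (not $G_1,G_2,G_3$) via the monotonicity step is exactly the right way to justify the one-liner rigorously.
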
    

\begin{proof}
    Followed from union bound over the results of Lemmas~\ref{lemma: UCDD l_2 G_1},~\ref{lemma: UCDD l_2 G_4} and~\ref{lemma: UCDD  l_2 G_2 ,G_3 given G_1}. 
\end{proof}


\subsubsection{Analysis of the Error Caused by the Dynamics Approximation Under the Good Events}\label{subsubsec:l-2-dynamics-error}


In the following analysis, for any context $c \in \mathcal{C}$ we  consider an intermediate MDP associated with it: 
$\widetilde{\mathcal{M}}(c) = (S \cup \{s_{sink}\}, A, \widehat{P}^c, r^c, s_0, H)$, where $\widehat{P}^c$ is the approximation of the dynamics $P^c$ and $r^c$ is the true rewards function extended to $s_{sink}$ by defining that $r^c(s_{sink},a):=0,\;\; \forall c \in \mathcal{C},\; a \in A$.
Recall the true MDP associated with this context is
    $
        \mathcal{M}(c)
        = 
        (S, A, P^c, r^c, s_0, H)
    $.





\begin{lemma}\label{lemma: UCDD l_2 matrix diff}
    Let $\rho \in [0,\frac{1}{|S|})$ and $h \in [H -1]$.
    Assume the good events $G_1, G_2^k, G_3^k, \; \forall k \in [h]$ hold, then it holds that 
    \begin{align*}
        \mathop{\mathbb{P}}_{(c,s_h,a_h)}
        \left[ 
            \|\widehat{P}^c(\cdot| s_h, a_h) - P^c(\cdot| s_h, a_h)\|_1   
            \leq \frac{4 \rho |S|}{ 1 - \rho^2 |S|^2}
        \Big|
        (c, s_h, a_h) \in  \mathcal{X}^{\gamma, \beta}_h
        \right]
        \geq
        1- \frac{\epsilon_P}{\rho^2}|S_{h+1}|,
    \end{align*}
    where $\widehat{P}^c$ is the dynamics defined in Algorithm~\ref{alg: ACDD UCDD} and 
    \[
        \|\widehat{P}^c(\cdot| s_h, a_h) - P^c(\cdot| s_h, a_h)\|_1:=
        \sum_{s_{h+1} \in S_{h+1}}
        |\widehat{P}^c(s_{h+1}| s_h, a_h) - P^c(s_{h+1}| s_h, a_h)|
    \]
    (i.e., the entry of $s_{sink}$ in $\widehat{P}^c$ is ignored).   
\end{lemma}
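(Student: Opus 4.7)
The plan is to follow the same template as the $\ell_1$ sketch of the parallel lemma in Section~\ref{sec:UCDD}, but to replace its pointwise Markov argument with an aggregated Markov argument on the sum-of-squared-errors; the probability factor then naturally becomes $\epsilon_P|S_{h+1}|/\rho^2$ instead of $\epsilon_P|S_{h+1}|/\rho$. The remaining algebra (normalizing $f^P_h$ to obtain a probability distribution and summing the pointwise deviations) is identical to the $\ell_1$ case.

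First I would invoke $G_3^h$ to obtain the ERM guarantee $\mathbb{E}_{\mathcal{D}^P_h}[(f^P_h - P^c)^2]\leq \epsilon_P$. Since the conditional distribution of $s_{h+1}$ given $(c,s_h,a_h)$ under $\mathcal{D}^P_h$ is uniform on $S_{h+1}$ (one positive and $|S_{h+1}|-1$ negative samples are added per trajectory by Algorithm~\ref{alg: EXPLORE-UCDD}), this expands to $\frac{1}{|S_{h+1}|}\mathbb{E}_{(c,s_h,a_h)}[\sum_{s'}(f^P_h - P^c)^2]\leq \epsilon_P$, hence $\mathbb{E}_{(c,s_h,a_h)}\bigl[\sum_{s'}(f^P_h - P^c)^2\bigr]\leq \epsilon_P|S_{h+1}|$. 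Applying Markov's inequality to this nonnegative random variable at threshold $\rho^2$ gives
\[
\mathop{\mathbb{P}}_{(c,s_h,a_h)}\bigl[\,\forall s'\in S_{h+1}:\;|f^P_h(c,s_h,a_h,s')-P^c(s'|s_h,a_h)|\leq\rho\,\bigm|\,(c,s_h,a_h)\in\mathcal{X}^{\gamma,\beta}_h\bigr]\;\geq\;1-\tfrac{\epsilon_P|S_{h+1}|}{\rho^2},
\]
which already matches the probability in the lemma. The simultaneous-over-$s'$ form is free here because $\max_{s'}(f^P_h-P^c)^2\leq \sum_{s'}(f^P_h-P^c)^2$, so the Markov step subsumes the union bound used in the $\ell_1$ sketch.

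Conditioned on the event above, I track the normalization exactly as in the $\ell_1$ sketch. Since $(c,s_h,a_h)\in\mathcal{X}^{\gamma,\beta}_h$ implies $c\in\widehat{\mathcal{C}}^\beta(s_h)$, Algorithm~\ref{alg: ACDD UCDD} sets $\widehat{P}^c(s'|s_h,a_h)=f^P_h(c,s_h,a_h,s')/Z$ with $Z:=\sum_{s'}f^P_h$, and $\widehat{P}^c(s_{sink}|s_h,a_h)=0$, so $\widehat{P}^c(\cdot|s_h,a_h)$ is a genuine probability distribution on $S_{h+1}$. Because $\sum_{s'}P^c(s'|s_h,a_h)=1$ and each $|f^P_h-P^c|\leq\rho$, we get $|Z-1|\leq\rho|S|$, yielding the two-sided sandwich $\frac{P^c-\rho}{1+\rho|S|}\leq\widehat{P}^c\leq\frac{P^c+\rho}{1-\rho|S|}$. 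A short rearrangement gives $|\widehat{P}^c(s'|s_h,a_h)-P^c(s'|s_h,a_h)|\leq \rho(1+P^c(s'|s_h,a_h)|S|)/(1-\rho|S|)$, and summing over $s'\in S_{h+1}$ using $|S_{h+1}|\leq|S|$ and $\sum_{s'}P^c=1$ bounds the total variation by $\frac{2\rho|S|}{1-\rho|S|}=\frac{2\rho|S|(1+\rho|S|)}{1-\rho^2|S|^2}\leq\frac{4\rho|S|}{1-\rho^2|S|^2}$, where the last step uses $1+\rho|S|<2$ granted by $\rho<1/|S|$. The hypothesis $\rho<1/|S|$ is essential both to ensure $Z>0$ (so that $\widehat{P}^c$ is well-defined) and to produce the final constant-$4$ simplification; beyond that, no substantive technical obstacle appears, since the structure exactly parallels the $\ell_1$ proof.
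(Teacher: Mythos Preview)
Your proof is correct and follows essentially the same skeleton as the paper: invoke $G_3^h$ to get $\mathbb{E}_{\mathcal{D}^P_h}[(f^P_h-P^c)^2]\leq\epsilon_P$, convert this into the uniform-in-$s'$ pointwise bound $|f^P_h-P^c|\leq\rho$ with failure probability $\epsilon_P|S_{h+1}|/\rho^2$, and then run the same normalization-and-sandwich algebra. The only cosmetic differences are that the paper extracts the $|S_{h+1}|$ factor via a per-$s'$ Markov bound followed by a union bound (rather than your single aggregated Markov on $\sum_{s'}(f^P_h-P^c)^2$), and that the paper's deterministic step splits $S_{h+1}$ into $\{\widehat{P}^c\geq P^c\}$ and its complement to land exactly on $\frac{4\rho|S|}{1-\rho^2|S|^2}$, whereas you first reach the tighter $\frac{2\rho|S|}{1-\rho|S|}$ and then relax it.
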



\begin{proof}
    Under  $G_1$ it holds that
    $\mathcal{X}^{\gamma,\beta}_h \subseteq \widetilde{\mathcal{X}}^{\gamma,\beta}_h$.
    Recall that for all $(c, s_h, a_h) \in \widetilde{\mathcal{X}}^{\gamma,\beta}_h$ we have that $\widehat{P}^c(s_{sink}|s_h, a_h) = 0$ by $\widehat{P}^c$ definition.
    Hence, for all $(c, s_h, a_h) \in \mathcal{X}^{\gamma,\beta}_h$ we have that $\widehat{P}^c(s_{sink}|s_h, a_h) = 0$.
    
    In addition, the true dynamics $P^c$ is not defined for $s_{sink}$ since $s_{sink} \notin S$.  
    A natural extension of $P^c$ to $s_{sink}$ is by defining that $\forall (s,a) \in S \times A.\;\; P^c(s_{sink}|s,a) :=0$.
    By that extension, we have for all $(c, s_h, a_h) \in \mathcal{X}^{\gamma,\beta}_h$ that
    $P^c(s_{sink}|s_h,a_h) = \widehat{P}^c(s_{sink}|s_h,a_h)=0$. Hence, we can simply ignore $s_{sink}$ in the following analysis.
    
    Under the good event $ G_3^h $,
    we have
    %
    \begin{align*}
        &\mathop{\mathbb{P}}_{(c,s_h,a_h,s_{h+1})}
        \left[ |f^P_h(c, s_h, a_h, s_{h+1}) - P^c(s_{h+1}| s_h, a_h)| \geq \rho \Big| (c,s_h,a_h) \in \mathcal{X}^{\gamma,\beta}_h \right]
        = 
        \\
        & =
        \mathbb{P}_{\mathcal{D}^P_h}[|f^P_h(c, s_h, a_h, s_{h+1}) - P^c(s_{h+1}| s_h, a_h)| \geq \rho 
        ]
        \\
        & = 
        \mathbb{P}_{\mathcal{D}^P_h}[(f^P_h(c, s_h, a_h, s_{h+1}) - P^c(s_{h+1}| s_h, a_h))^2 \geq \rho^2
        ]
        \\
        \tag{By Markov's inequality}
        & \leq  
        \frac{\mathbb{E}_{\mathcal{D}^P_h}
        [(f^P_h(c, s_h, a_h, s_{h+1}) - P^c(s_{h+1}| s_h, a_h))^2]}{\rho^2}
        \\
        \tag{Under $G^3_h$}
        & \leq 
        \frac{\epsilon_P}{\rho^2}.
    \end{align*}
    
    Hence,
    \[
        \mathop{\mathbb{P}}_{(c,s_h,a_h,s_{h+1})}
        \left[|f^P_h(c, s_h, a_h, s_{h+1}) - P^c(s_{h+1}| s_h, a_h)| \leq \rho\Big| (c,s_h,a_h) \in \mathcal{X}^{\gamma,\beta}_h \right]
        \geq 1 - \frac{\epsilon_P }{\rho^2}.
    \]
    
    As $P^c(\cdot |s_h, a_h)$ is a distribution, we have for every context $c$ that $\sum_{s_{h+1} \in S_{h+1}} P^c(s_{h+1}|s_h, a_h) = 1$. 
    
    Thus, by union bound over $s_{h+1}\in S_{h+1}$
    we obtain
    \begin{align*}
        \mathbb{P}_{(c, s_h, a_h) }
        \left[ 
            1 - \rho |S|\leq
            \sum_{s_{h+1}\in S_{h+1}} f^P_h(c, s_h, a_h, s_{h+1}) \leq
            1 + \rho |S|
        \Big|
        (c, s_h, a_h) \in  \mathcal{X}^{\gamma, \beta}_h    
        \right]
        \geq 
        1 - \frac{\epsilon_P }{\rho^2}|S_{h+1}|.
    \end{align*}
   Hence, we further conclude 
    \begin{equation}\label{prob: lemma C.5}
        \begin{split}
            &\mathop{\mathbb{P}}_{(c,s_h,a_h)}
            \Bigg[  
                \forall s_{h+1} \in S_{h+1}.
                \frac{P^c(s_{h+1}|s_h, a_h) - \rho}{1 + \rho |S|}  \leq \underbrace{\frac{f^P_h(c,s_h,a_h,s_{h+1})}{\sum_{s'\in S_{h+1}} f^P_h(c,s_h,a_h,s')}}_{=\widehat{P}^c(s_{h+1}|s_h, a_h)}
                \leq
                \frac{P^c(s_{h+1}|s_h, a_h) + \rho}{1 - \rho |S|}
            \;\Big|
            (c, s_h, a_h) \in  \mathcal{X}^{\gamma, \beta}_h   \Bigg]
            \\
            & \geq 
            1 - \frac{\epsilon_P }{\rho^2}|S_{h+1}|.
        \end{split}
    \end{equation}
        
    Fix a tuple $(c, s_h, a_h) \in \mathcal{X}^{\gamma, \beta}_h$ and assume the event of inequality~(\ref{prob: lemma C.5}) holds.\\
    Denote ${S^+_{h+1} = \{s_{h+1} \in S_{h+1}:
    \widehat{P}^c(s_{h+1}|s_h, a_h) \geq  P^c(s_{h+1}|s_h, a_h )\}}$ and consider the following derivation.
    \begingroup
    \allowdisplaybreaks
    \begin{align*}
        \| 
            \widehat{P}^c(\cdot|s_h, a_h ) 
            -  
            P^c(\cdot|s_h, a_h )
        \|_1 
        =&
        \sum_{s_{h+1} \in S_{h+1}}
        |\widehat{P}^c(s_{h+1}|s_h, a_h) -  P^c(s_{h+1}|s_h, a_h )|
        \\
        =&
        \sum_{s_{h+1} \in S^+_{h+1}}
        (\widehat{P}^c(s_{h+1}|s_h, a_h) -  P^c(s_{h+1}|s_h, a_h ))       
        \\
        &+
        \sum_{s_{h+1} \in  S_{h+1} \setminus S^+_{h+1}}
        \left(P^c(s_{h+1}|s_h, a_h) -  \widehat{P}^c(s_{h+1}|s_h, a_h )\right)
        \\  
        \leq&
        \sum_{s_{h+1} \in S^+_{h+1}}
        \left(\frac{P^c(s_{h+1}|s_h, a_h) + \rho}{1 - \rho |S|} 
        -  
        P^c (s_{h+1}|s_h, a_h )\right)
        \\
        &+
        \sum_{s_{h+1} \in  S_{h+1} \setminus S^+_{h+1}}
        \left( P^c (s_{h+1}|s_h, a_h )
        -
        \frac{P^c(s_{h+1}|s_h, a_h) - \rho}{1 + \rho |S|} \right) 
        \\
        =&
        \sum_{s_{h+1} \in S^+_{h+1}}
        \frac{P^c (s_{h+1}|s_h, a_h) + \rho - (1- \rho |S|)P^c (s_{h+1}|s_h, a_h) }{1 - \rho |S|} 
        \\
        &+
        \sum_{s_{h+1} \in  S_{h+1} \setminus S^+_{h+1}}
        \frac{-P^c (s_{h+1}|s_h, a_h) + \rho + (1 + \rho |S|)P^c (s_{h+1}|s_h, a_h) }{1 + \rho |S|}         
        \\
        =&
        \frac{1}{1- \rho |S|}
        \sum_{s_{h+1} \in S^+_{h+1}}
        (\rho + \rho|S|P^c (s_{h+1}|s_h,a_h))
        \\
        &+
        \frac{1}{1 + \rho |S|}
        \sum_{s_{h+1} \in  S_{h+1} \setminus S^+_{h+1}}
        (\rho + \rho|S|P^c (s_{h+1}|s_h,a_h))
        \\
        \leq&
        \frac{2 \rho |S|}{ 1- \rho |S|}
        +
        \frac{2 \rho |S|}{ 1 + \rho |S|}
        \\
        =&
        \frac{4 \rho |S|}{1 - \rho^2 |S|^2}.
    \end{align*}
    \endgroup
    By inequality~(\ref{prob: lemma C.5}), the above holds with probability at least $1 - \frac{\epsilon_P }{\rho^2}|S_{h+1}|$ over $(c, s_h, a_h) \in \mathcal{X}^{\gamma,\beta}_h$. Hence the lemma follows.

\end{proof}

\begin{lemma}
    For the parameters choice  $\beta =  \frac{\epsilon}{20|S|H} \in (0,1)$, $\rho = \frac{\beta}{16 |S|H} \in (0, \frac{1}{|S|})$,
    and $\epsilon_P = \frac{ \epsilon^3}{10\cdot 2^8 20^2  |A| |S|^6 H^5}$,
    we have $\beta \geq 2H\frac{4 \rho |S|}{1- \rho^2 |S|^2}$.
    In addition, under the good events $G_1$, $G^k_2$ and $G^k_3$ 
    for all $k \in [h]$,
    we have that
    \[
        \mathbb{P}\left[\|\widehat{P}^c(\cdot| s_h, a_h) - P^c(\cdot| s_h, a_h)\|_1\leq \frac{\epsilon}{40 |S| H^2} \Big| (c,s_h,a_h) \in \mathcal{X}^{\gamma,\beta}_h \right]
        \geq 1-\frac{\epsilon}{10 |S||A|H}.
    \]
\end{lemma}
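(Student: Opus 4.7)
The statement is a direct corollary of Lemma~\ref{lemma: UCDD l_2 matrix diff} (the previous lemma in the excerpt) specialized to the concrete choice of parameters, so my plan is essentially a bookkeeping verification. The proposal splits into two checks: a deterministic inequality on the parameters, and a plug-in of those parameters into the conclusion and failure probability of Lemma~\ref{lemma: UCDD l_2 matrix diff}.

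First, I would verify the deterministic inequality $\beta \geq 2H \cdot \frac{4\rho|S|}{1-\rho^2|S|^2}$. With $\rho = \frac{\beta}{16|S|H}$ one has $\rho|S| = \frac{\beta}{16H}$, hence $\rho^2|S|^2 = \frac{\beta^2}{256H^2}$, which is $\leq \tfrac{1}{2}$ (in fact much smaller) under the assumed ranges of $\epsilon,|S|,H$, so $1-\rho^2|S|^2 \geq \tfrac{1}{2}$. The plan is therefore to bound $\frac{4\rho|S|}{1-\rho^2|S|^2} \leq 8 \rho |S| = \frac{\beta}{2H}$, from which the claim $2H \cdot \frac{4\rho|S|}{1-\rho^2|S|^2} \leq \beta$ follows with a factor of $2$ to spare. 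A cleaner upper bound via $1-\rho^2|S|^2 \geq 1/2$ keeps the algebra transparent.

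Second, using the same estimate $\frac{4\rho|S|}{1-\rho^2|S|^2} \leq 8\rho|S|$, I would compute
\[
    8\rho|S| \;=\; \frac{8\beta}{16H} \;=\; \frac{\beta}{2H} \;=\; \frac{\epsilon}{40|S|H^2},
\]
which is exactly the target bound in the conclusion. So conditioned on $(c,s_h,a_h)\in\mathcal{X}^{\gamma,\beta}_h$ and on the event of Lemma~\ref{lemma: UCDD l_2 matrix diff}, the $\ell_1$-distance $\|\widehat{P}^c(\cdot|s_h,a_h)-P^c(\cdot|s_h,a_h)\|_1$ is at most $\epsilon/(40|S|H^2)$.

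Third, the failure probability from Lemma~\ref{lemma: UCDD l_2 matrix diff} is $\frac{\epsilon_P}{\rho^2}|S_{h+1}|$. I would plug in $\rho^2 = \frac{\beta^2}{256|S|^2H^2} = \frac{\epsilon^2}{256 \cdot 400 \, |S|^4 H^4}$ and the chosen $\epsilon_P = \frac{\epsilon^3}{10\cdot 2^8\cdot 20^2 \, |A||S|^6 H^5}$, which gives
\[
    \frac{\epsilon_P}{\rho^2} \;=\; \frac{\epsilon^3}{10 \cdot 2^8 \cdot 20^2 |A||S|^6 H^5} \cdot \frac{2^8 \cdot 20^2 |S|^4 H^4}{\epsilon^2} \;=\; \frac{\epsilon}{10|A||S|^2 H}.
\]
Multiplying by $|S_{h+1}|\leq|S|$ gives $\frac{\epsilon_P}{\rho^2}|S_{h+1}| \leq \frac{\epsilon}{10|S||A|H}$, exactly the stated failure bound. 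The only step where care is needed is making sure the crude estimate $1-\rho^2|S|^2 \geq 1/2$ (or equivalently $\rho|S| < 1$) is available, which follows from the assumption that $\epsilon, \beta \in (0,1)$ in any regime of interest; this is the one place where a sanity check on the parameter ranges is required, but it is not a real obstacle. The rest is pure arithmetic substitution into the already-proved Lemma~\ref{lemma: UCDD l_2 matrix diff}.
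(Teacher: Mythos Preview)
Your proposal is correct and takes essentially the same approach as the paper, which simply states that the lemma is ``an immediate implication of Lemma~\ref{lemma: UCDD l_2 matrix diff}.'' You have carried out explicitly the parameter substitution and arithmetic that the paper leaves implicit, and every step checks out.
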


\begin{proof}
    An immediate implication of lemma~\ref{lemma: UCDD l_2 matrix diff}.
\end{proof}


\begin{lemma}[occupancy measures difference]\label{lemma: UCDD l_2 occ measure diff}
    
    Let  $\rho \in [0, \frac{1}{|S|})$ and $\beta \in (0,1]$ for which ${\beta \geq 2H\frac{4 \rho |S|}{1 - \rho^2 |S|^2}} $.
    Under the good events $G_1, G_2, G_3$, for every context-dependent policy $\pi$ it holds that
    \begin{align*}
        \mathbb{P}_c\left[
		\forall h \in [H].\;
		\|
			q_h (\cdot | \pi_c, P^c) - q_h(\cdot| \pi_c, \widehat{P}^c)		
		\|_1
		\leq 
		\frac{4 \rho |S|}{1 - \rho^2 |S|^2} h 
		+ \beta \sum_{k=1}^{h-1} |S_k|
		\right]
		\geq
		 1 
        - 
        \left(
            \frac{\epsilon_P}{\rho^2}
            |A|\sum_{i=0}^{H-1}|S_i||S_{i+1}|
            + 
            \gamma
            \sum_{i=1}^{H-1}|S_{i}|
        \right)
    \end{align*}
    %
    for a fixed $\rho \in [0, \frac{1}{|S|})$ and $\beta \geq 2H\frac{4 \rho |S|}{1 - \rho^2 |S|^2} $, where we define
    \[
        \forall h \in [H].\;\; \|q_h(\cdot | \pi_c, P^c) - q_h(\cdot | \pi_c, \widehat{P}^c)\|_1 := 
        \sum_{s_{h} \in S_h}|q_h(s_h | \pi_c, P^c) - q_h(s_h | \pi_c, \widehat{P}^c)|
    \]
    (i.e., $q_h(s_{sink}|\pi_c,\widehat{P}^c)$ is ignored for all $h \in [H]$).
\end{lemma}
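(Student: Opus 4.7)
My plan is to mirror the proof of Lemma~\ref{lemma: occ-measure-dist-UCFD} from the context-free case, adapted to context-dependent dynamics by coupling the ERM-side guarantee of Lemma~\ref{lemma: UCDD l_2 matrix diff} with a union bound over the random context $c$. I proceed by induction on $h$. The base case $h=0$ is immediate since $S_0 = \{s_0\}$ and both measures assign mass $1$ to $s_0$.

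For the inductive step I use the standard telescoping identity
\begin{align*}
\|q_h(\cdot|\pi_c, P^c) - q_h(\cdot|\pi_c, \widehat{P}^c)\|_1
&\leq \|q_{h-1}(\cdot|\pi_c, P^c) - q_{h-1}(\cdot|\pi_c, \widehat{P}^c)\|_1 \\
&\quad + \sum_{(s_{h-1}, a_{h-1})} \widehat{q}_{h-1}(s_{h-1}, a_{h-1}|\pi_c, \widehat{P}^c)\, \|P^c(\cdot|s_{h-1}, a_{h-1}) - \widehat{P}^c(\cdot|s_{h-1}, a_{h-1})\|_1,
\end{align*}
and split the state sum into three regimes. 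Regime (a), the well-approximated case $(c, s_{h-1}, a_{h-1}) \in \mathcal{X}^{\gamma,\beta}_{h-1}$, is controlled by a union-bounded version of Lemma~\ref{lemma: UCDD l_2 matrix diff}: applying it per $(s_{h-1}, a_{h-1}) \in \widehat{S}^{\gamma,\beta}_{h-1} \times A$ yields a uniform dynamics error of at most $\tfrac{4\rho|S|}{1-\rho^2|S|^2}$, with total $c$-failure probability $\tfrac{\epsilon_P}{\rho^2}|A||S_{h-1}||S_h|$. Regime (b) consists of $s_{h-1} \in \widetilde{S}^{\gamma,\beta}_{h-1}$ with $c \notin \widehat{\mathcal{C}}^\beta(s_{h-1})$, and regime (c) consists of $s_{h-1} \notin \widetilde{S}^{\gamma,\beta}_{h-1}$; in both cases $\widehat{q}_{h-1}(s_{h-1}|\pi_c,\widehat{P}^c) < \beta$, either by the definition of $\widehat{\mathcal{C}}^\beta(s_{h-1})$ (for (b)) or because $G_1$ forces $s_{h-1} \notin \widehat{S}^{\gamma,\beta}_{h-1}$ (for (c)). Using the crude bound $1$ on the dynamics $\ell_1$-distance and summing over $s_{h-1}$ in these regimes contributes at most $\beta|S_{h-1}|$ to the per-layer increment.

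The $\gamma\sum_{i=1}^{H-1}|S_i|$ term in the failure probability accounts for the context-level event that some non-$(\gamma,\beta)$-good state admits $c$ as a good context. For each $s \notin \widehat{S}^{\gamma,\beta}_i$ we have $\mathbb{P}_c[c \in \widehat{\mathcal{C}}^\beta(s)] < \gamma$ by the definition of $\widehat{S}^{\gamma,\beta}_i$; union-bounding over states and layers produces exactly $\gamma \sum_i |S_i|$, and on the complement the regime-(c) analysis becomes clean. Summing the per-layer increment $\tfrac{4\rho|S|}{1-\rho^2|S|^2} + \beta|S_{h-1}|$ and invoking the induction hypothesis yields the stated bound $\tfrac{4\rho|S|}{1-\rho^2|S|^2}h + \beta \sum_{k=1}^{h-1}|S_k|$.

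The main obstacle is the coupling between sources of randomness: Lemma~\ref{lemma: UCDD l_2 matrix diff} is stated over the joint sampling distribution $\mathcal{D}^P_{h-1}$ on $(c,s,a,s')$, yet the target statement demands a bound on a $c$-only event that holds \emph{uniformly} over $(s,a) \in \widehat{S}^{\gamma,\beta}_{h-1} \times A$. Reducing the former to the latter requires slicing the ERM guarantee per $(s,a)$ via conditional Markov, using that for good states the sampling distribution puts non-trivial mass on each $(s,a)$ slice, and then merging the resulting failure probabilities with the context-level $\gamma$-bound into a single high-probability event over $c$ that is sufficient for the induction to close.
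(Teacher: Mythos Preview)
Your approach is essentially the paper's: induction on $h$, the same telescoping into a ``previous-layer occupancy gap'' term plus a ``per-state dynamics gap'' term, the same four-way case split of $S_{h-1}$ (your regimes (a)--(c) together with the $\gamma\sum_i|S_i|$ failure event are exactly the paper's sets $B^{h,c}_1,\dots,B^{h,c}_4$), and the same per-$(s,a)$ union bound over Lemma~\ref{lemma: UCDD l_2 matrix diff}. One small presentational fix: in regime~(c) the implication ``$G_1$ forces $s_{h-1}\notin \widehat{S}^{\gamma,\beta}_{h-1}$, hence $\widehat q_{h-1}<\beta$'' is not correct as a standalone deduction---$s\notin\widehat S^{\gamma,\beta}_{h-1}$ is a statement about $\mathbb P_c[c\in\widehat{\mathcal C}^\beta(s)]$, not about the realized $c$---so make explicit that you are already on the complement of the $\gamma$-failure event (as the paper does by first disposing of $B^{h,c}_4$) before concluding $c\notin\widehat{\mathcal C}^\beta(s_{h-1})$ and hence $\widehat q_{h-1}<\beta$.
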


\begin{remark}
    Since $s_{sink} \notin S$, $q_h(s_{sink}|\pi_c, P^c)$ is not defined for the true dynamics $P^c$.
    In addition, by $\widehat{P}^c$ definition, from the sink there are no transitions to any other state, hence, we can simply ignore it in the following analysis.
\end{remark}

\begin{proof}
   We will show the lemma by induction over the horizon, $h$.
   
    For the base case $h =0$ the claim holds trivially (with probability $1$) since the start state $s_0$ is unique.

    For the induction step, assume that it holds for $h$, namely,
    \begin{align*}
        \mathbb{P}_c
        \left[\forall k \in [h]. \;\;\; 		
        \|
			q_k (\cdot | \pi_c, P^c) - q_k(\cdot | \pi_c, \widehat{P}^c)		
		\|_1
		\leq 
		\frac{4 \rho |S|}{1 - \rho^2 |S|^2}k
		+
		\beta \sum_{i=1}^{k-1}|S_i| 
		\right]
        \geq 
        1 -
        \left(
            \frac{\epsilon_P}{\rho^2}
            |A|\sum_{i=0}^{h-1}|S_i||S_{i+1}|
            + 
            \gamma
            \sum_{i=1}^{h-1}|S_{i}| 
        \right)
    \end{align*}

   and prove for $h+1$.
   
   Under the good events $G_1$ , $G_2$ and $G_3$ 
    %
    by Lemma~\ref{lemma: UCDD l_2 matrix diff} it holds that 
    
    \begin{align*}
        \mathop{\mathbb{P}}_{(c,s_h,a_h)}
        \left[ 
            \|\widehat{P}^c(\cdot| s_h, a_h) - P^c(\cdot| s_h, a_h)\|_1   
            \leq \frac{4 \rho |S|}{ 1 - \rho^2 |S|^2}
        \Big|
        (c, s_h, a_h) \in  \mathcal{X}^{\gamma, \beta}_h
        \right]
        \geq
        1- \frac{\epsilon_P}{\rho^2}|S_{h+1}|.
    \end{align*}

    Consider the following derivation for any fixed context $c \in \mathcal{C}$.  (Later we will take the probability over $c\sim \mathcal{D}$.)
    \begingroup
    \allowdisplaybreaks
    \begin{align*}
        &\| q_{h+1}(\cdot | \pi_c, P^c) - q_{h+1}(\cdot | \pi_c, \widehat{P}^c) \|_1
        \\
       = &\sum_{s_{h+1} \in S_{h+1}}
        |q_{h+1}(s_{h+1} | \pi_c, P^c) - q_{h+1}(s_{h+1} | \pi_c, \widehat{P}^c)|
        \\
        =&\sum_{s_{h+1} \in S_{h+1}}
        |
            \sum_{s_{h} \in S_{h}}
            \sum_{a_h \in A}
            (
                q_{h}(s_{h} | \pi_c, P^c)
                \pi_c(a_h | s_h)
                P^c(s_{h+1}|s_h, a_h)
                -
                q_{h}(s_{h} | \pi_c, \widehat{P}^c)
                \pi_c(a_h | s_h)
                \widehat{P}^c(s_{h+1}|s_h, a_h)
            )
       |
       \\
        \leq&\underbrace
        {
            \sum_{s_{h} \in S_{h}}
            \sum_{a_h \in A}
            \sum_{s_{h+1} \in S_{h+1}}
            |
                q_h(s_h | \pi_c, P^c)
                \pi_c(a_h | s_h)
                P^c(s_{h+1}|s_h, a_h)
                -
                q_h(s_h | \pi_c, \widehat{P}^c)
                \pi_c(a_h | s_h)
                P^c(s_{h+1}|s_h, a_h)
            |
        }_{(1)}
        \\    
        &+
        \underbrace
        {
            \sum_{s_{h} \in S_{h}}
            \sum_{a_h \in A}
            \sum_{s_{h+1} \in S_{h+1}}
            |
                q_h(s_h | \pi_c, \widehat{P}^c)
                \pi_c(a_h | s_h)
                P^c(s_{h+1}|s_h, a_h)  
                -
                q_h(s_h | \pi_c, \widehat{P}^c)
                \pi_c(a_h | s_h)
                \widehat{P}^c(s_{h+1}|s_h, a_h)
            |
        }_{(2)}
    \end{align*}
    \endgroup
   
   We bound $(1)$ and $(2)$ separately.
   
   For $(1)$,  
    since $P^c(\cdot| s_h,a_h)$, $\pi_c(\cdot|s_h)$ are distributions,
    the following holds with probability $1$.
   \begin{align*}
       (1)
       &=
       \sum_{s_{h} \in S_{h}}
        |
           q_h(\cdot | \pi_c, P^c)
            -
            q_h(s_h | \pi_c, \widehat{P}^c)
        |
       \sum_{a_h \in A}
       \pi_c(a_h | s_h)
       \sum_{s_{h+1} \in S_{h+1}}
       P^c(s_{h+1}|s_h, a_h)
       \\
       &=
       \|  
            q_h(\cdot | \pi_c, P^c)
            -
            q_h(\cdot | \pi_c, \widehat{P}^c)
        \|_1
       \sum_{a_h \in A}
       \pi_c(a_h | s_h)
       \sum_{s_{h+1} \in S_{h+1}}
       P^c(s_{h+1}|s_h, a_h)
       \\
       &=
       \|  
            q_h(\cdot | \pi_c, P^c)
            -
            q_h(\cdot | \pi_c, \widehat{P}^c)
        \|_1
        .
   \end{align*}

   For $(2)$, we define for any given context $c$ and every layer $h \in [H-1]$ 
   the following subsets of $S_h$ .
   \begin{enumerate}
       \item $B^{h, c}_1 = 
       \{s_h \in  S_h :
           s_h \in \widehat{S}^{\gamma,\beta}_h  \text{ and } c \in \widehat{C}^{\beta}(s_h)\}$.
       \item $B^{h, c}_2 = 
       \{ s_h \in  S_h :
       s_h \in \widehat{S}^{\gamma,\beta}_h \text{ and } c \notin \widehat{C}^{\beta}(s_h)\}$.
       \item $B^{h, c}_3 = 
       \{s_h \in  S_h :
       s_h \notin \widehat{S}^{\gamma,\beta}_h \text{ and } c \notin \widehat{C}^{\beta}(s_h)\}$.
       \item $B^{h, c}_4 = 
       \{ s_h \in  S_h :
       s_h \notin \widehat{S}^{\gamma,\beta}_h \text{ and } c \in \widehat{C}^{\beta}(s_h)\}$.      
   \end{enumerate}
    Clearly, $\cup_{i=1}^4 B^{h,c}_i = S_h$ for all $h \in [H-1]$ and $c \in \mathcal{C}$.
    
    By definition of $B^{h, c}_1$, for every layer $h\in [H-1]$ we have that $s_h \in B^{h, c}_1$  if and only if for every action $ a_h \in A$ it holds that $(c, s_h, a_h) \in  \mathcal{X}^{\gamma, \beta}_h$.
    
   
   By definition of $B^{h, c}_4$, for every  layer $h\in [H-1]$ it holds that
   \begin{align*}
       \mathbb{P}_c[B^{h, c}_4 \neq \emptyset]
       &=
       \mathbb{P}_c[\exists s_h \in S_h :
       s_h \notin \widehat{S}^{\gamma, \beta}_h \text{ and } c \in \widehat{C}^{\beta}(s_h)]
       \leq 
       \gamma|S_h|.
   \end{align*}
   Thus, for every $h \in [H-1]$ we have $\mathbb{P}_c[B^{h, c}_4 =\emptyset]\geq 1 - \gamma|S_h|$.
   
   In the following, we assume that $B^{h, c}_4 =\emptyset$. Since $\mathbb{P}_c[B^{h, c}_4 =\emptyset]\geq 1 - \gamma|S_h|$, it will only add $\gamma |S_h|$ to the probability of the error. 
   
    Consider the following derivation
    \begingroup
    \allowdisplaybreaks
   \begin{align*}
       (2) &= 
        \sum_{s_{h} \in B^{h, c}_1}
        \sum_{a_h \in A}
        \sum_{s_{h+1} \in S_{h+1}}
        |
            q_h(s_h | \pi_c, \widehat{P}^c)
            \pi_c(a_h | s_h)
            P^c(s_{h+1}|s_h, a_h)  
            -
            q_h(s_h | \pi_c, \widehat{P}^c)
            \pi_c(a_h | s_h)
            \widehat{P}^c(s_{h+1}|s_h, a_h)
        |
        \\
        &+
        \sum_{s_{h} \in B^{h, c}_2 \cup B^{h, c}_3}
        \sum_{a_h \in A}
        \sum_{s_{h+1} \in S_{h+1}}
        |
            q_h(s_h | \pi_c, \widehat{P}^c)
            \pi_c(a_h | s_h)
            P^c(s_{h+1}|s_h, a_h)  
            -
            q_h(s_h | \pi_c,\widehat{P}^c)
            \pi_c(a_h | s_h)
            \widehat{P}^c(s_{h+1}|s_h, a_h)
        |
        \\
        &+
        \sum_{s_{h} \in B^{h, c}_4}
        \sum_{a_h \in A}
        \sum_{s_{h+1} \in S_{h+1}}
        |
            q_h(s_h | \pi_c, \widehat{P}^c)
            \pi_c(a_h | s_h)
            P^c(s_{h+1}|s_h, a_h)  
            -
            q_h(s_h | \pi_c,\widehat{P}^c)
            \pi_c(a_h | s_h)
            \widehat{P}^c(s_{h+1}|s_h, a_h)
        |        
        \\
        &=
        \sum_{s_{h} \in B^{h, c}_1}
        q_h(s_h | \pi_c, \widehat{P}^c)
        \sum_{a_h \in A}
        \pi_c(a_h | s_h)
        \sum_{s_{h+1} \in S_{h+1}}
        |
            P^c(s_{h+1}|s_h, a_h)  
            -
            \widehat{P}^c(s_{h+1}|s_h, a_h)
        |
        \\
        \tag{$B^{h, c}_4 =\emptyset$ w.p. at least $1-\gamma |S_h|$}
        &+
        \sum_{s_{h} \in B^{h, c}_2 \cup B^{h, c}_3}
        \sum_{a_h \in A}
        q_h(s_h | \pi_c, \widehat{P}^c)
        \pi_c(a_h | s_h) 
        \underbrace{
        \sum_{s_{h+1} \in S_{h+1}}
        |
            P^c(s_{h+1}|s_h, a_h)  
            -
            \widehat{P}^c(s_{h+1}|s_h, a_h)
        |}_{\leq 1}
        \\
        &\leq
        \sum_{s_{h} \in B^{h, c}_1}
        q_h(s_h | \pi_c, \widehat{P}^c)
        \sum_{a_h \in A}
        \pi_c(a_h | s_h)
        \|
            P^c(\cdot|s_h, a_h)  
            -
            \widehat{P}^c(\cdot|s_h, a_h)
        \|_1
        +
        \sum_{s_{h} \in B^{h, c}_2 \cup B^{h, c}_3}
        q_h(s_h | \pi_c, \widehat{P}^c)
        \underbrace{
        \sum_{a_h \in A}
        \pi_c(a_h | s_h)
        }_{=1}
        \\
        \tag{By Lemma~\ref{lemma: UCDD l_2 matrix diff} and union bound over $(s_h,a_h) \in B^{h, c}_1 \times A$, holds w.p. at least $1-|A||S_h|\frac{\epsilon_P}{\rho^2}|S_{h+1}|$}
        &\leq
        \underbrace
        {
            \sum_{s_{h} \in B^{h, c}_1}
            q_h(s_h | \pi_c, \widehat{P}^c)
            \sum_{a_h \in A}
            \pi_c(a_h | s_h)
        }_{\leq 1}
        \frac{4 \rho |S|}
        {1 - \rho^2 |S|^2}
        +
        \sum_{s_{h} \in B^{h, c}_2 \cup B^{h, c}_3}
        \underbrace{q_h(s_h | \pi_c, \widehat{P}^c)}_{\leq q_{h}(s_{h} | \widehat{\pi}^c_{s_h}, \widehat{P}^c) < \beta}
        \underbrace{\sum_{a_h \in A}
        \pi_c(a_h | s_h)}_{=1}
        \\
        &=
        \frac{4 \rho |S|}
        {1 - \rho^2 |S|^2}
        + \beta |S_h| .
   \end{align*}
   \endgroup
       Hence, 
   \begin{equation}\label{prob:2-bounded-l-2}
       \mathbb{P}_c\left[ 
            (2) \leq \frac{4 \rho |S|}{1 - \rho^2 |S|^2}+ \beta |S_h|
       \right]
       \geq 
       1- \left(|A||S_h|\frac{\epsilon_P}{\rho^2}|S_{h+1}| + \gamma |S_h| \right).
   \end{equation}
   In addition, we showed above that
    \begin{align*}
       \mathbb{P}_c\left[ 
            \|q_{h+1}(\cdot |\pi_c, P^c) - q_{h+1}(\cdot |\pi_c, \widehat{P}^c)\|_1 \leq (1) + (2)
       \right] = 1,
   \end{align*}
   and
   \begin{align*}
       \mathbb{P}_c\left[ 
            (1) = \|q_h(\cdot |\pi_c, P^c) - q_h(\cdot |\pi_c, \widehat{P}^c)\|_1
       \right] = 1.
   \end{align*}
   Thus, by combining all the above inequalities with the induction hypothesis we obtain 
   \begingroup
   \allowdisplaybreaks
    \begin{align*}
        &\mathbb{P}_c
        \left[\forall k \in [h+1]. \;\;\; 		
        \|
			q_k (\cdot | \pi_c, P^c) - q_k(\cdot | \pi_c, \widehat{P}^c)		
		\|_1
		\leq 
		\frac{4 \rho |S|}{1 - \rho^2 |S|^2}k
		+
		\beta \sum_{i=1}^{k-1}|S_i| \right]
		\\
		&=
		 \mathbb{P}_c
        \Bigg[\forall k \in [h]. \;\;\; 		
        \|
			q_k (\cdot | \pi_c, P^c) - q_k(\cdot | \pi_c, \widehat{P}^c)		
		\|_1
		\leq 
		\frac{4 \rho |S|}{1 - \rho^2 |S|^2}k
		+
		\beta \sum_{i=1}^{k-1}|S_i| 
	    \;
		\text{ and }
		\\
		 & \|
			q_{h+1} (\cdot | \pi_c, P^c) - q_{h+1}(\cdot | \pi_c, \widehat{P}^c)		
		\|_1
		\leq
		\frac{4 \rho |S|}{1 - \rho^2 |S|^2}(h+1)
		+
		\beta \sum_{i=1}^{h}|S_i| 
		\Bigg]
		\\
		&\geq
		\mathbb{P}_c
        \Bigg[\forall k \in [h]. \;\;\; 		
        \|
			q_k (\cdot | \pi_c, P^c) - q_k(\cdot | \pi_c, \widehat{P}^c)		
		\|_1
		\leq 
		\frac{4 \rho |S|}{1 - \rho^2 |S|^2}k
		+
		\beta \sum_{i=1}^{k-1}|S_i|\; \text{ and }
		\\
		\tag{Since $ \mathbb{P}_c\left[ 
            \|q_{h+1}(\cdot |\pi_c, P^c) - q_{h+1}(\cdot |\pi_c, \widehat{P}^c)\|_1 \leq (1) + (2)
       \right] = 1 $.}
		&  
        (1) + (2)
		\leq
		\frac{4 \rho |S|}{1 - \rho^2 |S|^2}(h+1)
		+
		\beta \sum_{i=1}^{h}|S_i|
		\Bigg]
		\\
		&\geq
		\tag{Since $ \mathbb{P}_c\left[ 
        (1) = \|q_h(\cdot |\pi_c, P^c) - q_h(\cdot |\pi_c, \widehat{P}^c)\|_1
       \right] = 1$.}
		\mathbb{P}_c
        \left[\forall k \in [h]. \;\;\; 		
        \|
			q_k (\cdot | \pi_c, P^c) - q_k(\cdot | \pi_c, \widehat{P}^c)		
		\|_1
		\leq 
		\frac{4 \rho |S|}{1 - \rho^2 |S|^2}k
		+
		\beta \sum_{i=1}^{k-1}|S_i|\; 
		\text{ and } 
        (2) \leq \frac{4 \rho |S|}{1 - \rho^2 |S|^2} +\beta |S_h|
		\right]
		\\
		\tag{By the induction hypothesis and equation~(\ref{prob:2-bounded-l-2})}
		&\geq
		1 - 
        \left(
            \frac{\epsilon_P}{\rho^2}
            |A|\sum_{i=0}^{h-1}|S_i||S_{i+1}|
            + 
            \gamma
            \sum_{i=1}^{h-1}|S_{i}|
            +
            \frac{\epsilon_P}{\rho^2}
            |A||S_h||S_{h+1}|
            + \gamma |S_h|
        \right)
        =
		1 - 
        \left(
            \frac{\epsilon_P}{\rho^2}
            |A|\sum_{i=0}^{h}|S_i||S_{i+1}|
            + 
            \gamma
            \sum_{i=1}^{h}|S_{i}|
        \right),
    \end{align*}
    as stated.
   \endgroup
\end{proof}

\begin{lemma}[expected value difference caused by dynamics approximation]\label{lemma: expected error from dynamics l_2 UCDD}
    
    Under the good events $G_1$, $G_2$ and $G_3$, for every context-dependent policy $\pi=(\pi_c)_{c \in \mathcal{C}}$ it holds that
    \begin{align*}
        \mathbb{E}_{c \sim \mathcal{D}}
        [|V^{\pi_c}_{\mathcal{M}(c)}(s_0) - V^{\pi_c}_{\widetilde{\mathcal{M}}(c)}(s_0)|]
        \leq 
	   \frac{4 \rho |S|}{1 - \rho^2 |S|^2}H^2
	   +
	   \beta|S|H
	   +
	   H|S|^2|A|\frac{\epsilon_P}{\rho^2}
	   +
	   \gamma H|S|,
    \end{align*}
    for $\rho \in [0, \frac{1}{|S|})$ and $\beta \geq 2H\frac{4 \rho |S|}{1 - \rho^2 |S|^2}$. 
\end{lemma}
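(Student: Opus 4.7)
The plan is to apply a standard value-decomposition (performance-difference) argument that converts a value-function gap between two MDPs that share rewards but differ in dynamics into a sum of occupancy-measure discrepancies, and then to plug in the high-probability bound from Lemma~\ref{lemma: UCDD l_2 occ measure diff}. First I would fix an arbitrary context $c$ and write both values as linear functionals of the state-action occupancy:
\begin{align*}
V^{\pi_c}_{\mathcal{M}(c)}(s_0) - V^{\pi_c}_{\widetilde{\mathcal{M}}(c)}(s_0)
= \sum_{h=0}^{H-1}\sum_{s_h \in S_h}\sum_{a_h \in A}\bigl(q_h(s_h,a_h\mid \pi_c,P^c) - q_h(s_h,a_h\mid \pi_c,\widehat{P}^c)\bigr)\, r^c(s_h,a_h),
\end{align*}
where I use that $r^c$ is the same in both MDPs and the natural extension $r^c(s_{sink},a)=0$ together with the fact that $q_h(s_{sink}\mid \pi_c,P^c)=0$ makes the contribution of the sink vanish, so that only the states in $S_h$ appear. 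Since $r^c(s,a)\in[0,1]$ and $\sum_a \pi_c(a\mid s)=1$, the triangle inequality yields
\begin{align*}
\bigl|V^{\pi_c}_{\mathcal{M}(c)}(s_0) - V^{\pi_c}_{\widetilde{\mathcal{M}}(c)}(s_0)\bigr|
\le \sum_{h=0}^{H-1} \|q_h(\cdot\mid \pi_c,P^c) - q_h(\cdot\mid \pi_c,\widehat{P}^c)\|_1.
\end{align*}

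Next I would invoke Lemma~\ref{lemma: UCDD l_2 occ measure diff}: under $G_1,G_2,G_3$, with probability at least $1-\bigl(\tfrac{\epsilon_P}{\rho^2}|A|\sum_i|S_i||S_{i+1}| + \gamma\sum_i|S_i|\bigr)$ over $c\sim\mathcal{D}$, the event
\begin{align*}
E(c) := \Bigl\{\forall h\in[H].\ \|q_h(\cdot\mid \pi_c,P^c) - q_h(\cdot\mid \pi_c,\widehat{P}^c)\|_1 \le \tfrac{4\rho|S|}{1-\rho^2|S|^2} h + \beta\sum_{k=1}^{h-1}|S_k|\Bigr\}
\end{align*}
holds. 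Summing the per-layer bound over $h=0,\dots,H-1$ on $E(c)$ gives, using $\sum_{h}h\le H^2$ and the telescoping estimate $\sum_{h=0}^{H-1}\sum_{k=1}^{h-1}|S_k|\le H|S|$,
\begin{align*}
\bigl|V^{\pi_c}_{\mathcal{M}(c)}(s_0) - V^{\pi_c}_{\widetilde{\mathcal{M}}(c)}(s_0)\bigr|\ \mathbf{1}[E(c)]
\le \tfrac{4\rho|S|}{1-\rho^2|S|^2} H^2 + \beta H |S|.
\end{align*}

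Finally I would take expectation over $c\sim\mathcal{D}$ by conditioning on $E(c)$: on $E(c)$ the pointwise bound above applies, while off $E(c)$ the trivial bound $|V^{\pi_c}_{\mathcal{M}(c)}(s_0) - V^{\pi_c}_{\widetilde{\mathcal{M}}(c)}(s_0)|\le H$ holds since rewards are in $[0,1]$ and the horizon is $H$. Combining with the bound on $\Pr[\overline{E(c)}]$ and the crude estimate $\sum_i|S_i||S_{i+1}|\le|S|^2$, $\sum_i|S_i|\le |S|$ produces
\begin{align*}
\mathbb{E}_{c\sim\mathcal{D}}\bigl[|V^{\pi_c}_{\mathcal{M}(c)}(s_0) - V^{\pi_c}_{\widetilde{\mathcal{M}}(c)}(s_0)|\bigr]
\le \tfrac{4\rho|S|}{1-\rho^2|S|^2}H^2 + \beta |S| H + H|S|^2|A|\tfrac{\epsilon_P}{\rho^2} + \gamma H|S|,
\end{align*}
which is the claimed inequality. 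The main bookkeeping obstacle is the sink: I need to be careful that the extension $r^c(s_{sink},\cdot)=0$ and $\widehat{P}^c(s_{sink}\mid s,a)$ mass ends up outside the $\|\cdot\|_1$ in Lemma~\ref{lemma: UCDD l_2 occ measure diff} (which explicitly ignores the sink coordinate), so that the value decomposition can be restricted to $S_h$ without losing an extra additive term — this is precisely what the $r^c(s_{sink},a)=0$ convention buys us, so the argument goes through cleanly.
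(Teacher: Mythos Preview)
Your proposal is correct and follows essentially the same approach as the paper: decompose the value difference into a sum of per-layer occupancy-measure gaps, invoke Lemma~\ref{lemma: UCDD l_2 occ measure diff} to define a high-probability event (the paper calls it $G_8$, you call it $E(c)$) on which the sum is bounded by $\tfrac{4\rho|S|}{1-\rho^2|S|^2}H^2+\beta|S|H$, use the trivial bound $H$ on the complement, and combine via the law of total expectation. Your handling of the sink via $r^c(s_{sink},a)=0$ and the crude estimates $\sum_i|S_i||S_{i+1}|\le|S|^2$, $\sum_i|S_i|\le|S|$ match the paper's argument exactly.
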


\begin{proof}
    Recall that the true rewards function is not defined for $s_{sink}$, since $s_{sink} \notin S$. For the intermediate MDP $\widetilde{\mathcal{M}}(c)$ we extended  $r^c$ to $s_{sink}$ by defining 
    $ \forall a\in A.\;\; r^c(s_{sink},a)=0$ for every context $c \in \mathcal{C}$. 
    Since $P^c$ is also not defined for $s_{sink}$, we can simply omit $s_{sink}$, as the second equality in the following derivation shows.
    
    Consider the following derivation for any fixed $c \in \mathcal{C}$. (Later we will take the expectation over $c$.)
    \begingroup
    \allowdisplaybreaks
    \begin{align*}
        &|V^{\pi_c}_{\mathcal{M}(c)}(s_0) 
        - 
        V^{\pi_c}_{\widetilde{\mathcal{M}}(c)}(s_0)|
        \\
        = &
        \left|
            \sum_{h=0}^{H-1}
            \sum_{s_h \in S_h}
            \sum_{a_h \in A}
            q_h(s_h, a_h|\pi_c, P^c)\cdot r^c(s_h, a_h)
            - 
            \sum_{h=0}^{H-1}
            \sum_{s_h \in S_h \cup \{s_{sink}\}}
            \sum_{a_h \in A}
            q_h(s_h, a_h|\pi_c, \widehat{P}^c) \cdot
            r^c(s_h, a_h)
        \right|
        \\
        \tag{ $r^c(s_{sink},a):= 0,\; \forall c,a$}
        = &
        \left|
            \sum_{h=0}^{H-1}
            \sum_{s_h \in S_h}
            \sum_{a_h \in A}
            q_h(s_h, a_h|\pi_c, P^c)\cdot r^c(s_h, a_h)
            -
            \sum_{h=0}^{H-1}
            \sum_{s_h \in S_h}
            \sum_{a_h \in A}
            q_h(s_h, a_h|\pi_c, \widehat{P}^c) \cdot
            r^c(s_h, a_h)
        \right|
        \\
        = &
        \left|
            \sum_{h=0}^{H-1}
            \sum_{s_h \in S_h}
            \sum_{a_h \in A}
            (q_h(s_h, a_h|\pi_c, P^c) - q_h(s_h, a_h|\pi_c, \widehat{P}^c))
            r^c(s_h, a_h)
        \right|
        \\
        \leq &
        \sum_{h=0}^{H-1}
        \sum_{s_h \in S_h}
        \sum_{a_h \in A}
        \left|q_h(s_h, a_h|\pi_c, P^c) - q_h(s_h, a_h|\pi_c, \widehat{P}^c)\right|
        \underbrace{\left|r^c(s_h, a_h)\right|}_{\leq 1}
        \\
        \leq &
        \sum_{h=0}^{H-1}
        \sum_{s_h \in S_h}
        \sum_{a_h \in A}
        \left|q_h(s_h, a_h|\pi_c, P^c) - q_h(s_h, a_h|\pi_c, \widehat{P}^c)\right|
        \\
        = &
        \sum_{h=0}^{H-1}
        \sum_{s_h \in S_h}
        \sum_{a_h \in A}
        \pi_c(a_h | s_h)
        |q_h(s_h |\pi_c, P^c) - q_h(s_h|\pi_c, \widehat{P}^c)|
        \\
        = &
        \sum_{h=0}^{H-1}
        \sum_{s_h \in S_h}
        |q_h(s_h |\pi_c, P^c) - q_h(s_h|\pi_c, \widehat{P}^c)|         
        \underbrace{\sum_{a_h \in A}
        \pi_c(a_h | s_h)}_{=1}
       \\
       = &
        \sum_{h=0}^{H-1}
        \sum_{s_h \in S_h}
        |q_h(s_h |\pi_c, P^c) - q_h(s_h|\pi_c, \widehat{P}^c)|
        \\
        = &
        \sum_{h=0}^{H-1}
        \|q_h(\cdot |\pi_c, P^c) - q_h(\cdot|\pi_c, \widehat{P}^c)\|_1 .  
    \end{align*}
    \endgroup
    
    Denote by $G_8$ the good event
	\[
        \forall h \in [H]:
        \|q_h(\cdot |\pi_c, P^c) - q_h(\cdot|\pi_c, \widehat{P}^c)\|_1 
        \leq
        \frac{4 \rho |S|}{1 - \rho^2 |S|^2} h
		+
		\beta \sum_{i=1}^{h-1}|S_i|,
    \]
    and denote by $\overline{G_8}$ its complementary event.
    
    By Lemma~\ref{lemma: UCDD l_2 occ measure diff} we have
    \[
        \mathbb{P}[G_8] 
        \geq
        1 
        - 
        \left(
            \frac{\epsilon_P}{\rho^2}
            |A|\sum_{i=0}^{H-1}|S_i||S_{i+1}|
            + 
            \gamma
            \sum_{i=1}^{H-1}|S_{i}|
        \right)
        \geq
        1 - \left(|S|^2|A|\frac{\epsilon_P}{\rho^2} +|S|\gamma \right).
     \]

    If $G_8$ holds, then
    \begingroup
    \allowdisplaybreaks
	\begin{align*}
	   &\sum_{h=0}^{H-1}
	   \|q_h(\cdot |\pi_c, P^c) - q_h(\cdot|\pi_c, \widehat{P}^c)\|_1
	   \leq
	   \sum_{h=0}^{H-1}
	   \left(\frac{4 \rho |S|}{1 - \rho^2 |S|^2} h
	   +
	   \beta \sum_{i=1}^{h-1}|S_i|\right)
	   \\
	   &\leq
	   \sum_{h=0}^{H-1}
	   \left( \frac{4 \rho |S|}{1 - \rho^2 |S|^2}H
	   +
	   \beta|S|\right)
	   \leq
	   \frac{4 \rho |S|}{1 - \rho^2 |S|^2}H^2
	   +
	   \beta|S|H
	   .
	\end{align*}
	\endgroup
    
    Otherwise,
    \begingroup
    \allowdisplaybreaks
    \begin{align*}
        \sum_{h=0}^{H-1}
        \|q_h(\cdot |\pi_c, P^c) - q_h(\cdot|\pi_c, \widehat{P}^c)\|_1 
	   \leq
	   \sum_{h=0}^{H-1}
	   1
	   \leq
	   H.
    \end{align*}
    \endgroup
    Using total expectation low we obtain
    \begingroup
    \allowdisplaybreaks
    \begin{align*}
        &\mathbb{E}_{c \sim \mathcal{D}}
        \left[|V^{\pi_c}_{\mathcal{M}(c)}(s_0) - V^{\pi_c}_{\widetilde{\mathcal{M}}(c)}(s_0)|\right]
        \\
        \leq &
        \mathbb{P}\left[G_8\right]
        \mathbb{E}_{c \sim \mathcal{D}}
        \left[|V^{\pi_c}_{\mathcal{M}(c)}(s_0) - V^{\pi_c}_{\widetilde{\mathcal{M}}(c)}(s_0)| \;|G_8\;\right]
        +
        \mathbb{P}\left[\overline{G_8}\right]
        \mathbb{E}_{c \sim \mathcal{D}}
        \left[|V^{\pi_c}_{\mathcal{M}(c)}(s_0) - V^{\pi_c}_{\widetilde{\mathcal{M}}(c)}(s_0)|\;|\;\overline{G_8}\right]
        \\
        \leq &
        \mathbb{E}_{c \sim \mathcal{D}}
        \left[|V^{\pi_c}_{\mathcal{M}(c)}(s_0) - V^{\pi_c}_{\widetilde{\mathcal{M}}(c)}(s_0)|\;|\;G_8\right]
        +
        \mathbb{P}\left[\overline{G_8}\right]
        \mathbb{E}_{c \sim \mathcal{D}}
        \left[|V^{\pi_c}_{\mathcal{M}(c)}(s_0) - V^{\pi_c}_{\widetilde{\mathcal{M}}(c)}(s_0)|\;|\;\overline{G_8}\right]
        \\
        \leq &
	   \frac{4 \rho |S|}{1 - \rho^2 |S|^2}H^2
        +
	   \beta|S|H
	   +
	   \mathbb{P}\left[\overline{G_8}\right] H
	   \\
	   \leq &
	   \frac{4 \rho |S|}{1 - \rho^2 |S|^2}H^2
	   +
	   \beta|S|H
	   +
	   H|S|^2|A|\frac{\epsilon_P}{\rho^2}
	   +
	   \gamma H|S|
    \end{align*}
    \endgroup
which proves the lemma.	
\end{proof}

For out parameters choice, we obtain,
\begin{corollary}\label{corl: expected error of dynamics for the chosen parameters UCDD L_2}
    Under the good events $G_1$, $G_2$ and $G_3$,
    For $\epsilon_P =\frac{ \epsilon^3}{10\cdot 2^8 \cdot 20^2  |A| |S|^6 H^5}$,
    $\gamma = \frac{\epsilon}{20 |S| H}$
    $\beta = \frac{\epsilon}{20 |S|H}$,
    $\rho = \frac{\beta}{16 |S|H} $,
    we have that $\rho \in [0, \frac{1}{|S|})$,  $\beta \geq 2H\frac{4 \rho |S|}{1 - \rho^2 |S|^2}$ and
    \begin{align*}
        \mathbb{E}_{c \sim \mathcal{D}}
        [|V^{\pi_c}_{\mathcal{M}(c)}(s_0) - V^{\pi_c}_{\widetilde{\mathcal{M}}(c)}(s_0)|]
        \leq 
	   \frac{\epsilon}{40 |S|}
	   +
	   \frac{2\epsilon}{10}.
    \end{align*}     
\end{corollary}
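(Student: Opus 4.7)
This corollary is essentially a direct parameter substitution into Lemma~\ref{lemma: expected error from dynamics l_2 UCDD}, which states that under $G_1, G_2, G_3$,
\begin{align*}
\mathbb{E}_{c \sim \mathcal{D}}[|V^{\pi_c}_{\mathcal{M}(c)}(s_0) - V^{\pi_c}_{\widetilde{\mathcal{M}}(c)}(s_0)|]
\leq \frac{4\rho|S|}{1 - \rho^2 |S|^2} H^2 + \beta|S|H + H|S|^2|A|\frac{\epsilon_P}{\rho^2} + \gamma H |S|,
\end{align*}
provided $\rho \in [0, 1/|S|)$ and $\beta \geq 2H \cdot \frac{4\rho|S|}{1 - \rho^2 |S|^2}$. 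The plan has three short steps.

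First, I verify the two structural constraints on the parameters. Since $\rho = \beta/(16|S|H)$ and $\beta = \epsilon/(20|S|H)$, we get $\rho|S| = \epsilon/(320|S|H^2) \leq 1/320$ (using $\epsilon \leq 1$), so $\rho \in [0, 1/|S|)$ is immediate and in particular $1 - \rho^2|S|^2 \geq 1/2$. To check $\beta \geq 2H \cdot \frac{4\rho|S|}{1-\rho^2|S|^2}$, note that the right-hand side is at most $16 H \rho |S| = 16 H \cdot \frac{\beta}{16|S|H} \cdot |S| = \beta$ (using $1-\rho^2|S|^2 \geq 1/2$ we actually get $\leq \beta$ with room to spare), so the hypothesis of Lemma~\ref{lemma: expected error from dynamics l_2 UCDD} is satisfied.

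Second, I bound each of the four summands. The two easy ones: $\beta|S|H = \epsilon/20$ and $\gamma H |S| = \epsilon/20$, directly from the definitions of $\beta$ and $\gamma$. For the dynamics-distance term, using $1/(1-\rho^2|S|^2) \leq 2$ gives
\begin{align*}
\frac{4\rho|S|}{1 - \rho^2|S|^2}\, H^2 \;\leq\; 8\rho|S|H^2 \;=\; 8 \cdot \frac{\epsilon}{320|S|H^2} \cdot H^2 \;=\; \frac{\epsilon}{40|S|}.
\end{align*}
For the ERM-error term, I compute $\rho^2 = \epsilon^2/(320^2 |S|^4 H^4)$ and $\epsilon_P = \epsilon^3/(10 \cdot 2^8 \cdot 20^2 |A||S|^6 H^5)$, and observe that $320^2 = 2^8 \cdot 20^2$. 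Therefore
\begin{align*}
H|S|^2|A| \cdot \frac{\epsilon_P}{\rho^2}
= H|S|^2|A| \cdot \frac{\epsilon^3}{10 \cdot 2^8 \cdot 20^2 |A||S|^6 H^5} \cdot \frac{320^2 |S|^4 H^4}{\epsilon^2}
= \frac{\epsilon}{10}.
\end{align*}

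Third, I simply sum: $\epsilon/(40|S|) + \epsilon/20 + \epsilon/10 + \epsilon/20 = \epsilon/(40|S|) + 2\epsilon/10$, which is exactly the claimed bound. There is no real obstacle here since the parameters were reverse-engineered from Lemma~\ref{lemma: expected error from dynamics l_2 UCDD} to produce these clean constants; the only place care is needed is in checking that $\rho^2|S|^2$ is small enough that the factor $1/(1-\rho^2|S|^2)$ can be absorbed into a constant, which follows immediately from $\rho|S| \leq 1/320$.
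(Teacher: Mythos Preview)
Your proposal is correct and follows exactly the approach of the paper, which simply states that the corollary is ``implied by assigning the detailed parameters to the results of Lemma~\ref{lemma: expected error from dynamics l_2 UCDD}.'' You have carried out that substitution in full detail, and your arithmetic (including the observation $320^2 = 2^8 \cdot 20^2$) is accurate.
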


\begin{proof}
    Implied by assigning the detailed parameters to the results of Lemma~\ref{lemma: expected error from dynamics l_2 UCDD}.
\end{proof}


\subsubsection{Analysis of the Error Caused by the Rewards Approximation Under the Good Events}\label{subsubsec:l-2-rewards-error}

Recall that for every context $c \in \mathcal{C}$, define the following two MDPs.
The intermediate MDP 
    $
        \widetilde{M}(c) 
        = 
        (S\cup \{s_{sink}\}, A, \widehat{P}^c, r^c)
    $ 
and  the approximated MDP
    $
        \widehat{M}(c)
        = 
        (S\cup \{s_{sink}\}, A, \widehat{P}^c,\widehat{r}^c)
    $,
where $r^c$ is the true rewards function extended to $s_{sink}$ by defining $r^c(s_{sink},a):= 0 ,\;\; \forall c \in \mathcal{C}, a \in A$. $\widehat{P}^c, \widehat{r}^c$ are the approximation of the dynamics and the rewards as defined in Algorithm~\ref{alg: EXPLOIT-UCDD}.

\begin{lemma}[expected value difference caused by rewards approximation]\label{lemma: expected error from rewards l_2 UCDD}
    
    Under the good events $G_1, G_2$ and $G_4$, for any context-dependent policy $\pi=(\pi_c)_{c \in \mathcal{C}}$ it holds that
    \begin{align*}
        \mathbb{E}_{c \sim \mathcal{D}}
        [|V^{\pi_c}_{\widetilde{M}(c)}(s_0) - V^{\pi_c}_{\widehat{M}(c)}(s_0)|]
        =
        \alpha_2 H 
        +
        2 (\epsilon_R |S||A|)^{\frac{1}{3}}H
        +
        \beta |S|
        +
        \gamma
        |S|H,
    \end{align*}
    where $\alpha^2_2 := \max_{h \in [H-1]}\alpha^2_2(\mathcal{F}^R_h)$.
\end{lemma}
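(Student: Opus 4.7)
My plan is to mirror the proof structure used for Lemma~\ref{lemma: val-diff KCDD l_1} (and its $\ell_2$ counterpart) in the known context-dependent case, adapting the bookkeeping to handle (i)~the layer-level ERM oracle for rewards and (ii)~the fact that ``good states'' are defined with respect to the \emph{approximated} dynamics $\widehat P^c$, not $P^c$. Since $\widetilde{\mathcal{M}}(c)$ and $\widehat{\mathcal{M}}(c)$ share the same dynamics $\widehat P^c$, the value gap reduces by standard simulation lemma manipulations to
$\mathbb{E}_{c\sim\mathcal{D}}\Big[\sum_{h,s_h,a_h} q_h(s_h,a_h\mid \pi_c,\widehat P^c)\,|r^c(s_h,a_h)-\widehat r^c(s_h,a_h)|\Big]$,
and I would then partition each layer's state space, for each fixed context $c$, into the four sets $B_1^{h,c},B_2^{h,c},B_3^{h,c},B_4^{h,c}$ defined exactly as in the KCDD proof (using $\widehat{\mathcal{C}}^\beta$ and $\widehat S^{\gamma,\beta}_h$ in place of their non-hat counterparts).

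For the ``bad'' sets I would argue: for $s_h\in B_2^{h,c}\cup B_3^{h,c}$ we have $c\notin\widehat{\mathcal{C}}^\beta(s_h)$, which by construction of $\widehat P^c$ means $s_h$ is not $\beta$-reachable in $\widehat P^c$, hence $q_h(s_h\mid \pi_c,\widehat P^c)<\beta$; summing over $s_h$ and $h$ (using that $|r^c-\widehat r^c|\le 1$ and $\widehat r^c=0$ when $c\notin\widehat{\mathcal{C}}^\beta(s_h)$ or $s_h\notin\widetilde S^{\gamma,\beta}_h$) gives the $\beta|S|$ term. For $s_h\in B_4^{h,c}$ the definition of $\widehat S^{\gamma,\beta}_h$ forces $\mathbb{P}_c[c\in\widehat{\mathcal{C}}^\beta(s_h)]<\gamma$, so a union bound over $h$ and $s_h$ gives $\mathbb{P}_c[\exists h: B_4^{h,c}\neq\emptyset]\le\gamma|S|$; conditioning on this low-probability event and bounding the loss trivially by $H$ yields the $\gamma|S|H$ term. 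Good event $G_1$ is used throughout to justify $\widehat S^{\gamma,\beta}_h\subseteq\widetilde S^{\gamma,\beta}_h\subseteq\widehat S^{\gamma/2,\beta}_h$.

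The core work, and the main obstacle, is bounding the contribution from $B_1^{h,c}$. Here both $c\in\widehat{\mathcal{C}}^\beta(s_h)$ and $s_h\in\widetilde S^{\gamma,\beta}_h$, so $\widehat r^c(s_h,a_h)=f^R_h(c,s_h,a_h)$, i.e.\ we pay the ERM error. Under $G_4$,
$\mathbb{E}_{(c,s,a)\sim\mathcal{D}^R_h}[(f^R_h-r^c)^2]\le \epsilon_R+\alpha_2^2(\mathcal{F}^R_h)$,
so Markov's inequality with threshold $\rho^2$ followed by $\sqrt{\alpha_2^2+\rho^2}\le\alpha_2+\rho$ yields a pointwise bound $|f^R_h-r^c|\le\alpha_2+\rho$ for $(c,s_h,a_h)\in\mathcal{X}^{\gamma,\beta}_h$ outside a set of $\mathcal{D}^R_h$-mass at most $\epsilon_R/\rho^2$. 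Applying a union bound over $(s_h,a_h)\in\widetilde S^{\gamma,\beta}_h\times A$ converts this into a high-probability event $G_5$ on the context $c$: with probability $\ge 1-\epsilon_R|S||A|/\rho^2$ over $c\sim\mathcal{D}$, every relevant triple is within $\alpha_2+\rho$ (here one argues that the conditional marginal of $\mathcal{D}^R_h$ over $c$ given $(s,a)$ dominates $\mathcal{D}(c\mid c\in\widehat{\mathcal{C}}^\beta(s))$ up to the reachability factor, which is controllable thanks to the $\beta$ lower bound on $q_h(s,a\mid\widehat\pi^c_s,P^c)$ for good contexts). Under $G_5$ the contribution from $B_1$ is at most $(\alpha_2+\rho)H$, and off $G_5$ it is at most $H$, producing $\alpha_2 H+\rho H+(\epsilon_R|S||A|/\rho^2)H$.

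Finally, optimizing $\rho$ by setting $\rho=(\epsilon_R|S||A|)^{1/3}$ balances the last two terms and yields $2(\epsilon_R|S||A|)^{1/3}H$. Combining the four contributions produces the claimed bound
$\alpha_2 H + 2(\epsilon_R|S||A|)^{1/3}H + \beta|S| + \gamma|S|H$.
The principal technical delicacy, as noted, is the distributional mismatch between the ERM's training distribution $\mathcal{D}^R_h$ (weighted by $q_h(\cdot\mid\widehat\pi^c_s,P^c)$) and the weighting $q_h(\cdot\mid\pi_c,\widehat P^c)$ that appears in the value gap; this is exactly where the reachability parameter $\beta$ and the realizability of the dynamics approximation (which ties $\widehat P^c$ and $P^c$ together, via Lemma~\ref{lemma: UCDD l_2 matrix diff}) must be used carefully to justify transferring the Markov-based high-probability bound from one distribution to the other.
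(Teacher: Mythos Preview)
Your proposal is essentially identical to the paper's proof: same partition of each layer into $B_1^{h,c},B_2^{h,c},B_3^{h,c},B_4^{h,c}$ (with respect to $\widehat S^{\gamma,\beta}_h$ and $\widehat{\mathcal{C}}^\beta$), same $\beta$-reachability bound on $q_h(\cdot\mid\pi_c,\widehat P^c)$ for $B_2\cup B_3$, same $\gamma|S|$ probability bound for $B_4$, same Markov-inequality treatment of $B_1$ under $G_4$ with the threshold optimized to $\xi=(\epsilon_R|S||A|)^{1/3}$.

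The one divergence is your final paragraph, which overcomplicates matters. Once the event $G_5$ gives the \emph{pointwise} bound $|f^R_h(c,s_h,a_h)-r^c(s_h,a_h)|\le\alpha_2+\xi$ for all $(s_h,a_h)$ in $B_1^{h,c}$, the contribution from $B_1$ to the value gap is at most $(\alpha_2+\xi)H$ regardless of which occupancy measure does the weighting; there is no need to reconcile $q_h(\cdot\mid\widehat\pi^c_{s},P^c)$ with $q_h(\cdot\mid\pi_c,\widehat P^c)$, and the paper does not invoke Lemma~\ref{lemma: UCDD l_2 matrix diff} here at all. Your earlier parenthetical---about relating the conditional marginal of $\mathcal{D}^R_h$ given $(s,a)$ to $\mathcal{D}(\cdot\mid c\in\widehat{\mathcal{C}}^\beta(s))$ via the $\beta$ lower bound---is a more legitimate subtlety (it concerns how a Markov bound on the \emph{joint} distribution becomes a per-context union bound over $(s_h,a_h)$), but the paper simply writes ``union bound'' at that step without further elaboration, so pursuing it would make your argument more rigorous than the original, not fill a gap in your plan.
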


\begin{proof}

    Recall that $\widehat{r}^c(s_{sink},a) : =0 , \forall c \in \mathcal{C}, a \in A$ by definition.
    In addition, since $r^c$ is the true rewards function and $s_{sink}\notin S$, $r^c$ is not defined for $s_{sink}$. We naturally extended it to $s_{sink}$ by defining $r^c(s_{sink},a) :=0 , \forall c \in \mathcal{C}, a \in A$. Hence, we can simply ignore $s_{sink}$ as the following computation shows.

    
    Let us recall the definition of the following subsets of $S_h$ for every $h \in [H-1]$ given any context $c \in \mathcal{C}$.
       \begin{enumerate}
       \item $B^{h, c}_1 = 
       \{s_h \in  S_h :
        s_h \in \widehat{S}^{\gamma,\beta}_h
        \text{ and } c \in \widehat{C}^{\beta}(s_h)\}$
       \item $B^{h, c}_2 = 
       \{ s_h \in  S_h :
       s_h \in \widehat{S}^{\gamma,\beta}_h
       \text{ and } c \notin \widehat{C}^{\beta}(s_h)\}$
       \item $B^{h, c}_3 = 
       \{s_h \in  S_h :
       s_h \notin \widehat{S}^{\gamma,\beta}_h
       \text{ and } c \notin \widehat{C}^{\beta}(s_h)\}$
       \item $B^{h, c}_4 = 
       \{ s_h \in  S_h :
       s_h \notin \widehat{S}^{\gamma,\beta}_h
       \text{ and } c \in \widehat{C}^{\beta}(s_h)\}$      
   \end{enumerate}
   Clearly, $\cup_{i=1}^4 B^h_i = S_h$.
   
    By definition $s_h \in B^{h, c}_1$ if and only if for every action $a_h \in A$ we have that $(c, s_h, a_h) \in  \mathcal{X}^{\gamma, \beta}_h $.

    For $s_h\not\in \widehat{S}^{\gamma,\beta}_h$ we have that $\mathbb{P}[c \in \widehat{\mathcal{C}}^{ \beta}(s_h)]<\gamma$, hence,
    \begin{align*}
        \mathbb{P}_c[\exists h \in [H-1] : B^{h, c}_4 \neq \emptyset]
        \;\;=\;\;
        \mathbb{P}_c[\exists h\in [H-1], s_h \in S_h: 
        s_h\not\in \widehat{S}^{\gamma,\beta}_h, \text{ and } c \in \widehat{\mathcal{C}}^{\beta}(s_h) ]
        \;\;<\;\;
        \gamma|S|.
    \end{align*}


    Fix a context-dependent policy $\pi=(\pi_c)_{c \in \mathcal{C}}$. The following holds for any given context $c$. (Later we will take the expectation over $c$).
    \begingroup
    \allowdisplaybreaks
    \begin{align*}
        |V^{\pi_c}_{\widetilde{M}(c)}(s_0) - V^{\pi_c}_{\widehat{M}(c)}(s_0)|
        &=
        \left|
            \sum_{h=0}^{H-1}
            \sum_{s_h \in S_h \cup \{s_{sink}\}}
            \sum_{a_h \in A}
            q_h(s_h,a_h|\pi_c, \widehat{P}^c)
            (r^c(s_h,a_h) - \widehat{r}^c(s_h,a_h))
        \right|
        \\
        \tag{By definition, $r^c(s_{sink},a) = \widehat{r}^c(s_{sink},a) = 0,\;\; \forall c \in \mathcal{C}, a \in A$.}
        &=
        \left|
            \sum_{h=0}^{H-1}
            \sum_{s_h \in S_h}
            \sum_{a_h \in A}
            q_h(s_h,a_h|\pi_c, \widehat{P}^c)
            (r^c(s_h,a_h) - \widehat{r}^c(s_h,a_h))
        \right|
        \\
        &\leq
        \sum_{h=0}^{H-1}
        \sum_{s_h \in S_h}
        \sum_{a_h \in A}
        q_h(s_h,a_h|\pi_c, \widehat{P}^c) 
        |r^c(s_h,a_h) - \widehat{r}^c(s_h,a_h)|
        \\
        &=
        \underbrace
        {
            \sum_{h=0}^{H-1}
            \sum_{s_h \in B^{h, c}_1}
            \sum_{a_h \in A}
            q_h(s_h,a_h|\pi_c,\widehat{P}^c) 
            |r^c(s_h,a_h) - \widehat{r}^c(s_h,a_h)|
        }_{(1)}
        \\
        &+
        \underbrace
        {
            \sum_{h=0}^{H-1}
            \sum_{s_h \in B^{h, c}_2 \cup B^{h, c}_3}
            \sum_{a_h \in A}
           q_h(s_h,a_h|\pi_c, \widehat{P}^c) 
            |r^c(s_h,a_h) - \widehat{r}^c(s_h,a_h)|
        }_{(2)}
        \\
        &+
        \underbrace
        {
            \sum_{h=0}^{H-1}
            \sum_{s_h \in B^{h, c}_4}
            \sum_{a_h \in A}
             q_h(s_h,a_h|\pi_c, \widehat{P}^c) 
            |r^c(s_h,a_h) - \widehat{r}^c(s_h,a_h)|
        }_{(3)}.
    \end{align*}
    \endgroup
    We bound $(1)$, $(2)$ and $(3)$ separately.
    
    For $(1)$,
    under the good events $G_1$, $G_2$ , $G_3$ and $G_4$, we have for all $h \in [H-1]$ that
    \[
        \mathbb{E}_{(c,s_h,a_h)\sim \mathcal{D}^R_h}[(f^R_h(c, s_h, a_h) - r^c(s_h,a_h))^2 -\alpha^2_2(\mathcal{F}^R_h)]
        \leq 
        \epsilon_R. 
    \]
    
    Since $\mathbb{E}_{ \mathcal{D}^R_h}[(f^R_h(c,s_h, a_h) - r^c(s_h,a_h))^2 ]\geq  \alpha^2_2(\mathcal{F}^R_h)$, for all $h \in [H-1]$  and $\xi \in (0,1]$ we obtain using Markov's inequality that \begingroup
    \allowdisplaybreaks
    \begin{align*}
        &\mathop{\mathbb{P}}_{(c,s_h,a_h)}
        [|f^R_h(c,s_h, a_h) - r^c(s_h,a_h)|
        \geq \sqrt{ \alpha^2_2(\mathcal{F}^R_{s_h, a_h}) + \xi^2}\;
        \Big|(c,s_h, a_h) \in \mathcal{X}^{\gamma, \beta}_h]
        =
        \\
        &=
        \mathop{\mathbb{P}}_{(c,s_h,a_h)}
        [(f^R_h(c,s_h, a_h) - r^c(s_h,a_h))^2 
        -  \alpha^2_2(\mathcal{F}^R_{s_h, a_h})
        \geq \xi^2\;\Big|(c,s_h, a_h) \in \mathcal{X}^{\gamma, \beta}_h]
        \\
        &\leq
        \frac{\mathbb{E}_{ \mathcal{D}^R_h}[(f^R_h(c,s_h, a_h) - r^c(s_h,a_h))^2 - \alpha^2_2(\mathcal{F}^R_h) ]}{\xi^2}
        \leq
        \frac{\epsilon_R}{\xi^2}.
    \end{align*}
    \endgroup
    Since $\sqrt{a+b}\leq \sqrt{a}+\sqrt{b}$, for $a,b \in [0,1]$, it holds that
    \begingroup
    \allowdisplaybreaks
    \begin{align*}
        &\mathbb{P}
        \left[\left|f^R_h(c,s_h, a_h) - r^c(s_h,a_h)\;\;\right|\;\;
        \leq \alpha_2(\mathcal{F}^R_{s_h, a_h}) + \xi
        \Big|(c,s_h, a_h) \in \mathcal{X}^{\gamma, \beta}_h\right]
        \\
        &\geq
        \mathbb{P}
        \left[\left|f^R_h(c,s_h, a_h) - r^c(s_h,a_h)\right|\;\;
        \leq \sqrt{ \alpha^2_2(\mathcal{F}^R_{s_h, a_h}) + \xi^2}
        \;\;\Big|\;\;(c,s_h, a_h) \in \mathcal{X}^{\gamma, \beta}_h\right]
        \geq
        1 - \frac{\epsilon_R}{\xi^2}.
    \end{align*}
    \endgroup

    Let $G_5$ denote the following good event.
    \begingroup
    \allowdisplaybreaks
    \begin{align*}
        \forall h \in [H-1]
        \;
        \forall s_h \in B^{h, c}_1
        \;
        \forall a \in A:
        |f^R_h(c, s_h, a_h) - r^c(s_h,a_h) | \leq  \alpha_2(\mathcal{F}^R_{s_h, a_h}) + \xi
    \end{align*}
    \endgroup
    and denote by $\overline{G_5}$ the complementary event.
    By the above and union bound over $h \in [H-1]$ and $(s_h, a_h) \in B^{h,c}_1 \times A$ it holds that
    $
        \mathbb{P}_c[G_5] \geq 1 -  \frac{\epsilon_R}{\xi^2}|S||A|
    $
    and
    $
        \mathbb{P}_c[\overline{G_5}] \leq \frac{\epsilon_R}{\xi^2}|S||A|
    $.

    If $G_5$ holds then,
    \begingroup
    \allowdisplaybreaks
    \begin{align*}
        (1)
        &=
        \sum_{h=0}^{H-1}
        \sum_{s_h \in B^{h, c}_1}
        \sum_{a_h \in A}
        q_h(s_h,a_h|\pi_c, \widehat{P}^c) 
        \left|r^c(s_h,a_h) - \widehat{r}^c(s_h,a_h)\right|
        \\
        &=
        \sum_{h=0}^{H-1}
        \sum_{s_h \in B^{h, c}_1}
        \sum_{a_h \in A}
        \pi_c(a_h | s_h)
        q_h(s_h |\pi_c, \widehat{P}^c) 
        \left|f^R_h(c,s_h,a_h)- r^c(s_h,a_h)\right|  
        \\
        &\leq
        \sum_{h=0}^{H-1}
        \sum_{s_h \in B^{h, c}_1}
        \sum_{a_h \in A}
        \pi_c(a_h | s_h)
        q_h(s_h |\pi_c, \widehat{P}^c) 
        (\alpha_2(\mathcal{F}^R_h) + \xi)
        \\
        &\leq
        \sum_{h=0}^{H-1}
        \sum_{s_h \in B^{h, c}_1}
        \sum_{a_h \in A}
        \pi_c(a_h | s_h)
        q_h(s_h |\pi_c, \widehat{P}^c) 
        (\alpha_2  + \xi)
        \leq
        \alpha_2 H + \xi H .
    \end{align*}
    \endgroup
    Otherwise,
    \begin{align*}
        (1)
        &=
        \sum_{h=0}^{H-1}
        \sum_{s_h \in B^{h, c}_1}
        \sum_{a_h \in A}
        q_h(s_h,a_h|\pi_c, \widehat{P}^c) 
        \underbrace{\left|r^c(s_h,a_h) - \widehat{r}^c(s_h,a_h)\right|}_{\leq 1}
        \leq H .
    \end{align*}
 
    Thus,
    \begin{align*}
        \mathbb{E}_{c \sim \mathcal{D}}[(1)]
        &\leq
        \alpha_2 H + \xi H
        +
        \frac{\epsilon_R}{\xi^2} |S||A|H.
    \end{align*}

    For $(2)$, consider the following derivation:
    \begingroup
    \allowdisplaybreaks
    \begin{align*}
        (2)
        &=
        \sum_{h=0}^{H-1}
        \sum_{s_h \in B^{h, c}_2 \cup B^{h, c}_3}
        \sum_{a_h \in A}
        q_h(s_h,a_h|\pi_c, \widehat{P}^c) 
        \underbrace{\left|r^c(s_h,a_h) - \widehat{r}^c(s_h,a_h)\right|}_{\leq 1}
        \\
        &\leq
        \sum_{h=0}^{H-1}
        \sum_{s_h \in B^{h, c}_2 \cup B^{h, c}_3}
        \sum_{a_h \in A}
        q_h(s_h,a_h|\pi_c, \widehat{P}^c) 
        \\
        &=
        \sum_{h=0}^{H-1}
        \sum_{s_h \in B^{h, c}_2 \cup B^{h, c}_3}
        \sum_{a_h \in A}
        \pi_c(a_h | s_h)
        q_h(s_h|\pi_c, \widehat{P}^c) 
        \\
        &=
        \sum_{h=0}^{H-1}
        \sum_{s_h \in B^{h, c}_2 \cup B^{h, c}_3}
        q_h(s_h|\pi_c, \widehat{P}^c) 
        \underbrace{\sum_{a_h \in A}
         \pi_c(a_h | s_h)}_{=1}
         \\
         &=
        \sum_{h=0}^{H-1}
        \sum_{s_h \in B^{h, c}_2 \cup B^{h, c}_3}
        \underbrace{q_h(s_h|\pi_c, \widehat{P}^c) )}_{\leq q_h(s_h|\widehat{\pi}^c_{s_h}, \widehat{P}^c) <\beta}
        \leq
        \beta|S|.
    \end{align*}
    \endgroup
    Thus,
    \[
        \mathbb{E}_{c \sim \mathcal{D}}[(2)] \leq \beta|S|.
    \]
    
  For $(3)$, let $G_6$ denote the good event in which $\forall h \in [H -1 ], B^{h, c}_4 = \emptyset$. Denote by $\overline{G_6}$ the complement event of $G_6$.
  
  We showed that $\mathbb{P}_c[G_6] \geq 1 - \gamma|S|$ thus $\mathbb{P}_c[\overline{G_6}] \leq \gamma|S|$.
  
  If $G_6$ holds, then $(3) =0$. Otherwise,
  \begingroup
  \allowdisplaybreaks
  \begin{align*}
    (3)
    &=
    \sum_{h=0}^{H-1}
    \sum_{s_h \in B^{h, c}_4}
    \sum_{a_h \in A}
    q_h(s_h,a_h|\pi_c, \widehat{P}^c) 
    \underbrace{\left|r^c(s_h,a_h) - \widehat{r}^c(s_h,a_h)\right|}_{\leq 1}
    \\
    &\leq
    \sum_{h=0}^{H-1}
    \underbrace
    {
        \sum_{s_h \in B^{h, c}_4}
        \sum_{a_h \in A}
       q_h(s_h,a_h|\pi_c, \widehat{P}^c)
    }_{\leq 1}
    \leq H.
  \end{align*}
  \endgroup

  Using total expectation we obtain
  \begin{align*}
      \mathbb{E}_{c \sim \mathcal{D}}[(3)]
      &=
      \mathbb{P}[G_6]\mathbb{E}_{c \sim \mathcal{D}}[(3)|G_6]
      +
      \mathbb{P}[\overline{G_6}]\mathbb{E}_{c \sim \mathcal{D}}[(3)|\overline{G_5}]
      \\
      &\leq
      1 \cdot 0 
      + 
      \gamma|S|\cdot H
      \\
      &=
      \gamma|S|H.
  \end{align*}

    Overall,
    by linearity of expectation and the above, we obtain for $\xi = (\epsilon_R |S|\;|A|)^\frac{1}{3}$ that
    \begingroup\allowdisplaybreaks
    \begin{align*}
        &\mathbb{E}_{c \sim \mathcal{D}}
        [|V^{\pi_c}_{\widetilde{M}(c)}(s_0) - V^{\pi_c}_{\widehat{M}(c)}(s_0)|]
        \\
        &\leq
        \mathbb{E}_{c \sim \mathcal{D}}[(1)] + \mathbb{E}_{c \sim \mathcal{D}}[(2)] + \mathbb{E}_{c \sim \mathcal{D}}[(3)]
        \\
        &\leq
        \alpha_2 H 
        + 
        \xi H
        +
        \frac{\epsilon_R}{\xi^2} |S||A|H
        +
        \beta |S|
        +
        \gamma |S|H\\
        &=
        \alpha_2 H 
        +
        2 (\epsilon_R |S||A|)^{\frac{1}{3}}H
        +
        \beta |S|
        +
        \gamma
        |S|H,
    \end{align*}
    \endgroup
as stated.
\end{proof}

\begin{corollary}\label{corl: expected error of rewards for the chosen parameters UCDD L_2}
    Under the good events $G_1, G_2$ and $G_4$, 
    for $\gamma = \frac{\epsilon}{20 |S|H}$, $\beta =  \frac{\epsilon}{20|S|H}$ 
    and $\epsilon_R = \frac{\epsilon^3}{20^3 |S||A| H^3}$, we have for any context-dependent policy $\pi=(\pi_c)_{c \in \mathcal{C}}$ that
    \begin{align*}
        \mathbb{E}_{c \sim \mathcal{D}}
        [|V^{\pi_c}_{\widetilde{M}(c)}(s_0) - V^{\pi_c}_{\widehat{M}(c)}(s_0)|]
        \leq
        \alpha_2 H 
        + 
        \frac{3\epsilon}{20}
        +
        \frac{\epsilon}{20 H}
    \end{align*}
\end{corollary}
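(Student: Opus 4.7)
The plan is to obtain this corollary as a direct substitution of the chosen parameters into the general bound established in Lemma~\ref{lemma: expected error from rewards l_2 UCDD}. That lemma states that under $G_1$, $G_2$, and $G_4$, for any context-dependent policy $\pi = (\pi_c)_{c \in \mathcal{C}}$,
\[
    \mathbb{E}_{c \sim \mathcal{D}}
    [|V^{\pi_c}_{\widetilde{M}(c)}(s_0) - V^{\pi_c}_{\widehat{M}(c)}(s_0)|]
    \leq
    \alpha_2 H + 2(\epsilon_R |S||A|)^{1/3} H + \beta|S| + \gamma|S|H,
\]
so the task reduces to verifying that each of the last three summands contributes at most the claimed fraction of $\epsilon$ for the stated parameter choice.

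First, I would handle the rewards-ERM term $2(\epsilon_R|S||A|)^{1/3} H$. Plugging in $\epsilon_R = \epsilon^3/(20^3 |S||A| H^3)$, the cube-root simplifies cleanly to $\epsilon/(20 H)$, and multiplying by $2H$ gives $\epsilon/10 = 2\epsilon/20$. Next, substituting $\beta = \epsilon/(20|S|H)$ yields $\beta|S| = \epsilon/(20H)$. Finally, $\gamma = \epsilon/(20|S|H)$ gives $\gamma|S|H = \epsilon/20$. Summing these three contributions produces $2\epsilon/20 + \epsilon/20 + \epsilon/(20H) = 3\epsilon/20 + \epsilon/(20H)$, which together with $\alpha_2 H$ matches the right-hand side of the corollary.

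There is essentially no obstacle here — the statement is bookkeeping. The only mild care needed is checking that the parameter choice in the corollary is consistent with the hypotheses of Lemma~\ref{lemma: expected error from rewards l_2 UCDD}; since that lemma requires only the good events $G_1$, $G_2$, $G_4$ and no constraints relating $\beta$, $\gamma$, $\epsilon_R$ among themselves, the substitution is unconditional. Thus the proof is a one-line reduction: apply Lemma~\ref{lemma: expected error from rewards l_2 UCDD} and carry out the three substitutions above.
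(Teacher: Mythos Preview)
Your proposal is correct and follows exactly the paper's approach: the paper's proof is the single line ``Implied by assigning the detailed parameters to the results of Lemma~\ref{lemma: expected error from rewards l_2 UCDD},'' and your substitution computations verify precisely this.
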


\begin{proof}
    Implied by assigning the detailed parameters to the results of Lemma~\ref{lemma: expected error from rewards l_2 UCDD}.
\end{proof}


    

\subsubsection{Combining Value Differences Caused By Dynamics and Rewards Approximation to a Sub-optimality Bound}\label{subsubsec:l-2-suboptimality-bound}

Let SP2 denote the following parameters set.
\begin{itemize}
    \item $\gamma = \frac{\epsilon}{20 |S|H} \in (0,1)$.
    \item $\beta =  \frac{\epsilon}{20|S|H} \in (0,1)$.
    \item $\rho = \frac{\beta}{16 |S|H} \in (0, \frac{1}{|S|})$.
    \item $\epsilon_P = \frac{ \epsilon^3}{10\cdot 2^8 20^2  |A| |S|^6 H^5}$.
    \item $\epsilon_R = \frac{\epsilon_3}{20^3 H^4}$.
\end{itemize}
We remark that for our choice of $\rho$ and $\beta$ it holds that $\rho \in [0,\frac{1}{|S|})$ and $\beta \geq 2H\frac{4 \rho |S|}{1 - \rho^2 |S|^2}$.

\begin{lemma}[expected value difference]\label{lemma: expected gap for general pi l_2}
    Under the good events $G_1$, $G_2$,$G_3$ and $G_4$, for every context-dependent policy $\pi=(\pi_c)_{c \in \mathcal{C}}$ it holds that,
    \[
        \mathbb{E}_{c \sim \mathcal{D}}
        [| V^{\pi_c}_{\mathcal{M}(c)}(s_0) -  V^{\pi_c}_{\widehat{\mathcal{M}}(c)}(s_0)|]
        \leq
        \alpha_2 H + \frac{1}{2} \epsilon,
    \]
    where $\mathcal{M}(c)$ is the true MDP associated with the context $c$ and $\widehat{\mathcal{M}}(c)$ is it's the approximated model,
    for the parameters set SP2.
\end{lemma}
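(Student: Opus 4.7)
The plan is to bound the overall expected value difference by splitting it through the intermediate MDP $\widetilde{\mathcal{M}}(c) = (S \cup \{s_{sink}\}, A, \widehat{P}^c, r^c, s_0, H)$, which shares the approximated dynamics with $\widehat{\mathcal{M}}(c)$ but keeps the true rewards. Concretely, for every context $c$, the triangle inequality gives
\[
    |V^{\pi_c}_{\mathcal{M}(c)}(s_0) - V^{\pi_c}_{\widehat{\mathcal{M}}(c)}(s_0)|
    \leq
    |V^{\pi_c}_{\mathcal{M}(c)}(s_0) - V^{\pi_c}_{\widetilde{\mathcal{M}}(c)}(s_0)|
    +
    |V^{\pi_c}_{\widetilde{\mathcal{M}}(c)}(s_0) - V^{\pi_c}_{\widehat{\mathcal{M}}(c)}(s_0)|.
\]
Taking expectation over $c \sim \mathcal{D}$ and invoking linearity of expectation reduces the claim to separately controlling the dynamics-only error (first term) and the rewards-only error (second term).

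For the dynamics-only error, I will apply Corollary~\ref{corl: expected error of dynamics for the chosen parameters UCDD L_2}, which is precisely tailored for the parameter choices in SP2 and yields the bound $\frac{\epsilon}{40|S|} + \frac{2\epsilon}{10}$ under $G_1, G_2, G_3$. For the rewards-only error, I will apply Corollary~\ref{corl: expected error of rewards for the chosen parameters UCDD L_2}, which requires $G_1, G_2, G_4$ and delivers the bound $\alpha_2 H + \frac{3\epsilon}{20} + \frac{\epsilon}{20 H}$ under the same parameter choices. Both corollaries' hypotheses are subsumed by the assumed good events $G_1 \cap G_2 \cap G_3 \cap G_4$.

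It remains to sum the two bounds and verify the arithmetic. Adding them gives
\[
    \alpha_2 H + \frac{\epsilon}{40|S|} + \frac{2\epsilon}{10} + \frac{3\epsilon}{20} + \frac{\epsilon}{20 H}.
\]
Since $|S| \geq 1$ and $H \geq 1$, this is upper bounded by $\alpha_2 H + \frac{\epsilon}{40} + \frac{4\epsilon}{20} + \frac{3\epsilon}{20} + \frac{\epsilon}{20}$, and a direct computation shows this sum is at most $\alpha_2 H + \frac{\epsilon}{2}$, which is the desired bound.

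There is no real obstacle here: the work has already been done in the two corollaries. The only subtle point is to be careful that both intermediate bounds are valid for the same parameter set SP2 simultaneously (in particular that the choices of $\gamma, \beta, \rho, \epsilon_P, \epsilon_R$ used in the dynamics bound agree with those used in the rewards bound), which is immediate from inspection of SP2.
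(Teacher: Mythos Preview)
Your proposal is correct and follows essentially the same approach as the paper: split through the intermediate MDP $\widetilde{\mathcal{M}}(c)$ via the triangle inequality, bound the dynamics term and the rewards term separately, and sum. The only cosmetic difference is that you invoke the two corollaries (which already have SP2 substituted), whereas the paper invokes the underlying lemmas and then plugs in the SP2 parameters inline; the resulting arithmetic is the same.
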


\begin{proof}
    For any context $c \in \mathcal{C}$, consider the intermediate MDP $\widetilde{\mathcal{M}}(c) = (S, A, \widehat{P}^c, r^c, H, s_0)$.
    Using triangle inequality and linearity of expectation we obtain
    \begin{align*}
        \mathbb{E}_{c \sim \mathcal{D}}
        [
            | V^{\pi_c}_{\mathcal{M}(c)}(s_0) 
            -  
            V^{\pi_c}_{\widehat{\mathcal{M}}(c)}(s_0)|
        ]
        &=
        \mathbb{E}_{c \sim \mathcal{D}}
        [| 
            V^{\pi_c}_{\mathcal{M}(c)}(s_0) 
            - 
            V^{\pi_c}_{\widetilde{\mathcal{M}}(c)}(s_0)
            +
            V^{\pi_c}_{\widetilde{\mathcal{M}}(c)}(s_0)
            -
            V^{\pi_c}_{\widehat{\mathcal{M}}(c)}(s_0)
        |]
        \\
        &\leq
        \mathbb{E}_{c \sim \mathcal{D}}
        [
            \underbrace
            {| V^{\pi_c}_{\mathcal{M}(c)}(s_0) 
            - 
            V^{\pi_c}_{\widetilde{\mathcal{M}}(c)}(s_0)|}_{(1)}
        ]
        +
        \mathbb{E}_{c \sim \mathcal{D}}
        [
            \underbrace
            {| V^{\pi_c}_{\widetilde{\mathcal{M}}(c)}(s_0)
            -
            V^{\pi_c}_{\widehat{\mathcal{M}}(c)}(s_0)|}_{(2)}
        ]
    \end{align*}
    
    By Lemma~\ref{lemma: expected error from dynamics l_2 UCDD} we have
    \begin{align*}
        \mathbb{E}_{c \sim \mathcal{D}}
        [|V^{\pi_c}_{\mathcal{M}(c)}(s_0) - V^{\pi_c}_{\widetilde{\mathcal{M}}(c)}(s_0)|]
        \leq 
        \frac{4 \rho |S|}{1 - \rho^2 |S|^2}H^2
        +
        |S|^2|A|H\frac{\epsilon_P}{\rho^2}
        + 
        \gamma
        |S|H
        +
	   \beta|S|H.
    \end{align*}
    
    By Lemma~\ref{lemma: expected error from rewards l_2 UCDD} we have
    \begin{align*}
        \mathbb{E}_{c \sim \mathcal{D}}
        [|V^{\pi_c}_{\widetilde{M}(c)}(s_0) - V^{\pi_c}_{\widehat{M}(c)}(s_0)|]
        \leq
        \alpha_2 H 
        +
        2 (\epsilon_R |S||A|)^{\frac{1}{3}}H
        +
        \beta |S|
        +
        \gamma
        |S|H.
    \end{align*}
    
    Overall,
    \begin{align*}
        \mathbb{E}_{c \sim \mathcal{D}}
        [| 
            V^{\pi_c}_{\mathcal{M}(c)}(s_0) 
            -  
            V^{\pi_c}_{\widehat{\mathcal{M}}(c)}(s_0)
        |]
        &=
	    \frac{4 \rho |S|}{1 - \rho^2 |S|^2}H^2
        +
        |S|^2|A|H\frac{\epsilon_P}{\rho^2}
        + 
        2\gamma|S|H
        +
        2\beta|S|H
        +
        \alpha_2 H 
        +
        2 (\epsilon_R |S||A|)^{\frac{1}{3}}H
    \end{align*}
For the parameters set SP2 we have, 
$\gamma = \frac{\epsilon}{20 |S|H} \in (0,1)$, 
$\beta =  \frac{\epsilon}{20|S|H} \in (0,1)$, 
$\rho = \frac{\beta}{16 |S|H} \in (0, \frac{1}{|S|})$, 
$\epsilon_P = \frac{ \epsilon^3}{10\cdot 2^8 20^2  |A| |S|^6 H^5}$, 
$\epsilon_R = \frac{\epsilon^3}{20^3 |S||A| H^3}$.

In addition it holds that $\beta < \frac{1}{2 |S|}$, which implies that $0 < \rho < \frac{1}{|S|}$.

We also have that
\begin{align*}
    2H\frac{4 \rho |S|}{1- \rho^2 |S|^2} 
    =
    \frac{8H |S|\frac{\beta}{16 |S|H}}{1- \frac{\beta^2 |S|^2}{2^8 |S|^2 H^2}}
    =
    \frac{\frac{\beta}{2}}{1 - \underbrace{\frac{\beta^2}{2^8 H^2}}_{\leq 1/2}}
    \leq 2\frac{\beta}{2} = \beta.
\end{align*}
Hence, the constrains on $\rho$ and $\beta$ are satisfied. 

Finally, 
    \begingroup
    \allowdisplaybreaks
    \begin{align*}
        \mathbb{E}_{c \sim \mathcal{D}}
        [| 
            V^{\pi_c}_{\mathcal{M}(c)}(s_0) 
            -  
            V^{\pi_c}_{\widehat{\mathcal{M}}(c)}(s_0)
        |]
        &=
	    \frac{4 \rho |S|}{1 - \rho^2 |S|^2}H^2
        +
        |S|^2|A|H\frac{\epsilon_P}{\rho^2}
        + 
        2\gamma|S|H
        +
        2\beta|S|H
        +
        \alpha_2 H 
        +
        2 (\epsilon_R |S||A|)^{\frac{1}{3}}H
        \\
        &\leq
        \frac{\frac{1}{4} \beta H}{1 - \underbrace{\frac{\beta^2}{2^8 H^2}}_{\leq 1/2}}
        +
        2^8|S|^4|A|H^3
        \frac{\epsilon_P}
        {\beta^2}  
        +
        2 \frac{\epsilon}{10}
        + 
        \alpha_2 H
        +
        \frac{\epsilon}{10}
        \\
        &\leq
        \frac{1}{2}\beta H
        +
        2^8 20^2 |S|^6|A|H^5
        \frac{\epsilon_P}
        {\epsilon^2} 
        +
        3 \frac{\epsilon}{10}
        + 
        \alpha_2 H
        \\
        &=
        \frac{1}{2}\frac{\epsilon}{20|S|}
        +
        4 \frac{\epsilon}{10}
        + 
        \alpha_2 H
        \\
        &\leq \frac{1}{2}\epsilon + \alpha_2 H,
    \end{align*}
    \endgroup
as stated.    
\end{proof}

The following corollary shows that for our choice of parameters, all good events holds with high probability. 
\begin{corollary}\label{corl: final probs l_2}
    For the parameters set SP2 it holds that ${\mathbb{P}[
    G_1, G_2,G_3, G_4
    ] \geq 1- (\frac{\delta}{2}+ \frac{\epsilon}{10})}$.
\end{corollary}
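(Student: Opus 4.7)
The plan is to invoke Corollary~\ref{corl: good events probs bound l_2}, which already gives
\[
    \mathbb{P}[G_1, G_2, G_3, G_4] \;\geq\; 1 - \Bigl(\tfrac{\delta}{8} + 3\delta_1 H + \tfrac{\epsilon_P}{\rho^2}|S|^2|A|H\Bigr),
\]
and then reduce everything to a direct substitution of the parameters in SP2 (together with $\delta_1 = \delta/(8H)$ fixed at the start of the algorithm).

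First I would bound the $\delta$-terms. Since $\delta_1 = \delta/(8H)$, we have $3\delta_1 H = 3\delta/8$, so $\delta/8 + 3\delta_1 H = \delta/2$. This already collects the whole failure-probability arising from the confidence budgets of the sub-routines (approximating good contexts, collecting enough trajectories per layer, and ERM success for the dynamics and the rewards).

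Next I would bound the $\epsilon_P$-term. Using $\rho = \beta/(16|S|H)$ and $\beta = \epsilon/(20|S|H)$ we get
\[
    \rho^2 \;=\; \frac{\beta^2}{2^8 |S|^2 H^2} \;=\; \frac{\epsilon^2}{2^8 \cdot 20^2 \cdot |S|^4 H^4}.
\]
Plugging in $\epsilon_P = \epsilon^3 / (10 \cdot 2^8 \cdot 20^2 \cdot |A||S|^6 H^5)$ yields
\[
    \frac{\epsilon_P}{\rho^2}|S|^2|A|H
    \;=\;
    \frac{\epsilon^3}{10 \cdot 2^8 \cdot 20^2 |A||S|^6 H^5} \cdot \frac{2^8 \cdot 20^2 |S|^4 H^4}{\epsilon^2} \cdot |S|^2|A|H
    \;=\; \frac{\epsilon}{10}.
\]
This is exactly the reason the parameters $\epsilon_P$ and $\rho$ were calibrated the way they were in SP2: they were chosen to make this term contribute exactly $\epsilon/10$.

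Combining the two estimates gives the claimed bound $1 - (\delta/2 + \epsilon/10)$. There is no real obstacle here beyond bookkeeping; the only subtle point is making sure the $\rho \in (0, 1/|S|)$ and $\beta \geq 2H \cdot 4\rho|S|/(1-\rho^2|S|^2)$ constraints (required in the derivations feeding into Corollary~\ref{corl: good events probs bound l_2} via Lemma~\ref{lemma: UCDD l_2 matrix diff} and Lemma~\ref{lemma: UCDD l_2 occ measure diff}) are preserved by SP2, which was already verified in the proof of Lemma~\ref{lemma: expected gap for general pi l_2} and can be quoted verbatim.
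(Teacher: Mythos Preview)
Your proposal is correct and follows essentially the same route as the paper: invoke Corollary~\ref{corl: good events probs bound l_2}, substitute $\delta_1 = \delta/(8H)$ to collapse the $\delta$-terms to $\delta/2$, and substitute the SP2 values of $\rho$, $\beta$, and $\epsilon_P$ to reduce the remaining term to $\epsilon/10$. The paper presents the $\epsilon_P/\rho^2$ computation via the intermediate forms $\tfrac{\epsilon_P}{\beta^2}2^8|S|^4|A|H^3$ and then $2^8\cdot 20^2|S|^6|A|H^5\tfrac{\epsilon_P}{\epsilon^2}$, whereas you expand $\rho^2$ directly, but the arithmetic is the same.
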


\begin{proof}
    By Corollary~\ref{corl: good events probs bound l_2} it holds that $
    \mathbb{P}[
    G_1, G_2,G_3, G_4] \geq 1- (\frac{\delta}{4} + 3 \delta_1 H + \frac{\epsilon_P}{\rho^2}|S|^2|A|H)$. Hence by $\rho$, $\beta$, $\epsilon_P$ and $\delta_1$ choice we obtain
    \begin{align*}
        \mathbb{P}[\cap_{i \in [4]}G_i] 
        &\geq 
        1- \left(\frac{\delta}{8} + 3 \delta_1 H + \frac{\epsilon_P}{\rho^2}|S|^2|A|H\right)
        \\
        &=
        1- \frac{\delta}{2} - \frac{\epsilon_P}{\beta^2}2^8 |S|^4|A|H^3
        \\
        &=
        1- \frac{\delta}{2} - 2^8 20^2 |S|^6|A|H^5\frac{\epsilon_P}{\epsilon^2} 
        \\
        &=
        1- \frac{\delta}{2} -\frac{\epsilon}{10}.
    \end{align*}
\end{proof}

Finally, the following theorem bound the expected sub-optimality of our approximated optimal policy $\widehat{\pi}^\star$. 
\begin{theorem}[expected sub-optimality bound]\label{thm: UCDD opt policy l_2}
With probability at least $1- (\delta + \frac{\epsilon}{5})$ it holds that
    \[
        \mathbb{E}_{c \sim \mathcal{D}}
        [V^{\pi^\star_c}_{\mathcal{M}(c)}(s_0) - V^{\widehat{\pi}^\star_c}_{\mathcal{M}(c)}(s_0)]
        \leq 
        \epsilon  + 2\alpha_2 H,
    \]
where $\pi^\star = (\pi^\star_c)_{c \in \mathcal{C}}$ is the optimal context-dependent policy for  $\mathcal{M}$ and $\widehat{\pi}^\star=(\widehat{\pi}^\star_c)_{c \in \mathcal{C}}$ is the optimal context-dependent policy for $\widehat{\mathcal{M}}$.
\end{theorem}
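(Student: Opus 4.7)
\textbf{Proof plan for Theorem~\ref{thm: UCDD opt policy l_2}.} The theorem is the standard PAC-style wrapper: all the substantive work has already been done in Lemma~\ref{lemma: expected gap for general pi l_2} (bounding $\mathbb{E}_c[|V^{\pi_c}_{\mathcal{M}(c)}(s_0)-V^{\pi_c}_{\widehat{\mathcal{M}}(c)}(s_0)|] \leq \tfrac{1}{2}\epsilon + \alpha_2 H$ for every context-dependent policy) and in Corollary~\ref{corl: final probs l_2} (showing that the good events $G_1,G_2,G_3,G_4$ hold simultaneously with probability at least $1-(\delta/2+\epsilon/10)$). The remaining step is the usual three-inequality decomposition: I condition on the good events holding, apply the lemma to two specific policies, and exploit the optimality of $\widehat{\pi}^\star$ on the approximated model.

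The plan is as follows. First, condition on $G_1 \cap G_2 \cap G_3 \cap G_4$, which by Corollary~\ref{corl: final probs l_2} occurs with probability at least $1-\tfrac{\delta}{2}-\tfrac{\epsilon}{10} \geq 1-(\delta+\tfrac{\epsilon}{5})$. Under this event, apply Lemma~\ref{lemma: expected gap for general pi l_2} to the policy $\pi^\star=(\pi^\star_c)_{c\in\mathcal{C}}$ to get
\[
\mathbb{E}_{c \sim \mathcal{D}}\bigl[V^{\pi^\star_c}_{\mathcal{M}(c)}(s_0)\bigr] - \mathbb{E}_{c \sim \mathcal{D}}\bigl[V^{\pi^\star_c}_{\widehat{\mathcal{M}}(c)}(s_0)\bigr] \leq \tfrac{1}{2}\epsilon + \alpha_2 H,
\]
and symmetrically to $\widehat{\pi}^\star=(\widehat{\pi}^\star_c)_{c\in\mathcal{C}}$ to get
\[
\mathbb{E}_{c \sim \mathcal{D}}\bigl[V^{\widehat{\pi}^\star_c}_{\widehat{\mathcal{M}}(c)}(s_0)\bigr] - \mathbb{E}_{c \sim \mathcal{D}}\bigl[V^{\widehat{\pi}^\star_c}_{\mathcal{M}(c)}(s_0)\bigr] \leq \tfrac{1}{2}\epsilon + \alpha_2 H,
\]
using that the absolute-value bound dominates either signed direction.

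Next, insert the zero-gap middle inequality that comes from the construction of $\widehat{\pi}^\star$: since $\widehat{\pi}^\star_c$ is an optimal policy for $\widehat{\mathcal{M}}(c)$ for each context $c$, we have pointwise $V^{\widehat{\pi}^\star_c}_{\widehat{\mathcal{M}}(c)}(s_0) \geq V^{\pi^\star_c}_{\widehat{\mathcal{M}}(c)}(s_0)$, hence
\[
\mathbb{E}_{c \sim \mathcal{D}}\bigl[V^{\pi^\star_c}_{\widehat{\mathcal{M}}(c)}(s_0)\bigr] - \mathbb{E}_{c \sim \mathcal{D}}\bigl[V^{\widehat{\pi}^\star_c}_{\widehat{\mathcal{M}}(c)}(s_0)\bigr] \leq 0.
\]
Summing the three inequalities collapses the $\widehat{\mathcal{M}}$-terms telescopically and yields
\[
\mathbb{E}_{c \sim \mathcal{D}}\bigl[V^{\pi^\star_c}_{\mathcal{M}(c)}(s_0) - V^{\widehat{\pi}^\star_c}_{\mathcal{M}(c)}(s_0)\bigr] \leq \epsilon + 2\alpha_2 H,
\]
which is precisely the claimed bound.

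There is no real obstacle here; the only subtlety is bookkeeping the probability. The good events are a single joint event occurring with probability at least $1-\tfrac{\delta}{2}-\tfrac{\epsilon}{10}$, and no further union bound is required since both applications of Lemma~\ref{lemma: expected gap for general pi l_2} are deterministic consequences of those same events. The slack between $\tfrac{\delta}{2}+\tfrac{\epsilon}{10}$ and the stated $\delta+\tfrac{\epsilon}{5}$ is harmless. All the delicate analysis, in particular the dynamics-approximation bound via Lemma~\ref{lemma: UCDD-dynamics-error} and the rewards-approximation bound via the Markov-inequality argument in Lemma~\ref{lemma: expected error from rewards l_2 UCDD}, together with the parameter choices in set SP2 that balance $\gamma,\beta,\rho,\epsilon_P,\epsilon_R$, have already been discharged in the lemma, so the theorem follows immediately.
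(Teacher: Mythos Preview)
Your proposal is correct and essentially identical to the paper's proof: condition on $G_1\cap G_2\cap G_3\cap G_4$ via Corollary~\ref{corl: final probs l_2}, apply Lemma~\ref{lemma: expected gap for general pi l_2} to both $\pi^\star$ and $\widehat{\pi}^\star$, use the optimality of $\widehat{\pi}^\star_c$ on $\widehat{\mathcal{M}}(c)$ for the middle inequality, and sum. The paper does exactly this three-inequality telescoping argument, with the same probability bookkeeping.
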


\begin{proof}
Assume the good events $G_1$, $G_2$, $G_3$ and $G_4$ hold.

Then, by Lemma~\ref{lemma: expected gap for general pi l_2}, we have for $\pi^\star$
    \begin{align*}
        \left|
        \mathbb{E}_{c \sim \mathcal{D}}
        [
            V^{\pi^\star_c}_{\mathcal{M}(c)}(s_0) 
            -  V^{\pi^\star_c}_{\widehat{\mathcal{M}}(c)}(s_0)
        ] 
        \right|
        \leq
         \mathbb{E}_{c \sim \mathcal{D}}
        [|
            V^{\pi^\star_c}_{\mathcal{M}(c)}(s_0) 
            -  V^{\pi^\star_c}_{\widehat{\mathcal{M}}(c)}(s_0)
        |] 
        \leq \frac{1}{2}\epsilon + \alpha_2 H ,      
    \end{align*}
    yielding,
    \begin{align*}
        \mathbb{E}_{c \sim \mathcal{D}}
        [V^{\pi^\star_c}_{\mathcal{M}(c)}(s_0)] 
        -
        \mathbb{E}_{c \sim \mathcal{D}}
        [V^{\pi^\star_c}_{\widehat{\mathcal{M}}(c)}(s_0)]
        \leq 
        \frac{1}{2}\epsilon + \alpha_2 H.
    \end{align*}

    

    Similarly, we obtain for $\widehat{\pi}^\star_c$ that
    \begin{equation*}
        \mathbb{E}_{c \sim \mathcal{D}}
        [V^{\widehat{\pi}^\star_c}_{\widehat{\mathcal{M}}(c)}(s_0)]
        -
        \mathbb{E}_{c \sim \mathcal{D}}
        [V^{\widehat{\pi}^\star_c}_{\mathcal{M}(c)}(s_0)] 
        \leq
        \frac{1}{2}\epsilon + \alpha_2 H.
\end{equation*}

Since for all $c \in \mathcal{C}$, $\widehat{\pi}^\star_c$ is the optimal policy for $\widehat{\mathcal{M}}(c)$ we have 
$ 
    V^{\widehat{\pi}^\star_c}_{\widehat{\mathcal{M}}(c)}(s_0)
    \geq V^{\pi^\star_c}_{\widehat{\mathcal{M}}(c)}(s_0)
$ 
which implies that
    \begin{equation*}
        \mathbb{E}_{c \sim \mathcal{D}}
        [V^{\pi^\star_c}_{\widehat{\mathcal{M}}(c)}(s_0)]
        -
        \mathbb{E}_{c \sim \mathcal{D}}
        [V^{\widehat{\pi}^\star_c}_{\widehat{\mathcal{M}}(c)}(s_0)]
        \leq
        0.
\end{equation*}
Since by Corollary~\ref{corl: final probs l_2} we have that $\mathbb{P}[
G_1, G_2,G_3, G_4] \geq 1- (\frac{\delta}{2}+ \frac{\epsilon}{10})$, 
the theorem implied by summing the above three inequalities.
\end{proof}

\subsubsection{Additional Lemmas for bounding the sample complexity for the \texorpdfstring{$\ell_2$}{Lg} loss}

\begin{lemma}\label{lemma:UCDD-l_2-matrix-diff-ver-2}
    Let $\rho \in [0,\frac{1}{|S|})$ and $h \in [H -1]$.
    Assume the good events $G_1, G_2^k, G_3^k, \; \forall k \in [h]$ hold, then we have 
    \begin{align*}
        \mathbb{P}
        \left[ 
            \|\widehat{P}^c(\cdot| s_h, a_h) - P^c(\cdot| s_h, a_h)\|_1   
            \leq \frac{4 \rho |S|}{ 1 - \rho^2 |S|^2}
        \Big|
        (c, s_h, a_h) \in  \widetilde{\mathcal{X}}^{\gamma, \beta}_h
        \right]
        \geq
        1- \frac{\epsilon_P}{\rho^2}|S_{h+1}|,
    \end{align*}
    where $\widehat{P}^c$ is the dynamics defined in Algorithm~\ref{alg: ACDD UCDD} and 
    \[
        \|\widehat{P}^c(\cdot| s_h, a_h) - P^c(\cdot| s_h, a_h)\|_1 :=
        \sum_{s_{h+1} \in S_{h+1}} 
        |\widehat{P}^c(s_{h+2}| s_h, a_h)
        -
        P^c(s_{h+1}| s_h, a_h)
        |
    \]
    (i.e., the entry of $s_{sink}$ is in $\widehat{P}$ is ignored).   
\end{lemma}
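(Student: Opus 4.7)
The plan is to mirror the proof of Lemma~\ref{lemma: UCDD l_2 matrix diff} almost verbatim, with the only substantive change being to replace the conditioning on the target domain $\mathcal{X}^{\gamma,\beta}_h$ by conditioning on the empirical domain $\widetilde{\mathcal{X}}^{\gamma,\beta}_h$. Recall that the good event $G_3^h$ was defined to give the ERM guarantee with respect to \emph{both} the distribution $\mathcal{D}^P_h$ on $\mathcal{X}^{\gamma,\beta}_h \times S_{h+1}$ \emph{and} the distribution $\widetilde{\mathcal{D}}^P_h$ on $\widetilde{\mathcal{X}}^{\gamma,\beta}_h \times S_{h+1}$, so the analytic ingredient we need is already in hand.

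First, I would observe that under $G_1$ we have $\widehat{P}^c(s_{sink}\mid s_h,a_h)=0$ for all $(c,s_h,a_h)\in\widetilde{\mathcal{X}}^{\gamma,\beta}_h$, directly from the definition of $\widehat{P}^c$ in Algorithm~\ref{alg: ACDD UCDD} (the indicator $\mathbb{I}[c\in\widehat{\mathcal{C}}^\beta(s_h)]$ equals $1$ on the empirical domain). Extending the true dynamics by $P^c(s_{sink}\mid s,a):=0$, it is therefore legitimate to omit $s_{sink}$ from the $\ell_1$ computation as in Lemma~\ref{lemma: UCDD l_2 matrix diff}.

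Next I would apply Markov's inequality to the $\widetilde{\mathcal{D}}^P_h$ guarantee in $G_3^h$ to obtain
\[
\mathbb{P}\bigl[\,|f^P_h(c,s_h,a_h,s_{h+1})-P^c(s_{h+1}\mid s_h,a_h)|\ge \rho \,\big|\, (c,s_h,a_h)\in\widetilde{\mathcal{X}}^{\gamma,\beta}_h\bigr]\le \frac{\epsilon_P}{\rho^2},
\]
and then take a union bound over $s_{h+1}\in S_{h+1}$ together with the fact that $\sum_{s_{h+1}} P^c(s_{h+1}\mid s_h,a_h)=1$, to sandwich $\widehat{P}^c(s_{h+1}\mid s_h,a_h)=f^P_h/\sum_{s'}f^P_h$ between $(P^c-\rho)/(1+\rho|S|)$ and $(P^c+\rho)/(1-\rho|S|)$, failing with probability at most $\frac{\epsilon_P}{\rho^2}|S_{h+1}|$.

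Finally, conditional on that high-probability event, the bookkeeping argument in Lemma~\ref{lemma: UCDD l_2 matrix diff}---splitting $S_{h+1}$ into the indices where $\widehat{P}^c\ge P^c$ and its complement, and summing the two one-sided bounds---yields the $\ell_1$ total variation bound $\frac{4\rho|S|}{1-\rho^2|S|^2}$. There is no genuine obstacle here: the proof of Lemma~\ref{lemma: UCDD l_2 matrix diff} never uses any property of $\mathcal{X}^{\gamma,\beta}_h$ beyond the ERM accuracy statement and $\widehat{P}^c(s_{sink}\mid s_h,a_h)=0$, both of which hold on $\widetilde{\mathcal{X}}^{\gamma,\beta}_h$ under $G_1,G_2,G_3$. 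The main thing to be careful about is invoking the $\widetilde{\mathcal{D}}^P_h$-branch of $G_3^h$ rather than the $\mathcal{D}^P_h$-branch, and verifying that the normalization step used to define $\widehat{P}^c$ is well-defined (which follows from $\rho|S|<1$ together with $\sum_{s_{h+1}}f^P_h\ge 1-\rho|S|>0$ on the good event).
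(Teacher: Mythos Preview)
Your proposal is correct and follows essentially the same route as the paper: the paper explicitly states that the proof is ``similarly to shown for Lemma~\ref{lemma: UCDD l_2 matrix diff}, when using the good events $G_3^k$ \ldots\ guarantees for the distribution $\widetilde{\mathcal{D}}^P_h$,'' and then reproduces the Markov-inequality, union-bound, sandwich, and $S^+_{h+1}$-splitting steps verbatim with $\mathcal{X}^{\gamma,\beta}_h$ replaced by $\widetilde{\mathcal{X}}^{\gamma,\beta}_h$. One very minor point: the fact that $\widehat{P}^c(s_{sink}\mid s_h,a_h)=0$ on $\widetilde{\mathcal{X}}^{\gamma,\beta}_h$ is immediate from the definition of the empirical domain (it forces $s_h\in\widetilde{S}^{\gamma,\beta}_h$ and $c\in\widehat{\mathcal{C}}^\beta(s_h)$) and does not require $G_1$, though invoking it is harmless since $G_1$ is assumed anyway.
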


\begin{proof}
    We prove similarly to shown for Lemma~\ref{lemma: UCDD l_2 matrix diff}, when using the good events $G_3^k$ for all $k \in [h]$ guarantees for the distribution $\widetilde{D}^{\gamma,\beta}_h$ over $\widetilde{\mathcal{X}}^{\gamma,\beta}_h \times S_{h+1}$.

    Recall that for $(c, s_h, a_h) \in \widetilde{\mathcal{X}}^{\gamma,\beta}_h$ we have that $\widehat{P}^c(s_{sink}|s_h, a_h) = 0$ by $\widehat{P}^c$ definition. 
    In addition, the true dynamics $P^c$ is not defined for $s_{sink}$ since $s_{sink} \notin S$.  
    A natural extension of $P^c$ to $s_{sink}$ is by defining that $\forall (s,a) \in S \times A.\;\; P^c(s_{sink}|s,a) :=0$.
    By that extension, we have for all $(c, s_h, a_h) \in \widetilde{\mathcal{X}}^{\gamma,\beta}_h$ that
    $P^c(s_{sink}|s_h,a_h) = \widehat{P}^c(s_{sink}|s_h,a_h)=0$. Hence, we can simply ignore $s_{sink}$ in the following analysis.
    
    Under the good event $ G_3^h $,
    it holds that
    %
    \begin{align*}
        &\mathop{\mathbb{P}}_{(c,s_h,a_h,s_{h+1})}
        \left[ |f^P_h(c, s_h, a_h, s_{h+1}) - P^c(s_{h+1}| s_h, a_h)| \geq \rho \Big| (c,s_h,a_h) \in \widetilde{\mathcal{X}}^{\gamma,\beta}_h \right]
        = 
        \\
        & =
        \mathbb{P}_{\widetilde{\mathcal{D}}^P_h}[|f^P_h(c, s_h, a_h, s_{h+1}) - P^c(s_{h+1}| s_h, a_h)| \geq \rho 
        ]
        \\
        & = 
        \mathbb{P}_{\widetilde{\mathcal{D}}^P_h}[(f^P_h(c, s_h, a_h, s_{h+1}) - P^c(s_{h+1}| s_h, a_h))^2 \geq \rho^2
        ]
        \\
        \tag{By Markov's inequality}
        & \leq  
        \frac{\mathbb{E}_{\widetilde{\mathcal{D}}^P_h}
        [(f^P_h(c, s_h, a_h, s_{h+1}) - P^c(s_{h+1}| s_h, a_h))^2]}{\rho^2}
        \\
        \tag{Under $G^3_h$}
        & \leq 
        \frac{\epsilon_P}{\rho^2}.
    \end{align*}
    
    Hence,
    \[
        \mathop{\mathbb{P}}_{(c,s_h,a_h,s_{h+1})}
        [|f^P_h(c, s_h, a_h, s_{h+1}) - P^c(s_{h+1}| s_h, a_h)| \leq \rho]
        \geq 1 - \frac{\epsilon_P }{\rho^2}.
    \]
    
    As $P^c(\cdot |s_h, a_h)$ is a distribution, we have for every context $c$ that $\sum_{s_{h+1} \in S_{h+1}} P^c(s_{h+1}|s_h, a_h) = 1$. 
    
    Thus, by union bound over $s_{h+1}\in S_{h+1}$
    we obtain
    \begin{align*}
        \mathbb{P}_{(c, s_h, a_h) }
        \left[ 
            1 - \rho |S|\leq
            \sum_{s_{h+1}\in S_{h+1}} f^P_h(c, s_h, a_h, s_{h+1}) \leq
            1 + \rho |S|
        \Big|
        (c, s_h, a_h) \in  \widetilde{\mathcal{X}}^{\gamma, \beta}_h    
        \right]
        \geq 
        1 - \frac{\epsilon_P }{\rho^2}|S_{h+1}|.
    \end{align*}
   Hence, we further conclude that  
    \begin{equation}\label{prob: lemma C.5-2}
        \begin{split}
            &\mathop{\mathbb{P}}_{(c,s_h,a_h)}
            \left[  
                \forall s_{h+1} \in S_{h+1}.\;
                \frac{P^c(s_{h+1}|s_h, a_h) - \rho}{1 + \rho |S|}  \leq
                \underbrace{\frac{f^P_h(c,s_h,a_h,s_{h+1})}{\sum_{s'\in S_{h+1}} f^P_h(c,s_h,a_h,s')}}_{=\widehat{P}^c(s_{h+1}|s_h, a_h)}
                \leq
                \frac{P^c(s_{h+1}|s_h, a_h) + \rho}{1 - \rho |S|}
            \;\Big|
            (c, s_h, a_h) \in \widetilde{\mathcal{X}}^{\gamma, \beta}_h   \right]
            \\
            & \geq 
            1 - \frac{\epsilon_P }{\rho^2}|S_{h+1}|.
        \end{split}
    \end{equation}
    
    For any fixed tuple $(c, s_h, a_h) \in \widetilde{\mathcal{X}}^{\gamma, \beta}_h$ denote $S^+_{h+1} = \{s_{h+1} \in S_{h+1}:
    \widehat{P}^c(s_{h+1}|s_h, a_h) \geq  P^c(s_{h+1}|s_h, a_h )\}$ and consider the following derivation:
    \begingroup
    \allowdisplaybreaks
    \begin{align*}
        \| 
            \widehat{P}^c(\cdot|s_h, a_h ) 
            -  
            P^c(\cdot|s_h, a_h )
        \|_1 
        =&
        \sum_{s_{h+1} \in S_{h+1}}
        |\widehat{P}^c(s_{h+1}|s_h, a_h) -  P^c(s_{h+1}|s_h, a_h )|
        \\
        =&
        \sum_{s_{h+1} \in S^+_{h+1}}
        (\widehat{P}^c(s_{h+1}|s_h, a_h) -  P^c(s_{h+1}|s_h, a_h ))       
        \\
        &+
        \sum_{s_{h+1} \in  S_{h+1} \setminus S^+_{h+1}}
        (P^c(s_{h+1}|s_h, a_h) -  \widehat{P}^c(s_{h+1}|s_h, a_h ))
        \\  
        \leq&
        \sum_{s_{h+1} \in S^+_{h+1}}
        \left(\frac{P^c(s_{h+1}|s_h, a_h) + \rho}{1 - \rho |S|} 
        -  
        P^c (s_{h+1}|s_h, a_h )\right)
        \\
        &+
        \sum_{s_{h+1} \in  S_{h+1} \setminus S^+_{h+1}}
        \left( P^c (s_{h+1}|s_h, a_h )
        -
        \frac{P^c(s_{h+1}|s_h, a_h) - \rho}{1 + \rho |S|} \right) 
        \\
        =&
        \sum_{s_{h+1} \in S^+_{h+1}}
        \frac{P^c (s_{h+1}|s_h, a_h) + \rho - (1- \rho |S|)P^c (s_{h+1}|s_h, a_h) }{1 - \rho |S|} 
        \\
        &+
        \sum_{s_{h+1} \in  S_{h+1} \setminus S^+_{h+1}}
        \frac{-P^c (s_{h+1}|s_h, a_h) + \rho + (1 + \rho |S|)P^c (s_{h+1}|s_h, a_h) }{1 + \rho |S|}         
        \\
        =&
        \frac{1}{1- \rho |S|}
        \sum_{s_{h+1} \in S^+_{h+1}}
        (\rho + \rho|S|P^c (s_{h+1}|s_h,a_h))
        \\
        &+
        \frac{1}{1 + \rho |S|}
        \sum_{s_{h+1} \in  S_{h+1} \setminus S^+_{h+1}}
        (\rho + \rho|S|P^c (s_{h+1}|s_h,a_h))
        \\
        \leq&
        \frac{2 \rho |S|}{ 1- \rho |S|}
        +
        \frac{2 \rho |S|}{ 1 + \rho |S|}
        \\
        =&
        \frac{4 \rho |S|}{1 - \rho^2 |S|^2}.
    \end{align*}
    \endgroup
    By inequality~\ref{prob: lemma C.5-2}, the above holds with probability at least $1 - \frac{\epsilon_P }{\rho^2}|S_{h+1}|$ over $(c, s_h, a_h) \in \widetilde{\mathcal{X}}^{\gamma,\beta}_h$. Hence the lemma follows.

\end{proof}

\begin{lemma}\label{lemma: for running time l_2}
    Fix $\beta \in (0,1]$ and $\rho \in [0,\frac{1}{|S|})$ such that $\beta \geq 2H \frac{4 \rho |S|}{ 1 - \rho^2 |S|^2}$.
    
    Then, for every (context-dependent) policy $\pi= (\pi_c)_{c \in \mathcal{C}}$ and a layer $h \in [H-1]$,
     under the good events $G_1, G_2^i, G_3^i, \forall i \in [h-1]$ 
     the following holds. 
    \[
        \mathbb{P}_c \left[
        \forall k \in [h], s_k \in S_k. \;\;
        q_k(s_k|\pi_c, P^c) 
        \geq
        q_k(s_k|\pi_c, \widehat{P}^c) -
        \frac{4 \rho|S|}{1 - \rho^2|S|^2}k 
        \right]
        \geq 
        1 - |A|\;\sum_{k=0}^{h-1}\frac{\epsilon_P}{\rho^2}|S_k||S_{k+1}|
        .
    \]
\end{lemma}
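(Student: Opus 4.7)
The plan is to proceed by induction on $k\in\{0,1,\ldots,h\}$, but to establish the somewhat stronger claim
\[
    \sum_{s_k\in S_k}\bigl[q_k(s_k|\pi_c,\widehat{P}^c)-q_k(s_k|\pi_c,P^c)\bigr]_+ \;\leq\; \gamma_0\, k,
    \qquad \gamma_0:=\tfrac{4\rho|S|}{1-\rho^2|S|^2},
\]
which clearly implies the desired per-state lower bound $q_k(s_k|\pi_c,P^c)\ge q_k(s_k|\pi_c,\widehat{P}^c)-\gamma_0 k$. The base case $k=0$ holds trivially since $s_0$ is the unique start state and $q_0(s_0|\pi_c,P^c)=q_0(s_0|\pi_c,\widehat{P}^c)=1$.

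For the inductive step I would first invoke Lemma~\ref{lemma: UCDD l_2 matrix diff}: under $G_3^k$, for each fixed $(s_k,a_k)\in S_k\times A$, with probability at least $1-\tfrac{\epsilon_P}{\rho^2}|S_{k+1}|$ over $c\sim\mathcal{D}$ conditioned on $(c,s_k,a_k)\in\mathcal{X}^{\gamma,\beta}_k$, we have $\|\widehat{P}^c(\cdot|s_k,a_k)-P^c(\cdot|s_k,a_k)\|_1\le \gamma_0$. A union bound over the $|S_k||A|$ choices of $(s_k,a_k)$, together with a further union bound over layers $k\in[h-1]$, contributes exactly the stated failure probability $|A|\sum_{k=0}^{h-1}\tfrac{\epsilon_P}{\rho^2}|S_k||S_{k+1}|$.

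Now, conditioning on these good events and the induction hypothesis at layer $k$, expand
\[
    \Delta_{k+1}(s_{k+1}) \;=\; \sum_{s_k,a_k}\pi_c(a_k|s_k)\bigl[\Delta_k(s_k)\,\widehat{P}^c(s_{k+1}|s_k,a_k) + q_k(s_k|\pi_c,P^c)\bigl(\widehat{P}^c(s_{k+1}|s_k,a_k)-P^c(s_{k+1}|s_k,a_k)\bigr)\bigr]
\]
by adding and subtracting $q_k(s_k|\pi_c,P^c)\widehat{P}^c(s_{k+1}|s_k,a_k)$. The key structural observation is that for $(c,s_k,a_k)\notin\widetilde{\mathcal X}^{\gamma,\beta}_k$ we have $\widehat{P}^c(s_{k+1}|s_k,a_k)=0$ (all mass is routed to $s_{sink}$), so these terms contribute $-q_k(s_k|\pi_c,P^c)P^c(s_{k+1}|s_k,a_k)\le 0$ and can only decrease $\Delta_{k+1}(s_{k+1})$, which is fine for the one-sided bound. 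Summing $[\Delta_{k+1}(s_{k+1})]_+$ over $s_{k+1}$ and using $[x+y]_+\le[x]_++[y]_+$ together with $\sum_{s_{k+1}}\widehat{P}^c(s_{k+1}|s_k,a_k)\le 1$ bounds the first contribution by $\sum_{s_k}[\Delta_k(s_k)]_+\le\gamma_0 k$ via the induction hypothesis, and the second contribution by $\gamma_0$ using $\sum_{s_{k+1}}[\widehat{P}^c(s_{k+1}|s_k,a_k)-P^c(s_{k+1}|s_k,a_k)]_+\le\tfrac12\|\widehat{P}^c(\cdot|s_k,a_k)-P^c(\cdot|s_k,a_k)\|_1\le\gamma_0$ together with $\sum_{s_k,a_k}q_k(s_k|\pi_c,P^c)\pi_c(a_k|s_k)=1$. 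Summing yields $\gamma_0(k+1)$, completing the induction.

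The main obstacle is the careful handling of the probability quantification: Lemma~\ref{lemma: UCDD l_2 matrix diff} gives a bound in expectation over the sampling distribution $\mathcal{D}^P_k$, whereas the claim requires a simultaneous guarantee over all $(s_k,a_k)\in S_k\times A$ under the random draw of $c$. Bridging this gap requires going back to the raw ERM guarantee $\mathbb{E}_{\mathcal{D}^P_k}[(f^P_k-P^c)^2]\le\epsilon_P$, applying Markov's inequality pointwise per $(s_k,a_k,s_{k+1})$, and then union-bounding—yielding the factor $|A||S_k||S_{k+1}|$ per layer that appears in the error term. Once this bookkeeping is in place, the rest of the argument is a straightforward one-sided variant of the telescoping occupancy-measure calculation in Lemma~\ref{lemma: UCDD l_2 occ measure diff}, with the $\beta\sum|S_k|$ correction absent precisely because the one-sided direction benefits from (rather than suffers from) the sink-routing of probability mass at bad states and contexts.
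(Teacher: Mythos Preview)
Your one-sided inductive argument is a legitimate alternative to the paper's route, but there is a gap in the case analysis. You invoke Lemma~\ref{lemma: UCDD l_2 matrix diff}, which conditions on $(c,s_k,a_k)\in\mathcal{X}^{\gamma,\beta}_k$, and separately handle $(c,s_k,a_k)\notin\widetilde{\mathcal X}^{\gamma,\beta}_k$ via the sink-routing observation. But these two sets are not complementary: under $G_1$ one only has $\mathcal{X}^{\gamma,\beta}_k\subseteq\widetilde{\mathcal X}^{\gamma,\beta}_k$, so tuples with $s_k\in\widetilde S^{\gamma,\beta}_k\setminus\widehat S^{\gamma,\beta}_k$ and $c\in\widehat{\mathcal C}^\beta(s_k)$ fall through both of your cases. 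For such tuples $\widehat P^c(\cdot\,|s_k,a_k)$ is the normalized $f^P_k$ (not the sink), and you still need the $\|\widehat P^c-P^c\|_1\le\gamma_0$ bound to control their contribution to $B$. The fix is to use the $\widetilde{\mathcal D}^P_k$ half of the $G_3^k$ guarantee, i.e., Lemma~\ref{lemma:UCDD-l_2-matrix-diff-ver-2}, which conditions on $\widetilde{\mathcal X}^{\gamma,\beta}_k$ and thus covers exactly the set on which $\widehat P^c$ is nontrivial; the failure probability and the rest of your induction go through unchanged.

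With that patch, your route genuinely differs from the paper's. The paper introduces an intermediate dynamics $\widetilde P^c$ that agrees with $P^c$ on $\widetilde{\mathcal X}^{\gamma,\beta}_k$ and with $\widehat P^c$ elsewhere, uses the trivial pointwise inequality $q_k(s_k|\pi_c,P^c)\ge q_k(s_k|\pi_c,\widetilde P^c)$, and then appeals to the black-box Markov-chain bound (Theorem~\ref{thm: bounding occupancy measures}) on the pair $(\widetilde P^c,\widehat P^c)$ to obtain $\|q_k(\widetilde P^c)-q_k(\widehat P^c)\|_1\le\gamma_0 k$. This is more modular, reusing existing lemmas. Your approach telescopes by hand and tracks only the positive part $\sum_{s_k}[\Delta_k(s_k)]_+$, which is more self-contained and makes transparent why the $\beta\sum_k|S_k|$ term from Lemma~\ref{lemma: UCDD l_2 occ measure diff} does not appear in this one-sided statement---the paper achieves the same effect implicitly through the definition of $\widetilde P^c$.
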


\begin{proof}
    For every context $c \in \mathcal{C}$ define the dynamics $\widetilde{P}^c$ over
    $S \cup \{s_{sink}\} \times A$:
    \[
        \forall (s,a) 
        \in 
        S\cup \{s_{sink}\} \times A:
        \widetilde{P}^c (s| s_{sink}, a) 
        = 
        \begin{cases}
        1 &, \text{if } s = s_{sink}\\
        0 &, \text{otherwise}
        \end{cases}
        .
    \]
    In addition we define                
    \begin{align*}
        &\forall k  \in [h-1],\;\;
        \forall (s_k, a_k, s_{k+1})
        \in \widetilde{S}^{\gamma, \beta}_k \times A \times S_{k+1}:
        \\
        &
         \widetilde{P}^c(s_{k+1}| s_k, a_k)
        =
        \begin{cases}
        P^c(s_{k+1}|s_k,a_k)
        &, \text{if } c \in \widehat{C}^{\beta}({s_k})\\
        0 &, \text{otherwise}
        \end{cases}
        \\
        & \widetilde{P}^c(s_{sink}| s_k, a_k)
        =
        \begin{cases}
            0 &, \text{if } c \in \widehat{\mathcal{C}}^{\beta}({s_k})\\
            1 &, \text{otherwise}
        \end{cases}
        %
        \\
        &\forall k  \in [h-1],\;\;
        \forall (s_k, a_k, s_{k+1})
        \in (S_k \setminus \widetilde{S}^{\gamma, \beta}_k) \times A \times S_{k+1}:
        \\
        & \widetilde{P}^c(s_{k+1}| s_k, a_k)= 0,\;\;\;
        \widetilde{P}^c(s_{sink}| s_k, a_k) = 1.
    \end{align*}
    Clearly, by definition of $\widetilde{P}^c$,
    we have for every (context-dependent) policy $\pi$ that
    \begin{equation}\label{ineq:UCDD-l_2-prob-final}
        \mathbb{P}_c
        \left[ \forall k\in [h] , s_k \in S_k.\;\;
            q_k(s_k | \pi_c, P^c)
            \geq
            q_k(s_k | \pi_c, \widetilde{P}^c)
        \right]= 1.
    \end{equation}

    By Lemma~\ref{lemma:UCDD-l_2-matrix-diff-ver-2} under the good events $G_1,G_2^k,G_3^k \;\; \forall k \in [h-1]$ for any $k \in [h-1]$
    it holds that
    \begin{equation}\label{eq:prob-1-lb-occ-measure}
        \mathop{
        \mathbb{P}
        }_{(c,s_k,a_k)}
        \left[ 
            \|\widehat{P}^c(\cdot| s_k, a_k) - \widetilde{P}^c(\cdot| s_k, a_k)\|_1   
            \leq \frac{4 \rho |S|}{ 1 - \rho^2 |S|^2}
        \Big|
        (c, s_k, a_k) \in  \widetilde{\mathcal{X}}^{\gamma, \beta}_k
        \right]
        \geq
        1- \frac{\epsilon_P}{\rho^2}|S_{k+1}|,
    \end{equation}
    
    We now show that 
    \begin{equation}\label{eq:prob-2-lb-occ-measure}
        \mathop{
        \mathbb{P}
        }_{(c,s_k,a_k)}
        \left[ 
            \|\widehat{P}^c(\cdot| s_k, a_k) - \widetilde{P}^c(\cdot| s_k, a_k)\|_1 =
            0
        \Big|
        (c, s_k, a_k) \notin  \widetilde{\mathcal{X}}^{\gamma, \beta}_k
        \right]
        =
        1.
    \end{equation}
    
    For every layer $ k \in [h-1]$ we have by definition that $(c, s_k, a_k) \in  \widetilde{\mathcal{X}}^{\gamma, \beta}_k$ if and only if $s_k \in \widetilde{S}^{\gamma, \beta}_k$ and $c \in \widehat{\mathcal{C}}^\beta(s_k)$.

    By the definition of $\widetilde{P}^c$ and $\widehat{P}^c$ we have for every layer $k \in [h-1]$ and context $c \in \mathcal{C}$ that
    \begin{equation*}
            \forall (s_k,a_k) \in (S_k \setminus \widetilde{S}^{\gamma, \beta}_k) \times A.\;\;
            \|\widehat{P}^c(\cdot|s_k,a_k) - \widetilde{P}^c(\cdot|s_k,a_k)\|_1 = 0
    \end{equation*}

    In addition, by definition of $\widehat{P}^c$ and $\widetilde{P}^c$, 
    for every layer $k \in [h-1]$
    if $(s_k,a_k) \in \widetilde{S}^{\gamma, \beta}_k \times A$ but $c \notin \widehat{\mathcal{C}}^\beta(s_k)$, then
    $$
        \|\widehat{P}^c(\cdot| s_k, a_k)- \widetilde{P}^c (\cdot| s_k, a_k)\|_1 = 0.
    $$
    
    Thus, equation~(\ref{eq:prob-2-lb-occ-measure}) follows.
    
    Using total probability low, equations~(\ref{eq:prob-1-lb-occ-measure}) and~(\ref{eq:prob-2-lb-occ-measure}) yield that 
    \begin{align*}
        \mathop{
        \mathbb{P}
        }_{(c,s_k,a_k)}
        \left[ 
            \|\widehat{P}^c(\cdot| s_k, a_k) - \widetilde{P}^c(\cdot| s_k, a_k)\|_1   
            \leq \frac{4 \rho |S|}{ 1 - \rho^2 |S|^2}
        \right]
        \geq
        1- \frac{\epsilon_P}{\rho^2}|S_{k+1}|,
    \end{align*}
    which by union bound over $(s_k, a_k) \in S_k \times A$ for every layer $k \in [h-1]$ implies that
    \begin{equation}\label{ineq:l_2-running-time-prob}
        \mathbb{P}_c
        \left[ 
            \forall k \in [h-1], (s_k,a_k) \in S_k \times A.\;\;
            \|\widehat{P}^c(\cdot| s_k, a_k) - \widetilde{P}^c(\cdot| s_k, a_k)\|_1   
            \leq \frac{4 \rho |S|}{ 1 - \rho^2 |S|^2}
        \right]
        \geq
        1 - |A|\;\sum_{ k = 0 }^{h-1}\frac{\epsilon_P }{\rho^2}|S_k||S_{k+1}|.
    \end{equation}

    By Theorem~\ref{thm: bounding occupancy measures}
    the above yields that
    \begin{align*}
        \mathbb{P}_c
        \left[ 
            \forall k \in [h].\;\;
            \| q_k(\cdot|\pi_c, \widetilde{P}^c) - q_k(\cdot| \pi_c, \widehat{P}^c) \|_1 
            \leq
            \frac{4 \rho |S|}{1 - \rho^2 |S|^2} k 
        \right]
        \geq
        1 - |A|\;\sum_{ k = 0 }^{h-1}\frac{\epsilon_P }{\rho^2}|S_k||S_{k+1}|,
    \end{align*}
    which in particularly implies that
    \begin{equation}\label{ineq:l_2-final-pron-to-combine}
        \mathbb{P}_c
        \left[ 
            \forall k \in [h], s_k \in S_k.\;\;
            q_k(s_k |\pi_c, \widetilde{P}^c) \geq q_k(s_k |\pi_c, \widehat{P}^c)- \frac{4 \rho |S|}{1 - \rho^2 |S|^2} k 
        \right]
        \geq
        1 - |A|\;\sum_{ k = 0 }^{h-1}\frac{\epsilon_P }{\rho^2}|S_k||S_{k+1}|.
    \end{equation}
    
    Finally, the lemma follows by combining
    inequalities~\ref{ineq:UCDD-l_2-prob-final} and~\ref{ineq:l_2-final-pron-to-combine}.
\end{proof}

\subsection{Analysis for the \texorpdfstring{$\ell_1$}{Lg} Loss}\label{subsec: analysis-UCDD l_1}

\subsubsection{Good Events}
For the analysis of the algorithm, we define the following good events.





\paragraph{Event $G_1$.}
Intuitively, it states
that the approximation of the probability that $c \in \widehat{\mathcal{C}}^{\beta}(s)$  is accurate for every state $s \in S$.

Formally, 
let $\widehat{p}_\beta(s)$ be the output of Algorithm \texttt{AGC} (see Algorithm~\ref{alg: AGC-UCDD}) for the state $s \in S$, and denote
$ p_\beta(s):= \mathbb{P}[c \in \widehat{\mathcal{C}}^{\beta}(s)]$.
For each layer $h\in[H-1]$ we define the event $G_1^h$ as 
$
    {G^h_1=\{|\widehat{p}_\beta({s_h}) - p_\beta({s_h})| \leq \gamma/4 \;\;\;\forall s_h\in S_h\}}
$ 
and define $G_1=\cap_{h\in[H-1]} G^h_1$.

The good event $G_1$ guarantees that for every layer $h \in [H-1]$ and state $s_h \in S_h$, if $\widehat{p}_\beta({s_h}) \geq \frac{3}{4}\gamma$ then $p_\beta({s_h})\geq \gamma/2$, which implies that $s_h \in \widehat{S}^{\gamma/2,\beta}_h$.
This implies that for every layer $h$ we sample only $(\gamma/2,\beta)$-good states for $\widehat{P}^c$.

More impotently, if $p_\beta({s_h}) \geq \gamma$ then $\widehat{p}_\beta({s_h}) \geq \frac{3}{4}\gamma$. 
Hence, we identify every $(\gamma,\beta)$-good state.

Thus, under the good event $G_1$, for every layer $h \in [H-1]$ the approximated set $\widetilde{S}^{\gamma,\beta}_h$ satisfies
\[
    \widehat{S}^{\gamma,\beta}_h
    \subseteq
    \widetilde{S}^{\gamma,\beta}_h
    \subseteq
    \widehat{S}^{\gamma/2,\beta}_h.
\]

The following lemma shows that for our parameters choice, $G_1$ holds with high probability.    
\begin{lemma}\label{lemma: UCDD G_1 l_1}
    For $\epsilon_2 = \gamma/4$ and $\delta_2 = \frac{\delta}{8 |S|}$, we have that $\mathbb{P}[G_1] \geq 1- {\delta}/{8}$.
\end{lemma}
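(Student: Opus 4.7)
The plan is to mirror the proof of Lemma~\ref{lemma: UCDD l_2 G_1}, since the event $G_1$ here has the identical form as in the $\ell_2$ case: the function class / loss distinction plays no role in estimating the scalar probabilities $p_\beta(s) = \mathbb{P}[c \in \widehat{\mathcal{C}}^\beta(s)]$. First I would fix a state $s \in S$ and observe that, by inspection of Algorithm~\ref{alg: AGC-UCDD}, the quantity $\widehat{p}_\beta(s)$ is computed as the empirical mean of $m(\epsilon_2,\delta_2) = \lceil \ln(2/\delta_2)/(2\epsilon_2^2)\rceil$ i.i.d.\ Bernoulli draws of the indicator $\mathbb{I}[c \in \widehat{\mathcal{C}}^\beta(s)]$, where $c \sim \mathcal{D}$. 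The expectation of each such indicator is exactly $p_\beta(s)$, so estimating $p_\beta(s)$ by $\widehat{p}_\beta(s)$ is a standard mean-estimation problem for a $[0,1]$-valued random variable.

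Next I would apply Hoeffding's inequality: for the chosen sample size $m(\epsilon_2,\delta_2)$, we obtain
\[
\mathbb{P}\left[\,|\widehat{p}_\beta(s) - p_\beta(s)| > \epsilon_2\,\right] \le 2\exp\!\left(-2 m(\epsilon_2,\delta_2) \epsilon_2^2\right) \le \delta_2.
\]
The inequality $|\widehat{p}_\beta(s_h) - p_\beta(s_h)| \le \epsilon_2 = \gamma/4$ is precisely the condition defining $G_1^h$ for each layer $h$, state $s_h \in S_h$.

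Finally, I would take a union bound over all $s \in S$, which has cardinality $|S|$. Since $\delta_2 = \delta/(8|S|)$, this yields
\[
\mathbb{P}[G_1] = \mathbb{P}\!\left[\,\forall h \in [H-1],\ \forall s_h \in S_h:\ |\widehat{p}_\beta(s_h) - p_\beta(s_h)| \le \gamma/4\,\right] \ge 1 - |S|\,\delta_2 = 1 - \delta/8,
\]
as required. There is no real obstacle here; the only subtlety is confirming from Algorithm~\ref{alg: AGC-UCDD} that the samples used to compute $\widehat{p}_\beta(s)$ are indeed i.i.d.\ from $\mathcal{D}$ (which holds since the contexts at each episode are sampled i.i.d.\ and the test $c \in \widehat{\mathcal{C}}^\beta(s)$ depends only on the already-fixed approximate dynamics from previous layers and the sampled $c$), so that Hoeffding's bound applies cleanly.
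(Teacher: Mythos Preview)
Your proposal is correct and matches the paper's proof essentially verbatim: Hoeffding's inequality for each state $s$ with $m(\epsilon_2,\delta_2)$ i.i.d.\ samples, followed by a union bound over $|S|$ states using $\delta_2 = \delta/(8|S|)$. The additional remark you make about the i.i.d.\ structure (the test $c \in \widehat{\mathcal{C}}^\beta(s)$ depending only on already-fixed dynamics) is a valid clarification that the paper leaves implicit.
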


\begin{proof}
    For each $s \in S$ we have that $\widehat{p}_\beta(s)$ is  calculated over
    $m(\epsilon_2, \delta_2) =
    \Big\lceil 
    \frac{\ln{\frac{2}{\delta_2}}}{2 \epsilon_2^2}
    \Big\rceil$ examples. By Hoeffding's inequality combined with union bound, for $\epsilon_2 = \gamma/4$ and $\delta_2 = \frac{\delta}{8 |S|}$, we obtain that $\mathbb{P}[G_1] \geq 1- {\delta}/{8}$.  
\end{proof}

\paragraph{Sampling distributions.}
 Recall that during the algorithm, for every layer $h \in [H-1]$ we collect examples of $(c, s_h, a_h)$ for which $\widehat{p}_\beta(s_h) \geq \frac{3}{4}\gamma$, which under $G_1$ implies that $p_\beta(s_h) \geq \gamma/2$,
 context $c \in \widehat{C}^\beta (s_h)$ and actions $a_h \in A$.
 
 For every layer $h \in [H-1]$ and reachability parameters $\gamma$ and $\beta$ we define the \emph{target domain} we would like to collect examples from as
 \[
    \mathcal{X}^{\gamma, \beta}_h 
    =
    \{
        (c,s_h, a_h) :
        s_h \in \widehat{S}^{\gamma, \beta}_h, c \in \widehat{\mathcal{C}}^\beta (s_h), a_h \in A
    \},
 \]
 recalling that 
 $${\widehat{C}^\beta (s_h) = \{ c \in \mathcal{C} : s_h \text{ is } \beta-\text{reachable for } \widehat{P}^c\}}$$
 and 
 $${\widehat{S}^{\gamma, \beta}_h = \{s_h \in S_h : \mathbb{P}[c \in \widehat{\mathcal{C}}^\beta(s_h)] \geq \gamma\}}.$$
Meaning, we would like to collect sufficient number of examples of $(\gamma,\beta)$-good states, appropriate good context and action for each layer.

In practice,
we collect examples of states $s \in \widetilde{S}^{\gamma,\beta}_h$
which also contains states $s \in \widehat{S}^{\gamma/2,\beta}_h$. 
Under $G_1$ we have the guarantee that 
$\widehat{S}^{\gamma,\beta}_h \subseteq \widetilde{S}^{\gamma,\beta}_h \subseteq \widehat{S}^{\gamma/2,\beta}_h$.

Hence, we define the \emph{empirical domain}
 \[
    \widetilde{\mathcal{X}}^{\gamma, \beta}_h 
    =
    \{
        (c,s_h, a_h) :
        s_h \in \widetilde{S}^{\gamma, \beta}_h, c \in \widehat{\mathcal{C}}^\beta (s_h), a_h \in A
    \},
 \]
We remark that before learning layer $h$, we compute $\widetilde{S}^{\gamma, \beta}_h$ based on the previous layers approximation for the dynamics which are fixed, hence $\widetilde{\mathcal{X}}^{\gamma, \beta}_h$ is  fixed when learning layer $h$.

We also remark that under $G_1$ it holds, since $\widehat{S}^{\gamma,\beta}_h \subseteq \widetilde{S}^{\gamma,\beta}_h \subseteq \widehat{S}^{\gamma/2,\beta}_h$ it also holds that
\[
    \mathcal{X}^{\gamma, \beta}_h
    \subseteq 
    \widetilde{\mathcal{X}}^{\gamma, \beta}_h 
    \subseteq
    \mathcal{X}^{\gamma/2, \beta}_h.
\]


 We consider the marginal distributions of our observations, that sampled from $\widetilde{\mathcal{X}}^{\gamma, \beta}_h$.
 
 For the rewards denote by $\widetilde{\mathcal{D}}^R_h$ the distribution over the collected examples ${((c,s,a),r) \in \widetilde{\mathcal{X}}^{\gamma,\beta}_h \times [0,1]}$, for each layer $h \in [H-1]$. It holds that
    \begin{align*}
      \widetilde{\mathcal{D}}^R_h ((c,s_h,a_h), r_h) &=
      \mathbb{P}[((c, s_h, a_h), r_h) \in Sample^R(h)| 
        (c, s_h, a_h) \in  \widetilde{\mathcal{X}}^{\gamma, \beta}_h]\\
      \propto&
        \mathbb{P}[c | c \in \widehat{\mathcal{C}}^{\beta}(s_h)] 
        \cdot
        q_h(s_h, a_h| \widehat{\pi}^c_{s_h}, P^c) 
        \cdot 
        \mathbb{P}[R^c(s_h, a_h) = r_h |c,s_h,a_h],
    \end{align*}
   where $\propto$ implies that we normalize to sum to $1$.
   
   Since under $G_1$ we have that $\mathcal{X}^{\gamma,\beta}_h \subseteq \widetilde{\mathcal{X}}^{\gamma,\beta}_h$, $\widetilde{\mathcal{D}}^R_h$ induces a marginal distribution over  $\mathcal{X}^{\gamma,\beta}_h \times [0,1]$, which we denote by $\mathcal{D}^R_h$.
   Clearly, it holds that
    \begin{align*}
      \mathcal{D}^R_h ((c,s_h,a_h), r_h) &=
      \mathbb{P}[((c, s_h, a_h), r_h) \in Sample^R(h)| 
        (c, s_h, a_h) \in  \mathcal{X}^{\gamma, \beta}_h]\\
      \propto&
        \mathbb{P}[c | c \in \widehat{\mathcal{C}}^{\beta}(s_h)] 
        \cdot
        q_h(s_h, a_h| \widehat{\pi}^c_{s_h}, P^c) 
        \cdot 
        \mathbb{P}[R^c(s_h, a_h) = r_h |c,s_h,a_h],
    \end{align*}
   which is the desired marginal distribution over our target domain.

  Similarly, for the next state we have,
    \begin{align*}
    \widetilde{\mathcal{D}}^P_h& ((c,s_h,a_h,s') , \mathbb{I}[s_{h+1}=s'])\\ 
    &=
    \mathbb{P}[((c, s_h, a_h,s'),\mathbb{I}[s_{h+1}=s']))\in Sample^P(h)|(c, s_h, a_h, s') \in (  \widetilde{\mathcal{X}}^{\gamma, \beta}_h \times S_{h+1})]\\
     &\propto
        \mathbb{P}[c | c \in \widehat{\mathcal{C}}^{\beta}(s_h)]
        \cdot
        q_h(s_h, a_h| \widehat{\pi}^c_{s_h}, P^c)
        \cdot
        P^c(s'|s_h,a_h),
    \end{align*}
    and we denote $\mathcal{D}^P_h$ the induced  marginal distribution over  $(\mathcal{X}^{\gamma,\beta}_h \times S_{h+1}) \times [0,1]$.
    

\begin{remark}
    When it is clear from the context, we use $\mathcal{D}^P_h$ and $\widetilde{\mathcal{D}}^P_h$ to also denote the induced distribution over $(c, s_h, a_h, s_{h+1})\in\mathcal{X}^{\gamma,\beta}_h \times S_{h+1}$ and $(c, s_h, a_h, s_{h+1})\in \widetilde{\mathcal{X}}^{\gamma,\beta}_h \times {S_{h+1}}$, respectively, and drop the indicator bit.
    Similarly for $\mathcal{D}^R_h$ and $\widetilde{\mathcal{D}}^R_h$ we have ${(c, s_h, a_h)\in \mathcal{X}^{\gamma,\beta}_h}$ and ${(c, s_h, a_h)\in\widetilde{\mathcal{X}}^{\gamma,\beta}_h}$.
\end{remark}

\paragraph{Event $G_2$.}
Intuitively it states that sufficient number of examples
have been collected for every layer $h \in [H-1]$.

Formally, let $G_2^{h}$ be the event that
\begin{enumerate}
    \item At least $ \max\{N_R(\mathcal{F}^R_h ,\epsilon_R, \delta_1/2), N_P(\mathcal{F}^P_h ,\epsilon_P, \delta_1/2)\}$ examples of context, state and action from the target domain, i.e., $(c,s,a) \in \mathcal{X}^{\gamma,\beta}_h$, have been collected for layer $h \in [H-1]$ .
    \item At least $ 2\max\{N_R(\mathcal{F}^R_h ,\epsilon_R, \delta_1/2), N_P(\mathcal{F}^P_h ,\epsilon_P, \delta_1/2)\}$ examples of context, state and action from the empirical domain, i.e., $(c,s,a) \in \widetilde{\mathcal{X}}^{\gamma,\beta}_h$, have been collected for layer $h \in [H-1]$ .
\end{enumerate}
Let $G_2$ be the event $\cap_{h\in[H-1]}G^h_2$.

\paragraph{Event $G_3$.}
Intuitively states that the ERM guarantees for the approximation of the dynamics hold.
Let $G_3^h$ denote the following event (for the $\ell_1$ loss) that
    \begin{align*}
        &\mathbb{E}_{(c, s_h, a_h, s_{h+1})\sim \mathcal{D}^P_h}
        [|f^P_h(c, s_h, a_h, s_{h+1}) - P^c(s_{h+1}| s_h, a_h)|]
        \leq 
        \epsilon_P 
    \end{align*}
and
    \begin{align*}
        \mathbb{E}_{(c, s_h, a_h, s_{h+1})\sim \widetilde{\mathcal{D}}^P_h}
        [|f^P_h(c, s_h, a_h, s_{h+1}) - P^c(s_{h+1}| s_h, a_h)|]
        \leq 
        \epsilon_P.
    \end{align*}    
Recall that we assume realizability for each layer. 
Define $G_3 = \cap_{h \in [H-1]}G^h_3$.

The following lemma shows that if $G_1$ and $G^h_2$ holds, then $G^h_3$ holds with high probability. (We later show that $G_2$ holds with high probability.)
\begin{lemma}\label{lemma: prob to G^h_3 given conditions l_1}
For any $h\in [H-1]$ we have $\mathbb{P}[G_3^h | G_1, G^h_2 ]\geq 1 - \delta_1$.
\end{lemma}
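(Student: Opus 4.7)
The plan is to mirror the argument used for the analogous $\ell_2$ statement (Lemma about $\mathbb{P}[G_3^h|G_1,G_2^h]$ in Subsection~\ref{subsec: analysis-UCDD l_2}), now invoking the ERM guarantees for the $\ell_1$ loss. First, I would observe that under $G_1$ the empirical domain satisfies $\mathcal{X}^{\gamma,\beta}_h \subseteq \widetilde{\mathcal{X}}^{\gamma,\beta}_h \subseteq \mathcal{X}^{\gamma/2,\beta}_h$, so both $\mathcal{D}^P_h$ (the induced marginal over $\mathcal{X}^{\gamma,\beta}_h \times S_{h+1}$) and $\widetilde{\mathcal{D}}^P_h$ (the marginal over $\widetilde{\mathcal{X}}^{\gamma,\beta}_h \times S_{h+1}$) are well-defined product-of-context-and-trajectory distributions from which the samples added to $Sample^P(h)$ are drawn i.i.d.

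Next, I would use $G_2^h$, which guarantees that (i) at least $N_P(\mathcal{F}^P_h,\epsilon_P,\delta_1/2)$ samples have been collected whose context-state-action triplets lie in $\mathcal{X}^{\gamma,\beta}_h$, and (ii) at least $2N_P(\mathcal{F}^P_h,\epsilon_P,\delta_1/2)$ samples lie in $\widetilde{\mathcal{X}}^{\gamma,\beta}_h$. Combined with the layer dynamics realizability assumption (so that $\alpha_1(\mathcal{F}^P_h)=0$), the standard ERM uniform-convergence guarantee for $\ell_1$ loss (as specified in Paragraph~\ref{par: ERM per layer appexdix}) yields that, with probability at least $1-\delta_1/2$ over the draw of the samples from $\mathcal{D}^P_h$, the ERM output $f^P_h = \texttt{ERM}(\mathcal{F}^P_h, Sample^P(h),\ell_1)$ satisfies
\[
    \mathbb{E}_{\mathcal{D}^P_h}[|f^P_h(c,s_h,a_h,s_{h+1}) - P^c(s_{h+1}|s_h,a_h)|] \leq \epsilon_P,
\]
and analogously, with probability at least $1-\delta_1/2$, the corresponding bound holds with $\widetilde{\mathcal{D}}^P_h$ in place of $\mathcal{D}^P_h$.

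Finally, a union bound over these two failure events gives $\mathbb{P}[G_3^h \mid G_1, G_2^h] \geq 1 - \delta_1$, as claimed. The argument is essentially bookkeeping once the two marginal distributions are identified and the ERM oracle is invoked on the same learned $f^P_h$; the only subtlety worth noting is that both expectation bounds refer to the same ERM output, so we must apply the ERM guarantee twice (once per distribution) and take a union bound, which is why the budget per event is $\delta_1/2$ and the sample size target in $G_2^h$ includes both thresholds simultaneously. No substantive new obstacle appears compared to the $\ell_2$ case.
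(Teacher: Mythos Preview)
Your proposal is correct and follows essentially the same approach as the paper: under $G_1$ and $G_2^h$ enough samples are available from both $\mathcal{X}^{\gamma,\beta}_h$ and $\widetilde{\mathcal{X}}^{\gamma,\beta}_h$, so the ERM guarantee (with realizability giving $\alpha_1(\mathcal{F}^P_h)=0$) is invoked once for each distribution at confidence $\delta_1/2$, and a union bound yields the claim. Your write-up is slightly more explicit about the role of $G_1$ in making the two marginals well-defined and about why the same $f^P_h$ is evaluated against both distributions, but the argument is identical in substance.
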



\begin{proof}
    Under $G_1$ and $G^h_2$ we have collected sufficient number of examples from the domain $\mathcal{X}^{\gamma,\beta}_h \times S_{h+1}$ 
    to approximate the transition probability function of layer $h$, for the accuracy parameter $\epsilon_P$ and confidence parameter $\delta_1/2$.
    By the ERM guarantees (see~\ref{par: ERM per layer appexdix}), if
    sufficient number of examples have been collected, 
    then 
    the ERM output $f^P_h$ satisfies that
    ${\mathbb{E}_{\mathcal{D}^P_h}
        [|f^P_h(c, s_h, a_h, s_{h+1}) - P^c(s_{h+1}| s_h, a_h)|]
        \leq 
    \epsilon_P }$, with probability at least $1-\delta_1/2$.
    Similarly for $\widetilde{\mathcal{X}}^{\gamma,\beta}_h \times S_{h+1}$ it holds that 
    ${\mathbb{E}_{\widetilde{\mathcal{D}}^P_h}
        [|f^P_h(c, s_h, a_h, s_{h+1}) - P^c(s_{h+1}| s_h, a_h)|]
        \leq 
    \epsilon_P }$, with probability at least $1-\delta_1/2$.
    Hence the lemma follows by union bound.
\end{proof}

The following lemma shows, inductively,  that if in all the previous layers $i < h$ we have that $G_1$, $G^i_2$, $G^i_3$ hold, then $G^h_2$ holds with high probability in the current layer $h$.    
\begin{lemma}\label{lemma: prob to G^h_2 given conditions l_1}
    For each layer $h\in [H-1]$ it holds that            
    ${\mathbb{P}[G_2^h | G_1, G^i_2 ,G_3^i \;\forall i\in[h-1] ]
    \geq 1 - (\delta_1 + \frac{\epsilon_P}{\rho}|S|^2|A|)}$.
\end{lemma}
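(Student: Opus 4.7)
My plan is to mirror the structure of the analogous $\ell_2$ result (Lemma~\ref{lemma: prob to G^h_2 given conditions l_2}) while exploiting the fact that applying Markov's inequality directly to the $\ell_1$ loss yields a bound scaling as $\epsilon_P/\rho$ rather than $\epsilon_P/\rho^2$ — this is precisely why the conclusion has $|S|^2|A|\epsilon_P/\rho$ instead of $|S|^2|A|\epsilon_P/\rho^2$. I would proceed by induction on the layer $h$. The base case $h=0$ is trivial because $s_0$ is the unique start state, so deterministically we satisfy $G_2^0$.

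For the induction step, assume the good events $G_1$ and $G^i_2, G^i_3$ hold for all $i \in [h-1]$. The first substep is to establish an $\ell_1$-analog of Lemma~\ref{lemma: for running time l_2}: by Markov's inequality applied to the $\ell_1$-ERM guarantee $\mathbb{E}_{\widetilde{\mathcal{D}}^P_k}[|f^P_k - P^c|] \leq \epsilon_P$, we get that for each fixed $s_{k+1}$, $|f^P_k(c,s_k,a_k,s_{k+1}) - P^c(s_{k+1}|s_k,a_k)| \leq \rho$ with probability $\geq 1 - \epsilon_P/\rho$ over $(c,s_k,a_k)$ in the empirical domain. Following exactly the same per-row normalization argument used in Lemma~\ref{lemma: UCDD l_2 matrix diff}, this yields a total-variation bound $\|\widehat{P}^c(\cdot|s_k,a_k) - P^c(\cdot|s_k,a_k)\|_1 \leq 4\rho|S|/(1-\rho^2|S|^2)$ with probability $\geq 1 - (\epsilon_P/\rho)|S_{k+1}|$. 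Union-bounding over all $(s_k,a_k)$ in layers $k < h$ and pushing through the occupancy-measure comparison (Theorem~\ref{thm: bounding occupancy measures}) gives the $\ell_1$ analog of inequality $(\star)$, failing with probability at most $|A|\sum_{k=0}^{h-1}(\epsilon_P/\rho)|S_k||S_{k+1}| \leq (\epsilon_P/\rho)|S|^2|A|$.

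Now, conditioned on $(\star)$ and using the standing parameter constraint $\beta \geq 2H \cdot 4\rho|S|/(1-\rho^2|S|^2)$, the same two claims as in the $\ell_2$ proof go through verbatim: for any $(\gamma,\beta)$-good $s_h$, when the algorithm picks $(s_h,a_h) \in \widetilde{S}^{\gamma,\beta}_h \times A$ uniformly at random, samples $c$, and executes $\widehat{\pi}^c_{s_h}$, the probability of adding a tuple $(c,s_h,a_h)$ to the sample of target domain $\mathcal{X}^{\gamma,\beta}_h$ is at least $\frac{1}{|S|}\cdot \gamma \cdot \beta/2$, and for the empirical domain $\widetilde{\mathcal{X}}^{\gamma,\beta}_h$ it is at least $\gamma\beta/4$. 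With these lower bounds, a multiplicative Chernoff bound shows that running for $T_h = \lceil (8|S|/(\gamma\beta))(\ln(1/\delta_1) + 2\max\{N_P,N_R\})\rceil$ episodes yields, with probability at least $1-\delta_1$, both the required $\max\{N_P,N_R\}$ samples from the target domain and the required $2\max\{N_P,N_R\}$ samples from the empirical domain. A final union bound over the two failure events gives the claimed $1-(\delta_1 + (\epsilon_P/\rho)|S|^2|A|)$ lower bound.

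The only place where one must be careful is tracking that the linear Markov bound (rather than the quadratic one) is what drives the $\epsilon_P/\rho$ dependence; everything else is a mechanical transcription of the $\ell_2$ argument, so I do not foresee a serious obstacle beyond bookkeeping of the constants.
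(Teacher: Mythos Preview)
Your proposal is correct and follows essentially the same approach as the paper's proof: induction on $h$ with a trivial base case, invoking the $\ell_1$ analog of the running-time lemma (which the paper states separately as Lemma~\ref{lemma: for running time l_1}) to obtain $(\star)$ with failure probability at most $(\epsilon_P/\rho)|S|^2|A|$, then the same two claims about per-episode sampling probability, a multiplicative Chernoff bound over $T_h$ episodes, and a final union bound. The only detail you gloss over is that the paper's version of the running-time lemma routes through an auxiliary dynamics $\widetilde{P}^c$ to handle tuples outside $\widetilde{\mathcal{X}}^{\gamma,\beta}_k$ before applying Theorem~\ref{thm: bounding occupancy measures}, but this is bookkeeping and your sketch already accounts for the right probability budget.
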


\begin{proof}
    We prove the lemma using induction over the horizon $h$.
        
        
    
    \textbf{Base case.} $h=0$.
    
    By definition, the start state $s_0$ is $(1,1)$-good, which implies that for $s_0$ we collect samples in a deterministic manner. Thus, it holds that $\mathbb{P}[G^0_2] = 1$.
        
    \textbf{Induction step.} 
    Assume the lemma holds for all $k < h$ and we show it holds for $h$.\\
    Recall we collect examples of states $s_h \in S_h$ for which
    $\widehat{p}_\beta(s_h) \geq \frac{3}{4}\gamma$.
    Under $G_1$, if $\widehat{p}_\beta(s_h) \geq \frac{3}{4}\gamma$ then
    $
        {\mathbb{P}[c \in \widehat{\mathcal{C}}^{\beta}(s_h) 
        ]
        \geq
        \gamma/2}
    $. 
    In addition, if     
    $
        {\mathbb{P}[c \in \widehat{\mathcal{C}}^{\beta}(s_h) 
        ]
        \geq
        \gamma}
    $ then $\widehat{p}_\beta(s_h) \geq \frac{3}{4}\gamma$.
    
    Thus, the set $\widetilde{S}^{\gamma,\beta}_h$ of approximately $(\gamma,\beta)$-good state for $\widehat{P}^c$ satisfies that  $\widehat{S}^{\gamma,\beta}_h \subseteq \widetilde{S}^{\gamma,\beta}_h \subseteq \widehat{S}^{\gamma/2,\beta}_h$.
    
    Given $G_1, G^k_2 ,G_3^k \;\forall k\in[h-1]$ hold, by Lemma~\ref{lemma: for running time l_1}, for $\beta$ and $\rho$ such that $\beta \geq 2H \frac{4 \rho |S|}{ 1 - \rho^2 |S|^2}$ it holds that 
    \begin{align*}
        \mathbb{P}_c \left[
        \underbrace{\forall k \in [h], s_k \in S_k. \;\;
        q_k(s_k|\pi_c, P^c) 
        \geq
        q_k(s_k|\pi_c, \widehat{P}^c) -
        \frac{4 \rho|S|}{1 - \rho^2|S|^2}k}_{(\star)} 
        \right]
        &\geq 
        1 - |A|\;\sum_{k=0}^{h-1}\frac{\epsilon_P}{\rho}|S_k||S_{k+1}|
        \\
        &\geq
        1 - |A||S|^2\frac{\epsilon_P }{\rho}.
    \end{align*}
    \begin{claim}
        Assume inequality $(\star)$ holds. Then the probability to collect one example of ${(c, s_{h}, a_{h}) \in  \mathcal{X}^{\gamma, \beta}_{h}}$ is at least $\frac{1}{|S|}\gamma (\beta - \frac{4 \rho |S|}{ 1 - \rho^2 |S|^2} h) \geq \frac{1}{|S|}\cdot \gamma \cdot \beta /2$.
    \end{claim}
    \begin{proof}
        Consider the process of collecting a sample, as described in Algorithm~\ref{alg: EXPLOIT-UCDD}:
        \begin{enumerate}
            \item The algorithm/agent chooses uniformly at random $(s,a) \in \widetilde{S}^{\gamma, \beta}_h \times A$.
            Under the good event $G_1$ we have that 
            ${\widehat{S}^{\gamma, \beta}_h} \subseteq \widetilde{S}^{\gamma, \beta}_h$.
            Hence, the probability to choose 
            $(s,a) \in \widehat{S}^{\gamma, \beta}_h \times A$
            is at least $\frac{1}{|S|}$.
            \item A context $c \sim \mathcal{D}$ is sampled.
            By $\widehat{S}^{\gamma, \beta}_h$ definition, the probability that $c \in \widehat{\mathcal{C}}^\beta(s)$ is at least $\gamma$.
            \begin{itemize}
                \item  If $c \in \widehat{\mathcal{C}}^\beta(s)$, the agent plays $\widehat{\pi}^c_s$ to generate a trajectory where the dynamics is $P^c$. By $(\star)$ and $\widehat{\mathcal{C}}^\beta(s)$ definition, the probability to observe $(s, a)$ in a trajectory generated using $\widehat{\pi}^c_s$ where the dynamics is $P^c$ is $q_h(s|\widehat{\pi}^c_s, P^c) \geq \beta - \frac{4 \rho |S|}{ 1 - \rho^2 |S|^2} h \geq \beta/2$.
                \item Otherwise quite iteration.
            \end{itemize}
        \end{enumerate}
        Overall, 
        the probability to collect one example of a triplet $(c, s, a) \in  \mathcal{X}^{\gamma, \beta}_{h}$ is at least ${\frac{1}{|S|}\cdot \gamma \cdot (\beta - \frac{4 \rho |S|}{ 1 - \rho^2 |S|^2} h)} \geq \frac{1}{|S|}\cdot \gamma \cdot \frac{\beta}{2}$ (since $\beta \geq 2H \frac{4 \rho |S|}{ 1 - \rho^2 |S|^2}$).
    \end{proof}
    
    \begin{claim}
        Assume inequality $(\star)$ holds. Then the probability to collect one example of ${(c, s_{h}, a_{h}) \in  \widetilde{\mathcal{X}}^{\gamma, \beta}_{h}}$ is at least $\frac{\gamma}{2}(\beta - \frac{4 \rho |S|}{ 1 - \rho^2 |S|^2} h) \geq  \gamma\cdot \beta /4$.
    \end{claim}
    \begin{proof}
        Consider the process of collecting a sample, as described in Algorithm~\ref{alg: EXPLOIT-UCDD}:
        \begin{enumerate}
            \item The algorithm/agent chooses uniformly at random $(s,a) \in \widetilde{S}^{\gamma, \beta}_h \times A$.
            Under the good event $G_1$ we have that 
            $\widetilde{S}^{\gamma, \beta}_h \subseteq {\widehat{S}^{\gamma/2, \beta}_h}$.
            \item A context $c \sim \mathcal{D}$ is sampled by the nature.
            By $\widehat{S}^{\gamma/2, \beta}_h$ definition, the probability to observe a context $c \in \widehat{\mathcal{C}}^\beta(s)$ is at least $\gamma/2$.
            \begin{itemize}
                \item  If $c \in \widehat{\mathcal{C}}^\beta(s)$, the agent plays $\widehat{\pi}^c_s$ to generate a trajectory where the dynamics is $P^c$. By $(\star)$ and $\widehat{\mathcal{C}}^\beta(s)$ definition, the probability to observe $(s, a)$ in a trajectory generated using $\widehat{\pi}^c_s$ where the dynamics is $P^c$ is $q_h(s|\widehat{\pi}^c_s, P^c) \geq \beta - \frac{4 \rho |S|}{ 1 - \rho^2 |S|^2} h \geq \beta/2$.
                \item Otherwise quite iteration.
            \end{itemize}
        \end{enumerate}
        Overall, 
        the probability to collect one sample of some triplet $(c, s, a) \in \widetilde{\mathcal{X}}^{\gamma, \beta}_{h}$ is at least ${\gamma/2 \cdot (\beta - \frac{4 \rho |S|}{ 1 - \rho^2 |S|^2} h)} \geq \gamma \cdot \beta/4$ (since $\beta \geq 2H \frac{4 \rho |S|}{ 1 - \rho^2 |S|^2}$).
    \end{proof}    
        
    The above claims implies that if $(\star)$ holds, in expectation, the agent needs to experience at most 
   $ \frac{2|S|}{\gamma \cdot \beta}$    
    episodes to collect one example from $ \mathcal{X}^{\gamma, \beta}_{h}$.
    In addition, in expectation, the agent needs to experience at most $\frac{4}{\gamma \cdot \beta}$ 
    episodes to collect one example from 
    $ \widetilde{\mathcal{X}}^{\gamma, \beta}_{h}$.
    
    Since under $G_1$ we have that    
    $ 
        \mathcal{X}^{\gamma, \beta}_{h}
        \subseteq 
        \widetilde{\mathcal{X}}^{\gamma, \beta}_{h}
        \subseteq
        \mathcal{X}^{\gamma/2, \beta}_{h}
    $,
    using multiplicative Chernoff bound we obtain that with probability at least $1 - \delta_1$ after experiencing  
    \[
        T_h =
        \left\lceil 
            \frac{8 |S|}{\gamma \cdot \beta}
            \left(\ln\frac{1}{\delta_1}
            +
            2\max\{N_P(\mathcal{F}^P_h ,\epsilon_P, \delta_1/2), N_R(\mathcal{F}^R_h ,\epsilon_R, \delta_1/2)\}\right)
            \right\rceil
    \]
    episodes, the agent will collect at least $\max\{N_P(\mathcal{F}^P_h ,\epsilon_P, \delta_1/2)), N_R(\mathcal{F}^R_h ,\epsilon_R, \delta_1/2))\}$ examples from $\mathcal{X}^{\gamma,\beta}_h$ and 
    $2\max\{N_P(\mathcal{F}^P_h ,\epsilon_P, \delta_1/2)), N_R(\mathcal{F}^R_h ,\epsilon_R, \delta_1/2))\}$ examples from $\widetilde{\mathcal{X}}^{\gamma,\beta}_h$.
    Recall that $T_h$ is exactly the number of episodes we run in Algorithm~\ref{alg: EXPLOIT-UCDD} when learning layer $h$.
    Hence, using union bound we obtain that
    \[
        \mathbb{P}[G_2^{h} | G_1, G^i_2 ,G_3^i \;\forall i\in[h-1] ]
        \geq 1 - (\delta_1  +  |A||S|^2\frac{\epsilon_P }{\rho}).
    \]    
 
\end{proof}

The following lemma shows that, given $G_1$, $G_2$ and $G_3$ hold with high probability.
\begin{lemma}\label{lemma: UCDD  l_1 G_2 ,G_3 given G_1}
    The following holds.
    \[
        \mathbb{P}[G_2 \cap G_3 | G_1] 
        \geq 1 - (2\delta_1 H + \frac{\epsilon_P}{\rho}|S|^2|A|H).
    \]
\end{lemma}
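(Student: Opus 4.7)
The plan is to mirror the argument used for the analogous $\ell_2$-loss statement (Lemma~\ref{lemma: UCDD  l_2 G_2 ,G_3 given G_1}), adjusting only the layer-wise probability bound from $\delta_1 + \frac{\epsilon_P}{\rho^2}|S|^2|A|$ to $\delta_1 + \frac{\epsilon_P}{\rho}|S|^2|A|$, which is what Lemma~\ref{lemma: prob to G^h_2 given conditions l_1} gives us for the $\ell_1$ case. Concretely, I would recall that $G_2 = \cap_{h \in [H-1]}G^h_2$ and $G_3 = \cap_{h \in [H-1]}G^h_3$, and define the random variable $X$ that records the smallest layer $h$ at which either $\overline{G}^h_2$ or $\overline{G}^h_3$ occurs, with $X = \bot$ when both $G_2$ and $G_3$ hold.

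Next, for each $h \in [H-1]$ I would bound $\mathbb{P}[X=h \mid G_1]$ by conditioning on $\cap_{k \in [h-1]}(G^k_2 \cap G^k_3)$ and applying Bayes' rule, exactly as in the $\ell_2$ analysis. Splitting $\overline{G}^h_2 \cup \overline{G}^h_3$ into the disjoint union $\overline{G}^h_2 \;\dot\cup\; (\overline{G}^h_3 \cap G^h_2)$, one term is controlled by Lemma~\ref{lemma: prob to G^h_2 given conditions l_1} (giving $\delta_1 + \frac{\epsilon_P}{\rho}|S|^2|A|$) and the other by Lemma~\ref{lemma: prob to G^h_3 given conditions l_1} (giving $\delta_1$). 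Summing these two contributions yields
\[
    \mathbb{P}[X = h \mid G_1] \;\le\; 2\delta_1 + \frac{\epsilon_P}{\rho}|S|^2|A|.
\]

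Finally, since the event $\{G_2 \cap G_3\}^c$ equals $\{X \in [H-1]\}$, a union bound over $h \in [H-1]$ gives
\[
    \mathbb{P}[\overline{G_2 \cap G_3} \mid G_1] \;\le\; \sum_{h=0}^{H-1} \mathbb{P}[X=h \mid G_1] \;\le\; 2\delta_1 H + \frac{\epsilon_P}{\rho}|S|^2|A|H,
\]
which is the claimed bound. I do not anticipate a serious obstacle here: the argument is essentially bookkeeping around the layer-wise inductive structure. The only subtlety is ensuring that the conditioning on $G_1$ does not degrade the per-layer bounds; this is handled because both Lemma~\ref{lemma: prob to G^h_2 given conditions l_1} and Lemma~\ref{lemma: prob to G^h_3 given conditions l_1} are already stated conditionally on $G_1$ (and on the previous-layer good events), so the chain-rule decomposition goes through verbatim as in the $\ell_2$ case.
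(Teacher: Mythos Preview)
Your proposal is correct and follows essentially the same approach as the paper: define the first-failure layer $X$, bound $\mathbb{P}[X=h\mid G_1]$ via the disjoint decomposition $\overline{G}^h_2 \,\dot\cup\, (\overline{G}^h_3\cap G^h_2)$, apply Lemmas~\ref{lemma: prob to G^h_2 given conditions l_1} and~\ref{lemma: prob to G^h_3 given conditions l_1}, and then union-bound over $h$. The only difference from the $\ell_2$ version is the $\rho$ in place of $\rho^2$, exactly as you note.
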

    
\begin{proof}
    Assume the good event $G_1$ holds.
    Recall that $G_2 = \cap_{h \in [H-1]}G^h_2$ and $G_3 = \cap_{h \in [H-1]}G^h_3$.
        
    Let $X$ be a random variable with support $[H-1]$ that satisfies 
    \begin{align*}
        X = \min_{k \in [H-1]}\{\overline{G}^k_2 \cup \overline{G}^k_3 \text{ holds }\},
    \end{align*}
    and otherwise $X=\bot$, (i.e., if $G_2$ and $G_3$ hold).
    In words, $X$ is the first layer in which at least one of the good events $G^h_2$ or $G^h_3$ does not hold.
        
    By $X$ definition and Bayes rule (i.e., $\mathbb{P}[A \cap B] = \mathbb{P}[A|B]\cdot \mathbb{P}[B]$) we have
    \begingroup
    \allowdisplaybreaks
    \begin{align*}
        \forall h \in [H].\;\; 
        \mathbb{P}[X = h|G_1]
        &=
        \mathbb{P}[(\overline{G}^h_2 \cup \overline{G}^h_3)\cap (\cap_{
        k \in [h-1]}G^k_2 \cap G^k_3) | G_1]
        \\
        \tag{ Bayes rule}
        &=
        \mathbb{P}[(\overline{G}^h_2 \cup \overline{G}^h_3) | G_1 , (\cap_{
        k \in [h-1]}G^k_2 \cap G^k_3)]
        \cdot
        \underbrace{\mathbb{P}[ (\cap_{ k \in [h-1]}G^k_2 \cap G^k_3)|G_1]}_{\leq 1}
        \\
        &\leq
        \mathbb{P}[(\overline{G}^h_2 \cup \overline{G}^h_3) | G_1 , (\cap_{
        k \in [h-1]}G^k_2 \cap G^k_3)]
        \\
        \tag{Union of disjoint events}
        &=
        \underbrace{\mathbb{P}[\overline{G}^h_2 | G_1 , 
        (\cap_{k \in [h-1]}G^k_2 \cap G^k_3)]}_{\leq \delta_1 + \frac{\epsilon_P}{\rho}|S|^2|A| \text{ by Lemma~\ref{lemma: prob to G^h_2 given conditions l_1}}}
        +
        \mathbb{P}[\overline{G}^h_3\cap G_2^h | G_1 , 
        (\cap_{k \in [h-1]}G^k_2 \cap G^k_3)]
        \\
        &\leq
        \delta_1 + \frac{\epsilon_P}{\rho^2}|S||A|
        +
        \mathbb{P}[\overline{G}^h_3 \cap G^h_2| G_1 , 
        (\cap_{k \in [h-1]}G^k_2 \cap G^k_3)]
        \\
        \tag{caused by rule}
        &=
        \delta_1 + \frac{\epsilon_P}{\rho^2}|S||A|
        +
        \mathbb{P}[\overline{G}^h_3 | G_1 , G^h_2 ,
        (\cap_{k \in [h-1]}G^k_2 \cap G^k_3)]
        \cdot
        \underbrace{\mathbb{P}[ G^h_2  |G_1 \cap
        (\cap_{k \in [h-1]}G^k_2 \cap G^k_3)]]}_{\leq 1}
        \\
        &\leq
        \delta_1 + \frac{\epsilon_P}{\rho}|S||A|
        +
        \mathbb{P}[\overline{G}^h_3 | G_1 , G^h_2 ,
        (\cap_{k \in [h-1]}G^k_2 \cap G^k_3)]                        \\
        &=
        \delta_1 + \frac{\epsilon_P}{\rho^2}|S|^2|A|
        +
        \underbrace{\mathbb{P}[\overline{G}^k_3 | G_1 , G^h_2]}_{\leq \delta_1 \text{ by Lemma~\ref{lemma: prob to G^h_3 given conditions l_1}}}
        \\
        &\leq
        2\delta_1 + \frac{\epsilon_P}{\rho}|S|^2|A| 
    \end{align*}
    \endgroup
        
    Lastly, by $G_2$ and $G_3$ definitions we have
    \begingroup
    \allowdisplaybreaks
    \begin{align*}
        \mathbb{P}[G_2 \cap G_3 |G_1]
        &=
        1 - \mathbb{P}[\overline{G}_2 \cup \overline{G}_3|G_1]
        \\
        &=
        1 
        -
        \mathbb{P}[\cup_{h \in [H-1]}(\overline{G}^h_2 \cup \overline{G}^h_3)|G_1]
       \\
        &=
        1 
        -
        \mathbb{P}[\exists h \in [H-1].(\overline{G}^h_2 \cup \overline{G}^h_3)|G_1]
        \\
        &=
        1 
        -
        \mathbb{P}[\exists h \in [H-1].X = h|G_1]
        \\
        &=
        1 
        -
        \mathbb{P}[\cup_{ h \in [H-1]}\{X = h\}|G_1]
        \\
        \tag{Union bound.}
        &\geq
        1 
        -
        \sum_{h=0}^{H-1}\mathbb{P}[X = h|G_1]
        \\
        &\geq
        1 - (2 \delta_1 H + \frac{\epsilon_P}{\rho}|S|^2|A|H),
    \end{align*}
    \endgroup
    as stated.
\end{proof}

\paragraph{Event $G_4$.}
Intuitively states that the
ERM guarantees for the approximation of the rewards function hold (for the $\ell_1$ loss).
Let $G_4$ denote the good event
\begin{align*}
    &\forall h \in [H - 1] \;\;\mathbb{E}_{(c, s_h, a_h) \sim \mathcal{D}^R_h}
    [| f^R_h(c, s_h, a_h) - r^c(s_h, a_h) |]
    \leq 
    \epsilon_R + \alpha_1(\mathcal{F}^R_h).
\end{align*}

The following lemma shows that if $G_1$ and $G_2$ hold, the $G_4$ hold with high probability.    
\begin{lemma}\label{lemma: UCDD l_1 G_4 given G_2, G_2}
    It holds that 
    $
        \mathbb{P}[G_4 | G_1, G_2] \geq 1 - \delta_1 H
    $. 
\end{lemma}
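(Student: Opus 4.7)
The plan is to mimic the proof of the $\ell_2$ counterpart (Lemma~\ref{lemma: UCDD l_2 G_4}): fix a layer, invoke the ERM guarantee for $\mathcal{F}^R_h$ under the $\ell_1$ loss, and then union bound over all $H-1$ layers. Concretely, I will first show that for a fixed layer $h \in [H-1]$, the event
\[
    E_h := \Big\{\mathbb{E}_{(c,s_h,a_h)\sim \mathcal{D}^R_h}[|f^R_h(c,s_h,a_h)-r^c(s_h,a_h)|]\leq \epsilon_R+\alpha_1(\mathcal{F}^R_h)\Big\}
\]
satisfies $\mathbb{P}[E_h\mid G_1,G_2]\geq 1-\delta_1$, and then $G_4=\bigcap_{h\in[H-1]}E_h$ holds with probability at least $1-\delta_1 H$ by a union bound.

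For a fixed layer $h$, I would argue as follows. Under $G_1$, we have $\mathcal{X}^{\gamma,\beta}_h\subseteq\widetilde{\mathcal{X}}^{\gamma,\beta}_h$, and under $G^h_2$, the data set $Sample^R(h)$ contains at least $2\max\{N_R(\mathcal{F}^R_h,\epsilon_R,\delta_1/2),N_P(\mathcal{F}^P_h,\epsilon_P,\delta_1/2)\}$ i.i.d.\ examples drawn from $\widetilde{\mathcal{D}}^R_h$, of which at least $\max\{N_R(\mathcal{F}^R_h,\epsilon_R,\delta_1/2),N_P(\mathcal{F}^P_h,\epsilon_P,\delta_1/2)\}$ come from $\mathcal{X}^{\gamma,\beta}_h$ and are therefore i.i.d.\ from $\mathcal{D}^R_h$. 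Since the sample size exceeds the threshold $N_R(\mathcal{F}^R_h,\epsilon_R,\delta_1/2)$ required by the ERM/uniform-convergence guarantee for $\mathcal{F}^R_h$ under the absolute loss (see the function approximation setup in~\ref{par: ERM per layer appexdix}), applying the guarantee yields that the ERM output $f^R_h$ satisfies $\mathbb{E}_{\mathcal{D}^R_h}[|f^R_h(c,s_h,a_h)-r^c(s_h,a_h)|]\leq \epsilon_R+\alpha_1(\mathcal{F}^R_h)$ with probability at least $1-\delta_1/2\geq 1-\delta_1$; this is exactly $E_h$.

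A union bound over $h \in [H-1]$ then gives $\mathbb{P}[G_4\mid G_1,G_2]\geq 1-(H-1)\delta_1\geq 1-\delta_1 H$, as claimed. The only potential subtlety is bridging between the distribution $\widetilde{\mathcal{D}}^R_h$ actually fed to the ERM oracle and the target distribution $\mathcal{D}^R_h$ appearing in $G_4$; this is handled by restricting attention to the i.i.d.\ sub-sample whose triplets lie in $\mathcal{X}^{\gamma,\beta}_h$, whose size is guaranteed to exceed the ERM threshold by $G^h_2$. No further calculation is needed beyond invoking the ERM guarantee and the union bound.
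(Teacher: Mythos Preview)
Your approach is essentially identical to the paper's: for each layer $h$, use that $G_1,G_2$ guarantee enough samples from $\mathcal{X}^{\gamma,\beta}_h$, invoke the ERM guarantee for $\mathcal{F}^R_h$ under the $\ell_1$ loss to get $E_h$ with probability at least $1-\delta_1$, and then union bound over $h\in[H-1]$. The paper's proof is just as brief and does not elaborate further on the $\widetilde{\mathcal{D}}^R_h$ versus $\mathcal{D}^R_h$ issue than you do.
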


\begin{proof}
    Since $G_1$ and $G_2$ hold, for every layer $h \in [H-1]$ sufficient number of examples $((c,s_h,a_h),r_h) \in \mathcal{X}^{\gamma,\beta}_h \times [0,1]$  have been collected for the 
    ERM to output a function $f^R_h$ the satisfies
    $${\mathbb{E}_{(c,s_h,a_h) \sim \mathcal{D}^R_h}
        [| f^R_h(c, s_h, a_h) - r^c(s_h, a_h) |]
        \leq 
    \epsilon_R + \alpha^2_2(\mathcal{F}^R_h)}$$
    with probability at least $1-\delta_1$.
    Hence, the lemma
    follows by the ERM guarantees (see~\ref{par: ERM per layer appexdix}) and an union bound over every layer $h \in [H-1]$.
\end{proof}

The following corollary shows that all four good events hold with high probability.
\begin{corollary}\label{corl: good events probs bound l_1}
    It holds that  
    \[
        \mathbb{P}[G_1,G_2,G_3,G_4] 
        \geq 
        1- \left(\frac{\delta}{8} + 3 \delta_1 H + \frac{\epsilon_P}{\rho}|S|^2|A|H\right).
    \]    

\end{corollary}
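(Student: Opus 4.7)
The plan is to derive the bound by a standard union bound on the complementary events, invoking the three probability estimates already established in the excerpt: Lemma~\ref{lemma: UCDD G_1 l_1} for $\mathbb{P}[G_1]$, Lemma~\ref{lemma: UCDD  l_1 G_2 ,G_3 given G_1} for $\mathbb{P}[G_2\cap G_3\mid G_1]$, and Lemma~\ref{lemma: UCDD l_1 G_4 given G_2, G_2} for $\mathbb{P}[G_4\mid G_1,G_2]$. Since the statement asks for a lower bound on a four-way intersection and the individual guarantees are phrased as conditional probabilities stacked along a chain $G_1 \Rightarrow G_2 \Rightarrow G_3 \Rightarrow G_4$, the natural route is to pass to complements and telescope.

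\paragraph{Steps.}
First, I would write
\[
    \mathbb{P}[\overline{G_1\cap G_2\cap G_3\cap G_4}]
    \;\leq\;
    \mathbb{P}[\overline{G_1}] \;+\; \mathbb{P}[\overline{G_2\cap G_3}\cap G_1] \;+\; \mathbb{P}[\overline{G_4}\cap G_1\cap G_2\cap G_3],
\]
which follows from a disjointified union bound. I would then bound the three summands in turn. Lemma~\ref{lemma: UCDD G_1 l_1} gives $\mathbb{P}[\overline{G_1}]\leq \delta/8$. For the second term, note that
\[
    \mathbb{P}[\overline{G_2\cap G_3}\cap G_1]
    \;=\;
    \mathbb{P}[\overline{G_2\cap G_3}\mid G_1]\cdot \mathbb{P}[G_1]
    \;\leq\;
    \mathbb{P}[\overline{G_2\cap G_3}\mid G_1],
\]
and Lemma~\ref{lemma: UCDD  l_1 G_2 ,G_3 given G_1} bounds this by $2\delta_1 H + \tfrac{\epsilon_P}{\rho}|S|^2|A|H$.

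\paragraph{Handling the $G_4$ term.}
For the third summand I would drop the $G_3$ from the conjunction, using the trivial monotonicity
\[
    \mathbb{P}[\overline{G_4}\cap G_1\cap G_2\cap G_3]
    \;\leq\;
    \mathbb{P}[\overline{G_4}\cap G_1\cap G_2]
    \;=\;
    \mathbb{P}[\overline{G_4}\mid G_1,G_2]\cdot \mathbb{P}[G_1\cap G_2]
    \;\leq\;
    \mathbb{P}[\overline{G_4}\mid G_1,G_2],
\]
and then apply Lemma~\ref{lemma: UCDD l_1 G_4 given G_2, G_2} to bound this by $\delta_1 H$. Summing the three bounds yields $\delta/8 + 3\delta_1 H + \tfrac{\epsilon_P}{\rho}|S|^2|A|H$, and taking complements gives the claim. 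The only mildly delicate point—the ``obstacle''—is making sure the conditioning matches the statements of the cited lemmas; in particular, no bound of the form $\mathbb{P}[G_4\mid G_1,G_2,G_3]$ was proved, so the step above silently uses that $G_3\subseteq \Omega$ to remove the extra conditioning event, which is valid since we only need an upper bound on $\mathbb{P}[\overline{G_4}\cap \cdots]$.
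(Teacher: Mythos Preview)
Your argument is correct and is precisely the union bound over Lemmas~\ref{lemma: UCDD G_1 l_1}, \ref{lemma: UCDD  l_1 G_2 ,G_3 given G_1}, and \ref{lemma: UCDD l_1 G_4 given G_2, G_2} that the paper invokes in one line; you simply spell out the disjointified decomposition and the handling of the conditioning explicitly. The step where you drop $G_3$ to match the hypothesis of Lemma~\ref{lemma: UCDD l_1 G_4 given G_2, G_2} is the right way to reconcile the available conditional bound with the four-way intersection, and nothing further is needed.
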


\begin{proof}
    Followed from union bound over the results of Lemmas~\ref{lemma: UCDD G_1 l_1},~\ref{lemma: UCDD  l_1 G_2 ,G_3 given G_1} and~\ref{lemma: UCDD  l_1 G_4 given G_2, G_2}. 
\end{proof}


\subsubsection{Analysis of the Error Caused by the Dynamics Approximation Under the Good Events}\label{subsubsec:l-1-dynamics-error}


In the following analysis, for any context $c \in \mathcal{C}$ we  consider an intermediate MDP associated with it: 
$\widetilde{\mathcal{M}}(c) = (S \cup \{s_{sink}\}, A, \widehat{P}^c, r^c, s_0, H)$, where $\widehat{P}^c$ is the approximation of the dynamics $P^c$ and $r^c$ is the true rewards function extended to $s_{sink}$ by defining that $r^c(s_{sink},a):=0,\;\; \forall c \in \mathcal{C},\; a \in A$.
Recall the true MDP associated with this context is
    \[
        \mathcal{M}(c)
        = 
        (S, A, P^c, r^c, s_0, H).
    \]

\begin{lemma}\label{lemma: UCDD l_1 matrix diff}
    Let $\rho \in [0,\frac{1}{|S|})$ and $h \in [H -1]$.
    Assume the good events $G_1, G_2^k, G_3^k, \; \forall k \in [h]$ hold, then it holds that
    \[
        \mathbb{P}\left[
        \|\widehat{P}^c(\cdot| s_h, a_h) - P^c(\cdot| s_h, a_h)\|_1
       \leq \frac{4 \rho |S|}{ 1 - \rho^2 |S|^2} \Big| (c,s_h,a_h) \in \mathcal{X}^{\gamma,\beta}_h \right]
       \geq 1 - \frac{\epsilon_P }{\rho}|S_{h+1}|,
    \]
    where $\widehat{P}^c$ is the dynamics defined in Algorithm~\ref{alg: ACDD UCDD} and 
    \[
        \|\widehat{P}^c(\cdot| s_h, a_h) - P^c(\cdot| s_h, a_h)\|_1:=
        \sum_{s_{h+1} \in S_{h+1}}
        |\widehat{P}^c(s_{h+1}| s_h, a_h) - P^c(s_{h+1}| s_h, a_h)|
    \]
    (i.e., the entry of $s_{sink}$ in $\widehat{P}^c$ is ignored).
\end{lemma}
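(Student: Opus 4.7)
The plan is to mirror the argument used for the $\ell_2$ version (Lemma~\ref{lemma: UCDD l_2 matrix diff}), adapted to the $\ell_1$ guarantee provided by $G_3^h$. The only substantive difference is that Markov's inequality is applied directly to $|f^P_h - P^c|$ rather than to its square, which changes the deviation probability from $\epsilon_P/\rho^2$ to $\epsilon_P/\rho$. After that point the calculation is identical.

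First I would handle the sink state. Under $G_1$ we have $\mathcal{X}^{\gamma,\beta}_h \subseteq \widetilde{\mathcal{X}}^{\gamma,\beta}_h$, and the definition of $\widehat{P}^c$ in Algorithm~\ref{alg: ACDD UCDD} gives $\widehat{P}^c(s_{sink}|s_h,a_h) = 0$ for every $(c,s_h,a_h) \in \widetilde{\mathcal{X}}^{\gamma,\beta}_h$. Extending $P^c$ by $P^c(s_{sink}|s,a):=0$ lets me drop the sink entry from the total-variation distance, so the rest of the analysis takes place entirely over $S_{h+1}$.

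Next, I would invoke $G_3^h$ for the $\ell_1$ loss, which gives $\mathbb{E}_{\mathcal{D}^P_h}[|f^P_h(c,s_h,a_h,s_{h+1}) - P^c(s_{h+1}|s_h,a_h)|] \leq \epsilon_P$. Markov's inequality then yields
\[
    \mathop{\mathbb{P}}_{(c,s_h,a_h,s_{h+1})}\!\left[|f^P_h - P^c(s_{h+1}|s_h,a_h)| \geq \rho \,\Big|\, (c,s_h,a_h)\in\mathcal{X}^{\gamma,\beta}_h\right] \leq \frac{\epsilon_P}{\rho}.
\]
A union bound over $s_{h+1} \in S_{h+1}$ combined with $\sum_{s_{h+1}} P^c(s_{h+1}|s_h,a_h) = 1$ shows that with probability at least $1 - \epsilon_P|S_{h+1}|/\rho$ (over $(c,s_h,a_h) \in \mathcal{X}^{\gamma,\beta}_h$), every coordinate satisfies $|f^P_h - P^c| \leq \rho$ and the normalizer $\sum_{s'} f^P_h(c,s_h,a_h,s')$ lies in $[1-\rho|S|, 1+\rho|S|]$. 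Hence
\[
    \frac{P^c(s_{h+1}|s_h,a_h) - \rho}{1 + \rho|S|} \leq \widehat{P}^c(s_{h+1}|s_h,a_h) \leq \frac{P^c(s_{h+1}|s_h,a_h) + \rho}{1 - \rho|S|}
\]
for every $s_{h+1} \in S_{h+1}$.

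Finally, I would split $S_{h+1}$ into $S^+_{h+1} = \{s_{h+1} : \widehat{P}^c(s_{h+1}|s_h,a_h) \geq P^c(s_{h+1}|s_h,a_h)\}$ and its complement, upper bound the positive deviations using $(P^c+\rho)/(1-\rho|S|) - P^c$ and the negative deviations using $P^c - (P^c-\rho)/(1+\rho|S|)$, and sum. Each sum telescopes to $2\rho|S|/(1\mp\rho|S|)$ (using $\sum P^c = 1$), which add to $4\rho|S|/(1-\rho^2|S|^2)$. This is the identical arithmetic that appears in the $\ell_2$ proof; I do not anticipate any new obstacle. The main (minor) point to check is just the bookkeeping of the probability term: the $\ell_1$ loss produces the $\epsilon_P/\rho$ bound in place of the $\epsilon_P/\rho^2$ that showed up in the $\ell_2$ case, matching the statement of the lemma.
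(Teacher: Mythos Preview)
Your proposal is correct and follows essentially the same approach as the paper's own proof: handle the sink, apply Markov's inequality to the $\ell_1$ guarantee from $G_3^h$ to get the $\epsilon_P/\rho$ deviation probability, union bound over $S_{h+1}$ to obtain the sandwich on $\widehat{P}^c$, then split into $S^+_{h+1}$ and its complement and sum to $4\rho|S|/(1-\rho^2|S|^2)$. The only change from the $\ell_2$ case is exactly the one you flagged, namely $\epsilon_P/\rho$ in place of $\epsilon_P/\rho^2$.
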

 
\begin{proof}
    
    Under $G_1$ it holds that
    $\mathcal{X}^{\gamma,\beta}_h \subseteq \widetilde{\mathcal{X}}^{\gamma,\beta}_h$.
    Recall that for all $(c, s_h, a_h) \in \widetilde{\mathcal{X}}^{\gamma,\beta}_h$ we have that $\widehat{P}^c(s_{sink}|s_h, a_h) = 0$ by $\widehat{P}^c$ definition.
    Hence, for all $(c, s_h, a_h) \in \mathcal{X}^{\gamma,\beta}_h$ we have that $\widehat{P}^c(s_{sink}|s_h, a_h) = 0$.
    
    In addition, the true dynamics $P^c$ is not defined for $s_{sink}$ since $s_{sink} \notin S$.  
    A natural extension of $P^c$ to $s_{sink}$ is by defining that $\forall (s,a) \in S \times A.\;\; P^c(s_{sink}|s,a) :=0$.
    By that extension, we have for all $(c, s_h, a_h) \in \mathcal{X}^{\gamma,\beta}_h$ that
    $P^c(s_{sink}|s_h,a_h) = \widehat{P}^c(s_{sink}|s_h,a_h)=0$. Hence, we can simply ignore $s_{sink}$ in the following analysis.

    Under the good event $ G_3^h $, by Markov's inequality we have
    \begin{align*}
        &\mathop{\mathbb{P}}_{(c,s_h,a_h,s_{h+1})}
        \left[ |f^P_h(c, s_h, a_h, s_{h+1}) - P^c(s_{h+1}| s_h, a_h)| \geq \rho \Big| (c,s_h,a_h) \in \mathcal{X}^{\gamma,\beta}_h \right]
        = 
        \\
        & =
        \mathbb{P}_{\mathcal{D}^P_h}[|f^P_h(c, s_h, a_h, s_{h+1}) - P^c(s_{h+1}| s_h, a_h)| \geq \rho 
        ]
        \\
        \tag{By Markov's inequality}
        & \leq  
        \frac{\mathbb{E}_{\mathcal{D}^P_h}
        [|f^P_h(c, s_h, a_h, s_{h+1}) - P^c(s_{h+1}| s_h, a_h)|]}{\rho}
        \\
        \tag{Under $G^3_h$}
        & \leq 
        \frac{\epsilon_P}{\rho}.
    \end{align*}
    
    Hence,
    \[
        \mathop{\mathbb{P}}_{(c,s_h,a_h,s_{h+1})}\left[|f^P_h(c, s_h, a_h, s_{h+1}) - P^c(s_{h+1}| s_h, a_h)| \leq \rho \Big| (c,s_h,a_h) \in \mathcal{X}^{\gamma,\beta}_h \right]
        \geq 1 - \frac{\epsilon_P }{\rho}.
    \]
    Since $P^c(\cdot |s_h, a_h)$ is a distribution over $s_{h+1} \in S_{h+1}$, we have that $\sum_{s_{h+1} \in S_{h+1}} P^c(s_{h+1}|s_h, a_h) = 1$. 
    Thus, by union bound applied on $s_{h+1}\in S_{h+1}$, we obtain
    \begin{align*}
        \mathbb{P}_{(c, s_h, a_h) }
        \left[ 
            1 - \rho |S|\leq
            \sum_{s_{h+1}\in S_{h+1}} f^P_h(c, s_h, a_h, s_{h+1}) \leq
            1 + \rho |S|
        \Big|
        (c, s_h, a_h) \in  \mathcal{X}^{\gamma, \beta}_h    
        \right]
        \geq 
        1 - \frac{\epsilon_P }{\rho}|S_{h+1}|.
    \end{align*}
    
     Hence, we further conclude 
     that 
    \begin{equation}\label{prob: lemma C.5 l_1}
        \begin{split}
            &\mathop{\mathbb{P}}_{(c,s_h,a_h)}
            \left[  
                \forall s_{h+1} \in S_{h+1}.
                \frac{P^c(s_{h+1}|s_h, a_h) - \rho}{1 + \rho |S|}  \leq
                \underbrace{\frac{f^P_h(c,s_h,a_h,s_{h+1})}{\sum_{s'\in S_{h+1}} f^P_h(c,s_h,a_h,s')}}_{=\widehat{P}^c(s_{h+1}|s_h, a_h)}
                \leq
                \frac{P^c(s_{h+1}|s_h, a_h) + \rho}{1 - \rho |S|}
            \;\Big|
            (c, s_h, a_h) \in  \mathcal{X}^{\gamma, \beta}_h   \right]
            \\
            & \geq 
            1 - \frac{\epsilon_P }{\rho}|S_{h+1}|.
        \end{split}
    \end{equation}

        
    Fix a tuple $(c, s_h, a_h) \in \mathcal{X}^{\gamma, \beta}_h$ and assume the event of inequality~(\ref{prob: lemma C.5 l_1}) holds.\\
    Denote ${S^+_{h+1} = \{s_{h+1} \in S_{h+1}:
    \widehat{P}^c(s_{h+1}|s_h, a_h) \geq  P^c(s_{h+1}|s_h, a_h )\}}$ and consider the following derivation.
    
    \begingroup
    \allowdisplaybreaks
    \begin{align*}
        \| 
            \widehat{P}^c(\cdot|s_h, a_h ) 
            -  
            P^c(\cdot|s_h, a_h )
        \|_1
        =&
        \sum_{s_{h+1} \in S_{h+1}}
        |\widehat{P}^c(s_{h+1}|s_h, a_h) -  P^c(s_{h+1}|s_h, a_h )|
        \\
        =&
        \sum_{s_{h+1} \in S^+_{h+1}}
        (\widehat{P}^c(s_{h+1}|s_h, a_h) -  P^c(s_{h+1}|s_h, a_h ))       
        \\
        &+
        \sum_{s_{h+1} \in  S_{h+1} \setminus S^+_{h+1}}
        (P^c(s_{h+1}|s_h, a_h) -  \widehat{P}^c(s_{h+1}|s_h, a_h ))
        \\  
        \leq&
        \sum_{s_{h+1} \in S^+_{h+1}}
        \left(\frac{P^c(s_{h+1}|s_h, a_h) + \rho}{1 - \rho |S|} 
        -  
        P^c (s_{h+1}|s_h, a_h )\right)
        \\
        &+
        \sum_{s_{h+1} \in  S_{h+1} \setminus S^+_{h+1}}
        \left( P^c (s_{h+1}|s_h, a_h )
        -
        \frac{P^c(s_{h+1}|s_h, a_h) - \rho}{1 + \rho |S|} \right)        
        \\
        =&
        \sum_{s_{h+1} \in S^+_{h+1}}
        \frac{P^c (s_{h+1}|s_h, a_h) + \rho - (1- \rho |S|)P^c (s_{h+1}|s_h, a_h) }{1 - \rho |S|} 
        \\
        &+
        \sum_{s_{h+1} \in  S_{h+1} \setminus S^+_{h+1}}
        \frac{-P^c (s_{h+1}|s_h, a_h) + \rho + (1 + \rho |S|)P^c (s_{h+1}|s_h, a_h) }{1 + \rho |S|}         
        \\
        =&
        \frac{1}{1- \rho |S|}
        \sum_{s_{h+1} \in S^+_{h+1}}
        (\rho + \rho|S|P^c (s_{h+1}|s_h,a_h))
        \\
        &+
        \frac{1}{1 + \rho |S|}
        \sum_{s_{h+1} \in  S_{h+1} \setminus S^+_{h+1}}
        (\rho + \rho|S|P^c (s_{h+1}|s_h,a_h))
        \\
        \leq&
        \frac{2 \rho |S|}{ 1- \rho |S|}
        +
        \frac{2 \rho |S|}{ 1 + \rho |S|}
        \\
        =&
        \frac{4 \rho |S|}{1 - \rho^2 |S|^2}.
    \end{align*}
    \endgroup
    
    By inequality~(\ref{prob: lemma C.5 l_1}), the above holds with probability at least $1 - \frac{\epsilon_P }{\rho}|S_{h+1}|$ over $(c, s_h, a_h) \in \mathcal{X}^{\gamma,\beta}_h$. Hence the lemma follows.

\end{proof}

\begin{lemma}
    For the parameters choice $\beta =  \frac{\epsilon}{20|S|H} \in (0,1)$, $\rho = \frac{\beta}{16 |S|H} \in (0, \frac{1}{|S|})$,
    and $\epsilon_P = \frac{ \epsilon^2}{10\cdot 16 \cdot 20  |A| |S|^4 H^3}$
    we have $\beta \geq 2H\frac{4 \rho |S|}{1- \rho^2 |S|^2}$.
    In addition, under the good events $G_1$, $G^k_2$ and $G^k_3$ for all $k \in [h]$ it holds that
    \[
        \mathbb{P}\left[\|\widehat{P}^c(\cdot| s_h, a_h) - P^c(\cdot| s_h, a_h)\|_1\leq \frac{\epsilon}{40 |S| H^2} \Big| (c,s_h,a_h) \in \mathcal{X}^{\gamma,\beta}_h \right]
        \geq 1- \frac{\epsilon}{10 |S||A|H}.
    \]
\end{lemma}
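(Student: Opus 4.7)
The statement is a direct corollary of Lemma~\ref{lemma: UCDD l_1 matrix diff}, obtained by plugging in the specific parameter values. The plan has two parts: (i) verifying the structural constraints $\rho\in[0,\tfrac{1}{|S|})$ and $\beta\geq 2H\cdot\tfrac{4\rho|S|}{1-\rho^2|S|^2}$, and (ii) evaluating the two quantities $\tfrac{4\rho|S|}{1-\rho^2|S|^2}$ and $\tfrac{\epsilon_P}{\rho}|S_{h+1}|$ appearing in Lemma~\ref{lemma: UCDD l_1 matrix diff} and checking that they are bounded by $\tfrac{\epsilon}{40|S|H^2}$ and $\tfrac{\epsilon}{10|S||A|H}$ respectively.

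For part (i), since $\beta=\tfrac{\epsilon}{20|S|H}<1$ we get $\rho=\tfrac{\beta}{16|S|H}<\tfrac{1}{16|S|^2 H^2}$, which is strictly less than $\tfrac{1}{|S|}$, so $\rho|S|<1$ and in fact $\rho^2|S|^2\leq 1/2$. For the second constraint, write
\[
    2H\cdot\frac{4\rho|S|}{1-\rho^2|S|^2}
    =\frac{8H|S|\cdot\tfrac{\beta}{16|S|H}}{1-\rho^2|S|^2}
    =\frac{\beta/2}{1-\rho^2|S|^2}.
\]
Since $\rho^2|S|^2\leq 1/2$, the denominator is at least $1/2$, so the right-hand side is at most $\beta$, giving the required inequality.

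For part (ii), I first bound $\tfrac{4\rho|S|}{1-\rho^2|S|^2}$: using $1-\rho^2|S|^2\geq 1/2$ we get $\tfrac{4\rho|S|}{1-\rho^2|S|^2}\leq 8\rho|S|=\tfrac{8|S|\beta}{16|S|H}=\tfrac{\beta}{2H}=\tfrac{\epsilon}{40|S|H^2}$, matching the target. Next I bound $\tfrac{\epsilon_P}{\rho}|S_{h+1}|\leq \tfrac{\epsilon_P |S|}{\rho}$. Plugging in $\rho=\tfrac{\beta}{16|S|H}=\tfrac{\epsilon}{16\cdot 20|S|^2 H^2}$ and $\epsilon_P=\tfrac{\epsilon^2}{10\cdot 16\cdot 20|A||S|^4 H^3}$ gives
\[
    \frac{\epsilon_P |S|}{\rho}
    =\frac{\tfrac{\epsilon^2}{10\cdot 16\cdot 20|A||S|^3 H^3}}{\tfrac{\epsilon}{16\cdot 20|S|^2 H^2}}
    =\frac{\epsilon}{10|S||A|H},
\]
which is the desired failure probability. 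Invoking Lemma~\ref{lemma: UCDD l_1 matrix diff} with these parameters then yields the stated conditional bound.

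The hardest step is conceptually trivial but algebraically finicky: one has to carry the cancellations in $\epsilon_P/\rho$ carefully to see that the constants and exponents of $|S|$, $|A|$, $H$, $\epsilon$ in $\epsilon_P$ were chosen precisely so that the ratio collapses to $\tfrac{\epsilon}{10|S||A|H}$. No concentration inequality or new probabilistic argument is needed beyond what Lemma~\ref{lemma: UCDD l_1 matrix diff} already provides.
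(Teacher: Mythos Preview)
Your proposal is correct and follows exactly the approach the paper takes: the paper's own proof is the single line ``An immediate implication of Lemma~\ref{lemma: UCDD l_1 matrix diff}'' (with a minor typo referencing the $\ell_2$ version), and you have simply spelled out the parameter substitution that makes that implication explicit. Your algebraic verifications of $\rho\in[0,1/|S|)$, $\beta\geq 2H\cdot\tfrac{4\rho|S|}{1-\rho^2|S|^2}$, $\tfrac{4\rho|S|}{1-\rho^2|S|^2}\leq\tfrac{\epsilon}{40|S|H^2}$, and $\tfrac{\epsilon_P|S|}{\rho}=\tfrac{\epsilon}{10|S||A|H}$ are all accurate.
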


\begin{proof}
    An immediate implication of lemma~\ref{lemma: UCDD l_2 matrix diff}.
\end{proof}


\begin{lemma}[occupancy measure difference]\label{lemma: UCDD l_1 occ measure diff}
    
    Under the good events $G_1$, $G_2$ and $G_3$, we have for any (context-dependent) policy $\pi = (\pi_c)_{c \in \mathcal{C}}$ that
    \begin{align*}
        \mathbb{P}_c\left[
		\forall h \in [H].\;\;\;
		\|
			q_h (\cdot | \pi_c, P^c) - q_h(\cdot| \pi_c, \widehat{P}^c)		
		\|_1
		\leq 
		\frac{4 \rho |S|}{1 - \rho^2 |S|^2} h 
		+ \beta \sum_{k=1}^{h-1} |S_k| \right]
		\geq
		 1 
        - 
        \left(
            \frac{\epsilon_P}{\rho}
            |A|\sum_{i=0}^{H-1}|S_i||S_{i+1}|
            + 
            \gamma
            \sum_{i=1}^{H-1}|S_{i}|
        \right)
    \end{align*}

    for a fixed $\rho \in [0, \frac{1}{|S|})$ and $\beta \in (0,1]$ for which $\beta \geq 2H\frac{4 \rho |S|}{1 - \rho^2 |S|^2} $, where
    \[
        \forall h \in [H].\;\; \|q_h(\cdot | \pi_c, P^c) - q_h(\cdot | \pi_c, \widehat{P}^c)\|_1 := 
        \sum_{s_{h} \in S_h}|q_h(s_h | \pi_c, P^c) - q_h(s_h | \pi_c, \widehat{P}^c)|
    \]
    (i.e., $q_h(s_{sink}|\pi_c,\widehat{P}^c)$ is ignored for all $h \in [H]$). 
\end{lemma}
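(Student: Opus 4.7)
The plan is to mirror the structure of the $\ell_2$ analog (Lemma~\ref{lemma: UCDD l_2 occ measure diff}), replacing the Markov-inequality-on-squared-error step (which produced the $\epsilon_P/\rho^2$ factor) by its $\ell_1$ counterpart from Lemma~\ref{lemma: UCDD l_1 matrix diff}, which yields a one-step dynamics-error bound $\|\widehat{P}^c(\cdot|s_h,a_h) - P^c(\cdot|s_h,a_h)\|_1 \leq \tfrac{4\rho|S|}{1-\rho^2|S|^2}$ conditionally on $(c,s_h,a_h)\in \mathcal{X}^{\gamma,\beta}_h$ with failure probability at most $\epsilon_P/\rho \cdot |S_{h+1}|$ (instead of $\epsilon_P/\rho^2 \cdot |S_{h+1}|$). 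Everything downstream flows through unchanged, which is why the only quantitative change in the statement is $1/\rho^2 \rightarrow 1/\rho$.

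\textbf{Induction setup.} The base case $h=0$ is trivial: the start state $s_0$ is unique, so $q_0(\cdot|\pi_c,P^c) \equiv q_0(\cdot|\pi_c,\widehat{P}^c)$. For the inductive step, assume the bound holds at layer $h$ with failure probability (over $c$) at most $\tfrac{\epsilon_P}{\rho}|A|\sum_{i=0}^{h-1}|S_i||S_{i+1}| + \gamma\sum_{i=1}^{h-1}|S_i|$, and prove it at layer $h+1$. Expand
\[
\|q_{h+1}(\cdot|\pi_c,P^c) - q_{h+1}(\cdot|\pi_c,\widehat{P}^c)\|_1 \le \underbrace{\|q_h(\cdot|\pi_c,P^c) - q_h(\cdot|\pi_c,\widehat{P}^c)\|_1}_{(1)} + \underbrace{\sum_{s_h,a_h,s_{h+1}}\! q_h(s_h|\pi_c,\widehat{P}^c)\pi_c(a_h|s_h)|P^c(s_{h+1}|s_h,a_h)-\widehat{P}^c(s_{h+1}|s_h,a_h)|}_{(2)},
\]
via the standard telescoping inequality used in Lemma~\ref{lemma: UCDD l_2 occ measure diff}. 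Term $(1)$ is handled by the induction hypothesis.

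\textbf{Controlling the one-step term $(2)$.} Partition $S_h$ into the four sets $B^{h,c}_1,\ldots,B^{h,c}_4$ defined in the $\ell_2$ analog (based on whether $s_h\in \widehat{S}^{\gamma,\beta}_h$ and whether $c\in\widehat{\mathcal{C}}^\beta(s_h)$). The same reasoning gives $\mathbb{P}_c[B^{h,c}_4\neq\emptyset]\le \gamma|S_h|$. Condition on $B^{h,c}_4=\emptyset$. On $B^{h,c}_1$, every $(c,s_h,a_h)$ lies in $\mathcal{X}^{\gamma,\beta}_h$, so apply Lemma~\ref{lemma: UCDD l_1 matrix diff} and union-bound over $(s_h,a_h)\in B^{h,c}_1\times A$ to get the contribution $\le \tfrac{4\rho|S|}{1-\rho^2|S|^2}$ with failure probability $\le |A||S_h|\cdot\tfrac{\epsilon_P}{\rho}|S_{h+1}|$. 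On $B^{h,c}_2\cup B^{h,c}_3$ the absolute difference is trivially at most $1$ per transition, but $q_h(s_h|\pi_c,\widehat{P}^c)\le q_h(s_h|\widehat{\pi}^c_{s_h},\widehat{P}^c) < \beta$ by construction of $\widehat{P}^c$ (states failing the reachability test are routed to $s_{\mathrm{sink}}$), yielding a contribution of at most $\beta|S_h|$.

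\textbf{Combining and the main obstacle.} Summing the induction hypothesis for $(1)$ with the $\tfrac{4\rho|S|}{1-\rho^2|S|^2}+\beta|S_h|$ bound for $(2)$ gives the target bound at layer $h+1$, and a union bound over failure events accumulates the desired $\tfrac{\epsilon_P}{\rho}|A||S_h||S_{h+1}|+\gamma|S_h|$ additional failure probability. The main subtlety, as in the $\ell_2$ case, is keeping the probabilistic bookkeeping clean: Lemma~\ref{lemma: UCDD l_1 matrix diff} gives a bound in probability over $(c,s_h,a_h)\sim$ (distribution on $\mathcal{X}^{\gamma,\beta}_h$), while we need a statement uniformly over $(s_h,a_h)\in B^{h,c}_1\times A$ for almost every $c$. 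This is handled by translating the conditional statement into a pointwise one via union bound over $(s_h,a_h)$, paying the $|A||S_h||S_{h+1}|$ factor; once this translation is done carefully, induction closes and the lemma follows.
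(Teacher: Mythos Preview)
Your proposal is correct and follows essentially the same approach as the paper: induction on $h$, the same telescoping decomposition into terms $(1)$ and $(2)$, the same four-way partition $B^{h,c}_1,\dots,B^{h,c}_4$ of $S_h$, invoking Lemma~\ref{lemma: UCDD l_1 matrix diff} on $B^{h,c}_1$ (with union bound over $(s_h,a_h)$), the $q_h(s_h|\pi_c,\widehat{P}^c)<\beta$ bound on $B^{h,c}_2\cup B^{h,c}_3$, and the $\gamma|S_h|$ bound on $\mathbb{P}_c[B^{h,c}_4\neq\emptyset]$. One minor clarification: the reason $q_h(s_h|\pi_c,\widehat{P}^c)<\beta$ on $B^{h,c}_2\cup B^{h,c}_3$ is simply that $c\notin\widehat{\mathcal{C}}^\beta(s_h)$ means $s_h$ is not $\beta$-reachable under $\widehat{P}^c$, so even the maximizing policy $\widehat{\pi}^c_{s_h}$ reaches it with probability $<\beta$; the sink-routing is not the direct mechanism here.
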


\begin{remark}
    Since $s_{sink} \notin S$, $q_h(s_{sink}|\pi_c, P^c)$ is not defined for the true dynamics $P^c$.
    In addition, by $\widehat{P}^c$ definition, from the sink there are no transitions to any other state, hence, we can simply ignore it in the following analysis.
\end{remark}

\begin{proof}
   We will show the lemma by induction over the horizon, $h$.
   
    For the base case $h =0$ the claim holds trivially (with probability $1$) since the start state $s_0$ is unique.

    For the induction step, assume that it holds up to layer $h$, namely
    \begin{align*}
        \mathbb{P}
        \Big[\forall k \in [h]. \;\;\; 		
        \|
			q_k (\cdot | \pi_c, P^c) - q_k(\cdot | \pi_c, \widehat{P}^c)		
		\|_1
		\leq 
		\frac{4 \rho |S|}{1 - \rho^2 |S|^2}k
		+
		\beta \sum_{i=1}^{k-1}|S_i| \Big]
        \geq 
        1 -
        \left(
            \frac{\epsilon_P}{\rho}
            |A|\sum_{i=0}^{h-1}|S_i||S_{i+1}|
            + 
            \gamma
            \sum_{i=1}^{h-1}|S_{i}| 
        \right)
    \end{align*}

   and prove for layer $h+1$.
   
    %
    by Lemma~\ref{lemma: UCDD l_1 matrix diff} it holds that 
    \begin{align*}
        \mathop{\mathbb{P}}_{(c,s_h,a_h)}
        \left[ \|\widehat{P}^{c'}(\cdot| s_h, a_h) - P^{c'}(\cdot| s_h, a_h)\|_1
       \leq 
       \frac{4 \rho |S|}{ 1 - \rho^2 |S|^2}
       \Big|(c, s_h, a_h) \in  \mathcal{X}^{\gamma, \beta}_h\right]
       \geq 1 - \frac{\epsilon_P}{\rho}|S_{h+1}|.
    \end{align*}
    
    
    Consider the following derivation for any (fixed) context $c$.  (Later we will take the probability over $c\sim \mathcal{D}$.)
   \begingroup
   \allowdisplaybreaks
    \begin{align*}
        &\| q_{h+1}(\cdot | \pi_c, P^c) - q_{h+1}(\cdot | \pi_c, \widehat{P}^c) \|_1
        \\
       = &\sum_{s_{h+1} \in S_{h+1}}
        |q_{h+1}(s_{h+1} | \pi_c, P^c) - q_{h+1}(s_{h+1} | \pi_c, \widehat{P}^c)|
        \\
        =&\sum_{s_{h+1} \in S_{h+1}}
        |
            \sum_{s_{h} \in S_{h}}
            \sum_{a_h \in A}
            (
                q_{h}(s_{h} | \pi_c, P^c)
                \pi_c(a_h | s_h)
                P^c(s_{h+1}|s_h, a_h)
                -
                q_{h}(s_{h} | \pi_c, \widehat{P}^c)
                \pi_c(a_h | s_h)
                \widehat{P}^c(s_{h+1}|s_h, a_h)
            )
       |
       \\
        \leq&\underbrace
        {
            \sum_{s_{h} \in S_{h}}
            \sum_{a_h \in A}
            \sum_{s_{h+1} \in S_{h+1}}
            |
                q_h(s_h | \pi_c, P^c)
                \pi_c(a_h | s_h)
                P^c(s_{h+1}|s_h, a_h)
                -
                q_h(s_h | \pi_c, \widehat{P}^c)
                \pi_c(a_h | s_h)
                P^c(s_{h+1}|s_h, a_h)
            |
        }_{(1)}
        \\    
        &+
        \underbrace
        {
            \sum_{s_{h} \in S_{h}}
            \sum_{a_h \in A}
            \sum_{s_{h+1} \in S_{h+1}}
            |
                q_h(s_h | \pi_c, \widehat{P}^c)
                \pi_c(a_h | s_h)
                P^c(s_{h+1}|s_h, a_h)  
                -
                q_h(s_h | \pi_c, \widehat{P}^c)
                \pi_c(a_h | s_h)
                \widehat{P}^c(s_{h+1}|s_h, a_h)
            |
        }_{(2)}.
    \end{align*}
    \endgroup
   
   We bound $(1)$ and $(2)$ separately.
   
   For $(1)$, since $P^c(\cdot| s_h,a_h)$ and $\pi_c(\cdot |s_h)$  are distributions, we have
   \begin{align*}
       (1)
       &=
       \sum_{s_{h} \in S_{h}}
        |
           q_h(\cdot | \pi_c, P^c)
            -
            q_h(s_h | \pi_c, \widehat{P}^c)
        |
       \sum_{a_h \in A}
       \pi_c(a_h | s_h)
       \sum_{s_{h+1} \in S_{h+1}}
       P^c(s_{h+1}|s_h, a_h)
       \\
       &=
       \|  
            q_h(\cdot | \pi_c, P^c)
            -
            q_h(\cdot | \pi_c, \widehat{P}^c)
        \|_1
       \sum_{a_h \in A}
       \pi_c(a_h | s_h)
       \sum_{s_{h+1} \in S_{h+1}}
       P^c(s_{h+1}|s_h, a_h)
       \\
       &=
       \|  
            q_h(\cdot | \pi_c, P^c)
            -
            q_h(\cdot | \pi_c, \widehat{P}^c)
        \|_1,
   \end{align*}
   which holds with probability $1$.
   
    To bound $(2)$,
    for all $h \in [H-1]$, let us define the following subsets of $S_h$ for any given context $c$.
   \begin{enumerate}
       \item $B^{h, c}_1 = 
       \{s_h \in  S_h :
           s_h \in \widehat{S}^{\gamma,\beta}  \text{ and } c \in \widehat{C}^{\beta}(s_h)\}$.
       \item $B^{h, c}_2 = 
       \{ s_h \in  S_h :
       s_h \in \widehat{S}^{\gamma,\beta} \text{ and } c \notin \widehat{C}^{\beta}(s_h)\}$.
       \item $B^{h, c}_3 = 
       \{s_h \in  S_h :
       s_h \notin \widehat{S}^{\gamma,\beta} \text{ and } c \notin \widehat{C}^{\beta}(s_h)\}$.
       \item $B^{h, c}_4 = 
       \{ s_h \in  S_h :
       s_h \notin \widehat{S}^{\gamma,\beta} \text{ and } c \in \widehat{C}^{\beta}(s_h)\}$.      
   \end{enumerate}

   Clearly, for every layer $h\in [H-1]$ and context $c \in \mathcal{C}$ it holds that $\cup_{i=1}^4 B^{h,c}_i = S_h$.
    
    By definition of $B^{h, c}_1$, for every layer $h\in [H-1]$ we have that $s_h \in B^{h, c}_1$  if and only if   for every action $ a_h \in A$ it holds that $(c, s_h, a_h) \in  \mathcal{X}^{\gamma, \beta}_h$.
   
   
   By definition of $B^{h, c}_4$, for every  layer $h\in [H-1]$ we have that
   \begin{align*}
       \mathbb{P}_c[B^{h, c}_4 \neq \emptyset]
       &=
       \mathbb{P}_c[\exists s_h \in S_h :
       s_h \notin \widehat{S}^{\gamma, \beta}_h \text{ and } c \in \widehat{C}^{\beta}(s_h)]
       \leq 
       \gamma|S_h|.
   \end{align*}
   Thus, for every $h \in [H-1]$ we have $\mathbb{P}_c[B^{h, c}_4 =\emptyset]\geq 1 - \gamma|S_h|$.
   
   In the following, we assume that $B^{h, c}_4 =\emptyset$, since $\mathbb{P}_c[B^{h, c}_4 =\emptyset]\geq 1 - \gamma|S_h|$, it will only add $\gamma |S_h|$ to the probability of the error.
   
    Consider the following derivation
   \begingroup
   \allowdisplaybreaks
   \begin{align*}
       (2) &= 
        \sum_{s_{h} \in B_1}
        \sum_{a_h \in A}
        \sum_{s_{h+1} \in S_{h+1}}
        |
            q_h(s_h | \pi_c, \widehat{P}^c)
            \pi_c(a_h | s_h)
            P^c(s_{h+1}|s_h, a_h)  
            -
            q_h(s_h | \pi_c, \widehat{P}^c)
            \pi_c(a_h | s_h)
            \widehat{P}^c(s_{h+1}|s_h, a_h)
        |
        \\
        &+
        \sum_{s_{h} \in B^{h, c}_2 \cup B^{h, c}_3}
        \sum_{a_h \in A}
        \sum_{s_{h+1} \in S_{h+1}}
        |
            q_h(s_h | \pi_c, \widehat{P}^c)
            \pi_c(a_h | s_h)
            P^c(s_{h+1}|s_h, a_h)  
            -
            q_h(s_h | \pi_c,\widehat{P}^c)
            \pi_c(a_h | s_h)
            \widehat{P}^c(s_{h+1}|s_h, a_h)
        |
        \\
        &+
        \sum_{s_{h} \in B^{h, c}_4}
        \sum_{a_h \in A}
        \sum_{s_{h+1} \in S_{h+1}}
        |
            q_h(s_h | \pi_c, \widehat{P}^c)
            \pi_c(a_h | s_h)
            P^c(s_{h+1}|s_h, a_h)  
            -
            q_h(s_h | \pi_c,\widehat{P}^c)
            \pi_c(a_h | s_h)
            \widehat{P}^c(s_{h+1}|s_h, a_h)
        |        
        \\
        &=
        \sum_{s_{h} \in B^{h, c}_1}
        q_h(s_h | \pi_c, \widehat{P}^c)
        \sum_{a_h \in A}
        \pi_c(a_h | s_h)
        \sum_{s_{h+1} \in S_{h+1}}
        |
            P^c(s_{h+1}|s_h, a_h)  
            -
            \widehat{P}^c(s_{h+1}|s_h, a_h)
        |
        \\
        \tag{$B^{h, c}_4 =\emptyset$ w.p. at least $1- \gamma|S_h|$}
        &+
        \sum_{s_{h} \in B^{h, c}_2 \cup B^{h, c}_3}
        \sum_{a_h \in A}
        q_h(s_h | \pi_c, \widehat{P}^c)
        \pi_c(a_h | s_h) 
        \underbrace{
        \sum_{s_{h+1} \in S_{h+1}}
        |
            P^c(s_{h+1}|s_h, a_h)  
            -
            \widehat{P}^c(s_{h+1}|s_h, a_h)
        |
        }_{\leq 1}
        \\
        &\leq
        \sum_{s_{h} \in B^{h, c}_1}
        q_h(s_h | \pi_c, \widehat{P}^c)
        \sum_{a_h \in A}
        \pi_c(a_h | s_h)
        \|
            P^c(\cdot|s_h, a_h)  
            -
            \widehat{P}^c(\cdot|s_h, a_h)
        \|_1
        +
        \sum_{s_{h} \in B^{h, c}_2 \cup B^{h, c}_3}
        q_h(s_h | \pi_c, \widehat{P}^c)
        \underbrace{
        \sum_{a_h \in A}
        \pi_c(a_h | s_h)}_{=1}        
        \\
        \tag{By Lemma~\ref{lemma: UCDD l_2 matrix diff} and union bound over $(s_h,a_h) \in B^{h, c}_1 \times A$, holds w.p. at least $1-|A||S_h|\frac{\epsilon_P}{\rho}|S_{h+1}|$}
        &\leq
        \underbrace
        {
            \sum_{s_{h} \in B^{h, c}_1}
            q_h(s_h | \pi_c, \widehat{P}^c)
            \sum_{a_h \in A}
            \pi_c(a_h | s_h)
        }_{\leq 1}
        \frac{4 \rho |S|}
        {1 - \rho^2 |S|^2}
        +
        \sum_{s_{h} \in B^{h, c}_2 \cup B^{h, c}_3}
        \underbrace{q_h(s_h | \pi_c, \widehat{P}^c)}_{\leq q_{h}(s_{h} | \widehat{\pi}^c_{s_h}, \widehat{P}^c) < \beta}
        \underbrace{\sum_{a_h \in A}
        \pi_c(a_h | s_h)}_{=1}
        \\
        &=
        \frac{4 \rho |S|}
        {1 - \rho^2 |S|^2}
        + \beta |S_h| .
   \end{align*}
   \endgroup
   Hence, 
   \begin{align*}
       \mathbb{P}_c\left[ 
            (2) \leq \frac{4 \rho |S|}{1 - \rho^2 |S|^2}+ \beta |S_h|
       \right]
       \geq 
       1- \left(|A||S_h|\frac{\epsilon_P}{\rho}|S_{h+1}| + \gamma |S_h| \right).
   \end{align*}
   In addition, we proved above that
    \begin{align*}
       \mathbb{P}_c\left[ 
            \|q_{h+1}(\cdot |\pi_c, P^c) - q_{h+1}(\cdot |\pi_c, \widehat{P}^c)\|_1 \leq (1) + (2)
       \right] = 1,
   \end{align*}
   and
   \begin{align*}
       \mathbb{P}_c\left[ 
            (1) = \|q_h(\cdot |\pi_c, P^c) - q_h(\cdot |\pi_c, \widehat{P}^c)\|_1
       \right] = 1.
   \end{align*}
   Thus, by combining all the above inequalities with the induction hypothesis we obtain 
   \begingroup
   \allowdisplaybreaks
    \begin{align*}
        &\mathbb{P}_c
        \left[\forall k \in [h+1]. \;\;\; 		
        \|
			q_k (\cdot | \pi_c, P^c) - q_k(\cdot | \pi_c, \widehat{P}^c)		
		\|_1
		\leq 
		\frac{4 \rho |S|}{1 - \rho^2 |S|^2}k
		+
		\beta \sum_{i=1}^{k-1}|S_i| \right]
		\\
		&\geq
		\mathbb{P}_c
        \Bigg[\forall k \in [h]. \;\;\; 
        \|
			q_k (\cdot | \pi_c, P^c) - q_k(\cdot | \pi_c, \widehat{P}^c)		
		\|_1
		\leq 
		\frac{4 \rho |S|}{1 - \rho^2 |S|^2}k
		+
		\beta \sum_{i=1}^{k-1}|S_i|\; \text{ and } 
		\\
        & (1) + (2)
		\leq
		\frac{4 \rho |S|}{1 - \rho^2 |S|^2}(h+1)
		+
		\beta \sum_{i=1}^{h}|S_i|
		\Bigg]
		\\
		&\geq
		\mathbb{P}_c
        \left[\forall k \in [h]. \;\;\; 		
        \|
			q_k (\cdot | \pi_c, P^c) - q_k(\cdot | \pi_c, \widehat{P}^c)		
		\|_1
		\leq 
		\frac{4 \rho |S|}{1 - \rho^2 |S|^2}k
		+
		\beta \sum_{i=1}^{k-1}|S_i|\; 
		\text{ and } 
        (2) \leq \frac{4 \rho |S|}{1 - \rho^2 |S|^2} +\beta |S_h|
		\right]
		\\
		&\geq
		1 - 
        \left(
            \frac{\epsilon_P}{\rho}
            |A|\sum_{i=0}^{h-1}|S_i||S_{i+1}|
            + 
            \gamma
            \sum_{i=1}^{h-1}|S_{i}|
            +
            \frac{\epsilon_P}{\rho}
            |A||S_h||S_{h+1}|
            + \gamma |S_h|
        \right)
        =
		1 - 
        \left(
            \frac{\epsilon_P}{\rho}
            |A|\sum_{i=0}^{h}|S_i||S_{i+1}|
            + 
            \gamma
            \sum_{i=1}^{h}|S_{i}|
        \right),
    \end{align*}
    as stated.
   \endgroup

\end{proof}

\begin{lemma}[expected value difference caused by dynamics approximation]\label{lemma: expected error from dynamics l_1 UCDD}
    
    Then, under the good events $G_1$, $G_2$ and $G_3$, for every context-dependent policy $\pi= (\pi_c)_{c \in \mathcal{C}}$ we have
    \begin{align*}
        \mathbb{E}_{c \sim \mathcal{D}}
        [|V^{\pi_c}_{\mathcal{M}(c)}(s_0) - V^{\pi_c}_{\widetilde{\mathcal{M}}(c)}(s_0)|]
        \leq 
	   \frac{4 \rho |S|}{1 - \rho^2 |S|^2}H^2
	   +
	   \beta|S|H
	   +
	   H|S|^2|A|\frac{\epsilon_P}{\rho}
	   +
	   \gamma H|S|,
    \end{align*}
    where $\rho \in [0, \frac{1}{|S|})$ and $\beta \in (0,1]$ for which $\beta \geq 2H\frac{4 \rho |S|}{1 - \rho^2 |S|^2}$. 
\end{lemma}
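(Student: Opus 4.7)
The plan is to mirror the structure of the $\ell_2$ analysis in Lemma~\ref{lemma: expected error from dynamics l_2 UCDD}, with the only real change being that the per-layer dynamics bound comes from Lemma~\ref{lemma: UCDD l_1 occ measure diff} (which uses $\epsilon_P/\rho$) instead of from Lemma~\ref{lemma: UCDD l_2 occ measure diff} (which uses $\epsilon_P/\rho^2$).

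First, I would fix an arbitrary context $c$ and expand
$V^{\pi_c}_{\mathcal{M}(c)}(s_0) - V^{\pi_c}_{\widetilde{\mathcal{M}}(c)}(s_0)$ in terms of occupancy measures under $P^c$ versus $\widehat{P}^c$. Since we extended $r^c(s_{sink}, \cdot) := 0$ and $P^c$ is not defined on $s_{sink}$, the sink contributes nothing and can be dropped from the sum. Using $|r^c(s_h,a_h)| \leq 1$, $\sum_{a_h} \pi_c(a_h | s_h) = 1$, and triangle inequality, I would reduce the absolute value difference to
$\sum_{h=0}^{H-1} \|q_h(\cdot | \pi_c, P^c) - q_h(\cdot|\pi_c, \widehat{P}^c)\|_1$ (exactly as in the $\ell_2$ case).

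Next, I would define the good event
\[
    G_8 := \Bigl\{\forall h \in [H]:\ \|q_h(\cdot|\pi_c, P^c) - q_h(\cdot|\pi_c, \widehat{P}^c)\|_1 \leq \tfrac{4\rho|S|}{1-\rho^2|S|^2}h + \beta \sum_{i=1}^{h-1}|S_i|\Bigr\}.
\]
By Lemma~\ref{lemma: UCDD l_1 occ measure diff}, we have $\mathbb{P}_c[G_8] \geq 1 - (|S|^2|A|\epsilon_P/\rho + |S|\gamma)$, which is precisely where the analysis for $\ell_1$ diverges from the $\ell_2$ version (the exponent on $\rho$ in the denominator). Under $G_8$, telescoping the sum over $h$ gives the bound $\frac{4\rho|S|}{1-\rho^2|S|^2}H^2 + \beta|S|H$. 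Under $\overline{G_8}$, the value difference is trivially bounded by $H$.

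Finally, I would apply the law of total expectation:
\begin{align*}
\mathbb{E}_{c\sim\mathcal{D}}[|V^{\pi_c}_{\mathcal{M}(c)}(s_0) - V^{\pi_c}_{\widetilde{\mathcal{M}}(c)}(s_0)|]
&\leq \tfrac{4\rho|S|}{1-\rho^2|S|^2}H^2 + \beta|S|H + \mathbb{P}_c[\overline{G_8}]\cdot H\\
&\leq \tfrac{4\rho|S|}{1-\rho^2|S|^2}H^2 + \beta|S|H + H|S|^2|A|\tfrac{\epsilon_P}{\rho} + \gamma H|S|,
\end{align*}
which is exactly the claim. I do not expect any real obstacle: the reasoning is a direct transcription of the $\ell_2$ proof, with the single change being that the Markov-based bound on $|f^P_h - P^c|$ used inside Lemma~\ref{lemma: UCDD l_1 occ measure diff} gives $\epsilon_P/\rho$ rather than $\epsilon_P/\rho^2$, which then propagates into the final bound. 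The only mild bookkeeping is verifying that all the places that previously invoked Lemma~\ref{lemma: UCDD l_2 matrix diff} now correctly invoke Lemma~\ref{lemma: UCDD l_1 matrix diff} and that the hypothesis $\beta \geq 2H\cdot \tfrac{4\rho|S|}{1-\rho^2|S|^2}$ carries over (it does, since it is used only to ensure the sampling probabilities in the induction step of the occupancy-measure lemma remain at least $\beta/2$).
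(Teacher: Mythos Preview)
Your proposal is correct and follows essentially the same approach as the paper's own proof: reduce the value difference to $\sum_h \|q_h(\cdot|\pi_c,P^c)-q_h(\cdot|\pi_c,\widehat P^c)\|_1$, define the event $G_8$, invoke Lemma~\ref{lemma: UCDD l_1 occ measure diff} for its probability (with the $\ell_1$-specific factor $\epsilon_P/\rho$), and combine the on-$G_8$ and off-$G_8$ bounds via total expectation. The only quibble is terminological---summing the per-layer bounds over $h$ is not really ``telescoping''---but the arithmetic and logic are exactly as in the paper.
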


\begin{proof}
    Recall that the true rewards function is not defined for $s_{sink}$, since $s_{sink} \notin S$. For the intermediate MDP $\widetilde{\mathcal{M}}(c)$ we extended  $r^c$ to $s_{sink}$ by defining 
    $ \forall a\in A.\;\; r^c(s_{sink},a)=0$ for every context $c \in \mathcal{C}$. 
    Since $P^c$ is also not defined for $s_{sink}$, we can simply omit $s_{sink}$, as the second equality in the following derivation shows.
    
    Consider the following derivation for any fixed $c \in \mathcal{C}$. (Later we will take the expectation over $c$.)
    \begingroup
    \allowdisplaybreaks
    \begin{align*}
        &|V^{\pi_c}_{\mathcal{M}(c)}(s_0) 
        - 
        V^{\pi_c}_{\widetilde{\mathcal{M}}(c)}(s_0)|
        \\
        = &
        \left|
            \sum_{h=0}^{H-1}
            \sum_{s_h \in S_h}
            \sum_{a_h \in A}
            q_h(s_h, a_h|\pi_c, P^c)\cdot r^c(s_h, a_h)
            - 
            \sum_{h=0}^{H-1}
            \sum_{s_h \in S_h \cup \{s_{sink}\}}
            \sum_{a_h \in A}
            q_h(s_h, a_h|\pi_c, \widehat{P}^c) \cdot
            r^c(s_h, a_h)
        \right|
        \\
        \tag{ $r^c(s_{sink},a):= 0,\; \forall c,a$}
        = &
        \left|
            \sum_{h=0}^{H-1}
            \sum_{s_h \in S_h}
            \sum_{a_h \in A}
            q_h(s_h, a_h|\pi_c, P^c)\cdot r^c(s_h, a_h)
            -
            \sum_{h=0}^{H-1}
            \sum_{s_h \in S_h}
            \sum_{a_h \in A}
            q_h(s_h, a_h|\pi_c, \widehat{P}^c) \cdot
            r^c(s_h, a_h)
        \right|
        \\
        = &
        \left|
            \sum_{h=0}^{H-1}
            \sum_{s_h \in S_h}
            \sum_{a_h \in A}
            (q_h(s_h, a_h|\pi_c, P^c) - q_h(s_h, a_h|\pi_c, \widehat{P}^c))
            r^c(s_h, a_h)
        \right|
        \\
        \leq &
        \sum_{h=0}^{H-1}
        \sum_{s_h \in S_h}
        \sum_{a_h \in A}
        |q_h(s_h, a_h|\pi_c, P^c) - q_h(s_h, a_h|\pi_c, \widehat{P}^c)|
        \underbrace{|r^c(s_h, a_h)|}_{\leq 1}
        \\
        \leq &
        \sum_{h=0}^{H-1}
        \sum_{s_h \in S_h}
        \sum_{a_h \in A}
        |q_h(s_h, a_h|\pi_c, P^c) - q_h(s_h, a_h|\pi_c, \widehat{P}^c)|
        \\
        = &
        \sum_{h=0}^{H-1}
        \sum_{s_h \in S_h}
        \sum_{a_h \in A}
        \pi_c(a_h | s_h)
        |q_h(s_h |\pi_c, P^c) - q_h(s_h|\pi_c, \widehat{P}^c)|
        \\
        = &
        \sum_{h=0}^{H-1}
        \sum_{s_h \in S_h}
        |q_h(s_h |\pi_c, P^c) - q_h(s_h|\pi_c, \widehat{P}^c)|         
        \underbrace{\sum_{a_h \in A}
        \pi_c(a_h | s_h)}_{=1}
       \\
       = &
        \sum_{h=0}^{H-1}
        \sum_{s_h \in S_h}
        |q_h(s_h |\pi_c, P^c) - q_h(s_h|\pi_c, \widehat{P}^c)|
        \\
        = &
        \sum_{h=0}^{H-1}
        \|q_h(\cdot |\pi_c, P^c) - q_h(\cdot|\pi_c, \widehat{P}^c)\|_1 .
    \end{align*}
    \endgroup
    
    Denote by $G_8$ the good event
	\[
        \forall h \in [H]:
        \|q_h(\cdot |\pi_c, P^c) - q_h(\cdot|\pi_c, \widehat{P}^c)\|_1 
        \leq
        \frac{4 \rho |S|}{1 - \rho^2 |S|^2} h
		+
		\beta \sum_{i=1}^{h-1}|S_i|,
    \]
    and denote by $\overline{G_8}$ its complementary event.
    
    By Lemma~\ref{lemma: UCDD l_1 occ measure diff} we have
    \[
        \mathbb{P}_c[G_8] 
        \geq
        1 
        - 
        \left(
            \frac{\epsilon_P}{\rho}
            |A|\sum_{i=0}^{H-1}|S_i||S_{i+1}|
            + 
            \gamma
            \sum_{i=1}^{H-1}|S_{i}|
        \right)
        \geq
        1 - \left( |S|^2|A|\frac{\epsilon_P}{\rho} +|S|\gamma \right).
     \]

    If $G_8$ holds, then
    \begingroup
    \allowdisplaybreaks
	\begin{align*}
	   \sum_{h=0}^{H-1}
	   \|q_h(\cdot |\pi_c, P^c) - q_h(\cdot|\pi_c, \widehat{P}^c)\|_1 
	   \leq &
	   \sum_{h=0}^{H-1}
	   \frac{4 \rho |S|}{1 - \rho^2 |S|^2} h
	   +
	   \beta \sum_{i=1}^{h-1}|S_i|
	   \\
	   \leq &
	   \sum_{h=0}^{H-1}
	   \frac{4 \rho |S|}{1 - \rho^2 |S|^2}H
	   +
	   \beta|S|
	   \leq 
	   \frac{4 \rho |S|}{1 - \rho^2 |S|^2}H^2
	   +
	   \beta|S|H.
	\end{align*}
	\endgroup
    
    Otherwise,
    \begin{align*}
        \sum_{h=0}^{H-1}
        \|q_h(\cdot |\pi_c, P^c) - q_h(\cdot|\pi_c, \widehat{P}^c)\|_1 
	   \leq
	   \sum_{h=0}^{H-1}
	   1
	   \leq
	   H.
    \end{align*}
    Using total expectation low we obtain
    
    \begingroup
    \allowdisplaybreaks
    \begin{align*}
        &\mathbb{E}_{c \sim \mathcal{D}}
        [|V^{\pi_c}_{\mathcal{M}(c)}(s_0) - V^{\pi_c}_{\widetilde{\mathcal{M}}(c)}(s_0)|]
        \\
        \leq &
        \mathbb{P}[G_8]
        \mathbb{E}_{c \sim \mathcal{D}}
        [|V^{\pi_c}_{\mathcal{M}(c)}(s_0) - V^{\pi_c}_{\widetilde{\mathcal{M}}(c)}(s_0)||G_8]
        +
        \mathbb{P}[\overline{G_8}]
        \mathbb{E}_{c \sim \mathcal{D}}
        [|V^{\pi_c}_{\mathcal{M}(c)}(s_0) - V^{\pi_c}_{\widetilde{\mathcal{M}}(c)}(s_0)||\overline{G_8}]
        \\
        \leq &
        \mathbb{E}_{c \sim \mathcal{D}}
        [|V^{\pi_c}_{\mathcal{M}(c)}(s_0) - V^{\pi_c}_{\widetilde{\mathcal{M}}(c)}(s_0)||G_8]
        +
        \mathbb{P}[\overline{G_8}]
        \mathbb{E}_{c \sim \mathcal{D}}
        [|V^{\pi_c}_{\mathcal{M}(c)}(s_0) - V^{\pi_c}_{\widetilde{\mathcal{M}}(c)}(s_0)||\overline{G_8}]
        \\
        \leq &
	   \frac{4 \rho |S|}{1 - \rho^2 |S|^2}H^2
        +
	   \beta|S|H
	   +
	   \mathbb{P}[\overline{G_8}] H
	   \\
	   \leq &
	   \frac{4 \rho |S|}{1 - \rho^2 |S|^2}H^2
	   +
	   \beta|S|H
	   +
	   H|S|^2|A|\frac{\epsilon_P}{\rho}
	   +
	   \gamma H|S|.
    \end{align*}
    \endgroup
	
\end{proof}

\begin{corollary}\label{corl: expected error of dynamics for the chosen parameters UCDD L_1}
    Under the good events $G_1$, $G_2$ and $G_3$,
    for the parameter choice 
    $\gamma = \frac{\epsilon}{20 |S|H} $,
    $\beta =  \frac{\epsilon}{20|S|H} $, 
    $\rho = \frac{\beta}{16 |S|H} $,  
    $\epsilon_P = \frac{ \epsilon^2}{10\cdot 16 \cdot 20  |A| |S|^4 H^3}$,
    we have for every policy $\pi$ that
    \begin{align*}
        \mathbb{E}_{c \sim \mathcal{D}}
        [|V^{\pi_c}_{\mathcal{M}(c)}(s_0) - V^{\pi_c}_{\widetilde{\mathcal{M}}(c)}(s_0)|]
        \leq 
	   \frac{\epsilon}{40|S|}
	   +
	   \frac{2\epsilon}{10}
	   \leq 0.225 \epsilon.
    \end{align*}
\end{corollary}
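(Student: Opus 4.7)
The plan is to verify that the stated parameter choice satisfies the hypotheses of Lemma~\ref{lemma: expected error from dynamics l_1 UCDD} and then substitute these parameters into each of the four additive terms of the bound it gives. Lemma~\ref{lemma: expected error from dynamics l_1 UCDD} requires that $\rho \in [0, 1/|S|)$ and that $\beta \geq 2H \tfrac{4\rho|S|}{1-\rho^2|S|^2}$. With $\rho = \tfrac{\beta}{16|S|H}$ we compute $\rho|S| = \tfrac{\epsilon}{320|S|H^3}$, which is well below $1$ (and below $1/\sqrt{2}$), so both $\rho \in [0, 1/|S|)$ and $1 - \rho^2|S|^2 \geq 1/2$ hold trivially. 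Furthermore, $8H\rho|S| = 8H|S| \cdot \tfrac{\beta}{16|S|H} = \beta/2$, so $2H \cdot \tfrac{4\rho|S|}{1-\rho^2|S|^2} = \tfrac{\beta/2}{1 - \rho^2|S|^2} \leq \beta$, as required.

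Next I would bound each term separately. The easy ones are $\gamma H|S| = \tfrac{\epsilon}{20|S|H}\cdot H|S| = \tfrac{\epsilon}{20}$ and $\beta|S|H = \tfrac{\epsilon}{20|S|H}\cdot |S|H = \tfrac{\epsilon}{20}$. For the $\epsilon_P/\rho$ term, observe that
\begin{equation*}
\frac{\epsilon_P}{\rho}
= \frac{\epsilon^2}{10\cdot 16\cdot 20\,|A||S|^4 H^3}\cdot \frac{16 \cdot 20\,|S|^2 H^2}{\epsilon}
= \frac{\epsilon}{10\,|A||S|^2 H},
\end{equation*}
so $H|S|^2|A|\cdot \tfrac{\epsilon_P}{\rho} = \tfrac{\epsilon}{10}$. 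Finally, for the first term, use $1 - \rho^2|S|^2 \geq 1/2$ to get $\tfrac{4\rho|S|}{1-\rho^2|S|^2}H^2 \leq 8\rho|S|H^2 = 8H^2 \cdot \tfrac{\beta}{16|S|H} = \tfrac{\beta H}{2|S|} = \tfrac{\epsilon}{40|S|}$.

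Summing the four contributions gives
\begin{equation*}
\frac{\epsilon}{40|S|} + \frac{\epsilon}{20} + \frac{\epsilon}{10} + \frac{\epsilon}{20}
= \frac{\epsilon}{40|S|} + \frac{2\epsilon}{10},
\end{equation*}
which is at most $0.225\,\epsilon$ (since $\tfrac{1}{40|S|} \leq \tfrac{1}{40} = 0.025$), completing the proof. There is no real obstacle here: the result is essentially a plug-in calculation, and the only subtlety is checking that the denominator $1 - \rho^2|S|^2$ is bounded away from zero so that the first term contributes only $O(\epsilon/|S|)$ rather than being dominant. Since the parameters were engineered exactly so that $\rho|S|H \ll 1$, this check is immediate.
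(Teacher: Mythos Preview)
Your proposal is correct and follows exactly the paper's approach: the paper's proof consists of the single line ``Implied by assigning the detailed parameters to the results of Lemma~\ref{lemma: expected error from dynamics l_1 UCDD},'' and you have simply written out that substitution explicitly. One small slip: in the chain $8\rho|S|H^2 = 8H^2 \cdot \tfrac{\beta}{16|S|H} = \tfrac{\beta H}{2|S|}$ you dropped and then silently reinserted a factor of $|S|$ (the correct intermediate is $8\rho|S|H^2 = \tfrac{\beta H}{2}$), but your final value $\tfrac{\epsilon}{40|S|}$ is right and the argument is sound.
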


\begin{proof}
    Implied by assigning the detailed parameters to the results of Lemma~\ref{lemma: expected error from dynamics l_1 UCDD}.
\end{proof}

\subsubsection{Analysis of the Error Caused by the Rewards Approximation Under the Good Events}\label{subsubsec:l-1-rewards-error}

Recall that for every $c \in \mathcal{C}$, define the following two MDPs. The intermediate MDP
    $
        \widetilde{M}(c) 
        = 
        (S\cup \{s_{sink}\}, A, \widehat{P}^c, r^c)
    $,
and the approximated MDP 
    $
        \widehat{M}(c)
        = 
        (S\cup \{s_{sink}\}, A, \widehat{P}^c,\widehat{r}^c)
    $
where $r^c$ is the true rewards function extended to $s_{sink}$ by defining $r^c(s_{sink},a):= 0 ,\;\; \forall c \in \mathcal{C}, a \in A$.  
$\widehat{P}^c$, $\widehat{r}^c$ are the approximation of the dynamics and the rewards as defined algorithm~\ref{alg: EXPLOIT-UCDD}.

\begin{lemma}[expected value difference caused by rewards approximation]\label{lemma: expected error from rewards l_1 UCDD}
    
    Then, under the good events $G_1, G_2$ and $G_4$, for every context-dependent policy $\pi=(\pi_c)_{c \in \mathcal{C}}$ we have
    \begin{align*}
        \mathbb{E}_{c \sim \mathcal{D}}
        [|V^{\pi_c}_{\widetilde{M}(c)}(s_0) - V^{\pi_c}_{\widehat{M}(c)}(s_0)|]
        \leq
        \alpha_1 H 
        + 
        2(\epsilon_R |S||A|)^\frac{1}{2} H
        +
        \beta |S|
        +
        \gamma |S|H
    \end{align*}
    where $\alpha_1 := \max_{h \in [H-1]}\alpha_1(\mathcal{F}^R_h)$.
\end{lemma}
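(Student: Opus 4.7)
The plan is to mirror the structure of the analogous $\ell_2$ result (Lemma~\ref{lemma: expected error from rewards l_2 UCDD}), replacing the quadratic-to-linear passage by a direct application of Markov's inequality, which is what makes the $\ell_1$ case both cleaner and sharper in its exponent. First, I would observe that $\widetilde{M}(c)$ and $\widehat{M}(c)$ share the dynamics $\widehat{P}^c$ and differ only in rewards, and that $r^c(s_{sink},a)=\widehat{r}^c(s_{sink},a)=0$, so the sink contributes nothing; this lets me write
\[
    \bigl|V^{\pi_c}_{\widetilde{M}(c)}(s_0) - V^{\pi_c}_{\widehat{M}(c)}(s_0)\bigr|
    \leq
    \sum_{h=0}^{H-1}\sum_{s_h \in S_h}\sum_{a_h \in A}
    q_h(s_h, a_h \mid \pi_c, \widehat{P}^c)\,
    |r^c(s_h,a_h) - \widehat{r}^c(s_h,a_h)|.
\]
Then, exactly as in the $\ell_2$ proof, I would fix a context $c$ and partition $S_h$ into the four sets $B^{h,c}_1,\dots,B^{h,c}_4$ according to whether $s_h\in \widehat{S}^{\gamma,\beta}_h$ and whether $c\in \widehat{\mathcal{C}}^\beta(s_h)$, and split the sum into three pieces $(1),(2),(3)$ over $B^{h,c}_1$, $B^{h,c}_2\cup B^{h,c}_3$, and $B^{h,c}_4$, respectively.

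The only substantively new step is handling $(1)$. For $s_h\in B^{h,c}_1$ every triple $(c,s_h,a_h)$ lies in $\mathcal{X}^{\gamma,\beta}_h$, so $G_4$ (specialized to $\ell_1$) gives $\mathbb{E}_{\mathcal{D}^R_h}[|f^R_h(c,s_h,a_h)-r^c(s_h,a_h)|-\alpha_1(\mathcal{F}^R_h)]\leq \epsilon_R$. Applying Markov's inequality directly (no Jensen step is needed since we already have $\ell_1$ on both sides), for any $\xi\in(0,1]$,
\[
    \mathbb{P}\!\left[|f^R_h(c,s_h,a_h)-r^c(s_h,a_h)|\geq \alpha_1(\mathcal{F}^R_h)+\xi \,\Big|\, (c,s_h,a_h)\in \mathcal{X}^{\gamma,\beta}_h\right] \leq \frac{\epsilon_R}{\xi}.
\]
A union bound over $h\in[H-1]$ and $(s_h,a_h)\in B^{h,c}_1\times A$ shows that, except on an event of probability at most $\frac{\epsilon_R}{\xi}|S||A|$, the pointwise bound $|f^R_h - r^c|\leq \alpha_1+\xi$ holds throughout $B^{h,c}_1\times A$, yielding $(1)\leq \alpha_1 H+\xi H$ on the good event and at most $H$ otherwise. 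Taking expectation gives $\mathbb{E}_c[(1)]\leq \alpha_1 H + \xi H + \frac{\epsilon_R}{\xi}|S||A|H$.

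The remaining two pieces are handled exactly as in Lemma~\ref{lemma: expected error from rewards l_2 UCDD}: for $(2)$, each $s_h\in B^{h,c}_2\cup B^{h,c}_3$ satisfies $c\notin\widehat{\mathcal{C}}^\beta(s_h)$, so $q_h(s_h\mid\pi_c,\widehat{P}^c)\leq q_h(s_h\mid\widehat{\pi}^c_{s_h},\widehat{P}^c)<\beta$, and summing over at most $|S|$ states gives $(2)\leq \beta|S|$; for $(3)$, the event $\{\exists h: B^{h,c}_4\neq \emptyset\}$ has probability at most $\gamma|S|$, so total expectation gives $\mathbb{E}_c[(3)]\leq \gamma|S|H$. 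Combining and choosing $\xi=(\epsilon_R|S||A|)^{1/2}$ to balance the two terms in the $(1)$ bound produces $\xi H+\frac{\epsilon_R}{\xi}|S||A|H = 2(\epsilon_R|S||A|)^{1/2}H$, matching the claim exactly. The main obstacle I anticipate is purely bookkeeping: making sure the $|S||A|$-sized union bound in the Markov step is carried uniformly across all layers, and that the optimization of $\xi$ is done after taking expectation (not pointwise in $c$), so that a single $\xi$ serves the whole bound. Everything else is a straightforward adaptation of the $\ell_2$ argument.
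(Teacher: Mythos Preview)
Your proposal is correct and follows essentially the same approach as the paper: the paper also partitions $S_h$ into the four sets $B^{h,c}_1,\dots,B^{h,c}_4$, handles $(1)$ via the $\ell_1$ ERM guarantee $G_4$ plus Markov's inequality with threshold $\alpha_1(\mathcal{F}^R_h)+\xi$ and a union bound of size $|S||A|$, bounds $(2)$ by $\beta|S|$ using $q_h(s_h\mid\pi_c,\widehat{P}^c)<\beta$, bounds $(3)$ in expectation by $\gamma|S|H$, and finishes by setting $\xi=(\epsilon_R|S||A|)^{1/2}$. Your remark that $\xi$ must be chosen once (after taking expectation) rather than per-context is exactly the right caveat, and the paper handles it the same way.
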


\begin{proof}
    Recall that $\widehat{r}^c(s_{sink},a) : =0 , \forall c \in \mathcal{C}, a \in A$ by definition.
    In addition, since $r^c$ is the true rewards function and $s_{sink}\notin S$, $r^c$ is not defined for $s_{sink}$. We naturally extended it to $s_{sink}$ by defining $r^c(s_{sink},a) :=0 , \forall c \in \mathcal{C}, a \in A$. Hence, we can simply ignore $s_{sink}$ as the following analysis shows.

    
    Let us recall the definition of the following subsets of $S_h$ for every $h \in [H-1]$ and a given context $c \in \mathcal{C}$.
     \begin{enumerate}
       \item $B^{h, c}_1 = 
       \{s_h \in  S_h :
           s_h \in \widehat{S}^{\gamma,\beta}_h  \text{ and } c \in \widehat{C}^{\beta}(s_h)\}$.
       \item $B^{h, c}_2 = 
       \{ s_h \in  S_h :
       s_h \in \widehat{S}^{\gamma,\beta}_h \text{ and } c \notin \widehat{C}^{\beta}(s_h)\}$.
       \item $B^{h, c}_3 = 
       \{s_h \in  S_h :
       s_h \notin \widehat{S}^{\gamma,\beta}_h \text{ and } c \notin \widehat{C}^{\beta}(s_h)\}$.
       \item $B^{h, c}_4 = 
       \{ s_h \in  S_h :
       s_h \notin \widehat{S}^{\gamma,\beta}_h \text{ and } c \in \widehat{C}^{\beta}(s_h)\}$.      
   \end{enumerate}
   Clearly, $\cup_{i=1}^4 B^h_i = S_h$.
   
    By definition $s_h \in B^{h, c}_1$ if and only if for every action $a_h \in A$ we have that $(c, s_h, a_h) \in  \mathcal{X}^{\gamma, \beta}_h $.

    For $s_h\not\in \widetilde{S}^{\gamma,\beta}_h$ we have that $\mathbb{P}_c[c \in \widehat{\mathcal{C}}^{ \beta}(s_h)]<\gamma$, hence,
    \begin{align*}
        \mathbb{P}_c[\exists h \in [H-1] : B^{h, c}_4 \neq \emptyset]
        \;\;=\;\;
        \mathbb{P}_c[\exists h\in [H-1], s_h \in S_h: 
        s_h\not\in \widehat{S}^{\gamma,\beta}_h, \text{ and } c \in \widehat{\mathcal{C}}^{\beta}(s_h) ]
        \;\;<\;\;
        \gamma|S|
    \end{align*}


    Fix a context-dependent policy $\pi$. The following holds for any given context $c$. (Later we will take the expectation over $c$).
    \begingroup
    \allowdisplaybreaks
    \begin{align*}
        |V^{\pi_c}_{\widetilde{M}(c)}(s_0) - V^{\pi_c}_{\widehat{M}(c)}(s_0)|
        &=
        |
            \sum_{h=0}^{H-1}
            \sum_{s_h \in S_h \cup \{s_{sink}\}}
            \sum_{a_h \in A}
            q_h(s_h,a_h|\pi_c, \widehat{P}^c)
            (r^c(s_h,a_h) - \widehat{r}^c(s_h,a_h))
        |
        \\
        \tag{By definition, $r^c(s_{sink},a) = \widehat{r}^c(s_{sink},a) = 0,\;\; \forall c \in \mathcal{C}, a \in A$.}
        &=
        |
            \sum_{h=0}^{H-1}
            \sum_{s_h \in S_h}
            \sum_{a_h \in A}
            q_h(s_h,a_h|\pi_c, \widehat{P}^c)
            (r^c(s_h,a_h) - \widehat{r}^c(s_h,a_h))
        |
        \\
        &\leq
        \sum_{h=0}^{H-1}
        \sum_{s_h \in S_h}
        \sum_{a_h \in A}
        q_h(s_h,a_h|\pi_c, \widehat{P}^c) 
        |r^c(s_h,a_h) - \widehat{r}^c(s_h,a_h)|
        \\
        &=
        \underbrace
        {
            \sum_{h=0}^{H-1}
            \sum_{s_h \in B^{h, c}_1}
            \sum_{a_h \in A}
            q_h(s_h,a_h|\pi_c,\widehat{P}^c) 
            |r^c(s_h,a_h) - \widehat{r}^c(s_h,a_h)|
        }_{(1)}
        \\
        &+
        \underbrace
        {
            \sum_{h=0}^{H-1}
            \sum_{s_h \in B^{h, c}_2 \cup B^{h, c}_3}
            \sum_{a_h \in A}
           q_h(s_h,a_h|\pi_c, \widehat{P}^c) 
            |r^c(s_h,a_h) - \widehat{r}^c(s_h,a_h)|
        }_{(2)}
        \\
        &+
        \underbrace
        {
            \sum_{h=0}^{H-1}
            \sum_{s_h \in B^{h, c}_4}
            \sum_{a_h \in A}
             q_h(s_h,a_h|\pi_c, \widehat{P}^c) 
            |r^c(s_h,a_h) - \widehat{r}^c(s_h,a_h)|
        }_{(3)}.
    \end{align*}
    \endgroup
    We bound $(1)$, $(2)$ and $(3)$ separately.
    
    For $(1)$,
    under the good events $G_1$, $G_2$ , $G_3$ and $G_4$, we have for all $h \in [H-1]$ that
    \[
        \mathbb{E}_{ \mathcal{D}^R_h}[|f^R_h(c, s_h, a_h) - r^c(s_h,a_h)| -\alpha_1(\mathcal{F}^R_h)]
        \leq 
        \epsilon_R. 
    \]
    
    Since $\mathbb{E}_{ \mathcal{D}^R_h}[|f^R_h(c,s_h, a_h) - r^c(s_h,a_h)| ]\geq  \alpha_1(\mathcal{F}^R_h)$, for all $h \in [H-1]$  and $\xi \in (0,1]$ we obtain using Markov's inequality that 
    \begin{align*}
        &\mathbb{P}
        [|f^R_h(c,s_h, a_h) - r^c(s_h,a_h)|
        \geq \alpha_1(\mathcal{F}^R_{s_h, a_h}) + \xi\;
        \Big|(c,s_h, a_h) \in \mathcal{X}^{\gamma, \beta}_h]
        \leq
        \frac{\epsilon_R}{\xi}.
    \end{align*}
    Hence,
    \begin{align*}
        \mathbb{P}
        [|f^R_h(c,s_h, a_h) - r^c(s_h,a_h)|\;
        \leq \alpha_1(\mathcal{F}^R_{s_h, a_h}) + \xi
        \Big|(c,s_h, a_h) \in \mathcal{X}^{\gamma, \beta}_h]
        \geq
        1 - \frac{\epsilon_R}{\xi}.
    \end{align*}

    Let $G_5$ denote the following good event.
    \begin{align*}
        \forall h \in [H-1]
        \;
        \forall s_h \in B^{h, c}_1
        \;
        \forall a \in A.\;
        |f^R_h(c, s_h, a_h) - r^c(s_h,a_h) | \leq  \alpha_1(\mathcal{F}^R_{s_h, a_h}) + \xi
    \end{align*}
    and denote by $\overline{G_5}$ the complementary event.
    By the above and an union bound over ${(s_h, a_h) \in B^{h,c}_1 \times A}$ for all $h \in [H-1]$ we have,
    $
        \mathbb{P}_c[G_5] \geq 1 -  \frac{\epsilon_R}{\xi}|S||A|
    $
    and
    $
        \mathbb{P}_c[\overline{G_5}] \leq \frac{\epsilon_R}{\xi}|S||A|
    $.

    If $G_5$ holds then,
    \begingroup
    \allowdisplaybreaks
    \begin{align*}
        (1)
        &=
        \sum_{h=0}^{H-1}
        \sum_{s_h \in B^{h, c}_1}
        \sum_{a_h \in A}
        q_h(s_h,a_h|\pi_c, \widehat{P}^c) 
        |r^c(s_h,a_h) - \widehat{r}^c(s_h,a_h)|
        \\
        &=
        \sum_{h=0}^{H-1}
        \sum_{s_h \in B^{h, c}_1}
        \sum_{a_h \in A}
        \pi_c(a_h | s_h)
        q_h(s_h |\pi_c, \widehat{P}^c) 
        |f^R_h(c,s_h,a_h)- r^c(s_h,a_h)|  
        \\
        &\leq
        \sum_{h=0}^{H-1}
        \sum_{s_h \in B^{h, c}_1}
        \sum_{a_h \in A}
        \pi_c(a_h | s_h)
        q_h(s_h |\pi_c, \widehat{P}^c) 
        (\alpha_1(\mathcal{F}^R_h) + \xi)
        \\
        &\leq
        \sum_{h=0}^{H-1}
        \sum_{s_h \in B^{h, c}_1}
        \sum_{a_h \in A}
        \pi_c(a_h | s_h)
        q_h(s_h |\pi_c, \widehat{P}^c) 
        (\alpha_1  + \xi)
        \leq
        \alpha_1 H + \xi H .
    \end{align*}
    \endgroup
    Otherwise,
    \begin{align*}
        (1)
        &=
        \sum_{h=0}^{H-1}
        \sum_{s_h \in B^{h, c}_1}
        \sum_{a_h \in A}
        q_h(s_h,a_h|\pi_c, \widehat{P}^c) 
        \underbrace{|r^c(s_h,a_h) - \widehat{r}^c(s_h,a_h)|}_{\leq 1}
        \leq H .
    \end{align*}
 
    Thus,
    \begin{align*}
        \mathbb{E}_{c \sim \mathcal{D}}[(1)]
        &\leq
        \alpha_1 H 
        + 
        \xi H
        +
        \frac{\epsilon_R}{\xi} |S| |A| H.
    \end{align*}

    For $(2)$, consider the following derivation:
    \begingroup
    \allowdisplaybreaks
    \begin{align*}
        (2)
        &=
        \sum_{h=0}^{H-1}
        \sum_{s_h \in B^{h, c}_2 \cup B^{h, c}_3}
        \sum_{a_h \in A}
        q_h(s_h,a_h|\pi_c, \widehat{P}^c) 
        \underbrace{|r^c(s_h,a_h) - \widehat{r}^c(s_h,a_h)|}_{\leq 1}
        \\
        &\leq
        \sum_{h=0}^{H-1}
        \sum_{s_h \in B^{h, c}_2 \cup B^{h, c}_3}
        \sum_{a_h \in A}
        q_h(s_h,a_h|\pi_c, \widehat{P}^c) 
        \\
        &=
        \sum_{h=0}^{H-1}
        \sum_{s_h \in B^{h, c}_2 \cup B^{h, c}_3}
        \sum_{a_h \in A}
        \pi_c(a_h | s_h)
        q_h(s_h|\pi_c, \widehat{P}^c) 
        \\
        &=
        \sum_{h=0}^{H-1}
        \sum_{s_h \in B^{h, c}_2 \cup B^{h, c}_3}
        q_h(s_h|\pi_c, \widehat{P}^c) 
        \underbrace{\sum_{a_h \in A}
         \pi_c(a_h | s_h)}_{=1}
         \\
         &=
        \sum_{h=0}^{H-1}
        \sum_{s_h \in B^{h, c}_2 \cup B^{h, c}_3}
        \underbrace{q_h(s_h|\pi_c, \widehat{P}^c) }_{\leq q_h(s_h|\widehat{\pi}^c_{s_h}, \widehat{P}^c) <\beta}
        \beta|S|.
    \end{align*}
    \endgroup
    Thus,
    \[
        \mathbb{E}_{c \sim \mathcal{D}}[(2)] \leq \beta|S|.
    \]
    
  For $(3)$, let $G_6$ denote the good event in which $\forall h \in [H -1 ], B^{h, c}_4 = \emptyset$. Denote by $\overline{G_6}$ the complement event of $G_5$.
  
  We showed that $\mathbb{P}_c[G_6] \geq 1 - \gamma|S|$ thus $\mathbb{P}_c[\overline{G_6}] \leq \gamma|S|$.
  
  If $G_6$ holds, then $(3) =0$. Othwewise,
  \begingroup
  \allowdisplaybreaks
  \begin{align*}
    (3)
    &=
    \sum_{h=0}^{H-1}
    \sum_{s_h \in B^{h, c}_4}
    \sum_{a_h \in A}
    q_h(s_h,a_h|\pi_c, \widehat{P}^c) 
    \underbrace{|r^c(s_h,a_h) - \widehat{r}^c(s_h,a_h)|}_{\leq 1}
    \\
    &\leq
    \sum_{h=0}^{H-1}
    \underbrace
    {
        \sum_{s_h \in B^{h, c}_4}
        \sum_{a_h \in A}
       q_h(s_h,a_h|\pi_c, \widehat{P}^c)
    }_{\leq 1}
    \leq H.
  \end{align*}
  \endgroup

  Using total expectation we obtain
  \begin{align*}
      \mathbb{E}_{c \sim \mathcal{D}}[(3)]
      &\leq
      \gamma|S|H.
  \end{align*}

    Overall,
    by linearity of expectation and the above, we obtain for $\xi = (\epsilon_R |S||A|)^\frac{1}{2}$ that
    \begin{align*}
        \mathbb{E}_{c \sim \mathcal{D}}
        [|V^{\pi_c}_{\widetilde{M}(c)}(s_0) - V^{\pi_c}_{\widehat{M}(c)}(s_0)|]
        &\leq
        \mathbb{E}_{c \sim \mathcal{D}}[(1)] + \mathbb{E}_{c \sim \mathcal{D}}[(2)] + \mathbb{E}_{c \sim \mathcal{D}}[(3)]
        \\
        &\leq
        \alpha_1 H 
        + 
        \xi H
        +
        \frac{\epsilon_R}{\xi} |S| |A| H
        +
        \beta |S|
        +
        \gamma |S|H
        \\
        &=
        \alpha_1 H 
        + 
        2(\epsilon_R |S||A|)^\frac{1}{2} H
        +
        \beta |S|
        +
        \gamma |S|H,
    \end{align*}
as stated.
\end{proof}

\begin{corollary}\label{corl: expected error of rewards for the chosen parameters UCDD L_1}
    Under the good events $G_1, G_2$ and $G_4$, 
    for $\gamma = \frac{\epsilon}{20 |S|H}$, $\beta =  \frac{\epsilon}{20|S|H}$.
    and $\epsilon_R = \frac{\epsilon^2}{20^2 |S||A| H^2}$, we have for any context-dependent policy $\pi=(\pi_c)_{c \in \mathcal{C}}$ that
    \begin{align*}
        \mathbb{E}_{c \sim \mathcal{D}}
        [|V^{\pi_c}_{\widetilde{M}(c)}(s_0) - V^{\pi_c}_{\widehat{M}(c)}(s_0)|]
        \leq
        \alpha_1 H 
        +
        \frac{3\epsilon}{20}
        +
        \frac{\epsilon}{20 H}
    \end{align*}
\end{corollary}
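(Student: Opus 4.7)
The plan is to obtain this corollary as an immediate arithmetic specialization of Lemma~\ref{lemma: expected error from rewards l_1 UCDD}, which already provides the general bound
$$
\mathbb{E}_{c \sim \mathcal{D}}[|V^{\pi_c}_{\widetilde{M}(c)}(s_0) - V^{\pi_c}_{\widehat{M}(c)}(s_0)|]
\leq
\alpha_1 H + 2(\epsilon_R |S||A|)^{1/2}H + \beta|S| + \gamma|S|H
$$
under the good events $G_1$, $G_2$, $G_4$. So the only thing left to do is to substitute the prescribed values of $\gamma$, $\beta$, $\epsilon_R$ and simplify, checking that each of the three non-agnostic error terms contributes a fraction of $\epsilon$ comparable to $1/20$.

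Concretely, I would evaluate the three terms one at a time. The rewards-ERM term becomes $2(\epsilon_R |S||A|)^{1/2}H = 2\sqrt{\epsilon^2/(20^2 H^2)}\cdot H = 2\epsilon/20 = \epsilon/10$, since the $|S||A|$ factor inside $\epsilon_R$ cancels the $|S||A|$ factor in the square root and the remaining $H$ cancels with the outer $H$. The ``bad-context outside $\widehat{\mathcal{C}}^\beta$'' term is $\beta|S| = \epsilon/(20H)$, and the ``state-not-in-$\widehat{S}^{\gamma,\beta}$'' term is $\gamma|S|H = \epsilon/20$. Summing these three contributions yields $\epsilon/10 + \epsilon/(20H) + \epsilon/20 = 3\epsilon/20 + \epsilon/(20H)$, plus the $\alpha_1 H$ agnostic slack, which is exactly the stated bound.

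Since each substitution is a one-line algebraic check and the probabilistic content is entirely subsumed by the hypotheses $G_1, G_2, G_4$ that are inherited verbatim from Lemma~\ref{lemma: expected error from rewards l_1 UCDD}, no further machinery is needed. The only thing to be mildly careful about is that my choice of $\xi$ in the proof of Lemma~\ref{lemma: expected error from rewards l_1 UCDD} was $\xi = (\epsilon_R|S||A|)^{1/2}$; with the current $\epsilon_R$ this evaluates to $\xi = \epsilon/(20H) \in (0,1]$, so the use of Markov's inequality with threshold $\alpha_1(\mathcal{F}^R_h) + \xi$ remains valid in $[0,1]$ and the bound on $(1)$ indeed collapses to $\alpha_1 H + \epsilon H/(20H) + (\epsilon_R/\xi)|S||A|H = \alpha_1 H + \epsilon/20 + \epsilon/20$ as required.

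There is no real obstacle: this is a packaging corollary, not a new argument. The only place one has to be attentive is the cancellation pattern in $2(\epsilon_R|S||A|)^{1/2}H$, which forces the particular choice $\epsilon_R = \epsilon^2/(20^2 |S||A|H^2)$ backwards from the target accuracy. That choice of $\epsilon_R$ is exactly the one used in Algorithm~\ref{alg: EXPLORE-UCDD} when $\ell = \ell_1$, so the parameters are consistent throughout.
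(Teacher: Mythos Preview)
Your proposal is correct and follows exactly the same route as the paper: the paper's proof is the one-liner ``Implied by assigning the detailed parameters to the results of Lemma~\ref{lemma: expected error from rewards l_1 UCDD},'' and you have simply carried out that substitution explicitly with the correct arithmetic. Your extra sanity check that $\xi = \epsilon/(20H) \in (0,1]$ is a nice touch but not required, since the corollary merely applies the lemma's conclusion rather than re-running its proof.
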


\begin{proof}
    Implied by assigning the detailed parameters to the results of Lemma~\ref{lemma: expected error from rewards l_1 UCDD}.
\end{proof}


    

\subsubsection{Combining Value Differences Caused by Dynamics and Rewards Approximation to Sub-optimality Bound}\label{subsubsec:l-1-suboptimality-bound}

Let the following set of selected parameters, called SP1, be
    \begin{itemize}
        \item $\gamma = \frac{\epsilon}{20 |S|H} \in (0,1)$.
        \item $\beta =  \frac{\epsilon}{20|S|H} \in (0,1)$.
        \item $\rho = \frac{\beta}{16 |S|H} \in (0, \frac{1}{|S|})$.
        \item $\epsilon_P = \frac{ \epsilon^2}{10\cdot 16 \cdot 20  |A| |S|^4 H^3}$.
        \item $\epsilon_R = \frac{\epsilon^2}{20^2 |S||A| H^2}$.
    \end{itemize}
We remark that for our choice of $\rho$ and $\beta$ it holds that $\rho \in [0,\frac{1}{|S|})$ and $\beta \geq 2H\frac{4 \rho |S|}{1 - \rho^2 |S|^2}$.    

\begin{lemma}[expected value difference]\label{lemma: expected gap for general pi l_1}
    Under the good events $G_1$, $G_2$,$G_3$ and $G_4$, we have for every policy context-dependent policy $\pi= (\pi_c)_{c \in \mathcal{C}}$ that
    \[
        \mathbb{E}_{c \sim \mathcal{D}}
        [| V^{\pi_c}_{\mathcal{M}(c)}(s_0) -  V^{\pi_c}_{\widehat{\mathcal{M}}(c)}(s_0)|]
        \leq
        \alpha_1 H + \frac{1}{2} \epsilon
    \]
    where $\mathcal{M}(c)$ is the true MDP associated with the context $c$ and $\widehat{\mathcal{M}}(c)$ is it's the approximated model,
    using parameters SP1. 
\end{lemma}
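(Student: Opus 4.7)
The proof is essentially an application of the triangle inequality combined with the two previously established value-difference bounds (one for the dynamics approximation, one for the rewards approximation). Fix an arbitrary context-dependent policy $\pi = (\pi_c)_{c \in \mathcal{C}}$ and recall the intermediate MDP $\widetilde{\mathcal{M}}(c) = (S \cup \{s_{sink}\}, A, \widehat{P}^c, r^c, s_0, H)$, which shares the approximate dynamics $\widehat{P}^c$ with $\widehat{\mathcal{M}}(c)$ and the true rewards $r^c$ (extended by $0$ on $s_{sink}$) with $\mathcal{M}(c)$. The point of $\widetilde{\mathcal{M}}(c)$ is to separate the error attributable to dynamics approximation from that attributable to rewards approximation.

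The plan is as follows. First I would apply the triangle inequality pointwise in $c$:
\[
    |V^{\pi_c}_{\mathcal{M}(c)}(s_0) - V^{\pi_c}_{\widehat{\mathcal{M}}(c)}(s_0)|
    \leq
    |V^{\pi_c}_{\mathcal{M}(c)}(s_0) - V^{\pi_c}_{\widetilde{\mathcal{M}}(c)}(s_0)|
    + |V^{\pi_c}_{\widetilde{\mathcal{M}}(c)}(s_0) - V^{\pi_c}_{\widehat{\mathcal{M}}(c)}(s_0)|,
\]
then take expectation over $c \sim \mathcal{D}$ and use linearity of expectation to split into the two terms. Next, I would verify that the parameters in SP1, namely $\gamma = \beta = \frac{\epsilon}{20|S|H}$, $\rho = \frac{\beta}{16|S|H}$, $\epsilon_P = \frac{\epsilon^2}{10 \cdot 16 \cdot 20 |A| |S|^4 H^3}$, $\epsilon_R = \frac{\epsilon^2}{400|S||A|H^2}$, satisfy the preconditions $\rho \in [0, 1/|S|)$ and $\beta \geq 2H \cdot \frac{4\rho|S|}{1 - \rho^2|S|^2}$ required by the two corollaries. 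The second constraint reduces to checking that $\frac{8H|S|\rho}{1-\rho^2|S|^2} = \frac{\beta/2}{1 - \beta^2/(2^8 H^2)} \leq \beta$, which holds because $\beta^2/(2^8 H^2) \leq 1/2$.

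With the parameter preconditions verified, I invoke Corollary \ref{corl: expected error of dynamics for the chosen parameters UCDD L_1} to bound the first term by $\tfrac{\epsilon}{40|S|} + \tfrac{2\epsilon}{10} \leq 0.225\,\epsilon$, and Corollary \ref{corl: expected error of rewards for the chosen parameters UCDD L_1} to bound the second term by $\alpha_1 H + \tfrac{3\epsilon}{20} + \tfrac{\epsilon}{20H}$. Summing gives at most $\alpha_1 H + 0.225\epsilon + 0.15\epsilon + 0.05\epsilon = \alpha_1 H + 0.425\epsilon \leq \alpha_1 H + \tfrac{1}{2}\epsilon$, which is the stated bound.

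There is no real obstacle here: the two corollaries have already done the substantive work of translating the good events $G_1, G_2, G_3$ (for dynamics) and $G_1, G_2, G_4$ (for rewards) into expected value-difference bounds through the occupancy-measure perturbation lemma and the Markov-inequality argument on the ERM guarantee. The only mildly delicate point is bookkeeping: making sure the constants in SP1 line up so that the arithmetic sums to at most $\epsilon/2$, and checking that $\beta$ and $\rho$ satisfy the hypothesis of Lemma \ref{lemma: UCDD l_1 occ measure diff} (equivalently, that the sink-absorbing approximation loses at most $\beta$ of mass per layer under the true dynamics). Both are mechanical verifications given the explicit parameter choices.
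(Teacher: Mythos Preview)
Your proposal is correct and follows essentially the same route as the paper: triangle inequality through the intermediate model $\widetilde{\mathcal{M}}(c)$, then invoking the dynamics-error and rewards-error bounds separately and summing. The only cosmetic difference is that you cite the corollaries with SP1 already substituted, whereas the paper cites the more general Lemmas~\ref{lemma: expected error from dynamics l_1 UCDD} and~\ref{lemma: expected error from rewards l_1 UCDD} and performs the parameter substitution inline; the arithmetic and the verification that $\rho\in[0,1/|S|)$ and $\beta \geq 2H\cdot\frac{4\rho|S|}{1-\rho^2|S|^2}$ are identical.
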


\begin{proof}
    For a fixed $c \in \mathcal{C}$, consider the intermediate MDP $\widetilde{\mathcal{M}}(c) = (S, A, \widehat{P}^c, r^c, H, s_0)$. 
    Using triangle inequality and linearity of expectation we obtain
    \begin{align*}
        \mathbb{E}_{c \sim \mathcal{D}}
        [
            | V^{\pi_c}_{\mathcal{M}(c)}(s_0) 
            -  
            V^{\pi_c}_{\widehat{\mathcal{M}}(c)}(s_0)|
        ]
        &=
        \mathbb{E}_{c \sim \mathcal{D}}
        [| 
            V^{\pi_c}_{\mathcal{M}(c)}(s_0) 
            - 
            V^{\pi_c}_{\widetilde{\mathcal{M}}(c)}(s_0)
            +
            V^{\pi_c}_{\widetilde{\mathcal{M}}(c)}(s_0)
            -
            V^{\pi_c}_{\widehat{\mathcal{M}}(c)}(s_0)
        |]
        \\
        &\leq
        \mathbb{E}_{c \sim \mathcal{D}}
        [
            \underbrace
            {| V^{\pi_c}_{\mathcal{M}(c)}(s_0) 
            - 
            V^{\pi_c}_{\widetilde{\mathcal{M}}(c)}(s_0)|}_{(1)}
        ]
        +
        \mathbb{E}_{c \sim \mathcal{D}}
        [
            \underbrace
            {| V^{\pi_c}_{\widetilde{\mathcal{M}}(c)}(s_0)
            -
            V^{\pi_c}_{\widehat{\mathcal{M}}(c)}(s_0)|}_{(2)}
        ]
    \end{align*}
    
    By Lemma~\ref{lemma: expected error from dynamics l_1 UCDD} we have
    \begin{align*}
        \mathbb{E}_{c \sim \mathcal{D}}
        [|V^{\pi_c}_{\mathcal{M}(c)}(s_0) - V^{\pi_c}_{\widetilde{\mathcal{M}}(c)}(s_0)|]
        \leq 
	   \frac{4 \rho |S|}{1 - \rho^2 |S|^2}H^2
	   +
	   \beta|S|H
	   +
	   H|S|^2|A|\frac{\epsilon_P}{\rho}
	   +
	   \gamma H|S|.
    \end{align*}
    
    By Lemma~\ref{lemma: expected error from rewards l_1 UCDD} we have
    \begin{align*}
        \mathbb{E}_{c \sim \mathcal{D}}
        [|V^{\pi_c}_{\widetilde{M}(c)}(s_0) - V^{\pi_c}_{\widehat{M}(c)}(s_0)|]
        \leq
        \alpha_1 H 
        + 
        2 (\epsilon_R |S||A|)^{\frac{1}{2}}H
        +
        \beta |S|
        +
        \gamma
        |S|H.
    \end{align*}
    
    Overall,
    \begin{align*}
        \mathbb{E}_{c \sim \mathcal{D}}
        [| 
            V^{\pi_c}_{\mathcal{M}(c)}(s_0) 
            -  
            V^{\pi_c}_{\widehat{\mathcal{M}}(c)}(s_0)
        |]
        &=
	    \frac{4 \rho |S|}{1 - \rho^2 |S|^2}H^2
        +
        |S|^2|A|H\frac{\epsilon_P}{\rho}
        + 
        2\gamma|S|H
        +
        2\beta|S|H
        +
        \alpha_1 H 
        +
        2 (\epsilon_R |S||A|)^{\frac{1}{2}}H
    \end{align*}
 For the parameters set SP1   
we have that $\beta < \frac{1}{2 |S|}$, which implies that $0 < \rho < \frac{1}{|S|}$.
We also have that
\begin{align*}
    2H\frac{4 \rho |S|}{1- \rho^2 |S|^2} 
    =
    \frac{8H |S|\frac{\beta}{16 |S|H}}{1- \frac{\beta^2 |S|^2}{2^8 |S|^2 H^2}}
    =
    \frac{\frac{\beta}{2}}{1 - \underbrace{\frac{\beta^2}{2^8 H^2}}_{\leq 1/2}}
    \leq 2\frac{\beta}{2} = \beta.
\end{align*}
Hence, the constrains on $\rho$ and $\beta$ are both hold. 

Finally, 
    \begingroup
    \allowdisplaybreaks
    \begin{align*}
        \mathbb{E}_{c \sim \mathcal{D}}
        [| 
            V^{\pi_c}_{\mathcal{M}(c)}(s_0) 
            -  
            V^{\pi_c}_{\widehat{\mathcal{M}}(c)}(s_0)
        |]
        &=
	    \frac{4 \rho |S|}{1 - \rho^2 |S|^2}H^2
        +
        |S|^2|A|H
        \frac{\epsilon_P}
        {\rho}
        + 
        2\gamma
        |S|H
        +
        2\beta|S|H
        +
        \alpha_1 H 
        +
         2 (\epsilon_R |S||A|)^{\frac{1}{2}}H
        \\
        &\leq
        \frac{\frac{1}{4} \beta H}{1 - \underbrace{\frac{\beta^2}{2^8 H^2}}_{\leq 1/2}}
        +
        16|S|^3|A|H^2
        \frac{\epsilon_P}
        {\beta}  
        +
        2 \frac{\epsilon}{10}
        + 
        \alpha_1 H
        +
        \frac{\epsilon}{10}
        \\
        &\leq
        \frac{1}{2}\beta H
        +
        16|S|^3|A|H^2
        \frac{\epsilon_P}
        {\beta}  
        +
        2 \frac{\epsilon}{10}
        + 
        \alpha_1 H
        +
        \frac{\epsilon}{10}
        \\
        &\leq
        \frac{1}{2}\beta H
        +
        16 \cdot 20 |S|^4|A|H^3
        \frac{\epsilon_P}
        {\epsilon} 
        +
        3 \frac{\epsilon}{10}
        + 
        \alpha_1 H
        \\
        &=
        \frac{1}{2}\frac{\epsilon}{20|S|}
        +
        4 \frac{\epsilon}{10}
        + 
        \alpha_1 H
        \\
        &\leq 
        \frac{1}{2}\epsilon + \alpha_1 H,
    \end{align*}
    \endgroup
as stated.
\end{proof}

The following corollary shows that for our choice of parameters, all good events holds with high probability.
\begin{corollary}\label{corl: final probs l_1}
    Using parameters SP1 we have $\mathbb{P}[G_1, G_2,G_3, G_4] \geq 1- (\frac{\delta}{2}+ \frac{\epsilon}{10})$.
\end{corollary}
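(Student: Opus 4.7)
The plan is to invoke Corollary~\ref{corl: good events probs bound l_1} and then substitute the parameters in SP1 to verify that the two residual error terms, $3\delta_1 H$ and $\frac{\epsilon_P}{\rho}|S|^2|A|H$, combine with $\delta/8$ to yield at most $\delta/2+\epsilon/10$. No new probabilistic argument is needed beyond what is already established; the statement is purely an instantiation of parameters.

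First I would recall that Corollary~\ref{corl: good events probs bound l_1} gives
\[
\mathbb{P}[G_1,G_2,G_3,G_4]\;\ge\;1-\Bigl(\tfrac{\delta}{8}+3\delta_1 H+\tfrac{\epsilon_P}{\rho}|S|^2|A|H\Bigr),
\]
which was itself obtained by union-bounding the three results: $\mathbb{P}[G_1]\ge 1-\delta/8$ (Lemma~\ref{lemma: UCDD G_1 l_1}), $\mathbb{P}[G_2\cap G_3\mid G_1]\ge 1-(2\delta_1 H+\tfrac{\epsilon_P}{\rho}|S|^2|A|H)$ (Lemma~\ref{lemma: UCDD  l_1 G_2 ,G_3 given G_1}), and $\mathbb{P}[G_4\mid G_1,G_2]\ge 1-\delta_1 H$ (Lemma~\ref{lemma: UCDD l_1 G_4 given G_2, G_2}). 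Since these are the only ingredients, the corollary reduces to a computation with the chosen constants.

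Next I would handle the $\delta$-part. The choice $\delta_1=\delta/(8H)$ from Algorithm~\ref{alg: EXPLORE-UCDD} yields $3\delta_1 H=3\delta/8$, so together with the $\delta/8$ coming from $G_1$ the confidence loss is exactly $\delta/8+3\delta/8=\delta/2$, matching the first term in the claim.

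For the $\epsilon$-part I would substitute the SP1 values $\beta=\epsilon/(20|S|H)$, $\rho=\beta/(16|S|H)=\epsilon/(320|S|^2 H^2)$, and $\epsilon_P=\epsilon^2/(10\cdot 16\cdot 20\,|A||S|^4 H^3)$ into $\frac{\epsilon_P}{\rho}|S|^2|A|H$. A direct computation gives
\[
\frac{\epsilon_P}{\rho}=\frac{\epsilon^2/(10\cdot 16\cdot 20\,|A||S|^4 H^3)}{\epsilon/(320|S|^2 H^2)}=\frac{\epsilon}{10\,|A||S|^2 H},
\]
so multiplying by $|S|^2|A|H$ produces $\epsilon/10$. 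Adding this to $\delta/2$ gives the stated bound. There is no serious obstacle here; the only point that needs care is cleanly tracking the factors of $|S|$, $|A|$ and $H$ in the ratio $\epsilon_P/\rho$, which is why the SP1 constants were tuned as they were in the preceding lemmas.
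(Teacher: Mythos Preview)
Your proposal is correct and follows essentially the same approach as the paper: invoke Corollary~\ref{corl: good events probs bound l_1}, use $\delta_1=\delta/(8H)$ to collapse $\delta/8+3\delta_1 H$ to $\delta/2$, and substitute the SP1 values of $\rho$ and $\epsilon_P$ to reduce $\frac{\epsilon_P}{\rho}|S|^2|A|H$ to $\epsilon/10$. The only cosmetic difference is that the paper routes the latter substitution through the intermediate variable $\beta$ (writing $\epsilon_P/\rho=16|S|H\,\epsilon_P/\beta$ first), whereas you compute $\epsilon_P/\rho$ directly; the arithmetic and the final bound are identical.
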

\begin{proof}
    By Corollary~\ref{corl: good events probs bound l_1} we have that $\mathbb{P}[G_1, G_2,G_3, G_4] \geq 1- (\frac{\delta}{8} + 3 \delta_1 H + \frac{\epsilon_P}{\rho}|S|^2|A|H)$. Hence by $\rho$, $\beta$, $\epsilon_P$ and $\delta_1$ choice we obtain
    \begingroup
    \allowdisplaybreaks
    \begin{align*}
        \mathbb{P}[G_1, G_2,G_3, G_4] 
        &\geq 
        1- (\frac{\delta}{8} + 3 \delta_1 H + \frac{\epsilon_P}{\rho}|S|^2|A|H)
        \\
        &=
        1- \frac{\delta}{2} - 16 |S|^3|A|H^2 \frac{\epsilon_P}{\beta}
        \\
        &=
        1- \frac{\delta}{2} - 16 \cdot 20 |S|^4|A|H^3\frac{\epsilon_P}{\epsilon} 
        \\
        &=
        1- \frac{\delta}{2} -\frac{\epsilon}{10},
    \end{align*}
    \endgroup
    as stated.
\end{proof}

Finally, the following theorem bound the expected sub-optimality of our approximated optimal policy $\widehat{\pi}^\star$. 
\begin{theorem}[expected suboptimality bound]\label{thm: UCDD opt policy l_1}
With probability at least $1-(\delta + \frac{\epsilon}{5})$ it holds that
    \[
        \mathbb{E}_{c \sim \mathcal{D}}
        [V^{\pi^\star_c}_{\mathcal{M}(c)}(s_0) - V^{\widehat{\pi}^\star_c}_{\mathcal{M}(c)}(s_0)]
        \leq 
        \epsilon  + 2\alpha_1 H,
    \]
where $\pi^\star=(\pi^\star_c)_{c \in \mathcal{C}}$ is the optimal policy for  $\mathcal{M}$ and $\widehat{\pi}^\star=(\widehat{\pi}^\star_c)_{c \in \mathcal{C}}$ is the optimal policy for $\widehat{\mathcal{M}}$.
\end{theorem}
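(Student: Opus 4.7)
The plan is to mirror the argument used for Theorem~\ref{thm: UCDD opt policy l_2} in the $\ell_2$ case, since all the heavy machinery (dynamics error, rewards error, sample complexity for the good events) has already been assembled via Lemma~\ref{lemma: expected gap for general pi l_1} and Corollary~\ref{corl: final probs l_1}. First I would condition on the intersection of the four good events $G_1,G_2,G_3,G_4$, which by Corollary~\ref{corl: final probs l_1} for the parameter set SP1 occurs with probability at least $1-(\delta/2+\epsilon/10)$. Under this conditioning, Lemma~\ref{lemma: expected gap for general pi l_1} gives, for every context-dependent policy $\pi=(\pi_c)_{c\in\mathcal{C}}$, the uniform bound
\[
\mathbb{E}_{c\sim\mathcal{D}}[|V^{\pi_c}_{\mathcal{M}(c)}(s_0)-V^{\pi_c}_{\widehat{\mathcal{M}}(c)}(s_0)|]\leq \tfrac{1}{2}\epsilon+\alpha_1 H.
\]

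Next I apply this bound twice, once with $\pi=\pi^\star$ and once with $\pi=\widehat{\pi}^\star$, dropping the absolute value on each side to produce two signed inequalities:
\[
\mathbb{E}_{c\sim\mathcal{D}}[V^{\pi^\star_c}_{\mathcal{M}(c)}(s_0)]-\mathbb{E}_{c\sim\mathcal{D}}[V^{\pi^\star_c}_{\widehat{\mathcal{M}}(c)}(s_0)]\leq \tfrac{1}{2}\epsilon+\alpha_1 H,
\]
\[
\mathbb{E}_{c\sim\mathcal{D}}[V^{\widehat{\pi}^\star_c}_{\widehat{\mathcal{M}}(c)}(s_0)]-\mathbb{E}_{c\sim\mathcal{D}}[V^{\widehat{\pi}^\star_c}_{\mathcal{M}(c)}(s_0)]\leq \tfrac{1}{2}\epsilon+\alpha_1 H.
\]
The third ingredient is the per-context optimality of $\widehat{\pi}^\star_c$ on the approximate model $\widehat{\mathcal{M}}(c)$, which yields $V^{\widehat{\pi}^\star_c}_{\widehat{\mathcal{M}}(c)}(s_0)\geq V^{\pi^\star_c}_{\widehat{\mathcal{M}}(c)}(s_0)$ pointwise, and hence in expectation
\[
\mathbb{E}_{c\sim\mathcal{D}}[V^{\pi^\star_c}_{\widehat{\mathcal{M}}(c)}(s_0)]-\mathbb{E}_{c\sim\mathcal{D}}[V^{\widehat{\pi}^\star_c}_{\widehat{\mathcal{M}}(c)}(s_0)]\leq 0.
\]

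Summing the three inequalities telescopes the $\widehat{\mathcal{M}}$-terms and produces the desired bound $\epsilon+2\alpha_1 H$ on $\mathbb{E}_{c\sim\mathcal{D}}[V^{\pi^\star_c}_{\mathcal{M}(c)}(s_0)-V^{\widehat{\pi}^\star_c}_{\mathcal{M}(c)}(s_0)]$, holding on the good event. Combining with the probability bound from Corollary~\ref{corl: final probs l_1} gives the claimed confidence $1-(\delta+\epsilon/5)$ once we absorb the $\epsilon/10$ term (the stated confidence is loose enough to swallow it). There is essentially no new obstacle here: the lemma-level work has been done, and the only delicate point is bookkeeping the two sources of failure, namely the $\delta/2+\epsilon/10$ probability that some good event fails (from Corollary~\ref{corl: final probs l_1}) and the slack that arises from passing between the true and approximate MDPs; both fit comfortably inside $\delta+\epsilon/5$.
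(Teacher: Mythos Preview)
Your proposal is correct and follows essentially the same argument as the paper: condition on $G_1\cap G_2\cap G_3\cap G_4$ (via Corollary~\ref{corl: final probs l_1}), apply Lemma~\ref{lemma: expected gap for general pi l_1} to both $\pi^\star$ and $\widehat{\pi}^\star$, use the per-context optimality of $\widehat{\pi}^\star_c$ on $\widehat{\mathcal{M}}(c)$, and sum the three resulting inequalities. One small remark: your mention of ``two sources of failure'' is slightly off---the only probabilistic failure is that of the good events, and the $\delta/2+\epsilon/10$ from Corollary~\ref{corl: final probs l_1} is already comfortably inside $\delta+\epsilon/5$.
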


\begin{proof}
Assume the good events $G_1$, $G_2$, $G_3$ and $G_4$ hold.

By Lemma~\ref{lemma: expected gap for general pi l_1},  we have for $\pi^\star$
    \begin{align*}
    \left|
        \mathbb{E}_{c \sim \mathcal{D}}
        [
            V^{\pi^\star_c}_{\mathcal{M}(c)}(s_0) 
            -  V^{\pi^\star_c}_{\widehat{\mathcal{M}}(c)}(s_0)
        ] 
    \right|
        \leq
         \mathbb{E}_{c \sim \mathcal{D}}
        [|
            V^{\pi^\star_c}_{\mathcal{M}(c)}(s_0) 
            -  V^{\pi^\star_c}_{\widehat{\mathcal{M}}(c)}(s_0)
        |] 
        \leq \frac{1}{2}\epsilon + \alpha_1 H,
    \end{align*}
    yielding,
    \begin{align*}
        \mathbb{E}_{c \sim \mathcal{D}}
        [
            V^{\pi^\star_c}_{\mathcal{M}(c)}(s_0)
        ]    
        -
        \mathbb{E}_{c \sim \mathcal{D}}
        [        
        V^{\pi^\star_c}_{\widehat{\mathcal{M}}(c)}(s_0)
        ] 
        \leq \frac{1}{2}\epsilon + \alpha_1 H.
    \end{align*}

    

    Similarly, we obtain for $\widehat{\pi}^\star$, 
    \begin{equation*}
        \mathbb{E}_{c \sim \mathcal{D}}
        [V^{\widehat{\pi}^\star_c}_{\widehat{\mathcal{M}}(c)}(s_0)]
        -
        \mathbb{E}_{c \sim \mathcal{D}}
        [V^{\widehat{\pi}^\star_c}_{\mathcal{M}(c)}(s_0)] 
        \leq
        \frac{1}{2}\epsilon + \alpha_1 H.
\end{equation*}

Since for all $c \in \mathcal{C}$, $\widehat{\pi}^\star_c$ is the optimal policy for $\widehat{\mathcal{M}}(c)$ we have $   
    V^{\widehat{\pi}^\star_c}_{\widehat{\mathcal{M}}(c)}(s_0)
    \geq 
    V^{\pi^\star_c}_{\widehat{\mathcal{M}}(c)}(s_0)
$
which implies that
\begin{equation*}
        \mathbb{E}_{c \sim \mathcal{D}}
        [V^{\pi^\star_c}_{\widehat{\mathcal{M}}(c)}(s_0)]
        -
        \mathbb{E}_{c \sim \mathcal{D}}
        [V^{\widehat{\pi}^\star_c}_{\widehat{\mathcal{M}}(c)}(s_0)]
        \leq
        0.
\end{equation*}
Since by Corollary~\ref{corl: final probs l_1} we have that $\mathbb{P}[G_1,G_2,G_3,G_4] \geq 1- (\frac{\delta}{2}+ \frac{\epsilon}{10})$,
the theorem implied by summing the above three inequalities.
\end{proof}

\subsubsection{Additional Lemmas for bounding the sample complexity for the \texorpdfstring{$\ell_1$}{Lg} loss}

\begin{lemma}\label{lemma: UCDD l_1 matrix diff-ver-2}
    Let $\rho \in [0,\frac{1}{|S|})$ and $h \in [H -1]$.
    Assume the good events $G_1, G_2^k, G_3^k, \; \forall k \in [h]$ hold, then it holds that
    \[
        \mathbb{P}\left[
        \|\widehat{P}^c(\cdot| s_h, a_h) - P^c(\cdot| s_h, a_h)\|_1
       \leq \frac{4 \rho |S|}{ 1 - \rho^2 |S|^2} \Big| (c,s_h,a_h) \in \widetilde{\mathcal{X}}^{\gamma,\beta}_h \right]
       \geq 1 - \frac{\epsilon_P }{\rho}|S_{h+1}|,
    \]
    where $\widehat{P}^c$ is the dynamics defined in Algorithm~\ref{alg: ACDD UCDD} and 
    \[
        \|\widehat{P}^c(\cdot| s_h, a_h) - P^c(\cdot| s_h, a_h)\|_1:=
        \sum_{s_{h+1} \in S_{h+1}}
        |\widehat{P}^c(s_{h+1}| s_h, a_h) - P^c(s_{h+1}| s_h, a_h)|
    \]
    (i.e., the entry of $s_{sink}$ in $\widehat{P}^c$ is ignored).
\end{lemma}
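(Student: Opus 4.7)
The plan is to mimic the proof of Lemma~\ref{lemma: UCDD l_1 matrix diff} essentially verbatim, replacing every appearance of the target distribution $\mathcal{D}^P_h$ (over $\mathcal{X}^{\gamma,\beta}_h \times S_{h+1}$) with the empirical distribution $\widetilde{\mathcal{D}}^P_h$ (over $\widetilde{\mathcal{X}}^{\gamma,\beta}_h \times S_{h+1}$). The key observation is that our definition of $G_3^h$ for the $\ell_1$ loss is stated as a conjunction of two ERM guarantees: one under $\mathcal{D}^P_h$ and one under $\widetilde{\mathcal{D}}^P_h$. So the same chain of reasoning goes through, simply invoking the $\widetilde{\mathcal{D}}^P_h$-guarantee instead.

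The first step is to handle $s_{sink}$. Under $G_1$ we have $\mathcal{X}^{\gamma,\beta}_h \subseteq \widetilde{\mathcal{X}}^{\gamma,\beta}_h \subseteq \mathcal{X}^{\gamma/2,\beta}_h$, and by the construction of $\widehat{P}^c$ in Algorithm~\ref{alg: ACDD UCDD}, for every $(c,s_h,a_h) \in \widetilde{\mathcal{X}}^{\gamma,\beta}_h$ we have $\widehat{P}^c(s_{sink}|s_h,a_h)=0$; extending $P^c(s_{sink}|s,a):=0$ lets us safely drop the sink from the $\ell_1$ distance.

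Next, apply Markov's inequality to the $\widetilde{\mathcal{D}}^P_h$-part of $G_3^h$ to obtain
\[
\mathop{\mathbb{P}}_{(c,s_h,a_h,s_{h+1})}\!\left[|f^P_h(c,s_h,a_h,s_{h+1}) - P^c(s_{h+1}|s_h,a_h)| \geq \rho \,\Big|\, (c,s_h,a_h)\in\widetilde{\mathcal{X}}^{\gamma,\beta}_h\right] \leq \frac{\epsilon_P}{\rho}.
\]
Then take a union bound over $s_{h+1} \in S_{h+1}$ to conclude that, with probability at least $1 - \frac{\epsilon_P}{\rho}|S_{h+1}|$ over the draw of $(c,s_h,a_h)$ from $\widetilde{\mathcal{D}}^R_h$ (the marginal), the unnormalized approximations $f^P_h(c,s_h,a_h,\cdot)$ are simultaneously within $\pm \rho$ of $P^c(\cdot|s_h,a_h)$, and in particular their sum lies in $[1-\rho|S|, 1+\rho|S|]$.

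Finally, conditional on this event, the same algebraic computation as in Lemma~\ref{lemma: UCDD l_1 matrix diff} --- splitting $S_{h+1}$ into the sets where $\widehat{P}^c \geq P^c$ and $\widehat{P}^c < P^c$, using the normalization bounds $\frac{P^c - \rho}{1+\rho|S|} \leq \widehat{P}^c \leq \frac{P^c + \rho}{1-\rho|S|}$, and summing the two telescoping contributions --- yields $\|\widehat{P}^c(\cdot|s_h,a_h) - P^c(\cdot|s_h,a_h)\|_1 \leq \frac{4\rho|S|}{1-\rho^2|S|^2}$. There is no genuine obstacle here; the only care needed is to ensure the conditioning is on $\widetilde{\mathcal{X}}^{\gamma,\beta}_h$ throughout and that we invoke the correct branch of the good event $G_3^h$. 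This is exactly the reason $G_3^h$ was defined as a conjunction over both distributions in the first place.
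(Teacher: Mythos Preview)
Your proposal is correct and follows essentially the same approach as the paper: invoke the $\widetilde{\mathcal{D}}^P_h$-branch of $G_3^h$, handle $s_{sink}$ by noting it has zero mass on $\widetilde{\mathcal{X}}^{\gamma,\beta}_h$, apply Markov's inequality, take a union bound over $S_{h+1}$, and carry out the identical normalization-and-splitting algebra from Lemma~\ref{lemma: UCDD l_1 matrix diff}. The paper's proof is exactly this mimicry, so there is nothing to add.
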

 
\begin{proof}
    
    
    We prove similarly to shown for Lemma~\ref{lemma: UCDD l_1 matrix diff}, when using the good events $G_3^k$ for all $k \in [h]$ guarantees for the distribution $\widetilde{D}^{\gamma,\beta}_h$ over $\widetilde{\mathcal{X}}^{\gamma,\beta}_h \times S_{h+1}$.

    Recall that for $(c, s_h, a_h) \in \widetilde{\mathcal{X}}^{\gamma,\beta}_h$ we have that $\widehat{P}^c(s_{sink}|s_h, a_h) = 0$ by $\widehat{P}^c$ definition. 
    In addition, the true dynamics $P^c$ is not defined for $s_{sink}$ since $s_{sink} \notin S$.  
    A natural extension of $P^c$ to $s_{sink}$ is by defining that $\forall (s,a) \in S \times A.\;\; P^c(s_{sink}|s,a) :=0$.
    By that extension, we have for all $(c, s_h, a_h) \in \widetilde{\mathcal{X}}^{\gamma,\beta}_h$ that
    $P^c(s_{sink}|s_h,a_h) = \widehat{P}^c(s_{sink}|s_h,a_h)=0$. Hence, we can simply ignore $s_{sink}$ in the following analysis.

    Under the good event $ G_3^h $, by Markov's inequality we have
    \begin{align*}
        &\mathop{\mathbb{P}}_{(c,s_h,a_h,s_{h+1})}
        \left[ |f^P_h(c, s_h, a_h, s_{h+1}) - P^c(s_{h+1}| s_h, a_h)| \geq \rho \Big| (c,s_h,a_h) \in \widetilde{\mathcal{X}}^{\gamma,\beta}_h \right]
        = 
        \\
        & =
        \mathbb{P}_{\mathcal{D}^P_h}[|f^P_h(c, s_h, a_h, s_{h+1}) - P^c(s_{h+1}| s_h, a_h)| \geq \rho 
        ]
        \\
        \tag{By Markov's inequality}
        & \leq  
        \frac{\mathbb{E}_{\mathcal{D}^P_h}
        [|f^P_h(c, s_h, a_h, s_{h+1}) - P^c(s_{h+1}| s_h, a_h)|]}{\rho}
        \\
        \tag{Under $G^3_h$}
        & \leq 
        \frac{\epsilon_P}{\rho}.
    \end{align*}
    Hence,
    \[
        \mathop{\mathbb{P}}_{(c,s_h,a_h,s_{h+1})}\left[|f^P_h(c, s_h, a_h, s_{h+1}) - P^c(s_{h+1}| s_h, a_h)| \leq \rho \Big| (c,s_h,a_h) \in \widetilde{\mathcal{X}}^{\gamma,\beta}_h \right]
        \geq 1 - \frac{\epsilon_P }{\rho}.
    \]
    Since $P^c(\cdot |s_h, a_h)$ is a distribution over $s_{h+1} \in S_{h+1}$, we have that $\sum_{s_{h+1} \in S_{h+1}} P^c(s_{h+1}|s_h, a_h) = 1$. 
    Thus, by union bound applied on $s_{h+1}\in S_{h+1}$, we obtain
    \begin{align*}
        \mathbb{P}_{(c, s_h, a_h) }
        \left[ 
            1 - \rho |S|\leq
            \sum_{s_{h+1}\in S_{h+1}} f^P_h(c, s_h, a_h, s_{h+1}) \leq
            1 + \rho |S|
        \Big|
        (c, s_h, a_h) \in  \widetilde{\mathcal{X}}^{\gamma, \beta}_h    
        \right]
        \geq 
        1 - \frac{\epsilon_P }{\rho}|S_{h+1}|.
    \end{align*}
     Hence, we further conclude 
     that 
    \begin{equation}\label{prob: lemma C.5 l_1-v2}
        \begin{split}
            &\mathop{\mathbb{P}}_{(c,s_h,a_h)}
            \left[  
                \forall s_{h+1} \in S_{h+1}.
                \frac{P^c(s_{h+1}|s_h, a_h) - \rho}{1 + \rho |S|}  \leq \underbrace{\frac{f^P_h(c,s_h,a_h,s_{h+1})}{\sum_{s'\in S_{h+1}} f^P_h(c,s_h,a_h,s')}}_{=\widehat{P}^c(s_{h+1}|s_h, a_h)}
                \leq
                \frac{P^c(s_{h+1}|s_h, a_h) + \rho}{1 - \rho |S|}
            \Big|
            (c, s_h, a_h) \in  \widetilde{\mathcal{X}}^{\gamma, \beta}_h   \right]
            \\
            & \geq 
            1 - \frac{\epsilon_P }{\rho}|S_{h+1}|.
        \end{split}
    \end{equation}
    Fix a tuple $(c, s_h, a_h) \in \widetilde{\mathcal{X}}^{\gamma, \beta}_h$ and assume the event of inequality~(\ref{prob: lemma C.5 l_1-v2}) holds.\\
    Denote ${S^+_{h+1} = \{s_{h+1} \in S_{h+1}:
    \widehat{P}^c(s_{h+1}|s_h, a_h) \geq  P^c(s_{h+1}|s_h, a_h )\}}$ and consider the following derivation.

    \begingroup
    \allowdisplaybreaks
    \begin{align*}
        \| 
            \widehat{P}^c(\cdot|s_h, a_h ) 
            -  
            P^c(\cdot|s_h, a_h )
        \|_1
        =&
        \sum_{s_{h+1} \in S_{h+1}}
        |\widehat{P}^c(s_{h+1}|s_h, a_h) -  P^c(s_{h+1}|s_h, a_h )|
        \\
        =&
        \sum_{s_{h+1} \in S^+_{h+1}}
        (\widehat{P}^c(s_{h+1}|s_h, a_h) -  P^c(s_{h+1}|s_h, a_h ))       
        \\
        &+
        \sum_{s_{h+1} \in  S_{h+1} \setminus S^+_{h+1}}
        (P^c(s_{h+1}|s_h, a_h) -  \widehat{P}^c(s_{h+1}|s_h, a_h ))
        \\  
        \leq&
        \sum_{s_{h+1} \in S^+_{h+1}}
        \left(\frac{P^c(s_{h+1}|s_h, a_h) + \rho}{1 - \rho |S|} 
        -  
        P^c (s_{h+1}|s_h, a_h )\right)
        \\
        &+
        \sum_{s_{h+1} \in  S_{h+1} \setminus S^+_{h+1}}
        \left( P^c (s_{h+1}|s_h, a_h )
        -
        \frac{P^c(s_{h+1}|s_h, a_h) - \rho}{1 + \rho |S|} \right)        
        \\
        =&
        \sum_{s_{h+1} \in S^+_{h+1}}
        \frac{P^c (s_{h+1}|s_h, a_h) + \rho - (1- \rho |S|)P^c (s_{h+1}|s_h, a_h) }{1 - \rho |S|} 
        \\
        &+
        \sum_{s_{h+1} \in  S_{h+1} \setminus S^+_{h+1}}
        \frac{-P^c (s_{h+1}|s_h, a_h) + \rho + (1 + \rho |S|)P^c (s_{h+1}|s_h, a_h) }{1 + \rho |S|}         
        \\
        =&
        \frac{1}{1- \rho |S|}
        \sum_{s_{h+1} \in S^+_{h+1}}
        (\rho + \rho|S|P^c (s_{h+1}|s_h,a_h))
        \\
        &+
        \frac{1}{1 + \rho |S|}
        \sum_{s_{h+1} \in  S_{h+1} \setminus S^+_{h+1}}
        (\rho + \rho|S|P^c (s_{h+1}|s_h,a_h))
        \\
        \leq&
        \frac{2 \rho |S|}{ 1- \rho |S|}
        +
        \frac{2 \rho |S|}{ 1 + \rho |S|}
        \\
        =&
        \frac{4 \rho |S|}{1 - \rho^2 |S|^2}.
    \end{align*}
    \endgroup
    
    By inequality~(\ref{prob: lemma C.5 l_1-v2}), the above holds with probability at least $1 - \frac{\epsilon_P }{\rho}|S_{h+1}|$ over $(c, s_h, a_h) \in \mathcal{X}^{\gamma,\beta}_h$. Hence the lemma follows.

\end{proof}

\begin{lemma}\label{lemma: for running time l_1}
    Fix $\beta \in (0,1]$ and $\rho \in [0,\frac{1}{|S|})$ such that $\beta \geq 2H \frac{4 \rho |S|}{ 1 - \rho^2 |S|^2}$.
    
    Then, for every (context-dependent) policy $\pi=(\pi_c)_{c \in \mathcal{C}}$, and a layer $h \in [H-1]$, 
     under the good events $G_1, G_2^i, G_3^i, \forall i \in [h-1]$ the following holds.
    \begin{align*}
        \mathbb{P}_c\left[
            \forall k \in [h], s_k \in S_k.\;\;
            q_k(s_k|\pi_c, P^c) \geq q_k(s_k|\pi_c, \widehat{P}^c) -\frac{4 \rho|S|}{1 - \rho^2|S|^2}k
        \right]
        \geq 
        1 - |A|\;\sum_{ k = 0 }^{h-1}\frac{\epsilon_P }{\rho}|S_k||S_{k+1}|
    \end{align*}
    
\end{lemma}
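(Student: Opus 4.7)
The plan is to mirror the proof of Lemma~\ref{lemma: for running time l_2} (the $\ell_2$ analogue), swapping in the $\ell_1$-flavoured matrix-difference bound from Lemma~\ref{lemma: UCDD l_1 matrix diff-ver-2}. The only structural change comes from the fact that Markov's inequality applied to an $\ell_1$ ERM guarantee pays a single power of $\rho$, so the failure budget per transition becomes $\epsilon_P/\rho$ instead of $\epsilon_P/\rho^2$; everything else carries over verbatim.

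First I would introduce the same auxiliary context-dependent dynamics $\widetilde{P}^c$ as in the $\ell_2$ proof: on $(s_k, a_k) \in \widetilde{S}^{\gamma, \beta}_k \times A$ with $c \in \widehat{\mathcal{C}}^{\beta}(s_k)$ set $\widetilde{P}^c(s_{k+1}|s_k,a_k) = P^c(s_{k+1}|s_k,a_k)$; otherwise route mass to $s_{sink}$. By construction this auxiliary dynamics is a ``sub-distribution'' of the true one in the sense that for every policy $\pi = (\pi_c)_{c \in \mathcal{C}}$, with probability $1$ we have $q_k(s_k | \pi_c, P^c) \geq q_k(s_k | \pi_c, \widetilde{P}^c)$ for every $k \in [h]$ and $s_k \in S_k$, simply because any mass diverted to the sink under $\widetilde{P}^c$ is retained on $S_k$ under $P^c$.

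Next I would bound $\|\widehat{P}^c(\cdot|s_k,a_k) - \widetilde{P}^c(\cdot|s_k,a_k)\|_1$ uniformly. For $(c, s_k, a_k) \in \widetilde{\mathcal{X}}^{\gamma,\beta}_k$, Lemma~\ref{lemma: UCDD l_1 matrix diff-ver-2} gives, under $G_1, G_2^k, G_3^k$, that this $\ell_1$ distance is at most $\frac{4\rho|S|}{1-\rho^2|S|^2}$ except on a set of $(c,s_k,a_k)$-measure at most $\frac{\epsilon_P}{\rho}|S_{k+1}|$. For $(c,s_k,a_k) \notin \widetilde{\mathcal{X}}^{\gamma,\beta}_k$, the definitions of $\widehat{P}^c$ (Algorithm~\ref{alg: ACDD UCDD}) and of $\widetilde{P}^c$ both send $(s_k,a_k)$ to the sink, so the $\ell_1$ distance is $0$ deterministically. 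Union-bounding over $(s_k,a_k) \in S_k \times A$ and $k \in [h-1]$ gives the uniform per-transition bound with failure probability at most $|A|\sum_{k=0}^{h-1} \frac{\epsilon_P}{\rho} |S_k||S_{k+1}|$.

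Now I would invoke the occupancy-measure-from-dynamics comparison (the same ``Theorem~\ref{thm: bounding occupancy measures}'' used in the $\ell_2$ case) to upgrade these layer-wise per-$(s_k,a_k)$ $\ell_1$ bounds into the layerwise $\|q_k(\cdot|\pi_c,\widetilde{P}^c)-q_k(\cdot|\pi_c,\widehat{P}^c)\|_1 \leq \frac{4\rho|S|}{1-\rho^2|S|^2}k$ bound for all $k \in [h]$ simultaneously, on the same high-probability event. Combined with the deterministic inequality $q_k(s_k|\pi_c, P^c) \geq q_k(s_k|\pi_c, \widetilde{P}^c)$ and the pointwise lower bound $q_k(s_k|\pi_c,\widetilde{P}^c) \geq q_k(s_k|\pi_c,\widehat{P}^c) - \frac{4\rho|S|}{1-\rho^2|S|^2}k$ implied by the $\ell_1$ dynamics comparison, the lemma follows.

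The main obstacle is not technical depth but bookkeeping: carefully conditioning on $G_1, G_2^i, G_3^i$ for $i \in [h-1]$ while propagating the $(c,s_k,a_k)$-randomness through the union bound, and verifying that the decomposition into $\widetilde{\mathcal{X}}^{\gamma,\beta}_k$ (random failure) versus its complement (zero error) is clean. The constraint $\beta \geq 2H \cdot \frac{4\rho|S|}{1-\rho^2|S|^2}$ is used only indirectly, in ensuring that the earlier good events $G_2^i$ are attainable with the parameters declared; it does not appear in the final inequality.
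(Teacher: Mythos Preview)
Your proposal is correct and follows essentially the same approach as the paper: define the auxiliary dynamics $\widetilde{P}^c$, establish the deterministic domination $q_k(s_k|\pi_c,P^c)\geq q_k(s_k|\pi_c,\widetilde{P}^c)$, invoke Lemma~\ref{lemma: UCDD l_1 matrix diff-ver-2} on $\widetilde{\mathcal{X}}^{\gamma,\beta}_k$ and zero-error on its complement, union bound, then apply Theorem~\ref{thm: bounding occupancy measures}. Your remark that the $\beta$ constraint is not used directly in this lemma's proof (only inherited as a hypothesis) matches the paper as well.
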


\begin{proof}
    For every context $c \in \mathcal{C}$ and define the dynamics $\widetilde{P}^c$ over
    $S \cup \{s_{sink}\} \times A$:
    \[
        \forall (s,a) 
        \in 
        S\cup \{s_{sink}\} \times A:
        \widetilde{P}^c (s| s_{sink}, a) 
        = 
        \begin{cases}
        1 &, \text{if } s = s_{sink}\\
        0 &, \text{otherwise}
        \end{cases}
    \]
    In addition we define
    \begin{align*}
        &\forall k  \in [h-1],\;\; \forall (s_k, a_k, s_{k+1})
        \in \widetilde{S}^{\gamma, \beta}_k \times A \times S_{k+1}:
        \\
        & \widetilde{P}^c(s_{k+1}| s_k, a_k)
        =
        \begin{cases}
        P^c(s_{k+1}|s_k,a_k)
        &, \text{if } c \in \widehat{C}^{\beta}({s_k})\\
        0 &, \text{otherwise}
        \end{cases}
        \\
        & \widetilde{P}^c(s_{sink}| s_k, a_k)
        =
        \begin{cases}
            0 &, \text{if } c \in \widehat{\mathcal{C}}^{\beta}({s_k})\\
            1 &, \text{otherwise}
        \end{cases}
        \\
        %
            &\forall k  \in [h-1], \forall (s_k, a_k, s_{k+1})
            \in (S_k \setminus \widetilde{S}^{\gamma, \beta}_k) \times A \times S_{k+1}:
            \\
            & \widetilde{P}^c(s_{k+1}| s_k, a_k)= 0,\;\;\; \widetilde{P}^c(s_{sink}| s_k, a_k) = 1.
        \end{align*}
    Clearly, by definition of $\widetilde{P}^c$,
    we have for every (context-dependent) policy $\pi$ that
    \begin{equation}\label{ineq:UCDD-l_1-prob-final}
        \mathbb{P}_c
        \left[ \forall k\in [h] , s_k \in S_k.\;\;
            q_k(s_k | \pi_c, P^c)
            \geq
            q_k(s_k | \pi_c, \widetilde{P}^c)
        \right]= 1.
    \end{equation}
    

    By Lemma~\ref{lemma: UCDD l_1 matrix diff-ver-2} under the good events $G_1,G_2^k,G_3^k \;\; \forall k \in [h-1]$  we have for any $k \in [h-1]$ that
    \begin{equation}\label{eq:prob-1-lb-occ-measure-l-1}
        \mathop{\mathbb{P}}_{(c,s_k,a_k)}
        \left[ \|\widetilde{P}^c(\cdot|s_k,a_k) - \widehat{P}^c(\cdot|s_k,a_k)\|_1 
        \leq
        \frac{4 \rho |S|}{1 - \rho^2 |S|^2} \Big| (c,s_k,a_k) \in \widetilde{\mathcal{X}}^{\gamma,\beta}_k\right]
        \geq
        1 - \frac{\epsilon_P}{\rho}|S_{k+1}|.
    \end{equation}
    
     We now show that 
    \begin{equation}\label{eq:prob-2-lb-occ-measure-l-1}
        \mathop{
        \mathbb{P}
        }_{(c,s_k,a_k)}
        \left[ 
            \|\widehat{P}^c(\cdot| s_k, a_k) - \widetilde{P}^c(\cdot| s_k, a_k)\|_1 =
            0
        \Big|
        (c, s_k, a_k) \notin  \widetilde{\mathcal{X}}^{\gamma, \beta}_k
        \right]
        =
        1.
    \end{equation}
    
    For every layer $ k \in [h-1]$ we have by definition that $(c, s_k, a_k) \in  \widetilde{\mathcal{X}}^{\gamma, \beta}_k$ if and only if $s_k \in \widetilde{S}^{\gamma, \beta}_k$ and $c \in \widehat{\mathcal{C}}^\beta(s_k)$.

    By the definition of $\widetilde{P}^c$ and $\widehat{P}^c$ we have for every layer $k \in [h-1]$ and context $c \in \mathcal{C}$ that
    \begin{equation*}
            \forall (s_k,a_k) \in (S_k \setminus \widetilde{S}^{\gamma, \beta}_k) \times A.\;\;
            \|\widehat{P}^c(\cdot|s_k,a_k) - \widetilde{P}^c(\cdot|s_k,a_k)\|_1 = 0
    \end{equation*}

    In addition, by definition of $\widehat{P}^c$ and $\widetilde{P}^c$, 
    for every layer $k \in [h-1]$
    if $(s_k,a_k) \in \widetilde{S}^{\gamma, \beta}_k \times A$ but $c \notin \widehat{\mathcal{C}}^\beta(s_k)$, then
    $$
        \|\widehat{P}^c(\cdot| s_k, a_k)- \widetilde{P}^c (\cdot| s_k, a_k)\|_1 = 0.
    $$
    
    Thus, equation~(\ref{eq:prob-2-lb-occ-measure-l-1}) follows.

    
    
    Using total probability low, equations~(\ref{eq:prob-1-lb-occ-measure-l-1}) and~(\ref{eq:prob-2-lb-occ-measure-l-1}) yield that 
    \begin{align*}
        \mathop{
        \mathbb{P}
        }_{(c,s_k,a_k)}
        \left[ 
            \|\widehat{P}^c(\cdot| s_k, a_k) - \widetilde{P}^c(\cdot| s_k, a_k)\|_1   
            \leq \frac{4 \rho |S|}{ 1 - \rho^2 |S|^2}
        \right]
        \geq
        1- \frac{\epsilon_P}{\rho^2}|S_{k+1}|,
    \end{align*}
    which by union bound over $(s_k, a_k) \in S_k \times A$ for every layer $k \in [h-1]$ implies that
    \begin{equation}\label{ineq:l_1-running-time-prob}
       \mathbb{P}_c
        \left[ 
            \forall k \in [h-1], (s_k,a_k) \in S_k \times A.\;\;
            \|\widehat{P}^c(\cdot| s_k, a_k) - \widetilde{P}^c(\cdot| s_k, a_k)\|_1   
            \leq \frac{4 \rho |S|}{ 1 - \rho^2 |S|^2}
        \right] 
        \geq
        1 - |A|\;\sum_{ k = 0 }^{h-1}\frac{\epsilon_P }{\rho}|S_k||S_{k+1}|.
    \end{equation}

    
    
    By Theorem~\ref{thm: bounding occupancy measures}
    the above yields that
     \begin{equation*}
        \mathbb{P}_c
        \left[ 
        \forall k\in [h].\;\;
         \| q_k(\cdot|\pi_c, \widetilde{P}^c) - q_k(\cdot| \pi_c, \widehat{P}^c) \|_1 
       \leq
       \frac{4 \rho |S|}{1 - \rho^2 |S|^2} k
        \right]
        \geq
         1 - |A|\;\sum_{ k = 0 }^{h-1}\frac{\epsilon_P }{\rho}|S_k||S_{k+1}|,
    \end{equation*}
    which in particularly implies that
    \begin{equation}\label{ineq:l_1-final-pron-to-combine}
        \mathbb{P}_c
        \left[ 
        \forall k\in [h],\;s_k \in s_k.\;\;
        q_k(s_k | \pi_c, \widetilde{P}^c) \geq q_k(s_k |\pi_c, \widehat{P}^c) 
        - \frac{4 \rho |S|}{1 - \rho^2 |S|^2}
        \right]
        \geq
         1 - |A|\;\sum_{ k = 0 }^{h-1}\frac{\epsilon_P }{\rho}|S_k||S_{k+1}|.
    \end{equation}
    Finally, the lemma follows by combining
    inequalities~(\ref{ineq:UCDD-l_1-prob-final}) and~(\ref{ineq:l_1-final-pron-to-combine}).

\end{proof}

\subsection{Sample Complexity Bounds}\label{sec: sampel complexity UCDD}
We show sample complexity bounds for both $\ell_1$ and $\ell_2$ loss functions. 
Recall Theorems~\ref{thm: pseudo dim} and~\ref{thm: fat dim},

\begin{theorem}[Adaption of Theorem 19.2 in~\cite{Bartlett1999NeuralNetsBook}]
    Let $\mathcal{F}$ be a hypothesis space of real valued functions with a finite pseudo dimension, denoted $Pdim(\mathcal{F}) < \infty$. Then, $\mathcal{F}$ has a uniform convergence with 
    \[
        m(\epsilon, \delta) = O \Big( \frac{1}{\epsilon^2}( Pdim(\mathcal{F}) \ln \frac{1}{\epsilon} + \ln \frac{1}{\delta})\Big).
    \]    
\end{theorem}

\begin{theorem}[Adaption of Theorem 19.1 in~\cite{Bartlett1999NeuralNetsBook}]
    Let $\mathcal{F}$ be a hypothesis space of real valued functions with a finite fat-shattering dimension, denoted $fat_{\mathcal{F}}$. Then, $\mathcal{F}$ has a uniform convergence with 
    \[
        m(\epsilon, \delta) = O \Big( \frac{1}{\epsilon^2}( fat_{\mathcal{F}}(\epsilon/256)  \ln^2 \frac{1}{\epsilon} + \ln \frac{1}{\delta})\Big).
    \]    
\end{theorem}

\begin{remark}
    In the following analysis we omit the sample complexity needed to approximate the faction of good contexts for every $s \in S$ as it is 
    \[
        O \Big( 
        \frac{|S|^3 H^2 \ln{\frac{|S|}{\delta}}}{\epsilon^2}
        \Big)
    \]
    which is negligible additional term in the following analysis.
\end{remark}

\subsubsection{Sample Complexity Bounds for the \texorpdfstring{$\ell_2$}{Lg} Loss.}\label{subsubsec:SC-L-2-UCDD}

We show sample complexity for function classes with finite Pseudo dimension with $\ell_2$ loss.
\begin{corollary}
    Assume that for every $h \in [H-1]$ we have that $Pdim(\mathcal{F}^R_{h}), Pdim(\mathcal{F}^P_{h}) < \infty$.
    Let $Pdim = \max_{h \in [H -1]} \max \{Pdim(\mathcal{F}^R_{h}), Pdim(\mathcal{F}^P_{h})\}$.
    Then, after collecting 
    \[
        O\Big( 
        \frac{|A|^2 |S|^{15} H^{13} }{\epsilon^8}\Big( Pdim \ln \frac{|A| |S|^6 H^5}{\epsilon^3} + \ln \frac{H}{\delta} \Big)\Big).
    \]
    trajectories, with probability at least $1-(\delta + \frac{\epsilon}{5})$ it holds that
    \[
        \mathbb{E}_{c \sim \mathcal{D}}[V^{\pi^\star_c}_{\mathcal{M}(c)}(s_0) - V^{\widehat{\pi}^\star_c}_{\mathcal{M}(c)}(s_0)] \leq 
       \epsilon +  2\alpha_2 H .
    \]
\end{corollary}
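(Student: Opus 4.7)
The plan is to reduce to the already-established correctness guarantee in Theorem~\ref{thm: UCDD opt policy l_2} and then simply count trajectories, with the counting driven by the pseudo-dimension-based uniform convergence of Theorem~\ref{thm: pseudo dim}. Given the parameter set SP2 used in the proof of Theorem~\ref{thm: UCDD opt policy l_2} (namely $\gamma = \beta = \tfrac{\epsilon}{20|S|H}$, $\rho = \tfrac{\beta}{16|S|H}$, $\epsilon_P = \tfrac{\epsilon^3}{10 \cdot 2^8 \cdot 20^2 |A||S|^6 H^5}$, $\epsilon_R = \tfrac{\epsilon^3}{20^3 |S||A|H^3}$, and $\delta_1 = \tfrac{\delta}{8H}$), the success event $G_1 \cap G_2 \cap G_3 \cap G_4$ suffices to conclude the suboptimality bound $\epsilon + 2\alpha_2 H$ with probability $1 - (\delta + \tfrac{\epsilon}{5})$, so it remains only to bound the total number of trajectories $\sum_{h=0}^{H-1} T_h$ in Algorithm~\ref{alg: EXPLORE-UCDD}.

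First I would substitute the finite-pseudo-dimension bound from Theorem~\ref{thm: pseudo dim} into the two ERM sample-complexity functions:
\begin{align*}
N_P(\mathcal{F}^P_h, \epsilon_P, \delta_1/2) &= O\!\left(\tfrac{1}{\epsilon_P^2}\bigl(Pdim \ln \tfrac{1}{\epsilon_P} + \ln \tfrac{H}{\delta}\bigr)\right), \\
N_R(\mathcal{F}^R_h, \epsilon_R, \delta_1/2) &= O\!\left(\tfrac{1}{\epsilon_R^2}\bigl(Pdim \ln \tfrac{1}{\epsilon_R} + \ln \tfrac{H}{\delta}\bigr)\right).
\end{align*}
The key observation is that $\epsilon_P \ll \epsilon_R$ under SP2 (there is an extra $|S|^5 H^2$ factor separating them), so $1/\epsilon_P^2 = \Theta\!\left(|A|^2 |S|^{12} H^{10}/\epsilon^6\right)$ dominates $1/\epsilon_R^2$, and the maximum inside $T_h$ is attained by $N_P$.

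Next I would compute the prefactor $\tfrac{8|S|}{\gamma \beta}$ from the definition of $T_h$; under SP2 this is $\Theta\!\left(|S|^3 H^2 / \epsilon^2\right)$. Combining with the $N_P$ bound,
\begin{align*}
T_h \;=\; O\!\left(\tfrac{|S|^3 H^2}{\epsilon^2} \cdot \tfrac{|A|^2 |S|^{12} H^{10}}{\epsilon^6}\bigl(Pdim \ln \tfrac{1}{\epsilon_P} + \ln \tfrac{H}{\delta}\bigr)\right) \;=\; O\!\left(\tfrac{|A|^2 |S|^{15} H^{12}}{\epsilon^8}\bigl(Pdim \ln \tfrac{|A||S|^6 H^5}{\epsilon^3} + \ln \tfrac{H}{\delta}\bigr)\right),
\end{align*}
using $\ln(1/\epsilon_P) = \Theta(\ln(|A||S|^6 H^5/\epsilon^3))$. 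Summing over the $H-1$ layers yields an extra factor of $H$, producing the claimed $\tilde{O}\!\left(\tfrac{|A|^2 |S|^{15} H^{13}}{\epsilon^8}\bigl(Pdim \ln(\cdot) + \ln(H/\delta)\bigr)\right)$ total sample complexity. The negligible $O(|S|^3 H^2 \epsilon^{-2} \log(|S|/\delta))$ contribution from \texttt{AGC} calls (for estimating $\widehat{p}_\beta$) is swallowed by this bound.

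The main technical obstacle is purely bookkeeping rather than conceptual: keeping the polynomial exponents consistent when propagating SP2 into both $1/\epsilon_P^2$ and $\tfrac{|S|}{\gamma\beta}$, and verifying that $\epsilon_P$ rather than $\epsilon_R$ indeed controls $T_h$, which requires checking that the scalings of $\epsilon_P$ and $\epsilon_R$ chosen in SP2 (themselves driven by the dynamics-error vs.\ rewards-error balance in Lemmas~\ref{lemma: expected error from dynamics l_2 UCDD} and~\ref{lemma: expected error from rewards l_2 UCDD}) are correctly reflected in the final exponent of $|S|$. Once that algebra is done, the result follows immediately from Theorem~\ref{thm: UCDD opt policy l_2}.
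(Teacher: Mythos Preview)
Your proposal is correct and follows essentially the same approach as the paper: reduce to Theorem~\ref{thm: UCDD opt policy l_2} under the parameter set SP2, invoke Theorem~\ref{thm: pseudo dim} to bound $N_P$ and $N_R$, note that $N_P$ dominates because $\epsilon_P$ is the smaller accuracy, multiply by the $\Theta(|S|^3 H^2/\epsilon^2)$ prefactor from $|S|/(\gamma\beta)$, and sum over the $H$ layers. The bookkeeping you outline matches the paper's computation line by line.
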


\begin{proof}
    Recall that for every layer $h \in [H-1]$ our algorithm run for 
    \[
        T_h =
            \left\lceil 
            \frac{8 |S|}{\gamma \cdot \beta}
            \left(\ln\frac{1}{\delta_1}
            +
            2\max\{N_P(\mathcal{F}^P_h ,\epsilon_P, \delta_1/2)), N_R(\mathcal{F}^R_h ,\epsilon_R, \delta_1/2))\}\right)
            \right\rceil.
    \]
    episodes.
    Recall our choice of parameters is 
    $\gamma = \frac{\epsilon}{20 |S|H} $, 
    $\beta =  \frac{\epsilon}{20|S|H} $, 
    $\rho = \frac{\beta}{16 |S|H} $, 
    $\epsilon_P = \frac{ \epsilon^3}{10\cdot 2^8\cdot 20^2  |A| |S|^6 H^5}$, 
    $\epsilon_R = \frac{\epsilon^3}{20^3 |S||A| H^3}$,
    $\delta_1 = \frac{\delta}{8 H}$.
    
    By Theorem~\ref{thm: UCDD opt policy l_2},  for this choice of parameters and
    $
        \sum_{h =0}^{H-1}  T_{h}
    $    
    examples we have with probability at least $1-(\delta + \frac{\epsilon}{5})$ that
    \[
        \mathbb{E}_{c \sim \mathcal{D}}[V^{\pi^\star_c}_{\mathcal{M}(c)}(s_0) - V^{\widehat{\pi}^\star_c}_{\mathcal{M}(c)}(s_0)] \leq 
        \epsilon + 2\alpha_2 H .
    \]
    Since for every $h \in [H-1]$ we have that $Pdim(\mathcal{F}^R_{h}), Pdim(\mathcal{F}^P_{h}) < \infty$ 
    and $Pdim = \max_{h \in [H -1]} \max \{Pdim(\mathcal{F}^R_{h}), Pdim(\mathcal{F}^P_{h})\}$, by Theorem~\ref{thm: pseudo dim}, for every $h \in [H-1]$ we have
    \begin{align*}
        N_P(\mathcal{F}^P_{h}, \epsilon_P, \delta_1)
        =
        O \Big( \frac{ Pdim \ln \frac{1}{\epsilon_P} + \ln \frac{1}{\delta_1}}{\epsilon^2_P}\Big)
        =  
        O \Big(\frac{|A|^2 |S|^{12} H^{10} }{\epsilon^6}\Big( Pdim \ln \frac{|A| |S|^6 H^5}{\epsilon^3} + \ln \frac{H}{\delta} \Big)\Big),
    \end{align*}
    and
    \begin{align*}
        N_R(\mathcal{F}^R_{h}, \epsilon_R, \delta_1)
        =
        O \Big( \frac{ Pdim \ln \frac{1}{\epsilon_R} + \ln \frac{1}{\delta_1}}{\epsilon^2_R}\Big)
        =  
        O \Big(\frac{|S|^2|A|^2 H^6}{\epsilon^6}\Big( Pdim \ln \frac{|S||A|H^3}{\epsilon^3} + \ln \frac{H}{\delta} \Big)\Big).
    \end{align*}
    Hence for every $h \in [H -1]$ we have that
    \[
        \max\{N_P(\mathcal{F}^P_{h}, \epsilon_P, \delta_1), N_R(\mathcal{F}^R_{h}, \epsilon_R, \delta_1)\}
        = 
        O \Big(\frac{|A|^2 |S|^{12} H^{10} }{\epsilon^6}\Big( Pdim \ln \frac{|A| |S|^6 H^5}{\epsilon^3} + \ln \frac{H}{\delta} \Big)\Big).
    \]
    
    By our choice of $\beta$ and $\gamma$ it holds that 
    \[
        T_h =
        O\Big( 
        |S|\frac{|S|^2 H^2}{\epsilon^2}
        \frac{|A|^2 |S|^{12} H^{10} }{\epsilon^6}\Big( Pdim \ln \frac{|A| |S|^6 H^5}{\epsilon^3} + \ln \frac{H}{\delta} \Big)\Big)
        =
        O\Big( 
        \frac{|A|^2 |S|^{15} H^{12} }{\epsilon^8}\Big( Pdim \ln \frac{|A| |S|^6 H^5}{\epsilon^3} + \ln \frac{H}{\delta} \Big)\Big).
    \]
    By summing the above for every layer $h \in [H-1]$ we obtain that the  sample complexity is
    \[
        O\Big( 
        \frac{|A|^2 |S|^{15} H^{13} }{\epsilon^8}\Big( Pdim \ln \frac{|A| |S|^6 H^5}{\epsilon^3} + \ln \frac{H}{\delta} \Big)\Big).
    \]
\end{proof}    

We show sample complexity for function classes with finite fat-shattering dimension with $\ell_2$ loss:
\begin{corollary}
    Assume that for every $h \in [H-1]$ we have that $\mathcal{F}^R_{h}$ and $\mathcal{F}^R_{h}$ has finite fat-shattering dimension. 
    Let $Fdim = \max_{h \in [H -1]} \max \{fat_{\mathcal{F}^R_{h}}(\epsilon_R), fat_{\mathcal{F}^P_{h}}(\epsilon_R)\}$.
    Then, after collecting 
    \[
        O\Big( 
        \frac{|A|^2 |S|^{15} H^{13} }{\epsilon^8}\Big( Fdim \ln^2 \frac{|A| |S|^6 H^5}{\epsilon^3} + \ln \frac{H}{\delta} \Big)\Big).
    \]
    trajectories, with probability at least $1-(\delta + \frac{\epsilon}{5})$ it holds that
    \[
        \mathbb{E}_{c \sim \mathcal{D}}[V^{\pi^\star_c}_{\mathcal{M}(c)}(s_0) - V^{\widehat{\pi}^\star_c}_{\mathcal{M}(c)}(s_0)] \leq 
        \epsilon + 2\alpha_2 H .
    \]
\end{corollary}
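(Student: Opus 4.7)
The plan is to mirror the proof of the previous corollary (the Pseudo-dimension version), substituting the fat-shattering uniform convergence bound (Theorem~\ref{thm: fat dim}) in place of the pseudo-dimension one (Theorem~\ref{thm: pseudo dim}), and then repeating the per-layer accounting verbatim. The starting point is Theorem~\ref{thm: UCDD opt policy l_2}: for the parameter setting $\gamma=\frac{\epsilon}{20|S|H}$, $\beta=\frac{\epsilon}{20|S|H}$, $\rho=\frac{\beta}{16|S|H}$, $\epsilon_P=\frac{\epsilon^3}{10\cdot 2^8\cdot 20^2|A||S|^6H^5}$, $\epsilon_R=\frac{\epsilon^3}{20^3|S||A|H^3}$, and $\delta_1=\frac{\delta}{8H}$, the total number of trajectories $\sum_{h=0}^{H-1}T_h$ used by algorithm EXPLORE-UCDD suffices, with probability at least $1-(\delta+\epsilon/5)$, to guarantee an $(\epsilon+2\alpha_2 H)$-optimal policy. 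So the task reduces to bounding $T_h$ under the fat-shattering hypothesis.

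Next, by Theorem~\ref{thm: fat dim}, for each layer $h\in[H-1]$,
\begin{align*}
N_P(\mathcal{F}^P_h,\epsilon_P,\delta_1/2) &= O\!\left(\tfrac{1}{\epsilon_P^2}\bigl(Fdim\,\ln^2\tfrac{1}{\epsilon_P}+\ln\tfrac{1}{\delta_1}\bigr)\right) = O\!\left(\tfrac{|A|^2|S|^{12}H^{10}}{\epsilon^6}\bigl(Fdim\,\ln^2\tfrac{|A||S|^6H^5}{\epsilon^3}+\ln\tfrac{H}{\delta}\bigr)\right),\\
N_R(\mathcal{F}^R_h,\epsilon_R,\delta_1/2) &= O\!\left(\tfrac{|S|^2|A|^2H^6}{\epsilon^6}\bigl(Fdim\,\ln^2\tfrac{|S||A|H^3}{\epsilon^3}+\ln\tfrac{H}{\delta}\bigr)\right),
\end{align*}
so the $N_P$ term dominates the maximum inside $T_h$. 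Substituting into the definition $T_h=\lceil\frac{8|S|}{\gamma\beta}(\ln\frac{1}{\delta_1}+2\max\{N_P,N_R\})\rceil$ and using $\frac{|S|}{\gamma\beta}=O(|S|^3H^2/\epsilon^2)$ gives
\[
T_h = O\!\left(\tfrac{|A|^2|S|^{15}H^{12}}{\epsilon^8}\bigl(Fdim\,\ln^2\tfrac{|A||S|^6H^5}{\epsilon^3}+\ln\tfrac{H}{\delta}\bigr)\right).
\]
Summing over the $H$ layers multiplies by another factor of $H$, yielding the claimed bound.

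There is essentially no hard step here: the probabilistic content (good events, occupancy bounds, rewards/dynamics error decomposition) is already carried by Theorem~\ref{thm: UCDD opt policy l_2}, and the only new ingredient is plugging the fat-shattering sample-complexity expression into the per-layer episode count. The main bookkeeping point to watch is that $Fdim$ is the maximum of the fat-shattering dimensions at the \emph{scale} $\epsilon_R/256$ (and analogously for $\epsilon_P/256$), which must match the definition used when instantiating Theorem~\ref{thm: fat dim}; one should verify that a single $Fdim$ dominates both $N_P$ and $N_R$ up to the logarithmic factors absorbed into the $\ln^2(\cdot/\epsilon^3)$ term. Once that is checked, the final expression follows by the same arithmetic as in the Pseudo-dimension corollary, with every $Pdim\,\ln(\cdot)$ replaced by $Fdim\,\ln^2(\cdot)$.
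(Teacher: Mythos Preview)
Your proposal is correct and follows essentially the same approach as the paper: invoke Theorem~\ref{thm: UCDD opt policy l_2} with the SP2 parameter set, plug the fat-shattering sample-complexity bound from Theorem~\ref{thm: fat dim} into $N_P$ and $N_R$, observe that the $N_P$ term dominates, substitute into $T_h$ using $\frac{|S|}{\gamma\beta}=O(|S|^3H^2/\epsilon^2)$, and sum over the $H$ layers. The only difference from the pseudo-dimension corollary is the replacement of $Pdim\,\ln(\cdot)$ by $Fdim\,\ln^2(\cdot)$, exactly as you note.
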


\begin{proof}
    Recall that for every layer $h \in [H-1]$ our algorithm run for 
    \[
        T_h =
        \left\lceil 
        \frac{8 |S|}{\gamma \cdot \beta}
        \left(\ln\frac{1}{\delta_1}
        +
        2\max\{N_P(\mathcal{F}^P_h ,\epsilon_P, \delta_1/2), N_R(\mathcal{F}^R_h ,\epsilon_R, \delta_1/2)\}\right)
        \right\rceil
    \]
    episodes.
    Recall our choice of parameters is 
    $\gamma = \frac{\epsilon}{20 |S|H} $, 
    $\beta =  \frac{\epsilon}{20|S|H} $, 
    $\rho = \frac{\beta}{16 |S|H} $, 
    $\epsilon_P = \frac{ \epsilon^3}{10\cdot 2^8 20^2  |A| |S|^6 H^5}$, 
    $\epsilon_R = \frac{\epsilon^3}{20^3 |S||A| H^3}$,
    $\delta_1 = \frac{\delta}{8 H}$.
    
    By Theorem~\ref{thm: UCDD opt policy l_2},  for this choice of parameters and
    $
        \sum_{h =0}^{H-1}  T_{h}
    $    
    samples we have with probability at least $1-(\delta + \frac{\epsilon}{5})$ that
    \[
        \mathbb{E}_{c \sim \mathcal{D}}[V^{\pi^\star_c}_{\mathcal{M}(c)}(s_0) - V^{\widehat{\pi}^\star_c}_{\mathcal{M}(c)}(s_0)] \leq 
        \epsilon + 2\alpha_2 H .
    \]
    Since every $h \in [H-1]$ we have that $\mathcal{F}^R_{h}$ and\\ $\mathcal{F}^R_{h}$ has finite fat-shattering dimension, and $Fdim = \max_{h \in [H -1]} \max \{fat_{\mathcal{F}^R_{h}}(\epsilon_R), fat_{\mathcal{F}^P_{h}}(\epsilon_R)\}$, by Theorem~\ref{thm: fat dim}, for every $h \in [H-1]$ we have
    \begin{align*}
        N_P(\mathcal{F}^P_{h}, \epsilon_P, \delta_1)
        =
        O \Big( \frac{ Pdim \ln \frac{1}{\epsilon_P} + \ln \frac{1}{\delta_1}}{\epsilon^2_P}\Big)
        =  
        O \Big(\frac{|A|^2 |S|^{12} H^{10} }{\epsilon^6}\Big( Fdim \ln^2 \frac{|A| |S|^6 H^5}{\epsilon^3} + \ln \frac{H}{\delta} \Big)\Big),
    \end{align*}
    and
    \begin{align*}
        N_R(\mathcal{F}^R_{h}, \epsilon_R, \delta_1)
        =
        O \Big( \frac{ Pdim \ln \frac{1}{\epsilon_R} + \ln \frac{1}{\delta_1}}{\epsilon^2_R}\Big)
        =  
        O \Big(\frac{|S|^2 |A|^2 H^6 }{\epsilon^6}\Big( Fdim \ln^2 \frac{|S||A|H^3}{\epsilon^3} + \ln \frac{H}{\delta} \Big)\Big).
    \end{align*}
    Hence for every $h \in [H -1]$ we have that
    \[
        \max\{N_P(\mathcal{F}^P_{h}, \epsilon_P, \delta_1), N_R(\mathcal{F}^R_{h}, \epsilon_R, \delta_1)\}
        = 
        O \Big(\frac{|A|^2 |S|^{12} H^{10} }{\epsilon^6}\Big( Fdim \ln^2 \frac{|A| |S|^6 H^5}{\epsilon^3} + \ln \frac{H}{\delta} \Big)\Big).
    \]
    
     By our choice of $\beta$ and $\gamma$ it holds that
    \[
        T_h =
        O\Big( 
        |S|\frac{|S|^2 H^2}{\epsilon^2}
        \frac{|A|^2 |S|^{12} H^{10} }{\epsilon^6}\Big( Fdim \ln^2 \frac{|A| |S|^6 H^5}{\epsilon^3} + \ln \frac{H}{\delta} \Big)\Big)
        =
        O\Big( 
        \frac{|A|^2 |S|^{15} H^{12} }{\epsilon^8}\Big( Fdim \ln^2 \frac{|A| |S|^6 H^5}{\epsilon^3} + \ln \frac{H}{\delta} \Big)\Big).
    \]
    By summing the above for every layer $h \in [H-1]$ we obtain that the  sample complexity is
    \[
        O\Big( 
        \frac{|A|^2 |S|^{15} H^{13} }{\epsilon^8}\Big( Fdim \ln^2 \frac{|A| |S|^6 H^5}{\epsilon^3} + \ln \frac{H}{\delta} \Big)\Big).
    \]
\end{proof}

\subsubsection{Sample Complexity Bounds for the \texorpdfstring{$\ell_1$}{Lg} Loss}\label{subsubsec:SC-L-1-UCDD}

We show sample complexity for function classes with finite Pseudo dimension with $\ell_1$ loss using the set of parameters SP1.
\begin{corollary}
    Assume that for every $h \in [H-1]$ we have that $Pdim(\mathcal{F}^R_{h}), Pdim(\mathcal{F}^P_{h}) < \infty$.
    Let $Pdim = \max_{h \in [H -1]} \max \{Pdim(\mathcal{F}^R_{h}), Pdim(\mathcal{F}^P_{h})\}$.
    Then, after collecting 
    \[
        O\Big( 
        \frac{|A|^2 |S|^{11} H^9 }{\epsilon^6}
        \Big( Pdim \ln \frac{|A| |S|^4 H^3}{\epsilon^2} + \ln \frac{H}{\delta} \Big)\Big)
    \]
    trajectories, with probability at least $1-(\delta + \frac{\epsilon}{5})$ it holds that
    \[
        \mathbb{E}_{c \sim \mathcal{D}}[V^{\pi^\star_c}_{\mathcal{M}(c)}(s_0) - V^{\widehat{\pi}^\star_c}_{\mathcal{M}(c)}(s_0)] \leq 
        2\alpha_1 H + \epsilon.
    \]
\end{corollary}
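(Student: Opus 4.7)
The plan is to mirror the proof of the analogous $\ell_2$ corollary (which was just established), substituting the $\ell_1$ parameters set SP1 and invoking Theorem~\ref{thm: UCDD opt policy l_1} instead of Theorem~\ref{thm: UCDD opt policy l_2}. Concretely, for each layer $h\in[H-1]$, Algorithm EXPLORE-UCDD runs for
$$T_h = \left\lceil\frac{8|S|}{\gamma\beta}\left(\ln\tfrac{1}{\delta_1}+2\max\{N_P(\mathcal{F}^P_h,\epsilon_P,\delta_1/2),N_R(\mathcal{F}^R_h,\epsilon_R,\delta_1/2)\}\right)\right\rceil$$
episodes, and by Theorem~\ref{thm: UCDD opt policy l_1} the parameter choice $\gamma=\beta=\frac{\epsilon}{20|S|H}$, $\rho=\frac{\beta}{16|S|H}$, $\epsilon_P=\frac{\epsilon^2}{10\cdot16\cdot20\,|A||S|^4H^3}$, $\epsilon_R=\frac{\epsilon^2}{20^2|S||A|H^2}$, and $\delta_1=\frac{\delta}{8H}$ guarantees the stated value bound with probability at least $1-(\delta+\epsilon/5)$ once we have collected $\sum_{h=0}^{H-1}T_h$ trajectories.

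Next I would use the uniform-convergence bound for pseudo-dimension (Theorem~\ref{thm: pseudo dim}) to estimate each of $N_P$ and $N_R$. Since $1/\epsilon_P^2=O(|A|^2|S|^8 H^6/\epsilon^4)$ and $1/\epsilon_R^2=O(|S|^2|A|^2H^4/\epsilon^4)$, we obtain
$$N_P(\mathcal{F}^P_h,\epsilon_P,\delta_1)=O\!\left(\frac{|A|^2|S|^8 H^6}{\epsilon^4}\Bigl(Pdim\,\ln\tfrac{|A||S|^4H^3}{\epsilon^2}+\ln\tfrac{H}{\delta}\Bigr)\right),$$
which dominates $N_R$ (the $|A|^2|S|^8H^6/\epsilon^4$ prefactor beats $|S|^2|A|^2H^4/\epsilon^4$). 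Hence the max inside $T_h$ is controlled by $N_P$ up to constants and the logarithmic factor $Pdim\,\ln(|A||S|^4H^3/\epsilon^2)+\ln(H/\delta)$.

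Finally I would multiply by the sampling overhead. The factor $|S|/(\gamma\beta)=O(|S|^3H^2/\epsilon^2)$ enters the per-layer count, yielding
$$T_h=O\!\left(\frac{|S|^3H^2}{\epsilon^2}\cdot\frac{|A|^2|S|^8H^6}{\epsilon^4}\Bigl(Pdim\,\ln\tfrac{|A||S|^4H^3}{\epsilon^2}+\ln\tfrac{H}{\delta}\Bigr)\right)=O\!\left(\frac{|A|^2|S|^{11}H^8}{\epsilon^6}\cdot(\cdots)\right),$$
and summing over the $H$ layers produces the advertised $|A|^2|S|^{11}H^9/\epsilon^6$ rate. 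The additive $\ln(1/\delta_1)$ term inside $T_h$ and the preprocessing cost for approximating good contexts are both dominated, so they can be absorbed into the big-$O$.

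The main obstacle is purely bookkeeping: verifying that the $|S|/(\gamma\beta)$ multiplicative factor correctly combines with the $1/\epsilon_P^2$ inside $N_P$ to produce exactly the $\epsilon^{-6}$ and $|S|^{11}$ exponents stated, and that $N_P$ (not $N_R$) is indeed dominant so that the final bound depends on $\epsilon_P$ rather than $\epsilon_R$. Once the exponent arithmetic is checked (as above), the corollary follows immediately from Theorem~\ref{thm: UCDD opt policy l_1} combined with Theorem~\ref{thm: pseudo dim}, exactly paralleling the $\ell_2$ argument.
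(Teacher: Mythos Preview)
Your proposal is correct and follows essentially the same approach as the paper: invoke Theorem~\ref{thm: UCDD opt policy l_1} with the SP1 parameters, apply Theorem~\ref{thm: pseudo dim} to bound $N_P$ and $N_R$, observe that $N_P$ dominates, multiply by $|S|/(\gamma\beta)=O(|S|^3H^2/\epsilon^2)$, and sum over the $H$ layers. The paper carries out exactly this arithmetic with the same intermediate bounds.
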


\begin{proof}
    Recall that for every layer $h \in [H-1]$ our algorithm run for 
    \[
        T_h =
        \left\lceil 
        \frac{8 |S|}{\gamma \cdot \beta}
        \left(\ln\frac{1}{\delta_1}
        +
        2\max\{N_P(\mathcal{F}^P_h ,\epsilon_P, \delta_1/2), N_R(\mathcal{F}^R_h ,\epsilon_R, \delta_1/2)\}\right)
        \right\rceil
    \]
    episodes.
    Recall SP1 parameters are 
    $\gamma = \frac{\epsilon}{20 |S|H} $,
    $\beta =  \frac{\epsilon}{20|S|H} $, 
    $\rho = \frac{\beta}{16 |S|H} $,  
    $\epsilon_P = \frac{ \epsilon^2}{10\cdot 16 \cdot 20  |A| |S|^4 H^3}$,
    $\epsilon_R = \frac{\epsilon^2}{20^2 |S||A| H^2}$
    $\delta_1 = \frac{\delta}{8 H}$.
    
    By Theorem~\ref{thm: UCDD opt policy l_1},  for this choice of parameters and
    $
        \sum_{h =0}^{H-1}  T_{h}
    $    
    examples we have with probability at least $1-(\delta + \frac{\epsilon}{5})$ that
    \[
        \mathbb{E}_{c \sim \mathcal{D}}[V^{\pi^\star_c}_{\mathcal{M}(c)}(s_0) - V^{\widehat{\pi}^\star_c}_{\mathcal{M}(c)}(s_0)] \leq 
        2\alpha_1 H + \epsilon.
    \]
    Since every $h \in [H-1]$ we have that $Pdim(\mathcal{F}^R_{h}), Pdim(\mathcal{F}^P_{h}) < \infty$.
    and $Pdim = \max_{h \in [H -1]} \max \{Pdim(\mathcal{F}^R_{h}), Pdim(\mathcal{F}^P_{h})\}$, by Theorem~\ref{thm: pseudo dim}, for every $h \in [H-1]$ we have
    \begin{align*}
        N_P(\mathcal{F}^P_{h}, \epsilon_P, \delta_1)
        =
        O \Big( \frac{ Pdim \ln \frac{1}{\epsilon_P} + \ln \frac{1}{\delta_1}}{\epsilon^2_P}\Big)
        =  
        O \Big(\frac{|A|^2 |S|^8 H^6 }{\epsilon^4}\Big( Pdim \ln \frac{|A| |S|^4 H^3}{\epsilon^2} + \ln \frac{H}{\delta} \Big)\Big),
    \end{align*}
    and
    \begin{align*}
        N_R(\mathcal{F}^R_{h}, \epsilon_R, \delta_1)
        =
        O \Big( \frac{ Pdim \ln \frac{1}{\epsilon_R} + \ln \frac{1}{\delta_1}}{\epsilon^2_R}\Big)
        =  
        O \Big(\frac{|S|^2 |A|^2 H^4 }{\epsilon^4}\Big( Pdim \ln \frac{|S||A|H^2}{\epsilon^2} + \ln \frac{H}{\delta} \Big)\Big).
    \end{align*}
    Hence for every $h \in [H -1]$ we have that
    \[
        \max\{N_P(\mathcal{F}^P_{h}, \epsilon_P, \delta_1), N_R(\mathcal{F}^R_{h}, \epsilon_R, \delta_1)\}
        = 
        O \Big(\frac{|A|^2 |S|^8 H^6 }{\epsilon^4}\Big( Pdim \ln \frac{|A| |S|^4 H^3}{\epsilon^2} + \ln \frac{H}{\delta} \Big)\Big).
    \]
    
    By our choice of $\beta$ and $\gamma$ it holds that
    \[
        T_h =
        O\Big( 
        |S|\frac{|S|^2 H^2}{\epsilon^2}
        \frac{|A|^2 |S|^8 H^6 }{\epsilon^4}\Big( Pdim \ln \frac{|A| |S|^4 H^3}{\epsilon^2} + \ln \frac{H}{\delta} \Big)\Big)
        =
        O\Big( 
        \frac{|A|^2 |S|^{11} H^8 }{\epsilon^6}\Big( Pdim \ln \frac{|A| |S|^4 H^3}{\epsilon^2} + \ln \frac{H}{\delta} \Big)\Big).
    \]
    By summing the above for every layer $h \in [H-1]$ we obtain that the  sample complexity is
    \[
        O\Big( 
        \frac{|A|^2 |S|^{11} H^9 }{\epsilon^6}\Big( Pdim \ln \frac{|A| |S|^4 H^3}{\epsilon^2} + \ln \frac{H}{\delta} \Big)\Big).
    \]
\end{proof}    

We show sample complexity for function classes with finite fat-shattering dimension with $\ell_1$ loss:
\begin{corollary}
    Assume that for every $h \in [H-1]$ we have that $\mathcal{F}^R_{h}$ and $\mathcal{F}^R_{h}$ has finite fat-shattering dimension. 
    Let $Fdim = \max_{h \in [H -1]} \max \{fat_{\mathcal{F}^R_{h}}(\epsilon_R), fat_{\mathcal{F}^P_{h}}(\epsilon_R)\}$.
    Then, after collecting 
    \[
        O\Big( 
        \frac{|A|^2 |S|^{11} H^9 }{\epsilon^6}\Big( Fdim \ln^2 \frac{|A| |S|^4 H^3}{\epsilon^2} + \ln \frac{H}{\delta} \Big)\Big).
    \]
    examples, with probability at least $1-(\delta + \frac{\epsilon}{5})$ it holds that
    \[
        \mathbb{E}_{c \sim \mathcal{D}}[V^{\pi^\star_c}_{\mathcal{M}(c)}(s_0) - V^{\widehat{\pi}^\star_c}_{\mathcal{M}(c)}(s_0)] \leq 
        2\alpha_1 H + \epsilon.
    \]
\end{corollary}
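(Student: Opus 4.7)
The proof will mirror exactly the structure of the preceding corollary (the Pseudo-dimension bound for the $\ell_1$ loss) with the fat-shattering uniform convergence theorem substituted for the Pseudo-dimension one. The plan is as follows.

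First, I would invoke Theorem~\ref{thm: UCDD opt policy l_1} with the parameter set SP1 (namely $\gamma=\beta=\frac{\epsilon}{20|S|H}$, $\rho=\frac{\beta}{16|S|H}$, $\epsilon_P=\frac{\epsilon^2}{10\cdot 16\cdot 20\,|A||S|^4 H^3}$, $\epsilon_R=\frac{\epsilon^2}{20^2|S||A|H^2}$, $\delta_1=\frac{\delta}{8H}$). This theorem already guarantees the desired expected value gap $\epsilon + 2\alpha_1 H$ with probability $1-(\delta+\epsilon/5)$, provided the algorithm successfully collects $\max\{N_P(\mathcal{F}^P_h,\epsilon_P,\delta_1/2),N_R(\mathcal{F}^R_h,\epsilon_R,\delta_1/2)\}$ examples per layer. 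So the only remaining task is to bound the total number of trajectories $\sum_{h=0}^{H-1} T_h$, where $T_h = \lceil \tfrac{8|S|}{\gamma\beta}(\ln\tfrac{1}{\delta_1}+2\max\{N_P,N_R\})\rceil$.

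Next, I would apply Theorem~\ref{thm: fat dim} (the fat-shattering uniform convergence bound) in place of the Pseudo-dimension bound used in the previous corollary. Under the hypothesis $Fdim=\max_h\max\{fat_{\mathcal{F}^R_h}(\epsilon_R/256),fat_{\mathcal{F}^P_h}(\epsilon_P/256)\}<\infty$, this gives
\[
N_P(\mathcal{F}^P_h,\epsilon_P,\delta_1)=O\!\left(\tfrac{Fdim\,\ln^2(1/\epsilon_P)+\ln(1/\delta_1)}{\epsilon_P^2}\right),\quad
N_R(\mathcal{F}^R_h,\epsilon_R,\delta_1)=O\!\left(\tfrac{Fdim\,\ln^2(1/\epsilon_R)+\ln(1/\delta_1)}{\epsilon_R^2}\right).
\]
Substituting $\epsilon_P,\epsilon_R,\delta_1$ and keeping the larger term, a straightforward simplification yields
\[
\max\{N_P,N_R\}=O\!\left(\tfrac{|A|^2|S|^8 H^6}{\epsilon^4}\bigl(Fdim\,\ln^2\tfrac{|A||S|^4 H^3}{\epsilon^2}+\ln\tfrac{H}{\delta}\bigr)\right).
\]

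Finally, using $\tfrac{|S|}{\gamma\beta}=O(\tfrac{|S|^3 H^2}{\epsilon^2})$, each $T_h=O\!\left(\tfrac{|A|^2|S|^{11}H^8}{\epsilon^6}\bigl(Fdim\,\ln^2\tfrac{|A||S|^4 H^3}{\epsilon^2}+\ln\tfrac{H}{\delta}\bigr)\right)$, and summing over $H-1$ layers gives an extra factor of $H$, producing exactly the stated bound. The logic is identical to the Pseudo-dimension corollary proved just above; only the $\ln\to\ln^2$ substitution in the uniform convergence rate changes.

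There is no real obstacle here — the proof is entirely a bookkeeping exercise. If any step requires care, it is the simplification of $\ln^2(1/\epsilon_P)$ and $\ln^2(1/\epsilon_R)$ into a single $\ln^2\tfrac{|A||S|^4 H^3}{\epsilon^2}$ term (one must verify that $\ln(1/\epsilon_R)$ is dominated by $\ln(1/\epsilon_P)$ up to constants, which it is, so both logarithms collapse into the same asymptotic expression). Otherwise, the proof is a direct transcription of the preceding corollary with Theorem~\ref{thm: fat dim} replacing Theorem~\ref{thm: pseudo dim}.
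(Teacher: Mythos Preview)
Your proposal is correct and follows essentially the same approach as the paper: both invoke Theorem~\ref{thm: UCDD opt policy l_1} with parameter set SP1, apply the fat-shattering uniform convergence bound (Theorem~\ref{thm: fat dim}) to obtain $N_P$ and $N_R$, take the maximum, multiply by $\tfrac{|S|}{\gamma\beta}=O(|S|^3H^2/\epsilon^2)$, and sum over the $H$ layers. Your observation that this is a direct transcription of the preceding Pseudo-dimension corollary with $\ln\to\ln^2$ is exactly right.
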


\begin{proof}
    Recall that for every layer $h \in [H-1]$ our algorithm run for 
    \[
        T_h =
        \left\lceil 
        \frac{8 |S|}{\gamma \cdot \beta}
        \left(\ln\frac{1}{\delta_1}
        +
        2\max\{N_P(\mathcal{F}^P_h ,\epsilon_P, \delta_1/2), N_R(\mathcal{F}^R_h ,\epsilon_R, \delta_1/2)\}\right)
        \right\rceil
    \]    
    episodes.
    Recall our choice of parameters is 
    $\gamma = \frac{\epsilon}{20 |S|H} $,
    $\beta =  \frac{\epsilon}{20|S|H} $, 
    $\rho = \frac{\beta}{16 |S|H} $,  
    $\epsilon_P = \frac{ \epsilon^2}{10\cdot 16 \cdot 20  |A| |S|^4 H^3}$,
    $\epsilon_R = \frac{\epsilon^2}{20^2 |S| |A| H^2}$,
    $\delta_1 = \frac{\delta}{8 H}$.
    
    By Theorem~\ref{thm: UCDD opt policy l_1},  for this choice of parameters and
    $
        \sum_{h =0}^{H-1}  T_{h}
    $    
    samples we have with probability at least $1-(\delta + \frac{\epsilon}{5})$ that
    \[
        \mathbb{E}_{c \sim \mathcal{D}}[V^{\pi^\star_c}_{\mathcal{M}(c)}(s_0) - V^{\widehat{\pi}^\star_c}_{\mathcal{M}(c)}(s_0)] \leq 
        2\alpha_1 H + \epsilon.
    \]
    Since every $h \in [H-1]$ we have that $\mathcal{F}^R_{h}$ and $\mathcal{F}^R_{h}$ has finite fat-shattering dimension, and \\
    $Fdim = \max_{h \in [H -1]} \max \{fat_{\mathcal{F}^R_{h}}(\epsilon_R), fat_{\mathcal{F}^P_{h}}(\epsilon_R)\}$, by Theorem~\ref{thm: fat dim}, for every $h \in [H-1]$ we have
       \begin{align*}
        N_P(\mathcal{F}^P_{h}, \epsilon_P, \delta_1)
        =
        O \Big( \frac{ Fdim \ln^2 \frac{1}{\epsilon_P} + \ln \frac{1}{\delta_1}}{\epsilon^2_P}\Big)
        =  
        O \Big(\frac{|A|^2 |S|^8 H^6 }{\epsilon^4}\Big( Fdim \ln^2 \frac{|A| |S|^4 H^3}{\epsilon^2} + \ln \frac{H}{\delta} \Big)\Big),
    \end{align*}
    and
    \begin{align*}
        N_R(\mathcal{F}^R_{h}, \epsilon_R, \delta_1)
        =
        O \Big( \frac{ Fdim \ln^2 \frac{1}{\epsilon_R} + \ln \frac{1}{\delta_1}}{\epsilon^2_R}\Big)
        =  
        O \Big(\frac{|S|^2 |A|^2 H^4 }{\epsilon^4}\Big( Fdim \ln^2 \frac{|S| |A| H^2}{\epsilon^2} + \ln \frac{H}{\delta} \Big)\Big).
    \end{align*}
    Hence for every $h \in [H -1]$ we have that
    \[
        \max\{N_P(\mathcal{F}^P_{h}, \epsilon_P, \delta_1), N_R(\mathcal{F}^R_{h}, \epsilon_R, \delta_1)\}
        = 
        O \Big(\frac{|A|^2 |S|^8 H^6 }{\epsilon^4}\Big( Fdim \ln^2 \frac{|A| |S|^4 H^3}{\epsilon^2} + \ln \frac{H}{\delta} \Big)\Big).
    \]
    
    By our choice of $\beta$ and $\gamma$ it holds that
    \[
        T_h =
        O\Big( 
        |S|\frac{|S|^2 H^2}{\epsilon^2}
        \frac{|A|^2 |S|^8 H^6 }{\epsilon^4}\Big( Fdim \ln^2 \frac{|A| |S|^4 H^3}{\epsilon^2} + \ln \frac{H}{\delta} \Big)\Big)
        =
        O\Big( 
        \frac{|A|^2 |S|^{11} H^8 }{\epsilon^6}\Big( Fdim \ln^2 \frac{|A| |S|^4 H^3}{\epsilon^2} + \ln \frac{H}{\delta} \Big)\Big).
    \]
    By summing the above for every layer $h \in [H-1]$ we obtain that the  sample complexity is
    \[
        O\Big( 
        \frac{|A|^2 |S|^{11} H^9 }{\epsilon^6}\Big( Fdim \ln^2 \frac{|A| |S|^4 H^3}{\epsilon^2} + \ln \frac{H}{\delta} \Big)\Big).
    \]
\end{proof}

\subsection{Useful Lemmas and Theorems}


\begin{definition}
    Let $M$ be $n \times m$ matrix. Let us denote
    \[
        \| M \|_{1, \infty} 
        = 
        \max_{i\in [n]} \sum_{j=0}^{m-1} |M_{ij}|
    \]
\end{definition}

The following is a known theorem from Markov Chains theory.
\begin{theorem}\label{thm: bounding occupancy measures}
    Let 
    $P_1(\cdot|\cdot, \cdot): S \times (S \times A) \to [0,1]$ 
    and  
    $P_2(\cdot|\cdot, \cdot): S \times (S \times A) \to [0,1]$
    denote two transition probabilities functions of two MDPs.
    Let 
    $\pi(\cdot| \cdot): A \times S \to [0,1]$ 
    be a policy such that the induced Markov chains
    $P^\pi_1(s'|s) = \langle \pi(\cdot |s) , P_1(s'|s,\cdot) \rangle $ 
    and  
    $P^\pi_2(s'|s) = \langle \pi(\cdot |s) , P_2(s'|s,\cdot) \rangle$ satisfies for some $\alpha \geq 0$ that 
    \[
        \| P^\pi_1 - P^\pi_2 \|_{1, \infty}
        \leq \alpha
        .
    \]
    Let $q^h_1(\cdot| \pi)$ and $q^h_2(\cdot| \pi)$ be the distribution over states after trajectories of length $h$ of $P^\pi_1$ and $P^\pi_2$ respectively. Then,
    \[
        \| q^h_1 - q^h_2 \|_1 \leq \alpha h.
    \]
\end{theorem}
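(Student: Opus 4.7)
The plan is to prove the bound $\|q^h_1 - q^h_2\|_1 \le \alpha h$ by induction on the horizon $h$, using a standard telescoping/hybrid argument from Markov-chain perturbation theory.

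For the base case $h=0$, both distributions equal the initial distribution (in our layered setting, the point mass on $s_0$), so $\|q^0_1 - q^0_2\|_1 = 0 \le \alpha \cdot 0$, which holds trivially. For the inductive step, assume $\|q^{h}_1 - q^{h}_2\|_1 \le \alpha h$ and prove the $h+1$ case. I would write
\[
q^{h+1}_i(s') = \sum_{s \in S} q^h_i(s)\, P^\pi_i(s' \mid s),
\]
and then add and subtract the hybrid term $\sum_s q^h_2(s)\, P^\pi_1(s'\mid s)$, yielding, after summing $|\cdot|$ over $s'$ and applying the triangle inequality,
\[
\|q^{h+1}_1 - q^{h+1}_2\|_1 \;\le\; \underbrace{\sum_{s'} \Bigl|\sum_{s} (q^h_1(s) - q^h_2(s))\, P^\pi_1(s'\mid s)\Bigr|}_{(\mathrm{I})} \;+\; \underbrace{\sum_{s'} \Bigl|\sum_{s} q^h_2(s)\, (P^\pi_1(s'\mid s) - P^\pi_2(s'\mid s))\Bigr|}_{(\mathrm{II})}.
\]

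For term (I), pull the sum over $s'$ inside, bound $|\cdot|$ by $\sum_s |q^h_1(s) - q^h_2(s)| P^\pi_1(s'\mid s)$, and use $\sum_{s'} P^\pi_1(s'\mid s) = 1$ to obtain $(\mathrm{I}) \le \|q^h_1 - q^h_2\|_1 \le \alpha h$ by the induction hypothesis. For term (II), apply the triangle inequality again to get $(\mathrm{II}) \le \sum_s q^h_2(s) \sum_{s'} |P^\pi_1(s'\mid s) - P^\pi_2(s'\mid s)|$; by hypothesis $\sum_{s'} |P^\pi_1(s'\mid s) - P^\pi_2(s'\mid s)| \le \|P^\pi_1 - P^\pi_2\|_{1,\infty} \le \alpha$ uniformly in $s$, and since $q^h_2$ is a probability distribution $\sum_s q^h_2(s) = 1$, giving $(\mathrm{II}) \le \alpha$. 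Summing, $\|q^{h+1}_1 - q^{h+1}_2\|_1 \le \alpha h + \alpha = \alpha(h+1)$, completing the induction.

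There is no real obstacle here since this is a textbook perturbation argument; the only subtlety is being careful about which row-sum convention the $\|\cdot\|_{1,\infty}$ norm uses (rows versus columns of the transition matrix). I will verify that the convention in the theorem, $\|M\|_{1,\infty} = \max_i \sum_j |M_{ij}|$, matches the way the dynamics matrix is indexed as $P^\pi(s' \mid s)$, i.e.\ that rows are indexed by the current state $s$ and columns by the next state $s'$; then the uniform-in-$s$ bound $\sum_{s'} |P^\pi_1(s'\mid s) - P^\pi_2(s'\mid s)| \le \alpha$ drops out directly, and the argument above goes through verbatim.
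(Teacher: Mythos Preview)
Your proposal is correct. The paper does not actually prove this theorem; it states it as ``a known theorem from Markov Chains theory'' without proof. However, the paper does prove an essentially identical statement earlier (Lemma~\ref{lemma: occ mesure bound - basic}, ``Dynamics distance bound implies occupancy measure distance bound'') using precisely your induction-plus-hybrid argument: split $q_{h}P_1 - q'_{h}P_2$ into a term controlled by the inductive hypothesis and a term controlled by the uniform row-sum bound on $P_1-P_2$. Your indexing concern about the $\|\cdot\|_{1,\infty}$ convention is well-placed and resolves exactly as you anticipate.
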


\end{document}